\definecolor{mydarkblue}{rgb}{0.0,0.08,0.45}
\newcommand{\tick}{\ding{51}}
\theoremstyle{plain}
\newtheorem{theorem}{Theorem}
\newtheorem{lemma}[theorem]{Lemma}
\newtheorem{proposition}[theorem]{Proposition}
\newtheorem{corollary}[theorem]{Corollary}
\newtheorem{property}{Syntactic Property}
\theoremstyle{definition}
\newtheorem{definition}[theorem]{Definition}
\newtheorem{example}[theorem]{Example}
\newtheorem{remark}[theorem]{Remark}
\newcommand{\brifnotempty}[1]{\ifthenelse{\equal{#1}{}}{}{ \br{#1}}}
\newenvironment{lemma*}[2][]
	{\pagebreak[2] \par \noindent \textbf{Lemma~\ref{#2}}\brifnotempty{#1}.\it}{\par}
\newenvironment{theorem*}[2][]
	{\pagebreak[2] \par \noindent \textbf{Theorem~\ref{#2}}\brifnotempty{#1}.\it}{\par}
\newenvironment{proposition*}[2][]
	{\pagebreak[2] \par \noindent \textbf{Proposition~\ref{#2}}\brifnotempty{#1}.\it}{\par}
\newenvironment{corollary*}[2][]
	{\pagebreak[2] \par \noindent \textbf{Corollary~\ref{#2}}\brifnotempty{#1}.\it}{\par}
\newenvironment{textitem}
	{\setlength{\pltopsep}{.7ex}\setlength{\plitemsep}{.7ex}\begin{compactitem}}
	{\end{compactitem}}
\newenvironment{textenum}[1][]
	{\setlength{\pltopsep}{.7ex}\setlength{\plitemsep}{.7ex}\begin{compactenum}[#1]}
	{\end{compactenum}}
\newcommand{\llbracket}{[\kern-.3ex[}
\newcommand{\llbracketscr}{[\kern-.25ex[}
\newcommand{\Llbracket}{\left[\kern-.6ex\left[}
\newcommand{\rrbracket}{]\kern-.3ex]}
\newcommand{\rrbracketscr}{]\kern-.25ex]}
\newcommand{\Rrbracket}{\right]\kern-.6ex\right]}
\newcommand{\sqbbr}[1]{\llbracket\hspace{.2ex}#1\hspace{.2ex}\rrbracket}
\newcommand{\llangle}{\langle\kern-.5ex\langle}
\newcommand{\llanglescr}{\langle\kern-.4ex\langle}
\newcommand{\Llangle}{\left\langle\kern-.8ex\left\langle}
\newcommand{\rrangle}{\rangle\kern-.5ex\rangle}
\newcommand{\rranglescr}{\rangle\kern-.4ex\rangle}
\newcommand{\Rrangle}{\right\rangle\kern-.8ex\right\rangle}
\newcommand{\anbbr}[1]{\llangle #1 \rrangle}
\newcommand{\anbbrscr}[1]{\llanglescr #1 \rranglescr}
\newcommand{\Anbbr}[1]{\Llangle #1 \Rrangle}
\newcommand{\llbr}{(\mkern-3.5mu(}
\newcommand{\rrbr}{)\mkern-3.5mu)}
\newcommand{\bbr}[1]{\llbr #1 \rrbr}
\newcommand{\br}[1]{(#1)}
\newcommand{\Br}[1]{\left(#1\right)}
\newcommand{\tpl}[1]{\br{#1}}
\newcommand{\seq}[1]{\langle #1 \rangle}
\newcommand{\lng}{n}
\newcommand{\lnga}{m}
\newcommand{\lngb}{n}
\newcommand{\lia}{i}
\newcommand{\lib}{j}
\newcommand{\lic}{k}
\newcommand{\pws}[1]{2^{#1}}
\newcommand{\na}{\textsf{\small n/a}}
\newextarrow{\myxlongequal}{3300}{\Relbar\Relbar\Relbar}
\newcommand{\mlthen}{\Rightarrow}
\newcommand{\mymodels}{\mathrel\mid\joinrel=}
\newcommand{\ent}{\mymodels}
\newcommand{\nent}{\not\ent}
\newcommand{\eq}{\equiv}
\newcommand{\prefix}[1]{\textsf{\small #1}}
\newcommand{\prefm}[1]{\ensuremath{\scriptscriptstyle \mathsf{#1}}}
\newcommand{\prefixm}[1]{\prefm{#1}}
\newcommand{\pstl}[2][]{\textsf{#1#2}}
\newcommand{\bigland}{\bigwedge}
\newcommand{\biglor}{\bigvee}
\newcommand{\frm}{\phi}
\newcommand{\frma}{\phi}
\newcommand{\frmb}{\psi}
\newcommand{\frmu}{\mu}
\newcommand{\frmv}{\nu}
\newcommand{\sfrm}{S}
\newcommand{\thr}{\mathcal{T}}
\newcommand{\twi}{I}
\newcommand{\twia}{I}
\newcommand{\twib}{J}
\newcommand{\twic}{K}
\newcommand{\twis}{\mathscr{I}}
\newcommand{\stwi}{\mathcal{M}}
\newcommand{\sstwia}{\mathcal{S}}
\newcommand{\sstwib}{\mathcal{T}}
\renewcommand{\mod}[1]{\sqbbr{#1}}
\newcommand{\atm}{p}
\newcommand{\atma}{p}
\newcommand{\atmb}{q}
\newcommand{\atmc}{r}
\newcommand{\atmd}{s}
\newcommand{\atms}{\mathscr{A}}
\newcommand{\olits}{\mathscr{L}}
\newcommand{\lcmp}[1]{\overline{#1}}
\newcommand{\lit}{L}
\newcommand{\lits}{\olits}
\newcommand{\slit}{S}
\newcommand{\slitp}[1][\slit]{#1^+}
\newcommand{\slitn}[1][\slit]{#1^-}
\newcommand{\slith}{H}
\newcommand{\slitb}{B}
\newcommand{\lpnot}{\mathop{\sim}}
\newcommand{\lpif}{\leftarrow}
\newcommand{\rl}{\pi}
\newcommand{\rla}{\pi}
\newcommand{\rlb}{\sigma}
\newcommand{\rlc}{\rho}
\newcommand{\hrl}[1][\rl]{\textrm{\sf \small H}_{#1}}
\newcommand{\hrla}{\hrl[\rla]}
\newcommand{\hrlb}{\hrl[\rlb]}
\newcommand{\hrlc}{\hrl[\rlc]}
\newcommand{\hrlp}{\slitp[\hrl]}
\newcommand{\hrln}{\slitn[\hrl]}
\newcommand{\brl}[1][\rl]{\textrm{\sf \small B}_{#1}}
\newcommand{\brla}{\brl[\rla]}
\newcommand{\brlb}{\brl[\rlb]}
\newcommand{\brlc}{\brl[\rlc]}
\newcommand{\brlp}{\slitp[\brl]}
\newcommand{\brln}{\slitn[\brl]}
\newcommand{\prg}{\mathcal{P}\hspace{-.1ex}}
\newcommand{\prga}{\mathcal{P}\hspace{-.1ex}}
\newcommand{\prgb}{\mathcal{Q}}
\newcommand{\prgc}{\mathcal{R}}
\newcommand{\prgu}{\mathcal{U}}
\newcommand{\prgv}{\mathcal{V}}
\newcommand{\modsm}[1]{\sqbbr{#1}_{\mSM}}
\newcommand{\labbu}[1]{\pstl[\small]{(B#1)}}
\newcommand{\labbutwotop}{\pstl[\small]{(B2.$\top$)}}
\newcommand{\bu}[1]{\hyperref[tbl:belief update postulates]{\labbu{#1}}}
\newcommand{\butwotop}{\hyperref[tbl:belief update postulates]{\labbutwotop}}
\newcommand{\incorp}{\mathsf{incorporate}}
\newcommand{\uopb}{\mathbin{\diamond}}
\newcommand{\biguopb}{\mathop{\raisebox{-1pt}{$\Diamond$}}}
\newcommand{\kb}{\mathcal{B}}
\newcommand{\kba}{\mathcal{B}}
\newcommand{\kbb}{\mathcal{C}}
\newcommand{\kbu}{\mathcal{U}}
\newcommand{\kbv}{\mathcal{V}}
\newcommand{\dkb}{\mathcal{D}}
\newcommand{\uopf}{\mathbin{\circ}}
\newcommand{\biguopf}{\mathop{\bigcirc}}
\newcommand{\remof}[1]{\mathsf{rem}(#1)}
\newcommand{\srem}{\mathcal{R}}
\newcommand{\uopwidtio}{\mathbin{\uopf_{\prefixm{widtio}}}}
\newcommand{\uopcross}{\mathbin{\uopf_{\prefixm{cp}}}}
\renewcommand{\sfn}{s}
\newcommand{\uopbold}{\mathbin{\uopf_{\prefixm{bold}}^\sfn}}
\newcommand{\dprg}{\boldsymbol{P}}
\newcommand{\expv}[1]{#1}
\newcommand{\confl}[2]{\Join^{#1}_{#2}}
\newcommand{\all}[1]{\mathsf{all}(#1)}
\newcommand{\rej}[2][]{\mathsf{rej}_{#1}(#2)}
\renewcommand{\S}[2][S]{\prefix{#1}\protect\nobreakdash#2\hspace{0pt}}
\newcommand{\mS}{\prefixm{S}}
\newcommand{\mods}[1]{\sqbbr{#1}_{\mS}}
\newcommand{\JU}[1]{\prefix{JU}\protect\nobreakdash#1\hspace{0pt}}
\newcommand{\mJU}{\prefixm{JU}}
\newcommand{\rejju}[1]{\rej[\mJU]{#1}}
\newcommand{\modju}[1]{\sqbbr{#1}_{\mJU}}
\newcommand{\AS}[1]{\prefix{UA}\protect\nobreakdash#1\hspace{0pt}}
\newcommand{\mAS}{\prefixm{UA}}
\newcommand{\rejas}[1]{\rej[\mAS]{#1}}
\newcommand{\modas}[1]{\sqbbr{#1}_{\mAS}}
\newcommand{\ctau}{\tau}
\newcommand{\uopr}{\mathbin{\oplus}}
\newcommand{\biguopr}{\mathop{\bigoplus}}
\newcommand{\uoprev}{\star}
\newcommand{\cn}{\mathit{Cn}}
\newcommand{\tri}{X}
\newcommand{\tris}{\mathscr{X}}
\newcommand{\twiab}{\tpl{\twia, \twib}}
\newcommand{\stri}{\mathcal{M}}
\newcommand{\stria}{\mathcal{M}}
\newcommand{\strib}{\mathcal{N}}
\newcommand{\sstri}{\mathcal{S}}
\newcommand{\sstria}{\mathcal{S}}
\newcommand{\sstrib}{\mathcal{T}}
\newcommand{\tr}{\mathsf{T}}
\newcommand{\fa}{\mathsf{F}}
\newcommand{\un}{\mathsf{U}}
\newcommand{\val}{\mathsf{V}}
\newcommand{\SE}[1]{\prefix{SE}\protect\nobreakdash#1\hspace{0pt}}
\newcommand{\mSE}{\prefixm{SE}}
\newcommand{\eqSE}{\equiv_{\mSE}}
\newcommand{\entSE}{\ent_{\mSE}}
\newcommand{\modse}[1]{\sqbbr{#1}_{\mSE}}
\newcommand{\smallpuse}[1]{\hyperref[pstl:puse:#1]{\pstl[\scriptsize]{(P#1)$_{\prefix{\tiny SE}}$}}}
\newcommand{\canse}[1]{\mathsf{can}_{\mSE}(#1)}
\newcommand{\canre}[1]{\mathsf{can}_{\mRE}(#1)}
\newcommand{\hstrire}{H_{\mRE}(\stri)}
\newcommand{\bstrire}{B_{\mRE}(\stri)}
\newcommand{\rulere}[1]{\| #1 \|_{\mRE}}
\newcommand{\gprg}{\Pi}
\newcommand{\gprga}{\Pi}
\newcommand{\gprgb}{\Sigma}
\newcommand{\modr}[1]{\anbbr{#1}}
\newcommand{\Modr}[1]{\Anbbr{#1}}
\newcommand{\modrr}[1]{\bbr{#1}}
\newcommand{\X}[2][X]{\prefix{#1}\protect\nobreakdash#2\hspace{0pt}}
\newcommand{\mX}{\prefixm{X}}
\newcommand{\eqX}{\equiv_{\mX}}
\newcommand{\entX}{\ent_{\mX}}
\newcommand{\modxr}[1]{\anbbr{#1}_{\mX}}
\newcommand{\Y}[2][Y]{\prefix{#1}\protect\nobreakdash#2\hspace{0pt}}
\newcommand{\mY}{\prefixm{Y}}
\newcommand{\eqY}{\equiv_{\mY}}
\newcommand{\entY}{\ent_{\mY}}
\newcommand{\mody}[1]{\sqbbr{#1}_{\mY}}
\newcommand{\modyr}[1]{\anbbr{#1}_{\mY}}
\newcommand{\modc}[1]{\sqbbr{#1}}
\newcommand{\SM}[1]{\prefix{SM}\protect\nobreakdash#1\hspace{0pt}}
\newcommand{\mSM}{\prefixm{SM}}
\newcommand{\eqSM}{\equiv_{\mSM}}
\newcommand{\SR}[1]{\prefix{SR}\protect\nobreakdash#1\hspace{0pt}}
\newcommand{\mSR}{\prefixm{SR}}
\newcommand{\eqSR}{\equiv_{\mSR}}
\newcommand{\entSR}{\ent_{\mSR}}
\newcommand{\modser}[1]{\anbbr{#1}_{\mSE}}
\newcommand{\SMR}[1]{\prefix{SMR}\protect\nobreakdash#1\hspace{0pt}}
\newcommand{\mSMR}{\prefixm{SMR}}
\newcommand{\eqSMR}{\equiv_{\mSMR}}
\newcommand{\entSMR}{\ent_{\mSMR}}
\newcommand{\SU}[1]{\prefix{SU}\protect\nobreakdash#1\hspace{0pt}}
\newcommand{\mSU}{\prefixm{SU}}
\newcommand{\eqSU}{\equiv_{\mSU}}
\newcommand{\entSU}{\ent_{\mSU}}
\newcommand{\RE}[1]{\prefix{RE}\protect\nobreakdash#1\hspace{0pt}}
\newcommand{\sRE}{\prefix{RE}}
\newcommand{\mRE}{\prefixm{RE}}
\newcommand{\eqRE}{\equiv_{\mRE}}
\newcommand{\entRE}{\ent_{\mRE}}
\newcommand{\modre}[1]{\sqbbr{#1}_{\mRE}}
\newcommand{\modrer}[1]{\anbbr{#1}_{\mRE}}
\newcommand{\modrerscr}[1]{\anbbrscr{#1}_{\mRE}}
\newcommand{\RR}[1]{\prefix{RR}\protect\nobreakdash#1\hspace{0pt}}
\newcommand{\mRR}{\prefixm{RR}}
\newcommand{\eqRR}{\equiv_{\mRR}}
\newcommand{\entRR}{\ent_{\mRR}}
\newcommand{\RMR}[1]{\prefix{RMR}\protect\nobreakdash#1\hspace{0pt}}
\newcommand{\mRMR}{\prefixm{RMR}}
\newcommand{\eqRMR}{\equiv_{\mRMR}}
\newcommand{\entRMR}{\ent_{\mRMR}}
\newcommand{\strle}{\,\preceq\,}
\newcommand{\nstrle}{\,\npreceq\,}
\newcommand{\strl}{\,\prec\,}
\newcommand{\rb}{\mathcal{R}}
\newcommand{\rba}{\mathcal{R}}
\newcommand{\rbb}{\mathcal{S}}
\newcommand{\rbu}{\mathcal{U}}
\newcommand{\rbv}{\mathcal{V}}
\newcommand{\drb}{\boldsymbol{R}}
\newcommand{\labfu}[1]{\pstl[\small]{(F#1)}}
\newcommand{\fu}[1]{\ref{pstl:fu:#1}}
\newcommand{\gfrm}{\alpha}
\newcommand{\gfrms}{\Omega}
\newcommand{\gkb}{\mathcal{K}}
\newcommand{\gkba}{\mathcal{K}}
\newcommand{\gkbu}{\mathcal{U}}
\newcommand{\uope}{\oplus}
\newcommand{\biguope}{\mathop{\textstyle \bigoplus}}
\newcommand{\e}{\varepsilon}
\newcommand{\te}[1]{\ensuremath{\e}\protect\nobreakdash#1\hspace{0pt}}
\newcommand{\er}{\delta}
\newcommand{\ter}[1]{\ensuremath{\er}\protect\nobreakdash#1\hspace{0pt}}
\newcommand{\erone}{\er_\mathsf{a}}
\newcommand{\terone}[1]{$\delta_\mathsf{a}$\protect\nobreakdash#1\hspace{0pt}}
\newcommand{\ertwo}{\er_\mathsf{b}}
\newcommand{\tertwo}[1]{$\delta_\mathsf{b}$\protect\nobreakdash#1\hspace{0pt}}
\newcommand{\erthree}{\er_\mathsf{c}}
\newcommand{\terthree}[1]{$\delta_\mathsf{c}$\protect\nobreakdash#1\hspace{0pt}}
\newcommand{\erfour}{\er_\mathsf{d}}
\newcommand{\terfour}[1]{$\delta_\mathsf{d}$\protect\nobreakdash#1\hspace{0pt}}
\newcommand{\erfive}{\er_\mathsf{e}}
\newcommand{\terfive}[1]{$\delta_\mathsf{e}$\protect\nobreakdash#1\hspace{0pt}}
\newcommand{\labpu}[1]{\pstl[\small]{(P#1)}}
\newcommand{\labputwotop}{\pstl[\small]{(P2.$\top$)}}
\newcommand{\labpup}[1]{\pstl[\small]{(#1)}}
\newcommand{\pu}[1]{\hyperref[tbl:semantic properties]{\labpu{#1}}}
\newcommand{\putwotop}{\hyperref[tbl:semantic properties]{\labputwotop}}
\newcommand{\pua}[1]{\hyperref[pstl:pua]{\labpu{#1}}}
\newcommand{\pup}[1]{\hyperref[tbl:semantic properties]{\labpup{#1}}}
\newcommand{\after}[2]{\mathsf{after}_{#1}(#2)}
\newcommand{\aug}[2][\er]{\mathsf{aug}_{#1}(#2)}
\newcommand{\Aug}[2][\er]{\mathsf{aug}_{#1}\left(#2\right)}
\newcommand{\augone}[1]{\aug[\erone]{#1}}
\newcommand{\Augone}[1]{\Aug[\erone]{#1}}
\newcommand{\augtwo}[1]{\aug[\ertwo]{#1}}
\newcommand{\Augtwo}[1]{\Aug[\ertwo]{#1}}
\newcommand{\uoprtwo}{\uopr_{\mathsf{b}}}
\newcommand{\uoprthree}{\uopr_{\mathsf{c}}}
\newcommand{\uoprfour}{\uopr_{\mathsf{d}}}
\newcommand{\uoprfive}{\uopr_{\mathsf{e}}}
\newcommand{\biguoprtwo}{\mathop{\textstyle \biguopr_{\mathsf{b}}}}
\newcommand{\biguoprthree}{\mathop{\textstyle \biguopr_{\mathsf{c}}}}
\newcommand{\biguoprfour}{\mathop{\textstyle \biguopr_{\mathsf{d}}}}
\newcommand{\biguoprfive}{\mathop{\textstyle \biguopr_{\mathsf{e}}}}
\newcommand{\sems}{\mathcal{Z}}
\newcommand{\tones}[1]{#1^{\circledcirc}}
\newcommand{\acond}[2]{\alpha_{#1}(#2)}
\newcommand{\Acond}[2]{\alpha_{#1}\!\!\left(#2\right)}
\newcommand{\uoprju}{\uopr_{\mJU}}
\newcommand{\biguoprju}{\textstyle\biguopr_{\mJU}}
\newcommand{\uopras}{\uopr_{\mAS}}
\newcommand{\biguopras}{\textstyle\biguopr_{\mAS}}
\newcommand{\sblk}{S}
\newcommand{\sblka}{S}
\newcommand{\sblkb}{T}
\newcommand{\sblks}[2]{\beta_{#1}(#2)}
\newcommand{\Sblks}[2]{\beta_{#1}\!\left(#2\right)}
\newcommand{\uoprjulor}{\uoprju^{\raisebox{0pt}{\tiny $\lor$}}}
\newcommand{\biguoprjulor}{
	\textstyle\biguoprju^{\raisebox{0pt}{\tiny $\lor$}}
}
\newcommand{\uopraslor}{\uopras^{\raisebox{0pt}{\tiny $\lor$}}}
\newcommand{\biguopraslor}{\textstyle\biguopras^{\raisebox{0pt}{\tiny $\lor$}}}
\begin{document}

\title{Exception-Based Knowledge Updates\tnoteref{t1}}

\tnotetext[t1]{%
	This is a revised and extended version of the material presented in
	\cite{Slota2012,Slota2012a,Slota2013}.
}

\author{Martin Slota}
\ead{martin.slota@gmail.com}

\author{Jo\~{a}o Leite\corref{cor1}}
\ead{jleite@fct.unl.pt}

\cortext[cor1]{%
	Corresponding author.
}

\address{%
	NOVA LINCS \& Departamento de Inform{\'a}tica,
	Universidade Nova de Lisboa, \\
	Quinta da Torre,
	2829-516 Caparica,
	Portugal
}

\date{}

\pdfinfo{%
/Title (Exception-Based Knowledge Updates)
/Keywords (knowledge representation, answer set programming, belief update,
rule update, SE-models, RE-models, program equivalence)
/Author (Martin Slota and Joao Leite)}

\begin{abstract}
	Existing methods for dealing with \emph{knowledge updates} differ greatly
	depending on the underlying knowledge representation formalism. When
	Classical Logic is used, updates are typically performed by manipulating the
	knowledge base on the model-theoretic level. On the opposite side of the
	spectrum stand the semantics for updating Answer-Set Programs that need to
	rely on rule syntax. Yet, a \emph{unifying perspective} that could embrace
	both these branches of research is of great importance as it enables a
	deeper understanding of all involved methods and principles and creates room
	for their cross-fertilisation, ripening and further development.
	Furthermore, from a more pragmatic viewpoint, such a unification is a
	necessary step in addressing \emph{updates of hybrid knowledge bases}
	consisting of both a classical and a rule component.

	This paper bridges the seemingly irreconcilable approaches to updates. It
	introduces a novel monotonic characterisation of rules, dubbed
	\emph{\RE-models}, and shows it to be a more suitable semantic foundation
	for rule updates than \SE-models. Then it proposes a generic scheme for
	specifying semantic rule update operators, based on the idea of viewing a
	program as the \emph{set of sets of \RE-models} of its rules; updates are
	performed by introducing additional interpretations -- \emph{exceptions} --
	to the sets of \RE-models of rules in the original program. The introduced
	scheme is then used to define particular rule update operators that are
	closely related to both classical update principles and traditional
	approaches to rules updates, enjoying a range of plausible syntactic as well
	as semantic properties. In addition, these operators serve as a basis for a
	solution to the long-standing problem of \emph{state condensing} for two of
	the foundational rule update semantics, showing how they can be equivalently
	defined as binary operators on some class of logic programs.

	Finally, the essence of these ideas is extracted to define an abstract
	framework for exception-based update operators, viewing a knowledge base as
	the \emph{set of sets of models} of its elements. It is shown that the
	framework can capture a wide range of both model- and formula-based
	classical update operators, and thus serves as the first firm formal ground
	connecting classical and rule updates.
\end{abstract}

\maketitle

\section{Introduction}

Recent standardisation efforts gave rise to widely accepted knowledge
representation languages such as the Web Ontology Language
(OWL)\footnote{\url{http://www.w3.org/TR/owl-overview/}} and Rule Interchange
Format
(RIF),\footnote{\url{http://www.w3.org/2005/rules/wiki/RIF\_Working\_Group}}
based on Description Logics \cite{Baader2007} and Logic Programming
\cite{Colmerauer1973,Kowalski1974,Lloyd1987,Gelfond1988}, respectively. This
has fostered a large number of ontologies and rule bases with different levels
of complexity and scale. Whereas ontologies provide the logical underpinning
of intelligent access and information integration, rules are widely used to
represent business policies, regulations and declarative guidelines about
information. 

Since both ontologies and rules offer important features for knowledge
representation, considerable effort has been invested in identifying a unified
hybrid knowledge framework where expressivity of both formalisms could be
seamlessly combined. Over the years, work on hybrid knowledge bases has
matured significantly and fundamental semantic as well as computational
problems were addressed successfully
\cite{Hitzler2009,Bruijn2010,Motik2010,Bruijn2011,Knorr2011}. While such
formalisms make it possible to seamlessly combine rules and ontologies in a
single unified framework, they do not take into account the \emph{dynamic
character} of application areas where they are to be used. More particularly,
the essential support for keeping a hybrid knowledge base \emph{up to date},
by incorporating new and possibly conflicting information, is still missing.
Nonetheless, this topic has been extensively addressed in the context of both
Description Logics and Logic Programs, when taken separately.

\subsection*{Ontology Updates}

The area of research called \emph{ontology change} encompasses a number of
strongly related though distinguishable subareas, such as ontology matching,
ontology integration and merging, or ontology translation \cite{Flouris2008}.
The purest type of change, concerned with modifications to a single ontology,
is generally referred to as \emph{ontology evolution}. Approaches to ontology
evolution with a firm semantic underpinning, thus amenable to a formal
analysis of their behaviour and properties, are based on research in the area
of \emph{belief change}, initiated by the seminal work of Alchourr\'{o}n,
G\"{a}rdenfors and Makinson (AGM) \cite{Alchourron1985} who proposed a set of
desirable properties of change operators on monotonic logics, now referred to
as the \emph{AGM postulates}.

Subsequently, \emph{revision} and \emph{update} were distinguished as two very
related but ultimately different belief change operations
\cite{Keller1985,Winslett1990,Katsuno1991}. While revision deals with
incorporating new information about a \emph{static world} into a knowledge
base, update takes place when a knowledge base needs to be brought up to date
when the \emph{modelled world changes}. While AGM postulates were deemed
appropriate for describing revision, Katsuno and Mendelzon suggested a
different set of postulates for updates: the \emph{KM postulates for belief
update} \cite{Katsuno1991}.

Update operators respecting the KM postulates, usually referred to as
\emph{model-based}, are based on the idea that the models of a knowledge base
correspond to possible states of the represented world. When a change in the
world needs to be recorded, inertia is applied to each of these possible
states, making only the smallest necessary modifications to reflect the
change, and arriving at a new collection of possible states that represent the
world after the update. Since the updates are specified on the semantic level,
they are naturally \emph{syntax-independent}. These ideas, and particularly
Winslett's update operator \cite{Keller1985,Winslett1988}, were later used to
partially address ontology updates
\cite{Liu2006,Bong2007,Drescher2009,Giacomo2006,Giacomo2007,Giacomo2009,Kharlamov2013},
namely to update the part of the ontology with assertions about individuals
(the \emph{ABox}).

On the other hand, model-based operators are considered inappropriate for
updating ontological axioms that define the terminology (the \emph{TBox})
\cite{Calvanese2010,Slota2010a}. Their antipole, \emph{formula-based
operators}, which manipulate the knowledge base at a syntactic level and are
strongly related to \emph{base revision operators}, were adopted for
performing TBox updates instead \cite{Calvanese2010}, and they also inspired a
recent approach to ABox updates \cite{Lenzerini2011}.

\subsection*{Rule Updates}

When updates started to be investigated in the context of Logic Programming,
it was only natural to adapt belief update principles and operators to this
purpose \cite{Alferes1996,Marek1998}. However, such approaches proved
insufficiently expressive, principally because the model-based approach fails
to capture the essential relationships between literals encoded in rules
\cite{Leite1997}, and the formula-based approach is too crude as it does not
allow rules to be reactivated when reasons for their suppression disappear
\cite{Zhang2006}. Although state-of-the-art approaches to rule updates are
guided by the same basic intuitions and aspirations as belief updates, they
build upon fundamentally different principles and methods.

Many of them are based on the \emph{causal rejection principle}
\cite{Leite1997,Alferes2000,Eiter2002,Alferes2005,Osorio2007}, which states
that a rule is \emph{rejected} only if it is directly contradicted by a more
recent rule. This essentially means that inertia and minimal change is applied
to rules instead of to the state, as with model-based belief updates. Causal
rejection semantics are useful in a number of practical scenarios
\cite{Alferes2003,Saias2004,Siska2006,Ilic2008} and their behaviour is
intuitively predictable. Alternative approaches to rule updates employ
syntactic transformations and other methods, such as abduction
\cite{Sakama2003}, prioritisation and preferences
\cite{Zhang2006,Delgrande2007}, or dependencies on default assumptions
\cite{Sefranek2011,Krumpelmann2010}.

Despite the variety of techniques used in these approaches, certain properties
are common to all of them. First, the stable models assigned to a program
after one or more updates are always \emph{supported}: for each true atom
$\atm$ there exists a rule in either the original program or its updates that
has $\atm$ in the head and whose body is satisfied. Second, all mentioned rule
update semantics coincide when it comes to \emph{updating sets of facts} by
newer facts. We conjecture that any reasonable rule update semantics should
indeed be in line with the basic intuitions regarding \emph{support} and
\emph{fact update}. Another common characteristic of all these approaches is
that they need to refer to the \emph{syntactic structure} of a logic program:
the individual rules and, in most cases, also the literals in their heads and
bodies. This renders them seemingly irreconcilable with ontology updates since
ontology axioms simply have no heads and bodies.

\subsection*{Towards Hybrid Updates}

A unifying framework that could embrace both belief and rule updates is of
great importance as it enables a deeper understanding of all involved methods
and principles and creates room for their cross-fertilisation, ripening and
further development. It is also important for the development of update
semantics for hybrid knowledge bases -- in \cite{Slota2010a,Slota2011a} we
provided partial solutions to this problem but the inherent differences
between the distinct approaches to updates have prevented us from suggesting a
universal hybrid update semantics.

Moreover, we argue that syntax-independence, central to model-based belief
updates, is essential and should be pursued at large in order to encourage a
logical underpinning of all update operators and so facilitate analysis of
their semantic properties. When equivalence with respect to classical models
is inappropriate, as is the case with rules, syntax-independence should be
retained by finding an appropriate notion of equivalence, specific to the
underlying formalism and its use.

With these standpoints in mind, we proceed with our previous work addressing
the logical foundations of rule updates. In \cite{Slota2013a} we have shown
that \emph{strong equivalence} is not a suitable basis for syntax-independent
rule update operators because such operators cannot respect both support and
fact update. This can be demonstrated on programs $\prga = \set{\atma.,
\atmb.}$ and $\prgb = \set{\atma., \atmb \lpif \atma.}$ which are strongly
equivalent, so, due to syntax independence, an update asserting that $\atma$
is now false ought to lead in both cases to the same stable models. Due to
fact update, such an update on $\prga$ should naturally lead to a stable model
where $\atmb$ is true. But in case of $\prgb$ such a stable model would be
unsupported.

This led us to the study of stronger notions of program equivalence. In
\cite{Slota2011} we proposed to view a program as the \emph{set of sets of
models of its rules} in order to acknowledge rules as the atomic pieces of
knowledge and, at the same time, abstract away from unimportant differences
between their syntactic forms, focusing on their semantic content. In this
paper we develop these ideas further and arrive at a unifying perspective on
both classical and rule updates. More particularly, our main contributions are
as follows:

\begin{itemize}
	\item We introduce a novel monotonic characterisation of rules,
		\emph{\RE-models}, and show that they form a more suitable semantic
		foundation for rule updates than \SE-models;

	\item We propose a generic scheme for defining semantic rule update
		operators: a program, viewed as the \emph{set of sets of \RE-models} of
		its rules, is updated by introducing additional interpretations to those
		sets of \RE-models;

	\item We identify instances of the framework that bridge classical update
		principles with traditional rule update semantics: they combine
		\emph{syntax-independence} with \emph{support} and \emph{fact update} and
		have other desirable \emph{syntactic} as well as \emph{semantic}
		properties.

	\item We solve the enduring problem of \emph{state condensing} for two
		foundational rule update semantics by showing how they can be equivalently
		defined as binary operators on some class of logic programs;

	\item We define abstract exception-based operators for any knowledge
		representation formalism with a model-theoretic semantics;

	\item We show that exception-based operators capture a wide range of model-
		and formula-based belief update operators.

\end{itemize}

This paper is organised as follows: We introduce the necessary theoretical
background in Section~\ref{sec:background} and in Section~\ref{sec:robust
equivalence models} we define \RE-models and associated notions of
equivalence. Section~\ref{sec:exception-based-updates:rule updates} introduces
the generic scheme for specifying exception-based rule update operators as
well as its particular instances and analyses their theoretical properties.
Section~\ref{sec:state condensing} is devoted to the problem of state
condensing and introduces operators for condensing an update sequence into a
single program for foundational rule update semantics.
Sections~\ref{sec:exception-based-updates:abstract} and
\ref{sec:exception-based-updates:belief updates} introduce abstract
exception-based operators and show how they are able to characterise belief
updates. We discuss our findings in a broader context and conclude in
Section~\ref{sec:discussion}. Proofs of all formal results can be found in the
appendices.

\section{Background}

\label{sec:background}

\subsection{Propositional Logic}

%
We consider a propositional language over a finite set of propositional
variables $\atms$ and the usual set of propositional connectives to form
propositional formulas. A \emph{knowledge base} is a finite set of formulas.
A (two-valued) interpretation is any $\twi \subseteq \atms$ and the set of all
interpretations is denoted by $\twis$. The set of models of a knowledge base
$\kb$ is defined in the standard way and denoted by $\mod{\kb}$. Given an
interpretation $\twi$, we sometimes write $\twi \ent \kb$ or $\twi(\kb) = \tr$
if $\twi$ is a model of $\kb$, and $\twi \nent \kb$ or $\twi(\kb) = \fa$ if
$\twi$ is not a model of $\kb$. A knowledge base $\kb$ is \emph{consistent} if
$\mod{\kb}$ is non-empty; \emph{complete} if $\mod{\kb}$ is a singleton set.
Given two knowledge bases $\kba$, $\kbb$, we say that \emph{$\kba$ entails
$\kbb$}, denoted by $\kba \ent \kbb$, if $\mod{\kba} \subseteq \mod{\kbb}$,
and that \emph{$\kba$ is equivalent to $\kbb$}, denoted by $\kba \eq \kbb$, if
$\mod{\kba} = \mod{\kbb}$. The models, set of models, consistency,
completeness, entailment and equivalence are generalised to formulas by
treating every formula $\frm$ as the knowledge base $\set{\frm}$.

\subsection{Logic Programs}

%
The basic syntactic building blocks of rules are also propositional atoms from
$\atms$. A \emph{default literal} is an atom preceded by $\lpnot{}$ denoting
default negation. The set of all \emph{literals} $\lits$ consists of all
atoms and default literals. The \emph{complementary literal} to a literal
$\lit$ is denoted by $\lcmp{\lit}$ and defined as follows: for any atom
$\atm$, $\lcmp{\atm} = \lpnot \atm$ and $\lcmp{\lpnot \atm} = \atm$. Given a
set of literals $\slit$, we introduce the following notation: $\slit^+ =
\Set{\atm \in \atms | \atm \in \slit}$, $\slit^- = \Set{\atm \in \atms |
\lpnot \atm \in \slit}$, $\lcmp{\slit} = \Set{\lcmp{\lit} | \lit \in \slit}$.

A \emph{rule} is a pair of sets of literals $\rl = \tpl{\hrl, \brl}$. We say
that $\hrl$ is the \emph{head of $\rl$} and $\brl$ is the \emph{body of
$\rl$}. Usually, for convenience, we write $\rl$ as
$
	\Br{\hrl^+; \lcmp{\hrl^-} \lpif \brl^+, \lcmp{\brl^-}.}
$.
A rule is called \emph{non-disjunctive} if its head contains at most one
literal; \emph{a fact} if its head contains exactly one literal and its body
is empty. A \emph{program} is any set of rules. A program is
\emph{non-disjunctive} if all its rules are; \emph{acyclic} if it satisfies
the conditions set out in \cite{Apt1991}. We also introduce the following
non-standard notion that is needed throughout the rest of the paper:

\begin{definition}
	[Canonical Tautology]
	\label{def:canonical tautology}
	Let $\atm_\ctau \in \atms$ be fixed. The \emph{canonical tautology} $\ctau$
	is the rule $(\atm_\ctau \lpif \atm_\ctau.)$.
\end{definition}

Turning to the semantics, we need to define \emph{stable models}
\cite{Gelfond1988,Gelfond1991} and \emph{\SE-models}
\cite{Turner2003,Pearce1997} of a logic program.  We start by generalizing the
set of \emph{(classical) models} to literals, rules and programs. Given an
atom $\atm$, a set of literals $\slit$, a rule $\rl$ and a program $\prg$, we
define the following:
\begin{align*}
	\modc{\lpnot \atm}
	&=
	\twis \setminus \modc{\atm}
	\enspace,
	& \modc{\slit}
	&=
	\textstyle\bigcap_{\lit \in \slit} \modc{\lit}
	\enspace,
	\\
	\modc{\rl}
	&=
	\set{
		\twib \in \twis
		|
		\twib \notin \modc{\brl}
		\lor
		\exists \lit \in \hrl : \twib \in \modc{\lit}
	}
	\enspace,
	& \modc{\prg}
	&=
	\textstyle\bigcap_{\rl \in \prg} \modc{\rl}
	\enspace.
\end{align*}
For a set of literals $\slit$, we also write $\twib \ent \slit$ whenever
$\twib \in \modc{\slit}$. A program $\prg$ is \emph{consistent} if
$\modc{\prg} \neq \emptyset$.

The stable and \SE-models are defined in terms of reducts. Given a rule $\rl$
and an interpretation $\twib$, the \emph{reduct of $\rl$ w.r.t.\ $\twib$},
denoted by $\rl^\twib$, is the rule $(\hrl^+ \lpif \brl^+.)$ if $\twib \ent
\lcmp{\brl^-}$ and $\twib \ent \hrl^-$;
otherwise it is the canonical tautology $\ctau$.\footnote{%
	In other words, if $\twib$ satisfies all default literals in the body of the
	rule and none of the default literals in its head, then the reduct is
	obtained from the original rule by removing all default literals from its
	head and body. In other cases, $\twib$ is certainly a model of the rule, so
	the reduct is defined as the canonical tautology.
}
The \emph{reduct of a program $\prg$ w.r.t.\ $\twib$} is defined as
$\prg^\twib = \Set{\rl^\twib | \rl \in \prg}$.

An interpretation $\twib$ is a \emph{stable model} of a program $\prg$ if
$\twib$ is a subset-minimal model of $\prg^\twib$. The set of all stable
models of $\prg$ is denoted by $\modsm{\prg}$.

\SE-models are semantic structures that can be seen as three-valued
interpretations. In particular, we call a pair of interpretations $\tri =
\tpl{\twia, \twib}$ such that $\twia \subseteq \twib$ a \emph{three-valued
interpretation}. Each atom $\atm$ is assigned one of three truth values in
$\tri$: $\tri(\atm) = \tr$ if $\atm \in \twia$; $\tri(\atm) = \un$ if $\atm
\in \twib \setminus \twia$; $\tri(\atm) = \fa$ if $\atm \in \atms \setminus
\twib$. The set of all three-valued interpretations is denoted by $\tris$. A
three-valued interpretation $\tpl{\twia, \twib}$ is an \emph{\SE-model} of a
rule $\rl$ if $\twib$ is a model of $\rl$ and $\twia$ is a model of
$\rl^\twib$. The set of all \SE-models of a rule $\rl$ is denoted by
$\modse{\rl}$ and for any program $\prg$, $\modse{\prg} = \bigcap_{\rl \in
\prg} \modse{\rl}$. Note that $\twib$ is a stable model of $\prg$ if and only
if $\tpl{\twib, \twib} \in \modse{\prg}$ and for all $\twia \subsetneq \twib$,
$\tpl{\twia, \twib} \notin \modse{\prg}$. Also, $\twib \in \modc{\prg}$ if and
only if $\tpl{\twib, \twib} \in \modse{\prg}$. We say that a rule $\rl$ is
\textbf{(\SE-)tautological} if $\modse{\rl} = \tris$. Note that the canonical
tautology (c.f.\ Definition~\ref{def:canonical tautology}) is tautological.

\subsection{Belief Updates}

%
In this section we briefly introduce \emph{model-based} as well as
\emph{formula-based} belief update operators which form the basis of formal
approaches to ontology updates
\cite{Liu2006,Bong2007,Drescher2009,Giacomo2006,Giacomo2007,Giacomo2009,Calvanese2010,Lenzerini2011,Kharlamov2013}.

We liberally define a belief update operator as any function that takes the
original knowledge base and its update as inputs and returns the updated
knowledge base.

\begin{definition}
	[Update Operator]
	\label{def:bu:belief update operator}
	A \emph{belief update operator} is a binary function over the set of all
	knowledge bases. Any belief update operator $\uopb$ is inductively
	generalised to finite sequences of knowledge bases $\seq{\kb_\lia}_{\lia <
	\lng}$ as follows: $\biguopb \seq{} = \emptyset$ and $\biguopb
	\seq{\kb_\lia}_{\lia < \lng + 1} = (\biguopb \seq{\kb_\lia}_{\lia < \lng})
	\uopb \kb_\lng$.
\end{definition}

The fundamental idea underlying model-based update operators
\cite{Keller1985,Winslett1988,Winslett1990,Katsuno1991} is that models of the
original knowledge base are viewed as \emph{possible} or \emph{admissible
states} of the modelled domain, only one of which is the actual state. Given
this perspective, it is natural to perform an update with $\kbu$ by modifying
each of the possible states independently of the others, making it consistent
with $\kbu$, and thus obtaining a new set of interpretations -- the models of
the updated knowledge base. Formally this is captured by the equation
\begin{equation}
	\label{eq:belupd:operators}
	\mod{\kba \uopb \kbu}
	=
	\bigcup_{\twi \in \mod{\kba}}
	\incorp(\mod{\kbu}, \twi)
	\enspace,
\end{equation}
where $\incorp(\stwi, \twi)$ returns the members of $\stwi$ closer to $\twi$
so that the original information in $\twi$ is preserved as much as possible.
For instance, Winslett's operator, first introduced to deal with change in
action theories and relational databases with incomplete information
\cite{Keller1985,Winslett1988}, and used extensively to study ontology updates
\cite{Liu2006,Bong2007,Drescher2009,Giacomo2006,Giacomo2007,Giacomo2009,Kharlamov2013},
is characterised by equation \eqref{eq:belupd:operators} where $\incorp(\stwi,
\twi)$ is the set of models from $\stwi$ that interpret a minimal set of atoms
differently than $\twi$. From a more general perspective, Katsuno and
Mendelzon have shown \cite{Katsuno1991} that if each knowledge base is
represented by a single formula and $\incorp(\stwi, \twi)$ is bound to return
those members of $\stwi$ that are minimal w.r.t.\ a closeness relation
$\leq_\twi$ assigned to $\twi$,\footnote{%
	Technically, the closeness relation is simply a partial order on $\twis$
	such that $\twi$ is the least interpretation w.r.t.\ $\leq_\twi$.
}
then the class of operators which satisfy \eqref{eq:belupd:operators}
coincides with the class of operators which satisfy the declarative postulates
\bu{1} -- \bu{8} listed in Table~\ref{tbl:belief update postulates}. Note that
\bu{1} -- \bu{6} can be immediately generalised to deal with arbitrary
knowledge bases but \bu{7} and \bu{8} require disjunction of knowledge bases
to be defined.

\begin{table}[t]
	\centering
	\caption{Belief update postulates}
		\label{tbl:belief update postulates}
		\small
		\begin{tabular}{cm{20em}m{19.6em}}
			\toprule
			Postulate
				& Knowledge base as a single formula
				& Knowledge base as a set
				\\ \midrule[\heavyrulewidth]
			\labbu{1}
				& $\frma \uopb \frmu \ent \frmu$.
				& $\kba \uopb \kbu \ent \kbu$.
				\\ \midrule
			\labbu{2}
				& If $\frma \ent \frmu$, then $\frma \uopb \frmu \eq \frma$.
				& If $\kba \ent \kbu$, then $\kba \uopb \kbu \eq \kba$.
				\\ \midrule
			\labbutwotop
				& $\frma \uopb \top \eq \frma$.
				& $\kba \uopb \emptyset \eq \kba$.
				\\ \midrule
			\labbu{2.1}
				& $\frma \land \frmu \ent \frma \uopb \frmu$.
				& $\kba \land \kbu \ent \kba \uopb \kbu$.
				\\ \midrule
			\labbu{2.2}
				& $(\frma \land \frmu) \uopb \frmu \ent \frma$.
				& $(\kba \land \kbu) \uopb \kbu \ent \kba$.
				\\ \midrule
			\labbu{3}
				& If $\mod{\frma} \neq \emptyset$ and $\mod{\frmu} \neq \emptyset$,
					then $\mod{\frma \uopb \frmu} \neq \emptyset$.
				& If $\mod{\kba} \neq \emptyset$ and $\mod{\kbu} \neq \emptyset$,
					then $\mod{\kba \uopb \kbu} \neq \emptyset$.
				\\ \midrule
			\labbu{4}
				& If $\frma \eq \frmb$ and $\frmu \eq \frmv$, then $\frma \uopb \frmu
					\eq \frmb \uopb \frmv$.
				& If $\kba \eq \kbb$ and $\kbu \eq \kbv$, then $\kba \uopb \kbu
					\eq \kbb \uopb \kbv$.
				\\ \midrule
			\labbu{5}
				& $(\frma \uopb \frmu) \land \frmv \ent \frma \uopb (\frmu \land
					\frmv)$.
				& $(\kba \uopb \kbu) \cup \kbv \ent \kba \uopb (\kbu \cup \kbv)$.
				\\ \midrule
			\labbu{6}
				& If $\frma \uopb \frmu \ent \frmv$ and $\frma \uopb \frmv \ent
					\frmu$, then $\frma \uopb \frmu \eq \frma \uopb \frmv$.
				& If $\kba \uopb \kbu \ent \kbv$ and $\kba \uopb \kbv \ent
					\kbu$, then $\kba \uopb \kbu \eq \kba \uopb \kbv$.
				\\ \midrule
			\labbu{7}
				& If $\frma$ is complete, then $(\frma \uopb \frmu) \land (\frma \uopb
					\frmv) \ent \frma \uopb (\frmu \lor \frmv)$.
				& 
				\\ \midrule
			\labbu{8}
				& $(\frma \lor \frmb) \uopb \frmu \eq (\frma \uopb \frmu) \lor (\frmb
					\uopb \frmu)$.
				& 
				\\ \bottomrule
		\end{tabular}
\end{table}

Some of these postulates, and especially \bu{4} which guarantees
syntax-independence of an update operator, continue to be seen as fundamental
cornerstones of belief updates \cite{Herzig1999}. The desirability of others
has been questioned by many
\cite{Brewka1993,Boutilier1995,Doherty1998,Herzig1999}, and particularly
\bu{2}, \bu{5}, \bu{6} and \bu{7} are deemed controversial: the first three
sometimes lead to undesirable behaviour while the last one is hard to explain
intuitively and is satisfied only by a minority of existing update operators
\cite{Herzig1999}. Note also that \bu{2} entails the weaker principles
\butwotop, \bu{2.1} and \bu{2.2}. The first two are uncontroversial as they are
satisfied by all update operators. In addition, in the presence of \bu{4}, the
latter two are together powerful enough to entail \bu{2}, so the controversial
part of \bu{2} is \bu{2.2} \cite{Herzig1999}.

Earlier approaches to updates, dubbed \emph{formula-based}
\citep{Winslett1990}, operate on the syntax of a knowledge base and, as a
consequence, are not syntax-independent. Nevertheless, they have recently been
considered for performing ontology updates \cite{Calvanese2010,Lenzerini2011}.
Traditional formula-based update operators are Set-Of-Theories
\cite{Fagin1983}, WIDTIO (When In Doubt, Throw It Out)
\cite{Ginsberg1986,Ginsberg1988,Winslett1990} and Cross-Product
\cite{Ginsberg1986}. We define only the latter two as the Set-Of-Theories
operator produces a \emph{collection of knowledge bases} as its result instead
of a single knowledge base, and is equivalent to the Cross-Product operator
which compiles these knowledge bases into one.

The central notion in these operators is that of a \emph{possible remainder}
which is a maximal set of formulas from the original knowledge base that is
consistent with the update. Formally, given knowledge bases $\kba$ and $\kbu$,
the set of possible remainders $\remof{\kba, \kbu}$ is the set of maximal
subsets $\kba'$ of $\kba$ such that $\kba' \cup \kbu$ is consistent. The
distinct formula-based operators differ in how they deal with the case when
there is more than one possible remainder. While Cross-Product compiles the
different remainders into a single formula, WIDTIO takes the safer path -- it
keeps exactly those formulas that belong to the intersection of all remainders
and throws away the rest. Additionally, in \cite{Calvanese2010} the new
formula-based operator \emph{Bold} was suggested for performing TBox updates.
The Bold operator solves the problem of multiple remainders by using a
selection function to choose one and commit to it. Formally, a \emph{remainder
selection function} is a function $\sfn$ that assigns to every set of
remainders $\srem$ a remainder $\sfn(\srem) \in \srem$. Given such a selection
function $\sfn$, the Cross-Product operator $\uopcross$, WIDTIO operator
$\uopwidtio$ and Bold operator $\uopbold$ are defined for all knowledge bases
$\kba$, $\kbu$ as follows:
\begin{align*}
	\kba \uopcross \kbu &= \kbu \cup \Set{
		\textstyle \biglor_{\kba' \in \remof{\kba, \kbu}}
		\bigland \kba'
	} ,
	&
	\kba \uopwidtio \kbu &= \kbu \cup \bigcap \remof{\kba,
	\kbu} ,
	&
	\kba \uopbold \kbu &= \kbu \cup \sfn(\remof{\kba, \kbu}) .
\end{align*}

\subsection{Rule Updates}

%
Rule update semantics assign stable models to pairs or sequences of programs
where each component represents an update of the preceding ones. In the
following, we formalise some of the intuitions behind these semantics and
define two foundational rule update semantics.

We start with the basic concepts. A \emph{dynamic logic program} (DLP) is a
finite sequence of non-disjunctive programs. Given a DLP $\dprg$, we use
$\all{\dprg}$ to denote the set of all rules belonging to the programs in
$\dprg$.\footnote{%
	In order to avoid issues with rules that are repeated in multiple components
	of a DLP, we assume throughout this paper that every rule is uniquely
	identified in all set-theoretic operations. This can be formalised by
	assigning a unique name to each rule and performing operations on names
	instead of the rules themselves. However, for the sake of simplicity, we
	leave the technical realisation to the reader.
}
We say that $\dprg$ is \emph{acyclic} if $\all{\dprg}$ is acyclic. A rule
update semantics \S{} assigns a set of \emph{\S-stable models} $\mods{\dprg}$
to every DLP $\dprg$.

As indicated in the introduction, rule update semantics implicitly follow
certain basic intuitions. Particularly, they produce \emph{supported models}
and their behaviour coincides when it comes to \emph{updating sets of facts}
by newer facts. We formalise these two properties w.r.t.\ rule update
semantics for DLPs, calling them \emph{syntactic} because their formulation
requires that we refer to the syntax of the respective DLP.

In the static setting, support \cite{Apt1988,Dix1995a} is one of the basic
conditions that Logic Programming semantics are intuitively designed to
satisfy. Its generalisation to the dynamic case is straightforward.

\begin{property}
	[Support]
	Let $\prg$ be a program, $\atm$ an atom and $\twib$ an interpretation. We
	say that \emph{$\prg$ supports $\atm$ in $\twib$} if $\atm \in \hrl$ and
	$\twib \ent \brl$ for some rule $\rl \in \prg$.

	A rule update semantics \S{} \emph{respects support} if for every DLP
	$\dprg$ and every \S-stable model $\twib$ of $\dprg$ the following
	condition is satisfied: Every atom $\atm \in \twib$ is supported by
	$\all{\dprg}$ in $\twib$.
\end{property}

Thus, if a rule update semantics \S{} respects support, then there is at least
\emph{some} justification for every atom in an assigned \S-stable model.

The second syntactic property that is generally adhered to is the usual
expectation regarding how facts are to be updated by newer facts. It enforces
a limited notion of state inertia but only for the case when both the initial
program and its updates are consistent sets of facts.

\begin{property}
	[Fact Update]
	A rule update semantics \S{} \emph{respects fact update} if for every
	finite sequence of consistent sets of facts $\dprg = \seq{\prg_\lia}_{\lia <
	\lng}$, the unique \S-stable model of $\dprg$ is the interpretation
	\[
		\Set{
			\atm |
			\exists \lia < \lng : (\atm.) \in \prg_\lia
			\land
			(\forall \lib :
				\lia < \lib < \lng \mlthen (\lpnot \atm.) \notin \prg_\lib
			)
		} \enspace.
	\]
\end{property}

We also introduce two further syntactic properties that are more tightly bound
to approaches based on the causal rejection principle
\cite{Leite1997,Alferes2000,Eiter2002,Alferes2005,Osorio2007}. The first one
states the principle itself, under the assumption that a \emph{conflict}
between rules occurs if and only if the rules have complementary heads.

\begin{property}
	[Causal Rejection]
	A rule update semantics \S{} \emph{respects causal rejection} if for every
	DLP $\dprg = \seq{\prg_\lia}_{\lia < \lng}$, every \S-stable model $\twib$
	of $\dprg$, all $\lia < \lng$ and all rules $\rla \in \prg_\lia$, if $\twib$
	is not a model of $\rla$, then there exists a rule $\rlb \in \prg_\lib$ with
	$\lib > \lia$ such that $\hrlb = \lcmp{\hrla}$ and $\twib \ent \brlb$.
\end{property}

Intuitively, the principle requires that all \emph{rejected} rules, i.e.\
rules that are not satisfied in an \S-stable model $\twib$, must be in
conflict with a more recent rule whose body is satisfied in $\twib$. This rule
then provides a \emph{cause} for the rejection.

The final syntactic property stems from the fact that all rule update
semantics based on causal rejection coincide on acyclic DLPs
\cite{Homola2004,Alferes2005}. Thus, the behaviour of any rule update
semantics on acyclic DLPs can be used as a way to compare it to all these
semantics simultaneously. Before formalising the property, we define two
foundational rule update semantics based on causal rejection: the
\JU-semantics \cite{Leite1997} and the \AS-semantics \cite{Eiter2002}.

\begin{definition}
	[\JU-Semantics \cite{Leite1997} and \AS-Semantics \cite{Eiter2002}]
	\label{def:ru:ju}
	Let $\dprg = \seq{\prg_\lia}_{\lia < \lng}$ be a DLP and $\twib$ an
	interpretation. The sets of rejected rules $\rejju{\dprg, \twib}$ and
	$\rejas{\dprg, \twib}$ are defined as follows:
	\begin{align*}
		\rejju{\dprg, \twib}
		&=
		\Set{
			\rla \in \prg_\lia
			|
			\exists \lib
			\,
			\exists \rlb \in \prg_\lib
			:
			\lia < \lib < \lng
			\land
			\hrlb = \lcmp{\hrla}
			\land
			\twib \ent \brlb
		}
		\enspace,
		\\
		\rejas{\dprg, \twib}
		&=
		\Set{
			\rla \in \prg_\lia
			|
			\exists \lib
			\,
			\exists \rlb \in \prg_\lib \setminus \rejas{\dprg, \twib}
			:
			\lia < \lib < \lng
			\land
			\hrlb = \lcmp{\hrla}
			\land
			\twib \ent \brlb
		}
		\enspace.\footnote{%
			Note that although this definition is recursive, the defined set is
			unique. This is because we assume that every rule is uniquely identified
			and to determine whether a rule from $\prg_\lia$ is rejected, the
			recursion only refers to rejected rules from programs $\prg_\lib$ with
			$\lib$ strictly greater than $\lia$. One can thus first find the
			rejected rules in $\prg_{\lng - 1}$ (always $\emptyset$ by the
			definition), then those in $\prg_{\lng - 2}$ and so on until $\prg_0$.
		}
	\end{align*}
	The set $\modju{\dprg}$ of \emph{\JU-models of a DLP $\dprg$} consists of
	all interpretations $\twib$ such that $\twib$ is a stable model of the
	program
	$
		\all{\dprg} \setminus \rejju{\dprg, \twib}
	$.
	Similarly, the set $\modas{\dprg}$ of \emph{\AS-models of a DLP $\dprg$}
	consists of all interpretations $\twib$ such that $\twib$ is a stable model
	of the program
	$
		\all{\dprg} \setminus \rejas{\dprg, \twib}
	$.
\end{definition}

Under the \JU-semantics, a rule $\rla$ is rejected if a more recent rule
$\rlb$ is in conflict with $\rla$ and the body of $\rlb$ is satisfied in the
stable model candidate $\twib$. The only difference in the \AS-semantics
\cite{Eiter2002} is that rejected rules are prevented from rejecting other
rules. Perhaps surprisingly, this renders the \AS-semantics more sensitive to
tautological updates, which cannot indicate any change in the modelled world,
than the \JU-semantics. For example, the DLP 
\begin{align}
	\label{eq:ru:problem with as}
	\dprg_1 = \seq{\set{\atma.}, \set{\lpnot \atma.}, \set{\atma \lpif \atma.}}
\end{align}
has only one \JU-model, $\emptyset$, but the \AS-semantics admits the
additional undesired model $\set{\atma}$. Nevertheless, there are also
situations in which the \JU-semantics assigns additional models only due to
the presence of tautological rules. This is discussed in more detail in
Section~\ref{sec:discussion}.

The final syntactic property can now be stated as follows:

\begin{property}
	[Acyclic Justified Update]
	A rule update semantics \S{} \emph{respects acyclic justified update} if for
	every acyclic DLP $\dprg$, the set of \S-stable models of $\dprg$ is
	$\modju{\dprg}$.
\end{property}

\subsection{Program Equivalence}

%
While in propositional logic equivalence under classical models is \emph{the}
equivalence, there is no such single notion of program equivalence. When
considering Answer-Set Programs, the first choice is \emph{stable equivalence}
(or \emph{\SM-equivalence}) that compares programs based on their sets of
stable models.

In many cases, however, \SM-equivalence is not strong enough because programs
with the same stable models, when augmented with the same additional rules,
may end up having completely different stable models. This gives rise to the
notion of \emph{strong equivalence} \cite{Lifschitz2001} which requires that
stable models stay the same even in the presence of additional rules. It is a
well-known fact that programs are strongly equivalent if and only if they have
the same set of \SE-models \cite{Turner2003}. Thus, we refer to strong
equivalence as \emph{\SE-equivalence}.

But even \SE-equivalence is not satisfactory when used as a basis for
syntax-independent rule update operators because such operators cannot respect
both support and fact update \cite{Slota2013a}.
So in order to arrive at syntax-independent rule update operators that satisfy
the basic intuitions underlying rule updates, we need to search for a notion
of program equivalence that is stronger than \SE-equivalence. One candidate is
the \emph{strong update equivalence} (or \emph{\SU-equivalence})
\cite{Inoue2004}, which requires that under both additions and removals of
rules, stable models of the two programs in question remain the same. It has
been shown in \cite{Inoue2004} that this notion of equivalence is very strong
-- programs are \SU-equivalent only if they contain exactly the same
non-tautological rules, and in addition, each of them may contain some
tautological ones. Thus, this notion of program equivalence seems perhaps
\emph{too strong} as it is not difficult to find rules such as $(\lpnot \atm
\lpif \atm.)$ and $(\lpif \atm.)$ that are syntactically different but carry
the same meaning.

This observation resulted in the definition of \emph{strong rule equivalence}
(or \emph{\SR-equivalence}) and \emph{strong minimal rule equivalence} (or
\emph{\SMR-equivalence}) in \cite{Slota2011} that, in terms of strength,
fall between \SE-equivalence and \SU-equivalence. It is based on the idea of
viewing a program $\prg$ as the set of sets of \SE-models of its rules
$\modser{\prg} = \Set{\modse{\rl} | \rl \in \prg}$.

The five mentioned notions of program equivalence are defined as follows:

\begin{definition}
	[Program Equivalence]
	Let $\prga, \prgb$ be programs, $\prga^\ctau = \prga \cup \set{\ctau}$,
	$\prgb^\ctau = \prgb \cup \set{\ctau}$, let $\min \sstri$ denote the
	subset-minimal elements of any set of sets $\sstri$ and $\div$ denote
	set-theoretic symmetric difference. We write
	\begin{align*}
		\prga &\eqSM \prgb
			&& \text{whenever}
			&& \modsm{\prga} = \modsm{\prgb}
			\enspace;
		&
		\prga &\eqSMR \prgb
			&& \text{whenever}
			&& \!\min \modser{\prga^\ctau} = \min \modser{\prgb^\ctau}
			\enspace;
		\\
		\prga &\eqSE \prgb
			&& \text{whenever}
			&& \modse{\prga} = \modse{\prgb}
			\enspace;
		&
		\prga &\eqSR \prgb
			&& \text{whenever}
			&& \modser{\prga^\ctau} = \modser{\prgb^\ctau}
			\enspace;
		\\
		\prga &\eqSU \prgb
			&& \text{whenever}
			&& \modse{\prga \div \prgb} = \tris
			\enspace.
	\end{align*}
	We say that \emph{$\prga$ is \X-equivalent to $\prgb$} if $\prga \eqX \prgb$.
\end{definition}

So two programs are \SR-equivalent if they contain the same rules, modulo
\SE-models; $\ctau$ is added to both programs so that presence or absence of
tautological rules in a program does not influence program equivalence. In the
case of \SMR-equivalence, only the subset-minimal sets of \SE-models are
compared, the motivation being that programs such as $\prga = \set{\atma \lpif
\atmb.}$ and $\prgb = \set{\atma., \atma \lpif \atmb.}$, when updated, should
behave the same way since the extra rule in $\prgb$ is just a weakened version
of the rule in $\prga$. Though $\prga$ and $\prgb$ are not \SR-equivalent,
they are \SMR-equivalent.

To formally capture the comparison of strength between these notions of
program equivalence, we write $\eqX \strle \eqY$ if $\prga \eqY \prgb$ implies
$\prga \eqX \prgb$ and $\eqX \strl \eqY$ if $\eqX \strle \eqY$ but not $\eqY
\strle \eqX$. Then:

\begin{proposition}
	[\cite{Slota2011}]
	\label{prop:se:equivalence comparison}
	$\eqSM \,\prec\, \eqSE \,\prec\, \eqSMR \,\prec\, \eqSR \,\prec\, \eqSU$.
\end{proposition}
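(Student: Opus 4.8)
The plan is to establish the chain $\eqSM \strl \eqSE \strl \eqSMR \strl \eqSR \strl \eqSU$ by proving each link separately: for each adjacent pair $\eqX, \eqY$ with $\eqX$ to the left of $\eqY$, I would show (i) the inclusion $\eqX \strle \eqY$, i.e.\ that $\prga \eqY \prgb$ implies $\prga \eqX \prgb$, and (ii) strictness, i.e.\ exhibit a witnessing pair of programs that are \X-equivalent but not \Y-equivalent. Since the relations $\strle$ and $\strl$ are transitive (the former because implication composes, the latter by a routine argument combining transitivity of $\strle$ with a counterexample), establishing the four links suffices for the full proposition.

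For the inclusions I would argue as follows. \emph{$\eqSM \strle \eqSE$}: if $\modse{\prga} = \modse{\prgb}$ then in particular $\prga$ and $\prgb$ have the same stable models, since stable models are exactly those $\twib$ with $\tpl{\twib,\twib} \in \modse{\cdot}$ and no $\tpl{\twia,\twib}$ with $\twia \subsetneq \twib$ in $\modse{\cdot}$; this characterisation is recalled in the Background. \emph{$\eqSE \strle \eqSMR$}: if $\min \modser{\prga^\ctau} = \min \modser{\prgb^\ctau}$, then $\bigcap \min \modser{\prga^\ctau} = \bigcap \min \modser{\prgb^\ctau}$; but $\modse{\prga} = \bigcap_{\rl\in\prga}\modse{\rl} = \bigcap \modser{\prga^\ctau} = \bigcap \min \modser{\prga^\ctau}$ (adding $\ctau$ is harmless since $\modse{\ctau} = \tris$, and intersecting over a family equals intersecting over its minimal members), and likewise for $\prgb$, so $\modse{\prga} = \modse{\prgb}$. \emph{$\eqSMR \strle \eqSR$}: immediate, since equal sets of sets have equal minimal elements. \emph{$\eqSR \strle \eqSU$}: if $\modse{\prga \div \prgb} = \tris$, then every rule in the symmetric difference is \SE-tautological, i.e.\ has \SE-models $\tris$; hence $\modser{\prga^\ctau}$ and $\modser{\prgb^\ctau}$ can differ only in the presence of the element $\tris$, which both contain (because of $\ctau$), so $\modser{\prga^\ctau} = \modser{\prgb^\ctau}$. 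Here I would lean on the characterisation of \SU-equivalence from \cite{Inoue2004} quoted in the text.

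For strictness I would supply the separating examples, several of which are essentially furnished in the surrounding text. For $\eqSM \strl \eqSE$: $\set{\atma.,\atmb.}$ versus $\set{\atma.,\atmb\lpif\atma.}$ (the programs $\prga,\prgb$ from the introduction) have the same unique stable model $\set{\atma,\atmb}$ but different \SE-models. For $\eqSMR \strl \eqSR$: $\set{\atma\lpif\atmb.}$ versus $\set{\atma.,\atma\lpif\atmb.}$, discussed right after the definition, are \SMR-equivalent but not \SR-equivalent. For $\eqSR \strl \eqSU$: any two syntactically distinct tautologies, e.g.\ $\set{\ctau}$ versus $\set{\atmb\lpif\atmb.}$ with $\atmb \neq \atm_\ctau$, are \SR-equivalent (both have $\modser{\cdot^\ctau} = \set{\tris}$) but not \SU-equivalent unless one uses the $\ctau$-slack; more carefully, $\set{\atma.}$ versus $\set{\atma.,\ctau}$ are \SR-equivalent but their symmetric difference $\set{\ctau}$ has $\modse{\set{\ctau}} = \tris$, so actually \SU-equivalent — so the right witness is a pair like $\set{(\lpnot\atm\lpif\atm.)}$ versus $\set{(\lpif\atm.)}$, which the text explicitly flags as \SE-equivalent rules that are not \SU-equivalent while clearly being \SR-equivalent. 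For $\eqSE \strl \eqSMR$, the natural witness is a program containing a rule whose set of \SE-models is strictly above another's: e.g.\ $\prga = \set{\atma.}$ and $\prgb = \set{\atma., \atma\lpif\atmb.}$, which have the same \SE-models of the whole program (the rule $\atma\lpif\atmb$ is subsumed) hence are \SE-equivalent, yet $\modser{\prga^\ctau} \ne \modser{\prgb^\ctau}$ and in fact even the minimal elements differ, so they are not \SMR-equivalent. I would verify each of these inequivalences by a direct \SE-model computation.

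The main obstacle is getting the strictness witness for $\eqSE \strl \eqSMR$ exactly right: one must choose programs whose \SE-model intersections coincide (forcing \SE-equivalence) but whose \emph{families of minimal rule-wise \SE-model sets} genuinely differ — and one has to check that adding $\ctau$ does not accidentally collapse the minimal families. The candidate above works because $\modse{\atma.}$ is not a subset of $\modse{\atma\lpif\atmb.}$ nor vice versa (the three-valued interpretation $\tpl{\emptyset,\set{\atma}}$ separates them), so $\ctau$ aside, $\min\modser{\prgb^\ctau}$ has two incomparable elements while $\min\modser{\prga^\ctau}$ has one; I would present this computation in full. All the inclusion arguments are short once the \SE-model characterisations from the Background and the \SU-equivalence result of \cite{Inoue2004} are invoked, so the write-up weight falls on laying out the five small example verifications cleanly.
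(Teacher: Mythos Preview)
Your overall strategy---prove each inclusion $\eqX \strle \eqY$ directly and then exhibit a separating pair of programs for strictness---is exactly the paper's approach, and your inclusion arguments are fine. However, three of your four strictness witnesses are incorrect, and in one case your supporting computation is explicitly wrong.

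\emph{For $\eqSM \strl \eqSE$.} The programs $\set{\atma.,\atmb.}$ and $\set{\atma.,\atmb\lpif\atma.}$ are \SE-equivalent (indeed, this is precisely what the introduction states: they are strongly equivalent), so they cannot separate $\eqSM$ from $\eqSE$. A correct witness is $\emptyset$ versus $\set{\atma\lpif\atmb.}$: both have unique stable model $\emptyset$, but $\tpl{\set{\atmb},\set{\atmb}}$ is an \SE-model of the former and not the latter.

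\emph{For $\eqSE \strl \eqSMR$.} Your claim that $\modse{\atma.}$ and $\modse{\atma\lpif\atmb.}$ are incomparable is false: in fact $\modse{\atma.} \subsetneq \modse{\atma\lpif\atmb.}$, since any $\tpl{\twia,\twib}$ with $\atma \in \twia$ trivially satisfies $\atma\lpif\atmb$. The interpretation $\tpl{\emptyset,\set{\atma}}$ you cite lies in $\modse{\atma\lpif\atmb.} \setminus \modse{\atma.}$, which only confirms the strict inclusion in that direction. Consequently $\min\modser{\set{\atma.,\atma\lpif\atmb.}^\ctau} = \set{\modse{\atma.}} = \min\modser{\set{\atma.}^\ctau}$, so your pair is \SMR-equivalent and fails as a witness. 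A correct witness is $\set{\atma.,\atmb.}$ versus $\set{\atma.,\atmb\lpif\atma.}$: here $\modse{\atmb.}$ and $\modse{\atmb\lpif\atma.}$ genuinely are incomparable (e.g.\ $\tpl{\set{\atma},\set{\atma}}$ is in the former but not the latter, while $\tpl{\emptyset,\emptyset}$ is in the latter but not the former), so the minimal families differ.

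\emph{For $\eqSMR \strl \eqSR$.} Your pair $\set{\atma\lpif\atmb.}$ versus $\set{\atma.,\atma\lpif\atmb.}$ is not \SMR-equivalent: by the inclusion just noted, the minimal element for the second program is $\modse{\atma.}$, while for the first it is $\modse{\atma\lpif\atmb.}$. The correct witness is $\set{\atma.}$ versus $\set{\atma.,\atma\lpif\atmb.}$, which are \SMR-equivalent (same minimal element $\modse{\atma.}$) but not \SR-equivalent (the second has the extra element $\modse{\atma\lpif\atmb.}$).

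Your witness for $\eqSR \strl \eqSU$ is correct. Note that you have essentially permuted the right examples into the wrong slots: the pair you used for $\eqSM \strl \eqSE$ is the correct one for $\eqSE \strl \eqSMR$, and the pair you used for $\eqSE \strl \eqSMR$ is the correct one for $\eqSMR \strl \eqSR$.
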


\section{Robust Equivalence Models}

\label{sec:robust equivalence models}

In \cite{Slota2011} we studied the expressivity of \SE-models with respect to
a single rule. On the one hand, \SE-models turned out to be a useful means of
stripping away irrelevant syntactic details. On the other hand, a rule with a
default literal in its head is indistinguishable from an integrity constraint
\cite{Inoue1998,Janhunen2001,Cabalar2007a}. For example, the rules
\begin{align}
	\label{eq:strongly equivalent rules}
	&\lpif \atma, \atmb. \quad
	& \lpnot \atma &\lpif \atmb. \quad
	& \lpnot \atmb &\lpif \atma. \quad
\end{align}
have the same set of \SE-models. In a static setting, these rules indeed carry
essentially the same meaning: ``it must not be the case that $\atma$ and
$\atmb$ are both true''. But in a dynamic context, the latter two rules may,
in addition, express that the truth of one atom gives a \emph{reason} for the
other atom to \emph{cease being true}. For example, an update of the program
$\set{\atma., \atmb.}$ by $\set{\lpnot \atma \lpif \atmb.}$ leads to the
stable model $\set{\atmb}$ while an update by $\set{\lpnot \atmb \lpif
\atma.}$ to the stable model $\set{\atma}$. This convention is adopted by
causal rejection-based rule update semantics
\cite{Leite1997,Alferes2000,Eiter2002,Alferes2005,Osorio2007} which constitute
one of the most mature approaches to rule updates.

In order to be able to semantically characterise causal rejection-based rule
update semantics, we need to distinguish between constraints and rules with
default literals in their heads. These classes can be formally captured as
follows:

\begin{definition}
	[Constraint and Abolishing Rule]
	A rule $\rl$ is a \emph{constraint} if $\hrl = \emptyset$ and $\brl^+$ is
	disjoint with $\brl^-$.\footnote{%
		The latter condition guarantees that a constraint is not tautological.
	}
	A rule $\rl$ is \emph{abolishing} if $\hrl^+ = \emptyset$, $\hrl^- \neq
	\emptyset$ and the sets $\hrl^-$, $\brl^+$ and $\brl^-$ are pairwise
	disjoint.
\end{definition}

\noindent So what we are looking for is a semantic characterisation of rules that
\begin{textenum}[1)]
	\item can distinguish constraints from related abolishing rules;
		\label{RE requirement:1}

	\item discards irrelevant syntactic details (akin to \SE-models);
		\label{RE requirement:2}

	\item has a clear link to stable models (akin to \SE-models).
		\label{RE requirement:3}
\end{textenum}
In the following we introduce a novel monotonic semantics that exactly meets
these criteria. We show that it possesses the desired properties and use it to
introduce a notion of program equivalence that is strong enough as a basis for
syntax-independent rule update operators.

Without further ado, \emph{robust equivalence models}, or \emph{\RE-models}
for short, are defined as follows:

\begin{definition}
	[\RE-Model]
	\label{def:re model}
	Let $\rl$ be a rule. A three-valued interpretation $\twiab \in \tris$ is an
	\emph{\RE-model of $\rl$} if $\twia \ent \rl^\twib$. The set of all
	\RE-models of a rule $\rl$ is denoted by $\modre{\rl}$ and for any program
	$\prg$, $\modre{\prg} = \bigcap_{\rl \in \prg} \modre{\rl}$.

	We say that a rule $\rl$ is \emph{\RE-tautological} if $\modre{\rl} =
	\tris$. Rules $\rla$, $\rlb$ are \emph{\RE-equivalent} if $\modre{\rla} =
	\modre{\rlb}$.
\end{definition}

Thus, unlike with \SE-models, it is not required that $\twib \ent \rl$ in
order for $\twiab$ to be an \RE-model of $\rl$. As a consequence, \RE-models
can distinguish between rules in \eqref{eq:strongly equivalent rules}: while
both $\tpl{\set{\atmb}, \set{\atma, \atmb}}$ and $\tpl{\set{\atma},
\set{\atma, \atmb}}$ are \RE-models of the constraint, the former is not an
\RE-model of the first abolishing rule and the latter is not an \RE-model of
the second abolishing rule. This property holds in general, establishing
requirement \ref{RE requirement:1}):

\begin{proposition}
	\label{prop:re:abolishing rules not equivalent}
	If $\rla$, $\rlb$ are two different abolishing rules or an abolishing rule
	and a constraint, then $\rla$, $\rlb$ are not \RE-equivalent.
\end{proposition}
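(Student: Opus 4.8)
The plan is to prove the stronger statement that among rules $\rl$ with $\hrl^+ = \emptyset$ and with $\hrl^-, \brl^+, \brl^-$ pairwise disjoint — a class containing all abolishing rules and all constraints — the map $\rl \mapsto \modre{\rl}$ is injective; the proposition is then its contrapositive, restricted to the two listed cases.

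First I would make $\modre{\rl}$ explicit for such a rule. Since $\hrl^+ = \emptyset$, the reduct $\rl^\twib$ is the rule $(\lpif \brl^+.)$ when $\brl^- \cap \twib = \emptyset$ and $\hrl^- \subseteq \twib$, and it is the tautology $\ctau$ otherwise; and $\modc{(\lpif \brl^+.)} = \Set{\twi \in \twis | \brl^+ \not\subseteq \twi}$. Hence $\twiab \in \tris$ fails to be an \RE-model of $\rl$ precisely when $\brl^- \cap \twib = \emptyset$, $\hrl^- \subseteq \twib$ and $\brl^+ \subseteq \twia$. Call this set of three-valued interpretations $N(\rl) = \tris \setminus \modre{\rl}$. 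This computation is the routine part.

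The heart of the argument is recovering $\brl^+$, $\brl^-$ and $\hrl^-$ from $N(\rl)$. Using the pairwise disjointness, both pairs $\tpl{\brl^+, \atms \setminus \brl^-}$ and $\tpl{\brl^+, \hrl^- \cup \brl^+}$ lie in $N(\rl)$; combined with the defining conditions of $N(\rl)$ this gives $\brl^+ = \bigcap \Set{\twia | \exists \twib : \twiab \in N(\rl)}$, $\atms \setminus \brl^- = \bigcup \Set{\twib | \exists \twia : \twiab \in N(\rl)}$, and $\hrl^- \cup \brl^+ = \bigcap \Set{\twib | \exists \twia : \twiab \in N(\rl)}$; since $\hrl^- \cap \brl^+ = \emptyset$, removing the already-recovered $\brl^+$ from the last set yields $\hrl^-$. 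Together with the hypothesis $\hrl^+ = \emptyset$, this determines $\rl$ as a pair of sets of literals. Hence $\modre{\rla} = \modre{\rlb}$ forces $\rla = \rlb$ whenever $\rla, \rlb$ are both abolishing rules or constraints, which is exactly the claim.

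I expect the only delicate point to be the treatment of $\hrl^-$: it is not directly readable off $N(\rl)$ but only through the combination $\hrl^- \cup \brl^+$, so one must check carefully that $\tpl{\brl^+, \hrl^- \cup \brl^+}$ really belongs to $N(\rl)$ — this is where disjointness of $\hrl^-$ from $\brl^-$ is used, to get $\brl^- \cap (\hrl^- \cup \brl^+) = \emptyset$ — and only then cancel $\brl^+$. A more elementary route would split into cases according to which of $\brl^+$, $\brl^-$, $\hrl^-$ differs between $\rla$ and $\rlb$ and exhibit a separating three-valued interpretation in each case; the first two cases are settled by a suitable coordinate of the pair $\tpl{\brla^+, \atms \setminus \brla^-}$ (or its $\rlb$-analogue), whereas the $\hrl^-$ case again needs the less obvious choice $\twia = \twib = \hrlb^- \cup \brla^+$ rather than the naive $\atms \setminus \brla^-$.
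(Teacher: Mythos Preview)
Your proof is correct. The core idea---recovering $\brl^+$, $\brl^-$, $\hrl^-$ from the set of \RE-models, hence establishing injectivity---is the same as the paper's, but the paper routes it through more general machinery: it defines \RE-canonical rules (rules with $\hrl^+ \cup \hrl^-$, $\brl^+$, $\brl^-$ pairwise disjoint, or $\ctau$), constructs an explicit inverse map $\rulere{\cdot}$ from sets of three-valued interpretations back to rules, proves $\rulere{\modre{\rl}} = \rl$ for every \RE-canonical $\rl$ (via a lemma characterising membership of each atom in each of $\hrl^+, \hrl^-, \brl^+, \brl^-$ purely in terms of $\modre{\rl}$), and then simply observes that abolishing rules and constraints are \RE-canonical. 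Your min/max formulas on $N(\rl)$ are a specialised, hands-on version of that recovery, tailored to the case $\hrl^+ = \emptyset$; the paper's route buys generality (it covers all \RE-canonical rules, not just those with empty positive head) and the $\rulere{\cdot}$ map is reused elsewhere, while yours is more self-contained and avoids setting up the extra definitions.
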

\begin{proof}
	See \ref{app:program-equivalence}, page~\pageref{proof:re:abolishing rules
	not equivalent}.
\end{proof}

As for requirement \ref{RE requirement:2}), we first note that \RE-equivalence
is a refinement of \SE-equivalence -- there are no rules that are
\RE-equivalent but not \SE-equivalent. The following result also shows that it
is \emph{only} the ability to distinguish between constraints and abolishing
rules that is introduced by \RE-models -- rules that are not \RE-equivalent to
abolishing rules are distinguishable by \RE-models if and only if they are
distinguishable by \SE-models. Furthermore, the classes of \SE-tautological
and \RE-tautological rules coincide, so we can simply use the word
\emph{tautological} without ambiguity.

\begin{proposition}
	[\RE-Equivalence vs.\ \SE-Equivalence]
	\label{prop:re:se-equivalence}
	~
	\begin{textitem}
		\item If two rules are \RE-equivalent, then they are \SE-equivalent.

		\item If two rules, neither of which is \RE-equivalent to an abolishing
			rule, are \SE-equivalent, then they are \RE-equivalent.

		\item A rule is \RE-tautological if and only if it is \SE-tautological.
	\end{textitem}
\end{proposition}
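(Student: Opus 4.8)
The plan is to pin down precisely how $\modse{\rl}$ and $\modre{\rl}$ are related, and then read the three claims off that relationship. I would start from two elementary facts. First, $\modse{\rl}\subseteq\modre{\rl}$ for every rule $\rl$, since an \SE-model of $\rl$ satisfies $\twia\ent\rl^\twib$ by definition. Second, a \emph{diagonal lemma}: for every $\twib\in\twis$ we have $\tpl{\twib,\twib}\in\modre{\rl}$ iff $\twib\ent\rl^\twib$ iff $\twib\ent\rl$, where the last equivalence is a short case split on whether the reduct $\rl^\twib$ is the canonical tautology. Combining the two yields the identity $\modse{\rl}=\Set{\tpl{\twia,\twib}\in\modre{\rl} | \tpl{\twib,\twib}\in\modre{\rl}}$, so $\modse{\rl}$ is a function of $\modre{\rl}$; the first claim follows at once. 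For the third, $\modre{\rl}=\tris$ forces every diagonal pair into $\modre{\rl}$, hence $\modc{\rl}=\twis$ by the diagonal lemma, hence $\modse{\rl}=\tris$ by the displayed identity; the converse is immediate from $\modse{\rl}\subseteq\modre{\rl}$.

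For the second (and hard) claim I would write $M_\rl=\twis\setminus\modc{\rl}$ for the classical countermodels of $\rl$; by the diagonal lemma $M_\rl$ is determined by $\modse{\rl}$. A direct computation shows that for $\twib\in M_\rl$ the reduct is $\rl^\twib=(\hrl^+\lpif\brl^+.)$ with $\brl^+\subseteq\twib$ and $\hrl^+\cap\twib=\emptyset$, so $\twia\ent\rl^\twib$ iff $\brl^+\not\subseteq\twia$; hence $\modre{\rl}=\modse{\rl}\sqcup E_\rl$ with $E_\rl=\Set{\tpl{\twia,\twib} | \twib\in M_\rl,\ \twia\subseteq\twib,\ \brl^+\not\subseteq\twia}$, which depends only on $M_\rl$ and on $\brl^+$. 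Since the $\twib$-slice of $E_\rl$, for $\twib\in M_\rl$, consists of all subsets of $\twib$ except those in the interval $\Set{\twia | \brl^+\subseteq\twia\subseteq\twib}$ (whose least element is $\brl^+$), two rules with $\modse{\rla}=\modse{\rlb}$ satisfy $\modre{\rla}=\modre{\rlb}$ iff $M_{\rla}=M_{\rlb}=\emptyset$ or $\brla^+=\brlb^+$. It therefore remains to show: if $\modse{\rla}=\modse{\rlb}$, $M:=M_{\rla}=M_{\rlb}\neq\emptyset$ and $\brla^+\neq\brlb^+$, then $\rla$ or $\rlb$ is \RE-equivalent to an abolishing rule.

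To do this I would first record the shape of a rule $\rl$ with $M_\rl\neq\emptyset$: the four sets $\hrl^+,\hrl^-,\brl^+,\brl^-$ are then pairwise ``compatible'' (any two of them whose nondisjointness would make $\rl$ tautological are disjoint), and $M_\rl=\Set{\twib | \brl^+\cup\hrl^-\subseteq\twib\subseteq\atms\setminus(\brl^-\cup\hrl^+)}$, so $\bigcap M_\rl=\brl^+\cup\hrl^-$. Next I would characterise \RE-equivalence to an abolishing rule: the complement $\tris\setminus\modre{\rl}$ equals the ``generalised rectangle'' $\Set{\tpl{\twia,\twib} | \hrl^-\subseteq\twib\subseteq\atms\setminus\brl^-,\ \brl^+\subseteq\twia\subseteq\atms\setminus(\brl^-\cup\hrl^+),\ \twia\subseteq\twib}$, and comparing it with the analogous rectangle of a generic abolishing rule $\br{\lcmp{S}\lpif B,\lcmp{C}.}$ shows the only possible parameter match is $S=\hrl^-\setminus\brl^+$, $B=\brl^+$, $C=\brl^-$; this is a genuine abolishing rule (in particular $S\neq\emptyset$) precisely when $M_\rl\neq\emptyset$, $\hrl^+\subseteq\brl^-$ and $\brl^+\subsetneq\bigcap M_\rl$. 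Finally I would note that whether $\hrl^+\subseteq\brl^-$ is itself determined by $\modse{\rl}$: the ``residual'' set $(\tris\setminus\modse{\rl})\setminus\Set{\tpl{\twia,\twib} | \twib\in M_\rl}$ is empty iff $\hrl^+\subseteq\brl^-$, and when it is nonempty its least first component is exactly $\brl^+$, so then $\modse{\rl}$ already determines $\brl^+$ and hence $\modre{\rl}$. Assembling these: since $\rla$ and $\rlb$ have equal \SE-models, they agree on whether $\hrl^+\subseteq\brl^-$; if it fails for both, $\modse{\cdot}$ determines $\modre{\cdot}$, so $\modre{\rla}=\modre{\rlb}$; if it holds for both, then ``neither $\rla$ nor $\rlb$ is \RE-equivalent to an abolishing rule'' forces $\brla^+=\bigcap M=\brlb^+$, so again $\modre{\rla}=\modre{\rlb}$ --- both conclusions contradicting $\brla^+\neq\brlb^+$, which settles the second claim.

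The hard part is the abolishing-rule analysis in the second claim: one must (i) nail the exact criterion for being \RE-equivalent to an abolishing rule, and (ii) see that the only feature of $\rl$ not reconstructible from $\modse{\rl}$ is $\brl^+$, and that this ambiguity is ``active'' precisely in the configuration $\hrl^+\subseteq\brl^-$ in which it can be absorbed by passing to an abolishing rule. The other ingredients --- the reduct computation for $\twib\in M_\rl$, the interval form of $M_\rl$, and the manipulation of the generalised rectangles --- are routine once isolated, but the numerous pairwise-disjointness side conditions have to be tracked carefully, and the degenerate cases ($\rl$ already tautological, or $\brl^+=\emptyset$) should be cross-checked against the $M_\rl=\emptyset$ branch.
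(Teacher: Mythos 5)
Your argument is correct, but it follows a genuinely different route from the paper's. The paper proves all three bullets by passing through explicit syntactic normal forms: it introduces an \SE-canonical and an \RE-canonical representative of every rule ($\canse{\rl}$ and $\canre{\rl}$), shows that every rule is equivalent to its canonical form under the respective equivalence and that distinct canonical forms are never equivalent, and then reads the claims off the two facts that $\canse{\rl} = \canse{\canre{\rl}}$ always holds and that $\canre{\rl} = \canse{\rl}$ whenever $\rl$ is not \RE-equivalent to an abolishing rule. You instead stay entirely on the model side: the identity $\modse{\rl} = \Set{\tpl{\twia,\twib} \in \modre{\rl} | \tpl{\twib,\twib} \in \modre{\rl}}$ makes the first and third bullets one-liners, and the decomposition $\modre{\rl} = \modse{\rl} \sqcup E_{\rl}$ isolates $\brl^+$ as the single syntactic parameter visible to \RE-models but invisible to \SE-models, reducing the second bullet to showing that $\brl^+$ is recoverable from $\modse{\rl}$ except precisely in the abolishing configuration. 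I checked the computations you flag as routine --- the reduct on classical countermodels, the recovery of the abolishing parameters from the corner data of the non-\RE-model rectangle, the equivalence of the residual's emptiness with $\hrl^+ \subseteq \brl^-$, and the fact that the degenerate case $\hrl^+ \cap \hrl^- \neq \emptyset$ (classically tautological but not \RE-tautological) is absorbed by the $M_{\rl} = \emptyset$ branch --- and they go through. What the paper's route buys is economy: the canonical-form machinery is built anyway for Proposition~\ref{prop:re:abolishing rules not equivalent} and Corollary~\ref{cor:re:canonical not equivalent}, so this proposition becomes a short chase of normal forms. What your route buys is an explanation of \emph{why} abolishing rules are exactly the obstruction: the positive body is the one datum that \SE-models forget, and it can be silently traded against the negative head exactly when $\hrl^+ \subseteq \brl^-$. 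The price is the rectangle bookkeeping, which you correctly identify as the part demanding the most care.
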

\begin{proof}
	See \ref{app:program-equivalence}, page~\pageref{proof:re:se-equivalence}.
\end{proof}

The affinity between \SE-models and stable models is fully retained by
\RE-models, which establishes requirement \ref{RE requirement:3}).

\begin{proposition}
	[\RE-Models vs.\ Stable Models]
	\label{prop:re:sm}
	An interpretation $\twib$ is a stable model of a program $\prg$ if and only
	if $\tpl{\twib, \twib} \in \modre{\prg}$ and for all $\twia \subsetneq
	\twib$, $\twiab \notin \modre{\prg}$.
\end{proposition}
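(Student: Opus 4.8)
The plan is to reduce the claim to an elementary fact about the \emph{reduct} $\prg^\twib$, exploiting that $\rl^\twib$ — and hence $\prg^\twib$ — depends only on the second component $\twib$ of a three-valued interpretation. The single observation that carries the whole argument is the following reformulation of Definition~\ref{def:re model}: for every interpretation $\twib$,
\[
	\Set{\twia | \twiab \in \modre{\prg}}
	=
	\modc{\prg^\twib} \cap \pws{\twib}
	\enspace,
\]
i.e.\ the first components of the \RE-models of $\prg$ whose second component is $\twib$ are exactly the classical models of $\prg^\twib$ that are contained in $\twib$. First I would note that $\twiab \in \modre{\prg}$ unfolds to ``$\twia \subseteq \twib$ and $\twia \ent \rl^\twib$ for every $\rl \in \prg$'', and ``$\twia \ent \rl^\twib$ for every $\rl \in \prg$'' is, by definition of $\modc{\cdot}$ applied to the program $\prg^\twib$, the same as ``$\twia \in \modc{\prg^\twib}$''. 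In particular, instantiating with $\twia = \twib$ (and using that $\twib \ent \rl^\twib$ iff $\twib \ent \rl$, immediate from the definition of the reduct), we get $\tpl{\twib, \twib} \in \modre{\prg} \iff \twib \in \modc{\prg^\twib} \iff \twib \in \modc{\prg}$.

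From here both directions are short. For the forward direction, if $\twib$ is a stable model of $\prg$, then $\twib$ is a subset-minimal model of $\prg^\twib$; so $\twib \in \modc{\prg^\twib}$ gives $\tpl{\twib, \twib} \in \modre{\prg}$, and any $\twia \subsetneq \twib$ with $\twiab \in \modre{\prg}$ would, by the observation, be a model of $\prg^\twib$ strictly below $\twib$, contradicting subset-minimality. For the converse, the hypothesis $\tpl{\twib, \twib} \in \modre{\prg}$ makes $\twib$ a model of $\prg^\twib$, and if some model $\twia$ of $\prg^\twib$ satisfied $\twia \subsetneq \twib$, then (since $\twia \subseteq \twib$) $\twiab$ would be a legitimate three-valued interpretation lying in $\modre{\prg}$, contradicting the second hypothesis; hence $\twib$ is a subset-minimal model of $\prg^\twib$, i.e.\ a stable model of $\prg$.

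I do not anticipate a genuine obstacle: the argument is the \RE-analogue of the \SE-model characterisation of stable models already recorded in Section~\ref{sec:background}, and the only structural difference — that \RE-models drop the side condition $\twib \ent \rl$ imposed on \SE-models — is harmless here, because $\tpl{\twib, \twib} \in \modre{\prg}$ already forces $\twib \in \modc{\prg}$. The one point that warrants a line of care is that arbitrary models of $\prg^\twib$ need not be subsets of $\twib$, and so need not correspond to three-valued interpretations at all; but subset-minimality of $\twib$ only ever quantifies over \emph{proper} subsets of $\twib$, each of which is trivially contained in $\twib$, so the mismatch never bites.
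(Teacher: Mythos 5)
Your proposal is correct and follows essentially the same route as the paper's own proof: both arguments unfold the definition of an \RE-model into the statement that, for $\twia \subseteq \twib$, $\twiab \in \modre{\prg}$ iff $\twia \ent \prg^\twib$, and then observe that the condition in the proposition is precisely subset-minimality of $\twib$ among the models of $\prg^\twib$. Your explicit remark that proper subsets of $\twib$ are the only models of $\prg^\twib$ that matter (and always yield legitimate three-valued interpretations) is a point the paper's proof handles implicitly, but it does not change the argument.
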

\begin{proof}
	See \ref{app:program-equivalence}, page~\pageref{proof:re:sm}.
\end{proof}

Also worth noting is that any set of three-valued interpretations can be
expressed by a program using \RE-models. This is not the case with
\SE-models since only well-defined sets of three-valued interpretations have
corresponding programs.

\begin{proposition}
	\label{prop:re:expressibility}
	For any $\stri \subseteq \tris$ there exists a program $\prg$ such that
	$\modre{\prg} = \stri$.
\end{proposition}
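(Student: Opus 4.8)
The plan is to realise $\stri$ as an intersection of the \RE-model sets of a handful of carefully chosen single rules. For each three-valued interpretation $\tri \in \tris$ I would build a rule $\rl_\tri$ whose \RE-models are exactly all of $\tris$ except $\tri$, and then set
\[
	\prg = \Set{ \rl_\tri | \tri \in \tris \setminus \stri } \enspace,
\]
so that the rules left \emph{out} of $\prg$ are precisely those whose unique ``forbidden'' interpretation lies in $\stri$.

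The core of the argument is the construction of $\rl_\tri$. First I would record, straight from the definitions of the reduct and of \RE-models, the following criterion for an arbitrary rule $\rl = \tpl{\hrl, \brl}$: a three-valued interpretation $\twiab$ is \emph{not} an \RE-model of $\rl$ if and only if $\brl^- \cap \twib = \emptyset$, $\hrl^- \subseteq \twib$, $\brl^+ \subseteq \twia$, and $\hrl^+ \cap \twia = \emptyset$. Indeed, if $\twib$ either falsifies a default literal of the body or satisfies a default literal of the head, then $\rl^\twib$ is the canonical tautology $\ctau$, so $\twiab \in \modre{\rl}$; otherwise $\rl^\twib = (\hrl^+ \lpif \brl^+.)$ and $\twia$ is a model of it unless $\brl^+ \subseteq \twia$ and $\hrl^+ \cap \twia = \emptyset$. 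Now, given $\tri = \tpl{A, B} \in \tris$ (so $A \subseteq B \subseteq \atms$), I would let $\rl_\tri$ be the rule with head $(\atms \setminus A) \cup \lcmp{B}$ and body $A \cup \lcmp{\atms \setminus B}$; equivalently, its positive head is $\atms \setminus A$, its negative head is $B$, its positive body is $A$, and its negative body is $\atms \setminus B$. This is a legitimate rule even though its head may contain a literal together with its complement (whenever $B \setminus A \neq \emptyset$), since a rule is by definition merely a pair of sets of literals and, in any case, only the positive part of the head survives when the reduct is formed. Substituting these four sets into the criterion, ``$\twiab \notin \modre{\rl_\tri}$'' becomes $(\atms \setminus B) \cap \twib = \emptyset$, $B \subseteq \twib$, $A \subseteq \twia$, and $(\atms \setminus A) \cap \twia = \emptyset$, i.e.\ $\twib \subseteq B$, $B \subseteq \twib$, $A \subseteq \twia$, and $\twia \subseteq A$, which is exactly $\twiab = \tpl{A, B} = \tri$. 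Hence $\modre{\rl_\tri} = \tris \setminus \set{\tri}$.

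It then remains to assemble the pieces. Since $\atms$ is finite, $\tris$ is finite, so $\prg$ is a finite program, and using $\modre{\prg} = \bigcap_{\rl \in \prg} \modre{\rl}$ together with the previous step,
\[
	\modre{\prg}
	= \bigcap_{\tri \in \tris \setminus \stri} \modre{\rl_\tri}
	= \bigcap_{\tri \in \tris \setminus \stri} \br{\tris \setminus \set{\tri}}
	= \tris \setminus \br{\tris \setminus \stri}
	= \stri \enspace,
\]
where, as usual, the empty intersection (arising when $\stri = \tris$, in which case $\prg = \emptyset$) is understood to be $\tris$. I do not anticipate a genuine obstacle: the only points needing care are the clean two-case derivation of the ``not an \RE-model'' criterion from the reduct, the observation that $\rl_\tri$ is a well-formed rule despite possibly carrying complementary literals in its head, and a quick check of the degenerate cases $\stri = \tris$ and $\stri = \emptyset$.
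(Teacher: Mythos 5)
Your proof is correct and matches the paper's almost verbatim: the paper builds exactly the same rule $\rl_{\twiab} = \Br{(\atms \setminus \twia); \lcmp{\twib} \lpif \twia, \lcmp{(\atms \setminus \twib)}.}$ for each excluded interpretation and intersects, invoking its Lemma~\ref{lemma:re:model conditions} for the ``not an \RE-model'' criterion that you derive inline. The only difference is that you spell out that criterion from the reduct definition rather than citing it as a separate lemma.
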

\begin{proof}
	See \ref{app:program-equivalence}, page~\pageref{proof:re:expressibility}.
\end{proof}

Further properties of \RE-models, analogous to those established in
\cite{Slota2011} for \SE-models, can be found in \ref{app:program-equivalence}
starting on page~\pageref{app:program-equivalence}.

Since \RE-models are able to distinguish constraints from abolishing rules
while keeping the essential properties of \SE-models, we henceforth adopt them
as the basis for defining syntax-independent rule update operators. We denote
the set of sets of \RE-models of rules inside a program $\prg$ by
$\modrer{\prg} = \Set{ \modre{\rl} | \rl \in \prg }$. We also introduce three
additional notions of program equivalence: \RE-, \RR- and \RMR-equivalence
that are analogous to \SE-, \SR- and \SMR-equivalence. 

\begin{definition}
	[Program Equivalence Using \RE-Models]
	Let $\prga, \prgb$ be programs, $\prga^\ctau = \prga \cup \set{\ctau}$,
	$\prgb^\ctau = \prgb \cup \set{\ctau}$. We write
	\begin{align*}
		\prga &\eqRE \prgb
			&& \text{whenever}
			&& \modre{\prga} = \modre{\prgb}
			\enspace;
		&
		\prga &\eqRMR \prgb
			&& \text{whenever}
			&& \min \modrer{\prga^\ctau} = \min \modrer{\prgb^\ctau}
			\enspace;
		\\
		&&&&&&
		\prga &\eqRR \prgb
			&& \text{whenever}
			&& \modrer{\prga^\ctau} = \modrer{\prgb^\ctau}
			\enspace.
	\end{align*}
\end{definition}

In order to consider belief update principles in the context of rule updates,
we also need to establish notions of \emph{program entailment} which are in
line with the above defined program equivalence relations. This task is
troublesome in case of \SM-equivalence because the usage of entailment in
belief update postulates is clearly a monotonic one while stable models are
non-monotonic. For instance, a reformulation of \bu{1} would require that
$\prga \uopr \prgu \ent \prgu$, though there is no reason for $\prga \uopr
\prgu$ to have \emph{less} stable models than (or the same as) $\prgu$. Due to
these issues, we refrain from defining \SM-entailment. The remaining
entailment relations are defined as follows:

\begin{definition}
	[Program Entailment]
	Let $\prga, \prgb$ be programs, $\prga^\ctau = \prga \cup \set{\ctau}$ and
	$\prgb^\ctau = \prgb \cup \set{\ctau}$. We write
	\begin{align*}
		\prga &\entSE \prgb
			&& \text{whenever}
			&& \modse{\prga} \subseteq \modse{\prgb}
			\enspace;
		&
		\prga &\entSMR \prgb
			&& \text{whenever}
			&& \forall \rlb \in \prgb^\ctau \, \exists \rla \in \prga^\ctau :
				\modse{\rla} \subseteq \modse{\rlb}
			\enspace;
		\\
		\prga &\entRE \prgb
			&& \text{whenever}
			&& \modre{\prga} \subseteq \modre{\prgb}
			\enspace;
		&
		\prga &\entRMR \prgb
			&& \text{whenever}
			&& \forall \rlb \in \prgb^\ctau \, \exists \rla \in \prga^\ctau :
			\modre{\rla} \subseteq \modre{\rlb}
			\enspace;
		\\
		\prga &\entSU \prgb
			&& \text{whenever}
			&& \modse{\prgb \setminus \prga} = \tris
			\enspace;
		&
		\prga &\entSR \prgb
			&& \text{whenever}
			&& \forall \rlb \in \prgb^\ctau \, \exists \rla \in \prga^\ctau :
			\modse{\rla} = \modse{\rlb}
			\enspace;
		\\
		&&&&&&
		\prga &\entRR \prgb
			&& \text{whenever}
			&& \forall \rlb \in \prgb^\ctau \, \exists \rla \in \prga^\ctau :
			\modre{\rla} = \modre{\rlb}
			\enspace.
	\end{align*}
	We say that \emph{$\prga$ \X-entails $\prgb$} if $\prga \entX \prgb$.
\end{definition}

As the following proposition shows, the defined entailment relations are fully
in line with the corresponding equivalence relations.

\begin{proposition}
	\label{prop:ent vs eq}
	Let \X{} be one of \SE{}, \RE{}, \SMR{}, \RMR{}, \SR{}, \RR{}, \SU{} and
	$\prga$, $\prgb$ be programs. Then, $\prga \eqX \prgb$ if and only if $\prga
	\entX \prgb$ and $\prgb \entX \prga$.
\end{proposition}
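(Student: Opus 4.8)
The plan is to prove the biconditional separately for each of the seven choices of \X{}, observing that for \SE{}, \RE{}, \SR{}, \RR{} and \SU{} it follows at once from elementary facts about sets, while the two ``minimal'' cases \SMR{} and \RMR{} call for one short auxiliary observation about minimal elements of finite families of sets.

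For \SE{} and \RE{}, equivalence asserts the equality of a single set ($\modse{\prga} = \modse{\prgb}$, resp.\ $\modre{\prga} = \modre{\prgb}$) and the two entailments assert the two corresponding inclusions, so the claim is merely ``$A = B$ iff $A \subseteq B$ and $B \subseteq A$''. For \SR{} I would first note that the defining condition of $\prga \entSR \prgb$, namely $\forall \rlb \in \prgb^\ctau\, \exists \rla \in \prga^\ctau : \modse{\rla} = \modse{\rlb}$, is equivalent to $\modser{\prgb^\ctau} \subseteq \modser{\prga^\ctau}$; since \SR{}-equivalence is $\modser{\prga^\ctau} = \modser{\prgb^\ctau}$, the claim once more reduces to mutual inclusion, and the \RR{} case is identical with $\modre{\cdot}$ in place of $\modse{\cdot}$. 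For \SU{}, using $\prga \div \prgb = (\prga \setminus \prgb) \cup (\prgb \setminus \prga)$ together with $\modse{\prg_1 \cup \prg_2} = \modse{\prg_1} \cap \modse{\prg_2}$, one obtains $\modse{\prga \div \prgb} = \modse{\prga \setminus \prgb} \cap \modse{\prgb \setminus \prga}$; an intersection of two subsets of $\tris$ equals $\tris$ exactly when each factor does, so $\prga \eqSU \prgb$ holds iff $\modse{\prgb \setminus \prga} = \tris$ and $\modse{\prga \setminus \prgb} = \tris$, i.e.\ iff $\prga \entSU \prgb$ and $\prgb \entSU \prga$.

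For \SMR{} and \RMR{} I would prove the following lemma: for any two \emph{finite} families $\sstri$, $\sstrib$ of sets, $\min \sstri = \min \sstrib$ if and only if every member of $\sstrib$ contains some member of $\sstri$ and every member of $\sstri$ contains some member of $\sstrib$. The forward direction uses finiteness: given $T \in \sstrib$, pick a $\subseteq$-minimal element $T'$ of $\sstrib$ with $T' \subseteq T$; then $T' \in \min \sstrib = \min \sstri \subseteq \sstri$, which witnesses the condition, and symmetrically for the other. For the converse, take $S \in \min \sstri$; choose $T \in \sstrib$ with $T \subseteq S$ and then $S' \in \sstri$ with $S' \subseteq T$, so $S' \subseteq T \subseteq S$ and minimality of $S$ in $\sstri$ forces $S' = T = S$, hence $S \in \sstrib$; moreover $S$ is minimal in $\sstrib$, since any $T'' \in \sstrib$ with $T'' \subsetneq S$ would lie above some $S'' \in \sstri$ with $S'' \subseteq T'' \subsetneq S$, contradicting minimality of $S$ in $\sstri$; this gives $\min \sstri \subseteq \min \sstrib$, and the reverse inclusion is symmetric. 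Instantiating the lemma with $\sstri = \modser{\prga^\ctau}$ and $\sstrib = \modser{\prgb^\ctau}$ --- both finite because programs are finite --- the two containment conditions are precisely $\prga \entSMR \prgb$ and $\prgb \entSMR \prga$, while the conclusion is $\prga \eqSMR \prgb$; the \RMR{} case is obtained verbatim with $\modre{\cdot}$ replacing $\modse{\cdot}$.

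I expect no real difficulty in any of these cases. The one step deserving genuine attention is the converse direction of the minimal-elements lemma, where one must check not only that mutual cofinality ``from below'' places each minimal element of one family inside the other, but also that it remains minimal there; and the appeal to finiteness of programs --- needed precisely so that minimal elements below a given set are guaranteed to exist --- should be made explicit rather than left implicit.
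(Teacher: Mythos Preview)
Your proposal is correct and follows essentially the same route as the paper: the \SE{}, \RE{}, \SR{}, \RR{} and \SU{} cases reduce immediately to mutual inclusion of the relevant sets, and for \SMR{}/\RMR{} both you and the paper use the same chain-of-inclusions argument (pick a minimal element below a given one, bounce back and forth, and use minimality to force equality). Your packaging into a standalone lemma about finite families is a clean presentational choice but not a different idea.

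One small correction to your finiteness justification: the paper does \emph{not} assume programs are finite (a program is defined as any set of rules). The reason $\modser{\prga^\ctau}$ and $\modrer{\prga^\ctau}$ are finite is that $\atms$ is finite, hence $\tris$ is finite, hence $\pws{\tris}$ is finite, and these families are subsets of $\pws{\tris}$. The paper's own proof relies on the same fact implicitly when it writes ``take some subset-minimal $\stri \in \modyr{\prgb^\ctau}$ such that $\stri \subseteq \mody{\rlb}$'', so your instinct to make the existence of minimal elements explicit is good---just cite the right reason.
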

\begin{proof}
	See \ref{app:program-equivalence}, page~\pageref{proof:prop:ent vs
	eq}.
\end{proof}

Note that it follows directly from the previous considerations that
\RR-equivalence is stronger than \SR-equivalence, \RMR-equivalence is stronger
than \SMR-equivalence and \RE-equivalence is stronger than \SE-equivalence.
Figure~\ref{fig:equivalences} illustrates the strength comparison of all eight
notions of program equivalence which are formally stated in
Proposition~\ref{prop:se:equivalence comparison} and in the following result:

\begin{figure}[t]
	\centering
	\begin{tikzpicture} 
		\tikzstyle{every node}=[circle, fill=black, minimum size=4pt, inner sep=0pt]
		\draw (0,0) node (sm) [label=below:\SM{}] {}
			-- +(1,0) node (se) [label=below:\SE{}] {};
		\draw (se) -- +(.707,.707) node (smr) [label=left:\SMR{}] {};
		\draw (smr) -- +(.707,.707) node (sr) [label=left:\SR{}] {};
		\draw (sr) -- +(.707,-.707) node (mr) [label=above:\RR{}] {};
		\draw (mr) -- +(1,0) node (su) [label=above:\SU{}] {};
		\draw (se) -- +(.707,-.707) node (me) [label=right:\RE{}] {};
		\draw (me) -- +(.707,.707) node (mmr) [label=right:\RMR{}] {};
		\draw (mmr) -- (mr);
		\draw (smr) -- (mmr);
	\end{tikzpicture}
	\caption[Notions of program equivalence and their strength]{%
		Notions of program equivalence and entailment from the weakest on the left
		side to the strongest on the right side. A missing link between
		\X[\scriptsize X]{} and \Y[\scriptsize Y]{} indicates that $\eqX$ is
		incomparable with $\eqY$ and $\entX$ is incomparable with $\entY$, i.e.\
		\emph{none} of the following four relations holds: $\eqX \strle \eqY$,
		$\eqY \strle \eqX$, $\entX \strle \entY$, $\entY \strle \entX$.
	}
	\label{fig:equivalences}
\end{figure}
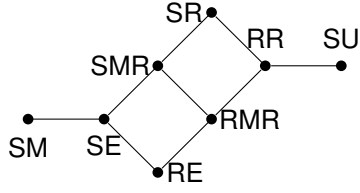

\begin{proposition}
	\label{prop:eq and ent comparison}
	The following holds:
	\begin{enumerate}[(1)]
		\setlength{\itemsep}{0pt}
		\newcommand{\bxl}[1]{\makebox[15em][l]{#1}}
		\newcommand{\bxc}[1]{\makebox[6em][c]{#1}}

		\item
			\bxl{$\eqSM \strl \eqSE \strl \eqRE \strl \eqRMR \strl \eqRR \strl \eqSU$}
			\bxc{and}
			$\entSE \strl \entRE \strl \entRMR \strl \entRR \strl \entSU$
			\enspace;

		\item
			\bxl{$\eqSE \strl \eqSMR \strl \eqSR \strl \eqRR$}
			\bxc{and}
			$\entSE \strl \entSMR \strl \entSR \strl \entRR$
			\enspace;

		\item
			\bxl{$\eqSMR \strl \eqRMR$}
			\bxc{and}
			$\entSMR \strl \entRMR$
			\enspace;

		\item
			\bxl{$\eqRE \nstrle \eqSMR$ and $\eqSMR \nstrle \eqRE$}
			\bxc{and}
			$\entRE \nstrle \entSMR$ and $\entSMR \nstrle \entRE$
			\enspace;

		\item
			\bxl{$\eqRE \nstrle \eqSR$ and $\eqSR \nstrle \eqRE$}
			\bxc{and}
			$\entRE \nstrle \entSR$ and $\entSR \nstrle \entRE$
			\enspace;

		\item
			\bxl{$\eqRMR \nstrle \eqSR$ and $\eqSR \nstrle \eqRMR$}
			\bxc{and}
			$\entRMR \nstrle \entSR$ and $\entSR \nstrle \entRMR$
			\enspace.
	\end{enumerate}
\end{proposition}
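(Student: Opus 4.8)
The plan is to read Figure~\ref{fig:equivalences} as the object of proof: verify each ``refines'' edge by a short direct argument (inheriting the part among \SM{}, \SE{}, \SMR{}, \SR{}, \SU{} from Proposition~\ref{prop:se:equivalence comparison}), and then knock out the strictnesses and the six incomparabilities with a small stock of concrete programs. I will argue the entailment relations directly, since ``$\entX$ refines $\entY$'' gives ``$\eqX$ refines $\eqY$'' at once via Proposition~\ref{prop:ent vs eq}, and conversely every program pair that separates two equivalence notions, read in the appropriate direction, separates the corresponding entailment notions.

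The refinement edges rest on two facts. First, for a rule $\rl$ one has $\modse{\rl} = \Set{\twiab \in \modre{\rl} | \tpl{\twib, \twib} \in \modre{\rl}}$ (because $\twib \ent \rl$ iff $\twib \ent \rl^\twib$), so the map $\modre{\rl} \mapsto \modse{\rl}$ is monotone; this immediately yields ``$\entRE$ refines $\entSE$'', ``$\entRMR$ refines $\entSMR$'' (part~(3)) and ``$\entRR$ refines $\entSR$'' (part~(2)). Second, $\modre{\prga}$ equals the intersection $\bigcap_{\rl \in \prga^\ctau} \modre{\rl}$ (as $\ctau$ is tautological), so if for every rule of $\prgb^\ctau$ some rule of $\prga^\ctau$ has a smaller \RE-model set, then $\modre{\prga} \subseteq \modre{\prgb}$; this gives ``$\entRMR$ refines $\entRE$'' (part~(1)) and, with \SE-models in place of \RE-models, ``$\entSMR$ refines $\entSE$''. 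For ``$\entSU$ refines $\entRR$'' (part~(1)) one uses that \SE- and \RE-tautological rules coincide (Proposition~\ref{prop:re:se-equivalence}): if every rule of $\prgb \setminus \prga$ is tautological, each rule of $\prgb^\ctau$ has the same \RE-model set as some rule of $\prga^\ctau$ (itself, or $\ctau$). The trivial edge ``$\entRR$ refines $\entRMR$'' and the chain from Proposition~\ref{prop:se:equivalence comparison} complete the diagram.

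For the strict parts and incomparabilities, three program pairs do almost all the work. (i)~The constraint $\set{\lpif \atma, \atmb.}$ versus the abolishing rule $\set{\lpnot \atma \lpif \atmb.}$: by~\eqref{eq:strongly equivalent rules} the two rules have equal \SE-models, so the programs are \SR-equivalent (hence \SMR- and \SE-equivalent), while by Proposition~\ref{prop:re:abolishing rules not equivalent} the rules are not \RE-equivalent, so the programs are not \RE-, \RMR- or \RR-equivalent; this settles $\eqRE \nstrle \eqSE$ (the strictness in~(1)), $\eqRR \nstrle \eqSR$~(2), $\eqRMR \nstrle \eqSMR$~(3), $\eqRE \nstrle \eqSMR$~(4), $\eqRE \nstrle \eqSR$~(5) and $\eqRMR \nstrle \eqSR$~(6). (ii)~$\set{\atma., \atmb.}$ versus $\set{\atma \lpif \atmb., \atmb.}$: a direct computation shows $\modre{\prga}$ and $\modre{\prgb}$ both equal $\Set{\twiab | \set{\atma, \atmb} \subseteq \twia}$, so the programs are \RE-equivalent; but the subset-minimal members of $\modrer{\prga^\ctau}$ are $\modre{\atma.}$ and $\modre{\atmb.}$, whereas those of $\modrer{\prgb^\ctau}$ are the strictly larger $\modre{\atma \lpif \atmb.}$ together with $\modre{\atmb.}$, so the programs are not \RMR-equivalent, a fortiori not \SMR-, \SR- or \SU-equivalent; this settles $\eqRMR \nstrle \eqRE$~(1), $\eqSMR \nstrle \eqRE$~(4) and $\eqSR \nstrle \eqRE$~(5). (iii)~$\set{\atma., \atma \lpif \atmb.}$ versus $\set{\atma.}$: here $\modre{\atma.} \subseteq \modre{\atma \lpif \atmb.} \subsetneq \tris$, so the unique subset-minimal member of $\modrer{\prga^\ctau}$ is $\modre{\atma.}$, matching $\modrer{\set{\atma.}^\ctau}$, and the programs are \RMR-equivalent (and \RE-equivalent); yet $\modser{\prga^\ctau}$ contains $\modse{\atma \lpif \atmb.}$, which lies strictly between $\modse{\atma.}$ and $\tris$, so the programs are not \SR-equivalent (nor \RR-equivalent); this settles $\eqRR \nstrle \eqRMR$~(1) and $\eqSR \nstrle \eqRMR$~(6). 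Finally, $\set{\lpnot \atm \lpif \atm.}$ versus $\set{\lpif \atm.}$ are \RE-equivalent (hence \RR-equivalent) but have a non-tautological symmetric difference, so they are not \SU-equivalent: this gives $\eqSU \nstrle \eqRR$ and the last strictness of~(1). The strictnesses $\eqSM \strl \eqSE$, $\eqSE \strl \eqSMR$ and $\eqSMR \strl \eqSR$ are already contained in Proposition~\ref{prop:se:equivalence comparison}.

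The delicate point is not any individual refinement -- each is a couple of lines of set manipulation -- but locating pair~(ii), i.e.\ a program pair that is \RE-equivalent yet \emph{not} \RMR-equivalent. It works precisely because replacing the fact $\atma.$ by the rule $\atma \lpif \atmb.$ is invisible to the intersection of \RE-model sets (the fact $\atmb.$ remains to discharge the new body) while it genuinely changes which rule-level \RE-model sets are subset-minimal. With the three pairs in hand, what remains is the routine, if slightly tedious, verification of the \RE- and \SE-models of these small programs, and the observation that each separating pair also breaks the matching entailment comparison.
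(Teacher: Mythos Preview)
Your approach is essentially the paper's: verify each refinement directly (using the monotonicity of $\modre{\rl}\mapsto\modse{\rl}$, which is the paper's Lemma~\ref{lemma:re-entailment implies se-entailment}, together with the obvious intersection argument), and exhibit concrete separating pairs for the strictnesses and incomparabilities. Your counterexamples all work, and they are either the paper's examples or symmetric variants (your pair~(ii) is the paper's $\set{\atma.,\atmb.}$ vs.\ $\set{\atma.,\atmb\lpif\atma.}$ with the roles of $\atma,\atmb$ swapped; your pair~(i) uses $\set{\lpif\atma,\atmb.}$ vs.\ $\set{\lpnot\atma\lpif\atmb.}$ where the paper uses the simpler $\set{\lpif\atma.}$ vs.\ $\set{\lpnot\atma.}$).

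There is, however, one logical slip. In your discussion of pair~(ii) you write that the programs are ``not \RMR-equivalent, a fortiori not \SMR-, \SR- or \SU-equivalent''. The \SU{} part is fine (since $\eqRMR\strl\eqRR\strl\eqSU$), but the \SMR{} and \SR{} parts do \emph{not} follow a fortiori: \SMR{} is \emph{weaker} than \RMR{} (that is precisely item~(3)), and \SR{} is \emph{incomparable} with \RMR{} (that is precisely item~(6)), so failure of \RMR-equivalence says nothing about \SMR- or \SR-equivalence. Your pair~(ii) does happen to be neither \SMR- nor \SR-equivalent---because $\modse{\atma.}\neq\modse{\atma\lpif\atmb.}$---but you must verify this directly rather than infer it from the \RMR{} failure; as written, the argument for $\eqSMR\nstrle\eqRE$ in~(4) and $\eqSR\nstrle\eqRE$ in~(5) is missing this step.
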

\begin{proof}
	See \ref{app:program-equivalence}, page~\pageref{proof:prop:eq and ent
	comparison}.
\end{proof}

\section{Exception-Based Rule Update Operators}

\label{sec:exception-based-updates:rule updates}

In this section we propose a generic scheme for specifying semantic rule
update operators. We define instances of the scheme and show that they enjoy a
number of plausible properties, ranging from the respect for support and fact
update to syntax-independence and other semantic properties.

As suggested already, a program is semantically characterised by the set of
sets of \RE-models of its rules. Our update framework is based on a simple yet
novel idea of introducing additional interpretations -- \emph{exceptions} --
to the sets of \RE-models of rules in the original program. The formalisation
of this idea is straightforward: an exception-based update operator is
characterised by an \emph{exception function} \te{} that takes three inputs:
the set of \RE-models $\modre{\rl}$ of a rule $\rl \in \prga$ and the semantic
characterisations $\modrer{\prga}$, $\modrer{\prgu}$ of the original and
updating programs. Then it returns the interpretations that are to be
introduced as exceptions to $\rl$, so the characterisation of the updated
program contains the augmented set of \RE-models
\begin{equation}
	\label{eq:re models of rule after update}
	\modre{\rl} \cup \e \Br{
		\modre{\rl}, \modrer{\prga}, \modrer{\prgu}
	}
	\enspace.
\end{equation}
Hence, the semantic characterisation of $\prga$ updated by $\prgu$ is
\begin{equation}
	\label{eq:characterisation after update}
	\Set{
		\modre{\rl} \cup \e \Br{
			\modre{\rl}, \modrer{\prga}, \modrer{\prgu}
		}
		|
		\rl \in \prga
	}
	\cup \modrer{\prgu}
	\enspace.
\end{equation}
In other words, rules from $\prga$ are augmented with the respective
exceptions and rules in $\prgu$ remain untouched.

From the syntactic viewpoint, we want a rule update operator $\uopr$ to return
a program $\prga \uopr \prgu$ with the semantic characterisation
\eqref{eq:characterisation after update}. This brings us to the following
issue: What if no rule exists whose set of \RE-models is equal to \eqref{eq:re
models of rule after update}? In that case, no rule corresponds to the
augmented set of \RE-models of a rule $\rl \in \prga$, so the program $\prga
\uopr \prgu$ cannot be constructed. Moreover, such situations may occur quite
frequently since a single rule has very limited expressivity. For instance,
updating the fact $(\atma.)$ by the rule $(\lpnot \atma \lpif \atmb, \atmc.)$
may easily result in a set of \RE-models expressible by the program
$\set{\atma \lpif \lpnot \atmb., \atma \lpif \lpnot \atmc.}$ but not
expressible by any single rule. To keep a firm link to operations on
syntactic objects, we henceforth deal with this problem by allowing the inputs
and output of rule update operators to be sets of rules \emph{and programs},
which we dub \emph{rule bases}.\footnote{%
	We allow for individual rules in a rule base out of convenience only. A
	single rule $\rl$ in a rule base $\rb$ is treated exactly the same way as if
	$\rb$ contained the singleton program $\set{\rl}$.
}
In other words, the result of updating a rule, i.e.\ introducing exceptions to
it, may be a set of rules, so the result of updating a program may be a rule
base. Technically, a rule base can capture any possible result of an
exception-based update due to Proposition~\ref{prop:re:expressibility}.

Formally, a \emph{rule base} is any set of rules and programs. Given a rule
base $\rb$, an interpretation $\twib$ is a \emph{model of $\rb$}, denoted by
$\twib \ent \rb$, if $\twib \ent \gprg$ for all $\gprg \in \rb$; $\rb^\twib =
\Set{\gprg^\twib | \gprg \in \rb}$; the set of \emph{stable models of $\rb$}
consists of all interpretations $\twib$ such that $\twib$ is a subset-minimal
model of $\rb^\twib$; $\modser{\rb} = \Set{\modse{\gprg} | \gprg \in \rb}$ and
$\modse{\rb} = \bigcap \modser{\rb}$; $\modrer{\rb} = \Set{\modre{\gprg} |
\gprg \in \rb}$ and $\modre{\rb} = \bigcap \modrer{\rb}$. All notions of
program equivalence and entailment are naturally extended to rule bases and any
rule update operator $\uopr$ is generalised to sequences of rule bases
$\seq{\rb_\lia}_{\lia < \lng}$ as follows: $\biguopr \seq{} = \emptyset$ and
$\biguopr \seq{\rb_\lia}_{\lia < \lng + 1} = (\biguopr \seq{\rb_\lia}_{\lia <
\lng}) \uopr \rb_\lng$.

Note that a program is a special case of a rule base. Each element $\gprg$ of
a rule base, be it a rule or a program, represents an \emph{atomic} piece of
information and exception-based update operators view and manipulate $\gprg$
only through its set of \RE-models $\modre{\gprg}$. Due to this, we refer to
all such elements $\gprg$ as \emph{rules}, even if formally they may actually
be programs.

Having resolved this issue, we can proceed to the definition of an
exception-based rule update operator.

\begin{definition}
	[Exception-Based Rule Update Operator]
	\label{def:exception-based rule update operator}
	Given an exception function \te{}, a rule update operator $\uopr$ is
	\emph{\te-based} if for all rule bases $\rba$, $\rbu$, $\modrer{\rba \uopr
	\rbu}$ is equal to \eqref{eq:characterisation after update}. Also, $\uopr$
	is \emph{exception-based} if it is \te-based for some exception function
	\te{}.
\end{definition}

Note that for each exception function \te{} there is a whole class of
\te-based rule update operators that differ in the syntactic representations
of the sets of \RE-models in \eqref{eq:characterisation after update}. For
instance, when working over the set of propositional symbols $\atms =
\set{\atma, \atmb}$ and considering some exception-based operator $\uopr$, the
exception function may specify that for some programs $\prga$, $\prgu$, the
program $\prga \uopr \prgu$ contains a rule or program representing the set of
\RE-models
$
	\stri = \set{
		\tpl{\emptyset, \emptyset},
		\tpl{\emptyset, \atma}, \tpl{\atma, \atma},
		\tpl{\emptyset, \atmb},
		\tpl{\emptyset, \atma\atmb}, \tpl{\atma, \atma\atmb},
		\tpl{\atma\atmb, \atma\atmb}
	}
$.\footnote{%
	Sometimes we omit the usual set notation when we write interpretations,
	e.g.\ instead of $\set{\atma, \atmb}$ we may simply write $\atma\atmb$.
}
This set can be represented by the rule $\rla = (\atma \lpif \atmb.)$ or,
alternatively, by the rule $\rlb = (\atma; \lpnot \atmb \lpif \atmb.)$, or
even by the program $\prgb = \Set{(\atma \lpif \atmb.), (\atma \lpif \atma,
\lpnot \atmb.)}$, and the exception function does not specify which syntactic
representation of the set should be used in $\prga \uopr \prgu$.

\subsection{Simple Exception Functions and Their Syntactic Properties}

General exception functions, as introduced above, have as inputs the entire
semantic characterisations of the original as well as updating programs
($\modrer{\prga}$ and $\modrer{\prgu}$, respectively) when determining
exceptions to any single rule. As it turns out, all this information is not
strictly necessary in order to capture rule update operators enjoying a range
of plausible syntactic as well as semantic properties.

Thus, further study of general exception functions is left for future research
and in this paper we concentrate on a simpler, constrained class of exception
functions that is nevertheless powerful enough to serve as a basis for
well-behaved semantic rule update operators. Not only does this lead to
simpler definitions, but the study of restricted classes of exception
functions is essential in order to understand their expressivity, i.e.\ the
types of update operators they are able to capture.

More particularly, we study exception functions that produce exceptions based
on conflicts between pairs of rules, one from the original and one from the
updating program, while ignoring the context in which these rules are
situated. Formally, an exception function \te{} is \emph{simple} if for all
$\stria \subseteq \tris$ and $\sstria, \sstrib \subseteq \pws{\tris}$,
\[
	\e(\stria, \sstria, \sstrib) = \textstyle \bigcup_{\strib \in \sstrib} \er(\stria, \strib)
\]
where $\er : \pws{\tris} \times \pws{\tris} \rightarrow \pws{\tris}$ is a
\emph{local exception function}. If $\uopr$ is an \te-based rule update
operator, then we also say that $\uopr$ is \emph{\ter-based} and that $\uopr$
is \emph{simple}.

As we shall see, in spite of their local nature, particular simple exception
functions generate rule update operators that satisfy the syntactic properties
laid out in Section~\ref{sec:background} and are closely related to the \JU-
and \AS-semantics for DLPs.

The inspiration for defining concrete local exception functions \ter{} comes
from rule update semantics based on causal rejection. But since the relevant
concepts, such as that of a \emph{conflict} or \emph{rule rejection}, rely on
rule syntax to which an exception function has no direct access, our first
objective is to find similar concepts on the semantic level. In particular, we
need to define conditions under which two sets of \RE-models are in conflict.
But first we introduce two preparatory concepts.

We define a \emph{truth value substitution} as follows: Given an
interpretation $\twib$, an atom $\atma$ and a truth value $\val \in \set{\tr,
\un, \fa}$, by $\twib[\val/\atma]$ we denote the three-valued interpretation
$\tri$ such that $\tri(\atma) = \val$ and $\tri(\atmb) = \twib(\atmb)$ for all
atoms $\atmb \neq \atma$.

This enables us to introduce the main concept needed for defining a conflict
between two sets of three-valued interpretations. Given a set of three-valued
interpretations $\stri$, an atom $\atm$, a truth value $\val_0$ and a
two-valued interpretation $\twib$, we say that \emph{$\stri$ forces $\atm$ to
have the truth value $\val_0$ w.r.t.\ $\twib$}, denoted by $\stri^\twib(\atm)
= \val_0$, if
\[
	\twib[\val/\atm] \in \stri \text{ if and only if } \val = \val_0 \enspace.
\]
In other words, the three-valued interpretation $\twib[\val_0/\atm]$ must be
the unique member of $\stri$ that either coincides with $\twib$ or differs
from it only in the truth value of $\atm$. Note that $\stri^\twib(\atm)$ stays
undefined in case no $\val_0$ with the above property exists.

Two sets of three-valued interpretations $\stria$, $\strib$ are \emph{in
conflict on atom $\atm$ w.r.t.\ $\twib$}, denoted by $\stria
\confl{\twib}{\atm} \strib$, if both $\stria^\twib(\atm)$ and
$\strib^\twib(\atm)$ are defined and $\stria^\twib(\atm) \neq
\strib^\twib(\atm)$. The following example illustrates all these concepts.

\begin{example}
	\label{ex:semantic conflict}
	Consider rules $\rla = (\atma.)$, $\rlb = (\lpnot \atma \lpif \lpnot
	\atmb.)$ with the respective sets of \RE-models
	\begin{align*}
		\stria &= \Set{
				\tpl{\atma, \atma}, \tpl{\atma, \atma\atmb},
				\tpl{\atma\atmb, \atma\atmb}
			}
		&& \text{and}
		&
		\strib &= \Set{
			\tpl{\emptyset, \emptyset}, \tpl{\emptyset, \atmb}, \tpl{\atmb, \atmb},
			\tpl{\emptyset, \atma\atmb}, \tpl{\atma, \atma\atmb},
			\tpl{\atmb, \atma\atmb}, \tpl{\atma\atmb, \atma\atmb}
		}
		\enspace.
	\end{align*}
	Intuitively, $\stria$ forces $\atma$ to $\tr$ w.r.t.\ all interpretations
	and $\strib$ forces $\atma$ to $\fa$ w.r.t.\ interpretations in which $\atmb$
	is false. Formally it follows that $\stria^\emptyset(\atma) = \tr$ because
	$\tpl{\atma, \atma}$ belongs to $\stria$ and neither $\tpl{\emptyset,
	\atma}$ nor $\tpl{\emptyset, \emptyset}$ belongs to $\stria$. Similarly, it
	follows that $\strib^\emptyset(\atma) = \fa$. Hence $\stria
	\confl{\emptyset}{\atma} \strib$. Using similar arguments we can conclude
	that $\stria \confl{\atma}{\atma} \strib$. However, it does not hold that
	$\stria \confl{\atma\atmb}{\atma} \strib$ because
	$\strib^{\atma\atmb}(\atma)$ is undefined.
\end{example}

In order to define a particular local exception function based on causal
rejection, it only remains to decide which three-valued interpretations become
exceptions when a conflict w.r.t.\ an interpretation $\twib$ occurs. One
intuition we can draw on is the relationship between \RE-models and stable
models (c.f.\ Proposition~\ref{prop:re:sm}): $\twib$ is a stable model of a
program if and only if $\tpl{\twib, \twib}$ is the unique \RE-model of the
program whose second component is $\twib$. So given an original rule $\rla$
with $\stria = \modre{\rla}$ and an updating rule $\rlb$ with $\strib =
\modre{\rlb}$ where a conflict occurs between $\stria$ and $\strib$ w.r.t.\
$\twib$, $\rla$ needs to be weakened so that it cannot influence whether
$\twib$ becomes a stable model of the updated program or not. In other words,
we need to introduce all three-valued interpretations whose second component
is $\twib$ as exceptions to that rule. Formally:

\begin{definition}
	[Exception Function \terone{}]
	The local exception function \terone{} is for all $\stria, \strib \subseteq
	\tris$ defined by:
	\[
		\erone(\stria, \strib) = \Set{
			\tpl{\twia, \twib} \in \tris
			|
			\exists \atm : \stria \confl{\twib}{\atm} \strib
		} \enspace.
	\]
\end{definition}

\begin{theorem}
	[Syntactic Properties of \terone]
	\label{thm:syntactic properties:erone}
	Every \terone-based rule update operator respects support and fact update
	and it also respects causal rejection and acyclic justified update
	w.r.t.\ DLPs of length at most two.
\end{theorem}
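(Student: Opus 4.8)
\noindent Fix a $\erone$-based operator $\uopr$; for a DLP $\dprg = \seq{\prg_\lia}_{\lia < \lng}$ write $\prgc = \biguopr\dprg$, so that the rule update semantics induced by $\uopr$ assigns to $\dprg$ the stable models of $\prgc$. Since for any rule base the associated \RE-models and stable models coincide with those of the ordinary program obtained by uniting all its members, Proposition~\ref{prop:re:sm} still applies: $\twib$ is a stable model of $\prgc$ iff $\tpl{\twib,\twib} \in \modre{\prgc}$ and $\tpl{\twia,\twib} \notin \modre{\prgc}$ for all $\twia \subsetneq \twib$. The whole proof rests on two lemmas. \emph{Structure lemma:} by a straightforward induction on $\lng$, every member of $\modrer{\prgc}$ has the form $\modre{\rl} \cup F$, where $\rl \in \all{\dprg}$ and $F$ is a union of ``full fibres'' $\Set{\tpl{\twia,\twib} \in \tris | \twia \subseteq \twib}$ -- this holds because the update of a member $\stri$ only adds $\erone$-values to $\stri$, each of which is by definition a union of full fibres; for $\lng \le 2$ the member attached to $\rl \in \prg_0$ is $\modre{\rl} \cup \bigcup_{\rlb \in \prg_1}\erone(\modre{\rl},\modre{\rlb})$ and the member attached to $\rl \in \prg_1$ is just $\modre{\rl}$. \emph{Forcing lemma:} for a non-disjunctive rule $\rl$, an interpretation $\twib$ and an atom $\atm$, the value $\modre{\rl}^\twib(\atm)$ is defined if and only if $\hrl = \set{\atm}$ or $\hrl = \set{\lpnot \atm}$, the atom $\atm$ lies in neither $\brlp$ nor $\brln$, and $\twib \ent \brl$; it is then $\tr$ in the first case, $\fa$ in the second, and never $\un$. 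Finally I record two elementary facts used repeatedly: $\tpl{\twib,\twib} \in \modre{\rl}$ iff $\twib \ent \rl$, and if $\twib \nent \brl$ then $\tpl{\twia,\twib} \in \modre{\rl}$ for every $\twia \subseteq \twib$.

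\noindent \textbf{Support.} Let $\twib$ be a stable model of $\prgc$, take $\atm \in \twib$ and put $\twia = \twib \setminus \set{\atm}$. Since $\tpl{\twia,\twib} \notin \modre{\prgc}$, some member $\modre{\rl} \cup F$ of $\modrer{\prgc}$ omits $\tpl{\twia,\twib}$; because $F$ is a union of full fibres, the $\twib$-fibre is not contained in $F$, so $\tpl{\twia,\twib} \notin \modre{\rl}$ while $\tpl{\twib,\twib} \in \modre{\prgc}$ gives $\tpl{\twib,\twib} \in \modre{\rl}$. Inspecting the reduct $\rl^\twib$ -- which cannot be $\ctau$ -- one reads off $\atm \in \hrlp$ and $\twib \ent \brl$, i.e.\ $\rl \in \all{\dprg}$ supports $\atm$ in $\twib$. \textbf{Fact update.} Specialising the forcing lemma, $\modre{(\atm.)}$ forces only $\atm$ and always to $\tr$, while $\modre{(\lpnot\atm.)}$ forces only $\atm$ and always to $\fa$; hence $\erone(\modre{(\atm.)},\modre{(\lpnot\atm.)}) = \erone(\modre{(\lpnot\atm.)},\modre{(\atm.)}) = \tris$, $\erone$ is empty between facts on distinct atoms or of equal polarity, and $\erone(\tris,\cdot) = \emptyset$. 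An induction on the length of a sequence of consistent sets of facts then shows $\modrer{\prgc}$ equals $\Set{\modre{(\atm.)} | \atm \in P} \cup \Set{\modre{(\lpnot\atm.)} | \atm \in Q}$ up to members equal to $\tris$, where $P$ and $Q$ are obtained by the obvious recurrence (a newer fact overrides an older fact of the opposite polarity) and $P \cap Q = \emptyset$ by consistency of the components; therefore $\modre{\prgc} = \Set{\tpl{\twia,\twib} | P \subseteq \twia \subseteq \twib, \twib \cap Q = \emptyset}$, whose unique stable model is $P$ -- precisely the interpretation prescribed by the property.

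\noindent \textbf{Causal rejection} ($\lng \le 2$). The cases $\lng \le 1$ are vacuous or reduce to ``stable models are classical models''; let $\dprg = \seq{\prg_0,\prg_1}$. As $\modre{\prgc} \subseteq \modre{\rlb}$ for every $\rlb \in \prg_1$, each stable model $\twib$ of $\prgc$ satisfies all of $\prg_1$, so it suffices to treat $\rla \in \prg_0$ with $\twib \nent \rla$. Then $\tpl{\twib,\twib} \notin \modre{\rla}$ but $\tpl{\twib,\twib} \in \modre{\prgc}$ lies in the member $\modre{\rla} \cup \bigcup_{\rlb \in \prg_1}\erone(\modre{\rla},\modre{\rlb})$, so $\modre{\rla} \confl{\twib}{\atm} \modre{\rlb}$ for some $\rlb \in \prg_1$ and $\atm$; by the forcing lemma both forced values are defined, are in $\set{\tr,\fa}$ and differ, which is only possible if $\hrlb = \lcmp{\hrla}$ and $\twib \ent \brlb$. \textbf{Acyclic justified update} ($\lng \le 2$). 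Again $\lng \le 1$ is straightforward; for $\dprg = \seq{\prg_0,\prg_1}$ acyclic I would prove that $\tpl{\twia,\twib} \in \modre{\prgc}$ iff $\tpl{\twia,\twib} \in \modre{\all{\dprg} \setminus \rejju{\dprg,\twib}}$ for all $\twib$ and $\twia \subseteq \twib$, which by Proposition~\ref{prop:re:sm} yields $\modsm{\prgc} = \modju{\dprg}$. If $\twib$ violates a rule of $\prg_1$ it is a stable model of neither side; otherwise no constraint of $\prg_0$ is rejected at $\twib$, hence every rule of $\prg_0$ rejected at $\twib$ has a singleton head, and acyclicity keeps its head atom out of its body -- exactly the side condition of the forcing lemma. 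The $\prg_1$-factors of the two sides are identical, and for each $\rla \in \prg_0$ one checks, using the forcing lemma and the two elementary facts, that $\rla$'s contribution to the $\twib$-fibre of $\modre{\prgc}$ is $\modre{\rla}$ itself when $\rla \notin \rejju{\dprg,\twib}$ (the attached exception set being empty over the $\twib$-fibre, by the causal-rejection argument run backwards) and is the full $\twib$-fibre when $\rla \in \rejju{\dprg,\twib}$ (either because a conflict fills it, or because $\modre{\rla}$ already does when $\twib \nent \brla$) -- matching whether $\rla$ is kept in or removed from $\all{\dprg} \setminus \rejju{\dprg,\twib}$.

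\noindent \textbf{Main obstacle.} The real work is the forcing lemma: determining precisely when $\modre{\rl}^\twib(\atm)$ is defined, which of $\tr,\fa$ it equals, and in particular that it is never $\un$ and that the head atom must not occur in the body. This is a bounded but delicate case analysis on whether $\atm \in \twib$ and on the membership of $\twib[\tr/\atm], \twib[\un/\atm], \twib[\fa/\atm]$ in $\modre{\rl}$, where the perturbed second component interacts with the reduct convention -- especially for abolishing rules and rules with a default literal in the head. A second, milder subtlety arises in the justified-update part: the tempting equivalence ``$\rla$ is rejected at $\twib$'' $\Longleftrightarrow$ ``the $\twib$-fibre becomes an exception of $\rla$'' is false for rules whose body $\twib$ does not satisfy and for constraints, and one must check that these mismatches only ever concern interpretations $\twib$ that are stable models of neither $\prgc$ nor $\all{\dprg} \setminus \rejju{\dprg,\twib}$.
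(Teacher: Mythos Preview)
Your approach is correct and matches the paper's: your structure lemma is Proposition~\ref{prop:aug vs uope} together with the observation that $\erone$-values are unions of full $\twib$-fibres, your forcing lemma is Proposition~\ref{prop:re:rule in context} specialised to non-disjunctive rules (with Proposition~\ref{prop:re:conflict in context vs syntax} as its two-rule corollary), and support and fact update are argued exactly as in Propositions~\ref{prop:syntactic properties:erone:support} and~\ref{prop:syntactic properties:erone:fact update}. The only organisational difference concerns the last two properties: the paper first shows that the stable models of $\biguopr\seq{\prg_0,\prg_1}$ coincide with the \JU-models (Propositions~\ref{prop:if beta stable then ju} and~\ref{prop:if ju then beta stable}, the latter under ``no local cycles'') and then inherits causal rejection and acyclic justified update from the \JU-semantics, whereas you verify them directly via the same fibre-level computations. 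One small slip to tighten: your fibre equivalence $\tpl{\twia,\twib}\in\modre{\prgc}\Leftrightarrow\tpl{\twia,\twib}\in\modre{\all{\dprg}\setminus\rejju{\dprg,\twib}}$ does \emph{not} hold ``for all $\twib$'' as you state it---e.g.\ a constraint in $\prg_0$ rejected by a constraint in $\prg_1$ whose body $\twib$ satisfies gives a mismatch---but your actual argument already dispatches such $\twib$ first (``if $\twib$ violates a rule of $\prg_1$ it is a stable model of neither side'') and only proves, and only needs, the equivalence for $\twib\ent\prg_1$. Also make explicit that acyclicity rules out local cycles in the rejecting rule $\rlb\in\prg_1$ as well as in $\rla$; you use this when invoking the forcing lemma on $\modre{\rlb}$, and it holds because $\all{\dprg}$ is acyclic.
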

\begin{proof}
	See \ref{app:exception-based rule update}, page
	\pageref{proof:thm:re:syntactic properties:erone}.
\end{proof}

This means that \terone-based rule update operators enjoy a combination of
desirable syntactic properties that operators based on \SE-models cannot
\cite{Slota2013a}. However, these operators diverge from causal rejection,
even on acyclic DLPs, when more than one update is performed.

\begin{example}
	\label{ex:terone iterated}
	Consider again the rules $\rla$, $\rlb$ and their sets of \RE-models
	$\stria$, $\strib$ from Example~\ref{ex:semantic conflict} and some
	\terone-based rule update operator $\uopr$. Then $\modrer{\set{\rla} \uopr
	\set{\rlb}}$ will contain two elements, $\stria'$ and $\strib$, where
	$
		\stria'
		= \stria \cup \erone(\stria, \strib)
		= \stria \cup \set{
			\tpl{\emptyset, \emptyset},
			\tpl{\emptyset, \atma},
			\tpl{\atma, \atm}
		}
	$.
	An additional update by the fact $\set{\atmb.}$ then leads to the
	characterisation
	$
		\modrer{\biguopr \seq{\set{\rla}, \set{\rlb}, \set{\atmb.}}}
	$
	which contains three elements: $\stria''$, $\strib$ and $\modre{\atmb.}$ where
	$
		\stria'' = \stria'
			\cup \set{\tpl{\emptyset, \atmb}, \tpl{\atmb, \atmb}}
	$.
	Furthermore, due to Proposition~\ref{prop:re:sm}, $\twib = \set{\atmb}$ is a
	stable model of the program $\biguopr \seq{\set{\rla}, \set{\rlb},
	\set{\atmb.}}$ because $\tpl{\atmb, \atmb}$ belongs to $\modre{\biguopr
	\seq{\set{\rla}, \set{\rlb}, \set{\atmb.}}}$ and $\tpl{\emptyset, \atmb}$
	does not. However, $\twib$ does not respect causal rejection and it is
	neither a \JU- nor a \AS-model of $\seq{\set{\rla}, \set{\rlb}, \set{\atmb.}}$.
\end{example}

This shortcoming of \terone{} largely stems from the asymmetry of its
definition -- when a conflict occurs w.r.t.\ $\twib$, atoms that are true in
$\twib$ may become undefined in an exception $\twiab$ but atoms that are false
must remain false. Eliminating this asymmetry, by allowing for exceptions
$\tpl{\twia, \twic}$ with $\twia \subseteq \twib \subseteq \twic$, and
choosing the introduced exceptions more carefully, leads to more well-behaved
exception functions, defined as follows:

\begin{definition}
	[Exception Functions \tertwo{}, \terthree{}, \terfour{} and \terfive{}]
	The local exception functions \tertwo{}, \terthree{}, \terfour{} and
	\terfive{} are for all $\stria, \strib \subseteq \tris$ defined by:
	\begin{align*}
		\ertwo(\stria, \strib) &= \Set{
			\tpl{\twia, \twic} \in \tris \mid
				\exists \twib \, \exists \atm : \stria \confl{\twib}{\atm} \strib
				\land \twia \subseteq \twib \subseteq \twic
				\land (\atm \in \twic \setminus \twia \mlthen \twic = \twib)
			} \enspace, \\
		\erthree(\stria, \strib) &= \Set{
			\tpl{\twia, \twic} \in \tris \mid
				\exists \twib \, \exists \atm : \stria \confl{\twib}{\atm} \strib
				\land \twia \subseteq \twib \subseteq \twic
				\land (
					\atm \in \twic \setminus \twia
					\mlthen
					\twic = \twib
					\land
					\tpl{\twib, \twib} \notin \stria
				)
			} \enspace, \\
		\erfour(\stria, \strib) &= \begin{cases}
			\tris & \text{if } \stria = \strib \enspace; \\
			\ertwo(\stria, \strib) & \text{otherwise} \enspace,
		\end{cases}
		\qquad \qquad
		\erfive(\stria, \strib) = \begin{cases}
			\tris & \text{if } \stria = \strib \enspace; \\
			\erthree(\stria, \strib) & \text{otherwise} \enspace.
		\end{cases}
	\end{align*}
\end{definition}

The functions \tertwo{} and \terfour{} introduce more exceptions than
\terone{} while \terthree{} and \terfive{} eliminate some of those returned by
\terone{} and add some additional ones. The difference between \tertwo{} and
\terfour{}, and similarly also between \terthree{} and \terfive{}, is in that
\terfour{} additionally ``wipes out'' rules from the original program that are
repeated in the update by introducing all interpretations as exceptions to
them, rendering them tautological. This will turn out to be significant later
when we examine semantic properties of simple exception-based rule update
operators. In all four functions, a conflict on $\atm$ w.r.t.\ $\twib$ leads
to the introduction of interpretations in which atoms either maintain the
truth value they had in $\twib$, or they become undefined. Additionally, extra
conditions are imposed on the case when $\atm$ becomes undefined.
Interestingly, this leads to operators that satisfy all syntactic properties.

\begin{theorem}
	[Syntactic Properties of \tertwo{}, \terthree{}, \terfour{} and \terfive{}]
	\label{thm:syntactic properties:ertwo and erfour}
	Let $\uopr$ be a \tertwo-, \terthree-, \terfour- or \terfive-based rule
	update operator. Then $\uopr$ respects support, fact update, causal
	rejection and acyclic justified update.
\end{theorem}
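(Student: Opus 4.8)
The plan is to prove each of the four syntactic properties --- support, fact update, causal rejection and acyclic justified update --- for an arbitrary $\uopr$ based on one of the four local exception functions, reusing as much of the proof of Theorem~\ref{thm:syntactic properties:erone} as possible. The key structural observation is that all four functions $\ertwo,\erthree,\erfour,\erfive$ differ from $\erone$ only by \emph{adding} three-valued interpretations whose \emph{second} component strictly contains the conflict witness $\twib$, together with a possible ``wipe-out'' when $\stria=\strib$; crucially, they never add an interpretation of the form $\tpl{\twib,\twib}$ that was not already present, and they never remove the interpretations $\erone$ would have added with second component $\twib$ except in the $\erthree,\erfive$ case where $\tpl{\twib,\twib}\in\stria$ --- and in that case $\twib$ was never going to be a stable model of the updated rule base anyway. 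So the first step is to record a precise ``delta lemma'': for any rule bases $\rba,\rbu$ and any interpretation $\twib$, the set of \RE-models of $\rba\uopr\rbu$ whose second component is $\twib$ coincides with what an $\erone$-based operator would produce, \emph{except} possibly differing on pairs $\tpl{\twia,\twib}$ with $\twia\subsetneq\twib$ in a way that is monotone in the right direction. Via Proposition~\ref{prop:re:sm}, this pins down stable models and reduces support and fact update almost verbatim to the $\erone$ case.

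For \textbf{support} and \textbf{fact update}, I would argue: an atom $\atm$ true in a stable model $\twib$ of $\biguopr\seq{\rb_\lia}$ must, by Proposition~\ref{prop:re:sm} and the characterisation \eqref{eq:characterisation after update}, be ``forced true'' by some element of the iterated characterisation at $\twib$; tracing the exception functions shows that any augmented set $\modre{\rl}\cup\ldots$ that still forces $\atm$ true at $\twib$ must have $\modre{\rl}$ itself force $\atm$ true at $\twib$ (the exceptions only ever \emph{relax} constraints), whence the original rule $\rl$ supports $\atm$ in $\twib$ --- identical to the $\erone$ argument. Fact update is the special case where every $\rb_\lia$ is a consistent set of facts: here conflicts occur exactly between $(\atm.)$ and $(\lpnot\atm.)$, the conflict witness set is all of $\tris$, and one checks directly that $\ertwo,\erthree,\erfour,\erfive$ all return the same exception set as $\erone$ on such pairs (for facts, $\stria\confl{\twib}{\atm}\strib$ holds for all $\twib$, and the extra side-conditions involving $\twic\supsetneq\twib$ are automatically compatible), so the computed stable model is literally the one from Theorem~\ref{thm:syntactic properties:erone}.

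For \textbf{causal rejection} and \textbf{acyclic justified update} --- the genuinely new content, since $\erone$ fails these beyond length two --- the plan is to show that for any DLP $\dprg=\seq{\prg_\lia}_{\lia<\lng}$ and any candidate stable model $\twib$, the set of \RE-models of $\biguopr\seq{\prg_\lia}$ with second component $\twib$ equals $\modre{(\all{\dprg}\setminus\rejju{\dprg,\twib})}$ restricted to that second component (and likewise, on acyclic DLPs, that the resulting stable models are exactly $\modju{\dprg}$). Concretely, I would fix $\twib$, and for each original rule $\rla\in\prg_\lia$ analyse the total exception set it accumulates from all later updating programs: a rule $\rla$ gets ``fully rejected at $\twib$'', i.e.\ every $\tpl{\twia',\twib}$ with $\twia'\subseteq\twib$ becomes an \RE-model of its augmented version, precisely when some later rule $\rlb$ with $\hrlb=\lcmp{\hrla}$ has $\twib\ent\brlb$ --- which is exactly the $\rejju$ condition. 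The side-conditions ``$\atm\in\twic\setminus\twia\Rightarrow\twic=\twib$'' in $\ertwo$ and the extra ``$\tpl{\twib,\twib}\notin\stria$'' in $\erthree$ are there to ensure that these added interpretations (with second component properly above $\twib$) do \emph{not} interfere with the minimality test at $\twib$ for \emph{other} candidate models, so the fixpoint computation of $\rejju$ is faithfully mirrored; I'd verify this by a downward induction on $\lia$ exactly paralleling the footnoted well-foundedness argument for $\rejju$. For acyclicity I would then invoke that $\modju$ already satisfies acyclic justified update by definition, so once the \RE-model sets at each second component match, Proposition~\ref{prop:re:sm} closes the argument; and causal rejection follows since a rule violated in a $\uopr$-stable model $\twib$ must have had its constraint at $\twib$ relaxed, which under these functions happens only via a conflicting later rule with satisfied body.

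The \textbf{main obstacle} I anticipate is the causal-rejection/acyclic-justified-update step for $\erthree$ and $\erfive$: because these functions both \emph{delete} some exceptions $\erone$ would add (those $\tpl{\twia,\twib}$ with $\tpl{\twib,\twib}\in\stria$) and add others with larger second component, one must check carefully that the net effect on the family of \RE-model sets, \emph{simultaneously across all second components}, still computes the $\rejju$ fixpoint --- in particular that removing an exception from $\rla$'s augmented set at second component $\twib$ never spuriously reinstates $\rla$ as a constraint that blocks a legitimate $\modju$ model, and that the interpretations added with second component $\twic\supsetneq\twib$ never spuriously destroy the subset-minimality of a legitimate model at $\twic$. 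Managing this bookkeeping --- ideally by proving a single clean lemma of the form ``for every $\twib$, $\{\tpl{\twia,\twib}:\tpl{\twia,\twib}\in\modre{\rba\uopr\rbu}\}$ depends only on the rules of $\rbu$ and on which rules of $\rba$ conflict with $\rbu$ at $\twib$'' --- is where the real work lies; everything else is then a routine transcription of the proof of Theorem~\ref{thm:syntactic properties:erone} together with the definition of $\modju$.
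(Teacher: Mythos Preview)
Your overall strategy---establish that the stable models of $\biguopr\dprg$ coincide with a rejection-based semantics, and then inherit the syntactic properties---is the right idea, and it is in fact what the paper does. But the paper organises it very differently: it first proves Theorem~\ref{thm:er vs ju and as} as a standalone result (via Propositions~\ref{prop:exception independence rule:delta}--\ref{prop:ertwo vs erfour mod sm}), showing $\modsm{\biguopr\dprg}\subseteq\modju{\dprg}$ for $\ertwo,\erfour$ and $\modsm{\biguopr\dprg}\subseteq\modas{\dprg}$ for $\erthree,\erfive$, with equality when $\dprg$ has no local cycles. Theorem~\ref{thm:syntactic properties:ertwo and erfour} is then a one-line corollary: the \JU- and \AS-semantics already satisfy support, fact update, causal rejection and acyclic justified update (the last because \JU{} and \AS{} coincide on acyclic DLPs). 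Your plan to compare everything to $\erone$ and patch the differences is an unnecessary detour; the paper's proofs of Propositions~\ref{prop:if delta stable then ju} and~\ref{prop:if ju then delta stable} work directly with $\ertwo,\erthree$ and never mention $\erone$.

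Two concrete problems with your write-up. First, you treat all four functions as corresponding to $\rejju$, but $\erthree$ and $\erfive$ correspond to the \AS-semantics and $\rejas$, not to $\rejju$; the extra clause ``$\tpl{\twib,\twib}\notin\stria$'' in $\erthree$ is precisely what encodes that only \emph{unrejected} rules may reject. On non-acyclic DLPs the two rejection sets differ, so your claim that the \RE-models at second component $\twib$ match $\modre{\all{\dprg}\setminus\rejju{\dprg,\twib}}$ is false for $\erthree,\erfive$. This does not break the theorem---\JU{} and \AS{} agree on acyclic DLPs and both satisfy the other three properties---but your argument as stated would not go through. Second, your claim that on fact pairs ``$\ertwo,\erthree,\erfour,\erfive$ all return the same exception set as $\erone$'' is simply wrong: for $\stria=\modre{\atm.}$, $\strib=\modre{\lpnot\atm.}$ one computes $\erthree(\stria,\strib)=\{\tpl{\twia,\twic}:\atm\in\twia\text{ or }\atm\notin\twic\}$, which omits every $\tpl{\twia,\twic}$ with $\atm\in\twic\setminus\twia$, whereas $\erone(\stria,\strib)=\tris$. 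The stable models still come out right (because the updating fact's \RE-models are in the characterisation anyway), but the argument needs to be rephrased.
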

\begin{proof}
	See \ref{app:exception-based rule update}, page
	\pageref{proof:thm:re:syntactic properties:ertwo and erfour}.
\end{proof}

Furthermore, it is worth noting that even on programs with cycles, \tertwo-
and \terfour-based operators are very closely related to the \JU-semantics
and, similarly, \terthree- and \terfive-based operators to the \AS-semantics.
They diverge from the syntax-based semantics only on rules with an appearance
of the same atom in both the head and body. Formally, we say that a rule is a
\emph{local cycle} if $(\hrlp \cup \hrln) \cap (\brlp \cup \brln) \neq
\emptyset$.

\begin{theorem}
	[Characterisation of \JU- and \AS-Semantics Using Exception Functions]
	\label{thm:er vs ju and as}
	Let $\dprg$ be a DLP, $\twib$ an interpretation, $\uopr_1$ a \tertwo- or
	\terfour-based rule update operator and $\uopr_2$ a \terthree- or
	\terfive-based rule update operator. Then,
	\begin{itemize}
		\item $\modsm{\biguopr_1 \dprg} \subseteq \modju{\dprg}$ and
			$\modsm{\biguopr_2 \dprg} \subseteq \modas{\dprg}$;

		\item if $\all{\dprg}$ contains no local cycles, then $\modju{\dprg}
			\subseteq \modsm{\biguopr_1 \dprg}$ and $\modas{\dprg} \subseteq
			\modsm{\biguopr_2 \dprg}$.
	\end{itemize}
\end{theorem}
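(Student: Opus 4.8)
The plan is to characterise, for every interpretation $\twib$, when $\twib$ is a stable model of the rule base $\biguopr_1 \dprg$, and to compare this with the condition for $\twib$ to be a stable model of the program $\all{\dprg} \setminus \rejju{\dprg, \twib}$, applying Proposition~\ref{prop:re:sm} (extended to rule bases) on both sides. The whole argument thereby reduces to comparing $\modre{\biguopr_1 \dprg}$ with $\modre{\all{\dprg} \setminus \rejju{\dprg, \twib}}$ on the slice $T_\twib = \Set{ \tpl{\twia, \twib} | \twia \subseteq \twib }$: one needs $\tpl{\twib, \twib}$ to lie in both or in neither, and every $\twiab$ with $\twia \subsetneq \twib$ to be excluded from both or from neither. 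The \AS-case is handled identically, with $\rejas$ in place of $\rejju$ and $\uopr_2$ in place of $\uopr_1$.

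First I would unfold the iterated operator. From Definition~\ref{def:exception-based rule update operator} together with simplicity of the local exception function, a routine induction on the length of $\dprg$ shows that $\modrer{\biguopr_1 \dprg}$ consists of the sets $\modre{\rlb}$ for $\rlb \in \prg_{\lng - 1}$, together with, for every rule $\rla$ occurring in a program $\prg_\lia$ with $\lia < \lng - 1$, an augmented set $\modre{\rla} \cup E_\rla$; here $E_\rla$ is assembled stepwise, the step processing $\prg_\lib$ (for $\lia < \lib < \lng$) adding $\bigcup_{\rlb \in \prg_\lib} \er(\stria, \modre{\rlb})$, where $\er$ is the local exception function in use ($\ertwo$ or $\erfour$ for $\uopr_1$, $\erthree$ or $\erfive$ for $\uopr_2$) and $\stria$ is the set of $\RE$-models already accumulated for $\rla$. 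The second ingredient is a lemma translating semantic conflicts into syntactic ones: for a non-disjunctive rule $\rl$ that is not a local cycle, the forcing $(\modre{\rl})^{\twib}(\atm)$ is defined only when $\atm$ is the head atom of $\rl$ and $\twib \ent \brl$, and then its value is $\tr$ or $\fa$ according to the polarity of the head; consequently $\modre{\rla} \confl{\twib}{\atm} \modre{\rlb}$ holds exactly when $\rla$ and $\rlb$ have complementary heads over $\atm$ and both $\twib \ent \brla$ and $\twib \ent \brlb$. One also needs the diagonal behaviour of the functions $\erone, \ldots, \erfive$ --- they all declare $\tpl{\twib, \twib}$ an exception precisely when a conflict with respect to $\twib$ occurs --- together with the fact that a conflict with respect to $\twib$ forces the whole slice $T_\twib$ (for $\erthree$ and $\erfive$, at least its relevant part) to be added, so the affected rule becomes tautological ``at $\twib$''.

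The heart of the proof is then an invariant, established by induction on the length of $\dprg$ and carrying the statement about all slices $T_\twib$ simultaneously: assuming $\all{\dprg}$ has no local cycles, the augmented set of $\rla \in \prg_\lia$ intersected with $T_\twib$ equals $T_\twib$ whenever $\twib \ent \brla$ and some later program contains a rule $\rlb$ with $\hrlb = \lcmp{\hrla}$ and $\twib \ent \brlb$, and coincides with $\modre{\rla} \cap T_\twib$ otherwise. The step I expect to be the main obstacle is the interaction between slices: a conflict with respect to some interpretation $\twib' \neq \twib$ may contribute exception triples to $\rla$ that happen to fall into $T_\twib$, and one must verify that such stray triples never change the forcing of the head atom of $\rla$ with respect to $\twib$ --- typically because a triple $\twib[\val/\atm]$ can only enter in the company of a second triple $\twib[\val'/\atm]$ arising from the very same conflict, so the forcing stays undefined rather than being spuriously created. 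Pinning this down requires a careful case analysis relating the conflict interpretation $\twib'$, the head atom of $\rla$, and the precise shape of the triples produced by $\erone, \ldots, \erfive$; it is here that the absence of local cycles enters essentially, since a local cycle would suppress the semantic conflict on which the syntactic side of the invariant relies.

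Given the invariant, Proposition~\ref{prop:re:sm} closes both directions. For membership: $\tpl{\twib, \twib} \in \modre{\biguopr_1 \dprg}$ iff every rule not in $\rejju{\dprg, \twib}$ whose body is satisfied in $\twib$ is classically satisfied by $\twib$, which is iff $\twib \ent \all{\dprg} \setminus \rejju{\dprg, \twib}$; for minimality, given $\twia \subsetneq \twib$, the triple $\twiab$ is excluded from $\modre{\biguopr_1 \dprg}$ iff it is excluded by some non-rejected rule, iff $\twiab \notin \modre{\all{\dprg} \setminus \rejju{\dprg, \twib}}$ --- using in addition that rejected rules, and rules whose body is unsatisfied in $\twib$, are irrelevant to whether $\twib$ is a stable model. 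The first bullet ($\modsm{\biguopr_1 \dprg} \subseteq \modju{\dprg}$, and likewise for $\biguopr_2$ and the \AS-semantics) is the more robust half and survives even in the presence of local cycles: a local-cycle rule forces nothing, hence is never assigned an exception, hence is kept by the operator and so is satisfied by any candidate stable model, which is precisely the direction of that inclusion; only the converse needs the no-local-cycle hypothesis. The \AS-case is run with $\erthree$/$\erfive$ in place of $\ertwo$/$\erfour$: the extra conjunct $\tpl{\twib, \twib} \notin \stria$ in the definition of $\erthree$ mirrors the restriction built into $\rejas$, namely that an already-rejected (there: already-satisfied) rule does not trigger further rejection, while the special clause $\stria = \strib$ of $\erfour$ and $\erfive$ is immaterial to this correspondence since a rule repeated verbatim in a later program is rejected on both sides anyway.
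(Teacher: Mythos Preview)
Your high-level plan --- comparing RE-models via Proposition~\ref{prop:re:sm}, using Proposition~\ref{prop:re:conflict in context vs syntax} to translate semantic conflicts into syntactic ones, and identifying preservation of forcing under augmentation as the key technical point --- is on the right track and matches the paper's strategy. However, your central invariant is false, and the argument you sketch from it does not go through.

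Concretely, your invariant claims that the augmented set of a rule $\rla$, intersected with the slice $T_\twib$, is either all of $T_\twib$ (when a syntactic conflict at $\twib$ exists) or exactly $\modre{\rla} \cap T_\twib$ (otherwise). This fails already for $\dprg = \seq{\{\atma.\}, \{\lpnot \atma \lpif \lpnot \atmb.\}}$ at $\twib = \{\atmb\}$: there is no conflict at $\twib$ (the body $\lpnot\atmb$ is not satisfied), yet the conflict at $\twib' = \emptyset$ contributes the stray triple $\tpl{\emptyset, \{\atmb\}}$ to the augmented set of $(\atma.)$ via $\ertwo$, while $\tpl{\{\atmb\}, \{\atmb\}}$ is not added. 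The slice is thus $\{\tpl{\emptyset, \{\atmb\}}\}$, which is neither $T_\twib$ nor $\modre{\atma.} \cap T_\twib = \emptyset$. Your ``iff'' in the minimality step is accordingly wrong: $\tpl{\emptyset, \{\atmb\}}$ is excluded by the non-rejected rule $(\atma.)$ on the \JU{} side but is \emph{not} excluded from $\modre{\biguopr_1 \dprg}$.

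The paper avoids slices entirely. The forcing-preservation observation you correctly single out is Lemma~\ref{lemma:aug delta value}: if $\aug{\stria, \sstri}^\twib(\atm)$ is defined then so is $\stria^\twib(\atm)$, with the same value. But the paper uses it not to control slices; rather, it derives the algebraic fact of Proposition~\ref{prop:exception independence rule:delta}: for $\er \in \{\ertwo, \erthree\}$ and non-disjunctive $\stria$, one has $\aug{\aug{\stria, \sstria}, \sstrib} = \aug{\stria, \sstria \cup \sstrib}$. This collapses the iteration into a flat form (Proposition~\ref{prop:exception independence program:delta}) in which each rule from $\prg_\lia$ is augmented once by the union of all later programs. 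With the flat form in hand, Propositions~\ref{prop:if delta stable then ju} and~\ref{prop:if ju then delta stable} establish the two inclusions by a direct argument that needs only \emph{one} direction of the implication at the minimality step, not your ``iff''. Your treatment of $\erfour$/$\erfive$ is also off: a rule repeated in a later program is \emph{not} rejected in the \JU{} or \AS{} sense (rejection requires complementary heads); the paper instead proves separately, in Proposition~\ref{prop:ertwo vs erfour mod sm}, that $\erfour$- and $\erfive$-based operators yield the same stable models on DLPs as their $\ertwo$- and $\erthree$-based counterparts.
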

\begin{proof}
	See \ref{app:exception-based rule update}, page
	\pageref{proof:thm:er vs ju and as}.
\end{proof}

This means that up to the marginal case of local cycles, \tertwo{} and
\terfour{} can be seen as semantic characterisations of the \JU-semantics and,
similarly, \terthree{} and \terfive{} characterise the \AS-semantics. The
exceptional cases when \emph{less} stable models are found than using the
traditional semantics occur when the DLP contains tautological rules or rules
with the negation of their head in the body. The former can be seen as a
strict improvement as it introduces immunity to tautologies. The latter is an
unavoidable consequence of the fact that exception-based operators only
manipulate \RE-models which are unable to distinguish between a constraint
$(\lpif \atm.)$ and a rule $(\lpnot \atm \lpif \atm.)$ The traditional
approaches do distinguish between them in that the former rule can never be
rejected while the latter can.

This tight relationship also sheds new light on the problem of \emph{state
condensing}, discussed in more detail in Section~\ref{sec:state condensing}.

\subsection{Semantic Properties}

We proceed by examining further properties of rule update operators -- of
those based on simple exception functions in general, and of the \terone-,
\tertwo-, \terthree-, \terfour- and \terfive-based ones in particular. The
properties we consider in this section are \emph{semantic} in that they put
conditions on the \emph{models} of a result of an update and do not need to
refer to the syntax of the original and updating programs. Our results are
summarised in Table~\ref{tbl:semantic properties} and in the following we
explain and discuss them. The interested reader may find all the proofs in
\ref{app:rules:semantic properties} starting on page
\pageref{app:rules:semantic properties}. 

\subsubsection*{Traditional Semantic Properties of Rule Updates}

The properties in the upper part of Table~\ref{tbl:semantic properties} were
introduced in \citep{Eiter2002, Alferes2005, Delgrande2007}. We formalise all
of them for rule bases $\rba$, $\rbb$, $\rbu$, $\rbv$ and a rule update
operator $\uopr$ and each can actually be seen as a \emph{meta-property} that
is
\noindent instantiated once we adopt a particular notion of program
equivalence. Therefore, each row of Table~\ref{tbl:semantic properties} has
eight cells that stand for particular instantiations of the property. This
provides a more complete picture of how simple rule update operators,
properties and program equivalence are interrelated.

Unless stated otherwise (in a footnote), each tick (\tick) signifies that the
property in question holds for \emph{all} simple rule update operators. A
missing tick signifies that the property does not generally hold for simple
rule update operators, and in particular there are \terone-, \tertwo-,
\terthree-, \terfour- and \terfive-based operators for which it is violated. A
tick is smaller if it is a direct consequence of a preceding larger tick in
the same row and of the interrelations between the program equivalence and
entailment relations (c.f.\ Figure~\ref{fig:equivalences}).

At a first glance, it is obvious that none of the semantic properties is
satisfied under \SU-equivalence. This is because the conditions placed on a
rule update operator by an exception function are at the semantic level, while
\SU-equivalence effectively compares programs syntactically. For instance, an
exception-based operator $\uopr$, for any exception function \te{}, may behave
as follows: $\emptyset \uopr \set{ \lpnot \atma \lpif \atma. } = \set{\lpif
\atma.}$. This is because the rules before and after update are
\RE-equivalent. However, due to the fact that the programs $\set{\lpnot \atma
\lpif \atma.}$ and $\set{\lpif \atma.}$ are considered different under
\SU-equivalence, $\uopr$ cannot satisfy \pup{Initialisation} w.r.t.\
\SU-equivalence. The situation with all other properties is analogous.

As for the other notions of equivalence, we separately discuss each group of
related properties:

\begin{description}
	\setlength{\itemsep}{0pt}
	\item[\pup{Initialisation} and \pup{Disjointness}:]
		These properties are satisfied ``by construction'', regardless of which
		simple rule update operator we consider and of which notion of equivalence
		we pick. 	
\end{description}

\begin{landscape}
	\begin{table}
		\newcommand{\tickn}{\tick}
		\newcommand{\tickt}{\tiny\tick}
		\newcommand{\tnotelabela}{\textsf{a}}
		\newcommand{\tnotelabelb}{\textsf{b}}
		\newcommand{\tnotelabelc}{\textsf{c}}
		\newcommand{\tnotelabeld}{\textsf{d}}
		\newcommand{\tnotelabele}{\textsf{e}}
		\newcommand{\tnotelabelf}{\textsf{$\dagger$}}
		\newcommand{\tnotelabelg}{\textsf{$\star$}}
		\newcommand{\lbabcde}{%
				\tnote{\tnotelabela\tnotelabelb\tnotelabelc\tnotelabeld\tnotelabele}
		}
		\newcommand{\lbbcde}{\tnote{\tnotelabelb\tnotelabelc\tnotelabeld\tnotelabele}}
		\newcommand{\lbde}{\tnote{\tnotelabeld\tnotelabele}}
		\newcommand{\lbf}{\tnote{\tnotelabelf}}
		\newcommand{\lbg}{\tnote{\tnotelabelg}}
		\centering
		\begin{threeparttable}
			\caption{Semantic properties of simple rule update operators}
			\label{tbl:semantic properties}
			\small
			\begin{tabular}{>{\centering}m{8.5em}m{30em}cccccccc}
				\toprule
					&& \multicolumn{8}{c}{\normalsize Type of $\equiv$, $\ent$ and $\mod{\cdot}$}
					\\ \cmidrule{3-10}
				\centering \normalsize Property
					& \centering \normalsize Formalisation
					& \SU{}
					& \RR{}
					& \SR{}
					& \RMR{}
					& \SMR{}
					& \RE{}
					& \SE{}
					& \SM{} \\ \midrule[\heavyrulewidth]
				\labpup{Initialisation}
					& $\emptyset \uopr \rbu \equiv \rbu$.
					&
					& \tickn
					& \tickt
					& \tickt
					& \tickt
					& \tickt
					& \tickt
					& \tickt
					\\ \midrule
				\labpup{Disjointness}
					& If $\rba$, $\rbb$ are over disjoint alphabets, 
						then $(\rba \cup \rbb) \uopr \rbu
							\equiv (\rba \uopr \rbu) \cup (\rbb \uopr \rbu)$.
					&
					& \tickn
					& \tickt
					& \tickt
					& \tickt
					& \tickt
					& \tickt
					& \tickt
					\\ \midrule
				\labpup{Non-interference}\lbf
					& If $\rbu$, $\rbv$ are over disjoint alphabets, 
						then $(\rba \uopr \rbu) \uopr \rbv
							\equiv (\rba \uopr \rbv) \uopr \rbu$.
					&
					& \tickn \lbabcde
					& \tickt \lbabcde
					& \tickt \lbabcde
					& \tickt \lbabcde
					& \tickt \lbabcde
					& \tickt \lbabcde
					& \tickt \lbabcde
					\\ \midrule
				\labpup{Tautology}
					& If $\rbu$ is tautological, then $\rba \uopr \rbu \equiv \rba$.
					&
					& \tickn \lbg
					& \tickt \lbg
					& \tickt \lbg
					& \tickt \lbg
					& \tickt \lbg
					& \tickt \lbg
					& \tickt \lbg
					\\ \midrule
				\labpup{Immunity to Tautologies}
					& If $\rbb$ and $\rbv$ are tautological, 
						then $(\rba \cup \rbb) \uopr (\rbu \cup \rbv) \equiv \rba \uopr \rbu$.
					&
					& \tickn \lbg
					& \tickt \lbg
					& \tickt \lbg
					& \tickt \lbg
					& \tickt \lbg
					& \tickt \lbg
					& \tickt \lbg
					\\ \midrule
				\labpup{Idempotence}
					& $\rba \uopr \rba \equiv \rba$.
					&
					& \tickn \lbde
					& \tickt \lbde
					& \tickn
					& \tickt
					& \tickt
					& \tickt
					& \tickt
					\\ \midrule
				\labpup{Absorption}
					& $(\rba \uopr \rbu) \uopr \rbu \equiv \rba \uopr \rbu$.
					&
					& \tickn \lbde
					& \tickt \lbde
					& \tickn \lbbcde
					& \tickt \lbbcde
					& \tickt \lbbcde
					& \tickt \lbbcde
					& \tickt \lbbcde
					\\ \midrule
				\labpup{Augmentation}\lbf
					& If $\rbu \subseteq \rbv$, then
						$(\rba \uopr \rbu) \uopr \rbv \equiv \rba \uopr \rbv$.
					&
					& \tickn \lbde
					& \tickt \lbde
					& \tickn \lbbcde
					& \tickt \lbbcde
					& \tickt \lbbcde
					& \tickt \lbbcde
					& \tickt \lbbcde
					\\ \midrule
				\labpup{Associativity}
					& $\rba \uopr (\rbu \uopr \rbv) \equiv (\rba \uopr \rbu) \uopr \rbv$.
					&
					&
					&
					&
					&
					&
					&
					&
					\\ \midrule[\heavyrulewidth]
				\labpu{1}
					& $\rba \uopr \rbu \ent \rbu$
					&
					& \tickn
					& \tickt
					& \tickt
					& \tickt
					& \tickt
					& \tickt
					& \na
					\\ \midrule
				\labputwotop
					& $\rba \uopr \emptyset \equiv \rba$
					&
					& \tickn
					& \tickt
					& \tickt
					& \tickt
					& \tickt
					& \tickt
					& \tickt
					\\ \midrule
				\labpu{2.1}
					& $\rba \cup \rbu \ent \rba \uopr \rbu$
					& 
					&
					&
					& \tickn
					& \tickt
					& \tickt
					& \tickt
					& \na
					\\ \midrule
				\labpu{2.2}
					& $(\rba \cup \rbu) \uopr \rbu \ent \rba$
					&
					&
					&
					&
					&
					&
					&
					& \na
					\\ \midrule
				\labpu{3}
					& If $\mod{\rba} \neq \emptyset$ and $\mod{\rbu} \neq \emptyset$, then
						$\mod{\rba \uopr \rbu} \neq \emptyset$
					& \na
					& \na
					& \na
					& \na
					& \na
					&
					&
					&
					\\ \midrule
				\labpu{4}
					& If $\rba \equiv \rbb$ and $\rbu \equiv \rbv$, then
						$\rba \uopr \rbu \equiv \rbb \uopr \rbv$
					&
					& \tickn \lbg
					&
					&
					&
					&
					&
					&
					\\ \midrule
				\labpu{5}
					& $(\rba \uopr \rbu) \cup \rbv \ent \rba \uopr (\rbu \cup \rbv)$
					&
					&
					&
					& \tickn
					& \tickt
					& \tickt
					& \tickt
					& \na
					\\ \midrule
				\labpu{6}
					& If $\rba \uopr \rbu \ent \rbv$ and $\rba \uopr \rbv \ent \rbu$, then
						$\rba \uopr \rbu \equiv \rba \uopr \rbv$
					&
					&
					&
					&
					&
					&
					&
					& \na
					\\ \bottomrule
			\end{tabular}
			\begin{tablenotes}
				\item[\tnotelabela]
					Holds if $\uopr$ is \terone-based.
				\item[\tnotelabelb]
					Holds if $\uopr$ is \tertwo-based.
				\item[\tnotelabelc]
					Holds if $\uopr$ is \terthree-based.
				\item[\tnotelabeld]
					Holds if $\uopr$ is \terfour-based.
				\item[\tnotelabele]
					Holds if $\uopr$ is \terfive-based.
				\item[\tnotelabelg]
					Holds if $\uopr$ is based on a local exception function \ter{} such
					that $\er(\stri, \tris) \subseteq \stri$ for all $\stri \subseteq
					\tris$. This is satisfied by \terone{}, \tertwo{}, \terthree{},
					\terfour{} and \terfive{}.
				\item[\tnotelabelf]
					Results on this line hold only if $\rba, \rbu, \rbv$ are
					non-disjunctive programs.
			\end{tablenotes}
		\end{threeparttable}
	\end{table}
\end{landscape}

\begin{description}
	\setlength{\itemsep}{0pt}
	\item[\pup{Tautology} and \pup{Immunity to Tautologies}:] These are
		naturally satisfied by all simple update operators that do not introduce
		exceptions merely due to the presence of a tautological rule in the
		updating program. In particular, both properties are satisfied by
		\terone-, \tertwo-, \terthree-, \terfour- and \terfive-based operators.
		Note that these properties are generally acknowledged as very desirable
		although most existing rule update semantics fail to comply with them
		\cite{Leite1997,Alferes2000,Eiter2002,Sakama2003,Zhang2006,Osorio2007,Delgrande2007}.

	\item[\pup{Non-interference}:]
		This property is satisfied by \terone-, \tertwo-, \terthree-, \terfour-
		and \terfive-based rule update operators. However, this is only the case
		when non-disjunctive programs are considered, pointing towards one of the
		important open problems faced by state-of-the-art research on rule
		updates: examples, desirable properties and methods for updating
		\emph{disjunctive} programs. Insights in this direction should shed light
		on whether \pup{Non-interference} is desirable in the disjunctive case.

	\item[\pup{Idempotence}, \pup{Absorption} and \pup{Augmentation}:] These are
		the only properties that reveal differences amongst the different
		exception functions. They are not satisfied by \terone-, \tertwo- and
		\terthree-based operators under \SR- and \RR-equivalence because a program
		updated by its subset may contain weakened versions of the original rules.
		Since such rules are not part of the original program, the programs before
		and after update are considered to be different under \SR- and
		\RR-equivalence. This problem is dodged in \terfour{} and \terfive{} by
		completely obliterating original rules that appear in the update.

		This also indicates that \SR- and \RR-equivalence are slightly \emph{too
		strong} for characterising updates because programs such as $\set{\atma.}$
		and $\set{\atma., \atma \lpif \atmb.}$ are not considered equivalent even
		though we expect the same behaviour from them when they are updated. We
		speculated in Section~\ref{sec:robust equivalence models} that it might be
		possible to address this issue by adopting the weaker \SMR- or
		\RMR-equivalence. However, it turns out that due to the monotonic nature
		of \SE- and \RE-models, these equivalence relations are \emph{too weak}:
		programs such as $\set{\atma.}$ and $\set{\atma., \atmb \lpif \lpnot
		\atma.}$ are \SMR- and \RMR-equivalent but, when updated by $\set{\lpnot
		\atma.}$, any rule update operator respecting fact update and causal
		rejection must provide the stable model $\emptyset$ in the former case,
		which violates causal rejection in the latter case.
		
		Moreover, \terone-based operators fail to satisfy \pup{Absorption} and
		\pup{Augmentation}. Along with Example~\ref{ex:terone iterated}, this
		indicates that \terone{} does not correctly handle iterated updates.

	\item[\pup{Associativity}:]
		This is one of the few properties that is not satisfied by any of the
		defined classes of operators. It is closely related to the question of
		whether \emph{rejected rules are allowed to reject}. \pup{Associativity}
		can be seen as postulating that an update operator must behave the same
		way regardless of whether rejected rules are allowed to reject or not. As
		witnessed by the \JU- and \AS-semantics (c.f.\ equation~\ref{eq:ru:problem
		with as}), rule update semantics tend to generate unwanted models when
		rejected rules are not allowed to reject.
\end{description}

\subsubsection*{Belief Update Postulates Reformulated}

The lower part of Table~\ref{tbl:semantic properties} contains a reformulation
of the belief update postulates \bu{1} -- \bu{6} for rule bases. We omit the
last two postulates as they require program disjunction and it is not clear
how to obtain it appropriately. Note also that \bu{7} has been heavily
criticised in the literature as being mainly a means to achieve formal results
instead of an intuitive principle \citep{Herzig1999} and though \bu{8}
reflects the basic intuition behind belief update -- that of updating each
model independently of the others -- such a point of view is hardly
transferable to knowledge represented using rules because a single model, be
it a classical, stable, \SE- or \RE-model, fails to encode the
interdependencies between literals expressed in rules that are necessary for
properties such as support.

Since we did not define \SM-entailment, postulates that refer to it have the
\SM{} column marked as ``\na''. Now we turn to the individual postulates.

\begin{description}
	\setlength{\itemsep}{0pt}
	\item[\pu{1} and \putwotop:]
		Similarly as \pup{Initialisation} and \pup{Disjointness}, these postulates
		are satisfied by any simple rule update operator and under all notions of
		equivalence.

	\item[\pu{2.1} and \pu{5}:]
		Postulate \pu{2.1} is not satisfied under \SR- and \RR-equivalence for the same
		reasons, described above, that prevent \terone- and \tertwo-based
		operators from satisfying \pup{Idempotence}. The situation with \pu{5} is
		the same since it implies \pu{2.1} in the presence of \putwotop{}
		\cite{Herzig1999}.

	\item[\pu{2.2} and \pu{6}:]
		Postulate \pu{2.2} requires that $\set{\atma., \lpnot \atma.} \uopr
		\set{\lpnot \atma.} \ent \atma$ which, in the presence of \pu{1}, amounts
		to postulating that one can never recover from an inconsistent state,
		contrary to most existing rule update semantics which do allow for
		recovery from such states. The case of \pu{6} is the same since it implies
		\pu{2.2} in the presence of \pu{1} and \putwotop{} \cite{Herzig1999}.

	\item[\pu{3}:]
		This postulate relies on a function that returns the set of models of a
		rule base. Thus, $\modsm{\cdot}$, $\modse{\cdot}$ and $\modre{\cdot}$ can
		be used for this purpose and the other columns in the corresponding row in
		Table~\ref{tbl:semantic properties} make little sense, so they are marked
		as ``\na''. Furthermore, this postulate is not satisfied by any of the
		defined classes of exception-based operators. It is also one of the
		principles that most existing approaches to rule update chronically fail
		to satisfy. In order to satisfy it, a context-aware exception function
		would have to be used because conflicts may arise in a set of more than
		two rules that are pairwise consistent. For instance, when updating
		$\set{\atma.}$ by $\set{\atmb \lpif \atma., \lpnot \atmb \lpif \atma.}$,
		one would somehow need to detect and resolve the joint conflict between
		these three rules. This is however impossible with a simple exception
		function because it only considers conflicts between pairs of rules, one
		from the original program and one from the update.

	\item[\pu{4}:]
		\label{pstl:pua}
		This postulate requires update operators to be syntax-independent. In this
		context it is useful to consider the following weaker principles:
		\begin{align}
			&\labpu{4.1}
				\quad
				\text{%
					If $\prga \eq \prgb$, then $\prga \uopb \prgu \eq \prgb \uopb \prgu$.
				}
			&&&&&& \labpu{4.2}
				\quad
				\text{%
					If $\prgu \eq \prgv$, then $\prga \uopb \prgu \eq \prga \uopb \prgv$.
				}
				\notag
		\end{align}
		The failure to satisfy \pua{4.1} under \SM-, \SE-, \RE-, \SMR- and
		\RMR-equivalence is inevitable if properties such as support, fact update
		and causal rejection are to be respected (c.f.\ \cite{Slota2013a} and the
		above discussion of \pup{Idempotence}, \pup{Absorption} and
		\pup{Augmentation}). Furthermore, \pua{4.1} is also violated under
		\SR-equivalence due to the fact that a constraint such as $(\lpif \atma.)$
		cannot be weakened by the introduced exception functions while the fact
		$(\lpnot \atma.)$ can, although it is strongly equivalent to the
		constraint.

		Similar arguments can be used to show that the principle \pua{4.2} is not
		satisfied under \SM-, \SE-, \RE, \SMR- and \RMR-equivalence. We only need
		to observe that any \terone-, \tertwo-, \terthree-, \terfour- or
		\terfive-based operator $\uope$ satisfies $\emptyset \uope \prga \eqRR
		\prga$, so each example used to show that \pua{4.1} is not desirable,
		involving two updates $\prga_1 \uope \prgu$ and $\prga_2 \uope \prgu$, can
		be reused to show the same for \pua{4.2} by instead considering the
		updates $\emptyset \uope \prga_1 \uope \prgu$ and $\emptyset \uope \prga_1
		\uope \prgu$. Additionally, \pua{4.2} is not satisfied under
		\SR-equivalence due to the fact that updates such as $\set{\lpnot \atma
		\lpif \atmb.}$, $\set{\lpnot \atmb \lpif \atma.}$ and $\set{\lpif \atma,
		\atmb.}$ have different effects on the program $\set{\atma., \atmb.}$.

		These observations indicate that \pua{4.1} and \pua{4.2},
		and thus also \pu{4}, are too strong under \SM-, \SE-, \RE-, \SMR- and
		\RMR-equivalence. Under \SR-equivalence, they are incompatible with
		operators that solve conflicts based on the heads of rules. On the other
		hand, due to the semantic underpinning of simple rule update operators,
		\pu{4} is satisfied by most of them, including all \terone-, \tertwo-,
		\terthree-, \terfour- and \terfive-based ones, under \RR-equivalence.
\end{description}

\subsubsection*{Summary}

The results in Table~\ref{tbl:semantic properties} indicate that simple
rule update operators satisfy a number of essential semantic properties by
design. This is especially important for properties such as \pup{Tautology},
\pup{Immunity to Tautologies} and \putwotop, generally acknowledged as very
desirable although most existing rule update semantics fail to comply with
them
\cite{Leite1997,Alferes2000,Eiter2002,Sakama2003,Zhang2006,Osorio2007,Delgrande2007}.

Some of the semantic properties, namely \pup{Idempotence}, \pup{Absorption}
and \pup{Augmentation}, outline the main differences between the
particular local exception functions that we have defined. Although \terone is
the function with the simplest definition, it does not satisfy any of these
properties, indicating that it cannot deal well with iterated updates.

Our results also show that a number of KM postulates are not satisfied by the
defined rule update operators. Coincidentally, with the exception of \bu{3},
the violated postulates are the ones that are deemed controversial in the
belief update community
\cite{Brewka1993,Boutilier1995,Doherty1998,Herzig1999}.

Perhaps most importantly, Table~\ref{tbl:semantic properties} introduces a
novel way of viewing the various semantic properties, where each property is
parametrised by a particular notion of program equivalence and/or entailment.
This richer view should prove useful in further research on rule update
operators and their semantic properties.

\section{State Condensing}

\label{sec:state condensing}

Our results about semantic rule update operators bring along a new point of
view on traditional approaches to rule updates. Particularly interesting is
the fact that a semantic rule update operator, following the belief update
tradition, is a binary function on the set of programs over the same alphabet.
This guarantees that after an update, the original program is \emph{replaced}
by a new program which continues to be used in its place, and the process can
be easily iterated if the need arises. Even though this perspective is very
natural, even fundamental, it has been largely neglected in the ample body of
work on rule updates. Existing semantics typically proceed by characterising
the models of the update and, at most, either describe a set of answer-set
programs that could represent the update, instead of only one, or produce an
answer-set program written in a language extended with a considerable amount
of new atoms, making it difficult to understand and to further update.

For instance, semantics based on causal rejection
\cite{Leite1997,Alferes2000,Eiter2002,Alferes2005,Osorio2007}, such as the
\JU- and \AS-semantics, assign models to sequences of non-disjunctive programs
and sometimes admit non-minimal models which no non-disjunctive program can
capture. Consequently, they must resort to the introduction of additional
meta-level atoms in order to construct a single program whose stable models
correspond to the models assigned to the sequence, leading to difficulties
with iterating the update process. Producing a program from a more expressive
class of logic programs would result in the inability to perform another
update as well because the update semantics are not defined for such programs.

A different approach in \cite{Sakama2003} deals with program updates by
borrowing ideas from literature on \emph{belief revision} and utilising an
abductive framework to accomplish such updates. In this case, multiple
alternative programs can be the result of an update and no mechanism is
provided to choose among them.

A somewhat similar situation occurs with the approach of \cite{Zhang2006}
where intricate syntactic transformations are combined with a semantics for
prioritised logic programs that ultimately leads to a set of logic programs.
Since all of these programs together represent the result of the update, it is
once again unclear how to construct a single program that combines all of
them.

The rule update semantics suggested in \cite{Delgrande2007} are also based on
syntactic transformations into a logic program with preferences among rules,
but in contrast with \cite{Zhang2006}, the semantics of such programs is
defined by directly specifying their preferred stable models and not by
translation into an ordinary program (or a set thereof). Thus, although an
actual syntactic object is constructed that represents the update, it needs to
be interpreted in a richer formalism to take into account preferences among
rules.

Finally, frameworks that specify program updates by manipulating dependencies
on default assumptions induced by rules
\cite{Sefranek2006,Sefranek2011,Krumpelmann2012} are mainly concerned with
identifying the effects of irrelevant updates and other theoretical properties
of the stable models assigned to a pair or sequence of programs. They do not
consider representing the result of an update by a single program.

In this section, we unravel the true potential of specifying updates as binary
operators on some class of programs. Despite the fact that existing program
update semantics do not seem compatible with this point of view, we show that
at least the foundational \JU- and \AS-semantics \emph{can} be viewed in this
manner.

Theorem~\ref{thm:er vs ju and as} already provides part of the solution as it
shows that any DLP can be \emph{condensed} into a \emph{single rule base} over
the same alphabet that behaves just as the original DLP when further updates
are performed on it. Formally, this can be stated as follows:

\begin{corollary}
	[State Condensing into a Rule Base]
	\label{cor:state condensing}
	There exist rule update operators $\uopr_1$ and $\uopr_2$ such that for
	every DLP $\dprg = \seq{\prg_\lia}_{\lia < \lng}$ without local cycles and
	all $\lib < \lng$ there exist rule bases $\rb_1$ and $\rb_2$ with
	\begin{align*}
		\modju{\dprg}
			&=
			\modsm{
				\textstyle \biguopr_1
				\seq{\rb_1, \prg_{\lib + 1}, \dotsc, \prg_{\lng - 1}}
			}
		&& \text{and}
		& \modas{\dprg}
			=
			\modsm{
				\textstyle \biguopr_2
				\seq{\rb_2, \prg_{\lib + 1}, \dotsc, \prg_{\lng - 1}}
			}
		\enspace.
	\end{align*}
\end{corollary}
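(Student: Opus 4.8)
The plan is to produce $\rb_1$ and $\rb_2$ by \emph{condensing the first $\lib+1$ components of $\dprg$} with two fixed simple operators, and then to reduce the statement to Theorem~\ref{thm:er vs ju and as} via a short decomposition argument.

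First I would fix, once and for all, an $\ertwo$-based rule update operator $\uopr_1$ and an $\erthree$-based rule update operator $\uopr_2$; such operators exist, since every set of \RE-models prescribed in \eqref{eq:characterisation after update} is a subset of $\tris$ and hence, by Proposition~\ref{prop:re:expressibility}, equals $\modre{\gprg}$ for some program $\gprg$, so fixing such syntactic representatives determines concrete $\uopr_1$ and $\uopr_2$. Given a DLP $\dprg = \seq{\prg_\lia}_{\lia < \lng}$ without local cycles --- i.e.\ such that $\all{\dprg}$ contains no local cycle --- and any $\lib < \lng$, I would put $\rb_1 = \biguopr_1 \seq{\prg_0, \dotsc, \prg_\lib}$ and $\rb_2 = \biguopr_2 \seq{\prg_0, \dotsc, \prg_\lib}$. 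With these choices it suffices to prove the two \RR-equivalences $\biguopr_1 \seq{\rb_1, \prg_{\lib+1}, \dotsc, \prg_{\lng-1}} \eqRR \biguopr_1 \dprg$ and $\biguopr_2 \seq{\rb_2, \prg_{\lib+1}, \dotsc, \prg_{\lng-1}} \eqRR \biguopr_2 \dprg$. Indeed, \RR-equivalence implies \SM-equivalence (Proposition~\ref{prop:eq and ent comparison}, cf.\ Figure~\ref{fig:equivalences}), and since $\all{\dprg}$ contains no local cycle the two inclusions of Theorem~\ref{thm:er vs ju and as} combine to give $\modsm{\biguopr_1 \dprg} = \modju{\dprg}$ and $\modsm{\biguopr_2 \dprg} = \modas{\dprg}$; chaining these with the equivalences above yields exactly the asserted identities.

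To establish the first \RR-equivalence (the second being identical), I would unfold the left-associative definition by which a rule update operator is generalised to sequences of rule bases (Definition~\ref{def:exception-based rule update operator}). Since $\rb_1 = \biguopr_1 \seq{\prg_0, \dotsc, \prg_\lib}$ is precisely the length-$(\lib+1)$ prefix of the iteration defining $\biguopr_1 \dprg$, we have $\biguopr_1 \dprg = (\dotsb(\rb_1 \uopr_1 \prg_{\lib+1})\dotsb) \uopr_1 \prg_{\lng-1}$, whereas $\biguopr_1 \seq{\rb_1, \prg_{\lib+1}, \dotsc, \prg_{\lng-1}} = (\dotsb((\emptyset \uopr_1 \rb_1) \uopr_1 \prg_{\lib+1})\dotsb) \uopr_1 \prg_{\lng-1}$ --- the latter carrying one extra leading application of $\emptyset \uopr_1{}$. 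By \pup{Initialisation} under \RR-equivalence, $\emptyset \uopr_1 \rb_1 \eqRR \rb_1$; and since $\uopr_1$ satisfies postulate \pu{4} under \RR-equivalence (Table~\ref{tbl:semantic properties}, which applies because $\ertwo(\stri, \tris) \subseteq \stri$ for every $\stri$), updating two \RR-equivalent rule bases by one and the same program yields \RR-equivalent results. A straightforward induction on $\lng - \lib - 1$, peeling off $\prg_{\lib+1}, \dotsc, \prg_{\lng-1}$ one at a time, then propagates $\emptyset \uopr_1 \rb_1 \eqRR \rb_1$ to the desired \RR-equivalence.

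I do not expect a serious obstacle. The one step requiring care is the bookkeeping in the decomposition: recognising the spurious leading $\emptyset \uopr_1{}$, absorbing it via \pup{Initialisation}, and observing that carrying the resulting \RR-equivalence through the remaining updates is exactly what forces postulate \pu{4} to be used \emph{at the level of \RR-equivalence} (no weaker notion of program equivalence from Figure~\ref{fig:equivalences} survives the iteration). Consequently the only external input that genuinely matters is that $\uopr_1$ and $\uopr_2$ satisfy \pu{4} under \RR-equivalence, which, together with its proviso on the underlying local exception function, is recorded in Table~\ref{tbl:semantic properties}.
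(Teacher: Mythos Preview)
Your proposal is correct and follows the paper's approach: the paper's proof is a one-liner that fixes $\uopr_1$ (resp.\ $\uopr_2$) to be $\ertwo$-based (resp.\ $\erthree$-based), sets $\rb = \biguopr \seq{\prg_\lia}_{\lia \leq \lib}$, and appeals to Theorem~\ref{thm:er vs ju and as}. You supply exactly the bookkeeping the paper elides --- absorbing the leading $\emptyset \uopr{}$ via \pup{Initialisation} and propagating the resulting \RR-equivalence through the remaining updates via \pu{4} --- so your argument is a faithful expansion of the paper's, not a different route.
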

\begin{proof}
	See \ref{app:exception-based rule update}, page \pageref{proof:cor:state
	condensing}.
\end{proof}

However, this result is not completely satisfactory since it relies on rule
bases instead of standard classes of logic programs, does not provide explicit
definitions of operators $\uopr_1$ and $\uopr_2$, and does not apply to DLPs
with local cycles.

In the following, we address all these issues by defining specific rule update
operators that faithfully characterise the \JU- and \AS-semantics -- when
applied to an arbitrary DLP $\dprg$, they produce a program whose stable
models coincide with \JU- and \AS-models of $\dprg$, respectively. In this
way, the new operators provide a way to condense any DLP into a single program
that includes all relevant information about the DLP, not only to identify its
stable models, but also for the purpose of performing further updates.
Thereby, we solve the long-standing problem known as \emph{state condensing}
from the literature on causal rejection semantics for rule updates.

To achieve this, our operators must deal with a more general class of programs
than non-disjunctive ones. First we define simple and elegant operators on
subclasses of \emph{nested logic programs} \cite{Turner2003} with the required
property. Subsequently, we show that the full expressivity of nested programs
is not necessary for this purpose by defining an additional pair of operators
that produce \emph{disjunctive logic programs} and still maintain the same
properties w.r.t.\ \JU- and \AS-semantics.

Throughout this section we assume that all programs are finite. We also assume
that all non-disjunctive rules $\rl$, originating in some DLP, have exactly
one literal $\lit$ in their heads. This latter assumption does not remove any
significant expressivity from DLPs under the \JU- and \AS-semantics since each
constraint $(\lpif \brl.)$ can be equivalently encoded as the rule $(\atm_\bot
\lpif \lpnot \atm_\bot, \brl.)$ where $\atm_\bot$ is a fresh atom designated
for this purpose.

\subsection{Programs with Nested Expressions}

We adopt the syntax and stable model semantics of logic programs with nested
expressions \cite{Turner2003}. A \emph{nested rule} is an expression $\rl$ of
the form $(\hrl \lpif \brl.)$ where $\hrl$ and $\brl$, called the \emph{head}
and \emph{body} of $\rl$, are expressions built inductively from the
propositional atoms in $\atms$ and the 0-place connectives $\bot$ and $\top$
using the unary connective $\lpnot$ (default negation) and the binary
connectives $\land$ and $\lor$.\footnote{%
	Within this section, we refer to such expressions simply as \emph{formulas}
	even though they are different from the propositional formulas introduced in
	Section~\ref{sec:background}.
}
A nested rule of the form $(\hrl \lpif \top.)$ is usually identified with the
formula $\hrl$. A \emph{nested program} is a finite set of nested rules. Each
disjunctive program $\prg$ and DLP $\dprg = \seq{\prg_\lia}_{\lia < \lng}$, as
defined in Section~\ref{sec:background}, is translated, respectively, to a
nested program and a sequence thereof as follows:
\begin{align*}
	\tones{\prg}
		&=
		\Set{ \biglor \hrl \lpif \bigland \brl. | \rl \in \prg}
	&& \text{and}
	& \tones{\dprg}
		&=
		\seq{\tones{\prg_\lia}}_{\lia < \lng}
		\enspace.
\end{align*}

Satisfaction of a formula $\frma$ in an interpretation $\twib$, denoted by
$\twib \ent \frma$, is defined in the usual way. Furthermore, $\twib$
satisfies a nested rule $\rl$, denoted by $\twib \ent \rl$, if $\twib \ent
\brl$ implies $\twib \ent \hrl$, and $\twib$ satisfies a nested program
$\prg$, denoted by $\twib \ent \prg$, if $\twib \ent \rl$ for all $\rl \in
\prg$. The \emph{reduct} of a formula $\frm$ relative to $\twib$, denoted by
$\frm^\twib$, is obtained by replacing, in $\frm$, every maximal occurrence of
a formula of the form $\lpnot \frmb$ with $\bot$ if $\twib \ent \frmb$ and
with $\top$ otherwise. The reducts of a nested rule $\rl$ and of a nested
program $\prg$ are, respectively, $\rl^\twib = \br{\hrl^\twib \lpif
\brl^\twib.}$ and $\prg^\twib = \set{\rl^\twib | \rl \in \prg}$. Finally, an
interpretation $\twib$ is a \emph{stable model} of a nested program $\prg$ if
it is subset-minimal among the interpretations that satisfy $\prg^\twib$.

\subsection{Condensing into a Nested Program}

\label{sec:state condensing:nested}

Now we can proceed with the definition of condensing operators $\uoprju$ and
$\uopras$ for the \JU- and \AS-semantics. The property that these operators
must fulfill is that for any DLP $\dprg$, the stable models of the nested
program resulting from applying the operators to $\dprg$ are exactly the \JU-
and \AS-models of $\dprg$, respectively.

Both $\uoprju$ and $\uopras$ are defined by utilising the concept of an
\emph{activation formula} which captures the condition under which a literal
$\lit$ is derived by some rule in a nested program $\prgu$. Formally, the
\emph{activation formula for $\lit$ in $\prgu$} is defined as follows:
\[
	\acond{\prgu}{\lit}
	=
	\biglor \Set{
		\bigland \brl
		|
		\rl \in \prgu \land \hrl = \lit
	}
	\enspace.\footnote{%
		Note that $\biglor \emptyset$ is simply $\bot$.
	}
\]
The operator $\uoprju$ is based on the following simple idea: When updating a
program $\prga$ by a program $\prgu$, each rule $\rl$ from $\expv{\prga}$ with
literal $\lit$ in its head must be disabled when some rule from $\expv{\prgu}$
for the literal complementary to $\lit$ is activated. This can be achieved by
augmenting the body of $\rl$ with the additional condition $\lpnot
\Acond{\expv{\prgu}}{\lcmp{\lit}}$. Formally:

\begin{definition}
	[Condensing Operator $\uoprju$]
	A \emph{\JU-rule} is a nested rule with a single literal in its head and a
	\emph{\JU-program} is a nested program that consists of \JU-rules.

	The binary operator $\uoprju$ on the set of all \JU-programs is defined as
	follows: Given two \JU-programs $\prga$ and $\prgu$, the \JU-program $\prga
	\uoprju \prgu$ consists of the following rules:
	\begin{textenum}[1.]
		\item for all $\rla \in \expv{\prga}$, the nested rule
			$
				\Br{
					\hrla
					\lpif
					\brla \land \lpnot \Acond{\expv{\prgu}}{\lcmp{\hrla}}.
				}
			$;

		\item all nested rules from $\expv{\prgu}$.
	\end{textenum}
\end{definition}

In case of the \AS-semantics, additional choice rules are needed.
Particularly, for every nested rule $\rl \in \tones{\all{\dprg}}$ whose head
is an atom, the update operator needs to include the nested rule $(\hrl \lor
\lcmp{\hrl} \lpif \brl.)$ in the resulting program. Intuitively, these
additional rules account for the differences in the definition of
$\rejju{\dprg, \twib}$ and $\rejas{\dprg, \twib}$ by making sure that no rule
is ever completely eliminated, but stays partially in effect by generating
alternative worlds for the atom in its head: one where it is satisfied and one
where it is not. Essentially, this means that whenever the original body of
the rule is satisfied, its head atom cannot be assumed false by default and is
interpreted ``classically'' instead.

\begin{definition}
	[Condensing Operator $\uopras$]
	A \emph{\AS-rule} is a nested rule with either a single literal or a
	disjunction of two literals $\lit$ and $\lcmp{\lit}$ in its head, and a
	\emph{\AS-program} is a nested program that consists of \AS-rules.

	The binary operator $\uopras$ on the set of all \AS-programs is defined as
	follows: Given two \AS-programs $\prga$ and $\prgu$, the \AS-program $\prga
	\uopras \prgu$ consists of the following rules:
	\begin{textenum}[1.]
		\item for all $\rla \in \expv{\prga}$ with $\hrla \in \lits$, the nested rule
			$
				\Br{
					\hrla
					\lpif
					\brla \land \lpnot \Acond{\expv{\prgu}}{\lcmp{\hrla}}.
				}
			$;

		\item all $\rla \in \prga$ such that $\hrla$ is of the form $\lit \lor
			\lcmp{\lit}$;

		\item for all $\rla \in \prgu$ with $\hrla \in \atms$, the nested rule
			$
				\Br{
					\hrla \lor \lcmp{\hrla}
					\lpif
					\brla.
				}
			$;

		\item all nested rules from $\expv{\prgu}$.
	\end{textenum}
\end{definition}

The following example illustrates the relationship between the \JU- and
\AS-semantics and the condensing operators $\uoprju$ and $\uopras$, while the
subsequent theorem establishes it in general.

\begin{example}
	\label{ex:uoprju and uopras}
	Suppose that programs $\prga$, $\prgu$ and $\prgv$, written as nested
	programs, are as follows:
	\begin{align*}
		\tones{\prga}: \quad
			\atma &\lpif \top.
		&
		\tones{\prgu}: \quad
			\lpnot \atma &\lpif \lpnot \atmb \land \lpnot \atmc.
		&
		\tones{\prgv}: \quad
			\atma &\lpif \atmd.
		\\
			\atmb &\lpif \atma.
			& \lpnot \atma &\lpif \atmd.
			& \atmc &\lpif \atmc.
		\\
			\atmc& \lpif \top.
			& \lpnot \atmc &\lpif \top.
			& \atmd &\lpif \top.
	\end{align*}
	In addition to the rules from $\expv{\tones{\prgu}}$, the program
	$\biguoprju \seq{\tones{\prga}, \tones{\prgu}}$ contains the following three
	nested rules:
	\begin{align}
		&\atma \lpif \top \land \lpnot ( (\lpnot \atmb \land \lpnot \atmc) \lor \atmd ).
		&
		&\atmb \lpif \atma \land \lpnot \! \bot.
		&
		&\atmc \lpif \top \land \lpnot \! \top.
		\label{eq:prga updated by prgu}
		\intertext{%
			Also, the program $\biguopras \seq{\tones{\prga}, \tones{\prgu}}$
			consists of the rules in $\biguoprju \seq{\tones{\prga}, \tones{\prgu}}$
			and of three additional choice rules. Note that these rules are not part
			of the program $\tones{\prga} \uopras \tones{\prgu}$. They belong to the
			program $\emptyset \uopras \tones{\prga}$, and, consequently, also to
			$(\emptyset \uopras \tones{\prga}) \uopras \tones{\prgu} = \biguopras
			\seq{\tones{\prga}, \tones{\prgu}}$. They are as follows:
		}
		&\atma \lor \lpnot \atma \lpif \top.
		& 
		&\atmb \lor \lpnot \atmb \lpif \atma.
		& 
		&\atmc \lor \lpnot \atmc \lpif \top.
		\label{eq:prga extra uopras rules}
	\end{align}
	Furthermore, both $\biguoprju \seq{\tones{\prga}, \tones{\prgu}}$ and
	$\biguopras \seq{\tones{\prga}, \tones{\prgu}}$ have the stable models
	$\emptyset$ and $\set{\atma, \atmb}$ which coincide with the \JU- and
	\AS-models of the DLP $\seq{\prga, \prgu}$.

	The situation is more interesting after $\tones{\prgv}$ is added to the
	update sequence. In addition to the rules from $\expv{\tones{\prgv}}$, the
	program $\biguoprju \seq{\tones{\prga}, \tones{\prgu}, \tones{\prgv}}$
	contains the following rules:
	\begin{align*}
		\atma &\lpif
			\top
			\land \lpnot ( (\lpnot \atmb \land \lpnot \atmc) \lor \atmd )
			\land \lpnot \! \bot.
		&
		\atmb &\lpif
			\atma
			\land \lpnot \! \bot
			\land \lpnot \! \bot.
		&
		\atmc &\lpif
			\top \land
			\lpnot \! \top
			\land \lpnot \! \bot.
		\\
		\lpnot \atma &\lpif
			\lpnot \atmb \land
			\lpnot \atmc \land
			\lpnot \atmd.
		&
		\lpnot \atma &\lpif
			\atmd
			\land \lpnot \atmd.
		&
		\lpnot \atmc &\lpif
			\top
			\land \lpnot \atmc.
	\end{align*}
	Also, the program $\biguopras \seq{\tones{\prga}, \tones{\prgu},
	\tones{\prgv}}$ consists of the rules in $\biguoprju \seq{\tones{\prga},
	\tones{\prgu}, \tones{\prgv}}$, the three choice rules listed in
	\eqref{eq:prga extra uopras rules} and, additionally, the following three
	choice rules rules originating in $\tones{\prgv}$:
	\begin{align*}
		\atma \lor \lpnot \atma &\lpif \atmd.
		&
		\atmc \lor \lpnot \atmc &\lpif \atmc.
		&
		\atmd \lor \lpnot \atmd &\lpif \top.
	\end{align*}

	Note that some body conjuncts, such as $\lpnot \! \bot$, and even whole
	rules, for instance $(\lpnot \atma \lpif \atmd \land \lpnot \atmd.)$, can
	can be eliminated from the resulting programs without affecting their stable
	models or the stable models resulting from further updates. Slightly less
	elegant definitions of $\uoprju$ and $\uopras$ could be used to perform such
	simplifications automatically. For illustration, Figure~\ref{fig:ju and as}
	lists the simplified versions of the nested programs $\biguoprju
	\seq{\tones{\prga}, \tones{\prgu}, \tones{\prgv}}$ and $\biguopras
	\seq{\tones{\prga}, \tones{\prgu}, \tones{\prgv}}$. It is also not
	difficult to verify that the unique stable model of the former program is
	$\set{\atma, \atmb, \atmd}$, which coincides with the unique \JU-model of
	$\seq{\prga, \prgu, \prgv}$. On the other hand, due to the rule $(\atmc \lor
	\lpnot \atmc \lpif \top.)$, the latter program has the additional stable
	model $\set{\atma, \atmb, \atmc, \atmd}$, which coincides with the
	additional \AS-model of the DLP $\seq{\prga, \prgu, \prgv}$.
\end{example}

\begin{figure}[t]
	\small
	\begin{minipage}[t]{0.475\textwidth}
		\hrule
		\vspace{-1em}
		\begin{align*}
			\atma &\lpif
				\lpnot ( (\lpnot \atmb \land \lpnot \atmc) \lor \atmd ).
			&
			\atmb &\lpif
				\atma.
			\\
			\lpnot \atma &\lpif
				\lpnot \atmb \land
				\lpnot \atmc \land
				\lpnot \atmd.
			\\
				\atma &\lpif \atmd.
				& \atmd &\lpif \top.
			\\
			\\
		\end{align*}
		\vspace{-1.5em}
		\hrule
	\end{minipage}
	\hspace{0.05\textwidth}
	\begin{minipage}[t]{0.475\textwidth}
		\hrule
		\vspace{-1em}
		\begin{align*}
			\atma &\lpif
				\lpnot ( (\lpnot \atmb \land \lpnot \atmc) \lor \atmd ).
			&
			\atmb &\lpif
				\atma.
			\\
			\lpnot \atma &\lpif
				\lpnot \atmb \land
				\lpnot \atmc \land
				\lpnot \atmd.
			\\
				\atma &\lpif \atmd.
				& \atmd &\lpif \top.
			\\
			\atma \lor \lpnot \atma &\lpif \top.
			& 
			\atmb \lor \lpnot \atmb &\lpif \atma.
			\\
			\atmc \lor \lpnot \atmc &\lpif \top.
			&
			\atmd \lor \lpnot \atmd &\lpif \top.
		\end{align*}
		\vspace{-1.5em}
		\hrule
	\end{minipage}
	\caption{%
		Nested programs $\biguoprju \seq{\tones{\prga}, \tones{\prgu},
		\tones{\prgv}}$ (left) and $\biguopras \seq{\tones{\prga}, \tones{\prgu},
		\tones{\prgv}}$ (right) without irrelevant rules and body conjuncts.
	}
	\label{fig:ju and as}
\end{figure}

\begin{theorem}
	[State Condensing Using $\uoprju$ and $\uopras$]
	\label{thm:ju binary operator}
	Let $\dprg$ be a DLP. An interpretation $\twib$ is a
	\begin{textenum}[(i)]
		\item \JU-model of $\dprg$ if and only if it is a stable model of
			$\biguoprju \tones{\dprg}$;

		\item \AS-model of $\dprg$ if and only if it is a stable model of
			$\biguopras \tones{\dprg}$.
	\end{textenum}
\end{theorem}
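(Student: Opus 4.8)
The plan is to fix an interpretation $\twib$ and compare, rule by rule, the nested reduct of the condensed program with the classic reduct that the \JU- and \AS-semantics evaluate on.

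\textit{Closed form of the iterated operators.} First I would prove, by induction on the length $\lng$ of $\dprg$, that $\biguoprju \tones{\dprg}$ consists of exactly one nested rule $r_\rla$ for each $\rla = (\lit \lpif \brla.) \in \all{\dprg}$, namely $r_\rla = \bigl(\lit \lpif \bigland \brla \land \bigwedge_{\lia < \lib < \lng} \lpnot \acond{\tones{\prg_\lib}}{\lcmp{\lit}}.\bigr)$ where $\lia$ is the index of the component containing $\rla$; and that $\biguopras \tones{\dprg}$ consists of the same rules $r_\rla$ together with, for every $\rla \in \all{\dprg}$ whose head is an atom $\atm$, the choice rule $c_\rla = (\atm \lor \lpnot \atm \lpif \bigland \brla.)$. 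The induction step is routine: updating by $\tones{\prg_\lib}$ appends $\lpnot \acond{\tones{\prg_\lib}}{\lcmp{\lit}}$ to the body of every rule with literal head $\lit$ already present, leaves the choice rules untouched, and adjoins the rules of $\tones{\prg_\lib}$ (together with their choice rules in the \AS case). Recall that each DLP rule may be assumed to have exactly one literal in its head, so these expressions are well defined.

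\textit{Part (i).} The key observation is that $\twib \ent \acond{\tones{\prg_\lib}}{\lcmp{\lit}}$ holds iff $\prg_\lib$ contains a rule with head $\lcmp{\lit}$ whose body is satisfied by $\twib$, so a conjunct $\lpnot \acond{\tones{\prg_\lib}}{\lcmp{\lit}}$ of $r_\rla$ reduces to $\bot$ under $\twib$ precisely when $\rla \in \rejju{\dprg, \twib}$. Computing the nested reduct $r_\rla{}^\twib$ therefore gives: if $\rla \in \rejju{\dprg, \twib}$ its body contains a $\bot$ conjunct and $r_\rla{}^\twib$ is satisfied by every interpretation; otherwise all the extra conjuncts reduce to $\top$, and reducing the head literal together with the default literals occurring in $\brla$ — matching the case split in the definition of $\rla^\twib$ according to whether the head is a positive atom or a default literal $\lpnot \atm$ (with $\atm \in \twib$ or not), and whether the body's default literals are satisfied — shows that $r_\rla{}^\twib$ has exactly the same classical models as the classic reduct $\rla^\twib$. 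Hence $(\biguoprju \tones{\dprg})^\twib$ and $(\all{\dprg} \setminus \rejju{\dprg, \twib})^\twib$ have the same models, thus the same subset-minimal models, which is precisely the statement that $\twib$ is a stable model of $\biguoprju \tones{\dprg}$ iff $\twib \in \modju{\dprg}$.

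\textit{Part (ii) and the main obstacle.} I would carry out the same comparison between $(\biguopras \tones{\dprg})^\twib$ and $(\all{\dprg} \setminus \rejas{\dprg, \twib})^\twib$. The rules $r_\rla$ behave exactly as in part (i), and the reduct of a choice rule $c_\rla$ (head $\atm$) is trivial when $\atm \notin \twib$ and equals $(\atm \lpif (\bigland \brla)^\twib.)$ when $\atm \in \twib$, which is already entailed by $r_\rla{}^\twib$ whenever $\rla \notin \rejju{\dprg, \twib}$ — so the choice rules never impose anything new in the well-behaved cases. The delicate rules are those in $\rejju{\dprg, \twib} \setminus \rejas{\dprg, \twib}$: they remain active in the \AS-reduct (so $\rla^\twib$ counts there) yet $r_\rla{}^\twib$ is trivial, so the missing constraint must be supplied by $c_\rla$ (only possible when $\rla$ is atom-headed and $\atm \in \twib$), or else shown to be vacuous, and proving that this always works out is where I expect the difficulty to concentrate. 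The argument I would use relies on the well-foundedness of the recursion defining $\rejas$ (rejections can be resolved from the last component backwards): for such a rule $\rla$ with head literal $\lit$ whose body is actually satisfied by $\twib$, climb the rejection chain to the most recent rule $\rlc$ with head in $\{\lit, \lcmp{\lit}\}$ and body satisfied by $\twib$; since $\rlc \notin \rejju{\dprg, \twib}$ its rule $r_{\rlc}$ survives in the reduct, and a case split on whether $\hrlc = \lit$ or $\hrlc = \lcmp{\lit}$ shows that either $\twib$ already violates $r_{\rlc}{}^\twib$ (so $\twib$ fails both reducts anyway) or $\rlc$ witnesses $\rla \in \rejas{\dprg, \twib}$, contradicting the assumption. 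The fiddly points — rules carrying complementary heads within the same component (where the strict index inequality in the rejection condition matters) and the interplay with the choice rule belonging to $\rlc$ — are where most of the care is needed. Once this is settled, the two reducts again share the same models and hence the same subset-minimal models, giving $\twib$ is a stable model of $\biguopras \tones{\dprg}$ iff $\twib \in \modas{\dprg}$.
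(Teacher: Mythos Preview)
Your closed-form lemma and part~(i) are correct and match the paper's approach (its Lemma on the shape of $\biguoprju\tones{\dprg}$ and the proposition showing that $\twia\ent(\all{\dprg}\setminus\rejju{\dprg,\twib})^\twib$ iff $\twia\ent(\biguoprju\tones{\dprg})^\twib$).

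For part~(ii) there is a genuine gap. Your analysis of ``delicate'' rules only covers $\rla\in\rejju{\dprg,\twib}\setminus\rejas{\dprg,\twib}$, where an AS-reduct constraint lacks a counterpart $r_\rla$ in the nested reduct. You miss the \emph{reverse} mismatch: for $\rla\in\rejas{\dprg,\twib}$ with atom head $\atm\in\twib$, the choice rule reduces to $c_\rla^{\,\twib}=(\atm\lpif(\bigland\brla)^\twib.)$, a constraint present in the nested reduct with no counterpart in the AS reduct (since $\rla^\twib$ is dropped there). Concretely, take $\prg_0=\{\atma.\}$, $\prg_1=\{\lpnot\atma\lpif\atmb.\}$ and $\twib=\{\atma,\atmb\}$: then $(\atma.)\in\rejas{\dprg,\twib}$, the AS reduct is $\{(\lpif\atmb.)\}$ with models $\emptyset$ and $\{\atma\}$, whereas the nested reduct contains additionally $c_{(\atma.)}^{\,\twib}=(\atma.)$ and has only the model $\{\atma\}$. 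So your claim that ``the two reducts share the same models'' is false.

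The paper avoids this by \emph{not} comparing the reducts model-for-model. Instead it proves the two implications separately (its Propositions on $\uopras$), and in each direction uses that $\twib$ itself satisfies the relevant reduct. In particular, once one knows $\twib\ent(\all{\dprg}\setminus\rejas{\dprg,\twib})^\twib$, then for any $\rla\in\rejas{\dprg,\twib}$ with atom head $\atm$ the unrejected witness $\rla'$ with head $\lpnot\atm$ and body satisfied forces $\atm\notin\twib$, so $c_\rla^{\,\twib}$ is trivial after all. Your chain-climbing argument for the $\rejju\setminus\rejas$ case is essentially the paper's, but you need this additional assumption on $\twib$ (established first as a separate step) to close the other direction; without it the equality of model sets simply fails.
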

\begin{proof}
	See \ref{app:proofs}, page \pageref{proof:thm:ju binary operator}.
\end{proof}

The significance of this theorem is in that it demonstrates that the operators
$\uoprju$ and $\uopras$ provide a full characterisation of the \JU- and
\AS-semantics: They condense any DLP $\dprg$ into a single nested program
$\prg$ such that the stable models of $\prg$ coincide with the \JU- and
\AS-models of $\dprg$, respectively, and any further updates of $\dprg$ can be
equivalently performed directly on $\prg$ using the operators.

Note that since the operators manipulate rules on the syntactic level, they
are completely faithful to \JU- and \AS-semantics, even when the DLP contains
local cycles (see also Theorem~\ref{thm:er vs ju and as} where local cycles
form a special case due to the fact that exception functions manipulate rules
on the semantic level).

Interestingly, an operator very similar to $\uoprju$ has been studied by
Osorio and Cuevas \cite{Osorio2007}. They have proven that it characterises
the \AS-semantics for the case of a single update but did not consider the
possibility of performing iterated updates. Their result is also a consequence
of Theorem~\ref{thm:ju binary operator} and of the fact that the \JU- and
\AS-semantics provide the same result when only a single update is performed.

\subsection{Condensing into a Disjunctive Program}

\label{sec:state condensing:disjunctive}

The condensing operators defined in the previous section can be further
refined in order to produce a program that meets additional requirements. In
the present section we show that nested expressions can be completely
eliminated from the resulting program while still preserving the same tight
relationship with the original rule update semantics. Thus, we introduce an
additional pair of operators, $\uoprjulor$ and $\uopraslor$, that operate on
disjunctive programs with default negation in heads of rules. Note that due to
the non-minimality of \JU- and \AS-models for certain DLPs, disjunctive
programs \emph{without} default negation in heads of rules would already be
insufficient for this purpose.

The ideas underlying the new operators are fairly straightforward.
Essentially, nested expressions are introduced into the resulting programs
only by the negations of activation formulas in their bodies, so these are the
parts of rules that need to be translated into conjunctions in bodies and
disjunctions in heads of rules. In particular, by utilising De Morgan's law
and distributivity of conjunction over disjunction, we can obtain a new
formula, a disjunction of conjunctions of default literals and double-negated
atoms, that is strongly equivalent to the original formula. For instance, in
case of the first rule in \eqref{eq:prga updated by prgu}, we can equivalently
write the condition $\lpnot((\lpnot \atmb \land \lpnot \atmc) \lor \atmd)$ as
$
	(\lpnot \lpnot \atmb \land \lpnot \atmd)
	\lor
	(\lpnot \lpnot \atmc \land \lpnot \atmd)
$.
Then it suffices to break up the resulting rule into multiple rules, each with
one of the disjuncts of this formula in the body, and remove one of the
negations from each double-negated literals and ``move'' it into the head of
the newly constructed rule. In case of the first rule in \eqref{eq:prga
updated by prgu}, the result would be the disjunctive rules
\begin{align}
	\label{eq:disj rules}
	\atma \lor \lpnot \atmb &\lpif \lpnot \atmd.
	&& \text{and}
	& \atma \lor \lpnot \atmc &\lpif \lpnot \atmd.
\end{align}

We call each set of literals, without double negation, within each of the
disjuncts described above a \emph{blocking set}. Formally, if the activation
formula $\acond{\tones{\prgu}}{\lit}$ contains $\top$ as one of its disjuncts,
then there is no blocking set for $\lit$ in $\prgu$. Otherwise, suppose that
$\acond{\tones{\prgu}}{\lit} = (\lit_1^1 \land \dotsb \land \lit_{\lic_1}^1)
\lor \dotsb \lor (\lit_1^\lng \land \dotsb \land \lit_{\lic_\lng}^\lng)$. A
\emph{blocking set for $\lit$ in $\prgu$} is any set of literals
$\Set{\lcmp{\lit_{\lia_1}^1}, \dotsc, \lcmp{\lit_{\lia_\lng}^\lng}}$ where $1
\leq \lia_\lib \leq \lic_\lib$ for every $\lib$ with $1 \leq \lib \leq \lng$.
We denote the set of all blocking sets for $\lit$ in $\prgu$ by
$\sblks{\prgu}{\lit}$.

Each nested rule $\Br{\hrl \lpif \brl \land \lpnot
\Acond{\tones{\expv{\prgu}}}{\lcmp{\hrl}}.}$ can thus be replaced by the set
of disjunctive rules
\[
	\Set{
		\hrl; \lcmp{\sblk^+} \lpif \brl, \lcmp{\sblk^-}.
		|
		\sblk \in \Sblks{\expv{\prgu}}{\lcmp{\hrl}}
	} \enspace.
\]
Furthermore, when $\hrl$ is a default literal, it is more convenient to move
the new default literals from the head into the body since this operation
preserves stable models \cite{Inoue1998} and makes it easier to pinpoint the
original head literal in the rule. This leads us to the following definition
of $\uoprjulor$:

\begin{definition}
	[Condensing Operator $\uoprjulor$]
	Given a disjunctive rule $\rl$, a literal $\lit$ is the \emph{original head
	of $\rl$} if either $\lit \in \atms$ and $\hrl \cap \atms = \set{\lit}$, or
	$\lit \notin \atms$ and $\hrl = \set{\lit}$.

	The binary operator $\uoprjulor$ on the set of all disjunctive programs is
	defined as follows: Given two disjunctive programs $\prga$ and $\prgu$, the
	disjunctive program $\prga \uoprjulor \prgu$ consists of the following
	rules:
	\begin{textenum}[1.]
		\item for all $\rla \in \expv{\prga}$ with original head $\lit \in \atms$
			and all $\sblk \in \sblks{\expv{\prgu}}{\lcmp{\lit}}$, the rule
			$\Br{\hrla; \lcmp{\sblk^+} \lpif \brla, \lcmp{\sblk^-}.}$;

		\item for all $\rla \in \expv{\prga}$ with original head $\lit \notin
			\atms$ and all $\sblk \in \sblks{\expv{\prgu}}{\lcmp{\lit}}$, the rule
			$\Br{\hrla \lpif \brla, \sblk^+, \lcmp{\sblk^-}.}$;

		\item all rules from $\expv{\prgu}$.
	\end{textenum}
\end{definition}

As for the \AS-semantics, similar modifications can be applied in the
definition of $\uopras$ to obtain an operator that produces a disjunctive
program. Furthermore, due to the additional choice rules included in the
result, the rules can be further simplified, when compared to the rules
produced by $\uoprjulor$. In particular, the first group of rules can be
treated the same way as the second, leading to the following definition of
$\uopraslor$:

\begin{definition}
	[Condensing Operator $\uopraslor$]
	The binary operator $\uopraslor$ on the set of all disjunctive programs is
	defined as follows: Given two disjunctive programs $\prga$ and $\prgu$, the
	disjunctive program $\prga \uopraslor \prgu$ consists of the following
	rules:
	\begin{textenum}[1.]
		\item for all $\rla \in \expv{\prga}$ with $\hrla = \set{\lit}$ and all
			$\sblk \in \Sblks{\expv{\prgu}}{\lcmp{\lit}}$, the rule $\Br{\hrla \lpif
			\brla, \sblk^+, \lcmp{\sblk^-}.}$;

		\item all $\rla \in \prga$ such that $\hrla$ is of the form $\set{\atm,
			\lpnot \atm}$ for some $\atm \in \atms$;

		\item for all $\rla \in \prgu$ with $\hrla = \set{\atm}$ for some $\atm
			\in \atms$, the rule $\Br{\atm; \lpnot \atm \lpif \brla.}$;

		\item all rules from $\expv{\prgu}$.
	\end{textenum}
\end{definition}

If we consider the programs $\prga$, $\prgu$ and $\prgv$ from
Example~\ref{ex:uoprju and uopras}, the two main differences between the
disjunctive program $\biguoprjulor \seq{\prga, \prgu, \prgv}$ and its nested
counterpart $\biguoprju \seq{\prga, \prgu, \prgv}$, besides their different
syntax, are as follows: 1) the nested rule is turned into the two disjunctive
rules listed in \eqref{eq:disj rules}, and 2) the rule $\atmc \lpif \top \land
\lpnot \top \land \lpnot \bot$ has no counterpart in the disjunctive program
as there is no blocking set for $\lpnot \atmc$ in $\prgu$. The differences
between $\biguopraslor \seq{\prga, \prgu, \prgv}$ and $\biguopras \seq{\prga,
\prgu, \prgv}$ are analogical and in Figure~\ref{fig:julor and aslor} we list
the disjunctive programs simplified in the same way as their nested counterparts
in Figure~\ref{fig:ju and as}.

\begin{figure}[t]
	\small
	\begin{minipage}[t]{0.475\textwidth}
		\hrule
		\vspace{-1em}
		\begin{align*}
			\atma; \lpnot \atmb &\lpif
				\lpnot \atmd.
			&
			\atmb &\lpif
				\atma.
			\\
			\atma; \lpnot \atmc &\lpif
				\lpnot \atmd.
			\\
			\lpnot \atma &\lpif
				\lpnot \atmb,
				\lpnot \atmc,
				\lpnot \atmd.
			\\
				\atma &\lpif \atmd.
				& \atmd &.
			\\
			\\
		\end{align*}
		\vspace{-1.5em}
		\hrule
	\end{minipage}
	\hspace{0.05\textwidth}
	\begin{minipage}[t]{0.475\textwidth}
		\hrule
		\vspace{-1em}
		\begin{align*}
			\atma &\lpif
				\atmb, \lpnot \atmd.
			&
			\atmb &\lpif
				\atma.
			\\
			\atma &\lpif
				\atmc, \lpnot \atmd.
			\\
			\lpnot \atma &\lpif
				\lpnot \atmb,
				\lpnot \atmc,
				\lpnot \atmd.
			\\
				\atma &\lpif \atmd.
				& \atmd &.
			\\
			\atma; \lpnot \atma &.
			& 
			\atmb; \lpnot \atmb &\lpif \atma.
			\\
			\atmc; \lpnot \atmc &.
			&
			\atmd; \lpnot \atmd &.
		\end{align*}
		\vspace{-1.5em}
		\hrule
	\end{minipage}
	\caption{%
		Disjunctive programs $\biguoprju^{\lor} \seq{\prga, \prgu, \prgv}$ (left)
		and $\biguopras^{\lor} \seq{\prga, \prgu, \prgv}$ (right) without
		irrelevant rules.
	}
	\label{fig:julor and aslor}
\end{figure}

From a general perspective, the operators $\uoprjulor$ and $\uopraslor$
preserve the main property of $\uoprju$ and $\uopras$.

\begin{theorem}
	[State Condensing Using $\uoprjulor$ and $\uopraslor$]
	\label{thm:ju disjunctive binary operator}
	Let $\dprg$ be a DLP. An interpretation $\twib$ is a
	\begin{textenum}[(i)]
		\item \JU-model of $\dprg$ if and only if it is a stable model of
			$\biguoprjulor \dprg$;

		\item \AS-model of $\dprg$ if and only if it is a stable model of
			$\biguopraslor \dprg$.
	\end{textenum}
\end{theorem}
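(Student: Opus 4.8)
The plan is to reduce this statement to the already-established Theorem~\ref{thm:ju binary operator} by showing that the disjunctive operators $\uoprjulor$ and $\uopraslor$ produce programs that are strongly equivalent — rule by rule, or at least program-wise with respect to stable models under further updates — to the nested programs $\biguoprju \tones{\dprg}$ and $\biguopras \tones{\dprg}$. The key observation is that the only nested (non-flat) expressions appearing in $\prga \uoprju \prgu$ are the negated activation formulas $\lpnot \Acond{\tones{\expv{\prgu}}}{\lcmp{\hrl}}$ in rule bodies; everything else is already a flat disjunctive rule. So the heart of the argument is a local transformation lemma: for any rule $\Br{\hrl \lpif \brl \land \lpnot \Acond{\tones{\prgu}}{\lcmp{\hrl}}.}$, the set of disjunctive rules $\Set{\hrl; \lcmp{\sblk^+} \lpif \brl, \lcmp{\sblk^-}. | \sblk \in \Sblks{\prgu}{\lcmp{\hrl}}}$ (with the head/body adjustment when $\hrl$ is a default literal) has the same \SE-models as the original nested rule, hence is strongly equivalent to it in the sense of nested programs \cite{Turner2003}.

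First I would make precise the blocking-set construction: given $\acond{\tones{\prgu}}{\lit} = (\lit_1^1 \land \dotsb \land \lit_{\lic_1}^1) \lor \dotsb \lor (\lit_1^\lng \land \dotsb \land \lit_{\lic_\lng}^\lng)$, apply De~Morgan and distributivity to rewrite $\lpnot \acond{\tones{\prgu}}{\lit}$ as $\biglor$ over all choice functions $\lib \mapsto \lia_\lib$ of the conjunction $\bigland_\lib \lpnot \lit_{\lia_\lib}^\lib$, and observe that $\lpnot \lcmp{\olit}$ is strongly equivalent to $\lpnot\lpnot \olit$ when $\olit$ is an atom and to $\olit$'s negation accordingly; this is exactly the passage to $\lcmp{\sblk^+}$ in the body/head and $\lcmp{\sblk^-}$ in the body. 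The case $\top \in \acond{\tones{\prgu}}{\lit}$ (no blocking set, so the rule disappears) corresponds to $\lpnot\top \eq \bot$ making the original body unsatisfiable, so dropping the rule is sound. Breaking a single rule with a disjunctive body into several rules, one per disjunct, preserves stable models; and moving default-negated literals between head and body preserves stable models by \cite{Inoue1998}. For $\uopraslor$, the additional simplification — treating the first group of rules like the second — is justified because the choice rules $\Br{\atm; \lpnot \atm \lpif \brla.}$ already supply the classical case for the head atom, so the distinction between head-disjunction and body-conjunction of the extra negations becomes immaterial; I would verify this by checking that the reducts coincide in each relevant interpretation.

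Having established that $\biguoprjulor \dprg$ is strongly equivalent to $\biguoprju \tones{\dprg}$ and $\biguopraslor \dprg$ is strongly equivalent to $\biguopras \tones{\dprg}$ as nested programs, part~(i) and part~(ii) follow immediately from Theorem~\ref{thm:ju binary operator}(i) and~(ii), since strongly equivalent programs have the same stable models. The induction on the length of $\dprg$ is carried by the fact that all four operators are binary on their respective program classes and the generalisation to sequences is the standard left-fold; one needs the transformation lemma to commute with the operator application at each step, which reduces to checking that the original head of a disjunctive rule (as defined for $\uoprjulor$) is correctly identified after the previous transformation step — this is where the bookkeeping is most delicate. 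The main obstacle I anticipate is precisely this faithfulness of the head-tracking across iterated applications: one must ensure that when $\prga \uoprjulor \prgu$ is itself updated, the ``original head'' literal extracted from each produced disjunctive rule matches the head literal of the corresponding nested rule in $\prga \uoprju \prgu$, so that the blocking sets for the next update are computed against the right literal; getting the disjunctive-head bookkeeping to line up exactly with the nested-program induction is the real work, whereas the strong-equivalence computations themselves are routine.
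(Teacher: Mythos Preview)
Your overall strategy matches the paper's: reduce to Theorem~\ref{thm:ju binary operator} by showing that $\biguoprjulor \dprg$ and $\biguopraslor \dprg$ are strongly equivalent (as nested programs) to $\biguoprju \tones{\dprg}$ and $\biguopras \tones{\dprg}$, with the local transformation of negated activation formulas into blocking-set rules as the core step. The De~Morgan/distributivity rewriting, the splitting of a disjunctive body into multiple rules, the head/body shuffle for default-negated literals, the disappearance of rules when $\top$ is a disjunct of the activation formula, and the role of the choice rules in justifying the $\uopraslor$ simplification are all exactly the ingredients the paper uses (Lemmas~\ref{lemma:strong equivalence:1}, \ref{se:disjunction in body}, \ref{lemma:se double negation in body}, \ref{lemma:se double negation in body with negative head}, \ref{lemma:se double negation in body with choice rule}).

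The one substantive difference is how the iteration is handled. You propose to show that the transformation commutes with a single operator application and then carry this through the left-fold, and you correctly flag the delicate point: the ``original head'' of a disjunctive rule produced at stage $k$ must be recognised correctly at stage $k+1$. The paper sidesteps this bookkeeping entirely by first establishing \emph{closed-form} descriptions of the fully iterated programs on both sides --- Lemmas~\ref{lemma:ju binary operator:1} and~\ref{lemma:as binary operator:1} for the nested operators, and Lemmas~\ref{lemma:ju disjunctive binary operator:1} and~\ref{lemma:as disjunctive binary operator:1} for the disjunctive ones --- and then comparing those closed forms directly via the strong-equivalence lemmas. The induction on sequence length is thus pushed into the closed-form lemmas, where it is driven by the simple decomposition $\sblks{\prga \cup \prgb}{\lit} = \Set{\sblka \cup \sblkb \mid \sblka \in \sblks{\prga}{\lit},\ \sblkb \in \sblks{\prgb}{\lit}}$ (Lemma~\ref{lemma:blocking set decomposition}). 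Your step-by-step commutation should also go through, but the closed-form route is cleaner precisely because it absorbs the head-tracking into a single inductive characterisation of the whole iteration rather than threading it through each update.
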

\begin{proof}
	See \ref{app:proofs}, page \pageref{proof:thm:ju disjunctive binary
	operator}.
\end{proof}

Although operators $\uoprjulor$ and $\uopraslor$ eliminate the necessity for
using nested rules to condense a DLP into a single program, this comes at a
cost. Namely, the size of the nested program resulting from applying operators
$\uoprju$ and $\uopras$ is always linear in size of the argument programs,
while in case of $\uoprjulor$ and $\uopraslor$, the resulting program can be
exponentially larger. Furthermore, Figures~\ref{fig:ju and as} and
\ref{fig:julor and aslor} suggest that the representations produced by
$\uoprjulor$ and $\uopraslor$ will be less faithful to the form of the rules
in the original programs, and thus less readable. This indicates that the
nested program is more suitable as a way to store the condensed program, both
in terms of space and readability. Additionally, in order to find its stable
models, a more efficient translation can be used that utilises additional
meta-level atoms to prevent the exponential explosion (see
\cite{Leite1997,Eiter2002} for further details). However, such a translation
will no longer be equivalent to the original program sequence w.r.t.\
performing further updates.

\section{Abstract Exception-Based Operators}

\label{sec:exception-based-updates:abstract}

In this section we generalise exception-based operators to arbitrary knowledge
representation formalisms with monotonic model-theoretic semantics.

Thus, we assume to be using some knowledge representation formalism in which a
\emph{knowledge base} is a subset of the set of all \emph{knowledge atoms}
$\gfrms$ and $\sems$ denotes the set of all \emph{semantic structures} among
which the \emph{models} of knowledge atoms are chosen. The set of models of a
knowledge atom $\gfrm$ is denoted by $\mod{\gfrm}$. The \emph{semantic
characterisation} of a knowledge base $\gkb$ is the \emph{set of sets of
models} of its knowledge atoms: $\modr{\gkb} = \Set{\mod{\gfrm} | \gfrm \in
\gkb}$. The models of $\gkb$ are the models of all its elements, i.e.\
$\mod{\gkb} = \bigcap \modr{\gkb}$.

An abstract exception-based update operator is characterised by an
\emph{exception function} that, given the set of models of a knowledge atom
$\gfrm$ and the semantic characterisations of the original and updating
knowledge base, returns the semantic structures that are to be introduced as
exceptions to $\gfrm$.

\begin{definition}
	[Exception Function]
	\label{def:exception function}
	An \emph{exception function} is any function
	$
		\e : \pws{\sems} \times \pws{\pws{\sems}} \times \pws{\pws{\sems}}
		\rightarrow \pws{\sems}
	$.
\end{definition}

Given such an exception function and knowledge bases $\gkba$ and $\gkbu$, it
naturally follows that the semantic characterisation resulting from updating
$\gkba$ by $\gkbu$ should consist of sets of models of each knowledge atom
$\gfrm$ from $\gkba$, each augmented with the respective exceptions, and also
the unmodified sets of models of knowledge atoms from $\gkbu$. In other words,
we obtain the set of sets of models
\begin{equation}
	\label{eq:exception-based update}
	\Set{
		\mod{\gfrm} \cup \e(\mod{\gfrm}, \modr{\gkba}, \modr{\gkbu})
		|
		\gfrm \in \gkba
	}
	\cup \modr{\gkbu}
	\enspace.
\end{equation}

Turning to the syntactic side, an \emph{update operator} is binary function
over $\pws{\gfrms}$ that takes the original knowledge base and its update as
inputs and returns the updated knowledge base. An \emph{exception-based
update operator} is then formalised as follows:

\begin{definition}
	[Abstract Exception-Based Update Operator]
	\label{def:exception-based operator}
	Given an exception function \te{}, an update operator $\uope$ is
	\emph{\te-based} if for all $\gkba$, $\gkbu \subseteq \gfrms$,
	$\modrer{\gkba \uope \gkbu}$ is equal to \eqref{eq:exception-based update}.
	Also, $\uope$ is \emph{exception-based} if it is \te-based for some
	exception function \te{}.
\end{definition}

\section{Belief Updates Using Exception-Based Operators}

\label{sec:exception-based-updates:belief updates}

Concrete exception-based operators for propositional knowledge bases are
obtained from the abstract framework presented in the previous section by
identifying the set of knowledge atoms $\gfrms$ with the set of all
propositional formulas and the set of semantic structures $\sems$ with
propositional interpretations.

This setup allows us to immediately prove that most conceivable model-based
operators can be faithfully modelled as exception-based ones. Particularly,
this is possible for any update operator satisfying the uncontroversial
postulates \bu{1}, \bu{2.1} and \bu{4} \cite{Herzig1999}.

\begin{theorem}
	[Model-Based Updates Using Exception-Based Operators]
	\label{thm:belupd:model-based by exception-based}
	If $\uopb$ is a belief update operator that satisfies \bu{1}, \bu{2.1} and
	\bu{4}, then there exists an exception function \te{} such that for every
	\te-based update operator $\uope$ and all finite sequences of knowledge
	bases $\dkb$, $\mod{\biguopb \dkb} = \mod{\biguope \dkb}$.
\end{theorem}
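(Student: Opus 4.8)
The plan is to reduce everything to the model level. By \bu{4}, the models of $\gkba \uopb \gkbu$ depend only on $\mod{\gkba}$ and $\mod{\gkbu}$; and since $\atms$ is finite, $\sems$ is finite and every subset of $\sems$ is the model set of some knowledge base. Hence \bu{4} induces a \emph{total} function $f \colon \pws{\sems} \times \pws{\sems} \to \pws{\sems}$ with $\mod{\gkba \uopb \gkbu} = f(\mod{\gkba}, \mod{\gkbu})$ for all knowledge bases $\gkba, \gkbu$. In this notation \bu{1} states $f(M, N) \subseteq N$ and \bu{2.1} states $M \cap N \subseteq f(M, N)$, so $M \cap N \subseteq f(M, N) \subseteq N$ for all $M, N \subseteq \sems$. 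I would then take the exception function that discards its first argument,
\[
	\e(M_0, \sstria, \sstrib) = f\Br{\textstyle\bigcap \sstria,\ \bigcap \sstrib}
	\qquad\bigl(\text{with } \textstyle\bigcap \emptyset = \sems\bigr),
\]
which is plainly a function of the type demanded by Definition~\ref{def:exception function}. By construction $\e(\mod{\gfrm}, \modr{\gkba}, \modr{\gkbu}) = f(\bigcap\modr{\gkba}, \bigcap\modr{\gkbu}) = f(\mod{\gkba}, \mod{\gkbu})$ for every $\gfrm \in \gkba$.

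The first key step is a single-update lemma: for this $\e$, \emph{every} \te-based operator $\uope$ satisfies $\mod{\gkba \uope \gkbu} = \mod{\gkba \uopb \gkbu}$ for all $\gkba, \gkbu$. Abbreviating $E = f(\mod{\gkba}, \mod{\gkbu})$ and using that $\modr{\gkba \uope \gkbu}$ equals \eqref{eq:exception-based update}, we get
\[
	\mod{\gkba \uope \gkbu}
	= \bigcap \modr{\gkba \uope \gkbu}
	= \Br{\bigcap_{\gfrm \in \gkba}\bigl(\mod{\gfrm} \cup E\bigr)} \cap \bigcap\modr{\gkbu}
	= \bigl(\mod{\gkba} \cup E\bigr) \cap \mod{\gkbu},
\]
where the last step uses the elementary identity $\bigcap_{S \in \sstria}(S \cup E) = (\bigcap \sstria) \cup E$, valid for every family $\sstria$ — the empty family, for which both sides are $\sems$, covers the case $\gkba = \emptyset$. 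Since $E \subseteq \mod{\gkbu}$ by \bu{1} and $\mod{\gkba} \cap \mod{\gkbu} \subseteq E$ by \bu{2.1}, this simplifies to $(\mod{\gkba} \cup E) \cap \mod{\gkbu} = (\mod{\gkba} \cap \mod{\gkbu}) \cup E = E = \mod{\gkba \uopb \gkbu}$. Note that this value depends neither on the particular \te-based operator chosen nor on the syntactic representatives of the sets appearing in \eqref{eq:exception-based update}, since only those model sets enter the computation.

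It then remains to iterate. Fix any \te-based $\uope$ and argue by induction on the length $\lng$ of $\dkb = \seq{\kb_\lia}_{\lia < \lng}$ that $\mod{\biguope \dkb} = \mod{\biguopb \dkb}$. For $\lng = 0$ both sides equal $\mod{\emptyset} = \sems$. For the inductive step, unfolding the defining recursion gives $\biguope \seq{\kb_\lia}_{\lia < \lng + 1} = (\biguope \seq{\kb_\lia}_{\lia < \lng}) \uope \kb_\lng$, and likewise for $\biguopb$; by the lemma the former has the same models as $(\biguope \seq{\kb_\lia}_{\lia < \lng}) \uopb \kb_\lng$, and since $\mod{\biguope \seq{\kb_\lia}_{\lia < \lng}} = \mod{\biguopb \seq{\kb_\lia}_{\lia < \lng}}$ by the induction hypothesis, \bu{4} permits replacing $\biguope \seq{\kb_\lia}_{\lia < \lng}$ by $\biguopb \seq{\kb_\lia}_{\lia < \lng}$ without altering the model set, yielding $\mod{(\biguopb \seq{\kb_\lia}_{\lia < \lng}) \uopb \kb_\lng} = \mod{\biguopb \seq{\kb_\lia}_{\lia < \lng + 1}}$.

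I do not expect a genuine obstacle; the points that require care are (i) the conventions around empty knowledge bases and empty intersections, and (ii) the fact that the accumulated objects $\biguope \seq{\kb_\lia}_{\lia < \lng}$ and $\biguopb \seq{\kb_\lia}_{\lia < \lng}$ are different syntactic knowledge bases, so \bu{4} is used essentially — not merely cosmetically — in the inductive step. It is also worth remarking that \bu{1} and \bu{2.1} are forced here: any exception-based operator can only \emph{enlarge} each $\mod{\gfrm}$ and then intersect with $\mod{\gkbu}$, so $\mod{\gkba} \cap \mod{\gkbu} \subseteq \mod{\gkba \uope \gkbu} \subseteq \mod{\gkbu}$ for every such operator. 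Thus the triple \bu{1}, \bu{2.1}, \bu{4} is exactly the right hypothesis to make this construction go through.
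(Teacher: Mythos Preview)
Your proof is correct and follows essentially the same approach as the paper: the exception function you define coincides with the paper's (the paper also sets $\e(\stwi,\sstwia,\sstwib)=\mod{\kba\uopb\kbu}$ for any $\kba,\kbu$ with $\mod{\kba}=\bigcap\sstwia$ and $\mod{\kbu}=\bigcap\sstwib$), and both arguments proceed by induction on the length of $\dkb$, invoking \bu{1} and \bu{2.1} to collapse $(\mod{\gkba}\cup E)\cap\mod{\gkbu}$ to $E$, and \bu{4} to pass from $\biguope$ to $\biguopb$ under the inductive hypothesis. The only cosmetic difference is that you isolate the single-update equality $\mod{\gkba\uope\gkbu}=\mod{\gkba\uopb\gkbu}$ as a lemma before the induction, whereas the paper performs that computation inline in the inductive step.
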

\begin{proof}
	See \ref{app:belupd}, page~\pageref{proof:belupd:model-based by
	exception-based}.
\end{proof}

An interesting point regarding this result is that the principles \bu{1},
\bu{2.1} and \bu{4} are not specific to update operators, they are also
satisfied by \emph{AGM revision} operators. These operators are developed for
the case of revising a \emph{belief set} which is a set of formulas closed
w.r.t.\ a logical consequence operator $\cn$. A revision operator~$\uoprev$
takes an original belief set $\thr$ and a formula $\frmu$ representing its
revision and produces the revised belief set $\thr \uoprev \frmu$. The typical
properties satisfied by AGM revision operators include \emph{success},
\emph{inclusion} and \emph{extensionality} \cite{Hansson1993}, formalised,
respectively, as
\begin{align*}
	& \frmu \in \thr \uoprev \frmu \enspace,
	&& \thr \uoprev \frmu \subseteq \cn(\thr \cup \set{\frmu}) \enspace,
	&& \text{%
		If $\frmu \eq \frmv$, then $\thr \uoprev \frmu = \thr \uoprev \frmv$.
	}
\end{align*}
These three properties directly imply that \bu{1}, \bu{2.1} and \bu{4} are
satisfied by AGM revision operators. Hence,
Theorem~\ref{thm:belupd:model-based by exception-based} directly applies to
AGM revision operators as well. Note also that the operator adopted for ABox
updates in \cite{Lenzerini2011}, inspired by WIDTIO, performs a deductive
closure of the ABox before updating it, so it corresponds to the standard
\emph{full meet AGM revision operator}.

Turning to formula-based belief update operators, we can achieve similar
results by introducing the following principles satisfied by many
formula-based operators. Here, for any knowledge base $\kb$,
$\modr{\kb}^\twis$ denotes the set $\modr{\kb} \cup \set{\twis}$. The
principles are as follows:
\begin{textenum}[1.]
 	\setlength{\labelwidth}{.9cm}
	\setlength{\itemindent}{.5cm}
	\renewcommand{\theenumi}{\labfu{\arabic{enumi}}}
	\renewcommand{\labelenumi}{\theenumi\hfill}

	\item $\modr{\kba \uopf \kbu} \supseteq \modr{\kbu}$.
		\label{pstl:fu:1}

	\setcounter{enumi}{0}
	\renewcommand{\theenumi}{\labfu{2.\arabic{enumi}}}
	\item $\modr{\kba \cup \kbu} \supseteq \modr{\kba \uopf \kbu}$.
		\label{pstl:fu:2.1}

	\setcounter{enumi}{3}
	\renewcommand{\theenumi}{\labfu{\arabic{enumi}}}
	\item If $\modr{\kba}^\twis = \modr{\kbb}^\twis$ and $\modr{\kbu}^\twis =
		\modr{\kbv}^\twis$, then $\modr{\kba \uopf \kbu}^\twis = \modr{\kbb \uopf
		\kbv}^\twis$.
		\label{pstl:fu:4}
\end{textenum}
We can see that \fu{1} and \fu{2.1} are \emph{stronger} versions of \bu{1},
and \bu{2.1}, respectively. While \fu{1} requires that the sets of models of
formulas in $\kbu$ be retained in the semantic characterisation of $\kba \uopf
\kbu$, \fu{2.1} states that every formula in $\kba \uopf \kbu$ be equivalent
to some formula in $\kba \cup \kbu$. Intuitively, this means that $\kba \uopf
\kbu$ is obtained from $\kba \cup \kbu$ by deleting some of its elements,
modulo equivalence. Finally, \fu{4} is a reformulation of \bu{4} that is
satisfied by formula-based operators -- it can be seen as syntax-independence
w.r.t.\ the set of sets of models of a knowledge base, modulo the presence of
tautologies, instead of the overall set of models as in \bu{4}. In some ways
it is \emph{weaker} than \bu{4} as its antecedent is much stronger.

The WIDTIO operator \cite{Ginsberg1986,Ginsberg1988,Winslett1990} satisfies
all of these principles, and so does the Bold operator \cite{Calvanese2010} if
it is based on a remainder selection function that selects remainders with the
same semantic characterisation when given sets of remainders with the same
sets of semantic characterisations. More formally:

\begin{definition}
	[Regular Bold Operator]
	For any set of remainders $\srem$ we define $\modrr{\srem}^\twis =
	\set{\modr{\kba}^\twis | \kba \in \srem}$. We say that the Bold operator
	$\uopbold$ is \emph{regular} if for all sets of remainders $\srem_1$,
	$\srem_2$ such that $\modrr{\srem_1}^\twis = \modrr{\srem_2}^\twis$ it holds
	that $\modr{\sfn(\srem_1)}^\twis = \modr{\sfn(\srem_2)}^\twis$.
\end{definition}

The regularity condition guarantees a certain degree of independence of
syntax, e.g.\ given the sets of remainders $\srem_1 = \set{\set{\atma},
\set{\atmb}}$ and $\srem_2 = \set{\set{\atma \land \atma}, \set{\atmb \lor
\atmb}}$, a regular Bold operator either selects $\set{\atma}$ from $\srem_1$
and $\set{\atma \land \atma}$ from $\srem_2$, or it selects $\set{\atmb}$ from
$\srem_1$ and $\set{\atmb \lor \atmb}$ from $\srem_2$. A non-regular one might
select, say, $\set{\atma}$ from $\srem_1$ and $\set{\atmb \lor \atmb}$ from
$\srem_2$. Thus the regularity condition ensures that the operator is
independent of the syntax of individual formulas in the knowledge base.

The Cross-Product operator \cite{Ginsberg1986} satisfies \fu{1} (thus also
\bu{1}), \bu{2.1} and \fu{4}, but not \fu{2.1}.

\begin{proposition}
	[Properties of Formula-Based Updates]
	\label{prop:belupd:formula-based properties}
	The WIDTIO and regular Bold operators satisfy \fu{1}, \fu{2.1} and \fu{4}.
	The Cross-Product operator satisfies \fu{1}, \bu{2.1} and \fu{4} but does
	not satisfy $\fu{2.1}$.
\end{proposition}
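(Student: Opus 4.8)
The plan is to treat the four assertions in increasing order of difficulty: the ``purely syntactic'' positive claims first, then the counterexample witnessing that Cross-Product fails \fu{2.1}, and finally the syntax-independence principle \fu{4}, which is where essentially all the effort lies. The easy claims reduce to the shape of the operators. Since each of $\kba \uopwidtio \kbu$, $\kba \uopbold \kbu$ and $\kba \uopcross \kbu$ equals $\kbu \cup \srem$ for some set of formulas $\srem$, we have $\kbu \subseteq \kba \uopf \kbu$ as sets of formulas, hence $\modr{\kbu} \subseteq \modr{\kba \uopf \kbu}$; this is \fu{1} for all three. For \fu{2.1} of WIDTIO and Bold, note that $\bigcap \remof{\kba, \kbu} \subseteq \kba$ and $\sfn(\remof{\kba, \kbu}) \in \remof{\kba, \kbu}$ is by construction a subset of $\kba$, so in both cases $\kba \uopf \kbu \subseteq \kba \cup \kbu$ as sets of formulas and therefore $\modr{\kba \uopf \kbu} \subseteq \modr{\kba \cup \kbu}$. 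For \bu{2.1} of Cross-Product: if $\kba \cup \kbu$ is inconsistent then $\mod{\kba \cup \kbu} = \emptyset \subseteq \mod{\kba \uopcross \kbu}$ trivially, and otherwise $\kba$ is the unique element of $\remof{\kba, \kbu}$, so $\biglor_{\kba' \in \remof{\kba, \kbu}} \bigland \kba' \eq \bigland \kba$ and $\kba \uopcross \kbu \eq \kba \cup \kbu$.

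To show that Cross-Product genuinely violates \fu{2.1}, I would take $\atms = \set{\atma, \atmb}$, $\kba = \set{\atma, \atmb}$ and $\kbu = \set{\lpnot \atma \lor \lpnot \atmb}$. Then $\remof{\kba, \kbu} = \set{\set{\atma}, \set{\atmb}}$, so $\kba \uopcross \kbu = \set{\lpnot \atma \lor \lpnot \atmb,\ \atma \lor \atmb}$; a direct check shows that $\mod{\atma \lor \atmb}$ is distinct from $\mod{\atma}$, $\mod{\atmb}$ and $\mod{\lpnot \atma \lor \lpnot \atmb}$, hence is the model set of no formula in $\kba \cup \kbu$, so $\modr{\kba \uopcross \kbu} \not\subseteq \modr{\kba \cup \kbu}$.

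For \fu{4}, the crucial ingredient is a lemma expressing remainders semantically, modulo tautologies: whenever $\kbu$ is consistent, the map $\kba' \mapsto \Set{\mod{\frm} | \frm \in \kba'}$ is a bijection from $\remof{\kba, \kbu}$ onto the family of \emph{model-remainders} of $(\kba, \kbu)$ --- the maximal $\sstria' \subseteq \modr{\kba}$ with $\bigcap \sstria' \cap \mod{\kbu} \neq \emptyset$ --- with inverse $\sstria' \mapsto \Set{\frm \in \kba | \mod{\frm} \in \sstria'}$. This uses the elementary fact that a remainder is a union of classes of formulas with equal model sets (if $\kba'$ is a remainder, $\frm \in \kba'$ and $\mod{\frm'} = \mod{\frm}$, then $\kba' \cup \set{\frm'}$ has the same models as $\kba'$, so $\frm' \in \kba'$ by maximality). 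Since consistency of $\kba' \cup \kbu$ depends only on $\mod{\kba'}$ and $\mod{\kbu}$, and since adding or removing $\twis$ from any subcollection leaves its intersection unchanged, the hypotheses of \fu{4} --- which force $\mod{\kbu} = \mod{\kbv}$ and give $\modr{\kba}^\twis = \modr{\kbb}^\twis$ --- yield that the model-remainders of $(\kba, \kbu)$ and of $(\kbb, \kbv)$ coincide once each is $\twis$-completed, i.e.\ $\Set{ \sstria' \cup \set{\twis} | \sstria' \text{ a model-remainder of } (\kba, \kbu) } = \Set{ \sstrib' \cup \set{\twis} | \sstrib' \text{ a model-remainder of } (\kbb, \kbv) }$.

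Granting this lemma, the three instances of \fu{4} follow by unfolding the operators. For WIDTIO, $\modr{\bigcap \remof{\kba, \kbu}}$ equals the set-intersection of the model-remainders, whose $\twis$-completion is determined by the $\twis$-completed model-remainders, so $\modr{\kba \uopwidtio \kbu}^\twis = \modr{\kbu}^\twis \cup \modr{\bigcap \remof{\kba, \kbu}}^\twis = \modr{\kbb \uopwidtio \kbv}^\twis$. For Cross-Product, $\mod{\biglor_{\kba' \in \remof{\kba, \kbu}} \bigland \kba'} = \bigcup \Set{ \bigcap \sstria' | \sstria' \text{ a model-remainder of } (\kba, \kbu) }$, which is unaffected by $\twis$-completing the $\sstria'$; hence $\kba \uopcross \kbu$ and $\kbb \uopcross \kbv$ adjoin formulas with the same model set to $\kbu$ resp.\ $\kbv$, and $\modr{\kbu}^\twis = \modr{\kbv}^\twis$ finishes it. For regular Bold, the lemma gives $\modrr{\remof{\kba, \kbu}}^\twis = \modrr{\remof{\kbb, \kbv}}^\twis$, so regularity yields $\modr{\sfn(\remof{\kba, \kbu})}^\twis = \modr{\sfn(\remof{\kbb, \kbv})}^\twis$ and the claim follows as before. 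I expect the main obstacle to be the tautology bookkeeping inside the lemma: establishing that the $\twis$-completed model-remainders match requires a short but fiddly case analysis according to whether $\twis$ belongs to $\modr{\kba}$, to $\modr{\kbb}$, and to each individual remainder, using maximality to transport a remainder on one side to a remainder on the other.
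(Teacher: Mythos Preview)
Your proposal is correct and follows essentially the same route as the paper. The easy claims (\fu{1}, \fu{2.1} for WIDTIO and Bold, \bu{2.1} for Cross-Product, and the counterexample to \fu{2.1} for Cross-Product) are handled identically, and your ``model-remainder'' lemma is a repackaging of the paper's Proposition on syntax-independence of remainders (which states $\modrr{\remof{\kba,\kbu}}^\twis = \modrr{\remof{\kbb,\kbv}}^\twis$) together with its supporting lemma that equivalent formulas lie in the same remainders --- your bijection is exactly the combination of these two facts, and the derivations of \fu{4} for each operator from this lemma match the paper's.
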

\begin{proof}
	See \ref{app:belupd}, page~\pageref{proof:belupd:formula-based
	properties}.
\end{proof}

The following result establishes that formula-based operators such as WIDTIO
and regular Bold can be fully captured by exception-based operators. In
addition, operators such as Cross-Product can be captured for the case of a
single update.

\begin{theorem}
	[Formula-Based Updates Using Exception-Based Operators]
	\label{thm:belupd:formula-based by exception-based}
	If $\uopf$ is an update operator that satisfies \fu{1}, \fu{2.1} and \fu{4},
	then there exists an exception function \te{} such that for every \te-based
	update operator $\uope$ and all finite sequences of knowledge bases $\dkb$,
	$\mod{\biguopf \dkb} = \mod{\biguope \dkb}$.

	If $\uopf$ is an update operator that satisfies \bu{1}, \bu{2.1} and
	\fu{4}, then there exists an exception function \te{} such that for every
	\te-based update operator $\uope$ and all knowledge bases $\kba$, $\kbu$,
	$\mod{\kba \uopf \kbu} = \mod{\kba \uope \kbu}$.
\end{theorem}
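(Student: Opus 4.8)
The plan is to prove the two parts separately, building a different exception function for each, and in both cases leaning on \fu{4} (resp.\ \bu{4}) to make the exception function well defined. Throughout I identify $\sems$ with the finite set $\twis$ of propositional interpretations, so that every finite $\sstria \subseteq \pws{\twis}$ equals $\modr{\gkb}$ for some knowledge base $\gkb$; hence for every pair $(\sstria, \sstrib) \in \pws{\pws{\twis}} \times \pws{\pws{\twis}}$ I may fix \emph{canonical realizers} $\gkba^{*}$, $\gkbu^{*}$ with $\modr{\gkba^{*}}^\twis = \sstria^\twis$ and $\modr{\gkbu^{*}}^\twis = \sstrib^\twis$, chosen to depend only on $\sstria^\twis$ and $\sstrib^\twis$. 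I also use repeatedly that $\bigcap \modr{\gkb}^\twis = \mod{\gkb}$, so it suffices to match the $\twis$-augmented semantic characterisations.

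For the first part (postulates \fu{1}, \fu{2.1}, \fu{4}) the idea is a ``keep-or-trivialise'' exception function: put $\e(\stria, \sstria, \sstrib) = \emptyset$ if $\stria \in \modr{\gkba^{*} \uopf \gkbu^{*}}^\twis$, and $\e(\stria, \sstria, \sstrib) = \twis$ otherwise. By \fu{4}, $\modr{\gkba^{*} \uopf \gkbu^{*}}^\twis$ depends only on $\sstria^\twis, \sstrib^\twis$, so for \emph{any} $\gkba, \gkbu$ with $\modr{\gkba} = \sstria$ and $\modr{\gkbu} = \sstrib$ one has $\modr{\gkba^{*} \uopf \gkbu^{*}}^\twis = \modr{\gkba \uopf \gkbu}^\twis$; consequently, for each $\gfrm \in \gkba$, the set $\mod{\gfrm} \cup \e(\mod{\gfrm}, \modr{\gkba}, \modr{\gkbu})$ equals $\mod{\gfrm}$ when $\mod{\gfrm} \in \modr{\gkba \uopf \gkbu}^\twis$ and equals $\twis$ otherwise. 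Plugging this into \eqref{eq:exception-based update} and simplifying gives $\modr{\gkba \uope \gkbu}^\twis = \bigl(\modr{\gkba} \cap \modr{\gkba \uopf \gkbu}^\twis\bigr) \cup \modr{\gkbu}^\twis$, while \fu{1} (i.e.\ $\modr{\gkbu} \subseteq \modr{\gkba \uopf \gkbu}$) together with \fu{2.1} (i.e.\ $\modr{\gkba \uopf \gkbu} \subseteq \modr{\gkba} \cup \modr{\gkbu}$) show that $\modr{\gkba \uopf \gkbu}^\twis$ equals the same set; hence $\modr{\gkba \uope \gkbu}^\twis = \modr{\gkba \uopf \gkbu}^\twis$ for a single update. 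For arbitrary finite sequences I then prove $\modr{\biguope \dkb}^\twis = \modr{\biguopf \dkb}^\twis$ by induction on the length of $\dkb$: the empty case is immediate, and in the inductive step I set $B = \biguope \seq{\kb_\lia}_{\lia < \lng}$ and $A = \biguopf \seq{\kb_\lia}_{\lia < \lng}$, use the induction hypothesis $\modr{B}^\twis = \modr{A}^\twis$, apply the single-update identity to get $\modr{B \uope \kb_\lng}^\twis = \modr{B \uopf \kb_\lng}^\twis$, and finally invoke \fu{4} to replace $B$ by $A$ inside $\uopf$, obtaining $\modr{B \uopf \kb_\lng}^\twis = \modr{A \uopf \kb_\lng}^\twis = \modr{\biguopf \seq{\kb_\lia}_{\lia < \lng + 1}}^\twis$. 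Intersecting both sides yields $\mod{\biguope \dkb} = \mod{\biguopf \dkb}$.

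For the second part (postulates \bu{1}, \bu{2.1}, \fu{4}) only a single update is claimed, so a cruder exception function suffices: set $\e(\stria, \sstria, \sstrib) = \mathcal{W}(\sstria, \sstrib) := \mod{\gkba^{*} \uopf \gkbu^{*}} \cup \bigl(\twis \setminus \bigcap \sstrib\bigr)$, which is well defined by \fu{4} and, for any $\gkba, \gkbu$ with those characterisations, equals $\mod{\gkba \uopf \gkbu} \cup (\twis \setminus \mod{\gkbu})$. The two set-theoretic facts needed are: (i) $\mod{\gkba} \subseteq \mathcal{W}(\modr{\gkba}, \modr{\gkbu})$, because $\mod{\gkba} \cap \mod{\gkbu} \subseteq \mod{\gkba \uopf \gkbu}$ by \bu{2.1} and $\mod{\gkba} \setminus \mod{\gkbu} \subseteq \twis \setminus \mod{\gkbu}$; and (ii) $\mathcal{W}(\modr{\gkba}, \modr{\gkbu}) \cap \mod{\gkbu} = \mod{\gkba \uopf \gkbu}$, because $\mod{\gkba \uopf \gkbu} \subseteq \mod{\gkbu}$ by \bu{1}. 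Then, using distributivity of $\cap$ over $\cup$, $\mod{\gkba \uope \gkbu} = \bigcap_{\gfrm \in \gkba}\bigl(\mod{\gfrm} \cup \mathcal{W}(\modr{\gkba}, \modr{\gkbu})\bigr) \cap \mod{\gkbu} = \bigl(\mod{\gkba} \cup \mathcal{W}(\modr{\gkba}, \modr{\gkbu})\bigr) \cap \mod{\gkbu} = \mathcal{W}(\modr{\gkba}, \modr{\gkbu}) \cap \mod{\gkbu} = \mod{\gkba \uopf \gkbu}$, with the degenerate case $\gkba = \emptyset$ (where $\mod{\emptyset \uopf \gkbu} = \mod{\gkbu}$ follows from \bu{1} and \bu{2.1}) checked directly.

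The main obstacle is bookkeeping rather than conceptual. First, the exception function must be a genuine function of its three set-theoretic arguments, which is exactly why \fu{4}/\bu{4} is indispensable and why one must work with $\twis$-augmented characterisations throughout (knowledge bases agreeing modulo tautologies must receive the same exceptions). Second, the induction in the first part has to be run at the level of $\modr{\cdot}^\twis$, not merely $\mod{\cdot}$: it is the stronger, set-of-sets-level equality that lets \fu{4} relate $\biguopf$ on a prefix (evaluated by the real operator) to $\biguope$ on that prefix (evaluated by the exception-based one). Getting these two points right, and verifying the handful of Boolean identities involving $\modr{\cdot}^\twis$, is where the care lies.
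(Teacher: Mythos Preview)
Your proposal is correct and follows essentially the same approach as the paper. For the first part you use the identical ``keep-or-trivialise'' exception function and the same induction at the $\modr{\cdot}^\twis$ level; for the second part your exception function carries an extra term $\twis \setminus \bigcap \sstrib$ that the paper omits (it simply takes $\e(\stwi,\sstwia,\sstwib)=\mod{\kba\uopf\kbu}$ and computes $(\mod{\kba\uopf\kbu}\cup\mod{\kba})\cap\mod{\kbu}$ directly via \bu{1} and \bu{2.1}), but this is a cosmetic variation leading to the same Boolean simplification.
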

\begin{proof}
	See \ref{app:belupd}, page~\pageref{proof:belupd:formula-based by
	exception-based}.
\end{proof}

Similarly as with principles of model-based update operators, principles
\fu{1}, \fu{2.1} and \fu{4} are closely related with the properties of
\emph{base revision operators} \cite{Gardenfors1992,Hansson1993}. In
particular, two types of base revision are identified in \cite{Hansson1993},
the \emph{internal} and \emph{external base revision}. Both of them satisfy
base revision counterparts of \emph{success} and \emph{inclusion} and, in
addition, internal revision operators satisfy a property called
\emph{uniformity}. These three principles together entail that internal
revision operators satisfy \fu{1}, \fu{2.1} and one half of \fu{4}; the other
half can be achieved by putting additional constraints on the two-place
selection function that generates the revision operator, similar to the
\emph{regularity} condition we imposed on the Bold operator above. Such
regular internal revision operators are thus directly subject to
Theorem~\ref{thm:belupd:formula-based by exception-based}. The same however
does not hold for regular external revision operators as they need not satisfy
\emph{uniformity}. Note also that the WIDTIO and Bold operators coincide with
\emph{internal full meet base revision} and \emph{internal maxichoice base
revision} operators, respectively. 

\section{Conclusion}

\label{sec:discussion}

Throughout this paper we developed a novel perspective on knowledge updates
and demonstrated that it offers the first unifying ground for characterising
rule update semantics as well as both model- and formula-based classical
update operators.

More particularly, we defined a new monotonic characterisation of rules, the
\RE-models, and introduced a generic method for specifying semantic rule
update operators in which a logic program is viewed as the \emph{set of sets
of \RE-models of its rules} and updates are performed by introducing
additional interpretations to the sets of \RE-models of rules in the original
program. This framework allowed us to define concrete semantic rule update
operators that enjoy an interesting combination of syntactic as well as
semantic properties that had never been reconciled before. Furthermore, it
turned out that these operators can faithfully characterise the foundational
\JU- and \AS-semantics for rule updates.

These insights also allowed us to address the long enduring problem of
\emph{state condensing}, i.e.\ transforming a sequence of answer-set programs
-- interpreted as updates -- into a single answer-set program written in the
same alphabet. Partly, this problem emerges because some rule update semantics
employ complex mechanisms such as preferences or other minimality criteria
that make it impossible to encode the result in a single answer-set program.
Others have model-theoretic characterisations that assign non-minimal models
to certain update sequences, and it is well known that stable models of
non-disjunctive answer-set programs are minimal. By resorting to more
expressive classes of answer-set programs, namely nested and disjunctive, we
solved this problem for both \JU- and \AS-semantics. In all four cases, two
for each semantics using both classes of answer-set programs, the resulting
program is written with the same alphabet and is ready to be further updated.
We have illustrated with some examples that the resulting programs written
using nested answer-set programming are perhaps more readable than those
written using disjunctive answer-set programs, in the sense that they more
closely match the intuitions underlying the semantics for updates that we
consider.

Furthermore, we defined \emph{abstract exception-based operators} that can be
used in any knowledge representation formalism with a monotonic
model-theoretic semantics. Then we showed that exception-based operators for
propositional knowledge bases can fully capture update operators that form the
basis of ontology updates, such as the model-based Winslett's operator, or the
formula-based WIDTIO and Bold operators
\cite{Liu2006,Giacomo2009,Calvanese2010,Lenzerini2011}. The Cross-Product
operator can be captured when a single update is performed and the same can be
said about the Set-Of-Theories operator since for a single update it is
equivalent to the Cross-Product operator \cite{Winslett1990}. Nevertheless,
neither of these two operators offers a viable alternative for updating
ontologies -- Cross-Product requires that disjunctions of ontology axioms be
performed, which is typically not supported in DLs, and Set-Of-Theories
produces a disjunctive ontology which is impractical and deviates from
mainstream DL research.

Overall, exception functions and exception-based operators can capture both
traditional syntax-based approaches to rule updates as well as a wide range of
model- and formula-based belief update operators. Thus, they offer a uniform
framework that bridges two very distinct approaches to updates, previously
considered irreconcilable. These findings are essential to better understand
their interrelations. In addition, they open up new possibilities for
addressing updates of Hybrid Knowledge Bases consisting of both an ontology
and a rule component since the different methods used for dealing with ABox,
TBox and rule updates can be viewed uniformly by looking at their associated
exception functions. When coupled with a counterpart of \SE- or \RE-models in
the context of hybrid knowledge bases, this can lead to universal hybrid
update semantics which, in turn, can further improve our understanding of the
distinct update paradigms.

Our investigation also directly points to challenges that need to be tackled
next. First, semantic characterisations of additional rule update semantics
need to be investigated. This poses a number of challenges due to the need to
detect non-tautological irrelevant updates
\cite{Alferes2005,Sefranek2006,Sefranek2011}. For instance, the simple
functions examined in this paper, as well as the original \JU- and
\AS-semantics, cannot distinguish an update of $\set{\atma.}$ by $\prgu =
\set{\lpnot \atma \lpif \lpnot \atmb., \lpnot \atmb \lpif \lpnot \atma.}$,
where it is plausible to introduce the exception $\tpl{\emptyset, \emptyset}$
(and the stable model $\emptyset$ along with it), from an update of
$\set{\atma., \atmb.}$ by $\prgu$, where such an exception should not be
introduced due to the cyclic dependency of justifications to reject $(\atma.)$
and $(\atmb.)$. In such situations, context-aware functions need to be used.
Such functions have the potential of satisfying properties such as \pu{3} and
\pup{Associativity}. They would facilitate the search for condensing operators
for other rule update semantics and perhaps shed some light on the problem of
\emph{updating disjunctive programs} which has received very little attention
up until now.

Another challenge is to find additional logical characterisations of rules,
namely a notion of program equivalence that is weaker than \RR-equivalence but
stronger than \RE-equivalence so that both \pu{4} and properties such as
\pu{2.1} can be achieved under a single notion of program equivalence. In
this context, the close relationship between \RE-models and \emph{T-models}
\cite{Wong2007}, used in the context of \emph{forgetting} in logic programs,
asks for more attention as well.

Computational properties of different classes of exception-based update
operators should also be investigated and it might be interesting to look for
constrained classes of exception functions that satisfy syntax-independence
w.r.t.\ \SR-equivalence. Such functions, however, will not be able to respect
causal rejection because \SE-models cannot distinguish abolishing rules.

Our discussion of the expressivity of exception-based operators w.r.t.\
\emph{revision} operators, on both belief sets and belief bases, can be used
to tackle and unify approaches to \emph{ontology revision}
\cite{Qi2008,Halaschek-Wiener2006,Ribeiro2007}. This seems relevant even in
the context of ontology updates since it has been argued in the literature
that the strict distinction between revision and update is not suitable in the
context of ontologies \cite{Calvanese2010}.

\section*{Acknowledgements}\label{sec-ackn}
J.\ Leite was partially supported by Funda{\c c}\~ao para a Ci\^encia e a Tecnologia (FCT) under project NOVA LINCS ({UID}/{CEC}/{04516}/{2013}).

\appendix

\section{Proofs: Robust Equivalence Models}

\label{app:program-equivalence}

In this section we prove formal properties of \RE-models (c.f.\
Definition~\ref{def:re model}). We begin by defining a set of representatives
of rule equivalence classes induced by \SE-models, as it was introduced
in \cite{Slota2011} since it is needed in our proofs.

\begin{definition}
	[Transformation into an \SE-Canonical Rule \cite{Slota2011}]
	\label{def:se:transformation into canonical rule}
	Given a rule $\rl$, we define the \SE-canonical rule $\canse{\rl}$ as
	follows:
	\begin{textenum}[(i)]
		\item
			\label{case:def:se-canonical rule:1}
			If any of the sets $\hrl^+ \cap \brl^+$, $\hrl^- \cap \brl^-$ and
			$\brl^+ \cap \brl^-$ is non-empty, then $\canse{\rl} = \ctau$.

		\item
			\label{case:def:se-canonical rule:2}
			If \eqref{case:def:se-canonical rule:1} does not apply and $\hrl^+
			\setminus \brl^- \neq \emptyset$, then $\canse{\rl} = \Br{(\hrl^+ \setminus
			\brl^-); \lcmp{(\hrl^- \setminus \brl^+)} \lpif \brl^+, \lcmp{\brl^-}.}$.

		\item
			\label{case:def:se-canonical rule:3}
			If \eqref{case:def:se-canonical rule:1} does not apply and $\hrl^+
			\setminus \brl^- = \emptyset$, then $\canse{\rl}$ is the constraint
			$\Br{\lpif (\brl^+ \cup \hrl^-), \lcmp{\brl^-}.}$.
	\end{textenum}
\end{definition}

\begin{proposition}
	[\cite{Slota2011}]
	\label{prop:se:canonical equivalence}
	Every rule $\rl$ is \SE-equivalent to the \SE-canonical rule $\canse{\rl}$.
\end{proposition}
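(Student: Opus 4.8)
The plan is to prove $\modse{\rl} = \modse{\canse{\rl}}$ directly, by a case analysis following the three clauses of Definition~\ref{def:se:transformation into canonical rule}; write $\rl' = \canse{\rl}$ throughout. The key structural observation to record first is that the transformation never touches the body: in clauses (ii) and (iii) the set of body literals of $\rl'$ is literally that of $\rl$, so whenever the reducts $\rl^\twib$ and $(\rl')^\twib$ are non-trivial they are rules of the shape $(\hrl^+ \lpif \brl^+.)$ with the \emph{same} body $\brl^+$, differing only in the positive part of the head. A second preliminary remark: since clause (i) does not apply in clauses (ii) and (iii), the sets $\brl^+ \cap \brl^-$ and $\hrl^- \cap \brl^-$ are empty, so the constraint produced in clause (iii) is genuinely non-tautological. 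Given these remarks, establishing $\modse{\rl} = \modse{\rl'}$ reduces, in each clause, to two checks: (a) $\modc{\rl} = \modc{\rl'}$, and (b) for every $\twib \in \modc{\rl}$ and every $\twia \subseteq \twib$, $\twia \ent \rl^\twib$ iff $\twia \ent (\rl')^\twib$; together these say that $\tpl{\twia,\twib}$ is an \SE-model of $\rl$ iff it is one of $\rl'$.

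Clause (i), where $\rl' = \ctau$, amounts to showing that $\rl$ is tautological. I would treat the three non-emptiness conditions separately: if $a \in \brl^+ \cap \brl^-$ the body is unsatisfiable, so $\modc{\rl} = \twis$ and $\twia \not\ent \brl^+$ for every $\twia$; if $a \in \hrl^+ \cap \brl^+$, then whenever the body holds so does the head literal $a$, and the same atom makes every non-trivial reduct vacuously satisfied; if $a \in \hrl^- \cap \brl^-$, then the conditions $a \in \twib$ and $a \notin \twib$ governing the reduct cannot both hold, so $\rl^\twib = \ctau$ for every $\twib$, while the head literal $\lcmp{a} = \lpnot a$ is satisfied whenever the body holds. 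In each sub-case a short check of $\modc{\rl}$ and of the reducts yields $\modse{\rl} = \tris = \modse{\ctau}$.

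For clauses (ii) and (iii), the engine of the argument is the pair of facts: if $\twib \ent \brl$ then every atom of $\brl^-$ is false in $\twib$ and every atom of $\brl^+$ is true in $\twib$; and $\twia \subseteq \twib$. I would verify (b) by cases on $\twib$ and $\twia$: (1) if $\brl^- \cap \twib \neq \emptyset$, both reducts are $\ctau$; (2) if $\brl^- \cap \twib = \emptyset$ but $\brl^+ \not\subseteq \twia$, then whatever the reducts are, $\twia$ satisfies them both vacuously since their bodies contain $\brl^+$; (3) if $\brl^- \cap \twib = \emptyset$ and $\brl^+ \subseteq \twia$, then $\brl^- \cap \twia = \emptyset$, so the positive head parts agree on $\twia$ (i.e.\ $\hrl^+ \cap \twia = (\hrl^+ \setminus \brl^-) \cap \twia$, and in clause (iii) this common set is empty because $\hrl^+ \subseteq \brl^-$), and the triviality conditions of the two reducts coincide — for clause (ii) because $\brl^+ \subseteq \twib$ makes $\hrl^- \subseteq \twib$ equivalent to $(\hrl^- \setminus \brl^+) \subseteq \twib$, and for clause (iii) because the only potentially mismatching case, namely $\brl^- \cap \twib = \emptyset$ together with $\brl^+ \subseteq \twib$ and $\hrl^- \subseteq \twib$, forces $\twib \notin \modc{\rl}$ (all four disjuncts in the description of $\modc{\rl}$ fail), hence does not occur. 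Check (a) is a direct Boolean computation using $\hrl^+ \cap \twib = (\hrl^+ \setminus \brl^-) \cap \twib$ when $\brl^- \cap \twib = \emptyset$, the analogous identity for $\hrl^-$ when $\brl^+ \subseteq \twib$, and (in clause (iii)) the inclusion $\hrl^+ \subseteq \brl^-$, which makes the $\hrl^+$-disjunct of the description of $\modc{\rl}$ redundant.

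I expect the only real obstacle to be the bookkeeping in clause (iii): because a default literal in the head of a rule behaves like an integrity constraint, $\rl^\twib$ collapses to $\ctau$ under a condition strictly weaker than $\brl^- \cap \twib \neq \emptyset$, so a naive comparison of the triviality conditions of $\rl^\twib$ and $(\rl')^\twib$ breaks down; the resolution, as indicated above, is that the offending case is eliminated by the hypothesis $\twib \in \modc{\rl}$ — one must use that an \SE-model requires $\twib$ to be a \emph{classical model} of the rule, not merely that $\twib$ satisfy its body. Everything else is routine propositional manipulation, most of it absorbed by case (2) above, in which all reducts are satisfied vacuously.
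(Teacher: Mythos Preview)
Your direct verification is essentially correct, and the case analysis you outline does go through, but your opening ``key structural observation'' is false as stated: in clause~(iii) the body of $\rl' = \canse{\rl}$ is $(\brl^+ \cup \hrl^-), \lcmp{\brl^-}$, not $\brl^+, \lcmp{\brl^-}$ --- the atoms of $\hrl^-$ are moved into the positive body. Fortunately your later arguments only use the weaker fact that both reduced bodies \emph{contain} $\brl^+$ (which you do say explicitly in case~(2)), so the slip does not propagate. Your handling of case~(3) for clause~(iii) is also right in substance --- the crucial point is that under $\brl^- \cap \twib = \emptyset$, $\brl^+ \subseteq \twib$ and $\twib \in \modc{\rl}$ one necessarily has $\hrl^- \not\subseteq \twib$, whence $\rl^\twib = \ctau$ while at the same time $\hrl^- \not\subseteq \twia$ ensures $(\brl^+ \cup \hrl^-) \not\subseteq \twia$ --- but your phrasing that the ``triviality conditions of the two reducts coincide'' is inaccurate: in this sub-case $\rl^\twib$ is trivial and $(\rl')^\twib$ is not, yet both are satisfied by $\twia$.

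As for comparison with the paper: this proposition is imported from \cite{Slota2011} and is not proven here. The paper does, however, prove the exact \RE-analogue (Proposition~\ref{prop:re:canonical equivalence}) by a different, more modular route: it first isolates two equivalence-preserving rewrite steps --- Lemma~\ref{lemma:re:tautology} for the three tautological patterns of clause~(i), and Lemma~\ref{lemma:re:head repetition} for deleting from the head a literal whose complement occurs in the body --- and then observes that the canonical form is reached by iterating the second lemma. The identical strategy works for \SE-models and is presumably what \cite{Slota2011} does. Compared with your monolithic SE-model computation, that approach avoids the delicate bookkeeping on reducts in clause~(iii) and yields reusable lemmas; your direct method, on the other hand, needs no auxiliary results and makes the underlying Boolean identities fully explicit.
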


\begin{corollary}
	[\cite{Slota2011}]
	\label{cor:se:canonical not equivalent}
	No two different \SE-canonical rules are \SE-equivalent.
\end{corollary}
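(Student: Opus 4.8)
The plan is to prove the contrapositive: if $\rla$ and $\rlb$ are \SE-canonical rules with $\modse{\rla} = \modse{\rlb}$, then $\rla = \rlb$. Concretely, I will show that every \SE-canonical rule can be reconstructed from its set of \SE-models, by reading off the shape of $\modse{\rl}$ for each of the three families of canonical rules in Definition~\ref{def:se:transformation into canonical rule}. Throughout it is convenient to work with the complement $F_\rl = \tris \setminus \modse{\rl}$. The argument has two stages: first determine, from $\modse{\rl}$ alone, which of the three cases $\rl$ belongs to; then, within each case, recover the defining sets of atoms.

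For the first stage, case~(i) produces exactly $\ctau$, and $\ctau$ is the unique \SE-canonical rule with $F_\rl = \emptyset$: I would check that every canonical rule of type~(ii) or~(iii) is non-tautological by exhibiting an explicit non-\SE-model --- for a constraint $\br{\lpif \slit^+, \lcmp{\slit^-}.}$ with $\slit^+ \cap \slit^- = \emptyset$ the interpretation $\tpl{\slit^+, \slit^+}$ works since $\slit^+ \cap \slit^- = \emptyset$ forces $\slit^+ \ent \slit$, and for a type-(ii) rule $\br{\hrl^+; \lcmp{\hrl^-} \lpif \brl^+, \lcmp{\brl^-}.}$ (with $\hrl^+ \neq \emptyset$ and $\hrl^+, \hrl^-, \brl^+, \brl^-$ pairwise disjoint except possibly $\hrl^+ \cap \hrl^-$) the pair $\tpl{\brl^+, \brl^+ \cup \hrl^-}$ works, using $\hrl^+ \neq \emptyset$ and those disjointness relations. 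Among the remaining, non-tautological canonical rules, the constraints are exactly those for which $F_\rl$ depends only on the second component of the three-valued interpretation: for a constraint one computes $F_\rl = \Set{\tpl{\twia,\twib} | \slit^+ \subseteq \twib \text{ and } \slit^- \cap \twib = \emptyset}$, whereas for a type-(ii) rule one can always exhibit a fixed $\twib$ --- e.g.\ $\twib = \atms \setminus \brl^-$ --- for which membership of $\tpl{\twia,\twib}$ in $F_\rl$ genuinely varies with $\twia$ (because $\hrl^+ \neq \emptyset$). Hence the ``type'' of $\rl$ is a function of $\modse{\rl}$.

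For the second stage, the constraint case is easy: $\Set{\twib | \tpl{\twib,\twib} \notin \modse{\rl}}$ is the subcube $\Set{\twib | \slit^+ \subseteq \twib,\ \slit^- \cap \twib = \emptyset}$, which is non-empty, and from a non-empty subcube one recovers $\slit^+$ as its intersection and $\slit^-$ as the complement of its union. The substantive step is type~(ii), for which I would first describe $F_\rl$ explicitly (membership of $\tpl{\twia,\twib}$ amounts to: $\twib$ lies in the subcube cut out by $\brl^+ \cup \hrl^-$ and $\brl^-$, and either $\hrl^+ \cap \twib = \emptyset$, or $\hrl^+ \cap \twib \neq \emptyset$ together with $\brl^+ \subseteq \twia$ and $\hrl^+ \cap \twia = \emptyset$) and then extract: the subcube $\Set{\twib | \exists \twia : \tpl{\twia,\twib} \in F_\rl}$, whose intersection is $\brl^+ \cup \hrl^-$ and the complement of whose union is $\brl^-$; the set $\hrl^+$, recovered as $\twib_0 \setminus \max\Set{\twia | \tpl{\twia, \twib_0} \in F_\rl}$ at $\twib_0 = \atms \setminus \brl^-$; the set $\brl^+$, recovered as $\min\Set{\twia | \tpl{\twia,\twib} \in F_\rl}$ over those $\twib$ in the subcube with $\hrl^+ \cap \twib \neq \emptyset$; and finally $\hrl^- = (\brl^+ \cup \hrl^-) \setminus \brl^+$, legitimate because $\brl^+ \cap \hrl^- = \emptyset$ in the canonical form. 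This pins down $\rl$ uniquely, so distinct \SE-canonical rules of the same type have distinct \SE-model sets, completing the proof.

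I expect the main obstacle to be precisely the explicit description of $F_\rl$ in the type-(ii) case and the verification that the above $\min/\max$ extractions succeed; the bookkeeping is complicated by the fact that Definition~\ref{def:se:transformation into canonical rule} permits a canonical rule with an atom occurring both positively and negatively in its head, so several of the interpretations used in the recovery must be chosen carefully from the disjointness relations that are actually available. An alternative packaging of the same computations --- perhaps closer to how the result follows from the proof of Proposition~\ref{prop:se:canonical equivalence} --- is to verify that $\canse{\cdot}$ is idempotent (every \SE-canonical rule is its own \SE-canonical rule, immediate from the definition) and that $\canse{\rl}$ depends only on $\modse{\rl}$; then for \SE-canonical $\rla$, $\rlb$ with $\rla \eqSE \rlb$ one gets $\rla = \canse{\rla} = \canse{\rlb} = \rlb$.
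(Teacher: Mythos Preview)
Your approach is correct and is essentially the same strategy the paper uses for the analogous \RE{} statement. Note, however, that the paper does not actually prove Corollary~\ref{cor:se:canonical not equivalent}: it is imported from \cite{Slota2011} without proof. The closest in-paper argument is the \RE{} analogue, Corollary~\ref{cor:re:canonical not equivalent}, which follows from Proposition~\ref{prop:re:canonical from models}: the paper defines a single function $\rulere{\cdot}$ (Definition~\ref{def:re:models to rule}) that reads off, from an arbitrary set $\stri$ of three-valued interpretations, the four atom sets $\bstrire^-,\hstrire^+,\bstrire^+,\hstrire^-$ via conditions of the form ``$\atm \in \twib \mlthen \twiab \in \stri$'' etc., and then shows (via Lemma~\ref{lemma:re:negative body}) that $\rulere{\modre{\rl}} = \rl$ for every \RE-canonical $\rl$. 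Your reconstruction does the same job for \SE-models, but by a more ad hoc case split (first recover the type, then the atom sets using subcube intersections/unions and $\min/\max$ of the first component). The paper's packaging is cleaner and avoids the separate type-detection stage, while your version makes the geometry of $F_\rl$ more explicit; either route yields the corollary, and your ``alternative packaging'' in the last paragraph is exactly the abstract form of the paper's \RE{} argument.
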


The following presentation follows a similar pattern as the one used in
\cite{Slota2011} for \SE-models. In particular, we introduce a set of
representatives of rule equivalence classes induced by \RE-models and show how
to reconstruct a representative from the set of its \RE-models. Then we prove
the properties of \RE-models that have been formulated in
Section~\ref{sec:robust equivalence models}.

\begin{remark}
	We use the following additional notation: For any rule $\rl$, $\rl^+$
	denotes the rule $(\hrlp \lpif \brlp.)$ and $\rl^-$ denotes the rule
	$\Br{\lcmp{\hrln} \lpif \lcmp{\brln}.}$. Note that the definition of an
	\RE-model (c.f.\ Definition~\ref{def:re model}) implies that $\twiab \in
	\modre{\rl}$ if and only if $\twia \ent \rl^+$ or $\twib \ent \rl^-$. This
	fact is used implicitly in the following proofs.
\end{remark}


\begin{lemma}
	\label{lemma:re:tautology}
	Rules of the following forms are \RE-tautological:
	\begin{align*}
		\atm; \slith &\lpif \atm, \slitb. &
		\slith; \lpnot \atm &\lpif \slitb, \lpnot \atm. &
		\slith \lpif \slitb, \atm, \lpnot \atm.
	\end{align*}
\end{lemma}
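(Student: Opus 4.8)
The plan is to reduce the whole statement to the equivalent membership criterion recorded in the remark immediately preceding the lemma: for any rule $\rl$ and any $\twiab \in \tris$, $\twiab \in \modre{\rl}$ if and only if $\twia \ent \rl^+$ or $\twib \ent \rl^-$, where $\rl^+ = \Br{\hrlp \lpif \brlp.}$ and $\rl^- = \Br{\lcmp{\hrln} \lpif \lcmp{\brln}.}$. To show that a rule is \RE-tautological it then suffices to check that, for every $\twiab \in \tris$, at least one of these two disjuncts holds. I will treat the three displayed forms in turn; in each, $\slith$ and $\slitb$ are arbitrary sets of literals and $\atm$ is an atom, and the argument will not depend on what $\slith$ and $\slitb$ contain.

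For $\rl = \Br{\atm; \slith \lpif \atm, \slitb.}$ the atom $\atm$ belongs to $\hrlp \cap \brlp$, so $\rl^+$ carries the same atom in its head and its body and is satisfied by every two-valued interpretation: any $\twia$ with $\twia \ent \brlp$ contains $\atm$ and hence satisfies $\hrlp$, so $\twia \ent \rl^+$. Thus the first disjunct holds for every $\twiab$. For $\rl = \Br{\slith; \lpnot \atm \lpif \slitb, \lpnot \atm.}$ the situation is dual: $\atm \in \hrln \cap \brln$, so $\lpnot \atm$ occurs in both the head and the body of $\rl^-$, which is therefore satisfied by every interpretation, and the second disjunct holds for every $\twiab$. (Alternatively, one sees straight from Definition~\ref{def:re model} that the condition for the reduct $\rl^\twib$ to differ from $\ctau$ — namely $\twib \ent \lcmp{\brln}$ and $\twib \ent \hrln$ — forces both $\atm \notin \twib$ and $\atm \in \twib$, so $\rl^\twib = \ctau$ for every $\twib$.)

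The third form, $\rl = \Br{\slith \lpif \slitb, \atm, \lpnot \atm.}$, is the only one requiring a case split, and it is the one spot where the argument is not purely mechanical — though it remains routine. Here $\atm \in \brlp$ and $\atm \in \brln$. Fix $\twiab \in \tris$. If $\atm \notin \twia$, then $\twia$ falsifies the body of $\rl^+$, so $\twia \ent \rl^+$; if $\atm \in \twia$, then $\atm \in \twib$ because $\twia \subseteq \twib$, so $\twib$ falsifies $\lpnot \atm$ and hence the body of $\rl^-$, giving $\twib \ent \rl^-$. Either way some disjunct holds. Since in all three cases $\twiab$ was arbitrary, $\modre{\rl} = \tris$, i.e.\ $\rl$ is \RE-tautological. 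I anticipate no genuine obstacle; the only care needed is to keep the head/body sign conventions straight while reading off $\rl^+$ and $\rl^-$, and to remember that $\twia \subseteq \twib$ is what couples the two disjuncts in the third case.
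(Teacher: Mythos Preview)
Your proposal is correct and follows essentially the same approach as the paper's proof: both use the $\twia \ent \rl^+$ or $\twib \ent \rl^-$ criterion, handle the first two forms by observing the shared atom makes $\rl^+$ (resp.\ $\rl^-$) vacuously satisfied, and treat the third form by a case split exploiting $\twia \subseteq \twib$. The only cosmetic difference is that the paper splits the third case on whether $\twib \ent \rl^-$ while you split on whether $\atm \in \twia$; the two are interchangeable.
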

\begin{proof}
	\label{proof:lemma:re:tautology}
	First assume that a rule $\rl$ is of the first form and take some $\twiab
	\in \tris$. We need to show that $\twiab$ is an \RE-model of $\rl$, so it
	suffices to show that $\twia \ent \rl^+$. This follows from the fact that
	$\atm$ belongs to both $\hrl^+$ and $\brl^+$.

	Now suppose that $\rl$ is of the second form. Given some $\twiab \in \tris$,
	we see that the atom $\atm$ belongs to both $\hrl^-$ and $\brl^-$, so $\twib
	\ent \rl^-$. Hence, $\twiab$ is an \RE-model of $\rl$.
	
	Finally, suppose that $\rl$ takes the third form and take some $\twiab \in
	\tris$. If $\twib \ent \rl^-$, then $\twiab$ is an \RE-model of $\rl$. On
	the other hand, if $\twib \nent \rl^-$, then $\twib \ent \lpnot \atm$ and,
	consequently, $\twia \nent \atm$ because $\twia$ is a subset of $\twib$.
	This implies that $\twia \ent \rl^+$, so we can once again conclude that
	$\twiab$ is an \RE-model of $\rl$.
\end{proof}


\begin{lemma}
	\label{lemma:re:head repetition}
	The following pairs of rules are \RE-equivalent:
	\begin{textenum}[(1)]
		\item
			\label{lemma:re:head repetition:positive}
			$(\atm; \slith \lpif \slitb, \lpnot \atm.)$ and $(\slith \lpif
			\slitb, \lpnot \atm.)$;

		\item
			\label{lemma:re:head repetition:negative}
			$(\slith; \lpnot \atm \lpif \atm, \slitb.)$ and $(\slith \lpif \atm,
			\slitb.)$.
	\end{textenum}
\end{lemma}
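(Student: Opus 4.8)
The plan is to compute \RE-models directly through the reformulation in the Remark above: for any rule, $\twiab$ is an \RE-model of it exactly when $\twia$ satisfies the rule's \emph{positive part} $(\hrlp \lpif \brlp.)$ (keeping only the positively occurring atoms) or $\twib$ satisfies its \emph{negative part} $(\lcmp{\hrln} \lpif \lcmp{\brln}.)$ (keeping only the default-negated atoms). The crucial observation is that in each of the two pairs the two rules share the same body and differ only by one literal added to the head, so that either their positive parts or their negative parts coincide; and in the only case where the non-identical halves can matter, the common body fixes the truth value of $\atm$, making the extra head literal inert.

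For the first pair, write $\rla = (\atm; \slith \lpif \slitb, \lpnot \atm.)$ and $\rlb = (\slith \lpif \slitb, \lpnot \atm.)$, with positive parts $\rla^+, \rlb^+$ and negative parts $\rla^-, \rlb^-$. The two rules have the same body and the same default-negated head literals, differing only by the positive head atom $\atm$; hence $\rla^- = \rlb^-$, while $\rla^+$ and $\rlb^+$ differ only in that $\atm$ is an extra disjunct in the head of $\rla^+$. So $\twiab$ can belong to exactly one of $\modre{\rla}, \modre{\rlb}$ only when $\twib \nent \rla^-$. But $\atm$ occurs under default negation in the common body, so $\twib \nent \rla^-$ forces $\atm \notin \twib$, and therefore $\atm \notin \twia$ since $\twia \subseteq \twib$. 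With $\atm \notin \twia$, the extra disjunct $\atm$ can never witness satisfaction of $\rla^+$, hence $\twia \ent \rla^+$ iff $\twia \ent \rlb^+$, and so $\modre{\rla} = \modre{\rlb}$.

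The second pair is dual. Write $\rla = (\slith; \lpnot \atm \lpif \atm, \slitb.)$ and $\rlb = (\slith \lpif \atm, \slitb.)$. Now the common body and the common positive head atoms yield $\rla^+ = \rlb^+$, while $\rla^-$ merely carries the extra disjunct $\lpnot \atm$ in its head. Thus $\twiab$ can distinguish the two \RE-model sets only when $\twia \nent \rla^+$, and then the common body is satisfied in $\twia$; since $\atm$ occurs positively there, $\atm \in \twia$, hence $\atm \in \twib$. With $\atm \in \twib$, the extra default literal $\lpnot \atm$ cannot witness satisfaction of $\rla^-$, hence $\twib \ent \rla^-$ iff $\twib \ent \rlb^-$, and again $\modre{\rla} = \modre{\rlb}$.

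I do not expect a real obstacle; the care needed is purely notational, since $\slith$ and $\slitb$ stand for arbitrary sets of literals, so one must track which atoms fall into each of $\hrlp, \hrln, \brlp, \brln$ when forming the positive and negative parts, and confirm that adjoining $\atm$ (respectively $\lpnot \atm$) to the head changes exactly one of these four sets. The degenerate subcases --- an unsatisfiable common body, or an empty head in the negative part --- need no separate treatment, as they make the two rules of a pair agree on every $\twiab$ and are therefore already subsumed.
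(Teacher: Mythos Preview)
Your proof is correct and follows essentially the same approach as the paper's: both arguments use the $\rl^+/\rl^-$ decomposition from the Remark, observe that the two rules in each pair share one of the two parts, reduce to the case where that shared part is falsified, and then use the occurrence of $\atm$ (resp.\ $\lpnot\atm$) in the common body together with $\twia\subseteq\twib$ to pin down the truth value of $\atm$ in the relevant component, rendering the extra head literal inert.
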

\begin{proof}
	~
	\begin{textenum}[(1)]
		\item Let the first rule be denoted by $\rl_1$, the second by $\rl_2$ and
			take some $\twiab \in \tris$. We need to show that $\twiab$ is an
			\RE-model of $\rl_1$ if and only if it is an \RE-model of $\rl_2$. Thus,
			it suffices to prove the following:
			\begin{equation}
				\label{eq:lemma:re:positive head repetition:1}
				\twia \ent \rl_1^+ \lor \twib \ent \rl_1^-
				\qquad \text{if and only if} \qquad
				\twia \ent \rl_2^+ \lor \twib \ent \rl_2^-
				\enspace.
			\end{equation}
			First note that $\rl_1^- = \rl_2^-$, so $\twib \ent \rl_1^-$ holds if
			and only if $\twib \ent \rl_2^-$. So if $\twib \ent \rl_1^-$, then
			$\twib \ent \rl_2^-$ and we can conclude that
			\eqref{eq:lemma:re:positive head repetition:1} holds. On the other hand,
			if $\twib \nent \rl_1^-$, then $\twib \nent \rl_2^-$ and
			\eqref{eq:lemma:re:positive head repetition:1} reduces to proving that
			$\twia \ent \rl_1^+$ holds if and only if $\twia \ent \rl_2^+$. Now it
			suffices to observe that $\twib \nent \rl_1^-$ implies $\twib \ent
			\lpnot \atm$ and since $\twia$ is a subset of $\twib$, we can conclude
			that $\twia \nent \atm$. Since $\rl_1^+$ differs from $\rl_2^+$ only in
			the head atom $\atm$, our claim follows.

		\item Let the first rule be denoted by $\rl_1$, the second by $\rl_2$ and
			take some $\twiab \in \tris$. We need to show that $\twiab$ is an
			\RE-model of $\rl_1$ if and only if it is an \RE-model of $\rl_2$. Thus,
			it suffices to prove the following:
			\begin{equation}
				\label{eq:lemma:re:negative head repetition:1}
				\twia \ent \rl_1^+ \lor \twib \ent \rl_1^-
				\qquad \text{if and only if} \qquad
				\twia \ent \rl_2^+ \lor \twib \ent \rl_2^-
				\enspace.
			\end{equation}
			First note that $\rl_1^+ = \rl_2^+$, so $\twia \ent \rl_1^+$ holds if
			and only if $\twia \ent \rl_2^+$. So if $\twia \ent \rl_1^+$, then
			$\twia \ent \rl_2^+$ and we can conclude that
			\eqref{eq:lemma:re:negative head repetition:1} holds. On the other hand,
			if $\twia \nent \rl_1^+$, then $\twia \nent \rl_2^+$ and
			\eqref{eq:lemma:re:negative head repetition:1} reduces to proving that
			$\twib \ent \rl_1^-$ holds if and only if $\twib \ent \rl_2^-$. Now it
			suffices to observe that $\twia \nent \rl_1^+$ implies $\twia \ent \atm$
			and since $\twia$ is a subset of $\twib$, we can conclude that $\twib
			\nent \lpnot \atm$. Since $\rl_1^-$ differs from $\rl_2^-$ only in the
			head literal $\lpnot \atm$, our claim follows. \qedhere
	\end{textenum}
\end{proof}

\begin{definition}
	[\RE-Canonical Rule]
	\label{def:re:canonical rule}
	We say that a rule $\rl$ is \emph{\RE-canonical} if either it is $\ctau$, or
	the sets $\hrl^+ \cup \hrl^-$, $\brl^+$ and $\brl^-$ are pairwise disjoint.
\end{definition}


\begin{definition}
	[Transformation into an \RE-Canonical Rule]
	\label{def:re:transformation into canonical rule}
	Given a rule $\rl$, we define the \RE-canonical rule $\canre{\rl}$ as follows:
	\begin{textenum}[(i)]
		\item
			\label{case:def:re-canonical rule:1}
			If any of the sets $\hrl^+ \cap \brl^+$, $\hrl^- \cap \brl^-$ and
			$\brl^+ \cap \brl^-$ is non-empty, then $\canre{\rl} = \ctau$.
		\item
			\label{case:def:re-canonical rule:2}
			If \eqref{case:def:re-canonical rule:1} does not apply, then
			$\canre{\rl}$ is the rule
			$\Br{
				(\hrl^+ \setminus \brl^-); \lcmp{(\hrl^- \setminus \brl^+)}
				\lpif
				\brl^+, \lcmp{\brl^-}.
			}$.
	\end{textenum}
\end{definition}


\begin{proposition}
	\label{prop:re:canonical equivalence}
	For every rule $\rl$, $\modre{\rl} = \modre{\canre{\rl}}$.
\end{proposition}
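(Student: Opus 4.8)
The plan is to mirror the treatment of \SE-canonical rules in \cite{Slota2011}, splitting according to the two cases of Definition~\ref{def:re:transformation into canonical rule}.

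In case~\eqref{case:def:re-canonical rule:1}, some atom $\atm$ witnesses that one of $\hrl^+ \cap \brl^+$, $\hrl^- \cap \brl^-$, $\brl^+ \cap \brl^-$ is non-empty. Letting $\slith$ and $\slitb$ denote the head and the body of $\rl$ with the relevant occurrence of $\atm$ removed, one reads off that $\rl$ has, respectively, the first, the second, or the third form listed in Lemma~\ref{lemma:re:tautology}, so $\modre{\rl} = \tris$. Since here $\canre{\rl} = \ctau$ and $\ctau$ is an instance of the first form in Lemma~\ref{lemma:re:tautology}, also $\modre{\canre{\rl}} = \tris$, and the two sets coincide.

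The substantive case is~\eqref{case:def:re-canonical rule:2}, in which the three intersections above are empty and $\canre{\rl}$ arises from $\rl$ by deleting from $\hrl^+$ the atoms lying in $\brl^-$ and from $\hrl^-$ the atoms lying in $\brl^+$, while leaving the body unchanged. I would induct on $n(\rl) = |\hrl^+ \cap \brl^-| + |\hrl^- \cap \brl^+|$. If $n(\rl) = 0$ then $\rl = \canre{\rl}$ and there is nothing to prove. If $n(\rl) > 0$, then either there is $\atm \in \hrl^+ \cap \brl^-$, in which case $\rl$ matches the left-hand side of part~(1) of Lemma~\ref{lemma:re:head repetition} with $\slith = \hrl \setminus \set{\atm}$ and $\slitb = \brl \setminus \set{\lpnot \atm}$, or there is $\atm \in \hrl^- \cap \brl^+$, in which case $\rl$ matches the left-hand side of part~(2) with $\slith = \hrl \setminus \set{\lpnot \atm}$ and $\slitb = \brl \setminus \set{\atm}$. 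In either case the lemma yields a rule $\rl'$ with $\modre{\rl'} = \modre{\rl}$, the same body as $\rl$, and $n(\rl') = n(\rl) - 1$; furthermore none of the three forbidden intersections grows, so $\rl'$ again falls under case~\eqref{case:def:re-canonical rule:2} and $\canre{\rl'} = \canre{\rl}$. By the induction hypothesis $\modre{\rl'} = \modre{\canre{\rl'}}$, whence $\modre{\rl} = \modre{\canre{\rl}}$.

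The argument has no real obstacle; the only care needed is in the bookkeeping --- confirming that the ``remainder'' literal sets $\slith, \slitb$ fed into Lemmas~\ref{lemma:re:tautology} and~\ref{lemma:re:head repetition} are allowed to be arbitrary (these lemmas carry no disjointness hypotheses), and checking that removing a single head literal neither enlarges $\hrl^+ \cap \brl^+$, $\hrl^- \cap \brl^-$, or $\brl^+ \cap \brl^-$ nor changes the body, so that the induction measure strictly decreases and $\canre{\cdot}$ is preserved along the rewriting chain.
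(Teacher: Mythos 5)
Your proof is correct and follows essentially the same route as the paper's: the tautological case is dispatched via Lemma~\ref{lemma:re:tautology} exactly as in the paper, and your induction on $|\hrl^+ \cap \brl^-| + |\hrl^- \cap \brl^+|$ is just a formalisation of the paper's ``iterative application'' of Lemma~\ref{lemma:re:head repetition} removing the offending head literals one by one. The bookkeeping you make explicit (that the forbidden intersections do not grow, the body is unchanged, and $\canre{\cdot}$ is preserved along the chain) is precisely what the paper leaves implicit under the word ``careful''.
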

\begin{proof}
	This can be shown by a careful iterative application of
	Lemmas~\ref{lemma:re:tautology} and \ref{lemma:re:head repetition}. First
	observe that if $\canre{\rl} = \ctau$, then Lemma~\ref{lemma:re:tautology}
	implies that $\rl$ is \RE-tautological, thus indeed \RE-equivalent to
	$\ctau$.

	In the principal case we can use Lemma~\ref{lemma:re:head
	repetition}\eqref{lemma:re:head repetition:positive} on all atoms from
	$\hrl^+ \cap \brl^-$ and remove them one by one from $\hrl^+$ while
	preserving \RE-models. A similar situation occurs with atoms from $\hrl^-
	\cap \brl^+$ which can be, according to Lemma \ref{lemma:re:head
	repetition}\eqref{lemma:re:head repetition:negative}, removed from $\hrl^-$
	without affecting \RE-models. After these steps are performed, the resulting
	rule coincides with $\canre{\rl}$.
\end{proof}





\begin{lemma}
	\label{lemma:re:model conditions}
	For any rule $\rl$ and $\twiab \in \tris$, $\twiab \notin \modre{\rl}$ if
	and only if $\brl^+ \subseteq \twia \subseteq \atms \setminus \hrl^+$ and
	$\hrl^- \subseteq \twib \subseteq \atms \setminus \brl^-$.
\end{lemma}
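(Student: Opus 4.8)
The plan is to reduce the claim to two elementary facts about classical satisfaction of rules, using the reformulation of \RE-models recorded in the Remark above: for any rule $\rl$ and $\twiab \in \tris$, $\twiab \in \modre{\rl}$ if and only if $\twia \ent \rl^+$ or $\twib \ent \rl^-$, where $\rl^+ = (\hrl^+ \lpif \brl^+.)$ and $\rl^- = \Br{\lcmp{\hrl^-} \lpif \lcmp{\brl^-}.}$. Negating this, $\twiab \notin \modre{\rl}$ holds precisely when $\twia \nent \rl^+$ \emph{and} $\twib \nent \rl^-$, so it suffices to translate each of these two non-satisfaction conditions into the set-containments appearing in the statement.

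First I would treat $\twia \nent \rl^+$. Since $\rl^+$ is a definite rule whose head is the disjunction of the atoms in $\hrl^+$ and whose body is the conjunction of the atoms in $\brl^+$, the interpretation $\twia$ fails to satisfy $\rl^+$ exactly when it satisfies the body but falsifies the head, i.e.\ $\brl^+ \subseteq \twia$ and $\twia \cap \hrl^+ = \emptyset$; equivalently, $\brl^+ \subseteq \twia \subseteq \atms \setminus \hrl^+$.

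Next I would treat $\twib \nent \rl^-$. The head of $\rl^-$ is the disjunction of the default literals $\lpnot \atm$ for $\atm \in \hrl^-$ and its body is the conjunction of the default literals $\lpnot \atm$ for $\atm \in \brl^-$. Hence $\twib$ satisfies the body of $\rl^-$ iff $\atm \notin \twib$ for every $\atm \in \brl^-$, i.e.\ $\twib \cap \brl^- = \emptyset$, and $\twib$ falsifies the head of $\rl^-$ iff no $\atm \in \hrl^-$ is absent from $\twib$, i.e.\ $\hrl^- \subseteq \twib$. Therefore $\twib \nent \rl^-$ iff $\hrl^- \subseteq \twib \subseteq \atms \setminus \brl^-$. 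Conjoining this with the translation from the previous paragraph yields exactly the claimed equivalence, and it covers the degenerate cases (any of $\hrl^+$, $\hrl^-$, $\brl^+$, $\brl^-$ empty) automatically, since empty disjunctions are never satisfied and empty conjunctions always are.

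I do not expect a genuine obstacle: the argument is a direct unfolding of definitions. The only points that need a little care are (i) invoking the Remark so that the single condition $\twia \ent \rl^\twib$ splits into two independent classical conditions, one on $\twia$ via $\rl^+$ and one on $\twib$ via $\rl^-$ (alternatively one can argue straight from the reduct definition, distinguishing whether $\rl^\twib = \ctau$ or $\rl^\twib = \rl^+$, which amounts to the same case split); and (ii) reading the failure of a disjunctive head as an \emph{empty-intersection} condition ($\twia \cap \hrl^+ = \emptyset$, respectively $\hrl^- \subseteq \twib$) rather than a stronger subset condition. Everything else is bookkeeping with the positive- and negative-part notation.
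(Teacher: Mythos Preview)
Your proposal is correct and follows essentially the same approach as the paper: negate the disjunctive characterisation from the Remark to obtain $\twia \nent \rl^+$ and $\twib \nent \rl^-$, then translate each non-satisfaction condition into the corresponding set-inclusions. The paper's proof is just a terse version of what you wrote.
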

\begin{proof}
	Note that $\twiab \notin \modre{\rl}$ if and only if both $\twia \nent
	\rl^+$ and $\twib \nent \rl^-$. It can be easily verified that the former is
	equivalent to $\brl^+ \subseteq \twia \subseteq \atms \setminus \hrl^+$ and
	the latter to $\hrl^- \subseteq \twib \subseteq \atms \setminus \brl^-$.
\end{proof}


\begin{corollary}
	\label{cor:re:model conditions}
	Let $\rl$ be an \RE-canonical rule different from $\ctau$, put $\twia =
	\brl^+$, $\twib = \hrl^- \cup \brl^+$ and $\twib' = \atms \setminus \brl^-$,
	and let $\atm$ be an atom. Then the following holds:
	\begin{textenum}[(1)]
		\item $\tpl{\twia, \twib \cup \set{\atm}} \in \modre{\rl}$ if and only if
			$\atm \in \brl^-$.
			\label{part:cor:re:model conditions:2}

		\item $\tpl{\twia \cup \set{\atm}, \twib \cup \set{\atm}} \in \modre{\rl}$
			if and only if $\atm \in \hrl^+ \cup \brl^-$.
			\label{part:cor:re:model conditions:3}


		\item $\tpl{\twia \setminus \set{\atm}, \twib'} \in \modre{\rl}$ if and
			only if $\atm \in \brl^+$.
			\label{part:cor:re:model conditions:5}

		\item $\twiab \notin \modre{\rl}$.
			\label{part:cor:re:model conditions:1}
	\end{textenum}
\end{corollary}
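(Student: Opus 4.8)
The plan is to derive all four statements from the membership criterion of Lemma~\ref{lemma:re:model conditions} --- namely that $\twiab \notin \modre{\rl}$ precisely when $\brl^+ \subseteq \twia \subseteq \atms \setminus \hrl^+$ and $\hrl^- \subseteq \twib \subseteq \atms \setminus \brl^-$ --- together with the fact that, since $\rl$ is \RE-canonical and different from $\ctau$, the sets $\hrl^+ \cup \hrl^-$, $\brl^+$ and $\brl^-$ are pairwise disjoint (Definition~\ref{def:re:canonical rule}). I would settle part~\ref{part:cor:re:model conditions:1} first and then use the same bookkeeping for the other three. Throughout write $\twia = \brl^+$, $\twib = \hrl^- \cup \brl^+$, $\twib' = \atms \setminus \brl^-$.

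For part~\ref{part:cor:re:model conditions:1}: the pair $\twiab$ is a three-valued interpretation because $\twia = \brl^+ \subseteq \hrl^- \cup \brl^+ = \twib$, and all four inclusions required by Lemma~\ref{lemma:re:model conditions} hold: $\brl^+ \subseteq \twia$ and $\hrl^- \subseteq \twib$ are immediate; $\twia = \brl^+ \subseteq \atms \setminus \hrl^+$ because $\brl^+ \cap \hrl^+ = \emptyset$; and $\twib = \hrl^- \cup \brl^+ \subseteq \atms \setminus \brl^-$ because $\brl^-$ is disjoint from both $\hrl^-$ and $\brl^+$. Hence $\twiab \notin \modre{\rl}$, and in particular $\hrl^- \subseteq \twib'$ and $\twib' \subseteq \atms \setminus \brl^-$ also hold, which will be reused below.

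For parts~\ref{part:cor:re:model conditions:2}--\ref{part:cor:re:model conditions:5} I would apply Lemma~\ref{lemma:re:model conditions} to the perturbed pair, noting that in each case exactly one of the four inclusions turns into a condition on $\atm$ while the other three stay true by the computation just made. Concretely: $\tpl{\twia, \twib \cup \set{\atm}}$ fails to be an \RE-model iff $\twib \cup \set{\atm} \subseteq \atms \setminus \brl^-$, i.e.\ iff $\atm \notin \brl^-$; $\tpl{\twia \cup \set{\atm}, \twib \cup \set{\atm}}$ fails iff $\atm \notin \hrl^+$ and $\atm \notin \brl^-$, i.e.\ iff $\atm \notin \hrl^+ \cup \brl^-$; and $\tpl{\twia \setminus \set{\atm}, \twib'}$ fails iff $\brl^+ \subseteq \brl^+ \setminus \set{\atm}$, i.e.\ iff $\atm \notin \brl^+$. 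Negating each equivalence yields the stated characterisations, after the trivial check that each perturbed pair is still a three-valued interpretation --- for part~\ref{part:cor:re:model conditions:5} this uses $\brl^+ \setminus \set{\atm} \subseteq \brl^+ \subseteq \atms \setminus \brl^- = \twib'$, which again relies on $\brl^+ \cap \brl^- = \emptyset$.

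I do not anticipate a genuine obstacle: no new idea is needed, and the entire argument is a mechanical consequence of \RE-canonicity. The only points requiring care are tracking which of the four inclusions of Lemma~\ref{lemma:re:model conditions} survive each perturbation and verifying that the perturbed pairs remain three-valued interpretations.
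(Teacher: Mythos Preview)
Your proposal is correct and follows exactly the approach the paper takes: the paper's own proof is the single sentence ``Follows from Lemma~\ref{lemma:re:model conditions} and the disjointness properties satisfied by \RE-canonical rules,'' and you have simply spelled out the bookkeeping behind that sentence. Nothing is missing and no different idea is used.
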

\begin{proof}
	Follows from Lemma~\ref{lemma:re:model conditions} and the disjointness
	properties satisfied by \RE-canonical rules.
\end{proof}



\begin{lemma}
	\label{lemma:re:negative body}
	Let $\rl$ be an \RE-canonical rule different from $\ctau$ and $\atm$ an
	atom. Then the following holds:
	\begin{textenum}[(1)]
		\item $\atm \in \brl^-$ if and only if for all $\twiab \in \tris$, $\atm
			\in \twib$ implies $\twiab \in \modre{\rl}$;

		\item $\atm \in \hrl^+$ if and only if $\atm \notin \brl^-$ and for all
			$\twiab \in \tris$, $\atm \in \twia$ implies $\twiab \in \modre{\rl}$;

		\item $\atm \in \brl^+$ if and only if for all $\twiab \in \tris$, $\atm
			\notin \twia$ implies $\twiab \in \modre{\rl}$;

		\item $\atm \in \hrl^-$ if and only if $\atm \notin \brl^+$ and for all
			$\twiab \in \tris$, $\atm \notin \twib$ implies $\twiab \in
			\modre{\rl}$.
	\end{textenum}
\end{lemma}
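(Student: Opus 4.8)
The plan is to prove the four biconditionals independently but by one uniform routine, leaning on the explicit membership criterion of Lemma~\ref{lemma:re:model conditions} together with the disjointness of $\hrl^+ \cup \hrl^-$, $\brl^+$ and $\brl^-$ that comes with \RE-canonicity (Definition~\ref{def:re:canonical rule}). Recall that Lemma~\ref{lemma:re:model conditions} says that $\twiab \notin \modre{\rl}$ holds exactly when $\brl^+ \subseteq \twia \subseteq \atms \setminus \hrl^+$ and $\hrl^- \subseteq \twib \subseteq \atms \setminus \brl^-$. Thus the four distinguished sets $\brl^-$, $\hrl^+$, $\brl^+$, $\hrl^-$ correspond, respectively, to violating the four one-sided inclusions above, and each clause of the lemma is just the assertion that ``$\atm$ lies in this set iff one cannot make $\twiab$ a non-model while placing $\atm$ on the prescribed side''.

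For the ``only if'' directions I would argue directly. If $\atm \in \brl^-$ and $\atm \in \twib$, then $\twib \nsubseteq \atms \setminus \brl^-$, so $\twiab \in \modre{\rl}$ by Lemma~\ref{lemma:re:model conditions}; this gives clause~(1). Symmetrically, $\atm \in \hrl^+$ with $\atm \in \twia$ forces $\twia \nsubseteq \atms \setminus \hrl^+$ (clause~(2)); $\atm \in \brl^+$ with $\atm \notin \twia$ forces $\brl^+ \nsubseteq \twia$ (clause~(3)); and $\atm \in \hrl^-$ with $\atm \notin \twib$ forces $\hrl^- \nsubseteq \twib$ (clause~(4)). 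The side conditions $\atm \notin \brl^-$ in clause~(2) and $\atm \notin \brl^+$ in clause~(4) follow at once from \RE-canonicity, since $\hrl^+ \cap \brl^- = \emptyset$ and $\hrl^- \cap \brl^+ = \emptyset$.

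For the ``if'' directions I would contrapose and exhibit a concrete critical non-model, anchored at the pair $\tpl{\brl^+, \hrl^- \cup \brl^+}$, which is already a non-model of $\rl$ by Corollary~\ref{cor:re:model conditions} (equivalently, read off from Lemma~\ref{lemma:re:model conditions}). Then: assuming $\atm \notin \brl^-$, the pair $\tpl{\brl^+, \hrl^- \cup \brl^+ \cup \set{\atm}}$ is still a non-model, yet has $\atm$ in its second component, contradicting the right-hand side of clause~(1); assuming additionally $\atm \notin \hrl^+$, the pair $\tpl{\brl^+ \cup \set{\atm}, \hrl^- \cup \brl^+ \cup \set{\atm}}$ is a non-model with $\atm$ in its first component, contradicting clause~(2); for clause~(3), when $\atm \notin \brl^+$ the anchor itself has $\atm$ outside its first component; and for clause~(4), when $\atm \notin \hrl^-$ and $\atm \notin \brl^+$ the anchor has $\atm$ outside its second component. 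In each case one only has to check that the exhibited pair is a legitimate three-valued interpretation (first component $\subseteq$ second) and that the two inclusions of Lemma~\ref{lemma:re:model conditions} hold, both of which reduce to the disjointness properties of \RE-canonical rules plus the assumed side condition. I expect no real difficulty; the only thing requiring care --- the ``main obstacle'', such as it is --- is the bookkeeping of which disjointness fact and which side condition each construction consumes, and the observation that in the degenerate case $\atm \in \brl^+$ the step ``add $\atm$'' is vacuous, so the construction collapses harmlessly back to the anchor $\tpl{\brl^+, \hrl^- \cup \brl^+}$ and the argument still goes through.
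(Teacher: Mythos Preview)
Your proposal is correct and follows essentially the same approach as the paper: the forward directions are argued directly from Lemma~\ref{lemma:re:model conditions}, and the converse directions are handled by instantiating the anchor pair $\tpl{\brl^+,\hrl^-\cup\brl^+}$ (or a one-atom perturbation of it) and checking via the same lemma that it is a non-model. The paper packages the converse checks through Corollary~\ref{cor:re:model conditions} rather than unrolling the inclusions by hand, and for clauses~(3) and~(4) it uses slightly different witnesses ($\tpl{\brl^+\setminus\set{\atm},\atms\setminus\brl^-}$ and $\tpl{\brl^+\setminus\set{\atm},(\hrl^-\cup\brl^+)\setminus\set{\atm}}$), but your choice of the anchor itself is just as good and arguably cleaner.
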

\begin{proof}
	~
	\begin{textenum}[(1)]
		\item Suppose that $\atm \in \brl^-$ and take some $\twiab \in \tris$ with
			$\atm \in \twib$. Then $\twib \nent \lpnot \atm$, so it follows that
			$\twib \ent \rl^-$. Consequently $\twiab \in \modre{\rl}$.

			To prove the converse implication, let $\twia = \brl^+$ and $\twib =
			\hrl^- \cup \brl^+$. It follows that $\tpl{\twia, \twib \cup \set{\atm}}
			\in \modre{\rl}$, so by Corollary~\ref{cor:re:model
			conditions}\eqref{part:cor:re:model conditions:2} we conclude that $\atm
			\in \brl^-$.

		\item Suppose that $\atm \in \hrl^+$ and take some $\twiab \in \tris$ with
			$\atm \in \twia$. Then $\twia \ent \atm$, so it follows that $\twia \ent
			\rl^+$. Consequently $\twiab \in \modre{\rl}$.
			
			To prove the converse implication, let $\twia = \brl^+$ and $\twib =
			\hrl^- \cup \brl^+$. It follows that $\tpl{\twia \cup \set{\atm}, \twib
			\cup \set{\atm}} \in \modre{\rl}$, so by Corollary~\ref{cor:re:model
			conditions}\eqref{part:cor:re:model conditions:3} we conclude that
			$\atm$ belongs to $\hrl^+ \cup \brl^-$. Moreover, by the assumption we
			know that $\atm \notin \brl^-$, so $\atm \in \hrl^+$.

		\item Suppose that $\atm \in \brl^+$ and take some $\twiab \in \tris$ with
			$\atm \notin \twia$. Then $\twia \nent \atm$, so it follows that $\twia
			\ent \rl^+$. Consequently, $\twiab \in \modre{\rl}$.

			To prove the converse implication, let $\twia = \brl^+$ and $\twib' =
			\atms \setminus \brl^-$. It follows that $\tpl{\twia \setminus
			\set{\atm}, \twib'} \in \modre{\rl}$, so by
			Corollary~\ref{cor:re:model conditions}\eqref{part:cor:re:model
			conditions:5} we conclude that $\atm \in \brl^+$.

		\item Suppose that $\atm \in \hrl^-$ and take some $\twiab \in \tris$ with
			$\atm \notin \twib$. Then $\twib \ent \lpnot \atm$, so it follows that
			$\twib \ent \rl^-$. Consequently, $\twiab \in \modre{\rl}$.
	
			To prove the converse implication, let $\twia = \brl^+$ and $\twib =
			\hrl^- \cup \brl^+$. Corollary~\ref{cor:re:model
			conditions}\eqref{part:cor:re:model conditions:1} guarantees that
			$\tpl{\twia, \twib} \notin \modre{\rl}$. Furthermore, by the assumption
			it follows that $\tpl{\twia \setminus \set{\atm}, \twib \setminus
			\set{\atm}} \in \modre{\rl}$. Consequently, $\twib$ must differ from
			$\twib \setminus \set{\atm}$, which implies that $\atm \in \twib$.
			Furthermore, since $\twib = \hrl^- \cup \brl^+$ and $\atm \notin \brl^+$
			by assumption, we conclude that $\atm \in \hrl^-$. \qedhere
	\end{textenum}
\end{proof}

%
%
%


\begin{definition}
	[Rule \RE-Induced by a Set of Interpretations]
	\label{def:re:models to rule}
	Let $\stri \subseteq \tris$. The \emph{rule \RE-induced by $\stri$}, denoted
	by $\rulere{\stri}$, is defined as follows: If $\stri = \tris$, then
	$\rulere{\stri} = \ctau$; otherwise, $\rulere{\stri}$ is of the form
	$(\hstrire^+; \lcmp{\hstrire^-} \lpif \bstrire^+, \lcmp{\bstrire^-}.)$ where
	\begin{align*}
		\bstrire^- &= \Set{
			\atm \in \atms
			|
			\forall \twiab \in \tris :
			\atm \in \twib
			\mlthen \twiab \in \stri
		} \enspace, \\
		\hstrire^+ &= \Set{
			\atm \in \atms
			|
			\forall \twiab \in \tris :
			\atm \in \twia
			\mlthen \twiab \in \stri
		} \setminus \bstrire^- \enspace, \\
		\bstrire^+ &= \Set{
			\atm \in \atms
			|
			\forall \twiab \in \tris :
			\atm \notin \twia
			\mlthen \twiab \in \stri
		} \enspace, \\ 
		\hstrire^- &= \Set{
			\atm \in \atms
			|
			\forall \twiab \in \tris :
			\atm \notin \twib
			\mlthen \twiab \in \stri
		} \setminus \bstrire^+ \enspace.
	\end{align*}
\end{definition}


\begin{proposition}
	\label{prop:re:canonical from models}
	For every \RE-canonical rule $\rl$, $\rulere{\modre{\rl}} = \rl$.
\end{proposition}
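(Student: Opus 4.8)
The plan is to split on whether $\rl$ equals the canonical tautology. If $\rl = \ctau$, then $\rl$ has the first form in Lemma~\ref{lemma:re:tautology} (taking the extra head and body literal sets empty), hence $\modre{\rl} = \tris$, and by the first clause of Definition~\ref{def:re:models to rule} we get $\rulere{\tris} = \ctau = \rl$. So from now on I assume $\rl$ is \RE-canonical and $\rl \neq \ctau$; by Definition~\ref{def:re:canonical rule} this means the sets $\hrl^+ \cup \hrl^-$, $\brl^+$ and $\brl^-$ are pairwise disjoint, and $\rl$ is the rule $(\hrl^+; \lcmp{\hrl^-} \lpif \brl^+, \lcmp{\brl^-}.)$.

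First I would check that $\modre{\rl} \neq \tris$, so that $\rulere{\modre{\rl}}$ is governed by the second clause of Definition~\ref{def:re:models to rule}: taking $\stri = \modre{\rl}$, the induced rule has the form $(\hstrire^+; \lcmp{\hstrire^-} \lpif \bstrire^+, \lcmp{\bstrire^-}.)$. That $\modre{\rl} \neq \tris$ is immediate from Corollary~\ref{cor:re:model conditions}\eqref{part:cor:re:model conditions:1}, which says $\tpl{\brl^+, \hrl^- \cup \brl^+} \notin \modre{\rl}$. Since two rules are equal exactly when their four component sets of atoms ($\hrl^+$, $\hrl^-$, $\brl^+$, $\brl^-$) agree, it now suffices to show $\bstrire^- = \brl^-$, $\hstrire^+ = \hrl^+$, $\bstrire^+ = \brl^+$ and $\hstrire^- = \hrl^-$.

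These four equalities are precisely the content of Lemma~\ref{lemma:re:negative body}, read off against Definition~\ref{def:re:models to rule}. Part~(1) of the lemma characterises membership in $\brl^-$ by exactly the condition defining $\bstrire^-$, giving $\bstrire^- = \brl^-$; part~(3) gives $\bstrire^+ = \brl^+$ in the same way. For the head sets one uses parts~(2) and~(4): part~(2) says $\atm \in \hrl^+$ iff $\atm \notin \brl^-$ and the ``$\atm \in \twia$ forces membership in $\modre{\rl}$'' condition holds, which, after substituting the already-proved $\bstrire^- = \brl^-$, is exactly the defining condition for $\hstrire^+$; similarly part~(4) together with $\bstrire^+ = \brl^+$ yields $\hstrire^- = \hrl^-$. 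Combining the four equalities, $\rulere{\modre{\rl}}$ is the rule $(\hrl^+; \lcmp{\hrl^-} \lpif \brl^+, \lcmp{\brl^-}.)$, i.e.\ $\rl$.

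There is no serious obstacle: all the substantive work has been front-loaded into Lemmas~\ref{lemma:re:tautology} and~\ref{lemma:re:negative body} and Corollary~\ref{cor:re:model conditions}. The only point needing care is the order of the argument for the head sets --- one must derive $\bstrire^- = \brl^-$ (resp.\ $\bstrire^+ = \brl^+$) before feeding it into the set difference in the definition of $\hstrire^+$ (resp.\ $\hstrire^-$), which matches the ``$\atm \notin \brl^-$'' (resp.\ ``$\atm \notin \brl^+$'') side-condition built into parts~(2) and~(4) of Lemma~\ref{lemma:re:negative body}.
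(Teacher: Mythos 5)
Your proof is correct and follows essentially the same route as the paper's: case split on $\rl = \ctau$, then apply Definition~\ref{def:re:models to rule} together with Lemma~\ref{lemma:re:negative body} to match the four component sets. The paper's own proof is just a terser version of yours; your added care about first establishing $\bstrire^- = \brl^-$ and $\bstrire^+ = \brl^+$ before handling the head sets, and about checking $\modre{\rl} \neq \tris$, makes explicit what the paper leaves implicit.
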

\begin{proof}
	If $\rl = \ctau$, then $\modre{\rl} = \tris$ and, by
	Definition~\ref{def:re:models to rule}, $\rulere{\tris} = \ctau = \rl$, so
	the identity is satisfied. In the principal case, $\rl$ is an \RE-canonical
	rule different from $\ctau$. Let $\stri = \modre{\rl}$. It follows from
	Definition~\ref{def:re:models to rule} and Lemma~\ref{lemma:re:negative
	body} that $\rl = \rulere{\stri}$.
\end{proof}

%


\begin{corollary}
	\label{cor:re:canonical not equivalent}
	No two different \RE-canonical rules are \RE-equivalent.
\end{corollary}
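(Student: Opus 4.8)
The plan is to obtain this statement as an immediate consequence of Proposition~\ref{prop:re:canonical from models}, exactly mirroring the situation for \SE-canonical rules recorded in Corollary~\ref{cor:se:canonical not equivalent}. The key point is that the transformation $\stri \mapsto \rulere{\stri}$ of Definition~\ref{def:re:models to rule} acts as a left inverse of the map $\rl \mapsto \modre{\rl}$ on the class of \RE-canonical rules, so this map is injective on that class.

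Concretely, I would argue by contraposition. Suppose $\rl_1$ and $\rl_2$ are \RE-canonical rules that are \RE-equivalent, i.e.\ $\modre{\rl_1} = \modre{\rl_2}$. Applying $\rulere{\cdot}$ to both sides of this equality and invoking Proposition~\ref{prop:re:canonical from models} once for $\rl_1$ and once for $\rl_2$ yields
\[
	\rl_1 = \rulere{\modre{\rl_1}} = \rulere{\modre{\rl_2}} = \rl_2 \enspace.
\]
Hence two \RE-canonical rules sharing the same set of \RE-models must coincide, which is precisely the contrapositive of the claim.

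There is essentially no obstacle remaining at this stage: all the real work has already been carried out in Proposition~\ref{prop:re:canonical from models}, whose proof rests on Lemma~\ref{lemma:re:negative body}. That lemma is where the content lies — it shows that each of the four syntactic components $\brl^-$, $\hrl^+$, $\brl^+$, $\hrl^-$ of an \RE-canonical rule $\rl$ can be read off from $\modre{\rl}$ by purely semantic conditions (for instance, $\atm \in \brl^-$ exactly when every three-valued interpretation whose second component contains $\atm$ is an \RE-model of $\rl$), while Corollary~\ref{cor:re:model conditions} ensures that these conditions genuinely pin down the components (and correctly detect the $\ctau$ case when $\stri = \tris$). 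Thus the corollary requires only assembling these already-established facts, with no new calculation needed.
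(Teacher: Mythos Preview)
Your proposal is correct and matches the paper's own proof, which simply states that the corollary follows directly from Proposition~\ref{prop:re:canonical from models}. Your contrapositive argument spelling out the injectivity of $\rl \mapsto \modre{\rl}$ via the left inverse $\rulere{\cdot}$ is exactly the intended reasoning.
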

\begin{proof}
	Follows directly from Proposition~\ref{prop:re:canonical from models}.
\end{proof}

%



\begin{proof}
	[\textbf{Proof of Proposition~\ref{prop:re:abolishing rules not equivalent}}]
	\label{proof:re:abolishing rules not equivalent}
	Follows from Corollary~\ref{cor:re:canonical not equivalent} since every
	abolishing rule and every constraint is \RE-canonical.
\end{proof}

\begin{lemma}
	\label{lemma:re:canse equals canse canme}
	For every rule $\rl$, $\canse{\rl} = \canse{\canre{\rl}}$.
\end{lemma}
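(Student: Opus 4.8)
The plan is a case analysis on which clause of Definition~\ref{def:re:transformation into canonical rule} governs $\canre{\rl}$, showing that in each case the clause of Definition~\ref{def:se:transformation into canonical rule} used to compute the \SE-canonical rule, as well as the resulting head and body, is the same for $\rl$ and for $\canre{\rl}$.

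If the first clause of Definition~\ref{def:re:transformation into canonical rule} applies, then one of $\hrl^+ \cap \brl^+$, $\hrl^- \cap \brl^-$, $\brl^+ \cap \brl^-$ is non-empty, so $\canre{\rl} = \ctau$. The very same condition triggers the first clause of Definition~\ref{def:se:transformation into canonical rule} for $\rl$, so $\canse{\rl} = \ctau$; and since $\ctau$ shares an atom between its head and its body, that clause also gives $\canse{\ctau} = \ctau$. Hence $\canse{\canre{\rl}} = \ctau = \canse{\rl}$.

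Otherwise the second clause of Definition~\ref{def:re:transformation into canonical rule} applies and $\canre{\rl}$ is the rule $\rlb$ with $\hrlb^+ = \hrl^+ \setminus \brl^-$, $\hrlb^- = \hrl^- \setminus \brl^+$, $\brlb^+ = \brl^+$ and $\brlb^- = \brl^-$. Here I would first note that the first clause of Definition~\ref{def:se:transformation into canonical rule} applies neither to $\rl$ (its hypothesis already failed when computing $\canre{\rl}$) nor to $\rlb$, because $\hrlb^+ \cap \brlb^+ \subseteq \hrl^+ \cap \brl^+ = \emptyset$, and likewise for $\hrlb^- \cap \brlb^-$ and $\brlb^+ \cap \brlb^-$. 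The key computation is then $\hrlb^+ \setminus \brlb^- = (\hrl^+ \setminus \brl^-) \setminus \brl^- = \hrl^+ \setminus \brl^-$, so the test $\hrl^+ \setminus \brl^- = \emptyset$ separating the second and third clauses of Definition~\ref{def:se:transformation into canonical rule} has the same outcome for $\rl$ and for $\rlb$. If $\hrl^+ \setminus \brl^- \neq \emptyset$, both go through the second clause, and the heads and bodies of $\canse{\rl}$ and $\canse{\rlb}$ coincide because moreover $\hrlb^- \setminus \brlb^+ = (\hrl^- \setminus \brl^+) \setminus \brl^+ = \hrl^- \setminus \brl^+$ while $\brlb^+ = \brl^+$ and $\brlb^- = \brl^-$. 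If $\hrl^+ \setminus \brl^- = \emptyset$, both go through the third clause, and the resulting constraints coincide because $\brlb^+ \cup \hrlb^- = \brl^+ \cup (\hrl^- \setminus \brl^+) = \brl^+ \cup \hrl^-$ and $\brlb^- = \brl^-$.

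The only point demanding attention is the bookkeeping of set differences, in particular the repeated use of $(S \setminus T) \setminus T = S \setminus T$ and the inclusions ruling out the first \SE-clause for $\rlb$; there is no genuine obstacle, since $\canre$ merely deletes from the head those atoms that already occur in the body, which is precisely the preprocessing performed inside $\canse$ before it decides whether the rule collapses to a constraint.
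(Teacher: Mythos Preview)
Your proof is correct and follows the same approach as the paper, which simply states that the lemma follows directly from the two definitions. You have spelled out the routine case analysis that justifies that claim; every step checks out, including the verifications that clause~(i) of Definition~\ref{def:se:transformation into canonical rule} does not apply to $\rlb$ and that the discriminating condition $\hrl^+\setminus\brl^-\neq\emptyset$ is preserved.
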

\begin{proof}
	Follows directly from Definitions~\ref{def:se:transformation into canonical
	rule} and \ref{def:re:transformation into canonical rule}.
\end{proof}

\begin{lemma}
	\label{lemma:re:non-abolishing have equal se- and me-canonical}
	If $\rl$ is not \RE-equivalent to any abolishing rule, then $\canre{\rl} =
	\canse{\rl}$.
\end{lemma}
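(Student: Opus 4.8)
The plan is to reduce the statement to Lemma~\ref{lemma:re:canse equals canse canme}, which already gives $\canse{\rl} = \canse{\canre{\rl}}$. Writing $\rlb$ for $\canre{\rl}$, it then suffices to show $\canse{\rlb} = \rlb$. The only extra fact I will need about $\rlb$ is that it is \emph{not} an abolishing rule: by Proposition~\ref{prop:re:canonical equivalence}, $\rlb$ is \RE-equivalent to $\rl$, so if $\rlb$ were abolishing then $\rl$ would be \RE-equivalent to an abolishing rule, contrary to the hypothesis.

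I would then carry out the case analysis dictated by Definition~\ref{def:re:canonical rule} (which lists the possible shapes of the \RE-canonical rule $\rlb$) and Definition~\ref{def:se:transformation into canonical rule}. If $\rlb = \ctau$, then since $\atm_\ctau$ occurs both in the head and in the positive body of $\ctau$, the first clause of Definition~\ref{def:se:transformation into canonical rule} fires and $\canse{\rlb} = \ctau = \rlb$. Otherwise $\rlb$ is \RE-canonical with $\hrlb^+ \cup \hrlb^-$, $\brlb^+$ and $\brlb^-$ pairwise disjoint; in particular the first clause of Definition~\ref{def:se:transformation into canonical rule} does not apply to $\rlb$, and moreover $\hrlb^+ \setminus \brlb^- = \hrlb^+$ and $\hrlb^- \setminus \brlb^+ = \hrlb^-$. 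If $\hrlb^+ \neq \emptyset$, the second clause applies and returns the rule with positive head $\hrlb^+$, negated head literals $\lcmp{\hrlb^-}$, and body $\brlb^+, \lcmp{\brlb^-}$, which is $\rlb$ itself. If $\hrlb^+ = \emptyset$ and $\hrlb^- = \emptyset$, then $\rlb$ is the constraint $\Br{\lpif \brlb^+, \lcmp{\brlb^-}.}$, and the third clause returns $\Br{\lpif \brlb^+ \cup \hrlb^-, \lcmp{\brlb^-}.} = \Br{\lpif \brlb^+, \lcmp{\brlb^-}.} = \rlb$.

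The only remaining configuration is $\hrlb^+ = \emptyset$ together with $\hrlb^- \neq \emptyset$, which is precisely where the two canonical transformations genuinely differ: the third clause of Definition~\ref{def:se:transformation into canonical rule} absorbs the negated head atoms into the body, whereas $\canre{\cdot}$ keeps them in the head. But in this situation $\rlb$ has an empty positive head, a non-empty negative head, and pairwise disjoint $\hrlb^-$, $\brlb^+$, $\brlb^-$, hence is an abolishing rule --- contradicting the first step. So this case cannot arise, and $\canse{\rlb} = \rlb$ in every case. I do not expect a genuine obstacle; the one thing to keep straight is the three-way distinction between ordinary rules, constraints and abolishing rules, together with the observation that the third clause of Definition~\ref{def:se:transformation into canonical rule} produces an actual constraint exactly when $\hrlb^- = \emptyset$, which for the \RE-canonical rule $\rlb$ is guaranteed once abolishing rules have been excluded.
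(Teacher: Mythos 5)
Your proof is correct, and it matches the paper's (which simply states that the claim follows directly from Definitions~\ref{def:se:transformation into canonical rule} and~\ref{def:re:transformation into canonical rule}); you have just spelled out that direct comparison, routing it cosmetically through Lemma~\ref{lemma:re:canse equals canse canme} by applying $\canse{\cdot}$ to $\canre{\rl}$ rather than to $\rl$ itself. The key observation --- that the two transformations can only disagree when the resulting positive head is empty and the negative head is non-empty, which is exactly the abolishing case excluded by hypothesis --- is exactly the content the paper leaves implicit.
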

\begin{proof}
	Follows directly from Definitions~\ref{def:se:transformation into canonical
	rule} and \ref{def:re:transformation into canonical rule}.
\end{proof}

%
%
\begin{proof}
	[\textbf{Proof of Proposition~\ref{prop:re:se-equivalence}}]
	\label{proof:re:se-equivalence}

	Suppose that $\rla$ and $\rlb$ are \RE-equivalent. Then $\canre{\rla} =
	\canre{\rlb}$ by Proposition~\ref{prop:re:canonical equivalence} and
	Corollary~\ref{cor:re:canonical not equivalent}. By
	Lemma~\ref{lemma:re:canse equals canse canme} it follows that $\canse{\rla}
	= \canse{\canre{\rla}} = \canse{\canre{\rlb}} = \canse{\rlb}$, so by
	Proposition~\ref{prop:se:canonical equivalence} we can conclude that $\rla$,
	$\rlb$ are \SE-equivalent.

	Now suppose that neither $\rla$ nor $\rlb$ is \RE-equivalent to an
	abolishing rule and $\rla$ is \RE-equivalent to $\rlb$. Then, by
	Proposition~\ref{prop:re:canonical equivalence}, $\canre{\rla}$ is
	\RE-equivalent to $\canre{\rlb}$ and by Corollary~\ref{cor:re:canonical not
	equivalent}, $\canre{\rla} = \canre{\rlb}$. It follows from
	Lemma~\ref{lemma:re:non-abolishing have equal se- and me-canonical} that
	$\canse{\rla} = \canse{\rlb}$ and by Proposition~\ref{prop:se:canonical
	equivalence}, $\rla$ is \SE-equivalent to $\rlb$.

	Finally, by Corollary~\ref{cor:re:canonical not equivalent}, a rule $\rl$ is
	\RE-tautological if and only if $\canre{\rl} = \ctau$ which holds if and
	only if $\canse{\rl} = \ctau$
	which, by Corollary~\ref{cor:se:canonical not equivalent}, holds if and only
	if $\rl$ is \SE-tautological.
\end{proof}

\begin{proof}
	[\textbf{Proof of Proposition~\ref{prop:re:sm}}]
	\label{proof:re:sm}
	Suppose that $\twib$ is a stable model of $\prg$. Then $\twib$ is a
	subset-minimal model of $\prg^\twib$. Thus, $\tpl{\twib, \twib}$ is an
	\RE-model of $\prg$. Now suppose that $\twiab$ is an \RE-model of $\prg$ for
	some $\twia \subseteq \twib$. Then $\twia \ent \prg^\twib$ and by the
	minimality of $\twib$ we obtain $\twia = \twib$.
	
	Suppose that $\tpl{\twib, \twib}$ is an \RE-model of $\prg$ and for all
	$\twia \subsetneq \twib$, $\twiab$ is not an \RE-model of $\prg$. Then
	$\twib \ent \prg^\twib$ and $\twib$ is also a subset-minimal model of
	$\prg^\twib$. Consequently, $\twib$ is a stable model of $\prg$.
\end{proof}

\begin{proof}
	[\textbf{Proof of Proposition~\ref{prop:re:expressibility}}]
	\label{proof:re:expressibility}
	Let $\prg$ contain the rule $\rl_{\twiab} = \Br{(\atms \setminus \twia);
	\lcmp{\twib} \lpif \twia, \lcmp{(\atms \setminus \twib)}.}$ for each
	$\twiab \in \tris \setminus \stri$. It is an immediate consequence of
	Lemma~\ref{lemma:re:model conditions} that $\modre{\rl_{\twiab}} = \tris
	\setminus \set{\twiab}$. Thus,
	\[
		\modre{\prg}
		= \bigcap_{\twiab \in \tris \setminus \stri}
			\tris \setminus \set{\twiab}
		= \tris \setminus
			\bigcup_{\twiab \in \tris \setminus \stri}
			\set{\twiab}
		= \tris \setminus (\tris \setminus \stri)
		= \stri
		\enspace. \qedhere
	\]
\end{proof}


\begin{definition}
	A \emph{program entailment relation} is a preorder on the set of all
	programs. A \emph{program equivalence relation} is an equivalence relation
	on the set of all programs. 

	Given a program entailment relation $\ent$ and a program equivalence
	relation $\eq$, we say that $\ent$ \emph{is associated with $\eq$} if for
	all programs $\prga$, $\prgb$, $\prga \eq \prgb$ holds if and only if
	$\prga \ent \prgb$ and $\prgb \ent \prga$.
\end{definition}

\begin{proof}
	[\textbf{Proof of Proposition~\ref{prop:ent vs eq}}]
	\label{proof:prop:ent vs eq}

	If \X{} is \SE{}, \RE{} or \SU{}, then the property follows immediately from
	the definitions of $\entX$ and $\eqX$.

	If \X{} is either \SR{} or \RR{}, then it follows from the definition of
	$\entX$ that $\prga \entX \prgb$ is equivalent to $\modxr{\prga^\ctau}
	\supseteq \modxr{\prgb^\ctau}$. Thus, $\prga \entX \prgb$ together with
	$\prgb \entX \prga$ is equivalent to $\modxr{\prga^\ctau} =
	\modxr{\prgb^\ctau}$, which is the definition of $\prga \eqX \prgb$.

	It remains to consider the case when \X{} is \SMR{} or \RMR{}. Let \Y{} be
	\SE{} or \RE{}, respectively. First suppose that $\prga \eqX \prgb$. By the
	definition of $\eqX$ we obtain that $\min \modyr{\prga^\ctau} = \min
	\modyr{\prgb^\ctau}$. Our goal is to prove that $\prga \entX \prgb$ and
	$\prgb \entX \prga$. We only show the former; the proof of the latter is
	analogous. Take some $\rlb \in \prgb$. Our goal is find some $\rla \in
	\prga^\ctau$ such that $\mody{\rla} \subseteq \mody{\rlb}$. Take some
	subset-minimal $\stri \in \modyr{\prgb^\ctau}$ such that $\stri \subseteq
	\mody{\rlb}$. It follows from our assumption that $\stri$ belongs to
	$\modyr{\prga^\ctau}$. In other words, there exists some $\rla \in
	\prga^\ctau$ such that $\mody{\rla} = \stri \subseteq \mody{\rlb}$.

	Now suppose that both $\prga \entX \prgb$ and $\prgb \entX \prga$. We need
	to prove that $\prga \eqX \prgb$, i.e.\ that $\min \modyr{\prga^\ctau} =
	\min \modyr{\prgb^\ctau}$. We only show that $\min \modyr{\prga^\ctau}
	\subseteq \min \modyr{\prgb^\ctau}$; the proof of the other inclusion is
	analogical. Take some $\rla \in \prga^\ctau$ such that
	\begin{equation}
		\label{eq:proof:prop:ent vs eq:2}
		\mody{\rla} \in \min \modyr{\prga^\ctau}
		\enspace.
	\end{equation}
	Since $\prgb \entX \prga$, there exists some $\rlb \in \prgb^\ctau$ such
	that $\mody{\rlb} \subseteq \mody{\rla}$ Let $\rlb' \in \prgb^\ctau$ be such
	that $\mody{\rlb'} \in \min \modyr{\prgb^\ctau}$ and $\mody{\rlb'} \subseteq
	\mody{\rlb}$. Since $\prga \entX \prgb$, there exists some $\rla' \in
	\prga^\ctau$ such that $\mody{\rla'} \subseteq \mody{\rlb'}$. In other
	words, we know that $\mody{\rla'} \subseteq \mody{\rlb'} \subseteq
	\mody{\rlb} \subseteq \mody{\rla}$. Thus, due to \eqref{eq:proof:prop:ent vs
	eq:2} we can conclude that $\mody{\rla'} = \mody{\rlb'} = \mody{\rlb} =
	\mody{\rla}$, and it follows from the choice of $\rlb'$ that $\mody{\rla}
	\in \min \modyr{\prgb^\ctau}$.
\end{proof}

\begin{lemma}
	\label{lemma:entailment vs equivalence comparison}
	Let $\entX$, $\entY$ be program entailment relations and $\eqX$, $\eqY$
	program equivalence relations such that $\entX$ is associated with $\eqX$
	and $\entY$ is associated with $\eqY$. Then $\entX \strle \entY$ implies
	$\eqX \strle \eqY$.
\end{lemma}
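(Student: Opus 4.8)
The plan is to unwind the definitions directly; no construction or nontrivial argument is needed. Recall that $\entX \strle \entY$ means that $\prga \entY \prgb$ implies $\prga \entX \prgb$ for all programs $\prga, \prgb$, and that ``$\entX$ is associated with $\eqX$'' means $\prga \eqX \prgb$ holds precisely when both $\prga \entX \prgb$ and $\prgb \entX \prgb$ hold (and similarly for $\entY$ and $\eqY$, with $\prgb \entY \prga$ in place of the second conjunct). The goal $\eqX \strle \eqY$ amounts to showing $\prga \eqY \prgb \Rightarrow \prga \eqX \prgb$.

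Concretely, I would fix programs $\prga, \prgb$ with $\prga \eqY \prgb$. First, since $\entY$ is associated with $\eqY$, this hypothesis yields both $\prga \entY \prgb$ and $\prgb \entY \prga$. Next, applying the assumption $\entX \strle \entY$ to each of these two entailments gives $\prga \entX \prgb$ and $\prgb \entX \prga$. Finally, since $\entX$ is associated with $\eqX$, these two $\entX$-entailments together give $\prga \eqX \prgb$, which is exactly what was required; since $\prga, \prgb$ were arbitrary, $\eqX \strle \eqY$ follows.

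There is essentially no obstacle: the statement is a purely formal consequence of how an equivalence relation is recovered from its associated entailment relation, and the proof is a short chain of implications. The only point requiring care is the orientation of $\strle$ -- it runs from the weaker relation to the stronger one -- so that the hypothesis $\entX \strle \entY$ must be applied to $\entY$-entailments to produce $\entX$-entailments, and not the reverse.
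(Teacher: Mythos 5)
Your proof is correct and follows the paper's argument exactly: unfold $\prga \eqY \prgb$ into the two $\entY$-entailments, push each through $\entX \strle \entY$, and reassemble via the association of $\entX$ with $\eqX$. The only blemish is a typo in your restatement of the association condition ($\prgb \entX \prgb$ should read $\prgb \entX \prga$), which you use correctly in the argument itself.
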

\begin{proof}
	Suppose that $\entX \strle \entY$ and take some programs $\prga$, $\prgb$
	such that $\prga \eqY \prgb$. We need to show that $\prga \eqX \prgb$.
	Since $\entY$ is associated with $\eqY$, we can conclude that $\prga \entY
	\prgb$ and $\prgb \entY \prga$. Furthermore, from $\entX \strle \entY$ it
	follows that $\prga \entX \prgb$ and $\prgb \entX \prga$, and the assumption
	that $\entX$ is associated with $\eqX$ implies $\prga \eqX \prgb$.
\end{proof}

\begin{corollary}
	\label{cor:entailment vs equivalence comparison}
	Let $\entX$, $\entY$ be program entailment relations and $\eqX$, $\eqY$
	program equivalence relations such that $\entX$ is associated with $\eqX$
	and $\entY$ is associated with $\eqY$. Then:
	\[
		\eqX \strl \eqY \text{ and } \entX \strl \entY
		\qquad \text{if and only if} \qquad
		\entX \strle \entY \text{ and } \eqY \nstrle \eqX
		\enspace.
	\]
\end{corollary}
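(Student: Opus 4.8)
The plan is to reduce both directions of the biconditional directly to Lemma~\ref{lemma:entailment vs equivalence comparison}, which already establishes that $\entX \strle \entY$ implies $\eqX \strle \eqY$. The only trick is that this lemma is symmetric in the two pairs of relations, so it may be invoked with the roles of $\mX$ and $\mY$ exchanged.

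For the left-to-right implication I would assume $\eqX \strl \eqY$ and $\entX \strl \entY$ and simply unfold the definition of $\strl$: the second conjunct gives $\entX \strle \entY$ outright, and the first conjunct gives precisely $\eqY \nstrle \eqX$. No further argument is needed here.

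For the converse I would assume $\entX \strle \entY$ and $\eqY \nstrle \eqX$. Applying Lemma~\ref{lemma:entailment vs equivalence comparison} to $\entX \strle \entY$ yields $\eqX \strle \eqY$, and combining this with $\eqY \nstrle \eqX$ gives exactly $\eqX \strl \eqY$. To obtain $\entX \strl \entY$ it then remains only to show $\entY \nstrle \entX$; I would argue by contradiction: if $\entY \strle \entX$ held, then Lemma~\ref{lemma:entailment vs equivalence comparison}, applied with $\mX$ and $\mY$ swapped, would give $\eqY \strle \eqX$, contradicting the hypothesis. Hence $\entY \nstrle \entX$, so indeed $\entX \strl \entY$.

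There is no real obstacle in this argument — it is a routine shuffling of the definitions of $\strle$ and $\strl$ around a single lemma. The one point to be careful about is invoking Lemma~\ref{lemma:entailment vs equivalence comparison} in its exchanged form, which is justified because its statement treats $(\entX,\eqX)$ and $(\entY,\eqY)$ symmetrically, requiring only that each entailment relation be associated with the corresponding equivalence relation.
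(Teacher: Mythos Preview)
Your proof is correct and takes essentially the same approach as the paper: both arguments reduce the biconditional to Lemma~\ref{lemma:entailment vs equivalence comparison} applied once directly and once in its swapped (equivalently, contrapositive) form. The paper compresses this into a single step by observing that, among the four conjuncts obtained by unfolding $\strl$, two are made redundant by the lemma, whereas you spell out the two implications separately; the underlying reasoning is identical.
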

\begin{proof}
	By the definition, $\eqX \strl \eqY$ and $\entX \strl \entY$ hold if and
	only if $\eqX \strle \eqY$, $\eqY \nstrle \eqX$, $\entX \strle \entY$ and
	$\entY \nstrle \entX$. By Lemma~\ref{lemma:entailment vs equivalence
	comparison}, $\entX \strle \entY$ implies $\eqX \strle \eqY$ and $\eqY
	\nstrle \eqX$ implies $\entY \nstrle \eqX$, so the condition can be
	simplified as desired.
\end{proof}

\begin{lemma}
	\label{lemma:reduct}
	Let $\rl$ be a rule and $\twib$ an interpretation. Then:
	\[
		\twib \ent \rl
		\qquad \text{if and only if} \qquad
		\twib \ent \rl^\twib
		\qquad \text{if and only if} \qquad
		\tpl{\twib, \twib} \in \modre{\rl}
		\enspace.
	\]
\end{lemma}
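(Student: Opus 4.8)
The plan is to prove the two equivalences in turn, observing first that the second one, $\twib \ent \rl^\twib$ if and only if $\tpl{\twib, \twib} \in \modre{\rl}$, is merely the definition of an \RE-model (Definition~\ref{def:re model}) applied to the three-valued interpretation $\tpl{\twib, \twib}$, which is admissible since $\twib \subseteq \twib$: by definition $\tpl{\twib, \twib} \in \modre{\rl}$ holds exactly when $\twib \ent \rl^\twib$. Hence the only real content is the first equivalence, $\twib \ent \rl$ if and only if $\twib \ent \rl^\twib$, which I would establish by a case split following the definition of the reduct.

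In the case where $\twib$ satisfies every default literal in $\brl$ and none of the default literals in $\hrl$ — that is, $\brl^- \cap \twib = \emptyset$ and $\hrl^- \subseteq \twib$ — the reduct is $\rl^\twib = (\hrl^+ \lpif \brl^+.)$. Under these hypotheses ``$\twib \ent \brl$'' collapses to ``$\brl^+ \subseteq \twib$'', and ``$\twib$ satisfies some literal in $\hrl$'' collapses to ``$\hrl^+ \cap \twib \neq \emptyset$'' (the head default literals being all falsified), so both $\twib \ent \rl$ and $\twib \ent \rl^\twib$ reduce to the single condition ``$\brl^+ \not\subseteq \twib$ or $\hrl^+ \cap \twib \neq \emptyset$'' and are therefore equivalent. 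In the complementary case $\rl^\twib = \ctau$, which is tautological, so $\twib \ent \rl^\twib$ holds trivially; and I would check that failure of the reduct condition forces $\twib \ent \rl$ as well — if $\brl^- \cap \twib \neq \emptyset$ then some body default literal of $\rl$ is falsified, so $\twib \nent \brl$; if instead $\hrl^- \not\subseteq \twib$ then some head default literal of $\rl$ is satisfied — so in either sub-case $\twib \ent \rl$.

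A slicker alternative, which sidesteps the case split, is to use the characterisation of \RE-models noted in the remark preceding Lemma~\ref{lemma:re:tautology}: $\tpl{\twib, \twib} \in \modre{\rl}$ iff $\twib \ent \rl^+$ or $\twib \ent \rl^-$, with $\rl^+ = (\hrl^+ \lpif \brl^+.)$ and $\rl^- = (\lcmp{\hrl^-} \lpif \lcmp{\brl^-}.)$. Expanding the right-hand side yields the four-way disjunction ``$\brl^+ \not\subseteq \twib$, or $\hrl^+ \cap \twib \neq \emptyset$, or $\brl^- \cap \twib \neq \emptyset$, or $\hrl^- \not\subseteq \twib$'', and expanding $\twib \ent \rl$ directly from the definition of $\modc{\rl}$ (namely $\twib \nent \brl$, or $\twib$ satisfies some literal of $\hrl$) gives exactly the same disjunction; this proves $\twib \ent \rl$ iff $\tpl{\twib, \twib} \in \modre{\rl}$ in one stroke. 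Either way there is no genuine obstacle: the only thing to watch is the translation between the set-theoretic wording of the reduct conditions and the satisfaction relation — in particular that ``$\twib \ent \hrl^-$'' means $\hrl^- \subseteq \twib$ and ``$\twib \ent \lcmp{\brl^-}$'' means $\brl^- \cap \twib = \emptyset$ — and aligning the two reduct cases with the right disjuncts of the condition defining $\twib \ent \rl$.
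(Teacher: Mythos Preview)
Your proposal is correct and is essentially the same approach as the paper's, which simply states that the lemma ``follows by the definition of $\rl^\twib$ and of \RE-models.'' You have merely spelled out in full the definition-chasing that the paper leaves implicit; both your case-split argument and your alternative via the $\rl^+/\rl^-$ decomposition are valid ways to unpack those definitions.
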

\begin{proof}
	Follows by the definition of $\rl^\twib$ and of \RE-models.
\end{proof}

\begin{lemma}
	\label{lemma:re-entailment implies se-entailment}
	Let $\gprga$, $\gprgb$ be rules or programs. Then $\modre{\gprga} \subseteq
	\modre{\gprgb}$ implies $\modse{\gprga} \subseteq \modse{\gprgb}$.
\end{lemma}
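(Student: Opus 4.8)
The plan is to reduce \SE-models to \RE-models via the single observation that an \SE-model is exactly an \RE-model $\twiab$ whose ``diagonal'' companion $\tpl{\twib,\twib}$ is also an \RE-model. Concretely, I would first establish, for any rule or program $\gprg$, the identity
\[
	\modse{\gprg} = \Set{ \twiab \in \tris | \twiab \in \modre{\gprg} \land \tpl{\twib,\twib} \in \modre{\gprg} } .
\]
For a single rule $\rl$ this is immediate from the definitions: $\twiab \in \modse{\rl}$ means $\twib \ent \rl$ and $\twia \ent \rl^\twib$; the second conjunct is precisely $\twiab \in \modre{\rl}$, and by Lemma~\ref{lemma:reduct} the first conjunct $\twib \ent \rl$ is equivalent to $\tpl{\twib,\twib} \in \modre{\rl}$. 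For a program $\gprg$ one intersects over its rules: $\twiab \in \modse{\gprg}$ iff $\twiab \in \modse{\rl}$ for all $\rl \in \gprg$, which by the rule case holds iff $\twiab \in \modre{\rl}$ and $\tpl{\twib,\twib} \in \modre{\rl}$ for all such $\rl$, i.e.\ iff $\twiab \in \modre{\gprg}$ and $\tpl{\twib,\twib} \in \modre{\gprg}$.

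Given that identity, the lemma follows in one line. Assume $\modre{\gprga} \subseteq \modre{\gprgb}$ and take any $\twiab \in \modse{\gprga}$. Then $\twiab \in \modre{\gprga} \subseteq \modre{\gprgb}$ and $\tpl{\twib,\twib} \in \modre{\gprga} \subseteq \modre{\gprgb}$, so by the identity applied to $\gprgb$ we get $\twiab \in \modse{\gprgb}$. Hence $\modse{\gprga} \subseteq \modse{\gprgb}$.

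There is essentially no hard step here; the only point requiring a little care is making the reduction uniform for both rules and programs, which is handled by the intersection argument above together with Lemma~\ref{lemma:reduct}. (One could equally invoke the remark following Definition~\ref{def:re model} that relates $\modc{\cdot}$ and the diagonal \RE-models, but routing everything through Lemma~\ref{lemma:reduct} keeps the argument self-contained.)
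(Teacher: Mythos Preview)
Your proposal is correct and takes essentially the same approach as the paper. Both arguments hinge on Lemma~\ref{lemma:reduct} to identify the classical-model condition $\twib \ent \gprg$ with the diagonal \RE-model condition $\tpl{\twib,\twib} \in \modre{\gprg}$; you package this as the explicit identity $\modse{\gprg} = \{\twiab \in \tris \mid \twiab,\tpl{\twib,\twib} \in \modre{\gprg}\}$ and then apply the hypothesis, while the paper unfolds the same reasoning inline via the chain $\twib \ent \gprga \Rightarrow \twib \ent \gprga^\twib \Rightarrow \twib \ent \gprgb^\twib \Rightarrow \twib \ent \gprgb$.
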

\begin{proof}
	Assume that $\modre{\gprga} \subseteq \modre{\gprgb}$. Then for all $\twiab
	\in \tris$, $\twia \ent \gprga^\twib$ implies $\twia \ent \gprgb^\twib$ and
	together with Lemma~\ref{lemma:reduct} this implies that for all
	interpretations $\twib$,
	\begin{equation}
		\label{eq:proof:lemma:re-entailment implies se-entailment:2}
		\twib \ent \gprga
		\qquad \text{implies} \qquad
		\twib \ent \gprga^\twib
		\qquad \text{implies} \qquad
		\twib \ent \gprgb^\twib
		\qquad \text{implies} \qquad
		\twib \ent \gprgb
		\enspace.
	\end{equation}
	In order to show that $\modse{\gprga} \subseteq \modse{\gprgb}$, take some
	$\twiab \in \modse{\gprga}$. By the definition of \SE-models, $\twib \ent
	\gprga$ and $\twia \ent \gprga^\twib$, so by
	\eqref{eq:proof:lemma:re-entailment implies se-entailment:2} and our
	assumption we can conclude that $\twib \ent \gprgb$ and $\twia \ent
	\gprgb^\twib$. Thus, by the definition of \SE-models, $\twiab \in
	\modse{\gprgb}$.
\end{proof}

%
%
%
%
%
%
\begin{proof}
	[\textbf{Proof of Proposition~\ref{prop:eq and ent comparison}}]
	\label{proof:prop:eq and ent comparison}
	~
	\begin{textenum}[(1)]
		\item First we show that $\eqSM \strl \eqSE$, i.e.\ that $\eqSM \strle
			\eqSE$ and $\eqSE \nstrle \eqSM$. To verify the former, suppose that
			$\prga$, $\prgb$ are programs with $\prga \eqSE \prgb$. Then $\prga \cup
			\emptyset$ has the same stable models as $\prgb \cup \emptyset$.
			Consequently, $\prga \eqSM \prgb$. To see that $\eqSE \nstrle \eqSM$,
			observe that the programs $\prga = \emptyset$, $\prgb = \set{\atma \lpif
			\atmb.}$ are \SM-equivalent but not \SE-equivalent.

			Turning to the remaining relationships, it follows from
			Corollary~\ref{cor:entailment vs equivalence comparison} that we can
			instead prove that
			\begin{equation}
				\label{eq:proof:prop:eq and ent comparison:1}
				\entSE \strle \entRE \strle \entRMR \strle \entRR \strle \entSU
				\qquad \text{and} \qquad
				\eqSU \nstrle \eqRR \nstrle \eqRMR \nstrle \eqRE \nstrle \eqSE
				\enspace.
			\end{equation}
			We first concentrate on the left-hand side of \eqref{eq:proof:prop:eq
			and ent comparison:1}. In order to show that $\entSE \strle \entRE$,
			suppose that $\prga$, $\prgb$ are programs such that $\prga \entRE
			\prgb$. Then $\modre{\prga} \subseteq \modre{\prgb}$ and it follows from
			Lemma~\ref{lemma:re-entailment implies se-entailment} that
			$\modse{\prga} \subseteq \modse{\prgb}$. Consequently, $\prga \entSE
			\prgb$.

			We also need to prove that $\entRE \strle \entRMR$. Take some programs
			$\prga$, $\prgb$ with $\prga \entRMR \prgb$ and put $\prga^\ctau = \prga
			\cup \set{\ctau}$. It follows that
			\begin{equation}
				\label{eq:proof:prop:eq and ent comparison:1:1}
				\forall \rlb \in \prgb \,\exists \rla_\rlb \in \prga^\ctau :
				\modre{\rla_\rlb} \subseteq \modre{\rlb}
				\enspace.
			\end{equation}
			We need to prove that $\modre{\prga} \subseteq \modre{\prgb}$. Suppose
			that $\tri \in \modre{\prga}$ and take some $\rlb \in \prgb$, our goal
			is to show that $\tri \in \modre{\rlb}$. By \eqref{eq:proof:prop:eq and
			ent comparison:1:1} there exists some $\rla_\rlb \in \prga^\ctau$ such
			that $\modre{\rla_\rlb} \subseteq \modre{\rlb}$. If $\rla_\rlb = \ctau$,
			then it immediately follows that $\tri \in \tris = \modre{\ctau}
			\subseteq \modre{\rlb}$. If $\rla_\rlb \in \prga$, then $\tri \in
			\modre{\rla_\rlb}$ by the choice of $\tri$, so $\tri \in \modre{\rlb}$.
			
			Our next goal is to show that $\entRMR \strle \entRR$. This follows
			directly by the definitions of $\entRMR$ and $\entRR$.

			To prove the final part of the left-hand side of \eqref{eq:proof:prop:eq
			and ent comparison:1}, suppose that $\prga \entSU \prgb$. Then
			$\modse{\prgb \setminus \prga} = \tris$. We need to prove that $\prga
			\entRR \prgb$, i.e.\ that for every $\rlb \in \prgb$ there is some $\rla
			\in \prga^\ctau$ such that $\modre{\rla} = \modre{\rlb}$. Pick some
			$\rlb \in \prgb$ and note that $\prgb = (\prgb \cap \prga) \cup (\prgb
			\setminus \prga)$. If $\rlb \in \prgb \cap \prga$, then $\rlb \in \prga$
			and we can put $\rla = \rlb$ to finish the proof. In the remaining case,
			$\rlb \in \prgb \setminus \prga$ and it follows from our assumption that
			$\modse{\rlb} = \tris$. Thus, putting $\rla = \ctau$ finishes the proof.

			As for the right-hand side of \eqref{eq:proof:prop:eq and ent
			comparison:1}, we can see that $\eqSU \nstrle \eqRR$ because the
			programs $\prga = \set{\lpnot \atma \lpif \atma.}$ and $\prgb =
			\set{\lpif \atma.}$ are \RR-equivalent but not \SU-equivalent.

			Similarly, programs $\prga = \set{\atma.}$ and $\prgb = \set{\atma.,
			\atma \lpif \atmb.}$ are \RMR-equivalent but not \RR-equivalent, so it
			follows that $\eqRR \nstrle \eqRMR$.
			
			Next, to verify that $\eqRMR \nstrle \eqRE$ it suffices to observe that
			the programs $\prga = \set{\atma., \atmb.}$ and $\prgb = \set{\atma.,
			\atmb \lpif \atma.}$ are \RE-equivalent but not \RMR-equivalent.

			Finally, programs $\prga = \set{\lpnot \atm.}$ and $\prgb = \set{\lpif
			\atm.}$ are \SE-equivalent but not \RE-equivalent, proving that $\eqRE
			\nstrle \eqSE$.

		\item It follows from Corollary~\ref{cor:entailment vs equivalence
			comparison} that we can instead prove that
			\begin{equation}
				\label{eq:proof:prop:eq and ent comparison:2}
				\entSE \strle \entSMR \strle \entSR \strle \entRR
				\qquad \text{and} \qquad
				\eqRR \nstrle \eqSR \nstrle \eqSMR \nstrle \eqSE
				\enspace.
			\end{equation}
			We first concentrate on the left-hand side of \eqref{eq:proof:prop:eq
			and ent comparison:2}. To prove that $\entSE \strle \entSMR$, take some
			programs $\prga$, $\prgb$ with $\prga \entSMR \prgb$ and put
			$\prga^\ctau = \prga \cup \set{\ctau}$. It follows that
			\begin{equation}
				\label{eq:proof:prop:eq and ent comparison:2:1}
				\forall \rlb \in \prgb \,\exists \rla_\rlb \in \prga^\ctau :
				\modse{\rla_\rlb} \subseteq \modse{\rlb}
				\enspace.
			\end{equation}
			We need to prove that $\modse{\prga} \subseteq \modse{\prgb}$. Suppose
			that $\tri \in \modse{\prga}$ and take some $\rlb \in \prgb$, our goal
			is to show that $\tri \in \modse{\rlb}$. By \eqref{eq:proof:prop:eq and
			ent comparison:2:1} there exists some $\rla_\rlb \in \prga^\ctau$ such
			that $\modse{\rla_\rlb} \subseteq \modse{\rlb}$. If $\rla_\rlb = \ctau$,
			then it immediately follows that $\tri \in \tris = \modse{\ctau}
			\subseteq \modse{\rlb}$. If $\rla_\rlb \in \prga$, then $\tri \in
			\modse{\rla_\rlb}$ by the choice of $\tri$, so $\tri \in \modse{\rlb}$.

			Our next goal is to show that $\entSMR \strle \entSR$. This follows
			directly by the definitions of $\entSMR$ and $\entSR$.

			To finish the proof of the left-hand side of \eqref{eq:proof:prop:eq and
			ent comparison:1}, suppose that $\prga \entRR \prgb$. Then $\forall \rlb
			\in \prgb \, \exists \rla \in \prga : \modre{\rla} = \modre{\rlb}$ and,
			due to Lemma~\ref{lemma:re-entailment implies se-entailment}, $\forall
			\rlb \in \prgb \, \exists \rla \in \prga : \modse{\rla} = \modse{\rlb}$.
			Consequently, $\prga \entSR \prgb$.
			
			As for the right-hand side of \eqref{eq:proof:prop:eq and ent
			comparison:1}, we can see that $\eqRR \nstrle \eqSR$ because the
			programs $\prga = \set{\lpnot \atma.}$ and $\prgb = \set{\lpif \atma.}$
			are \SR-equivalent but not \RR-equivalent.

			Similarly, programs $\prga = \set{\atma.}$ and $\prgb = \set{\atma.,
			\atma \lpif \atmb.}$ are \SMR-equivalent but not \SR-equivalent, so it
			follows that $\eqSR \nstrle \eqSMR$.
			
			Finally, to verify that $\eqSMR \nstrle \eqSE$ it suffices to observe
			that the programs $\prga = \set{\atma., \atmb.}$ and $\prgb =
			\set{\atma., \atmb \lpif \atma.}$ are \SE-equivalent but not
			\SMR-equivalent.
			
		\item It follows from Corollary~\ref{cor:entailment vs equivalence
			comparison} that we can instead prove that $\entSMR \strle \entRMR$ and
			$\eqRMR \nstrle \eqSMR$.

			To show the former, take some programs $\prga$, $\prgb$ such that $\prga
			\entRMR \prgb$. It follows that $\forall \rlb \in \prgb \, \exists \rla
			\in \prga : \modre{\rla} \subseteq \modre{\rlb}$ and, due to
			Lemma~\ref{lemma:re-entailment implies se-entailment}, $\forall \rlb \in
			\prgb \, \exists \rla \in \prga : \modse{\rla} \subseteq \modse{\rlb}$.
			Consequently, $\prga \entSMR \prgb$.

			As for the latter, it suffices to observe that the programs $\prga =
			\set{\lpnot \atm.}$ and $\prgb = \set{\lpif \atm.}$ are \SMR-equivalent
			but not \RMR-equivalent.

		\item According to Lemma~\ref{lemma:entailment vs equivalence comparison},
			it suffices to show that $\eqRE \nstrle \eqSMR$ and $\eqSMR \nstrle
			\eqRE$. The former follows from the fact that the programs $\prga =
			\set{\lpnot \atm.}$ and $\prgb = \set{\lpif \atm.}$ are \SMR-equivalent
			but not \RE-equivalent. The latter can be verified by observing that
			though the programs $\prga = \set{\atma., \atmb.}$ and $\prgb =
			\set{\atma., \atmb \lpif \atma.}$ are \RE-equivalent, they are not
			\SMR-equivalent.

		\item According to Lemma~\ref{lemma:entailment vs equivalence comparison},
			it suffices to show that $\eqRE \nstrle \eqSR$ and $\eqSR \nstrle
			\eqRE$. The former follows from the fact that the programs $\prga =
			\set{\lpnot \atm.}$ and $\prgb = \set{\lpif \atm.}$ are \SR-equivalent
			but not \RE-equivalent. The latter can be verified by observing that
			though the programs $\prga = \set{\atma., \atmb.}$ and $\prgb =
			\set{\atma., \atmb \lpif \atma.}$ are \RE-equivalent, they are not
			\SR-equivalent.

		\item According to Lemma~\ref{lemma:entailment vs equivalence comparison},
			it suffices to show that $\eqRMR \nstrle \eqSR$ and $\eqSR \nstrle
			\eqRMR$. The former follows from the fact that the programs $\prga =
			\set{\lpnot \atm.}$ and $\prgb = \set{\lpif \atma.}$ are \SR-equivalent
			but not \RMR-equivalent. The latter can be verified by observing that
			though the programs $\prga = \set{\atma.}$ and $\prgb = \set{\atma.,
			\atma \lpif \atmb.}$ are \RMR-equivalent, they are not \SR-equivalent.
			\qedhere
	\end{textenum}
\end{proof}

\section{Proofs: Exception-Based Rule Update Operators}

\label{app:exception-based rule update}


\subsection{Conflicts Between Sets of \sRE-Models}

\label{app:rules:re conflict}


\begin{definition}
	[Additional Notation]
	Let $\tri \in \tris$. Given an atom $\atm$, we say that $\tri$ is an
	\emph{\RE-model of $\atm$} if $\tri(\atm) = \tr$. We say that $\tri$ is an
	\emph{\RE-model of $\lpnot \atm$} if $\tri(\atm) = \fa$. We denote the set
	of all \RE-models of a literal $\lit$ by $\modre{\lit}$. Given a set of
	literals $\slit$, we say that $\tri$ is an \emph{\RE-model of $\slit$} if
	$\tri$ is an \RE-model of all literals in $\slit$. We denote the set of all
	\RE-models of $\slit$ by $\modre{\slit}$. Given a sequence of rule bases
	$\drb = \seq{\rb_\lia}_{\lia < \lng}$, we define $\modrer{\drb} =
	\seq{\modrer{\rb_\lia}}_{\lia < \lng}$.
\end{definition}


\begin{proposition}
	\label{prop:re:undefined atom vs body and head}
	Let $\rl$ be a rule, $\atm$ an atom and $\tri = \twiab \in \tris$. Then
	$\tpl{\twia \setminus \set{\atm}, \twib \cup \set{\atm}}$ is not an
	\RE-model of $\rl$ if and only if the following holds:
	\begin{textenum}[1.]
		\item Neither $\atm$ nor $\lpnot \atm$ belongs to $\brl$;
		\item $\tri$ is an \RE-model of $\brl$;
		\item $\tri$ is not an \RE-model of any literal from $\hrl \setminus
			\set{\atm, \lpnot \atm}$.
	\end{textenum}
\end{proposition}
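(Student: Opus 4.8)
The plan is to obtain the statement as a direct unfolding of Lemma~\ref{lemma:re:model conditions}. First I would observe that since $\twia \subseteq \twib$ we have $\twia \setminus \set{\atm} \subseteq \twib \subseteq \twib \cup \set{\atm}$, so $\tpl{\twia \setminus \set{\atm}, \twib \cup \set{\atm}}$ is a genuine three-valued interpretation and Lemma~\ref{lemma:re:model conditions} applies to it. Instantiating that lemma with first component $\twia \setminus \set{\atm}$ and second component $\twib \cup \set{\atm}$ gives that $\tpl{\twia \setminus \set{\atm}, \twib \cup \set{\atm}} \notin \modre{\rl}$ if and only if the four inclusions $\brl^+ \subseteq \twia \setminus \set{\atm}$, $\twia \setminus \set{\atm} \subseteq \atms \setminus \hrl^+$, $\hrl^- \subseteq \twib \cup \set{\atm}$ and $\twib \cup \set{\atm} \subseteq \atms \setminus \brl^-$ all hold.

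Next I would rewrite each of these inclusions so as to separate out the role of $\atm$. Elementary set manipulation shows: $\brl^+ \subseteq \twia \setminus \set{\atm}$ is equivalent to ``$\brl^+ \subseteq \twia$ and $\atm \notin \brl^+$''; $\twia \setminus \set{\atm} \subseteq \atms \setminus \hrl^+$ is equivalent to $\hrl^+ \cap \twia \subseteq \set{\atm}$; $\hrl^- \subseteq \twib \cup \set{\atm}$ is equivalent to $\hrl^- \setminus \set{\atm} \subseteq \twib$; and $\twib \cup \set{\atm} \subseteq \atms \setminus \brl^-$ is equivalent to ``$\twib \cap \brl^- = \emptyset$ and $\atm \notin \brl^-$''. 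Collecting the two side conditions $\atm \notin \brl^+$ and $\atm \notin \brl^-$ yields precisely clause~1, namely that neither $\atm$ nor $\lpnot \atm$ belongs to $\brl$.

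It then remains to recognise the surviving conditions. Recall that $\tri = \twiab$ is an \RE-model of a positive literal $\atmb$ iff $\tri(\atmb) = \tr$ (i.e.\ $\atmb \in \twia$), of a negative literal $\lpnot \atmb$ iff $\tri(\atmb) = \fa$ (i.e.\ $\atmb \notin \twib$), and of a set of literals iff it is an \RE-model of each member. Hence $\tri$ is an \RE-model of $\brl$ exactly when $\brl^+ \subseteq \twia$ and $\brl^- \cap \twib = \emptyset$, which is clause~2; and $\tri$ is an \RE-model of no literal of $\hrl \setminus \set{\atm, \lpnot \atm}$ exactly when $\hrl^+ \cap \twia \subseteq \set{\atm}$ and $\hrl^- \setminus \set{\atm} \subseteq \twib$, which is clause~3. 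Chaining these three equivalences with the reformulation from the previous paragraph completes the proof.

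The argument is essentially pure bookkeeping; the only delicate point — and the sole place where the precise form of clauses~1 and~3 matters — is that the ``slack'' $\set{\atm}$ created by the substitution $\twia \mapsto \twia \setminus \set{\atm}$, $\twib \mapsto \twib \cup \set{\atm}$ is absorbed exactly by excluding $\atm$ (and $\lpnot \atm$) from the body in clause~1 and from the head in clause~3. In particular, no case split on whether $\tri(\atm)$ equals $\tr$, $\un$ or $\fa$ is needed.
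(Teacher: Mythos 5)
Your proof is correct. The only point worth checking carefully is the legitimacy of applying Lemma~\ref{lemma:re:model conditions} to the pair $\tpl{\twia \setminus \set{\atm}, \twib \cup \set{\atm}}$, and you handle it: since $\twia \subseteq \twib$ the substituted pair is again in $\tris$, and the four inclusions you extract, together with your set-algebraic rewrites and the translation of \RE-satisfaction of literals into the conditions $\brl^+ \subseteq \twia$, $\brl^- \cap \twib = \emptyset$, $\hrl^+ \cap \twia \subseteq \set{\atm}$ and $\hrl^- \setminus \set{\atm} \subseteq \twib$, all check out. Your route differs from the paper's in organisation rather than substance: the paper re-derives everything from the definition of the reduct, arguing each direction separately (first showing $\rl^{\twib \cup \set{\atm}} \neq \ctau$ and reading off consequences for $\hrl^-$ and $\brl^-$, then conversely reconstructing that the reduct equals $\hrl^+ \lpif \brl^+$ and that $\twia \setminus \set{\atm}$ falsifies it), whereas you factor the entire reduct computation through Lemma~\ref{lemma:re:model conditions}, which the paper proves earlier but does not invoke at this point. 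Your version buys a single chain of equivalences in place of a bidirectional case analysis, at the cost of making the result depend on the appendix lemma; since that lemma precedes this proposition and is proved independently, there is no circularity, and the modular presentation is arguably cleaner.
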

\begin{proof}
	We prove the direct implication first. From the assumption it follows that
	$\rl^{\twib \cup \set{\atm}}$ is different from $\ctau$. This has two
	consequences. First, $\hrl^-$ is included in $\twib \cup \set{\atm}$, so all
	atoms from $\hrl^-$, except possibly $\atm$, belong to $\twib$, and thus
	$\tri$ is not an \RE-model of any default literal from $\hrl \setminus
	\set{\lpnot \atm}$. The second consequence is that $\brl^- \cap (\twib \cup
	\set{\atm})$ is empty. Hence, $\lpnot \atm$ does not belong to $\brl$.
	Furthermore, $\brl^- \cap \twib$ must also be empty, so we can conclude that
	$\tri$ is an \RE-model of all default literals from $\brl$. It also follows
	from the assumption that $\twia \setminus \set{\atm}$ contains $\brl^+$ but
	does not contain any atom from $\hrl^+$. As a consequence, $\atm$ does not
	belong to $\brl$ and we can also conclude that $\tri$ is an \RE-model of all
	atoms from $\brl$ and $\tri$ is not an \RE-model of any atom from $\hrl
	\setminus \set{\atm}$. Thus, $\tri$ is an \RE-model of $\brl$ and it is not
	an \RE-model of any literal from $\hrl \setminus \set{\atm, \lpnot \atm}$.

	As for the converse implication, we need to prove that $\twia \setminus
	\set{\atm}$ is not a model of $\rl^{\twib \cup \set{\atm}}$. We first need
	to show that $\rl^{\twib \cup \set{\atm}}$ is equal to the rule $\hrl^+
	\lpif \brl^+$. This holds if $\hrl^-$ is included in $\twib \cup \set{\atm}$
	and $\brl^-$ is disjoint with $\twib \cup \set{\atm}$. Since $\tri$ is an
	\RE-model of $\brl$, we can conclude that the set $\brl^-$ is disjoint with
	$\twib$ which, together with the assumption that $\lpnot \atm$ does not
	belong to $\brl$, implies that $\brl^-$ is disjoint with $\twib \cup
	\set{\atm}$. We also know that $\tri$ is not an \RE-model of any literal
	from $\hrl \setminus \set{\atm, \lpnot \atm}$, so we can conclude that
	$\hrl^- \setminus \set{\atm}$ is included in $\twib$. Thus, $\hrl^-$ is
	included in $\twib \cup \set{\atm}$ and we proved that $\rl^{\twib \cup
	\set{\atm}}$ is equal to the rule $\hrl^+ \lpif \brl^+$. It remains to
	show that $\twia \setminus \set{\atm}$ includes $\brl^+$ and that it does
	not contain any atom from $\hrl^+$. We know that $\tri$ is an \RE-model of
	$\brl$, so $\twia$ includes $\brl^+$. Also, since $\atm$ does not belong
	to $\brl$, $\twia \setminus \set{\atm}$ also includes $\brl^+$. Finally,
	we know that $\tri$ is not an \RE-model of any atom from $\hrl^+ \setminus
	\set{\atm}$, so $\twia$ does not contain any atom from $\hrl^+ \setminus
	\set{\atm}$. This implies that $\twia \setminus \set{\atm}$ does not
	contain any atom from $\hrl^+$.
\end{proof}

\begin{corollary}
	\label{cor:re:support}
	Let $\rl$ be a rule, $\atm$ an atom and $\twib$ an interpretation with $\atm
	\in \twib$. If $\tpl{\twib, \twib}$ is an \RE-model of $\rl$ but $\tpl{\twib
	\setminus \set{\atm}, \twib}$ is not, then $\atm \in \hrl$ and $\twib \ent
	\brl$.
\end{corollary}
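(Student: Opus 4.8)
The plan is to read the statement off the preceding Proposition~\ref{prop:re:undefined atom vs body and head} (equivalently, off Lemma~\ref{lemma:re:model conditions}). First I would note that, because $\atm \in \twib$, we have $\twib \cup \set{\atm} = \twib$, so the pair $\tpl{\twib \setminus \set{\atm}, \twib}$ appearing in the hypothesis is precisely the pair $\tpl{\twia \setminus \set{\atm}, \twib \cup \set{\atm}}$ of Proposition~\ref{prop:re:undefined atom vs body and head}, instantiated at the diagonal interpretation $\tri = \tpl{\twib, \twib}$. Applying that proposition to the assumption that $\tpl{\twib \setminus \set{\atm}, \twib}$ is not an \RE-model of $\rl$ then delivers three facts: (i) neither $\atm$ nor $\lpnot \atm$ occurs in $\brl$; (ii) $\tpl{\twib, \twib}$ is an \RE-model of $\brl$; and (iii) $\tpl{\twib, \twib}$ is not an \RE-model of any literal in $\hrl \setminus \set{\atm, \lpnot \atm}$.

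Next I would extract the body half of the conclusion directly from (ii): for a diagonal interpretation $\tpl{\twib, \twib}$, being an \RE-model of a set of literals coincides with classical satisfaction by $\twib$, so (ii) is exactly $\twib \ent \brl$, giving in particular $\brl^+ \subseteq \twib$ and $\twib \cap \brl^- = \emptyset$. For the head half I would feed in the remaining hypothesis, that $\tpl{\twib, \twib}$ \emph{is} an \RE-model of $\rl$, i.e.\ $\twib \ent \rl^\twib$, and unfold the reduct. Since $\twib \cap \brl^- = \emptyset$, the reduct $\rl^\twib$ can degenerate to $\ctau$ only if $\hrl^- \not\subseteq \twib$; but an atom of $\hrl^-$ outside $\twib$ would make $\tpl{\twib, \twib}$ an \RE-model of a default literal of $\hrl$ other than $\atm$ and $\lpnot \atm$ (other than $\lpnot \atm$ because $\atm \in \twib$ whereas that atom is not), contradicting (iii). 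Hence $\rl^\twib = (\hrl^+ \lpif \brl^+.)$ and $\twib$ models it; as $\brl^+ \subseteq \twib$, this forces $\twib \cap \hrl^+ \neq \emptyset$, and any witnessing atom of $\twib \cap \hrl^+$ is an atom of $\hrl$ that $\tpl{\twib, \twib}$ \RE-models, so by (iii) it must be $\atm$. Thus $\atm \in \hrl$.

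I do not expect a genuine obstacle here, since this is a corollary; the only points that need care are notational — matching $\tpl{\twib \setminus \set{\atm}, \twib}$ with the form $\tpl{\twia \setminus \set{\atm}, \twib \cup \set{\atm}}$ used in Proposition~\ref{prop:re:undefined atom vs body and head}, and correctly unwinding ``$\tpl{\twib, \twib}$ is an \RE-model of a literal'' and the reduct $\rl^\twib$ (remembering that the clause ``$\twib \ent \hrl^-$'' in its definition means $\hrl^- \subseteq \twib$). Should I wish to bypass Proposition~\ref{prop:re:undefined atom vs body and head} altogether, the same conclusion follows just as fast from Lemma~\ref{lemma:re:model conditions}: the hypothesis on $\tpl{\twib \setminus \set{\atm}, \twib}$ gives $\brl^+ \subseteq \twib \setminus \set{\atm} \subseteq \atms \setminus \hrl^+$ and $\hrl^- \subseteq \twib \subseteq \atms \setminus \brl^-$, whence $\twib \ent \brl$ immediately, and plugging these into the negated criterion for $\tpl{\twib, \twib} \in \modre{\rl}$ leaves only the disjunct $\twib \cap \hrl^+ \neq \emptyset$; since $(\twib \setminus \set{\atm}) \cap \hrl^+ = \emptyset$, the witness can only be $\atm$.
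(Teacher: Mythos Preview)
Your proof is correct and follows essentially the same approach as the paper: both invoke Proposition~\ref{prop:re:undefined atom vs body and head} (instantiated at $\tri = \tpl{\twib,\twib}$, using $\atm \in \twib$) to read off $\twib \ent \brl$. For the head conclusion, however, the paper is more direct than you are: rather than routing through item~(iii) of the Proposition to argue that the reduct is not $\ctau$ and that the witnessing head atom must be $\atm$, the paper simply observes from the definition of \RE-model that $\twib \ent \rl^\twib$ while $\twib \setminus \set{\atm} \nent \rl^\twib$, so $\twib$ contains a head atom of $\rl^\twib$ (hence of $\hrl$) that $\twib \setminus \set{\atm}$ lacks --- which can only be $\atm$. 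Your alternative route via Lemma~\ref{lemma:re:model conditions} is also correct and is arguably the tidiest of the three.
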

\begin{proof}
	It follows immediately from Proposition~\ref{prop:re:undefined atom vs body
	and head} that $\twib \ent \brl$. Furthermore, by the definition of
	\RE-model, $\twib$ is a model of $\rl^\twib$ while $\twib \setminus
	\set{\atm}$ is not. Hence, $\twib$ contains some atom from $\hrl$ that is
	not contained in $\twib \setminus \set{\atm}$. This atom can only be $\atm$.
\end{proof}

\begin{proposition}
	\label{prop:re:rule in context}
	Let $\rl$ be a rule, $\stri = \modre{\rl}$, $\atm$ an atom, $\twib$ an
	interpretation and $\val$ a truth value. Then $\stri^\twib(\atm) = \val$ if and
	only if the following holds:
	\begin{textenum}[1.]
		\item Neither $\atm$ nor $\lpnot \atm$ belongs to $\brl$;
		\item $\twib$ is a model of $\brl$;
		\item $\twib$ is not a model of any literal from $\hrl \setminus
			\set{\atm, \lpnot \atm}$;
		\item One of the following conditions holds:
			\begin{textenum}[(a)]
				\item $\val$ is $\tr$ and $\hrl \cap \set{\atm, \lpnot \atm} =
					\set{\atm}$, or
				\item $\val$ is $\fa$ and $\hrl \cap \set{\atm, \lpnot \atm} =
					\set{\lpnot \atm}$.
			\end{textenum}
	\end{textenum}
\end{proposition}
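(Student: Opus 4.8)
The plan is to express the equality $\stri^\twib(\atm) = \val$, where $\stri = \modre{\rl}$, in terms of which of the three three-valued interpretations $\twib[\tr/\atm] = \tpl{\twib \cup \set{\atm}, \twib \cup \set{\atm}}$, $\twib[\un/\atm] = \tpl{\twib \setminus \set{\atm}, \twib \cup \set{\atm}}$ and $\twib[\fa/\atm] = \tpl{\twib \setminus \set{\atm}, \twib \setminus \set{\atm}}$ lie in $\stri$: by the definition of the forcing relation, $\stri^\twib(\atm) = \val$ holds exactly when $\twib[\val/\atm]$ is the unique one of these three belonging to $\stri$. By Lemma~\ref{lemma:reduct}, membership of $\twib[\tr/\atm]$ and $\twib[\fa/\atm]$ reduces to classical satisfaction --- $\twib[\tr/\atm] \in \modre{\rl}$ iff $\twib \cup \set{\atm} \ent \rl$ and $\twib[\fa/\atm] \in \modre{\rl}$ iff $\twib \setminus \set{\atm} \ent \rl$ --- where ``$\twib' \nent \rl$'' unfolds to ``$\twib'$ satisfies the body of $\rl$ but not its head''. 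For $\twib[\un/\atm]$ I would apply Proposition~\ref{prop:re:undefined atom vs body and head} with $\tri = \tpl{\twib, \twib}$: since $\tpl{\twib \setminus \set{\atm}, \twib \cup \set{\atm}}$ is exactly $\twib[\un/\atm]$ and the \RE-model clauses for $\tpl{\twib, \twib}$ (which assigns no atom the value $\un$) translate verbatim into classical-satisfaction clauses for $\twib$, this yields at once that $\twib[\un/\atm] \notin \modre{\rl}$ if and only if conditions~1,~2 and~3 of the statement hold.

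With these reductions in hand I would first show that $\stri^\twib(\atm)$ can never be $\un$. Suppose for contradiction that $\twib[\un/\atm] \in \stri$ while $\twib[\tr/\atm], \twib[\fa/\atm] \notin \stri$. The latter two memberships say that both $\twib \cup \set{\atm}$ and $\twib \setminus \set{\atm}$ satisfy the body of $\rl$ but not its head; in particular $\brl^+ \subseteq \twib \setminus \set{\atm}$ and $\brl^- \cap (\twib \cup \set{\atm}) = \emptyset$. Now I case-split on the reduct $\rl^{\twib \cup \set{\atm}}$. If it equals $(\hrl^+ \lpif \brl^+.)$, then $\twib[\un/\atm] \in \stri$ gives $\twib \setminus \set{\atm} \ent (\hrl^+ \lpif \brl^+.)$, so --- the body being satisfied there --- some atom of $\hrl^+$ lies in $\twib \setminus \set{\atm}$, and that atom already satisfies the head of $\rl$ in $\twib \cup \set{\atm}$, contradicting $\twib \cup \set{\atm} \nent \rl$. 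If instead $\rl^{\twib \cup \set{\atm}} = \ctau$, then, since $\brl^- \cap (\twib \cup \set{\atm}) = \emptyset$, some atom of $\hrl^-$ must lie outside $\twib \cup \set{\atm}$, and the corresponding default literal again satisfies the head of $\rl$ in $\twib \cup \set{\atm}$ --- the same contradiction.

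For the forward implication it then remains to treat $\stri^\twib(\atm) = \val$ with $\val \in \set{\tr, \fa}$. Then $\twib[\un/\atm] \notin \stri$, so conditions~1--3 hold, and (using condition~1) the body of $\rl$ is satisfied in each of $\twib$, $\twib \cup \set{\atm}$ and $\twib \setminus \set{\atm}$. Hence $\twib[\val/\atm] \in \stri$ says that the head of $\rl$ is satisfied in the associated two-valued interpretation, while $\twib[\val'/\atm] \notin \stri$ for the other $\val' \in \set{\tr, \fa}$ says it is not. Condition~3 makes every head literal that does not mention $\atm$ false in all of these interpretations, so the satisfied head literal is either $\atm$ --- possible only when the associated interpretation is $\twib \cup \set{\atm}$, forcing $\val = \tr$ and case~4(a) --- or $\lpnot\atm$ --- possible only when it is $\twib \setminus \set{\atm}$, forcing $\val = \fa$ and case~4(b). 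Non-satisfaction of the head in the other candidate then excludes $\lpnot\atm \in \hrl$ (resp.\ $\atm \in \hrl$), so $\hrl \cap \set{\atm, \lpnot\atm}$ equals $\set{\atm}$ when $\val = \tr$ and $\set{\lpnot\atm}$ when $\val = \fa$, which is condition~4.

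The converse is a routine verification. Assume conditions~1--3 together with case~4(a), so $\val = \tr$, $\atm \in \hrl$ and $\lpnot\atm \notin \hrl$. Conditions~1--3 give $\twib[\un/\atm] \notin \stri$; $\atm \in \hrl$ and $\atm \in \twib \cup \set{\atm}$ make the head of $\rl$ true in $\twib \cup \set{\atm}$, so $\twib[\tr/\atm] \in \stri$; and by condition~3 and $\lpnot\atm \notin \hrl$ no head literal is true in $\twib \setminus \set{\atm}$, while by conditions~1--2 the body is, so $\twib[\fa/\atm] \notin \stri$; hence $\stri^\twib(\atm) = \tr = \val$. Case~4(b) with $\val = \fa$ is symmetric, swapping the roles of $\twib \cup \set{\atm}$ and $\twib \setminus \set{\atm}$ and of $\atm$ and $\lpnot\atm$. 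I expect the one genuinely delicate point to be the exclusion of $\val = \un$, in particular making the reduct case-split close cleanly against $\twib \cup \set{\atm} \nent \rl$; everything else is elementary bookkeeping about the body and head of a single rule, resting on Lemma~\ref{lemma:reduct} and Proposition~\ref{prop:re:undefined atom vs body and head}.
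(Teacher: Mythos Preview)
Your proposal is correct and follows essentially the same approach as the paper: both proofs reduce the statement to analysing membership of the three interpretations $\twib[\tr/\atm]$, $\twib[\un/\atm]$, $\twib[\fa/\atm]$ in $\modre{\rl}$, invoking Lemma~\ref{lemma:reduct} for the two-valued ones and Proposition~\ref{prop:re:undefined atom vs body and head} for the undefined one, and then verify conditions~1--4 case by case.

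The only noteworthy difference is in the exclusion of $\val = \un$. You case-split on whether $\rl^{\twib \cup \set{\atm}}$ is $\ctau$ and in each branch derive a contradiction with $\twib \cup \set{\atm} \nent \rl$. The paper instead observes directly that \emph{both} reducts $\rl^{\twib \cup \set{\atm}}$ and $\rl^{\twib \setminus \set{\atm}}$ must differ from $\ctau$ (since $\twib[\tr/\atm], \twib[\fa/\atm] \notin \stri$), hence both equal $(\hrl^+ \lpif \brl^+.)$, so $\twib \setminus \set{\atm}$ simultaneously satisfies and fails to satisfy the same rule. This is marginally cleaner, but your version is equally valid; your $\ctau$-branch simply re-derives, by hand, that this case cannot arise.
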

\begin{proof}
	We focus on the direct implication first. Assume that $\val = \tr$. Then
	$\tpl{\twib \cup \set{\atm}, \twib \cup \set{\atm}}$ is an \RE-model of
	$\rl$ although both $\tpl{\twib \setminus \set{\atm}, \twib \cup
	\set{\atm}}$ and $\tpl{\twib \setminus \set{\atm}, \twib \setminus
	\set{\atm}}$ are not. By Proposition~\ref{prop:re:undefined atom vs body and
	head} and Lemma~\ref{lemma:reduct} we can conclude that the first three of
	the properties that we need to prove are satisfied. It remains to show that
	$\hrl \cap \set{\atm, \lpnot \atm} = \set{\atm}$, i.e. that $\atm$ belongs
	to $\hrl^+$ but it does not belong to $\hrl^-$. To see that the former
	holds, note that $\twib \setminus \set{\atm}$ is not a model of $\rl^{\twib
	\cup \set{\atm}}$, so $\twib \setminus \set{\atm}$ includes $\brl^+$ and
	it does not contain any atom from $\hrl^+$. Since we know that $\twib \cup
	\set{\atm}$ \emph{is} a model of $\rl^{\twib \cup \set{\atm}}$, it must be
	the case that $\twib \cup \set{\atm}$ contains an atom from $\hrl^+$. This
	atom can only be $\atm$. Finally, if $\atm$ were a member of $\hrl^-$,
	then $\rl^{\twib \setminus \set{\atm}}$ would coincide with $\ctau$, so
	$\tpl{\twib \setminus \set{\atm}, \twib \setminus \set{\atm}}$ would be an
	\RE-model of $\rl$, contrary to the assumption.

	Now assume that $\val = \fa$. Then $\tpl{\twib \setminus \set{\atm}, \twib
	\setminus \set{\atm}}$ is an \RE-model of $\rl$ although both $\tpl{\twib
	\setminus \set{\atm}, \twib \cup \set{\atm}}$ and $\tpl{\twib \cup
	\set{\atm}, \twib \cup \set{\atm}}$ are not. By
	Proposition~\ref{prop:re:undefined atom vs body and head} and
	Lemma~\ref{lemma:reduct} we can conclude that the first three of
	the properties that we need to prove are satisfied. It remains to show that
	$\hrl \cap \set{\atm, \lpnot \atm} = \set{\lpnot \atm}$, i.e. that $\atm$
	belongs to $\hrl^-$ but it does not belong to $\hrl^+$. To see that the
	former holds, note that by the assumption $\twib \setminus \set{\atm}$ is a
	model of $\rl^{\twib \setminus \set{\atm}}$ while it is not a model of
	$\rl^{\twib \cup \set{\atm}}$. Hence, $\rl^{\twib \setminus \set{\atm}}$
	must be equal to $\ctau$. We know that $\brl^-$ is disjoint
	with $\twib \cup \set{\atm}$, so it must also be disjoint with $\twib
	\setminus \set{\atm}$. Thus, there must exist some atom from $\hrl^-$ that is
	not contained in $\twib \setminus \set{\atm}$ while it was contained in
	$\twib \cup \set{\atm}$. This atom can only be $\atm$. Finally, if $\atm$
	were a member of $\hrl^+$, then $\twib \cup \set{\atm}$ would contain an
	atom from $\hrl^+$, so $\tpl{\twib \cup \set{\atm}, \twib \cup \set{\atm}}$
	would be an \RE-model of $\rl$, contrary to the assumption.

	Finally, we show by contradiction that $\val$ cannot be equal to $\un$.
	Suppose that $\val = \un$. It follows that $(\twib \cup \set{\atm}) \nent
	\rl^{\twib \cup \set{\atm}}$, $(\twib \setminus \set{\atm}) \ent \rl^{\twib
	\cup \set{\atm}}$ and $(\twib \setminus \set{\atm}) \nent \rl^{\twib
	\setminus \set{\atm}}$. Thus, since both $\rl^{\twib \setminus \set{\atm}}$
	and $\rl^{\twib \cup \set{\atm}}$ are different from $\ctau$, they
	must be identical and this is in conflict with our latter two conclusions.

	As for the converse implication, let $\tri^\tr = \tpl{\twib \cup \set{\atm},
	\twib \cup \set{\atm}}$, $\tri^\un = \tpl{\twib \setminus \set{\atm}, \twib
	\cup \set{\atm}}$ and $\tri^\fa = \tpl{\twib \setminus \set{\atm}, \twib
	\setminus \set{\atm}}$ First suppose that $\val$ is $\tr$ and $\hrl \cap
	\set{\atm, \lpnot \atm} = \set{\atm}$. We need to show that $\tri^\tr$ is an
	\RE-model of $\rl$ while both $\tri^\un$ and $\tri^\fa$ are not. The first
	property follows directly from the fact that $\atm$ belongs to $\hrl^+$ and
	$\tri^\tr$ is an \RE-model of $\atm$. The second property follows from
	Proposition~\ref{prop:re:undefined atom vs body and head} and
	Lemma~\ref{lemma:reduct}. To show that the third is also satisfied, note
	that since $\tri^\un$ is not an \RE-model of $\rl$, the rule $\rl^{\twib
	\cup \set{\atm}}$ coincides with the rule $\hrl^+ \lpif \brl^+$. This
	implies that $\brl^-$ is disjoint with $\twib \cup \set{\atm}$ and
	$\hrl^-$ is included in $\twib \cup \set{\atm}$. As a consequence,
	$\brl^-$ is also disjoint with $\twib \setminus \set{\atm}$. Moreover,
	from our assumptions we know that $\hrl \cap \set{\atm, \lpnot \atm} =
	\set{\atm}$, which means that $\atm$ does not belong to $\hrl^-$. Thus,
	$\hrl^-$ is included in $\twib \setminus \set{\atm}$. As a consequence,
	the rule $\rl^{\twib \setminus \set{\atm}}$ also coincides with the rule
	$\hrl^+ \lpif \brl^+$. Furthermore, since $\tri^\un$ is not an \RE-model
	of $\rl$, $\twib \setminus \set{\atm}$ is not a model of $\rl^{\twib \cup
	\set{\atm}}$. Since $\rl^{\twib \cup \set{\atm}} = \rl^{\twib \setminus
	\set{\atm}}$, we obtain that $\twib \setminus \set{\atm}$ is not a model
	of $\rl^{\twib \setminus \set{\atm}}$. Hence $\tri^\fa$ is not an
	\RE-model of $\rl$.

	Next, suppose that $\val$ is $\fa$ and $\hrl \cap \set{\atm, \lpnot \atm} =
	\set{\lpnot \atm}$. We need to show that $\tri^\fa$ is an \RE-model of $\rl$
	while both $\tri^\un$ and $\tri^\tr$ are not. The first property follows
	directly from the fact that $\atm$ belongs to $\hrl^-$ but does not belong
	to $\twib \setminus \set{\atm}$ because in this case $\rl^{\twib \setminus
	\set{\atm}}$ coincides with $\ctau$. The second property follows from
	Proposition~\ref{prop:re:undefined atom vs body and head} and
	Lemma~\ref{lemma:reduct}. To show that the third is also satisfied,
	note that since $\tri^\un$ is not an \RE-model of $\rl$, the rule
	$\rl^{\twib \cup \set{\atm}}$ coincides with the rule $\hrl^+ \lpif \brl^+$
	and $\twib \setminus \set{\atm}$ is not a model of $\rl^{\twib \cup
	\set{\atm}}$, i.e. $\twib \setminus \set{\atm}$ includes $\brl^+$ but does
	not contain any atom from $\hrl^+$. Thus, $\twib \cup \set{\atm}$ also
	includes $\brl^+$ and from our assumption that $\hrl \cap \set{\atm, \lpnot
	\atm} = \set{\lpnot \atm}$ we can conclude that $\atm$ does not belong to
	$\hrl^+$. Thus, $\twib \cup \set{\atm}$ does not contain any atom from
	$\hrl^+$ and, consequently, $\tri^\tr$ is not an \RE-model of $\rl$.
\end{proof}

\begin{proposition}
	\label{prop:re:conflict in context vs syntax}
	Let $\rla$ and $\rlb$ be non-disjunctive rules, $\stria = \modre{\rla}$,
	$\strib = \modre{\rlb}$, and $\twib$ an interpretation. Then $\stria
	\confl{\twib}{\atm} \strib$ if and only if for some $\lit \in \set{\atm,
	\lpnot \atm}$, $\hrla = \set{\lit}$, $\hrlb = \set{\lcmp{\lit}}$, $\twib$ is
	a model of both $\brla$ and $\brlb$, and $\brla$, $\brlb$ do not contain
	$\atm$ nor $\lpnot \atm$.
\end{proposition}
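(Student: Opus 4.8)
The plan is to derive this directly from Proposition~\ref{prop:re:rule in context}, which characterises when $\stri^\twib(\atm)$ takes a given truth value for $\stri = \modre{\rl}$, exploiting the hypothesis that $\rla$ and $\rlb$ are non-disjunctive. First I would record the key simplification: for a non-disjunctive rule $\rl$ the head $\hrl$ contains at most one literal, so whenever condition~4 of Proposition~\ref{prop:re:rule in context} holds — forcing $\hrl \cap \set{\atm, \lpnot\atm}$ to be a singleton — we in fact have $\hrl = \set{\atm}$ or $\hrl = \set{\lpnot\atm}$, hence $\hrl \setminus \set{\atm, \lpnot\atm} = \emptyset$ and condition~3 becomes vacuous. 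Consequently, for a non-disjunctive rule $\rl$ with $\stri = \modre{\rl}$ the characterisation streamlines to: $\stri^\twib(\atm) = \tr$ iff $\hrl = \set{\atm}$, $\twib \ent \brl$, and $\brl$ contains neither $\atm$ nor $\lpnot\atm$; and $\stri^\twib(\atm) = \fa$ iff $\hrl = \set{\lpnot\atm}$, $\twib \ent \brl$, and $\brl$ contains neither $\atm$ nor $\lpnot\atm$. Note also that the last paragraph of Proposition~\ref{prop:re:rule in context} rules out $\stri^\twib(\atm) = \un$, so whenever $\stri^\twib(\atm)$ is defined it equals $\tr$ or $\fa$.

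For the forward direction I would assume $\stria \confl{\twib}{\atm} \strib$, i.e.\ both $\stria^\twib(\atm)$ and $\strib^\twib(\atm)$ are defined and distinct. By the preceding remark each equals $\tr$ or $\fa$, so distinctness means one is $\tr$ and the other $\fa$. In the subcase $\stria^\twib(\atm) = \tr$, $\strib^\twib(\atm) = \fa$, the streamlined equivalences give $\hrla = \set{\atm}$, $\hrlb = \set{\lpnot\atm}$, $\twib \ent \brla$, $\twib \ent \brlb$, and neither body contains $\atm$ nor $\lpnot\atm$; taking $\lit = \atm$ yields exactly the right-hand side. The symmetric subcase $\stria^\twib(\atm) = \fa$, $\strib^\twib(\atm) = \tr$ is identical with $\lit = \lpnot\atm$, so that $\lcmp{\lit} = \atm$.

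For the converse, given $\lit \in \set{\atm, \lpnot\atm}$ with $\hrla = \set{\lit}$, $\hrlb = \set{\lcmp{\lit}}$, $\twib$ a model of both $\brla$ and $\brlb$, and neither body containing $\atm$ or $\lpnot\atm$, I would simply invoke the streamlined equivalences again: if $\lit = \atm$ then $\stria^\twib(\atm) = \tr$ and $\strib^\twib(\atm) = \fa$; if $\lit = \lpnot\atm$ then $\stria^\twib(\atm) = \fa$ and $\strib^\twib(\atm) = \tr$. In either case both values are defined and differ, so $\stria \confl{\twib}{\atm} \strib$ by definition. The argument is short and no genuine obstacle is anticipated; the only point requiring care is the bookkeeping that collapses condition~3 of Proposition~\ref{prop:re:rule in context} in the non-disjunctive case, which I would state once at the outset rather than re-derive within each subcase.
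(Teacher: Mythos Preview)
Your proposal is correct and follows exactly the approach the paper takes: the paper's proof simply reads ``Follows directly from Proposition~\ref{prop:re:rule in context},'' and what you have written is precisely the unpacking of that one-line justification, including the key observation that non-disjunctivity collapses condition~3 and forces the head to be a singleton.
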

\begin{proof}
	Follows directly from Proposition~\ref{prop:re:rule in context}.
\end{proof}


\subsection{Syntactic Properties of \texorpdfstring{\terone-}{delta\_a-}Based Operators}

\label{app:rules:syntactic properties:erone}

\begin{definition}
	Let \ter{} be a local exception function, $\stria \subseteq \tris$ and
	$\sstri \subseteq \pws{\tris}$. We define $\aug{\stria, \sstri} = \stria
	\cup \bigcup_{\strib \in \sstri} \er(\stria, \strib)$ and extend this
	definition to sequences of sets of sets of three-valued interpretations
	inductively as follows: $\aug{\stria, \seq{}} = \stria$ and $\aug{\stria,
	\seq{\sstri_\lia}_{\lia < \lng + 1}} = \aug{\aug{\stria,
	\seq{\sstri_\lia}_{\lia < \lng}}, \sstri_\lng}$.
\end{definition}


\begin{proposition}
	\label{prop:aug vs uope}
	Let $\drb = \seq{\rb_\lia}_{\lia < \lng}$ be a sequence of rule bases,
	$\sstri_\lia = \modrer{\rb_\lia}$ for all $\lia < \lng$, and $\uopr$ a
	\ter-based rule update operator. Then, $\modrer{\biguopr \drb} = \set{
	\aug{\stri, \seq{\sstri_\lib}_{\lia < \lib < \lng}} | \lia < \lng \land
	\stri \in \sstri_\lia}$.
\end{proposition}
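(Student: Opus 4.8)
The plan is to prove the identity by induction on the length $\lng$ of the sequence $\drb$, peeling off the last rule base at each step and invoking the defining equation~\eqref{eq:characterisation after update} of a \ter-based operator. The key preliminary observation is that, for a \emph{simple} exception function, $\e(\stria, \sstria, \sstrib) = \bigcup_{\strib \in \sstrib} \er(\stria, \strib)$, so the second argument of $\e$ is irrelevant and, for all rule bases $\rba$, $\rbu$ and every \ter-based operator $\uopr$,
\[
	\modrer{\rba \uopr \rbu}
	=
	\Set{ \aug{\stri, \modrer{\rbu}} | \stri \in \modrer{\rba} }
	\cup \modrer{\rbu}
	\enspace,
\]
since $\modre{\rl} \cup \e(\modre{\rl}, \modrer{\rba}, \modrer{\rbu})$ is exactly $\aug{\modre{\rl}, \modrer{\rbu}}$ by the single-set case of the definition of $\aug{\cdot}$. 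This rephrasing turns the recursion defining $\biguopr$ into one that mirrors the defining clauses of $\aug{\cdot}$ on sequences.

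For the base case $\lng = 0$ we have $\biguopr \seq{} = \emptyset$, hence $\modrer{\biguopr \drb} = \emptyset$, and the right-hand side is empty as well since there is no index $\lia < 0$. For the inductive step, write $\drb = \seq{\rb_\lia}_{\lia < \lng + 1}$, put $\rba = \biguopr \seq{\rb_\lia}_{\lia < \lng}$ and $\rbu = \rb_\lng$, so that $\biguopr \drb = \rba \uopr \rbu$ and $\modrer{\rbu} = \sstri_\lng$. Applying the displayed reformulation and then the induction hypothesis to $\modrer{\rba}$ yields
\[
	\modrer{\biguopr \drb}
	=
	\Set{
		\aug{\aug{\stri, \seq{\sstri_\lib}_{\lia < \lib < \lng}}, \sstri_\lng}
		|
		\lia < \lng \land \stri \in \sstri_\lia
	}
	\cup \sstri_\lng
	\enspace.
\]
I would then collapse the nested $\aug{\cdot}$ using the clause of its definition that peels the last entry off a sequence: since $\seq{\sstri_\lib}_{\lia < \lib < \lng + 1}$ is $\seq{\sstri_\lib}_{\lia < \lib < \lng}$ followed by $\sstri_\lng$, the nested term equals $\aug{\stri, \seq{\sstri_\lib}_{\lia < \lib < \lng + 1}}$. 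Finally, because $\aug{\stri, \seq{}} = \stri$ and the index range $\lng < \lib < \lng + 1$ is empty, the trailing set $\sstri_\lng$ is precisely the $\lia = \lng$ slice $\Set{ \aug{\stri, \seq{\sstri_\lib}_{\lng < \lib < \lng + 1}} | \stri \in \sstri_\lng }$, so the union becomes $\Set{ \aug{\stri, \seq{\sstri_\lib}_{\lia < \lib < \lng + 1}} | \lia < \lng + 1 \land \stri \in \sstri_\lia }$, which is the claim for $\lng + 1$.

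The only genuinely delicate point is the index bookkeeping: the definition of $\aug{\cdot}$ on sequences is stated with an index running from $0$, whereas both the statement and the induction manipulate the shifted sub-sequences $\seq{\sstri_\lib}_{\lia < \lib < \lng}$, and one must treat correctly the two degenerate situations — the empty prefix when $\lia = \lng - 1$, and the fresh $\lia = \lng$ slice contributed by $\modrer{\rb_\lng}$ — so that the last rule base is absorbed as a bona fide term of the union rather than left dangling. Everything else is a routine unfolding of Definition~\ref{def:exception-based rule update operator}, the simplicity of $\e$, and the definition of $\aug{\cdot}$.
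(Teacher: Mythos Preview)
Your proof is correct and follows the same approach as the paper, which simply states ``Follows by induction on $\lng$.'' You have spelled out in full the details that the paper leaves implicit: the reformulation of $\modrer{\rba \uopr \rbu}$ via simplicity of $\e$, the base case, the unfolding of the inductive step, and the absorption of the $\lia = \lng$ slice.
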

\begin{proof}
	Follows by induction on $\lng$.
\end{proof}


\begin{lemma}
	\label{lemma:syntactic properties:erone:support}
	Let $\stria \subseteq \tris$, $\sstri = \seq{\sstri_\lia}_{\lia < \lng}$
	where $\sstri_\lia \subseteq \pws{\tris}$ for all $\lia < \lng$, $\twib$ an
	interpretation and $\atm$ an atom. If $\tpl{\twib, \twib}$ belongs to
	$\augone{\stria, \sstri}$, but $\tpl{\twib \setminus \set{\atm}, \twib}$
	does not, then $\tpl{\twib, \twib}$ belongs to $\stria$.
\end{lemma}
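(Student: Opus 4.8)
The plan is to prove the statement by induction on the length $\lng$ of the sequence $\sstri = \seq{\sstri_\lia}_{\lia < \lng}$. The base case $\lng = 0$ is immediate: by definition $\augone{\stria, \seq{}} = \stria$, so the hypothesis $\tpl{\twib, \twib} \in \augone{\stria, \seq{}}$ already gives $\tpl{\twib, \twib} \in \stria$.

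Before the inductive step I would record one simple property of $\erone$. Since $\erone(\strib, \stric) = \Set{\tpl{\twia, \twic} \in \tris \mid \exists \atm : \strib \confl{\twic}{\atm} \stric}$, whether a three-valued interpretation belongs to $\erone(\strib, \stric)$ depends only on its \emph{second} component. Consequently, if $\tpl{\twib, \twib} \in \erone(\strib, \stric)$ for some $\strib, \stric \subseteq \tris$, then $\tpl{\twib \setminus \set{\atm}, \twib} \in \erone(\strib, \stric)$ as well: this pair is a legitimate three-valued interpretation (indeed $\twib \setminus \set{\atm} \subseteq \twib$) and it has the same second component $\twib$, for which the defining condition already holds.

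For the inductive step, consider a sequence of length $\lng + 1$ and write $\stria' = \augone{\stria, \seq{\sstri_\lia}_{\lia < \lng}}$; by the recursive definition of $\aug$ we have $\augone{\stria, \seq{\sstri_\lia}_{\lia < \lng + 1}} = \stria' \cup \bigcup_{\strib \in \sstri_\lng} \erone(\stria', \strib)$. Assume $\tpl{\twib, \twib}$ belongs to this set while $\tpl{\twib \setminus \set{\atm}, \twib}$ does not, and split into two cases. If $\tpl{\twib, \twib} \in \erone(\stria', \strib)$ for some $\strib \in \sstri_\lng$, then by the property above $\tpl{\twib \setminus \set{\atm}, \twib} \in \erone(\stria', \strib) \subseteq \augone{\stria, \seq{\sstri_\lia}_{\lia < \lng + 1}}$, contradicting the assumption — so this case is impossible. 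Otherwise $\tpl{\twib, \twib} \in \stria'$, and since $\stria' \subseteq \augone{\stria, \seq{\sstri_\lia}_{\lia < \lng + 1}}$ the assumption forces $\tpl{\twib \setminus \set{\atm}, \twib} \notin \stria'$; applying the induction hypothesis to the shorter sequence $\seq{\sstri_\lia}_{\lia < \lng}$, whose augmentation is precisely $\stria'$, yields $\tpl{\twib, \twib} \in \stria$, as required. The only ingredient beyond unfolding the recursive definition of $\aug$ is the closure property of $\erone$ isolated above, which is a direct consequence of its definition, so I do not anticipate any real obstacle.
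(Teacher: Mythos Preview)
Your proof is correct and follows exactly the approach indicated in the paper, which merely states that the result ``follows by induction on $\lng$.'' You have filled in the details appropriately: the key observation that membership in $\erone(\strib,\stric)$ depends only on the second component is precisely what makes the inductive step go through.
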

\begin{proof}
	Follows by induction on $\lng$.
%
\end{proof}

\begin{proposition}
	\label{prop:syntactic properties:erone:support}
	Every \terone-based rule update operator respects support.
\end{proposition}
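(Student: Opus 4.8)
The plan is to chase the structural results already established until the claim collapses onto the single-rule support fact of Corollary~\ref{cor:re:support}, with Lemma~\ref{lemma:syntactic properties:erone:support} supplying the one genuinely content-bearing step -- that the \RE-model $\tpl{\twib, \twib}$ cannot be manufactured by \terone-augmentation unless $\tpl{\twib \setminus \set{\atm}, \twib}$ is manufactured along with it.

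First I would fix a \terone-based rule update operator $\uopr$, a DLP $\dprg = \seq{\prg_\lia}_{\lia < \lng}$ (fed to $\biguopr$ as a sequence of rule bases whose atomic elements are the individual rules of each $\prg_\lia$), a stable model $\twib$ of $\biguopr \dprg$, and an atom $\atm \in \twib$; the goal is to produce a rule $\rl \in \all{\dprg}$ with $\atm \in \hrl$ and $\twib \ent \brl$. Since $\twib$ is a stable model of $\biguopr \dprg$, Proposition~\ref{prop:re:sm} gives $\tpl{\twib, \twib} \in \modre{\biguopr \dprg}$ and, using $\twib \setminus \set{\atm} \subsetneq \twib$ (this is where $\atm \in \twib$ enters), also $\tpl{\twib \setminus \set{\atm}, \twib} \notin \modre{\biguopr \dprg}$. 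Unfolding $\modre{\biguopr \dprg} = \bigcap \modrer{\biguopr \dprg}$, the second fact yields a set $\stri \in \modrer{\biguopr \dprg}$ with $\tpl{\twib \setminus \set{\atm}, \twib} \notin \stri$, while the first forces $\tpl{\twib, \twib} \in \stri$.

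Next I would apply Proposition~\ref{prop:aug vs uope} with $\sstri_\lia = \modrer{\prg_\lia}$: every member of $\modrer{\biguopr \dprg}$ has the form $\augone{\modre{\rl}, \seq{\sstri_\lib}_{\lia < \lib < \lng}}$ for some $\lia < \lng$ and some rule $\rl \in \prg_\lia \subseteq \all{\dprg}$. Fixing such $\lia$ and $\rl$ for the particular $\stri$ above, monotonicity of augmentation gives $\modre{\rl} \subseteq \stri$, hence $\tpl{\twib \setminus \set{\atm}, \twib} \notin \modre{\rl}$; and Lemma~\ref{lemma:syntactic properties:erone:support}, applied to $\modre{\rl}$ and $\seq{\sstri_\lib}_{\lia < \lib < \lng}$, turns ``$\tpl{\twib, \twib} \in \stri = \augone{\modre{\rl}, \seq{\sstri_\lib}_{\lia < \lib < \lng}}$ but $\tpl{\twib \setminus \set{\atm}, \twib} \notin \augone{\modre{\rl}, \seq{\sstri_\lib}_{\lia < \lib < \lng}}$'' into $\tpl{\twib, \twib} \in \modre{\rl}$. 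Now Corollary~\ref{cor:re:support}, applied to $\rl$, $\atm$ and $\twib$, whose hypotheses ($\atm \in \twib$, $\tpl{\twib, \twib} \in \modre{\rl}$, $\tpl{\twib \setminus \set{\atm}, \twib} \notin \modre{\rl}$) have all been verified, delivers $\atm \in \hrl$ and $\twib \ent \brl$; since $\rl \in \all{\dprg}$, this says precisely that $\atm$ is supported by $\all{\dprg}$ in $\twib$, as required.

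The purely routine parts are the unfolding of $\modre{\cdot} = \bigcap \modrer{\cdot}$, the monotonicity $\modre{\rl} \subseteq \augone{\modre{\rl}, \cdot}$, and the bookkeeping that presents a DLP to $\biguopr$ as a sequence of rule bases so that the set extracted by Proposition~\ref{prop:aug vs uope} is the set of \RE-models of an honest rule rather than of a sub-program. The real obstacle, and the reason the argument works for $\erone$ specifically, is isolated in Lemma~\ref{lemma:syntactic properties:erone:support} (itself proved by induction on the length of the sequence): $\erone$ introduces its exceptions in ``horizontal slices'' -- whenever a conflict occurs with respect to an interpretation $\twic$, it adds \emph{all} three-valued interpretations whose second component is $\twic$, in particular $\tpl{\twib, \twib}$ and $\tpl{\twib \setminus \set{\atm}, \twib}$ simultaneously -- so $\tpl{\twib, \twib}$ can survive an augmentation step while $\tpl{\twib \setminus \set{\atm}, \twib}$ does not only if $\tpl{\twib, \twib}$ was already present beforehand.
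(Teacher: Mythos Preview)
Your proof is correct and follows essentially the same approach as the paper's own proof: both pick the offending set $\stri \in \modrer{\biguopr \dprg}$ via the stable-model characterisation, identify it as an $\augone{\modre{\rl}, \cdot}$ using Proposition~\ref{prop:aug vs uope}, use monotonicity and Lemma~\ref{lemma:syntactic properties:erone:support} to push $\tpl{\twib, \twib}$ down into $\modre{\rl}$, and finish with Corollary~\ref{cor:re:support}. Your additional commentary on why the lemma holds for $\erone$ (the ``horizontal slices'' observation) is a nice gloss that the paper leaves implicit.
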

\begin{proof}
	Let $\uopr$ be some \terone-based rule update operator, pick some DLP $\dprg
	= \seq{\prg_\lia}_{\lia < \lng}$, suppose that $\twib$ is a stable model of
	$\biguopr \dprg$ and take some $\atm \in \twib$. We need to show that for
	some rule $\rl \in \all{\dprg}$, $\atm \in \hrl$ and $\twib \ent \brl$.
	Since $\twib$ is a stable model of $\biguopr \dprg$, we know that
	$\tpl{\twib, \twib}$ belongs to $\modre{\biguopr \dprg}$ and for all $\twia
	\subsetneq \twib$, $\twiab$ does not belong to $\modre{\biguopr \dprg}$. In
	particular, $\tpl{\twib \setminus \set{\atm}, \twib}$ does not belong to
	$\modre{\biguopr \dprg}$. Consequently, there is some $\strib \in
	\modrer{\biguopr \dprg}$ such that $\tpl{\twib \setminus \set{\atm}, \twib}$
	does not belong to $\strib$ although $\tpl{\twib, \twib}$ does. According to
	Proposition~\ref{prop:aug vs uope}, $\strib = \Aug{\stri,
	\seq{\modrer{\prg_\lib}}_{\lia < \lib < \lng}}$ where $\stria \in
	\modrer{\prg_\lia}$ for some $\lia < \lng$. Let $\rl$ be a rule from
	$\prg_\lia$ such that $\modre{\rl} = \stria$. Since $\tpl{\twib \setminus
	\set{\atm}, \twib}$ does not belong to $\strib$, it cannot belong to
	$\stria$ since $\stria$ is a subset of $\strib$. Also, by
	Lemma~\ref{lemma:syntactic properties:erone:support}, $\tpl{\twib,
	\twib}$ belongs to $\stria$. Thus, it follows from
	Corollary~\ref{cor:re:support} that $\atm \in \hrl$ and $\twib \ent
	\brl$.
\end{proof}

\begin{lemma}
	\label{lemma:syntactic properties:erone:fact update}
	Let $\dprg$ be a finite sequence of sets of facts and $\lit$ a literal.
	Then,
	\[
		\augone{\modre{\lit.}, \modrer{\dprg}} = \begin{cases}
			\tris & (\lcmp{\lit}.) \in \all{\dprg} \enspace; \\
			\modre{\lit.} & \text{otherwise} \enspace.
		\end{cases}
	\]
\end{lemma}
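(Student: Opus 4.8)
The plan is to prove, by induction on the length $\lng$ of $\dprg$, the equivalent statement that $\augone{\modre{\lit.}, \modrer{\dprg}}$ equals $\tris$ when $(\lcmp{\lit}.) \in \all{\dprg}$ and equals $\modre{\lit.}$ otherwise. The base case $\lng = 0$ is immediate: $\augone{\modre{\lit.}, \seq{}} = \modre{\lit.}$ and $\all{\seq{}} = \emptyset$.

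For the inductive step, write $\dprg = \seq{\prg_\lia}_{\lia < \lng}$ with $\lng = n + 1$ and put $\dprg' = \seq{\prg_\lia}_{\lia < n}$. By the definition of $\augone{\cdot}$, $\augone{\modre{\lit.}, \modrer{\dprg}} = \aug[\erone]{\augone{\modre{\lit.}, \modrer{\dprg'}}, \modrer{\prg_n}}$. The induction hypothesis tells us that the inner term equals $\tris$ if $(\lcmp{\lit}.) \in \all{\dprg'}$ — in which case the outer $\aug[\erone]{\cdot}$ is again $\tris$, since a union involving $\tris$ is $\tris$, and also $(\lcmp{\lit}.) \in \all{\dprg}$, so the claim holds — and equals $\modre{\lit.}$ otherwise, which is the remaining case, where $(\lcmp{\lit}.) \notin \all{\dprg'}$ and we must analyse $\aug[\erone]{\modre{\lit.}, \modrer{\prg_n}} = \modre{\lit.} \cup \bigcup_{\rl \in \prg_n} \erone(\modre{\lit.}, \modre{\rl})$.

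The crux is evaluating $\erone(\modre{\lit.}, \modre{\rl})$ for a fact $\rl \in \prg_n$, and here I would apply Proposition~\ref{prop:re:conflict in context vs syntax} to the non-disjunctive rules $(\lit.)$ and $\rl$. Since the body of $(\lit.)$ is empty and its head is $\set{\lit}$, a conflict $\modre{\lit.} \confl{\twib}{\atm} \modre{\rl}$ can hold only when $\atm$ is the atom underlying $\lit$ and the head of $\rl$ is $\set{\lcmp{\lit}}$, i.e.\ — as $\rl$ is a fact — only when $\rl = (\lcmp{\lit}.)$; and in that situation every hypothesis of the proposition is vacuously satisfied for \emph{every} interpretation $\twib$, so the conflict holds w.r.t.\ all $\twib$. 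Consequently, if $(\lcmp{\lit}.) \notin \prg_n$ then each $\erone(\modre{\lit.}, \modre{\rl})$ with $\rl \in \prg_n$ is empty, so the expression collapses to $\modre{\lit.}$, which together with $(\lcmp{\lit}.) \notin \all{\dprg}$ is exactly what is needed; and if $(\lcmp{\lit}.) \in \prg_n$ then one summand is $\erone(\modre{\lit.}, \modre{\lcmp{\lit}.}) = \Set{\twiab \in \tris | \exists \atm : \modre{\lit.} \confl{\twib}{\atm} \modre{\lcmp{\lit}.}} = \tris$, so the expression equals $\tris$, matching $(\lcmp{\lit}.) \in \all{\dprg}$.

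I do not expect a serious obstacle. The one point to watch is keeping the induction invariant strong enough — that the partially augmented set is always exactly $\modre{\lit.}$ or $\tris$ — so that Proposition~\ref{prop:re:conflict in context vs syntax}, which is phrased for sets of \RE-models of single rules, genuinely applies at each step, the $\tris$ case being dispatched separately by absorption. A minor routine check along the way is that, because $\rl$ ranges over facts, "the head of $\rl$ is $\set{\lcmp{\lit}}$" really does force $\rl$ to be the syntactic rule $(\lcmp{\lit}.)$, so that the syntactic test $(\lcmp{\lit}.) \in \prg_n$ is precisely the condition that triggers a conflict.
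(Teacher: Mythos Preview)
Your proposal is correct and follows essentially the same approach as the paper, which simply states that the lemma follows by induction on the length of $\dprg$ using Proposition~\ref{prop:re:conflict in context vs syntax}. Your write-up fills in precisely those details: the inductive unfolding of $\augone{\cdot}$, the absorption argument when the inner value is $\tris$, and the application of Proposition~\ref{prop:re:conflict in context vs syntax} to identify that a conflict with a fact $\rl$ occurs if and only if $\rl = (\lcmp{\lit}.)$.
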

\begin{proof}
	Follows using Proposition~\ref{prop:re:conflict in context vs
	syntax} by induction on the length of $\dprg$.
\end{proof}

\begin{corollary}
	\label{cor:syntactic properties:erone:fact update}
	Let $\dprg = \seq{\prg_\lia}_{\lia < \lng}$ be a finite sequence of sets of
	facts and $\uopr$ a \terone-based rule update operator. Then,
	$\modrer{\biguopr \dprg} \cup \set{\tris} = \set{ \modre{\lit.} | \exists
	\lia < \lng : (\lit.) \in \prg_\lia \land (\forall \lib : \lia < \lib < \lng
	\mlthen (\lcmp{\lit}.) \notin \prg_\lib) } \cup \set{\tris}$.
\end{corollary}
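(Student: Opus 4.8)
The plan is to derive this identity as a direct consequence of Proposition~\ref{prop:aug vs uope} and Lemma~\ref{lemma:syntactic properties:erone:fact update}, with only routine bookkeeping in between. Fix a \terone-based rule update operator $\uopr$ and a finite sequence of sets of facts $\dprg = \seq{\prg_\lia}_{\lia < \lng}$. Since $\uopr$ is \terone-based, the local exception function relevant to Proposition~\ref{prop:aug vs uope} is $\erone$, so that the generic augmentation operator occurring there is exactly the $\augone$ appearing in Lemma~\ref{lemma:syntactic properties:erone:fact update}. Proposition~\ref{prop:aug vs uope} then says that $\modrer{\biguopr \dprg}$ is precisely the collection of the sets $\augone{\stri, \seq{\modrer{\prg_\lib}}_{\lia < \lib < \lng}}$, ranging over all $\lia < \lng$ and all $\stri \in \modrer{\prg_\lia}$. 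As $\prg_\lia$ is a set of facts, every $\stri \in \modrer{\prg_\lia}$ equals $\modre{\lit.}$ for some literal $\lit$ with $(\lit.) \in \prg_\lia$, and conversely every such literal gives rise to one of these sets.

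For the left-to-right inclusion I would take such an element, determined by an index $\lia < \lng$ and a literal $\lit$ with $(\lit.) \in \prg_\lia$. The suffix $\seq{\prg_\lib}_{\lia < \lib < \lng}$ is again a finite sequence of sets of facts, whose image under $\modrer{\cdot}$ is $\seq{\modrer{\prg_\lib}}_{\lia < \lib < \lng}$, so Lemma~\ref{lemma:syntactic properties:erone:fact update} applies to it and gives $\augone{\modre{\lit.}, \seq{\modrer{\prg_\lib}}_{\lia < \lib < \lng}} = \tris$ whenever $(\lcmp{\lit}.) \in \all{\seq{\prg_\lib}_{\lia < \lib < \lng}}$, i.e.\ whenever $(\lcmp{\lit}.) \in \prg_\lib$ for some $\lib$ with $\lia < \lib < \lng$, and $\augone{\modre{\lit.}, \seq{\modrer{\prg_\lib}}_{\lia < \lib < \lng}} = \modre{\lit.}$ otherwise. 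In the first case the element lies in $\set{\tris}$; in the second case $\lit$ satisfies $(\lit.) \in \prg_\lia$ and $(\lcmp{\lit}.) \notin \prg_\lib$ for all $\lib$ with $\lia < \lib < \lng$, so $\modre{\lit.}$ belongs to the right-hand side of the claimed identity. Hence $\modrer{\biguopr \dprg} \cup \set{\tris}$ is contained in the right-hand side.

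For the converse I would take a literal $\lit$ and an index $\lia < \lng$ with $(\lit.) \in \prg_\lia$ and $(\lcmp{\lit}.) \notin \prg_\lib$ for every $\lib$ with $\lia < \lib < \lng$. Then $(\lcmp{\lit}.) \notin \all{\seq{\prg_\lib}_{\lia < \lib < \lng}}$, so Lemma~\ref{lemma:syntactic properties:erone:fact update} yields $\augone{\modre{\lit.}, \seq{\modrer{\prg_\lib}}_{\lia < \lib < \lng}} = \modre{\lit.}$, and by Proposition~\ref{prop:aug vs uope} this set belongs to $\modrer{\biguopr \dprg}$. Since $\set{\tris}$ is added on both sides, this gives the reverse inclusion and completes the argument.

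I do not expect a genuine obstacle here: the entire content is packaged in the two cited results, and what remains is the case split on whether a complementary fact occurs later in the sequence together with the handling of the spurious element $\tris$ — needed on the left to absorb exactly the cases in which $\augone{\modre{\lit.}, \cdot}$ collapses to $\tris$, and harmless on the right. The only point that deserves to be stated explicitly is why every $\stri \in \modrer{\prg_\lia}$ has the form $\modre{\lit.}$, which is immediate because the elements of $\prg_\lia$ are facts; the standing convention that rules are uniquely identified plays no role, since $\modre{\lit.}$ depends only on $\lit$.
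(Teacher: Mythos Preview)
Your proposal is correct and follows essentially the same route as the paper: invoke Proposition~\ref{prop:aug vs uope} to express $\modrer{\biguopr \dprg}$ as the collection of sets $\augone{\modre{\lit.}, \seq{\modrer{\prg_\lib}}_{\lia < \lib < \lng}}$, then apply Lemma~\ref{lemma:syntactic properties:erone:fact update} to each such set and split on whether the complementary fact appears later. The paper compresses this into a single chain of set rewritings rather than spelling out both inclusions, but the content is identical.
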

\begin{proof}
	By Proposition~\ref{prop:aug vs uope}, $\modrer{\dprg} = \set{
	\aug{\modre{\rl}, \seq{\modrer{\prg_\lib}}_{\lia < \lib < \lng}} | \lia <
	\lng \land \rl \in \prg_\lia }$, which can also be written as $\set{
	\aug{\modre{\lit.}, \seq{\modrer{\prg_\lib}}_{\lia < \lib < \lng}} |
	\lia < \lng \land (\lit.) \in \prg_\lia }$. Furthermore, due to
	Lemma~\ref{lemma:syntactic properties:erone:fact update}, we can
	equivalently write this as
	\begin{multline*}
		\Set{
			\tris
			|
			\exists \lia, \lib, \lit :
			\lia < \lib < \lng
			\land (\lit.) \in \prg_\lia \land (\lcmp{\lit}.) \in \prg_\lib
		} \\
		{}\cup \Set{
			\modre{\lit.}
			|
			\exists \lia < \lng :
			(\lit.) \in \prg_\lia
			\land (
				\forall \lib :
				\lia < \lib < \lng \mlthen (\lcmp{\lit}.) \notin \prg_\lib
			)
		}
		\enspace.
	\end{multline*}
	This directly implies the desired conclusion.
\end{proof}

\begin{proposition}
	\label{prop:syntactic properties:erone:fact update}
	Every \terone-based rule update operator respects fact update.
\end{proposition}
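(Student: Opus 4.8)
The plan is to read the stable models of $\biguopr \dprg$ off its set of sets of \RE-models, which Corollary~\ref{cor:syntactic properties:erone:fact update} already pins down, and then to match the result with the interpretation prescribed by the Fact Update property. Fix a finite sequence of consistent sets of facts $\dprg = \seq{\prg_\lia}_{\lia < \lng}$ and a \terone-based operator $\uopr$, and let $L$ be the set of literals $\lit$ for which there is some $\lia < \lng$ with $(\lit.) \in \prg_\lia$ and $(\lcmp{\lit}.) \notin \prg_\lib$ for all $\lib$ with $\lia < \lib < \lng$; put $L^+ = \Set{\atm \in \atms | \atm \in L}$ and $L^- = \Set{\atm \in \atms | \lpnot\atm \in L}$. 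First I would note that $L^+ \cap L^- = \emptyset$: if some $\atm$ lay in both, witnessed by $(\atm.) \in \prg_\lia$ and $(\lpnot\atm.) \in \prg_\lib$, then $\lia \neq \lib$ since a consistent set of facts contains at most one of $(\atm.)$, $(\lpnot\atm.)$, and the later of these two facts contradicts the survival of the literal witnessed by the earlier one. Hence the interpretation in the statement of Fact Update is exactly $\twib_0 := L^+$.

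By Corollary~\ref{cor:syntactic properties:erone:fact update}, $\modrer{\biguopr \dprg} \cup \set{\tris} = \Set{\modre{\lit.} | \lit \in L} \cup \set{\tris}$; in particular $\modre{\atm.} \in \modrer{\biguopr \dprg}$ for every $\atm \in L^+$ (it is not $\tris$), and $\modre{\biguopr \dprg} = \bigcap_{\lit \in L} \modre{\lit.}$. Using the identities $\modre{\atm.} = \Set{\tpl{\twia, \twib} \in \tris | \atm \in \twia}$ and $\modre{\lpnot\atm.} = \Set{\tpl{\twia, \twib} \in \tris | \atm \notin \twib}$ (immediate from Definition~\ref{def:re model}), together with $L^+ \cap L^- = \emptyset$, this yields: $\tpl{\twib_0, \twib_0} \in \modre{\biguopr \dprg}$; and for every $\twia \subsetneq \twib_0$, picking $\atm \in \twib_0 \setminus \twia \subseteq L^+$ gives $\tpl{\twia, \twib_0} \notin \modre{\atm.}$, hence $\tpl{\twia, \twib_0} \notin \modre{\biguopr \dprg}$. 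By Proposition~\ref{prop:re:sm}, $\twib_0$ is a stable model of $\biguopr \dprg$.

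For uniqueness, let $\twib$ be any stable model. From $\tpl{\twib, \twib} \in \modre{\biguopr \dprg}$ and the same two identities I get $L^+ \subseteq \twib$ and $L^- \cap \twib = \emptyset$. If $\twib \neq L^+$, choose $\atm \in \twib \setminus L^+$ and set $\twia = \twib \setminus \set{\atm} \subsetneq \twib$. For each $\lit \in L$: if $\lit = \atmb \in L^+$ then $\atmb \in \twib$ and $\atmb \neq \atm$, so $\atmb \in \twia$; if $\lit = \lpnot\atmb$ with $\atmb \in L^-$ then $\atmb \notin \twib$. Either way $\tpl{\twia, \twib} \in \modre{\lit.}$, so $\tpl{\twia, \twib} \in \modre{\biguopr \dprg}$, contradicting the minimality clause of Proposition~\ref{prop:re:sm}. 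Hence $\twib = L^+ = \twib_0$, so $\twib_0$ is the unique stable model and $\uopr$ respects fact update. There is no real obstacle here --- Corollary~\ref{cor:syntactic properties:erone:fact update} does the heavy lifting --- the only steps needing a little care being the disjointness $L^+ \cap L^- = \emptyset$ (hence $\twib_0 = L^+$) and keeping track of which sets $\modre{\lit.}$ actually belong to $\modrer{\biguopr \dprg}$; the rest is a mechanical use of the \RE-model/stable-model correspondence.
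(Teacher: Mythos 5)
Your proof is correct and follows essentially the same route as the paper's: both rest on Corollary~\ref{cor:syntactic properties:erone:fact update} to pin down $\modrer{\biguopr \dprg}$ and then read off existence and uniqueness of the stable model via Proposition~\ref{prop:re:sm}. The only differences are organisational (your explicit $L^+$/$L^-$ disjointness check and a single minimality contradiction for uniqueness, versus the paper's two-case analysis), and these do not change the substance of the argument.
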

\begin{proof}
	Let $\dprg = \seq{\prg_\lia}_{\lia < \lng}$ be a finite sequence of
	consistent sets of facts, $\twib$ the interpretation
	\[
		\Set{
			\atm |
			\exists \lia < \lng : (\atm.) \in \prg_\lia
			\land
			(\forall \lib :
				\lia < \lib < \lng \mlthen (\lpnot \atm.) \notin \prg_\lib
			)
		}
	\]
	and $\uopr$ a \terone-based rule update operator. We need to show that
	$\twib$ is the unique stable model of $\biguopr \dprg$.

	We start by proving that $\tpl{\twib, \twib}$ belongs to $\modre{\biguopr
	\dprg}$. Pick some $\stri \in \modrer{\biguopr \dprg}$. By
	Corollary~\ref{cor:syntactic properties:erone:fact update} we know that
	$\stri$ is either $\tris$, or it is equal to $\modre{\lit.}$ where
	\[
		\exists \lia < \lng : (\lit.) \in \prg_\lia
		\land
		(\forall \lib :
			\lia < \lib < \lng \mlthen (\lcmp{\lit}.) \notin \prg_\lib
		)
		\enspace.
	\]
	In the former case it trivially holds that $\tpl{\twib, \twib}$ belongs to
	$\stri = \tris$. Now suppose that $\lit$ is an atom $\atm$. Then, by its
	definition, $\twib$ contains $\atm$, so $\tpl{\twib, \twib}$ belongs to
	$\modre{\atm.} = \stri$. On the other hand, if $\lit$ is a default literal
	$\lpnot \atm$, then the fact $(\atm.)$ does not belong to $\prg_\lia$
	because $\prg_\lia$ is consistent, and it also does not belong to
	$\prg_\lib$ for any $\lib$ with $\lia < \lib < \lng$. So $\atm$ does not
	belong to $\twib$ and, hence, $\tpl{\twib, \twib}$ belongs to $\modre{\lpnot
	\atm.} = \stri$.

	Now suppose that $\twiab$ belongs to $\modre{\biguopr \dprg}$ and
	take some $\atm \in \twib$. Then,
	\[
		\exists \lia < \lng : (\atm.) \in \prg_\lia
		\land
		(\forall \lib :
			\lia < \lib < \lng \mlthen (\lpnot \atm.) \notin \prg_\lib
		)
		\enspace,
	\]
	so, by Corollary~\ref{cor:syntactic properties:erone:fact update},
	$\modre{\atm.}$ belongs to $\modrer{\biguopr \dprg}$. Since $\twiab$ belongs
	to $\modre{\biguopr \dprg}$, it must also belong to $\modre{\atm.}$. Thus,
	$\atm$ belongs to $\twia$ and as the choice of $\atm \in \twib$ was
	arbitrary, we can conclude that $\twia = \twib$. As a consequence, $\twib$
	is indeed a stable model of $\biguopr \dprg$.

	It remains to prove that $\twib$ is the only stable model of $\biguopr
	\dprg$. Suppose that $\twib'$ is a stable model of $\biguopr \dprg$ and take
	some $\atm \in \twib$. We will show that $\atm$ belongs to $\twib'$. We know
	that
	\[
		\exists \lia < \lng : (\atm.) \in \prg_\lia
		\land
		(\forall \lib :
			\lia < \lib < \lng \mlthen (\lpnot \atm.) \notin \prg_\lib
		)
		\enspace,
	\]
	so, by Corollary~\ref{cor:syntactic properties:erone:fact update},
	$\modre{\atm.}$ belongs to $\modrer{\biguopr \dprg}$. Since $\twib'$ is a
	stable model of $\biguopr \dprg$, $\tpl{\twib', \twib'}$ belongs to
	$\modre{\biguopr \dprg}$ and, consequently, also to $\modre{\atm.}$. Thus,
	$\atm$ must belong to $\twib'$. Now take some atom $\atm$ that does not belong
	to $\twib$. We will show that $\atm$ does not belong to $\twib'$ either.
	There are two cases to consider:
	\begin{textenum}[a)]
		\item If $(\atm.)$ does not belong to $\prg_\lia$ for all $\lia < \lng$,
			then it can be seen that $\tpl{\twib' \setminus \set{\atm}, \twib'}$
			belongs to all elements of $\modrer{\biguopr \dprg}$. Thus, since
			$\twib'$ is a stable model of $\biguopr \dprg$, $\twib' \setminus
			\set{\atm} = \twib'$ and, consequently, $\atm$ does not belong to
			$\twib'$.
			
		\item If $(\atm.)$ belongs to $\prg_{\lia_0}$ for some $\lia_0 < \lng$ and
			whenever $(\atm.)$ belongs to $\prg_\lia$ for some $\lia$, there is some
			$\lib$ with $\lia < \lib < \lng$ such that $(\lpnot \atm.)$ belongs to
			$\prg_\lib$, then there must exist some $\lia_1$ such that $(\lpnot
			\atm.)$ belongs to $\prg_{\lia_1}$ and for all $\lib$ with $\lia_1 <
			\lib < \lng$, $(\atm.)$ does not belong to $\prg_\lib$. Consequently,
			$\modre{\lpnot \atm.}$ belongs to $\modrer{\biguopr \dprg}$. Thus, since
			$\tpl{\twib', \twib'}$ belongs to $\modre{\biguopr \dprg}$, it follows
			that $\atm$ cannot belong to $\twib'$. \qedhere
	\end{textenum}
\end{proof}

\begin{proposition}
	\label{prop:if beta stable then ju}
	Let $\dprg = \seq{\prga, \prgu}$ be a dynamic logic program, $\uopr$ a
	\terone-based rule update operator and $\twib$ an interpretation. If $\twib$
	is a stable model of $\biguopr \dprg$, then $\twib$ is a \JU-model of
	$\dprg$.
\end{proposition}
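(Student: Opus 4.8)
The plan is to combine the \RE-model characterisation of stable models (Proposition~\ref{prop:re:sm}) with the description of $\modrer{\biguopr \dprg}$ supplied by Proposition~\ref{prop:aug vs uope}. Writing $\dprg = \seq{\prga, \prgu}$ and observing that, because $\dprg$ has length two, only rules of $\prga$ can be rejected, set $\prg' = (\prga \setminus \rejju{\dprg, \twib}) \cup \prgu = \all{\dprg} \setminus \rejju{\dprg, \twib}$; by Definition~\ref{def:ru:ju} it suffices to show that $\twib$ is a stable model of $\prg'$, which by Proposition~\ref{prop:re:sm} splits into proving (a)~$\tpl{\twib, \twib} \in \modre{\prg'}$ and (b)~$\twiab \notin \modre{\prg'}$ for every $\twia \subsetneq \twib$. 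By Proposition~\ref{prop:aug vs uope}, $\modrer{\biguopr \dprg}$ consists of the sets $\stria' = \modre{\rla} \cup \bigcup_{\rlc \in \prgu} \erone(\modre{\rla}, \modre{\rlc})$ for $\rla \in \prga$, together with the sets $\modre{\rlc}$ for $\rlc \in \prgu$; since $\twib$ is a stable model of $\biguopr \dprg$, the pair $\tpl{\twib, \twib}$ lies in every element of $\modrer{\biguopr \dprg}$ while no pair $\twiab$ with $\twia \subsetneq \twib$ does.

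For claim~(a) I would proceed as follows. If $\rlc \in \prgu$ then $\modre{\rlc} \in \modrer{\biguopr \dprg}$, so $\tpl{\twib, \twib} \in \modre{\biguopr \dprg} \subseteq \modre{\rlc}$. If $\rla \in \prga \setminus \rejju{\dprg, \twib}$ and, for contradiction, $\tpl{\twib, \twib} \notin \modre{\rla}$, then since $\tpl{\twib, \twib} \in \stria'$ the pair must lie in $\erone(\modre{\rla}, \modre{\rlc})$ for some $\rlc \in \prgu$, i.e.\ $\modre{\rla} \confl{\twib}{\atm} \modre{\rlc}$ for some atom $\atm$. As rules in a DLP are non-disjunctive, Proposition~\ref{prop:re:conflict in context vs syntax} then yields a literal $\lit$ with $\hrla = \set{\lit}$, $\hrlc = \set{\lcmp{\lit}} = \lcmp{\hrla}$ and $\twib \ent \brlc$ --- but this is exactly the condition witnessing $\rla \in \rejju{\dprg, \twib}$, a contradiction. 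Hence $\tpl{\twib, \twib} \in \modre{\rla}$, and (a) follows.

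For claim~(b), fix $\twia \subsetneq \twib$; then $\twiab \notin \modre{\biguopr \dprg}$, so $\twiab \notin \stri$ for some $\stri \in \modrer{\biguopr \dprg}$. If $\stri = \modre{\rlc}$ with $\rlc \in \prgu$, or $\stri = \stria'$ with $\rla \in \prga \setminus \rejju{\dprg, \twib}$, the associated rule lies in $\prg'$ and excludes $\twiab$ (in the second case because $\modre{\rla} \subseteq \stria'$), so $\twiab \notin \modre{\prg'}$. The one remaining possibility, $\stri = \stria'$ with $\rla \in \rejju{\dprg, \twib}$, I would rule out. In that case $\twiab \notin \modre{\rla}$ and $\twiab \notin \erone(\modre{\rla}, \modre{\rlc})$ for all $\rlc \in \prgu$; the key observation is that whether a pair belongs to $\erone(\modre{\rla}, \modre{\rlc})$ depends only on its second component, so $\tpl{\twib, \twib} \notin \erone(\modre{\rla}, \modre{\rlc})$ for all $\rlc$ as well. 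Hence, if also $\tpl{\twib, \twib} \notin \modre{\rla}$, then $\tpl{\twib, \twib} \notin \stria'$, contradicting that $\twib$ is a stable model of $\biguopr \dprg$. Otherwise $\tpl{\twib, \twib} \in \modre{\rla}$ while $\twiab \notin \modre{\rla}$, and Lemma~\ref{lemma:re:model conditions} then forces $\hrla = \set{\atm}$ for a single atom $\atm$ with $\atm \in \twib$. Since $\rla \in \rejju{\dprg, \twib}$, some $\rlc \in \prgu$ has $\hrlc = \lcmp{\hrla} = \set{\lpnot \atm}$ and $\twib \ent \brlc$; but then $\twib$ is not a classical model of $\rlc$ (its body holds and its only head literal requires $\atm \notin \twib$), so by Lemma~\ref{lemma:reduct} $\tpl{\twib, \twib} \notin \modre{\rlc}$, again contradicting that $\twib$ is a stable model of $\biguopr \dprg$. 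This establishes (b).

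I expect the last case of~(b) --- a rule $\rla$ of the original program that is rejected, yet whose set of \RE-models still excludes $\twiab$ --- to be the main obstacle: a naive argument would try to exhibit a different rule of $\prg'$ witnessing $\twiab \notin \modre{\prg'}$, and this is precisely where local cycles (a head atom occurring in the body) would break things. The two observations above avoid the difficulty entirely: the first uses that \terone{} only introduces pairs that agree with $\twib$ on the atoms false in $\twib$, so its effect is insensitive to the first component; the second uses that a rejecting rule with head $\lpnot \atm$ and satisfied body is classically falsified whenever $\atm$ is true, so its mere presence in $\prgu$ already prevents $\twib$ from being a stable model of $\biguopr \dprg$.
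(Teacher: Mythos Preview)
Your proof is correct and follows essentially the paper's approach: part~(a) is identical, and for minimality both arguments rest on the observation that membership in $\erone(\cdot,\cdot)$ depends only on the second component of the pair, together with the role of the rejecting rule from $\prgu$. The only difference is organisational---you argue the contrapositive via Proposition~\ref{prop:re:sm} and a further case split on whether $\tpl{\twib,\twib} \in \modre{\rla}$, whereas the paper works with the reduct directly and establishes $\tpl{\twib,\twib} \notin \modre{\rla}$ in one step by using the already-proved fact that $\twib$ models $\prgb$ (hence satisfies the head of the rejecting rule).
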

\begin{proof}
	From the assumption we can conclude that $\tpl{\twib, \twib}$ is an
	\RE-model of $\prga \uopr \prgu$ and for every $\twia \subsetneq \twib$,
	$\twiab$ is not an \RE-model of $\prga \uopr \prgu$.

	We need to show that $\twib$ is a minimal model of the program $\prgb = [
	\all{\dprg} \setminus \rej{\dprg, \twib} ]^\twib$. First we prove that
	$\twib$ is a model of $\prgb$. Take some rule $\rla' \in \prgb$ and let
	$\rla$ be a rule from $[ \all{\dprg} \setminus \rej{\dprg, \twib} ]$ such
	that $\rla' = \rla^\twib$. We consider two cases:
	\begin{textenum}[a)]
		\item If $\rla$ belongs to $\prgu$, then since $\tpl{\twib, \twib}$ belongs
			to $\modre{\prga \uopr \prgu}$ and $\modrer{\prga \uopr \prgu}$ contains
			$\modre{\rla}$, $\tpl{\twib, \twib}$ must also belong to $\modre{\rla}$.
			Thus, $\twib$ is a model of $\rla$ and consequently also a model
			of $\rla' = \rla^\twib$.

		\item If $\rla$ belongs to $\prga \setminus \rej{\dprg, \twib}$, then
			since $\tpl{\twib, \twib}$ belongs to $\modre{\prga \uopr \prgu}$ and
			$\modrer{\prga \uopr \prgu}$ contains $\modre{\rla} \cup \bigcup_{\rlb
			\in \prgu} \erone(\modre{\rla}, \modre{\rlb})$, $\tpl{\twib, \twib}$
			must also belong to $\modre{\rla} \cup \bigcup_{\rlb \in \prgu}
			\erone(\modre{\rla}, \modre{\rlb})$. Suppose first that for some rule
			$\rlb \in \prgu$, $\tpl{\twib, \twib}$ belongs to the set
			$\erone(\modre{\rla}, \modre{\rlb})$. This implies that $\modre{\rla}
			\confl{\twib}{\atm} \modre{\rlb}$ for some atom $\atm$ and by
			Proposition~\ref{prop:re:conflict in context vs syntax} we can conclude
			that $\rla$ belongs to $\rej{\dprg, \twib}$, contrary to the assumption.
			Thus, $\tpl{\twib, \twib}$ does not belong to the set $\bigcup_{\rlb \in
			\prgu} \erone(\modre{\rla}, \modre{\rlb})$ and, consequently, it belongs
			to $\modre{\rla}$. Hence, $\twib$ is a model of $\rla$ and,
			consequently, it is also a model of $\rla' = \rla^\twib$.
	\end{textenum}

	It remains to prove that $\twib$ is a minimal model of $\prgb$. Take some
	model $\twia$ of $\prgb$ such that $\twia$ is a subset of $\twib$. We need
	to show that $\twia = \twib$. In the following we will show that $\twiab$
	is a member of the set $\modre{\prga \uopr \prgu}$ which, together with the
	assumption that $\twib$ is a stable model of $\prga \uopr \prgu$, implies
	that $\twia = \twib$.

	So in order to finish the proof, take some set $\stri$ from $\modrer{\prga
	\uopr \prgu}$. We need to show that $\twiab$ belongs to $\stri$. Recall
	that $ \modrer{\prga \uopr \prgu} = \set{ \modre{\rla} \cup \bigcup_{\rlb
	\in \prgu} \erone(\modre{\rla}, \modre{\rlb}) | \rla \in \prga } \cup
	\modrer{\prgu}$. If $\stri$ belongs to $\modrer{\prgu}$, then there is some
	rule $\rla \in \prgu$ such that $\stri = \modre{\rla}$. Moreover,
	$\rla^\twib$ belongs to $\prgb$, so $\twia$ is a model of $\rla^\twib$. It
	then follows that $\twiab$ is an \RE-model of $\rla$, i.e.\ that $\twiab$
	belongs to $\stri$, as we wanted to show.

	The remaining case is when for some $\rla \in \prga$, $\stri = \modre{\rla}
	\cup \bigcup_{\rlb \in \prgu} \erone(\modre{\rla}, \modre{\rlb})$. Suppose
	that $\twiab$ does not belong to $\modre{\rla}$. Then $\twia$ is not a model
	of $\rla^\twib$. Since $\twia$ is a subset of $\twib$, we can conclude from
	this that
	\begin{equation} \label{eq:proof:if beta stable then ju:2}
		\twib \ent \brla \enspace.
	\end{equation}
	Furthermore, from our assumption that $\twia$ is a model of $\prgb$ it then
	follows that $\rla^\twib$ does not belong to $\prgb$ and, consequently,
	$\rla$ belongs to $\rej{\dprg, \twib}$. So there must be some rule $\rlb \in
	\prgu$ such that $\hrlb = \lcmp{\hrla}$ and $\twib \ent \brlb$. Since we
	know from the previous part of the proof that $\twib$ is a model of $\prgb$,
	we can conclude that $\twib \ent \hrlb$, so $\twib \nent \hrla$.

	Thus, it follows from \eqref{eq:proof:if beta stable then ju:2} that $\twib$
	is not a model of $\rla$, so $\tpl{\twib, \twib}$ is not an \RE-model of
	$\rla$. But since $\twib$ is a stable model of $\prga \uopr \prgu$,
	$\tpl{\twib, \twib}$ must belong to $\erone(\modre{\rla}, \modre{\rlb'})$ for
	some $\rlb' \in \prgu$ and from the definition of $\erone(\cdot, \cdot)$ we
	obtain that $\twiab$ also belongs to $\erone(\modre{\rla},
	\modre{\rlb'})$. This implies that $\twiab$ belongs to $\stri$.
\end{proof}

\begin{proposition}
	\label{prop:if ju then beta stable}
	Let $\dprg = \seq{\prga, \prgu}$ be a DLP free of local cycles, $\uopr$ a
	\terone-based rule update operator and $\twib$ an interpretation. If $\twib$
	is a \JU-model of $\dprg$, then $\twib$ is a stable model of $\biguopr
	\prg$.
\end{proposition}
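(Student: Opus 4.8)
The plan is to apply Proposition~\ref{prop:re:sm} to $\biguopr \dprg = \prga \uopr \prgu$, using that by Proposition~\ref{prop:aug vs uope} we have $\modrer{\biguopr \dprg} = \Set{\modre{\rla} \cup \bigcup_{\rlb \in \prgu} \erone(\modre{\rla}, \modre{\rlb}) \mid \rla \in \prga} \cup \modrer{\prgu}$. This is the mirror image of the proof of Proposition~\ref{prop:if beta stable then ju}: there one started from a stable model of $\prga \uopr \prgu$ and produced a \JU-model; here we start from the \JU-model $\twib$, i.e.\ from the fact that $\twib$ is a subset-minimal model of $\prgc = [\all{\dprg} \setminus \rejju{\dprg, \twib}]^\twib$, and recover a stable model of $\prga \uopr \prgu$.

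First I would show $\tpl{\twib, \twib} \in \modre{\prga \uopr \prgu}$, i.e.\ that $\tpl{\twib, \twib}$ lies in every set of $\modrer{\biguopr \dprg}$. For a set $\modre{\rlb}$ with $\rlb \in \prgu$: since a \JU-rejection witness must come from a strictly later component, no rule of $\prgu$ is in $\rejju{\dprg, \twib}$, so $\rlb^\twib \in \prgc$, whence $\twib \ent \rlb$ and $\tpl{\twib, \twib} \in \modre{\rlb}$ by Lemma~\ref{lemma:reduct}. For a set $\modre{\rla} \cup \bigcup_{\rlb \in \prgu} \erone(\modre{\rla}, \modre{\rlb})$ with $\rla \in \prga$ I would split on whether $\rla$ is rejected. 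If $\rla \notin \rejju{\dprg, \twib}$, then $\rla^\twib \in \prgc$, so $\twib \ent \rla$ and $\tpl{\twib, \twib} \in \modre{\rla}$. If $\rla \in \rejju{\dprg, \twib}$ and $\twib \nent \brla$, then $\twib$ is trivially a model of $\rla$ and again $\tpl{\twib, \twib} \in \modre{\rla}$. The remaining subcase is $\rla \in \rejju{\dprg, \twib}$ with $\twib \ent \brla$: here a witnessing rule $\rlb \in \prgu$ with $\hrlb = \lcmp{\hrla}$ and $\twib \ent \brlb$ exists, and since $\all{\dprg}$ has no local cycles, neither $\brla$ nor $\brlb$ mentions the head atom of $\rla$; Proposition~\ref{prop:re:conflict in context vs syntax} then gives $\modre{\rla} \confl{\twib}{\atm} \modre{\rlb}$ for the relevant atom $\atm$, so $\tpl{\twib, \twib} \in \erone(\modre{\rla}, \modre{\rlb})$.

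Second, I would establish minimality: assume towards a contradiction that $\twia \subsetneq \twib$ with $\twiab \in \modre{\prga \uopr \prgu}$; it suffices to show $\twia \ent \prgc$, contradicting subset-minimality of $\twib$ for $\prgc$. Take $\rla \in \all{\dprg} \setminus \rejju{\dprg, \twib}$; I must show $\twia \ent \rla^\twib$. If $\rla \in \prgu$ this is immediate since $\modre{\rla} \in \modrer{\biguopr \dprg}$, hence $\twiab \in \modre{\rla}$. If $\rla \in \prga$, then $\twiab$ lies in $\modre{\rla} \cup \bigcup_{\rlb \in \prgu} \erone(\modre{\rla}, \modre{\rlb})$, and I claim it already lies in $\modre{\rla}$: were $\twiab \in \erone(\modre{\rla}, \modre{\rlb})$ for some $\rlb \in \prgu$, then $\modre{\rla} \confl{\twib}{\atm} \modre{\rlb}$ for some $\atm$, and Proposition~\ref{prop:re:conflict in context vs syntax} would yield $\hrla = \set{\lit}$, $\hrlb = \set{\lcmp{\lit}}$, $\twib \ent \brla$ and $\twib \ent \brlb$, forcing $\rla \in \rejju{\dprg, \twib}$ — a contradiction. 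Hence $\twiab \in \modre{\rla}$, i.e.\ $\twia \ent \rla^\twib$. Ranging over all such $\rla$ gives $\twia \ent \prgc$, the desired contradiction; therefore no $\twia \subsetneq \twib$ is an \RE-model of $\prga \uopr \prgu$, and Proposition~\ref{prop:re:sm} concludes that $\twib$ is a stable model of $\biguopr \dprg$.

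The step I expect to be the main obstacle is the rejected-rule-with-satisfied-body subcase of the first part: translating the purely syntactic \JU-rejection witness into the semantic conflict relation $\confl{\twib}{\atm}$ through Proposition~\ref{prop:re:conflict in context vs syntax}. This is exactly the place where the no-local-cycle hypothesis is indispensable — it is what guarantees the side condition of that proposition that the head atom of $\rla$ (equivalently of $\rlb$) does not occur in $\brla$ or $\brlb$. Without it a rejected rule such as $(\atm \lpif \atm.)$ is \RE-tautological and never induces a conflict, so no exception is created and $\tpl{\twib,\twib}$ may fail to be an \RE-model of the updated program; the remainder is routine bookkeeping over the finitely many members of $\modrer{\biguopr \dprg}$.
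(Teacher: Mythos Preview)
Your proposal is correct and follows essentially the same approach as the paper's proof: both establish $\tpl{\twib,\twib}\in\modre{\prga\uopr\prgu}$ by case analysis over the elements of $\modrer{\prga\uopr\prgu}$, invoking Proposition~\ref{prop:re:conflict in context vs syntax} (together with the no-local-cycle hypothesis) to turn a \JU-rejection witness into a semantic conflict, and then derive minimality by showing that any $\twiab\in\modre{\prga\uopr\prgu}$ forces $\twia\ent\prgc$ via the converse direction of the same proposition. The only cosmetic difference is that the paper organises the first part as ``assume $\tpl{\twib,\twib}\notin\modre{\rla}$ and derive rejection'' rather than your explicit three-way split on rejected/unrejected and body satisfied/unsatisfied, but the logical content is identical.
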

\begin{proof}
	Suppose that $\twib$ is a justified update model of $\seq{\prga, \prgu}$.
	Then it is a minimal model of the program $\prgb = [ \all{\dprg} \setminus
	\rej{\dprg, \twib} ]^\twib$. We need to prove that $\tpl{\twib, \twib}$
	is an \RE-model of $\prga \uopr \prgu$ and for every $\twia \subsetneq
	\twib$, $\twiab$ is not an \RE-model of $\prga \uopr \prgu$. In order to
	show that $\tpl{\twib, \twib}$ is an \RE-model of $\prga \uopr \prgu$,
	recall that $\modrer{\prga \uopr \prgu} = \Set{ \modre{\rla} \cup
	\bigcup_{\rlb \in \prgu} \erone(\modre{\rla}, \modre{\rlb}) | \rla \in
	\prga } \cup \modrer{\prgu}$ and take some set $\stri$ from $\modrer{\prga
	\uopr \prgu}$. If $\stri$ belongs to $\modrer{\prgu}$, then there is a rule
	$\rla \in \prgu$ such that $\stri = \modre{\rla}$. Also, $\rla^\twib$
	belongs to $\prgb$, so $\twib$ is a model of $\rla^\twib$. Consequently,
	$\tpl{\twib, \twib}$ belongs to $\stri$.

	Now suppose that for some $\rla$ from $\prga$, $\stri = \modre{\rla} \cup
	\bigcup_{\rlb \in \prgb} \erone(\modre{\rla}, \modre{\rlb})$. If
	$\tpl{\twib, \twib}$ does not belong to $\modre{\rla}$, then $\twib$ is not
	a model of $\rla^\twib$, so $\rla$ belongs to $\rej{\dprg, \twib}$. So there
	exists a rule $\rlb$ from $\prgu$ such that $\hrlb = \lcmp{\hrla}$ and
	$\twib \ent \brlb$. The previous conclusions, together with the fact that
	$\rla$ and $\rlb$ are not local cycles, allow us to use
	Proposition~\ref{prop:re:conflict in context vs syntax} and conclude that
	$\modre{\rla} \confl{\twib}{\atm} \modre{\rlb}$ holds for some atom $\atm$.
	Hence, $\tpl{\twib, \twib}$ belongs to $\erone(\modre{\rla}, \modre{\rlb})$,
	and consequently also to $\stri$.

	Now suppose that $\twiab$ belongs to $\modre{\rla}$. We will show that
	$\twia$ is a model of $\prgb$, which implies that $\twia = \twib$ because
	$\twib$ is by assumption a minimal model of $\prgb$. Take some rule $\rla'$
	from $\prgb$ and suppose that $\rla' = \rla^\twib$ for some $\rla \in
	[\all{\dprg} \setminus \rej{\dprg, \twib}]$. If $\rla$ belongs to $\prgu$,
	then $\modre{\rla}$ belongs to $\modrer{\prga \uopr \prgu}$. Consequently,
	$\twiab$ belongs to $\modre{\rla}$, so $\twia$ is a model of $\rla'$.

	The final case to consider is when $\rla$ belongs to $\prga$. We will prove
	by contradiction that $\twia$ is a model of $\rla'$. So suppose that $\twia$
	is not a model of $\rla'$. Then $\twiab$ is not an \RE-model of $\rla$.
	However, since by assumption $\twiab$ belongs to $\modre{\prga \uopr
	\prgu}$, it must also belong to the set $\modre{\rla} \cup \bigcup_{\rlb \in
	\prgu} \erone(\modre{\rla}, \modre{\rlb})$. We have already shown that it is
	not a member of $\modre{\rla}$, so there must exist some $\rlb \in \prgu$
	such that $\twiab$ belongs to $\erone(\modre{\rla}, \modre{\rlb})$. Thus,
	$\modre{\rla} \confl{\twib}{\atm} \modre{\rlb}$ holds for some atom $\atm$.
	We can use Proposition~\ref{prop:re:conflict in context vs syntax} to
	conclude that $\hrlb = \lcmp{\hrla}$ and $\twib \ent \brlb$. Hence, $\rla$
	belongs to $\rej{\dprg, \twib}$, contrary to our assumption.
\end{proof}

\begin{proof}
	[\textbf{Proof of Theorem~\ref{thm:syntactic properties:erone}}]
	\label{proof:thm:re:syntactic properties:erone}
	Follows from Propositions~\ref{prop:syntactic properties:erone:support},
	\ref{prop:syntactic properties:erone:fact update}, \ref{prop:if beta stable
	then ju}, \ref{prop:if ju then beta stable} and the fact that the
	\JU-semantics has these properties.
\end{proof}

\subsection{Syntactic Properties of \texorpdfstring{\tertwo-}{delta\_b-} and
\texorpdfstring{\terthree-}{delta\_c-}Based Operators}

\label{app:rules:syntactic properties:ertwo}

\begin{definition}
	We say that $\stri \subseteq \tris$ is \emph{\RE-rule-expressible} if there
	exists a rule $\rl$ such that $\stri = \modre{\rl}$.
\end{definition}

\begin{lemma}
	\label{lemma:aug delta value}
	Let $\stria \subseteq \tris$ be \RE-rule-expressible, $\sstri$ a set of
	\RE-rule-expressible sets of three-valued interpretations, $\twib$ an
	interpretation, $\atm$ an atom, $\val_0$ a truth value and $\er \in
	\set{\ertwo, \erthree}$. If $\aug{\stria, \sstri}^{\twib}(\atm) = \val_0$,
	then $\stria^{\twib}(\atm) = \val_0$.
\end{lemma}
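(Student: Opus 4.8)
The plan is to argue by contradiction: suppose $\aug{\stria, \sstri}^{\twib}(\atm) = \val_0$ but $\stria^{\twib}(\atm) \neq \val_0$. Since $\stria \subseteq \aug{\stria, \sstri}$, the defining property of $\aug{\stria, \sstri}^{\twib}(\atm) = \val_0$ (that $\twib[\val/\atm] \in \aug{\stria, \sstri}$ iff $\val = \val_0$) immediately yields $\twib[\val/\atm] \notin \stria$ for every $\val \neq \val_0$. Hence $\stria^{\twib}(\atm) \neq \val_0$ can hold only if in addition $\twib[\val_0/\atm] \notin \stria$, so none of the three siblings $\twib[\tr/\atm]$, $\twib[\un/\atm]$, $\twib[\fa/\atm]$ lies in $\stria$ and $\stria^{\twib}(\atm)$ is undefined. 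On the other hand, $\twib[\val_0/\atm] \in \aug{\stria, \sstri} = \stria \cup \bigcup_{\strib \in \sstri} \er(\stria, \strib)$ and $\twib[\val_0/\atm] \notin \stria$, so there is $\strib \in \sstri$ with $\twib[\val_0/\atm] \in \er(\stria, \strib)$. Unwinding the definition of $\er$ ($= \ertwo$ or $\erthree$), I would fix a witnessing two-valued interpretation $\elb$ and atom $\atmb$ with $\stria \confl{\elb}{\atmb} \strib$, with $\elb$ between the two components of $\twib[\val_0/\atm]$, and with the ``undefinedness'' side conditions of the definition satisfied.

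Then I would record three observations. First, both components of $\twib[\val_0/\atm]$ agree with $\twib$ off $\atm$, hence so does the sandwiched two-valued $\elb$; writing $\val^*$ for the value $\elb$ assigns $\atm$, we get $\val^* \in \set{\tr, \fa}$, $\elb \in \set{\twib \setminus \set{\atm}, \twib \cup \set{\atm}}$, and $\tpl{\elb, \elb} = \twib[\val^*/\atm]$. Second, the conflict atom $\atmb$ cannot be $\atm$: a conflict on $\atm$ w.r.t.\ $\elb$ requires $\stria^{\elb}(\atm)$ to be defined, but $\elb$ agrees with $\twib$ off $\atm$, so $\elb[\val/\atm] = \twib[\val/\atm]$ for all $\val$ and therefore $\stria^{\elb}(\atm) = \stria^{\twib}(\atm)$, which is undefined. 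Third, $\tpl{\elb, \elb} \in \er(\stria, \strib)$: using the same witness $(\elb, \atmb)$, the inclusions defining membership reduce to $\elb \subseteq \elb \subseteq \elb$, and since no atom is undefined in $\tpl{\elb, \elb}$ the side conditions of both $\ertwo$ and $\erthree$ hold vacuously. Consequently $\twib[\val^*/\atm] = \tpl{\elb, \elb} \in \aug{\stria, \sstri}$, which by hypothesis forces $\val^* = \val_0$, so $\val_0 \in \set{\tr, \fa}$.

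Finally I would reach a contradiction by showing $\twib[\un/\atm] = \tpl{\twib \setminus \set{\atm}, \twib \cup \set{\atm}} \in \er(\stria, \strib)$: with the witness $(\elb, \atmb)$, the sandwich $\twib \setminus \set{\atm} \subseteq \elb \subseteq \twib \cup \set{\atm}$ holds because $\elb$ is one of these two endpoints (observation one), while the only atom left undefined in this pair is $\atm$, which differs from the conflict atom $\atmb$ (observation two), so again the side conditions of $\ertwo$, and the strengthening present in $\erthree$, hold vacuously. Thus $\twib[\un/\atm] \in \er(\stria, \strib) \subseteq \aug{\stria, \sstri}$, contradicting $\aug{\stria, \sstri}^{\twib}(\atm) = \val_0$ because $\un \neq \val_0$. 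This contradiction establishes $\twib[\val_0/\atm] \in \stria$, which together with the first paragraph is precisely $\stria^{\twib}(\atm) = \val_0$.

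The step I expect to be the crux is the second observation — recognising that the witnessing conflict cannot concern the atom $\atm$ — together with the bookkeeping that $\elb$ and both components of $\twib[\val_0/\atm]$ all coincide with $\twib$ off $\atm$, so the pairs of the shape $\twib[\val/\atm]$ are exactly the ones that can be produced by $\er(\stria, \strib)$ here. Everything else is a routine unwinding of the definitions of $\ertwo$, $\erthree$ and of the forcing notation $\stri^{\twib}(\cdot)$; in particular, the extra side conditions separating $\erthree$ from $\ertwo$ only have effect when the introduced exception leaves some atom undefined, and the two pairs fed to $\er(\stria, \strib)$ in the argument, $\tpl{\elb, \elb}$ and $\tpl{\twib \setminus \set{\atm}, \twib \cup \set{\atm}}$, leave undefined respectively no atom and only $\atm \neq \atmb$, so $\ertwo$ and $\erthree$ are handled uniformly. (The \RE-rule-expressibility hypotheses on $\stria$ and on the members of $\sstri$ are not actually needed for this argument; they are stated because they hold in the intended applications of the lemma.)
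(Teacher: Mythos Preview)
Your proof is correct and follows essentially the same approach as the paper: argue by contradiction, locate a witnessing conflict for the membership $\twib[\val_0/\atm] \in \er(\stria, \strib)$, and then place two distinct siblings $\twib[\val/\atm]$ inside $\aug{\stria, \sstri}$. The only difference is that you rule out the possibility $\atmb = \atm$ directly (via the undefinedness of $\stria^{\elb}(\atm) = \stria^{\twib}(\atm)$), whereas the paper keeps it as a separate case and derives its own pair of siblings there.
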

\begin{proof}
	Suppose that $\aug{\stria, \sstri}^{\twib}(\atma) = \val_0$. By the
	definition we then obtain that for all truth values $\val$,
	\begin{equation}
		\label{eq:proof:aug delta value:1}
		\twib[\val/\atma] \in \aug{\stria, \sstri}
			\text{ if and only if } \val = \val_0 \enspace.
	\end{equation}
	If the interpretation $\twib[\val_0/\atma]$ belongs to $\stria$, then we can
	use \eqref{eq:proof:aug delta value:1} together with the fact that $\stria$
	is a subset of $\aug{\stria, \sstri}$ to conclude that
	$\stria^{\twib}(\atma) = \val_0$ and our proof ends.

	So suppose that $\twib[\val_0/\atma]$ does not belong to $\stria$. Then it
	follows from \eqref{eq:proof:aug delta value:1} and from the fact that
	$\stria$ is a subset of $\aug{\stria, \sstri}$ that the interpretations
	$\twib[\tr/\atma]$, $\twib[\un/\atma]$ and $\twib[\fa/\atma]$ do not belong
	to $\stria$. Thus, since $\twib[\val_0/\atma]$ belongs to $\aug{\stria,
	\sstri}$, there must exist some $\strib \in \sstri$ such that
	$\twib[\val_0/\atma] = \tpl{\twia, \twic}$ belongs to $\er(\stria, \strib)$.
	In other words, there exists an atom $\atmb$ and an interpretation $\twib'$
	such that $\twia \subseteq \twib' \subseteq \twic$ and
	$\stria^{\twib'}(\atmb) \neq \strib^{\twib'}(\atmb)$. Note that $\twib
	\setminus \set{\atma} \subseteq \twia \subseteq \twib' \subseteq \twic
	\subseteq \twib \cup \set{\atma}$. Thus, $\tpl{\twib', \twib'} =
	\twib[\val_1/\atma]$ for some $\val_1 \in \set{\tr, \fa}$. We distinguish
	two cases:
	\begin{textenum}[a)]
		\item If $\atma \neq \atmb$, then by the definition of \tertwo{} and
			\terthree{} we obtain that $\twib[\val_1/\atma], \twib[\un/\atma] \in
			\er(\stria, \strib)$.

		\item If $\atma = \atmb$, let $\val_2 = \tr$ is $\val_1 = \fa$ and $\val_2
			= \fa$ if $\val_1 = \tr$. It follows that
			$\stria^{\twib[\val_2/\atma]}(\atma) \neq
			\strib^{\twib[\val_2/\atma]}(\atma)$ and by the definition of \tertwo{}
			and \terthree{} we obtain that $\twib[\val_1/\atma], \twib[\val_2/\atma]
			\in \er(\stria, \strib)$.
	\end{textenum}
	In either case, it is not possible fo $\aug{\stria, \sstri}^{\twib}(\atma)$
	to be defined, a conflict with our assumption.
\end{proof}

\begin{proposition}
	[Exception Independence for Rules]
	\label{prop:exception independence rule:delta}
	Let $\stri \subseteq \tris$ be \RE-rule-expressible by a non-disjunctive
	rule, $\sstria$, $\sstrib$ be sets of \RE-rule-expressible sets of
	three-valued interpretations and $\er \in \set{\ertwo, \erthree}$. Then,
	$\aug{\aug{\stri, \sstria}, \sstrib} = \aug{\stri, \sstria \cup \sstrib}$.
\end{proposition}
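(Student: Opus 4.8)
The plan is to reduce the claimed identity to two set inclusions per member of $\sstrib$, dispatch the easy one with Lemma~\ref{lemma:aug delta value}, and handle the hard one by an element chase that exploits $\stri$ being expressible by a \emph{non-disjunctive} rule. Writing $\stri' = \aug{\stri, \sstria}$ and unfolding the definition of $\aug{\cdot,\cdot}$, both $\aug{\aug{\stri, \sstria}, \sstrib}$ and $\aug{\stri, \sstria \cup \sstrib}$ equal $\stri'$ together with a union taken over $\sstrib$ — of $\er(\stri', \strib)$ on the left and of $\er(\stri, \strib)$ on the right. So it suffices to prove, for each $\strib \in \sstrib$,
\[
	\er(\stri', \strib) \subseteq \er(\stri, \strib)
	\qquad\text{and}\qquad
	\er(\stri, \strib) \subseteq \stri' \cup \er(\stri', \strib) \enspace.
\]
The first inclusion is immediate from Lemma~\ref{lemma:aug delta value}: if $\stri' \confl{\twib}{\atm} \strib$ then $\stri'^\twib(\atm)$ is defined, hence equals $\stri^\twib(\atm)$, so the same witness $(\twib,\atm)$ is a conflict of $\stri$ against $\strib$; since the remaining clauses in the definitions of $\ertwo$ and $\erthree$ never inspect the first argument (and the side condition $\tpl{\twib,\twib} \notin \stri'$ occurring in $\erthree(\stri',\strib)$ only gets weaker when replaced by $\tpl{\twib,\twib}\notin\stri$), every member of $\er(\stri',\strib)$ lies in $\er(\stri,\strib)$.

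For the second inclusion I would first extract two structural facts from $\stri = \modre{\rl}$ with $\rl$ non-disjunctive. If $\hrl = \emptyset$, Proposition~\ref{prop:re:rule in context} makes $\stri^\twib(\atm)$ undefined for all $\twib,\atm$, so $\er(\stri,\cdot)=\emptyset$ and the whole statement is trivial; otherwise $\hrl = \set{\lit_0}$, and Proposition~\ref{prop:re:rule in context} \emph{pins the conflict atom}: $\stri^\twib(\atm)$ is defined only for the head atom $\atm_0$ of $\lit_0$, with a fixed value $\val_0 \in \set{\tr,\fa}$, and by Lemma~\ref{lemma:aug delta value} the same is true of $\stri'^\twib(\atm)$. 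Applying Proposition~\ref{prop:re:rule in context} to the (possibly disjunctive) rules expressing the members $\strib'$ of $\sstria\cup\sstrib$ shows that whether $\strib'^\twib(\atm_0)$ is defined, and its value when defined, depend on $\twib$ only through $\twib \setminus \set{\atm_0}$; hence every conflict set $\set{\twib | \stri \confl{\twib}{\atm_0} \strib'}$ is closed under adding or removing $\atm_0$.

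With these facts I would run the element chase. Take $\tpl{\twia,\twic} \in \er(\stri,\strib)$ with conflict witness $\twib$ (its atom is necessarily $\atm_0$), so $\twia \subseteq \twib \subseteq \twic$, and $\atm_0 \in \twic\setminus\twia$ forces $\twic=\twib$ (plus, for $\erthree$, $\tpl{\twib,\twib}\notin\stri$). \emph{Case 1:} $\stri' \confl{\twib}{\atm_0} \strib$ still holds. Then the same witness puts $\tpl{\twia,\twic}$ into $\er(\stri',\strib)$, the only obstruction being the $\erthree$ side condition $\tpl{\twib,\twib}\notin\stri'$; if it fails, $\tpl{\twib,\twib}$ comes from some $\er(\stri,\strib')$ with $\strib'\in\sstria$, which (unwinding the definition — the only possible witness is $\twib$ on atom $\atm_0$) gives $\stri\confl{\twib}{\atm_0}\strib'$, whence $\twib$ also witnesses $\tpl{\twia,\twic}\in\er(\stri,\strib')\subseteq\stri'$. \emph{Case 2:} $\stri' \confl{\twib}{\atm_0} \strib$ fails. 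Since $\strib^\twib(\atm_0)$ is defined and $\neq\val_0$, while a defined $\stri'^\twib(\atm_0)$ would equal $\val_0$, the value $\stri'^\twib(\atm_0)$ must be undefined; as $\twib[\val_0/\atm_0]\in\stri\subseteq\stri'$, some $\twib[\val_1/\atm_0]$ with $\val_1\neq\val_0$ lies in $\stri'\setminus\stri$, hence in $\er(\stri,\strib_1)$ for some $\strib_1\in\sstria$. Unwinding the definition of $\er$ and using that conflicts with $\stri$ occur only on $\atm_0$ yields a conflict witness $\twib^\sharp$ lying between the two components of $\twib[\val_1/\atm_0]$, so $\twib^\sharp$ differs from $\twib$ at most in $\atm_0$; by closure of $\set{\twib | \stri\confl{\twib}{\atm_0}\strib_1}$ under toggling $\atm_0$, already $\twib$ lies in it. Using $\twib$ as the witness, and observing that the clauses $\twia\subseteq\twib\subseteq\twic$, ``$\atm_0\in\twic\setminus\twia\Rightarrow\twic=\twib$'', and (for $\erthree$) $\tpl{\twib,\twib}\notin\stri$ are inherited verbatim from $\tpl{\twia,\twic}\in\er(\stri,\strib)$, we get $\tpl{\twia,\twic}\in\er(\stri,\strib_1)\subseteq\stri'$.

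The main obstacle is exactly Case 2: showing that when the introduction of $\sstria$'s exceptions destroys the $\strib$-conflict at $\twib$, the exception $\tpl{\twia,\twic}$ that conflict would have produced is nonetheless already among the exceptions contributed by whichever $\strib_1$ caused the destruction. Everything there rests on the two structural facts — the conflict atom being forced to be $\atm_0$ (from non-disjunctivity of $\stri$, via Proposition~\ref{prop:re:rule in context}) and the conflict sets being closed under toggling $\atm_0$ — which together make it possible to always re-use $\twib$ as the witness; I expect the fiddliest bookkeeping to be tracking the $\erthree$ side condition $\tpl{\cdot,\cdot}\notin\stri$ consistently across both cases, for which the reduct computation of $\rl$ (it reduces to whether $\atm_0$ sits in the relevant interpretation) will be useful.
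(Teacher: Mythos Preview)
Your proof is correct and follows essentially the same route as the paper's: both reduce to comparing $\stri' \cup \bigcup_{\strib\in\sstrib}\er(\stri',\strib)$ with $\stri' \cup \bigcup_{\strib\in\sstrib}\er(\stri,\strib)$, handle the easy direction via Lemma~\ref{lemma:aug delta value}, and in the hard direction use non-disjunctivity (through Proposition~\ref{prop:re:rule in context}) to pin the conflict atom and then exploit that conflicts are insensitive to toggling that atom to relocate the witness. One remark: your ``if it fails'' sub-case in Case~1 is actually vacuous --- when $\stri'^{\twib}(\atm_0)=\val_0$ is defined, membership of $\tpl{\twib,\twib}$ in $\stri'$ is determined by whether $\twib(\atm_0)=\val_0$, exactly as for $\stri$, so $\tpl{\twib,\twib}\notin\stri$ already forces $\tpl{\twib,\twib}\notin\stri'$; the paper simply asserts this transfer without elaboration.
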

\begin{proof}
	By applying the definition of $\aug{\cdot, \cdot}$ we can see that our
	goal is to show that the set
	\begin{equation}
		\label{eq:proof:exc independence rule:delta:lhs}
		\aug{\stria, \sstria} \cup
			\bigcup_{\strib \in \sstrib}
			\er(\aug{\stria, \sstria}, \strib)
	\end{equation}
	is equal to the set $\stria \cup \bigcup_{\strib \in \sstria \cup \sstrib}
	\er(\stria, \strib)$ which can also be written as
	\begin{equation}
		\label{eq:proof:exc independence rule:delta:rhs}
		\aug{\stria, \sstria} \cup
			\bigcup_{\strib \in \sstrib} \er(\stria, \strib)
			\enspace.
	\end{equation}

	First suppose that some $\tri \in \tris$ belongs to \eqref{eq:proof:exc
	independence rule:delta:lhs}. If $\tri$ belongs to $\aug{\stria, \sstria}$,
	then it directly follows that $\tri$ also belongs to \eqref{eq:proof:exc
	independence rule:delta:rhs}. So suppose that $\tri$ belongs to
	$\er(\aug{\stria, \sstria}, \strib)$ for some $\strib \in \sstrib$. By the
	definition of \tertwo{} and \terthree{} we obtain that there exists some
	atom $\atm$ and some interpretation $\twib$ with certain properties relative
	to $\tri$ such that $\aug{\stria, \sstria}^{\twib}(\atm) \neq
	\strib^{\twib}(\atm)$. By Lemma \ref{lemma:aug delta value} we then conclude
	that $\aug{\stria, \sstria}^{\twib}(\atm) = \stria^{\twib}(\atm)$. Thus,
	$\tri$ also belongs to $\er(\stria, \strib)$ and, consequently, also to the
	set \eqref{eq:proof:exc independence rule:delta:rhs}.

	Now suppose that some $\tri = \tpl{\twia, \twic} \in \tris$ belongs to
	\eqref{eq:proof:exc independence rule:delta:rhs}. The case when $\tri$
	belongs to $\aug{\stria, \sstria}$ is trivial, so we assume that $\tri$
	belongs to $\er(\stria, \strib)$ for some $\strib \in \sstrib$. This implies
	that there exists an atom $\atma$ and some interpretation $\twib$ such that
	$\twia \subseteq \twib \subseteq \twic$ and $\stria^{\twib}(\atma) \neq
	\strib^{\twib}(\atma)$. Suppose that $\stria^{\twib}(\atma) = \val_0$. If
	it also holds that $\aug{\stria, \sstria}^{\twib}(\atma) = \val_0$, then it
	can be seen that $\tri$ belongs to \eqref{eq:proof:exc independence
	rule:delta:lhs}. Otherwise it follows from the fact that $\stria$ is a
	subset of $\aug{\stria, \sstria}$ that $\aug{\stria,
	\sstria}^{\twib}(\atma)$ is undefined and it contains both
	$\twib[\val_0/\atma]$ and $\twib[\val_1/\atma] = \tpl{\twia', \twic'}$ for
	some $\val_1 \neq \val_0$. Thus, for some $\strib' \in \sstria$ it holds
	that $\tpl{\twia', \twic'}$ belongs to $\er(\stria, \strib')$. In other
	words, there exists an atom $\atmb$ and an interpretation $\twib'$ such that
	$\twia' \subseteq \twib' \subseteq \twic'$ and $\stria^{\twib'}(\atmb) \neq
	\strib'^{\twib'}(\atmb)$. Since $\stria$ is expressible by a non-disjunctive
	rule, it follows from Proposition~\ref{prop:re:rule in context} that $\atmb
	= \atma$. Also, $\twib$ and $\twib'$ may only differ in the valuation of
	$\atma$, so we obtain that $\stria^{\twib}(\atma) \neq
	\strib'^{\twib}(\atma)$. Consequently, $\tri$ belongs to $\er(\stria,
	\strib')$, so it also belongs to \eqref{eq:proof:exc independence
	rule:delta:lhs}.
\end{proof}


\begin{definition}
	Let $\dprg = \seq{\prg_\lia}_{\lia < \lng}$ be a DLP. We define
	$\after{\lia}{\dprg} = \bigcup_{\lia < \lib < \lng} \prg_\lib$.
\end{definition}

\begin{proposition}
	[Exception Independence for Programs]
	\label{prop:exception independence program:delta}
	Let $\dprg = \seq{\prg_i}_{\lia < \lng}$ be a DLP, $\er \in \set{\ertwo,
	\erthree}$ and $\uopr$ a \ter-based rule update operator. Then,
	$
		\modrer{\biguopr \dprg}
		= 
		\set{
			\aug{\stri, \modrer{\after{\lia}{\dprg}}}
			|
			\lia < \lng \land \stri \in \modrer{\prg_\lia}
		}
	$.
\end{proposition}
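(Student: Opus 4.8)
The plan is to proceed by induction on the length $\lng$ of the DLP $\dprg = \seq{\prg_\lia}_{\lia < \lng}$, using Proposition~\ref{prop:aug vs uope} as the starting point. That proposition already tells us that $\modrer{\biguopr \dprg} = \set{\aug{\stri, \seq{\modrer{\prg_\lib}}_{\lia < \lib < \lng}} \mid \lia < \lng \land \stri \in \modrer{\prg_\lia}}$, where the exceptions are accumulated \emph{one component at a time} through the sequence $\seq{\modrer{\prg_\lib}}_{\lia < \lib < \lng}$. What we must show is that this iterated accumulation collapses into a single application of $\aug{\cdot, \cdot}$ against the \emph{union} $\modrer{\after{\lia}{\dprg}} = \modrer{\bigcup_{\lia < \lib < \lng} \prg_\lib}$, i.e.\ that $\aug{\stri, \seq{\modrer{\prg_\lib}}_{\lia < \lib < \lng}} = \aug{\stri, \bigcup_{\lia < \lib < \lng} \modrer{\prg_\lib}}$ whenever $\stri \in \modrer{\prg_\lia}$.

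The key ingredient is Proposition~\ref{prop:exception independence program:delta}'s rule-level analogue, Proposition~\ref{prop:exception independence rule:delta}, which states that $\aug{\aug{\stri, \sstria}, \sstrib} = \aug{\stri, \sstria \cup \sstrib}$ provided $\stri$ is \RE-rule-expressible by a non-disjunctive rule and all members of $\sstria$, $\sstrib$ are \RE-rule-expressible. First I would verify that these hypotheses hold along the induction: since $\dprg$ is a DLP, every $\prg_\lib$ is a non-disjunctive program, so each $\stri \in \modrer{\prg_\lia}$ is of the form $\modre{\rl}$ for a non-disjunctive rule $\rl$ (hence \RE-rule-expressible by a non-disjunctive rule), and likewise every element of each $\modrer{\prg_\lib}$ is \RE-rule-expressible. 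Then, reading $\seq{\modrer{\prg_\lib}}_{\lia < \lib < \lng}$ as $\seq{\modrer{\prg_{\lia+1}}, \dotsc, \modrer{\prg_{\lng-1}}}$ and unfolding the inductive definition of $\aug{\stri, \seq{\sstri_\lib}}$, a straightforward induction on $\lng - \lia$ lets me apply Proposition~\ref{prop:exception independence rule:delta} repeatedly: $\aug{\aug{\stri, \modrer{\prg_{\lia+1}} \cup \dotsb \cup \modrer{\prg_{\lng-2}}}, \modrer{\prg_{\lng-1}}} = \aug{\stri, \modrer{\prg_{\lia+1}} \cup \dotsb \cup \modrer{\prg_{\lng-1}}}$, and finally observe that $\modrer{\prg_{\lia+1}} \cup \dotsb \cup \modrer{\prg_{\lng-1}} = \modrer{\bigcup_{\lia < \lib < \lng} \prg_\lib} = \modrer{\after{\lia}{\dprg}}$ by the definition of $\modrer{\cdot}$ on programs (and rule bases).

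The main obstacle, I expect, is purely bookkeeping rather than conceptual: Proposition~\ref{prop:exception independence rule:delta} is stated for the binary splitting of \emph{two} families $\sstria$, $\sstrib$, while Proposition~\ref{prop:aug vs uope} hands us a \emph{sequence} of families, and the base $\stri$ in each intermediate step of the fold is $\aug{\stri, \dotsc}$ rather than an \RE-rule-expressible set per se. So care is needed to check that the hypothesis ``$\stri$ is \RE-rule-expressible by a non-disjunctive rule'' in Proposition~\ref{prop:exception independence rule:delta} is only required of the \emph{original} $\stri$ and not of the partially-augmented intermediate sets --- which is exactly how that proposition is phrased, since its conclusion $\aug{\aug{\stri,\sstria},\sstrib} = \aug{\stri, \sstria\cup\sstrib}$ keeps $\stri$ fixed throughout. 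Once that is noted, the induction goes through cleanly; the remaining work is just matching up the sequence-valued $\aug{\cdot,\cdot}$ from Proposition~\ref{prop:aug vs uope} with the set-valued one, and identifying the accumulated union of $\modrer{\prg_\lib}$'s with $\modrer{\after{\lia}{\dprg}}$.
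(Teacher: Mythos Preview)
Your proposal is correct and follows essentially the same approach as the paper: start from Proposition~\ref{prop:aug vs uope} to obtain the sequence form, then use induction together with Proposition~\ref{prop:exception independence rule:delta} to collapse the iterated augmentation into a single one against the union $\modrer{\after{\lia}{\dprg}}$. Your observation that the \RE-rule-expressibility hypothesis of Proposition~\ref{prop:exception independence rule:delta} need only hold for the original $\stri$ (not for the intermediate augmented sets) is precisely the point that makes the inside-out induction go through, and your verification that the DLP hypothesis guarantees non-disjunctive \RE-rule-expressibility of each $\stri \in \modrer{\prg_\lia}$ fills in the one detail the paper's terse proof leaves implicit.
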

\begin{proof}
	By Proposition~\ref{prop:aug vs uope},
	$
		\modrer{\biguopr \dprg}
		=
		\set{
			\aug{\stri, \seq{\modrer{\prg_\lib}}_{\lia < \lib < \lng}}
			|
			\lia < \lng \land \stri \in \modrer{\prg_\lia}
		}
	$.
	The claim follows by induction on $\lng$ using
	Proposition~\ref{prop:exception independence rule:delta}.
%
%
\end{proof}

\begin{proposition}
	\label{prop:if delta stable then ju}
	Let $\dprg$ be a DLP and $\twib$ an interpretation. The following holds:
	\begin{textenum}[(i)]
		\item If $\uopr$ is a \tertwo-based rule update operator and $\twib$ is a
			stable model of $\biguopr \dprg$, then $\twib$ is a \JU-model of
			$\dprg$.

		\item If $\uopr$ is a \terthree-based rule update operator and $\twib$ is
			a stable model of $\biguopr \dprg$, then $\twib$ is a \AS-model of
			$\dprg$.
	\end{textenum}
\end{proposition}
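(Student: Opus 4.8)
The plan is to first flatten the iterated update with Proposition~\ref{prop:exception independence program:delta}, which for $\er = \ertwo$ (part~(i)) and $\er = \erthree$ (part~(ii)) yields $\modrer{\biguopr \dprg} = \set{ \aug{\modre{\rla}, \modrer{\after{\lia}{\dprg}}} \mid \lia < \lng \land \rla \in \prg_\lia }$, where $\aug{\stri, \sstri} = \stri \cup \bigcup_{\strib \in \sstri} \er(\stri, \strib)$ and $\after{\lia}{\dprg} = \bigcup_{\lia < \lib < \lng} \prg_\lib$. By Proposition~\ref{prop:re:sm}, being a stable model of $\biguopr \dprg$ amounts to: $\tpl{\twib, \twib} \in \aug{\modre{\rla}, \modrer{\after{\lia}{\dprg}}}$ for every $\rla \in \prg_\lia$, $\lia < \lng$, while there is no $\twia \subsetneq \twib$ with $\twiab$ in all of these sets. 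Unfolding $\modju{\dprg}$ (resp.\ $\modas{\dprg}$), the target is exactly that $\twib$ is a \emph{subset-minimal model} of $[\all{\dprg} \setminus \rejju{\dprg, \twib}]^\twib$ (resp.\ of $[\all{\dprg} \setminus \rejas{\dprg, \twib}]^\twib$). Since a DLP is made of non-disjunctive programs, Propositions~\ref{prop:re:rule in context} and~\ref{prop:re:conflict in context vs syntax}, and Lemma~\ref{lemma:reduct}, apply to every rule that occurs.

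For the \emph{model} part I would generalise the reasoning behind Proposition~\ref{prop:if beta stable then ju}: fix a non-rejected $\rla \in \prg_\lia$; from $\tpl{\twib, \twib} \in \aug{\modre{\rla}, \modrer{\after{\lia}{\dprg}}}$, either $\tpl{\twib, \twib} \in \modre{\rla}$ (so $\twib \ent \rla^\twib$ by Lemma~\ref{lemma:reduct}), or $\tpl{\twib, \twib} \in \er(\modre{\rla}, \modre{\rlb})$ for some $\rlb \in \prg_\lib$ with $\lib > \lia$; in the latter case, because $\tpl{\twib, \twib}$ has equal components, the defining clauses of $\ertwo$/$\erthree$ collapse to $\modre{\rla} \confl{\twib}{\atm} \modre{\rlb}$ for some $\atm$, so Proposition~\ref{prop:re:conflict in context vs syntax} gives complementary heads and $\twib \ent \brlb$, contradicting non-rejectedness (for~(i) directly; for~(ii) one further needs the invoked $\rlb$ to be chosen non-rejected). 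Hence $\twib$ models the reduct.

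For \emph{minimality}, take a model $\twia \subseteq \twib$ of the reduct and show $\twiab \in \modre{\biguopr \dprg}$ --- stability then forces $\twia = \twib$. Fix $\stri = \aug{\modre{\rla}, \modrer{\after{\lia}{\dprg}}}$; the case $\twiab \in \modre{\rla}$ is immediate, so assume not. The elementary reduct manipulation from the proof of Proposition~\ref{prop:if beta stable then ju} gives $\rla^\twib \neq \ctau$ and $\twib \ent \brla$, and since $\twia$ models the reduct while $\twia \nent \rla^\twib$, the rule $\rla$ must be rejected, so there is a (for~(ii): non-rejected) rejecting rule $\rlb$ with $\hrlb = \lcmp{\hrla}$, $\twib \ent \brlb$, $\rlb \in \after{\lia}{\dprg}$. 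Now split on $\twib \ent \rla$. If $\twib \nent \rla$, then $\tpl{\twib, \twib} \notin \modre{\rla}$, so by stability $\tpl{\twib, \twib}$ is an exception to $\rla$; this forces some conflict $\modre{\rla} \confl{\twib}{\atm} \strib$, and the shape of $\ertwo$ (resp.\ $\erthree$, whose extra conjunct $\tpl{\twib, \twib} \notin \modre{\rla}$ holds precisely here) places the whole ``layer'' $\set{ \tpl{\twia', \twib} \mid \twia' \subseteq \twib }$ into $\er(\modre{\rla}, \strib)$, in particular $\twiab$. If $\twib \ent \rla$, then (given $\rla^\twib \neq \ctau$, $\twib \ent \brla$, $\rla$ non-disjunctive) $\hrla = \set{\atm}$ is a positive singleton with $\atm \in \twib$; if $\rlb$ is not a local cycle on $\atm$, Proposition~\ref{prop:re:conflict in context vs syntax} gives $\modre{\rla} \confl{\twib}{\atm} \modre{\rlb}$ and hence $\twiab \in \ertwo(\modre{\rla}, \modre{\rlb})$ for~(i) --- while for~(ii) this sub-case is vacuous, since a non-rejected rule with head $\lpnot \atm$ whose body holds at $\twib$ cannot be modelled by $\twib$ when $\atm \in \twib$, contradicting the model part --- and if $\rlb$ \emph{is} such a local cycle, I expect to contradict stability outright: $\rlb^\twib$ is then a constraint violated at $\twib$, so $\tpl{\twib, \twib} \notin \modre{\rlb}$, yet no later rule can witness an exception to $\rlb$ (a conflict involving $\rlb$ would have to be on $\atm$, which a local cycle precludes), whence $\tpl{\twib, \twib} \notin \modre{\biguopr \dprg}$, impossible.

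The main obstacle is exactly this case analysis: verifying that, under the stability hypothesis, the local-cycle sub-cases genuinely degenerate, and --- for part~(ii) --- threading the recursive definition of $\rejas$ through both the model and minimality parts so that every ``rejecting'' rule one invokes is provably non-rejected. The route for $\er = \ertwo$ essentially reuses, layer by layer, the lemmas behind Proposition~\ref{prop:if beta stable then ju}, legitimated by Proposition~\ref{prop:exception independence program:delta}; the route for $\er = \erthree$ additionally hinges on the fact that the extra conjunct $\tpl{\twib, \twib} \notin \stria$ of $\erthree$ is precisely the semantic counterpart of the ``rejected rules do not reject'' discipline of the \AS-semantics.
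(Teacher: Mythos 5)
Your overall architecture --- flatten the iterated update with Proposition~\ref{prop:exception independence program:delta}, then show that $\twib$ is a model of the reduct of the unrejected rules and is subset-minimal among such models --- is exactly the paper's, and your minimality argument, though organised around the split on $\twib \ent \rla$ rather than the paper's five syntactic cases, reaches the same conclusions by the same means. In the minimality half the move ``take the rejecting rule non-rejected'' is legitimate for part~(ii), because there $\rla \in \rejas{\dprg, \twib}$ and the definition of $\rejas{\dprg, \twib}$ itself supplies a non-rejected rejecting rule.

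The gap is in the \emph{model} half of part~(ii). When $\rla \notin \rejas{\dprg, \twib}$ and $\tpl{\twib, \twib} \notin \modre{\rla}$, the rule $\rlb$ you extract from $\tpl{\twib, \twib} \in \erthree(\modre{\rla}, \modre{\rlb})$ is whatever later rule the exception function happened to use as a witness; $\erthree$ is blind to rejection, so this $\rlb$ may itself belong to $\rejas{\dprg, \twib}$, and a \emph{rejected} conflicting rule does not reject $\rla$ --- so no contradiction with $\rla \notin \rejas{\dprg, \twib}$ arises, and you cannot ``choose'' $\rlb$ non-rejected because the witness is forced on you. The paper closes this case with a positive derivation rather than a contradiction: since $\rla$ is unrejected, the conflicting $\rlb$ must be rejected, hence there is a non-rejected rule $\rlc \in \prg_\lic$ with $\lic > \lib$, $\hrlc = \lcmp{\hrlb} = \hrla$ and $\twib \ent \brlc$; because $\rlc$ is unrejected, $\erthree(\modre{\rlc}, \strib)$ is empty for every $\strib \in \modrer{\after{\lic}{\dprg}}$, so stability forces $\tpl{\twib, \twib} \in \modre{\rlc}$, whence $\twib \ent \hrlc = \hrla$ and therefore $\twib \ent \rla^\twib$ after all. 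This chain --- precisely the semantic counterpart of ``rejected rules do not reject'' that you correctly name as the crux --- is the missing idea; without it the claim that $\twib$ satisfies the reduct remains unproven for \terthree-based operators.
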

\begin{proof}
	Let $\dprg = \seq{\prg_\lia}_{\lia < \lng}$. Also, put $\rej{\cdot, \cdot} =
	\rejju{\cdot, \cdot}$ if $\er = \ertwo$ and $\rej{\cdot, \cdot} =
	\rejas{\cdot, \cdot}$ if $\er = \erthree$. From the assumption we can
	conclude that $\tpl{\twib, \twib}$ belongs to $\modre{\biguopr \dprg}$ and
	for every $\twia \subsetneq \twib$, $\twiab$ does not belong to
	$\modre{\biguopr \dprg}$. We need to show that $\twib$ is a minimal model of
	the program $\prg' = [ \all{\dprg} \setminus \rej{\dprg, \twib} ]^\twib$.

	First we prove that $\twib$ is a model of $\prg'$. Take some rule $\rla' \in
	\prg'$ and let $\rla$ be a rule from $\all{\dprg} \setminus \rej{\dprg,
	\twib}$ such that $\rla' = \rla^\twib$. Then there is some $\lia < \lng$
	such that $\rla$ belongs to $\prg_\lia$. Let $\stria = \modre{\rla}$. Since
	$\rla$ belongs to $\prg_\lia$, we can use Proposition~\ref{prop:exception
	independence program:delta} to conclude that $\modrer{\biguopr \dprg}$
	contains the set
	\begin{equation}
		\label{eq:proof:if delta stable then ju:1}
		\aug{\stria, \modrer{\after{\lia}{\dprg}}}
		= \stria \cup \bigcup_{\strib \in \modrerscr{\after{\lia}{\dprg}}}
				\er(\stria, \strib) \enspace.
	\end{equation}
	Furthermore, since $\tpl{\twib, \twib}$ belongs to $\modre{\biguopr \dprg}$,
	it must also belong to \eqref{eq:proof:if delta stable then ju:1}. If
	$\tpl{\twib, \twib} \in \stria$, then $\twib \ent \rla$ and thus also $\twib
	\ent \rla'$ as desired. So suppose that $\tpl{\twib, \twib} \notin \stria$.
	Then for some $\lib$ with $\lia < \lib < \lng$ there exists some $\strib \in
	\modrer{\prg_\lib}$ such that $\tpl{\twib, \twib} \in \er(\stria, \strib)$.
	Thus, for some atom $\atm$, $\stria \confl{\twib}{\atm} \strib$, and by
	Proposition~\ref{prop:re:conflict in context vs syntax} we conclude that
	there is a rule $\rlb \in \prg_\lib$ with $\modre{\rlb} = \strib$, $\hrlb =
	\lcmp{\hrla}$ and $\twib \ent \brlb$. We consider two cases:
	\begin{textenum}[a)]
		\item If $\er = \ertwo$, then the assumption $\rla \notin \rejju{\dprg,
			\twib}$ is in direct conflict with the existence of $\rlb$.

		\item If $\er = \erthree$, then it follows from $\rla \notin \rejas{\dprg,
			\twib}$ that $\rlb \in \rejas{\dprg, \twib}$. Thus, there is some rule
			$\rlc \in (\prg_\lic \setminus \rejas{\dprg, \twib})$ with $\lib < \lic
			< \lng$ such that $\hrlc = \lcmp{\hrlb} = \hrla$ and $\twib \ent \brlc$.
			It follows from $\rlc \notin \rejas{\dprg, \twib}$ that no rule from
			$\after{\lic}{\dprg}$ has the head $\lcmp{\hrlc}$ and a body satisfied
			in $\twib$. Thus, the definition of \terthree{} and
			Proposition~\ref{prop:re:conflict in context vs syntax} imply that the
			set $\erthree(\modre{\rlc}, \strib)$ is empty for all $\strib \in
			\modrer{\after{\lic}{\dprg}}$. But since $\tpl{\twib, \twib}$ belongs to
			$\modre{\biguopr \dprg}$ by assumption, it must also belong to the set
			$\modre{\rlc} \cup \bigcup_{\strib \in \modrer{\after{\lic}{\dprg}}}
			\erthree(\modre{\rlc}, \strib)$ and we conclude that $\tpl{\twib, \twib}
			\in \modre{\rlc}$. Consequently, $\twib \ent \rlc$ and from $\twib \ent
			\brlc$ we conclude $\twib \ent \hrlc$. Since $\hrla = \hrlc$, we have
			shown that $\twib \ent \rla$ and thus also $\twib \ent \rla'$, as
			desired.
	\end{textenum}

	It remains to prove that $\twib$ is a minimal model of $\prg'$. Take some
	$\twia \subseteq \twib$ such that $\twia \ent \prg'$, we need to show that
	$\twia = \twib$. In the following we will show that $\twiab \in
	\modre{\biguopr \dprg}$ which, together with the assumption that $\twib$ is
	a stable model of $\biguopr \dprg$, implies that $\twia = \twib$. So take
	some set
	\begin{equation}
		\label{eq:proof:if delta stable then ju:3}
		\aug{\stria, \modrer{\after{\lia}{\dprg}}}
			= \stria \cup \bigcup_{\strib \in \modrerscr{\after{\lia}{\dprg}}}
				\er(\stria, \strib) \enspace.
	\end{equation}
	from $\modrer{\biguopr \dprg}$ with $\stria = \modre{\rla}$ and $\rla \in
	\prg_\lia$. We need to show that $\twiab$ belongs to \eqref{eq:proof:if
	delta stable then ju:3}. This obviously holds if $\twiab \in \stria$, so
	suppose that $\twiab \notin \stria$. Then, $\twia \nent \rla^\twib$.
	Thus, $\rla^\twib$ is different from $\ctau$ and, consequently, $\twib \ent
	\brla^-$. Also, $\brla^+ \subseteq \twia$ but $\hrla^+ \cap \twia =
	\emptyset$ and since $\twia \subseteq \twib$, this implies that
	\begin{equation}
		\label{eq:proof:if delta stable then ju:4}
		\twib \ent \brla
		\enspace.
	\end{equation}
	Moreover, since $\twia \ent \prg'$, it follows that $\rla^\twib \notin
	\prg'$, so $\rla \in \rej{\dprg, \twib}$. Thus, there exists a rule $\rlb
	\in \prg_\lib$ for some $\lib$ with $\lia < \lib < \lng$ such that for some
	atom $\atm$ and literal $\lit \in \set{\atm, \lpnot \atm}$,
	\begin{equation}
		\label{eq:proof:if delta stable then ju:5}
		\hrla = \set{\lit}
		\text{ and }
		\hrlb = \set{\lcmp{\lit}}
		\text{ and }
		\twib \ent \brlb
		\enspace.
	\end{equation}
	Let $\strib = \modre{\rlb}$. We consider the following five cases:
	\begin{textenum}[a)]
		\item If $\lit \in \brla$, then $\rla$ is tautological and we arrive at a
			conflict with the assumption $\twiab \notin \modre{\rla}$.

		\item If $\lcmp{\lit} \in \brla$, then it follows from \eqref{eq:proof:if
			delta stable then ju:4} and \eqref{eq:proof:if delta stable then ju:5}
			that $\tpl{\twib, \twib} \notin \modre{\rla}$. At the same time,
			$\er(\stria, \strib)$ is empty for all $\strib$ because for all
			interpretations $\twib'$ and atoms $\atmb$, it is impossible for
			$\stria^{\twib'}(\atmb)$ to be defined. Thus, we obtain a conflict with
			the assumption that $\tpl{\twib, \twib} \in \modre{\biguopr \dprg}$.

		\item If $\lit \in \brlb$, then it follows from \eqref{eq:proof:if delta
			stable then ju:5} that $\tpl{\twib, \twib} \notin \modre{\rlb}$. At the
			same time, $\er(\strib, \strib')$ is empty for all $\strib'$ because for
			all interpretations $\twib'$ and atoms $\atmb$, it is impossible for
			$\strib^{\twib'}(\atmb)$ to be defined. Thus, we obtain a conflict with
			the assumption that $\tpl{\twib, \twib} \in \modre{\biguopr \dprg}$.

		\item If $\lcmp{\lit} \in \brlb$, then it follows from \eqref{eq:proof:if
			delta stable then ju:5} that $\twib \ent \lcmp{\lit}$ and together with
			\eqref{eq:proof:if delta stable then ju:4} we obtain that $\twib \nent
			\rla$. Thus, $\tpl{\twib, \twib} \notin \modre{\rla}$ and since
			$\tpl{\twib, \twib} \in \modre{\biguopr \dprg}$, there must exist some
			$\strib' \in \modrer{\after{\lia}{\dprg}}$ such that $\tpl{\twib, \twib}
			\in \er(\stria, \strib')$. It follows from the definitions of \tertwo{}
			and \terthree{} that $\twiab \in \er(\stria, \strib')$, and thus also
			$\twiab \in \aug{\stria, \modrer{\after{\lia}{\dprg}}}$.
			
		\item In the remaining case, $\brla$ and $\brlb$ do not contain $\lit$ nor
			$\lcmp{\lit}$. Thus, we can use Proposition~\ref{prop:re:conflict in
			context vs syntax} to conclude that $\stria \confl{\twib}{\atm} \strib$.
			Note that if $\er = \erthree$, then the definition of $\rejas{\cdot,
			\cdot}$ implies that $\rlb \notin \rejas{\dprg, \twib}$, so $\twib \ent
			\rlb$. Using \eqref{eq:proof:if delta stable then ju:5} and
			\eqref{eq:proof:if delta stable then ju:4} we can conclude that $\twib
			\nent \rla$, so $\tpl{\twib, \twib} \notin \stria$. Consequently, by the
			definitions of \tertwo{} and \terthree{} it follows that $\twiab \in
			\er(\stria, \strib)$, and thus also $\twiab \in \aug{\stria,
			\modrer{\after{\lia}{\dprg}}}$. \qedhere
	\end{textenum}
\end{proof}

\begin{proposition}
	\label{prop:if ju then delta stable}
	Let $\dprg$ be a DLP free of local cycles and $\twib$ an interpretation. The
	following holds:
	\begin{textenum}[(i)]
		\item If $\uopr$ is a \tertwo-based rule update operator and $\twib$ is a
			\JU-model of $\dprg$, then $\twib$ is a stable model of $\biguopr
			\dprg$.

		\item If $\uopr$ is a \terthree-based rule update operator and $\twib$ is
			a \AS-model of $\dprg$, then $\twib$ is a stable model of $\biguopr
			\dprg$.
	\end{textenum}
\end{proposition}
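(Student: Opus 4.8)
The final statement is the converse of Proposition~\ref{prop:if delta stable then ju}; I would prove part~(i) in detail and then indicate the extra steps needed for part~(ii). Assume $\twib$ is a \JU-model of $\dprg = \seq{\prg_\lia}_{\lia < \lng}$, so that $\twib$ is a subset-minimal model of $\prg' = [\,\all{\dprg} \setminus \rejju{\dprg, \twib}\,]^\twib$. By Proposition~\ref{prop:re:sm} it suffices to show (a) $\tpl{\twib, \twib} \in \modre{\biguopr \dprg}$, and (b) $\twiab \notin \modre{\biguopr \dprg}$ for every $\twia \subsetneq \twib$. The workhorse is Proposition~\ref{prop:exception independence program:delta}, by which $\modrer{\biguopr \dprg} = \Set{ \modre{\rla} \cup \bigcup_{\strib \in \modrerscr{\after{\lia}{\dprg}}} \ertwo(\modre{\rla}, \strib) | \lia < \lng \land \rla \in \prg_\lia }$; thus (a) reduces to checking, for each $\rla \in \prg_\lia$, that $\tpl{\twib,\twib}$ lies in $\modre{\rla}$ or in $\ertwo(\modre{\rla},\modre{\rlb})$ for a suitable later $\rlb$, and (b) reduces to showing that $\twia$ models $\rla^\twib$ for each $\rla \in \all{\dprg}\setminus\rejju{\dprg,\twib}$ (giving $\twia \ent \prg'$, a contradiction with subset-minimality of $\twib$).

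First I would record a preliminary observation: since $\twib$ is a \JU-model, every constraint of $\all{\dprg}$ is satisfied by $\twib$ --- a violated constraint must be rejected, hence there is a strictly later constraint with body satisfied in $\twib$, and iterating reaches a non-rejected such constraint whose reduct lies in $\prg'$ yet is violated by $\twib$, contradicting $\twib \ent \prg'$. Combining this with Lemma~\ref{lemma:reduct} and the inclusion $\twia \subseteq \twib$, the routine cases are dispatched: for part~(b), a rule with $\rla^\twib = \ctau$ trivially has $\twiab \in \modre{\rla}$, and a rule whose reduct has empty head, $\rla^\twib = (\lpif \brla^+.)$ (constraints and rules with a negative head literal), has $\brla^+ \not\subseteq \twib$ whenever $\rla^\twib \in \prg'$, hence $\brla^+ \not\subseteq \twia$ and $\twiab \in \modre{\rla}$. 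So the essential case left in~(b) is $\rla \in \prg_\lia$ with $\hrla = \set{\atm}$ and $\rla^\twib = (\atm \lpif \brla^+.) \neq \ctau$.

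For~(a), take any $\rla \in \prg_\lia$. If $\rla \notin \rejju{\dprg, \twib}$ then $\rla^\twib \in \prg'$, and $\twib \ent \prg'$ gives $\tpl{\twib,\twib} \in \modre{\rla}$. If $\rla \in \rejju{\dprg, \twib}$: if $\rla$ is a constraint, the observation gives $\twib \ent \rla$, hence $\tpl{\twib,\twib} \in \modre{\rla}$ (and a constraint is never in conflict, so $\aug{\modre{\rla},\modrer{\after{\lia}{\dprg}}} = \modre{\rla}$); otherwise $\hrla$ is a singleton, and if $\tpl{\twib,\twib} \notin \modre{\rla}$ then $\twib \nent \rla$, so $\twib \ent \brla$, and $\rejju$ supplies $\rlb \in \prg_\lib$ with $\lia < \lib < \lng$, $\hrlb = \lcmp{\hrla}$, $\twib \ent \brlb$; since $\dprg$ has no local cycles, neither $\brla$ nor $\brlb$ contains $\atm$ or $\lpnot\atm$, so Proposition~\ref{prop:re:conflict in context vs syntax} gives $\modre{\rla} \confl{\twib}{\atm} \modre{\rlb}$, and the definition of \tertwo{} (its last implication being vacuous for the exception $\tpl{\twib,\twib}$) yields $\tpl{\twib,\twib} \in \ertwo(\modre{\rla},\modre{\rlb}) \subseteq \aug{\modre{\rla},\modrer{\after{\lia}{\dprg}}}$. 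For~(b), fix $\twia \subsetneq \twib$, suppose $\twiab \in \modre{\biguopr\dprg}$, take $\rla \in \all{\dprg}\setminus\rejju{\dprg,\twib}$ with $\rla \in \prg_\lia$ and (by the reductions above) $\rla^\twib = (\atm \lpif \brla^+.)$, and suppose $\twiab \notin \modre{\rla}$, i.e.\ $\brla^+ \subseteq \twia$ and $\atm \notin \twia$. Then $\brla^+ \subseteq \twia \subseteq \twib$ together with $\twib \ent \rla^\twib$ forces $\atm \in \twib$, hence $\atm \in \twib \setminus \twia$. Since $\twiab$ lies in $\aug{\modre{\rla},\modrer{\after{\lia}{\dprg}}}$ but not in $\modre{\rla}$, it lies in $\ertwo(\modre{\rla},\modre{\rlb})$ for some $\rlb \in \prg_\lib$ with $\lia < \lib < \lng$; unwinding the definition of \tertwo{} and applying Proposition~\ref{prop:re:conflict in context vs syntax} forces the conflict atom to be $\atm$, and then the last conjunct of \tertwo{} --- precisely because $\atm \in \twib \setminus \twia$ --- forces the conflict witness to be $\twib$ itself, so $\twib \ent \brlb$ and $\hrlb = \lcmp{\hrla}$. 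But then $\rla \in \rejju{\dprg,\twib}$, contradicting the choice of $\rla$. Hence $\twiab \in \modre{\rla}$, and since $\rla$ was arbitrary, $\twia \ent \prg'$.

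Part~(ii) is the same argument with $\rejju{\dprg,\twib}$ replaced by $\rejas{\dprg,\twib}$ and \tertwo{} by \terthree{}. The constraint observation still holds (and is even simpler, since the rejecting rule witnessing membership in $\rejas{\dprg,\twib}$ is itself non-rejected, so its reduct lies in $\prg'$). The one real change is in step~(b): when the definition of \terthree{} fires, its last conjunct additionally asserts $\tpl{\twib,\twib} \notin \modre{\rla}$, whereas $\atm \in \twib$ and $\rla^\twib = (\atm \lpif \brla^+.)$ with $\brla^+ \subseteq \twib$ give $\twib \ent \rla^\twib$, i.e.\ $\tpl{\twib,\twib} \in \modre{\rla}$ --- the desired contradiction, so this clause plays the role that rejection played in part~(i). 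The step I expect to be the main obstacle is exactly this lifting in~(b): upgrading ``an interpretation between $\twia$ and $\twib$ witnesses a conflict of $\modre{\rla}$ with $\modre{\rlb}$'' to ``$\twib$ witnesses it'' (for \JU-models) or ``$\tpl{\twib,\twib} \notin \modre{\rla}$'' (for \AS-models), which is precisely where the carefully engineered last conjuncts of \tertwo{} and \terthree{}, the inclusion $\twia \subseteq \twib$, and the no-local-cycles hypothesis (through Proposition~\ref{prop:re:conflict in context vs syntax}) must be combined; the surrounding reduction to the single positive-head case is routine but has to be carried out with the constraint observation to stay clean.
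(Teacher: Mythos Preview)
Your argument is correct and follows the same overall strategy as the paper: use Proposition~\ref{prop:exception independence program:delta} to unfold $\modrer{\biguopr\dprg}$, then verify membership of $\tpl{\twib,\twib}$ and derive $\twia \ent \prg'$ from $\twiab \in \modre{\biguopr\dprg}$ via Proposition~\ref{prop:re:conflict in context vs syntax} and the side conditions of $\ertwo$/$\erthree$. The paper treats both parts uniformly (setting $\rej{\cdot,\cdot}$ to $\rejju{\cdot,\cdot}$ or $\rejas{\cdot,\cdot}$) and, in the minimality step, splits into three cases according to whether the conflict atom lies in $\twib\setminus\twia$; your preliminary reduction to the positive-singleton-head case (dispatching constraints and negative-head rules separately) lets you show directly that the conflict atom coincides with the head atom and lies in $\twib\setminus\twia$, so the paper's case~(a) becomes vacuous in your presentation --- a mild but genuine streamlining.
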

\begin{proof}
	Let $\dprg = \seq{\prg_\lia}_{\lia < \lng}$. Also, put $\rej{\cdot, \cdot} =
	\rejju{\cdot, \cdot}$ if $\er = \ertwo$ and $\rej{\cdot, \cdot} =
	\rejas{\cdot, \cdot}$ if $\er = \erthree$. From the assumption we can
	conclude that $\twib$ is a minimal model of the program $\prg' =
	[\all{\dprg} \setminus \rej{\dprg, \twib}]^\twib$. We need to prove that
	$\tpl{\twib, \twib} \in \modre{\biguopr \dprg}$ and for every $\twia
	\subsetneq \twib$, $\twiab \notin \modre{\biguopr \dprg}$.

	In order to show that $\tpl{\twib, \twib} \in \modre{\biguopr \dprg}$, take
	some set
	\begin{equation}
		\label{eq:proof:if ju then delta stable:1}
		\aug{\stria, \modrer{\after{\lia}{\dprg}}}
			= \stria \cup \bigcup_{\strib \in \modrerscr{\after{\lia}{\dprg}}}
				\er(\stria, \strib)
	\end{equation}
	from $\modrer{\biguopr \dprg}$ where $\stria = \modre{\rla}$ for some $\rla
	\in \prg_\lia$. If $\tpl{\twib, \twib}$ belongs to $\stria$, then it
	obviously belongs to \eqref{eq:proof:if ju then delta stable:1}. So assume
	that $\tpl{\twib, \twib} \notin \stria$. Then $\twib \nent \rla$, so
	\begin{equation}
		\label{eq:proof:if ju then delta stable:2}
		\twib \ent \brla \enspace,
	\end{equation}
	and we can also conclude that $\rla$ belongs to $\rej{\dprg, \twib}$. As a
	consequence, there exists a rule $\rlb$ from $\prg_\lib$ for some $\lib$
	with $\lia < \lib < \lng$ such that for some atom $\atm$ and literal $\lit
	\in \set{\atm, \lpnot \atm}$,
	\begin{equation}
		\label{eq:proof:if ju then delta stable:3}
		\hrla = \set{\lit} \text{ and } \hrlb = \set{\lcmp{\lit}} \text{ and }
		\twib \ent \brlb
		\enspace.
	\end{equation}
	Let $\strib = \modre{\rlb}$. It can be verified that \eqref{eq:proof:if ju
	then delta stable:2} and \eqref{eq:proof:if ju then delta stable:3},
	together with the assumption that $\rla$ and $\rlb$ are not local cycles,
	allow us to use Proposition~\ref{prop:re:conflict in context vs syntax} and
	conclude that for some atom $\atm$, $\stria \confl{\twib}{\atm} \strib$.
	Thus, $\tpl{\twib, \twib}$ belongs to $\er(\stria, \strib)$, and
	consequently also to \eqref{eq:proof:if ju then delta stable:1}.

	Now suppose that $\twiab \in \modre{\biguopr \dprg}$. We will show that
	$\twia \ent \prg'$, which implies that $\twia = \twib$ because $\twib$ is by
	assumption a minimal model of $\prg'$. Take some $\rla' \in \prg'$. Then
	there is a rule $\rla$ from $\all{\dprg} \setminus \rej{\dprg, \twib}$ such
	that $\rla' = \rla^\twib$. Suppose that $\rla \in \prg_\lia$. We will prove
	by contradiction that $\twia \ent \rla'$. So suppose that $\twia \nent
	\rla'$. Then $\rla^\twib$ is different from $\ctau$ and, consequently,
	$\twib \ent \brla^-$. Also, $\brla^+$ is included in $\twia$, so since
	$\twia$ is a subset of $\twib$, $\brla^+$ is included in $\twib$ as well.
	Hence,
	\begin{equation}
		\label{eq:proof:if ju then delta stable:4}
		\twib \ent \brla \enspace.
	\end{equation}
	Also, $\twiab \notin \modre{\rla}$. By our assumption, $\twiab \in \stria
	\cup \bigcup_{\strib \in \modrerscr{\after{\lia}{\dprg}}} \er(\stria,
	\strib)$ where $\stria = \modre{\rla}$. We have already shown that $\twiab
	\notin \stria$, so there must be a rule $\rlb \in \prg_\lib$ for some $\lib$
	with $\lia < \lib < \lng$ such that $\twiab \in \er(\stria, \strib)$ where
	$\strib = \modre{\rlb}$. Thus, there exists some interpretation $\twic$ and
	an atom $\atm$ such that $\twia \subseteq \twic \subseteq \twib$, $\stria
	\confl{\twic}{\atm} \strib$ and if $\atm \in \twib \setminus \twia$, then
	$\twib = \twic$. By Proposition~\ref{prop:re:conflict in context vs syntax},
	there is a literal $\lit \in \set{\atm, \lpnot \atm}$ such that $\hrla =
	\set{\lit}$ and $\hrlb = \set{\lcmp{\lit}}$. We consider three cases:
	\begin{textenum}[a)]
		\item If $\atm \notin \twib \setminus \twia$, then $\twi(\atm) =
			\twib(\atm)$. Furthermore, $\twib \ent \rla$ because $\twib \ent \prg'$
			and from \eqref{eq:proof:if ju then delta stable:4} we obtain that
			$\twib \ent \hrla$. Thus, $\twi \ent \rla^\twib$, a conflict with the
			assumption that $\twi \nent \rl'$. 

		\item If $\atm \in \twib \setminus \twia$ and $\er = \ertwo$, then
			it follows from the definition of $\ertwo$ that $\twib = \twic$, so
			$\stria \confl{\twib}{\atm} \strib$. Thus, by
			Proposition~\ref{prop:re:conflict in context vs syntax} we conclude that
			$\rla \in \rej{\dprg, \twib}$, contrary to the way $\rla$ was picked.

		\item If $\atm \in \twib \setminus \twia$ and $\er = \erthree$, then the
			definition of $\erthree$ implies that $\tpl{\twib, \twib} \notin
			\modre{\rla}$, contrary to the assumption that $\twib \ent \prg'$.
			\qedhere
	\end{textenum}
\end{proof}

\subsection{Syntactic Properties of \texorpdfstring{\terfour-}{delta\_d-} and
\texorpdfstring{\terfive-}{delta\_e-}Based Operators}

\label{app:rules:syntactic properties:erfour}

\begin{lemma}
	\label{lemma:ertwo vs erfour mod sm:1}
	Let $\dprg$ be a DLP, $\uoprtwo$, $\uoprthree$, $\uoprfour$ and $\uoprfive$ be
	\tertwo-, \terthree-, \terfour- and \terfive-based rule update operators,
	respectively. Then the following holds:
	\begin{textenum}[(i)]
		\item If $\stria \in \modrer{\biguoprfour \dprg}$, then either
			$\stria = \tris$ or $\stria \in \modrer{\biguoprtwo \dprg}$;

		\item If $\stria \in \modrer{\biguoprfive \dprg}$, then either
			$\stria = \tris$ or $\stria \in \modrer{\biguoprthree \dprg}$.
	\end{textenum}
\end{lemma}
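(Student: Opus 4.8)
The plan is to reduce the statement to the description of $\modrer{\biguopr \dprg}$ provided by Proposition~\ref{prop:aug vs uope}, together with the single observation that $\erfour$ differs from $\ertwo$ --- and $\erfive$ from $\erthree$ --- only on pairs $\tpl{\stria, \strib}$ with $\stria = \strib$, on which it returns the value $\tris$, which is absorbing under augmentation. Write $\dprg = \seq{\prg_\lia}_{\lia < \lng}$ and $\sstri_\lia = \modrer{\prg_\lia}$ for $\lia < \lng$. By Proposition~\ref{prop:aug vs uope}, every element of $\modrer{\biguoprfour \dprg}$ has the form $\aug[\erfour]{\stri, \seq{\sstri_\lib}_{\lia < \lib < \lng}}$ for some $\lia < \lng$ and $\stri \in \sstri_\lia$, and every element of $\modrer{\biguoprtwo \dprg}$ has the corresponding form with $\aug[\ertwo]{\cdot, \cdot}$. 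Hence it suffices to prove, for each such $\lia$ and $\stri$, that the set $\aug[\erfour]{\stri, \seq{\sstri_\lib}_{\lia < \lib < \lng}}$ equals either $\tris$ or $\aug[\ertwo]{\stri, \seq{\sstri_\lib}_{\lia < \lib < \lng}}$.

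First I would prove this by induction along the prefixes of the sequence $\seq{\sstri_\lib}_{\lia < \lib < \lng}$. For $0 \le j \le \lng - 1 - \lia$ let $A_j$ and $A_j'$ denote, respectively, the $\erfour$- and the $\ertwo$-augmentation of $\stri$ by the first $j$ members $\sstri_{\lia+1}, \dotsc, \sstri_{\lia+j}$; thus $A_0 = A_0' = \stri$, and $A_{\lng - 1 - \lia}$, $A_{\lng - 1 - \lia}'$ are the sets of interest. The inductive claim is that $A_j = \tris$ or $A_j = A_j'$. The base case is trivial. For the inductive step, if $A_{j-1} = \tris$ then $A_j = \tris$ as well, since one augmentation step only adds interpretations drawn from $\tris$ and removes none; and if $A_{j-1} = A_{j-1}'$, then either $A_{j-1}$ coincides with some member $\strib$ of $\sstri_{\lia+j}$, in which case $\erfour(A_{j-1}, \strib) = \tris$ is among the sets unioned in and forces $A_j = \tris$, or $A_{j-1}$ coincides with no member of $\sstri_{\lia+j}$, in which case $\erfour(A_{j-1}, \strib') = \ertwo(A_{j-1}, \strib')$ for every $\strib' \in \sstri_{\lia+j}$, so that $A_j = \aug[\ertwo]{A_{j-1}', \sstri_{\lia+j}} = A_j'$. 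Taking $j = \lng - 1 - \lia$ yields the desired dichotomy, and part~(i) follows by invoking Proposition~\ref{prop:aug vs uope} once more.

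Part~(ii) follows by exactly the same argument with $\erfive$, $\erthree$, $\uoprfive$, $\uoprthree$ in place of $\erfour$, $\ertwo$, $\uoprfour$, $\uoprtwo$, using that $\erfive$ agrees with $\erthree$ off the diagonal and equals $\tris$ on it. There is no deep step here; the only point that needs care is the bookkeeping within a single augmentation step. Because $\aug{\stria, \sstri}$ forms a \emph{simultaneous} union over all $\strib \in \sstri$, the correct trigger for the collapse to $\tris$ is that the accumulated set $A_{j-1}$ equals at least one member of $\sstri_{\lia+j}$ (not merely that the original $\stri$ does so), and one must observe that once $A_{j-1} = \tris$ the value $\tris$ persists through all later augmentation steps, so a single collapse at some stage already forces $A_{\lng - 1 - \lia} = \tris$.
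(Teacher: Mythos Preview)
Your proof is correct and follows essentially the same route as the paper's own proof, which is a bare ``follows by induction on $\lng$''. You have merely made the induction explicit by first invoking Proposition~\ref{prop:aug vs uope} to express each element of $\modrer{\biguoprfour \dprg}$ as an iterated augmentation and then inducting along the prefixes of the augmentation sequence, using that $\erfour$ agrees with $\ertwo$ off the diagonal and collapses to $\tris$ on it; this is exactly the content of the paper's one-line argument.
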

\begin{proof}
	Follows by induction on the length of $\dprg$.
\end{proof}

\begin{lemma}
	\label{lemma:ertwo vs erfour mod sm:2}
	Let $\dprg$ be a DLP, $\uoprtwo$, $\uoprthree$, $\uoprfour$ and $\uoprfive$ be
	\tertwo-, \terthree-, \terfour- and \terfive-based rule update operators,
	respectively. Then the following holds:
	\begin{textenum}[(i)]
		\item If $\stria \in \modrer{\biguoprtwo \dprg}$, then for some $\strib
			\in \modrer{\biguoprfour \dprg}$, $\strib \subseteq \stria$;

		\item If $\stria \in \modrer{\biguoprthree \dprg}$, then for some $\strib
			\in \modrer{\biguoprfive \dprg}$, $\strib \subseteq \stria$.
	\end{textenum}
\end{lemma}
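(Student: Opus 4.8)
The plan is to establish part~(i); part~(ii) is proved the same way, with $\ertwo,\erfour$ replaced by $\erthree,\erfive$ and the \JU-semantics by the \AS-semantics, since $\erfive$ is built from $\erthree$ exactly as $\erfour$ is built from $\ertwo$ and Propositions~\ref{prop:aug vs uope}, \ref{prop:exception independence rule:delta} and \ref{prop:exception independence program:delta} apply verbatim to $\erthree$. Fix a DLP $\dprg = \seq{\prg_\lia}_{\lia < \lng}$. For $\rla \in \prg_\lia$ abbreviate $E^{\ertwo}_\rla = \augtwo{\modre{\rla}, \modrer{\after{\lia}{\dprg}}}$ and $E^{\erfour}_\rla = \augfour{\modre{\rla}, \seq{\modrer{\prg_\lib}}_{\lia < \lib < \lng}}$. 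By Propositions~\ref{prop:aug vs uope} and \ref{prop:exception independence program:delta}, $\modrer{\biguoprtwo \dprg} = \Set{ E^{\ertwo}_\rla | \lia < \lng \land \rla \in \prg_\lia }$ and $\modrer{\biguoprfour \dprg} = \Set{ E^{\erfour}_\rla | \lia < \lng \land \rla \in \prg_\lia }$; moreover, by Proposition~\ref{prop:exception independence rule:delta} applied iteratively, $E^{\ertwo}_\rla = \augtwo{\modre{\rla}, \seq{\modrer{\prg_\lib}}_{\lia < \lib < \lng}}$ as well.

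The technical heart is a single-rule dichotomy: for every $\rla \in \prg_\lia$, either (a) $E^{\erfour}_\rla = E^{\ertwo}_\rla$, or (b) there is a rule $\rlb \in \prg_\lib$ with $\lia < \lib < \lng$ and $E^{\ertwo}_\rlb \subseteq E^{\ertwo}_\rla$. I would obtain this by running the $\erfour$-augmentation of $\modre{\rla}$ one program at a time, exactly as in the proof of Lemma~\ref{lemma:ertwo vs erfour mod sm:1}. Since $\erfour(\stri, \strib) = \ertwo(\stri, \strib)$ whenever $\stri \neq \strib$, the $\erfour$- and $\ertwo$-augmentations produce the same intermediate set until (if ever) some intermediate set coincides with a member of the program being processed; if that never occurs, every step is an $\ertwo$-step and (a) holds. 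Otherwise, at the first such step the common intermediate set equals $\modre{\rlb}$ for some $\rlb$ in the program $\prg_\lib$ then being processed (and the $\erfour$-augmentation collapses to $\tris$); continuing the $\ertwo$-augmentation from $\modre{\rlb}$ with the remaining programs $\prg_\lib, \dots, \prg_{\lng-1}$ and flattening it by Proposition~\ref{prop:exception independence rule:delta} (applicable since $\modre{\rlb}$ is expressible by the non-disjunctive rule $\rlb$) gives $E^{\ertwo}_\rla = \augtwo{\modre{\rlb}, \modrer{\after{\lib - 1}{\dprg}}}$. As $\after{\lib - 1}{\dprg} \supseteq \after{\lib}{\dprg}$ and $\augtwo{\modre{\rlb}, \cdot}$ is monotone in its second argument, this set contains $\augtwo{\modre{\rlb}, \modrer{\after{\lib}{\dprg}}} = E^{\ertwo}_\rlb$, which is~(b).

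Granting the dichotomy, I would conclude by induction on $\lng$. For $\lng = 0$ both $\modrer{\biguoprtwo \dprg}$ and $\modrer{\biguoprfour \dprg}$ are empty. For the step, split off $\prg_0$ and let $\hat\dprg = \seq{\prg_\lia}_{1 \le \lia < \lng}$, a DLP of length $\lng - 1$; using the representations above, $\modrer{\biguoprtwo \dprg} = \Set{ \augtwo{\modre{\rla}, \modrer{\all{\hat\dprg}}} | \rla \in \prg_0 } \cup \modrer{\biguoprtwo \hat\dprg}$, and likewise for $\erfour$, because for a rule in any of $\prg_1, \dots, \prg_{\lng-1}$ the sets $E^{\ertwo}_\rla$ and $E^{\erfour}_\rla$ are unaffected by $\prg_0$. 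Now take $\stria \in \modrer{\biguoprtwo \dprg}$. If $\stria \in \modrer{\biguoprtwo \hat\dprg}$, the induction hypothesis yields $\strib \in \modrer{\biguoprfour \hat\dprg} \subseteq \modrer{\biguoprfour \dprg}$ with $\strib \subseteq \stria$. Otherwise $\stria = E^{\ertwo}_\rla$ for some $\rla \in \prg_0$, and applying the dichotomy to $\rla$ either gives $E^{\erfour}_\rla = \stria \in \modrer{\biguoprfour \dprg}$ (take $\strib = E^{\erfour}_\rla$), or a rule $\rlb \in \prg_\lib$ with $\lib \ge 1$ and $E^{\ertwo}_\rlb \subseteq \stria$; since $E^{\ertwo}_\rlb \in \modrer{\biguoprtwo \hat\dprg}$, the induction hypothesis again supplies $\strib \in \modrer{\biguoprfour \hat\dprg} \subseteq \modrer{\biguoprfour \dprg}$ with $\strib \subseteq E^{\ertwo}_\rlb \subseteq \stria$. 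The main obstacle is the dichotomy, and within it the index bookkeeping for the partially processed augmentation together with the use of Proposition~\ref{prop:exception independence rule:delta} to flatten the truncated $\ertwo$-sequence.
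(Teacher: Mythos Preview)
Your argument is correct. The dichotomy is the right structural observation: running the $\erfour$- and $\ertwo$-augmentations step by step, they agree exactly until the current intermediate set happens to coincide with $\modre{\rlb}$ for some $\rlb$ in the program being processed, at which point $\erfour$ collapses to $\tris$ while $\ertwo$ continues from a set that is, by Proposition~\ref{prop:exception independence rule:delta}, precisely $\augtwo{\modre{\rlb},\cdot}$ over a superset of the rules used for $E^{\ertwo}_\rlb$; monotonicity of $\augtwo{\modre{\rlb},\cdot}$ then yields $E^{\ertwo}_\rlb\subseteq E^{\ertwo}_\rla$. Your outer induction splitting off $\prg_0$ is sound, and the index bookkeeping (that $E^{\ertwo}_\rla$ and $E^{\erfour}_\rla$ for rules in $\prg_1,\dots,\prg_{\lng-1}$ are the same in $\dprg$ and in $\hat\dprg$) is exactly what is needed.

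The paper takes a different, more compact route: it strengthens the induction hypothesis to the statement that every $\stria\in\modrer{\biguoprtwo\dprg}$ is of the form $\augtwo{\strib,\sstri}$ for some $\strib\in\modrer{\biguoprfour\dprg}$ and some set $\sstri$ of \RE-rule-expressible sets, and proves this directly by induction on $\lng$ using Proposition~\ref{prop:exception independence rule:delta}. From that strengthened form, $\strib\subseteq\stria$ is immediate. So the paper trades your dichotomy-plus-induction for a single induction with a loaded hypothesis. Your approach has the advantage of making the structural reason for the inclusion visible (the ``switch'' to a later rule $\rlb$ when $\erfour$ collapses); the paper's approach avoids the per-rule case analysis and the extra layer of induction at the cost of a less transparent invariant.
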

\begin{proof}
	The following two stronger statements follow by induction on the length of
	$\dprg$ using Proposition~\ref{prop:exception independence rule:delta}:
	\begin{textenum}[(i)]
		\item If $\stria \in \modrer{\biguoprtwo \dprg}$, then for some set of
			\RE-rule-expressible sets of three-valued interpretations $\sstri$ and
			some $\strib \in \modrer{\biguoprfour \dprg}$, $\stria = \augtwo{\strib,
			\sstri}$.

		\item If $\stria \in \modrer{\biguoprthree \dprg}$, then for some set of
			\RE-rule-expressible sets of three-valued interpretations $\sstri$ and
			some $\strib \in \modrer{\biguoprfive \dprg}$, $\stria = \augtwo{\strib,
			\sstri}$. \qedhere
	\end{textenum}
\end{proof}

\begin{proposition}
	\label{prop:ertwo vs erfour mod sm}
	Let $\dprg$ be a DLP, $\uoprtwo$, $\uoprthree$, $\uoprfour$ and $\uoprfive$
	be \tertwo-, \terthree-, \terfour- and \terfive-based rule update operators,
	respectively. Then $\modsm{\biguoprtwo \dprg} = \modsm{\biguoprfour \dprg}$
	and $\modsm{\biguoprthree \dprg} = \modsm{\biguoprfive \dprg}$.
\end{proposition}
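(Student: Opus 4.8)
The plan is to reduce the proposition to the two identities
\[
	\modre{\biguoprtwo \dprg} = \modre{\biguoprfour \dprg}
	\qquad \text{and} \qquad
	\modre{\biguoprthree \dprg} = \modre{\biguoprfive \dprg}
	\enspace,
\]
from which the claim about stable models follows immediately by Proposition~\ref{prop:re:sm}: an interpretation $\twib$ is a stable model of a rule base exactly when $\tpl{\twib, \twib}$ belongs to its set of \RE-models while no $\tpl{\twia, \twib}$ with $\twia \subsetneq \twib$ does, so rule bases with equal sets of \RE-models have equal sets of stable models. It therefore suffices to establish the first identity; the second is obtained verbatim by replacing \tertwo{} and \terfour{} throughout with \terthree{} and \terfive{}, and the first parts of Lemmas~\ref{lemma:ertwo vs erfour mod sm:1} and \ref{lemma:ertwo vs erfour mod sm:2} with their second parts.

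To prove $\modre{\biguoprtwo \dprg} = \modre{\biguoprfour \dprg}$ I would use that $\modre{\rb} = \bigcap \modrer{\rb}$ together with the two available lemmas, after first recording the elementary fact that inserting or deleting $\tris$ from a family of subsets of $\tris$ leaves its intersection unchanged. For the inclusion ``$\subseteq$'': let $\tri \in \modre{\biguoprtwo \dprg} = \bigcap \modrer{\biguoprtwo \dprg}$ and let $\strib \in \modrer{\biguoprfour \dprg}$ be arbitrary; by the first part of Lemma~\ref{lemma:ertwo vs erfour mod sm:1}, either $\strib = \tris$, in which case $\tri \in \strib$ trivially, or $\strib \in \modrer{\biguoprtwo \dprg}$, in which case $\tri \in \strib$ because $\tri$ lies in that intersection; either way $\tri \in \strib$, hence $\tri \in \bigcap \modrer{\biguoprfour \dprg} = \modre{\biguoprfour \dprg}$. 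For the inclusion ``$\supseteq$'': let $\tri \in \modre{\biguoprfour \dprg}$ and let $\stria \in \modrer{\biguoprtwo \dprg}$ be arbitrary; by the first part of Lemma~\ref{lemma:ertwo vs erfour mod sm:2} there is some $\strib \in \modrer{\biguoprfour \dprg}$ with $\strib \subseteq \stria$, so $\tri \in \strib \subseteq \stria$; since $\stria$ was arbitrary, $\tri \in \modre{\biguoprtwo \dprg}$. Degenerate cases where one of the families $\modrer{\cdot}$ is empty are subsumed, since then both sides equal $\tris$.

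I do not anticipate a genuine obstacle: the substantive work has already been isolated in Lemmas~\ref{lemma:ertwo vs erfour mod sm:1} and \ref{lemma:ertwo vs erfour mod sm:2}, which proceed by induction on the length of $\dprg$ and exploit the fact that \terfour{} (resp.\ \terfive{}) differs from \tertwo{} (resp.\ \terthree{}) only in that, when the original and updating sets of \RE-models coincide, it returns all of $\tris$ rather than the smaller set returned by \tertwo{} (resp.\ \terthree{}) --- i.e.\ it ``wipes out'' repeated rules, an operation invisible at the level of $\modre{\cdot}$ by the remark above. The one point needing care is precisely this bookkeeping with $\tris$ inside the families $\modrer{\cdot}$; I would state the ``$\tris$ does not affect intersections'' remark explicitly and up front so that both the ``$\strib = \tris$'' alternative of Lemma~\ref{lemma:ertwo vs erfour mod sm:1} and the empty-family edge cases are handled in one stroke.
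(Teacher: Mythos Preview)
Your proposal is correct and follows essentially the same approach as the paper: both arguments use Lemma~\ref{lemma:ertwo vs erfour mod sm:1} and Lemma~\ref{lemma:ertwo vs erfour mod sm:2} to establish the two inclusions $\modre{\biguoprtwo \dprg} \subseteq \modre{\biguoprfour \dprg}$ and $\modre{\biguoprfour \dprg} \subseteq \modre{\biguoprtwo \dprg}$ (and analogously for $\uoprthree$, $\uoprfive$), then pass to stable models. The only cosmetic difference is that the paper phrases the two inclusions as set-theoretic manipulations of the families $\modrer{\cdot}$ and invokes Proposition~\ref{prop:eq and ent comparison} for the final step, whereas you do an equivalent element-chase and invoke Proposition~\ref{prop:re:sm} directly.
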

\begin{proof}
	By Lemma~\ref{lemma:ertwo vs erfour mod sm:1}, $\modrer{\biguoprtwo \dprg}
	\cup \set{\tris} \supseteq \modrer{\biguoprfour \dprg}$ and
	$\modrer{\biguoprthree \dprg} \cup \set{\tris} \supseteq
	\modrer{\biguoprfive \dprg}$, so
	\begin{align*}
		\modre{\biguoprtwo \dprg}
		&=
		\bigcap \modrer{\biguoprtwo \dprg}
		=
		\bigcap (\modrer{\biguoprtwo \dprg} \cup \set{\tris})
		\subseteq
		\bigcap \modrer{\biguoprfour \dprg}
		=
		\modre{\biguoprfour \dprg}
		\enspace,
		\\
		\modre{\biguoprthree \dprg}
		&=
		\bigcap \modrer{\biguoprthree \dprg}
		=
		\bigcap (\modrer{\biguoprthree \dprg} \cup \set{\tris})
		\subseteq
		\bigcap \modrer{\biguoprfive \dprg}
		=
		\modre{\biguoprfive \dprg}
		\enspace.
	\end{align*}
	Furthermore, by Lemma~\ref{lemma:ertwo vs erfour mod sm:2},
	\begin{align*}
		\modre{\biguoprtwo \dprg}
		&=
		\bigcap \modrer{\biguoprtwo \dprg}
		=
		\bigcap_{\stria \in \modrer{\uoprtwo \dprg}} \stria
		\supseteq
		\bigcap_{\strib \in \modrer{\uoprfour \dprg}} \strib
		=
		\bigcap \modrer{\biguoprfour \dprg}
		=
		\modre{\biguoprfour \dprg}
		\enspace,
		\\
		\modre{\biguoprthree \dprg}
		&=
		\bigcap \modrer{\biguoprthree \dprg}
		=
		\bigcap_{\stria \in \modrer{\uoprthree \dprg}} \stria
		\supseteq
		\bigcap_{\strib \in \modrer{\uoprfive \dprg}} \strib
		=
		\bigcap \modrer{\biguoprfive \dprg}
		=
		\modre{\biguoprfive \dprg}
		\enspace.
	\end{align*}
	Thus, $\modre{\biguoprtwo \dprg} = \modre{\biguoprfour \dprg}$ and
	$\modre{\biguoprthree \dprg} = \modre{\biguoprfive \dprg}$, and the rest
	follows from Proposition~\ref{prop:eq and ent comparison}.
\end{proof}

\begin{proof}
	[\textbf{Proof of Theorem~\ref{thm:er vs ju and as}}]
	\label{proof:thm:er vs ju and as}
	Follows from Propositions~\ref{prop:if delta stable then ju}, \ref{prop:if
	ju then delta stable} and \ref{prop:ertwo vs erfour mod sm}.
\end{proof}

\begin{proof}
	[\textbf{Proof of Theorem~\ref{thm:syntactic properties:ertwo and erfour}}]
	\label{proof:thm:re:syntactic properties:ertwo and erfour}
	Follows from Theorem~\ref{thm:er vs ju and as} and from the fact that the
	\JU- and \AS-models satisfy all of these properties.
\end{proof}

\begin{proof}
	[\textbf{Proof of Corollary~\ref{cor:state condensing}}]
	\label{proof:cor:state condensing}
	It suffices to put $\rb = \biguopr \seq{\prg_\lia}_{\lia \leq \lib}$ and
	apply Theorem~\ref{thm:er vs ju and as}.
\end{proof}

\subsection{Semantic Properties}

\label{app:rules:semantic properties}

\begin{proposition}
	Let $\uopr$ be a \ter-based rule update operator. Then $\uopr$ satisfies
	\pup{Initialisation}, \pup{Disjointness}, \pu{1} and \putwotop{} with
	respect to \RR{}, \SR{}, \RMR{}, \SMR{}, \RE{}, \SE{} and \SM{} (where
	applicable).
\end{proposition}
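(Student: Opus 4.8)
The plan is to reduce all four assertions to one elementary computation with the operator $\Set{\modre{\gprg} | \gprg \in \cdot}$ and then read off the conclusions from the strength hierarchy of Propositions~\ref{prop:se:equivalence comparison} and~\ref{prop:eq and ent comparison}. Fix a \ter-based operator $\uopr$. By Definition~\ref{def:exception-based rule update operator}, for all rule bases $\rba, \rbu$ one has $\modrer{\rba \uopr \rbu} = \Set{\modre{\rl} \cup \e(\modre{\rl}, \modrer{\rba}, \modrer{\rbu}) | \rl \in \rba} \cup \modrer{\rbu}$, where, since \te{} is simple, $\e(\stri, \sstria, \sstrib) = \bigcup_{\strib \in \sstrib} \er(\stri, \strib)$; in particular $\e$ ignores its middle argument, $\e(\stri, \sstria, \emptyset) = \emptyset$, and $\modrer{\emptyset} = \emptyset$.

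From this I would extract four facts by direct substitution: (i) $\modrer{\emptyset \uopr \rbu} = \modrer{\rbu}$, since the first summand is the empty union; (ii) $\modrer{\rbu} \subseteq \modrer{\rba \uopr \rbu}$, since $\modrer{\rbu}$ is literally one of the two summands; (iii) $\modrer{\rba \uopr \emptyset} = \modrer{\rba}$, because $\e(\modre{\rl}, \modrer{\rba}, \emptyset) = \emptyset$ and $\modrer{\emptyset} = \emptyset$; and (iv) $\modrer{(\rba \cup \rbb) \uopr \rbu} = \modrer{\rba \uopr \rbu} \cup \modrer{\rbb \uopr \rbu} = \modrer{(\rba \uopr \rbu) \cup (\rbb \uopr \rbu)}$, because the exception attached to a rule $\rl \in \rba \cup \rbb$ is $\bigcup_{\strib \in \modrer{\rbu}} \er(\modre{\rl}, \strib)$, which is insensitive to whether $\rl$ is counted in $\rba$ or in $\rbb$ (so for simple operators the disjoint-alphabet hypothesis is not even used, though it does no harm).

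It then remains to bridge these facts to the seven notions of program equivalence and entailment. Since $\modrer{\gprg^\ctau} = \modrer{\gprg} \cup \set{\tris}$ for every rule base $\gprg$, fact (i) yields $\modrer{(\emptyset \uopr \rbu)^\ctau} = \modrer{\rbu^\ctau}$, i.e.\ $\emptyset \uopr \rbu \eqRR \rbu$; likewise (iii) yields $\rba \uopr \emptyset \eqRR \rba$, (iv) yields $(\rba \cup \rbb) \uopr \rbu \eqRR (\rba \uopr \rbu) \cup (\rbb \uopr \rbu)$, and (ii) yields $\modrer{\rbu^\ctau} \subseteq \modrer{(\rba \uopr \rbu)^\ctau}$, i.e.\ $\rba \uopr \rbu \entRR \rbu$. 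By Propositions~\ref{prop:se:equivalence comparison} and~\ref{prop:eq and ent comparison}, $\eqRR$ implies each of $\eqSR, \eqRMR, \eqSMR, \eqRE, \eqSE, \eqSM$ and $\entRR$ implies each of $\entSR, \entRMR, \entSMR, \entRE, \entSE$, so every $\eqRR$-statement above upgrades to all seven listed equivalences and the $\entRR$-statement upgrades to all six defined entailments; \pu{1} under \SM{} is the lone "n/a" instance, precisely because \SM-entailment was deliberately left undefined. This gives \pup{Initialisation}, \putwotop{} and \pup{Disjointness} under every listed equivalence and \pu{1} under every listed entailment.

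There is essentially no hard step here: the argument is a mechanical unwinding of Definition~\ref{def:exception-based rule update operator} followed by a hierarchy lookup. The only points needing care are the bookkeeping with the canonical-tautology padding built into $\eqRR$, $\eqSR$, $\eqRMR$, $\eqSMR$ and their entailment analogues (absorbed uniformly into $\modrer{\gprg^\ctau} = \modrer{\gprg} \cup \set{\tris}$), checking in (iv) that the index set $\rba \cup \rbb$ splits cleanly even under the ``uniquely identified rules'' convention, and honouring the ``where applicable'' caveat that excludes \pu{1} under \SM{}.
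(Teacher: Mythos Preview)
Your proof is correct and follows essentially the same approach as the paper: both compute directly with $\modrer{\cdot}$ to establish each property at the level of \RR-equivalence (or \RR-entailment for \pu{1}) and then cascade down the hierarchy via Proposition~\ref{prop:eq and ent comparison}. Your version is slightly more explicit about the tautology padding and, as you correctly observe, the disjoint-alphabet hypothesis in \pup{Disjointness} is actually unused for simple operators.
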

\begin{proof}
	We prove these properties with respect to \RR-equivalence; their
	satisfaction with respect to the other notions of program equivalence
	follows from Proposition~\ref{prop:eq and ent comparison}. To verify that
	\pup{Initialisation} holds, it suffices to observe that
	\[
		\modrer{\emptyset \uopr \rbu}
		=
		\Set{\aug{\stri, \modrer{\rbu}} | \stri \in \emptyset}
		\cup
		\modrer{\rbu}
		=
		\modrer{\rbu}
		\enspace.
	\]
	Thus, $\emptyset \uopr \rbu$ is \RR-equivalent to $\rbu$. As for
	\pup{Disjointness}, it suffices to observe that $\modrer{(\rba \cup \rbb)
	\uopr \rbu}$ coincides with
	\begin{align*}
		&\Set{\Aug{\stri, \modrer{\rbu}} | \stri \in \rba \cup \rbb}
		\cup
		\modrer{\rbu} \\
		&\hspace{2em}=
		\Br{
			\Set{\Aug{\stri, \modrer{\rbu}} | \stri \in \rba}
			\cup
			\modrer{\rbu}
		}
		\cup
		\Br{
			\Set{\Aug{\stri, \modrer{\rbu}} | \stri \in \rbb}
			\cup
			\modrer{\rbu}
		} \\
		&\hspace{2em}=
		\modrer{\rba \uopr \rbu}
		\cup
		\modrer{\rbb \uopr \rbu}
		=
		\modrer{(\rba \uopr \rbu) \cup (\rbb \uopr \rbu)}
		\enspace.
	\end{align*}
	In order to prove that \pu{1} holds, consider that $\modrer{\rbu}$ is a
	subset of $\modrer{\rba \uopr \rbu}$. Consequently, $\rba \uopr \rbu \entRR
	\rbu$. Finally, \putwotop{} follows from the fact that $\aug{\stri,
	\emptyset} = \stri$ for all $\stri \subseteq \tris$.
\end{proof}

\begin{lemma}
	\label{lemma:semantic properties:non-interference:1}
	Let $\prga$, $\prgb$ be programs over disjoint alphabets and $\er \in
	\set{\erone, \ertwo, \erthree, \erfour, \erfive}$. Then for all $\stria \in
	\modrer{\prga}$, either $\aug{\stria, \modrer{\prgb}} = \stria$ or
	$\aug{\stria, \modrer{\prgb}} = \tris$.
\end{lemma}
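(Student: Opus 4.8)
The plan is to analyze what happens when we take a set $\stria = \modre{\rla}$ for a rule $\rla$ in $\prga$ and augment it with exceptions coming from the rules of $\prgb$, where $\prga$ and $\prgb$ use disjoint sets of atoms. First I would reduce to the canonical case: by Proposition~\ref{prop:re:canonical equivalence}, $\modre{\rla} = \modre{\canre{\rla}}$, so we may assume $\rla$ is \RE-canonical. If $\rla = \ctau$, then $\stria = \tris$ and every $\er(\stria, \strib) \subseteq \tris$, so $\aug{\stria, \modrer{\prgb}} = \tris$ and we are done. So assume $\rla$ is \RE-canonical and different from $\ctau$; all its head and body atoms then lie in the alphabet of $\prga$.

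Next I would case-split on the exception function. For $\er \in \set{\erone, \ertwo, \erthree}$, the key observation is that every exception introduced arises from a conflict $\stria \confl{\twib}{\atm} \strib$ for some atom $\atm$ and some $\strib = \modre{\rlb}$ with $\rlb \in \prgb$. I would invoke Proposition~\ref{prop:re:conflict in context vs syntax} (or directly Proposition~\ref{prop:re:rule in context} applied to $\rla$): $\stria \confl{\twib}{\atm} \strib$ requires, in particular, that $\atm \in \hrla$, i.e. $\atm$ is a head atom of $\rla$. But then $\atm$ belongs to the alphabet of $\prga$. On the other side, $\strib^\twib(\atm)$ must also be defined, which by Proposition~\ref{prop:re:rule in context} forces $\atm \in \hrlb$ (a head atom of $\rlb$), hence $\atm$ is in the alphabet of $\prgb$ — assuming $\prgb$ consists of non-disjunctive rules so that the proposition applies rule-wise. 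Wait: $\prgb$ is a program, and $\modre{\rlb}$ for $\rlb\in\prgb$ need not be rule-expressible in general; but since $\prga$, $\prgb$ are (non-disjunctive) programs here, each $\strib \in \modrer{\prgb}$ is $\modre{\rlb}$ for a single rule $\rlb$, and we can apply the characterisation. Since the alphabets are disjoint, no atom can be both a head atom of $\rla$ and a head atom of some $\rlb$, so \emph{no conflict can occur}. Therefore $\er(\stria, \strib) = \emptyset$ for all $\strib \in \modrer{\prgb}$, and $\aug{\stria, \modrer{\prgb}} = \stria$.

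For $\er \in \set{\erfour, \erfive}$, the only additional behaviour compared to $\ertwo$, $\erthree$ is the clause that returns $\tris$ when $\stria = \strib$. So here I would argue: if $\stria = \strib$ for some $\strib \in \modrer{\prgb}$, then $\aug{\stria, \modrer{\prgb}} \supseteq \erfour(\stria, \strib) = \tris$, hence it equals $\tris$. Otherwise $\stria \neq \strib$ for every $\strib \in \modrer{\prgb}$, so $\erfour$ and $\erfive$ reduce to $\ertwo$ and $\erthree$ respectively on every relevant pair, and the previous paragraph's argument applies verbatim to conclude $\aug{\stria, \modrer{\prgb}} = \stria$. Note the extra possibility "$\stria = \strib$" is exactly why the lemma statement includes the $\tris$ alternative even though for $\erone, \ertwo, \erthree$ the first alternative always holds on non-$\ctau$ canonical rules.

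The main obstacle I anticipate is handling the disjunctive/rule-expressibility subtlety cleanly: Proposition~\ref{prop:re:conflict in context vs syntax} is stated for non-disjunctive rules, and I must make sure the lemma's hypothesis (that $\prga$, $\prgb$ are \emph{programs}, hence composed of non-disjunctive rules) is what licenses applying it rule-by-rule to both $\stria$ and $\strib$. A secondary care point is the direction of the conflict definition — $\stria \confl{\twib}{\atm} \strib$ is symmetric in requiring \emph{both} $\stria^\twib(\atm)$ and $\strib^\twib(\atm)$ defined — so I genuinely get the atom $\atm$ constrained to lie in \emph{both} alphabets, which is the contradiction that powers the whole argument. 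Everything else is bookkeeping over the definitions of $\er$ and $\aug{\cdot,\cdot}$.
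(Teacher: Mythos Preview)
Your approach is essentially the paper's own: the paper's proof is a one-liner citing Proposition~\ref{prop:re:rule in context} and the definitions of the five exception functions, and your expansion unpacks exactly that --- a conflict $\stria \confl{\twib}{\atm} \strib$ forces $\atm$ to occur in the head of both the rule underlying $\stria$ and the rule underlying $\strib$, which is impossible over disjoint alphabets; the $\erfour/\erfive$ clause yielding $\tris$ when $\stria=\strib$ accounts for the second alternative in the conclusion.

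One correction to your write-up: your worry about non-disjunctive rules is misplaced. In this paper a \emph{program} is any set of rules (possibly disjunctive), so the lemma's hypothesis does \emph{not} restrict you to non-disjunctive programs. Fortunately this does not matter, because Proposition~\ref{prop:re:rule in context} is stated for an arbitrary rule $\rl$, not just non-disjunctive ones; its condition~4 already forces $\atm$ (or $\lpnot\atm$) to lie in $\hrl$. So you should cite Proposition~\ref{prop:re:rule in context} directly (as you parenthetically suggest) rather than Proposition~\ref{prop:re:conflict in context vs syntax}, and drop the digression about rule-expressibility and non-disjunctiveness --- every $\strib\in\modrer{\prgb}$ is $\modre{\rlb}$ for a rule $\rlb\in\prgb$ by definition of $\modrer{\cdot}$, and that is all you need.
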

\begin{proof}
	Follows from Proposition~\ref{prop:re:rule in context} and the definitions
	of \terone{}, \tertwo{}, \terthree{}, \terfour{}, and \terfive{}.
\end{proof}

\begin{lemma}
	\label{lemma:semantic properties:non-interference:2}
	Let $\stria \subseteq \tris$ be \RE-rule-expressible, $\prgu$ a program,
	$\atma$ an atom and $\val_0$ a truth value. If $\augone{\stria,
	\modrer{\prgu}}^\twib(\atma) = \val_0$, then either $\stria^\twib(\atma) =
	\val_0$, or $\atma$ occurs in $\prgu$.
\end{lemma}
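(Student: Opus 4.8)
The plan is to argue by contradiction, keeping track of which of the three three–valued interpretations agreeing with $\twib$ off $\atma$ belong to $S := \augone{\stria, \modrer{\prgu}} = \stria \cup E$, where $\stria = \modre{\rl}$ for some rule $\rl$ and $E := \bigcup_{\strib \in \modrer{\prgu}} \erone(\stria, \strib)$. Assume $S^\twib(\atma) = \val_0$, that $\atma$ does not occur in $\prgu$, and, for contradiction, that $\stria^\twib(\atma) \neq \val_0$. A first easy step removes the contribution of $\stria$: $\twib[\val_0/\atma] \in \stria$ would give $\stria^\twib(\atma) = \val_0$ outright, while any of $\twib[\tr/\atma], \twib[\un/\atma], \twib[\fa/\atma]$ other than $\twib[\val_0/\atma]$ lying in $\stria$ would lie in $S$, contradicting $S^\twib(\atma) = \val_0$. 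Hence none of these three interpretations lies in $\stria$, so the issue is entirely about membership in $E$.

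The second ingredient is that $\tpl{\twia, \twic} \in E$ depends only on the second component $\twic$, since $\erone$ only tests whether $\stria \confl{\twic}{\atm} \strib$ for some $\atm$. As $\twib[\tr/\atma]$ and $\twib[\un/\atma]$ share the second component $\twib \cup \set{\atma}$, they lie in $E$ simultaneously; together with the first step this excludes $\val_0 \in \set{\tr, \un}$ (if $\twib[\val_0/\atma] \in E$, so is the other one, hence in $S$, contradicting the forcing). Only the case $\val_0 = \fa$ remains, and this is the heart of the argument.

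For $\val_0 = \fa$ we have $\twib[\fa/\atma] = \tpl{\twib \setminus \set{\atma}, \twib \setminus \set{\atma}} \in E$, so $\stria \confl{\twib \setminus \set{\atma}}{\atm} \strib$ for some $\strib = \modre{\rlb}$ with $\rlb \in \prgu$ and some atom $\atm$. By Proposition~\ref{prop:re:rule in context} applied to $\strib$, the conflict atom $\atm$ must occur in the head of $\rlb$; since $\atma$ does not occur in $\prgu$, this gives $\atm \neq \atma$, and moreover every condition of Proposition~\ref{prop:re:rule in context} characterising the value $\strib$ forces on $\atm$ is untouched by flipping $\atma$, so $\strib$ forces $\atm$ to the same value $\val_2$ w.r.t.\ $\twib \cup \set{\atma}$ as w.r.t.\ $\twib \setminus \set{\atma}$, while $\stria^{\twib \setminus \set{\atma}}(\atm) = \val_1 \neq \val_2$. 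The forcing $S^\twib(\atma) = \fa$ also yields $\twib[\tr/\atma] \notin E$, i.e.\ $\stria$ and $\strib$ are not in conflict on $\atm$ w.r.t.\ $\twib \cup \set{\atma}$; but the value part of Proposition~\ref{prop:re:rule in context} is read off the head of the rule alone, so if $\stria^{\twib \cup \set{\atma}}(\atm)$ were defined it would again be $\val_1 \neq \val_2$, contradicting the non-conflict — hence $\stria^{\twib \cup \set{\atma}}(\atm)$ is undefined. Consequently one of conditions~2 and~3 of Proposition~\ref{prop:re:rule in context} (body of $\rl$ satisfied; no other head literal of $\rl$ satisfied) must fail when $\twib \setminus \set{\atma}$ is replaced by $\twib \cup \set{\atma}$; since only $\atma$ changes, and $\atma$ enters these two conditions only through $\brl^-$ and $\hrl^+$, a short computation gives $\atma \in \brl^- \cup \hrl^+$. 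On the other hand, $\twib[\tr/\atma] = \tpl{\twib \cup \set{\atma}, \twib \cup \set{\atma}} \notin \stria$, so Lemma~\ref{lemma:re:model conditions} gives $\atma \notin \hrl^+$ and $\atma \notin \brl^-$, i.e.\ $\atma \notin \brl^- \cup \hrl^+$ — the contradiction that completes the proof.

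I expect the case $\val_0 = \fa$ to be the main obstacle. Unlike Lemma~\ref{lemma:aug delta value}, which gives the unconditional statement for \tertwo{} and \terthree{}, the asymmetry of \terone{} means the conflict interpretation $\twib \setminus \set{\atma}$ witnessing $\twib[\fa/\atma] \in E$ need not transfer ``upward'', and one must pin down exactly how $\atma$ can occur in the rule $\rl$ relative to the conflict atom $\atm$ — the role of Proposition~\ref{prop:re:rule in context} together with Lemma~\ref{lemma:re:model conditions}. A minor but necessary point is that $\stria$ is only assumed \RE-rule-expressible, not expressible by a non-disjunctive rule, so Proposition~\ref{prop:re:conflict in context vs syntax} is unavailable and the whole argument must go through Proposition~\ref{prop:re:rule in context}.
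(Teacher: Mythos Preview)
Your proof is correct and follows the same overall skeleton as the paper's: first show that none of $\twib[\tr/\atma]$, $\twib[\un/\atma]$, $\twib[\fa/\atma]$ lies in $\stria$, then use the second-component dependence of $\erone$ to rule out $\val_0 \in \set{\tr,\un}$, and finally analyse the case $\val_0 = \fa$ by comparing the conflict at $\twib \setminus \set{\atma}$ with the (non-)conflict at $\twib \cup \set{\atma}$.

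The main difference is in how the $\val_0 = \fa$ case is dispatched. You use only $\twib[\tr/\atma] \notin \stria$ at the very end, which via Lemma~\ref{lemma:re:model conditions} gives you $\atma \notin \hrl^+ \cup \brl^-$; this forces you to go through Proposition~\ref{prop:re:rule in context} in detail, showing $\atm \neq \atma$, that $\strib$ forces the same value on $\atm$ at both interpretations, that $\stria^{\twib \cup \set{\atma}}(\atm)$ must therefore be undefined, and then tracing which of conditions~2 or~3 breaks. The paper instead exploits \emph{both} $\twib[\tr/\atma] \notin \stria$ and $\twib[\fa/\atma] \notin \stria$ right after the first step: together they give $\atma \notin \hrl^+ \cup \hrl^- \cup \brl^+ \cup \brl^-$, i.e.\ $\atma$ does not occur in $\rl$ at all. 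Once that is established, the $\stria$-side of the conflict condition is visibly insensitive to flipping $\atma$, so the witnessed difference between $\stria \confl{\twib \setminus \set{\atma}}{\atmb} \strib$ and $\neg(\stria \confl{\twib \cup \set{\atma}}{\atmb} \strib)$ must come from the $\strib$-side, forcing $\atma$ to occur in the rule expressing $\strib$ and hence in $\prgu$. Your route works, but the paper's shortcut avoids the case analysis on conditions~2 and~3 entirely.
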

\begin{proof}
	Suppose that $\augone{\stria, \modrer{\prgu}}^\twib(\atma) = \val_0$ and
	$\stria^\twib(\atma) \neq \val_0$. Thus, $\augone{\stria, \modrer{\prgu}}$
	contains $\twib[\val/\atma]$ if and only if $\val = \val_0$. Since $\stria$
	is a subset of $\augone{\stria, \modrer{\prgu}}$, it follows that $\stria$
	does not contain $\twib[\tr/\atma]$, $\twib[\un/\atma]$ nor
	$\twib[\fa/\atma]$. Let $\rl$ be some rule such that $\modre{\rl} = \stria$.
	We can conclude that $\atma$ does not occur in $\rl$. Furthermore, if
	$\val_0 = \tr$, then by the definition of \terone{} we obtain a conflict
	with the fact that $\twib[\un/\atma]$ does not belong to $\augone{\stria,
	\modrer{\prgu}}$. Consequently, $\val_0 = \fa$. Furthermore, since
	$\twib[\tr/\atma]$ does not belong to $\augone{\stria, \modrer{\prgu}}$
	but $\twib[\fa/\atma]$ does, there exists some $\strib \in \modrer{\prgu}$
	and some atom $\atmb$ such that $\stria \confl{\twib[\fa/\atma]}{\atmb}
	\strib$ but it is not the case that $\stria
	\confl{\twib[\tr/\atma]}{\atmb} \strib$. Since $\atma$ does not occur in
	$\rl$, this is only possible if $\atma$ occurs in $\prgu$.
\end{proof}

\begin{proposition}
	Let $\uopr$ be a \terone-, \tertwo-, \terthree, \terfour or \terfive-based
	rule update operator. Then $\uopr$ satisfies \pup{Non-interference} for
	non-disjunctive programs with respect to \RR{}, \SR{}, \RMR{}, \SMR{},
	\RE{}, \SE{} and \SM{}.
\end{proposition}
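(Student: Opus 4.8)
The plan is to prove the property only for \RR-equivalence, since \RR{} is the strongest of the listed notions: by Proposition~\ref{prop:eq and ent comparison} (see also Figure~\ref{fig:equivalences}) we have $\eqSM, \eqSE, \eqRE, \eqSMR, \eqRMR, \eqSR \,\strle\, \eqRR$, so $(\rba \uopr \rbu) \uopr \rbv \eqRR (\rba \uopr \rbv) \uopr \rbu$ entails the same equivalence for all seven relations. Fixing non-disjunctive programs $\rba$, $\rbu$, $\rbv$ with $\rbu$, $\rbv$ over disjoint alphabets and a \ter-based operator $\uopr$ with $\er \in \set{\erone, \ertwo, \erthree, \erfour, \erfive}$, it therefore suffices to show $\modrer{(\rba \uopr \rbu) \uopr \rbv} = \modrer{(\rba \uopr \rbv) \uopr \rbu}$. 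First I would unfold both sides by iterating equation~\eqref{eq:characterisation after update} (equivalently, via Proposition~\ref{prop:aug vs uope}), writing each side as a union of three blocks: an \emph{$\rba$-block} $\Set{\aug{\aug{\stri, \modrer{\rbu}}, \modrer{\rbv}} | \stri \in \modrer{\rba}}$ (with $\rbu$, $\rbv$ interchanged on the right-hand side), a \emph{$\rbu$-block} $\Set{\aug{\strib, \modrer{\rbv}} | \strib \in \modrer{\rbu}}$ (resp.\ $\modrer{\rbu}$), and a \emph{$\rbv$-block} $\modrer{\rbv}$ (resp.\ $\Set{\aug{\strib, \modrer{\rbu}} | \strib \in \modrer{\rbv}}$).

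The second step is the observation that augmenting by exceptions coming from a program over a disjoint alphabet is inert. By Lemma~\ref{lemma:semantic properties:non-interference:1}, for $\strib \in \modrer{\rbu}$ we have $\aug{\strib, \modrer{\rbv}} \in \set{\strib, \tris}$, and the value $\tris$ can arise only if $\strib \in \modrer{\rbv}$: a conflict $\confl{\twib}{\atm}$ between two non-disjunctive rules forces $\atm$ to be the (unique) head atom of both rules, by Proposition~\ref{prop:re:rule in context}, which is impossible across disjoint alphabets, so no conflict-induced exceptions occur; and the only other exceptions $\erfour$, $\erfive$ can add arise when the two arguments coincide. Symmetrically, $\aug{\strib, \modrer{\rbu}} \in \set{\strib, \tris}$ for $\strib \in \modrer{\rbv}$, with $\tris$ only if $\strib \in \modrer{\rbu}$. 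Using this I would prove the inclusion $\modrer{(\rba \uopr \rbu) \uopr \rbv} \subseteq \modrer{(\rba \uopr \rbv) \uopr \rbu}$ element by element: a member of the $\rbv$-block $\modrer{\rbv}$ on the left reappears on the right as $\aug{\strib, \modrer{\rbu}}$, which equals $\strib$ (done) or $\tris$ (in which case $\strib \in \modrer{\rbu}$, so $\strib$ lies in the right-hand $\rbu$-block and $\tris$ in its $\rbv$-block); members of the left-hand $\rbu$-block are handled the same way. This also absorbs the degenerate rules: $\ctau$ contributes $\tris$ and the rule $(\lpif.)$ contributes $\emptyset$ to the semantic characterisations, and the above shows these appear symmetrically on both sides. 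The reverse inclusion is symmetric.

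The remaining work is the $\rba$-block: for each $\stri = \modre{\rl} \in \modrer{\rba}$ with $\rl$ a non-disjunctive rule, I must show $\aug{\aug{\stri, \modrer{\rbu}}, \modrer{\rbv}} = \aug{\aug{\stri, \modrer{\rbv}}, \modrer{\rbu}}$. For $\er \in \set{\ertwo, \erthree}$ this is immediate from Proposition~\ref{prop:exception independence rule:delta}: since $\stri$ is \RE-rule-expressible by a non-disjunctive rule and the members of $\modrer{\rbu}$, $\modrer{\rbv}$ are \RE-rule-expressible, both iterated augmentations equal $\aug{\stri, \modrer{\rbu} \cup \modrer{\rbv}}$ (no disjointness is even needed here). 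For $\er = \erone$, exception independence fails in general (Example~\ref{ex:terone iterated}), so disjointness is essential: using Lemma~\ref{lemma:semantic properties:non-interference:2} together with the uniqueness of the head atom of a non-disjunctive rule (Proposition~\ref{prop:re:rule in context}), I would show that augmenting $\stri$ by $\modrer{\rbu}$ neither alters nor destroys the forced value $\stri^\twib(\atm)$ of any atom $\atm$ occurring in $\rbv$; hence the conflicts of $\aug{\stri, \modrer{\rbu}}$ with members $\strib$ of $\modrer{\rbv}$ coincide with those of $\stri$, giving $\erone(\aug{\stri, \modrer{\rbu}}, \strib) = \erone(\stri, \strib)$ and therefore $\aug{\aug{\stri, \modrer{\rbu}}, \modrer{\rbv}} = \aug{\stri, \modrer{\rbu} \cup \modrer{\rbv}}$, which is symmetric in $\rbu$, $\rbv$. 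Finally, for $\er \in \set{\erfour, \erfive}$ I would reduce to the previous two cases: $\erfour$ and $\erfive$ agree with $\ertwo$ and $\erthree$ except for returning $\tris$ on identical arguments, so the iterated augmentation equals the corresponding $\ertwo$/$\erthree$ value unless one of a short list of ``collapse'' conditions holds (such as $\stri \in \modrer{\rbu}$, $\stri \in \modrer{\rbv}$, $\aug[\ertwo]{\stri, \modrer{\rbu}} \in \modrer{\rbv}$, $\aug[\ertwo]{\stri, \modrer{\rbv}} \in \modrer{\rbu}$, or $\aug[\ertwo]{\stri, \modrer{\rbu} \cup \modrer{\rbv}} = \tris$), and each such condition is symmetric under exchanging $\rbu$ and $\rbv$ (again using disjointness and Proposition~\ref{prop:re:rule in context} to exclude non-symmetric conflict-induced collapses), so both sides collapse to $\tris$ together.

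The main obstacle is precisely this commutation of iterated augmentation along the $\rba$-block for $\erone$, $\erfour$ and $\erfive$: for $\ertwo$ and $\erthree$ it is handed to us by Proposition~\ref{prop:exception independence rule:delta}, but for the other functions it must be extracted from the structural Lemmas~\ref{lemma:semantic properties:non-interference:1} and~\ref{lemma:semantic properties:non-interference:2} and from Proposition~\ref{prop:re:rule in context}, with careful tracking of the degenerate rules $\ctau$ and $(\lpif.)$ so that no spurious $\tris$ or $\emptyset$ breaks the symmetry. This is also the only place where the disjoint-alphabet hypothesis and the restriction to non-disjunctive programs are genuinely used.
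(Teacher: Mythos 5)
Your proposal is correct and follows essentially the same route as the paper's own proof: reduction to \RR-equivalence via Proposition~\ref{prop:eq and ent comparison}, the three-block decomposition of $\modrer{(\rba \uopr \rbu) \uopr \rbv}$, Lemma~\ref{lemma:semantic properties:non-interference:1} for the blocks originating in the updates, Proposition~\ref{prop:exception independence rule:delta} for the \tertwo{}/\terthree{} case, Lemma~\ref{lemma:semantic properties:non-interference:2} together with the single-forced-head-atom observation for \terone{}, and reduction of \terfour{}/\terfive{} to \tertwo{}/\terthree{}. If anything, your explicit tracking of the \terfour{}/\terfive{} collapse conditions and of where an element lands when its augmented counterpart degenerates to $\tris$ is slightly more careful than the paper's terse ``the rest follows by the previous paragraph''.
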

\begin{proof}
	We prove this property with respect to \RR-equivalence; its satisfaction
	with respect to the other notions of program equivalence follows from
	Proposition~\ref{prop:eq and ent comparison}. Suppose that $\prga$, $\prgu$
	and $\prgv$ are non-disjunctive programs, $\er \in \set{\erone, \ertwo,
	\erthree, \erfour, \erfive}$ and $\uopr$ is a \ter-based rule update
	operator. Take some $\stria_0 \in \modrer{(\prga \uopr \prgu) \uopr \prgv}$.
	We will show that $\stria_0$ either belongs to $\modrer{(\prga \uopr \prgv)
	\uopr \prgu}$ or $\stria_0 = \tris$. Note that $\modrer{(\prga \uopr \prgu)
	\uopr \prgv}$ coincides with
	$
		\Set{
			\Aug{\stria, \modrer{\prgv}}
			|
			\stria \in \modrer{\prga \uopr \prgu}
		}
		\cup
		\modrer{\prgv}
	$.
	We consider three cases:
	\begin{textenum}[a)]
		\item Suppose that $\stria_0 \in \modrer{\prgv}$. Then $\stria_0 \in
			\modrer{\prga \uopr \prgv}$ and since $\modrer{(\prga \uopr \prgv) \uopr
			\prgu}$ coincides with
			$
				\Set{
					\aug{\stria, \modrer{\prgu}}
					|
					\stria \in \modrer{\prga \uopr \prgv}
				}
				\cup
				\modrer{\prgu}
			$,
			it must contain the set $\aug{\stria_0, \modrer{\prgu}}$. Furthermore,
			since $\prgu$ and $\prgv$ are over disjoint alphabets, it follows from
			Lemma~\ref{lemma:semantic properties:non-interference:1} that
			$\aug{\stria_0, \modrer{\prgu}}$ is either $\stria_0$ or $\tris$, as
			desired.

		\item Suppose that $\stria_0 = \aug{\stria, \modrer{\prgv}}$ and $\stria
			\in \modrer{\prgu}$. As in the previous case, since $\prgu$ and $\prgv$
			are over disjoint alphabets, it follows from Lemma~\ref{lemma:semantic
			properties:non-interference:1} that $\aug{\stria, \modrer{\prgv}}$ is
			either $\stria$ or $\tris$. If the former is true, then $\stria_0 \in
			\modrer{\prgu}$, so $\stria_0 \in \modrer{(\prga \uopr \prgv) \uopr
			\prgu}$.

		\item Suppose that $\stria_0 = \Aug{\Aug{\stria, \modrer{\prgu}},
			\modrer{\prgv}}$ for some $\stria \in \modrer{\prga}$. If $\er \in
			\set{\ertwo, \erthree}$, then it follows directly from
			Proposition~\ref{prop:exception independence rule:delta} that
			\[
				\stria_0
				=
				\Aug{\stria, \modrer{\prgu} \cup \modrer{\prgv}} \\
				=
				\Aug{\Aug{\stria, \modrer{\prgv}}, \modrer{\prgu}}
				\enspace.
			\]
			Consequently, $\stria_0 \in \modrer{(\prga \uopr \prgv) \uopr \prgu}$.

			If $\er \in \set{\erfour, \erfive}$, then either $\stria_0 = \tris$ or
			$\stria_0 = \Augtwo{\Augtwo{\stria, \modrer{\prgu}}, \modrer{\prgv}}$.
			The rest follows by the previous paragraph.

			Finally, if $\er = \erone$, then we consider two subcases:
			\begin{textenum}[(i)]
				\item If $\augone{\stria, \modrer{\prgu}} = \stria$, then $\stria_0 =
					\augone{\stria, \modrer{\prgv}}$, so $\stria_0$ belongs to
					$\modrer{\prga \uopr \prgv}$. Take some $\strib \in \modrer{\prgu}$
					and suppose that $\stria_0 \confl{\atm}{\twib} \strib$. Then, by
					Lemma~\ref{lemma:semantic properties:non-interference:2}, either
					$\stria^\twib(\atm) = \val_0$ for some truth value $\val_0$, or
					$\atm$ occurs in $\prgv$. In the former case we obtain a conflict
					with the assumption that $\augone{\stria, \modrer{\prgu}} = \stria$
					while the latter case is in conflict with the assumption that
					$\prgu$ and $\prgv$ are over disjoint alphabets. Thus, no such
					$\strib$ exists and $\augone{\stria_0, \modrer{\prgu}} = \stria_0$.
					Consequently, $\stria_0$ belongs to $\modrer{(\prga \uopr \prgv)
					\uopr \prgu}$.

				\item If $\augone{\stria, \modrer{\prgu}} \neq \stria$, then there is
					some $\strib \in \modrer{\prgu}$ such that $\stria
					\confl{\atma}{\twib} \strib$ for some atom $\atma$ and
					interpretation $\twib$. Thus since $\prgu$ and $\prgv$ are over
					disjoint alphabets, it follows from
					Proposition~\ref{prop:re:conflict in context vs syntax} that
					$\augone{\stria, \modrer{\prgv}} = \stria$, so $\augone{\stria,
					\modrer{\prgu}}$ belongs to $\modrer{(\prga \uopr \prgv) \uopr
					\prgu}$.

					It remains to show that $\stria_0 = \augone{\stria, \modrer{\prgu}}$.
					Put $\stria_1 = \augone{\stria, \modrer{\prgu}}$ and suppose that
					for some $\strib' \in \modrer{\prgv}$, some atom $\atmb$ and some
					interpretation $\twic$, $\stria_1 \confl{\twic}{\atmb} \strib'$.
					Then, by Lemma~\ref{lemma:semantic properties:non-interference:2},
					either $\stria^\twic(\atmb) = \val_0$ for some truth value $\val_0$,
					or $\atmb$ occurs in $\prgu$. In the former case, $\atma = \atmb$ by
					Proposition~\ref{prop:re:conflict in context vs syntax} and we
					obtain a conflict with the fact that $\prgu$ and $\prgv$ are over
					disjoint alphabets. In the latter case, $\atmb$ occurs in both
					$\prgu$ and $\prgv$, so the same conflict follows. Consequently, no
					such $\strib'$ exists and $\stria_0 = \Augone{\Augone{\stria,
					\modrer{\prgu}}, \modrer{\prgv}} = \Augone{\stria_1, \modrer{\prgv}}
					= \Augone{\stria, \modrer{\prgu}}$.
			\end{textenum}
	\end{textenum}
	The proof of the other inclusion is symmetric.
\end{proof}

\begin{remark}
	[\pup{Non-interference} for Disjunctive Programs]
	\label{ex:counterexample for non-interference}
	The following programs form a counterexample to \pup{Non-interference} for
	\tertwo-, \terthree-, \terfour- and \terfive-based operators under
	\SM-equivalence (and thus under all stronger notions of equivalence as
	well):
	\begin{align*}
		\prga: \quad
			\atma; \atmb; \atmc &.
		&\prgu: \quad
			\lpnot \atma &\lpif \lpnot \atmc.
		&\prgv: \quad
			\lpnot \atmb &. \\
		&& \atma &\lpif \atmc. \\
		&& \atmc &\lpif \atma.
	\end{align*}
	As for \terone-based operators, the singleton programs $\prga' = \set{\atma;
	\atmb.}$, $\prgu' = \set{\lpnot \atma \lpif \lpnot \atmc.}$ and $\prgv' =
	\set{\lpnot \atmb \lpif \atmd.}$ form a counterexample to
	\pup{Non-interference} under \SE-equivalence and all stronger notions of
	equivalence.
\end{remark}

\begin{proposition}
	Let $\uopr$ be a \ter-based rule update operator where $\er(\stri, \tris)
	\subseteq \stri$ for all $\stri \subseteq \tris$. Then $\uopr$ satisfies
	\pup{Tautology} and \pup{Immunity to Tautologies} with respect to \RR{},
	\SR{}, \RMR{}, \SMR{}, \RE{}, \SE{} and \SM{}.
\end{proposition}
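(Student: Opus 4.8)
The plan is to reduce both properties to \RR-equivalence and then read them off directly from the explicit form~\eqref{eq:characterisation after update}. By Proposition~\ref{prop:eq and ent comparison} and Figure~\ref{fig:equivalences}, \RR-equivalence is the strongest of the seven notions appearing in the statement, so once I have $\rba \uopr \rbu \eqRR \rba$ (for \pup{Tautology}) and $(\rba \cup \rbb) \uopr (\rbu \cup \rbv) \eqRR \rba \uopr \rbu$ (for \pup{Immunity to Tautologies}), the corresponding equivalences under \SR{}, \RMR{}, \SMR{}, \RE{}, \SE{} and \SM{} follow at once. Two elementary observations will do all the work: (a) since $\uopr$ is \ter-based it is \te-based for the simple exception function \te{} induced by \ter{}, and by definition this function ignores its middle argument, $\e(\stri, \sstria, \sstrib) = \bigcup_{\strib \in \sstrib} \er(\stri, \strib)$; and (b) a tautological rule base $\rb$ satisfies $\modrer{\rb} \subseteq \set{\tris}$, because every element $\gprg \in \rb$ has $\modre{\gprg} = \tris$ (the notions of \SE- and \RE-tautology coincide by Proposition~\ref{prop:re:se-equivalence}, and for a program $\gprg$ all of whose rules are tautological, $\modre{\gprg} = \tris$ by the definition of $\modre{\cdot}$ on programs). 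Recall also that $\modre{\ctau} = \tris$, so adjoining $\ctau$ to a rule base adds exactly the element $\tris$ to its set of sets of \RE-models.

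For \pup{Tautology}, assume $\rbu$ is tautological. Plugging~\eqref{eq:characterisation after update} together with (a) and (b): for each $\gprg \in \rba$ the set $\modre{\gprg} \cup \e(\modre{\gprg}, \modrer{\rba}, \modrer{\rbu})$ equals $\modre{\gprg} \cup \bigcup_{\strib \in \modrer{\rbu}} \er(\modre{\gprg}, \strib)$, and since $\modrer{\rbu} \subseteq \set{\tris}$ this union is either empty or $\er(\modre{\gprg}, \tris)$, which by hypothesis lies inside $\modre{\gprg}$; hence the set collapses to $\modre{\gprg}$. Therefore $\modrer{\rba \uopr \rbu} = \modrer{\rba} \cup \modrer{\rbu}$, which by (b) is either $\modrer{\rba}$ or $\modrer{\rba} \cup \set{\tris}$. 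In either case $\modrer{(\rba \uopr \rbu) \cup \set{\ctau}} = \modrer{\rba} \cup \set{\tris} = \modrer{\rba \cup \set{\ctau}}$, i.e.\ $\rba \uopr \rbu \eqRR \rba$.

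For \pup{Immunity to Tautologies}, assume $\rbb$ and $\rbv$ are tautological. First, $\modrer{\rbu \cup \rbv} = \modrer{\rbu} \cup \modrer{\rbv}$ with $\modrer{\rbv} \subseteq \set{\tris}$ by (b), so for every $\stri$ we get $\stri \cup \e(\stri, \cdot, \modrer{\rbu \cup \rbv}) = \stri \cup \e(\stri, \cdot, \modrer{\rbu})$, because the only possible extra term $\er(\stri, \tris)$ is already contained in $\stri$. Second, for $\gprg \in \rbb$ we have $\modre{\gprg} = \tris$, so its contribution to~\eqref{eq:characterisation after update} is just $\tris$. Substituting these into~\eqref{eq:characterisation after update} for $(\rba \cup \rbb) \uopr (\rbu \cup \rbv)$ yields $\modrer{(\rba \cup \rbb) \uopr (\rbu \cup \rbv)} = \modrer{\rba \uopr \rbu} \cup S$ where $S \subseteq \set{\tris}$ collects the $\tris$ possibly contributed by $\rbb$ and by $\modrer{\rbv}$. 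Adjoining $\ctau$ to both sides absorbs $S$ into the added $\set{\tris}$, giving $\modrer{((\rba \cup \rbb) \uopr (\rbu \cup \rbv)) \cup \set{\ctau}} = \modrer{(\rba \uopr \rbu) \cup \set{\ctau}}$, i.e.\ $(\rba \cup \rbb) \uopr (\rbu \cup \rbv) \eqRR \rba \uopr \rbu$.

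There is no real obstacle here; the proof is bookkeeping against~\eqref{eq:characterisation after update}. The one point worth stressing is that these properties hold only \emph{up to} the presence of a tautological rule: a tautological element of the update (or the auxiliary $\ctau$ in $\rba \cup \set{\ctau}$) genuinely contributes the element $\tris$ to $\modrer{\cdot}$, so the equivalence cannot be strengthened to literal equality of $\modrer{\cdot}$ — which is precisely why the statement is phrased in terms of \RR-, \RMR-, \dots, \SM-equivalence, all of which are insensitive to $\tris$, and why routing the argument through the $\ctau$-padding built into \RR-equivalence is the cleanest path. I would also verify the degenerate cases ($\rba$, $\rbb$, $\rbu$ or $\rbv$ empty, and whether an empty rule base is deemed tautological), but these are handled uniformly by the same $\ctau$-padding step.
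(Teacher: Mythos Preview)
Your proof is correct and follows exactly the approach the paper takes: establish the result for \RR-equivalence by direct computation against~\eqref{eq:characterisation after update}, then invoke Proposition~\ref{prop:eq and ent comparison} to cascade down to the weaker equivalences. The paper's own proof merely says the \RR{} case ``can be verified in a straight-forward manner'' and cites the same proposition for the rest; you have simply spelled out that straightforward verification, including the key uses of simplicity of~\te{} (ignoring the middle argument) and the hypothesis $\er(\stri,\tris)\subseteq\stri$ to collapse the tautological contributions.
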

\begin{proof}
	For \RR-equivalence this can be verified in a straight-forward manner. For
	the remaining notions of program equivalence this follows from
	Proposition~\ref{prop:eq and ent comparison}.
\end{proof}

\begin{proposition}
	Let $\uopr$ be a \ter-based rule update operator. Then $\uopr$ satisfies
	\pup{Idempotence} with respect to \RMR{}, \SMR{}, \RE{}, \SE{} and \SM{}.
	Moreover, if $\uopr$ is \terfour- or \terfive-based, then it also satisfies
	\pup{Idempotence} with respect to \RR{} and \SR{}.
\end{proposition}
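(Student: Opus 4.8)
The plan is to compute the \RE{}-characterisation of $\rba \uopr \rba$ directly and read off both parts. Since $\uopr$ is \ter-based for a simple exception function, Definition~\ref{def:exception-based rule update operator} applied with the update equal to the original rule base gives
\[
	\modrer{\rba \uopr \rba} = \mathcal{A} \cup \modrer{\rba}
	\enspace, \qquad
	\mathcal{A} := \Set{\aug{\stri, \modrer{\rba}} | \stri \in \modrer{\rba}}
	\enspace,
\]
where $\aug{\stri, \modrer{\rba}} = \stri \cup \bigcup_{\strib \in \modrer{\rba}} \er(\stri, \strib)$; adjoining $\ctau$ (with $\modre{\ctau} = \tris$) gives $\modrer{(\rba \uopr \rba)^\ctau} = \mathcal{A} \cup \modrer{\rba^\ctau}$ and $\modrer{\rba^\ctau} = \modrer{\rba} \cup \set{\tris}$. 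Two elementary facts about $\aug{\cdot, \cdot}$ drive everything: (i) $\aug{\stri, \sstri} \supseteq \stri$ for every $\sstri$; and (ii) if $\er \in \set{\erfour, \erfive}$ then $\er(\stri, \stri) = \tris$, so that $\aug{\stri, \modrer{\rba}} = \tris$ for every $\stri \in \modrer{\rba}$ (the summand indexed by $\strib = \stri$ is already all of $\tris$).

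For the first claim I argue with respect to \RMR{} and then invoke Proposition~\ref{prop:eq and ent comparison}: $\eqRMR$ is stronger than $\eqSMR$, $\eqRE$, $\eqSE$ and $\eqSM$ (these comparisons carry over verbatim to rule bases), so $\rba \uopr \rba \eqRMR \rba$ establishes \pup{Idempotence} for all five notions at once. To see that $\min \modrer{(\rba \uopr \rba)^\ctau} = \min \modrer{\rba^\ctau}$, I will first prove the small set-theoretic observation that enlarging a family of sets by sets each of which is a superset of some already-present member leaves the collection of subset-minimal elements unchanged. By (i) every member of $\mathcal{A}$ is a superset of some $\stri \in \modrer{\rba} \subseteq \modrer{\rba^\ctau}$, so the observation applies to $\mathcal{A} \cup \modrer{\rba^\ctau}$ and yields the equality. (The same remark gives directly $\bigcap \modrer{(\rba \uopr \rba)^\ctau} = \bigcap \modrer{\rba^\ctau}$, an alternative route to the \RE{}/\SE{}/\SM{} cases.)

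For the ``moreover'' part, let $\uopr$ be \terfour- or \terfive-based. Fact (ii) gives $\mathcal{A} \subseteq \set{\tris}$, and since $\tris \in \modrer{\rba^\ctau}$ this forces $\modrer{(\rba \uopr \rba)^\ctau} = \mathcal{A} \cup \modrer{\rba^\ctau} = \modrer{\rba^\ctau}$, i.e.\ $\rba \uopr \rba \eqRR \rba$; \pup{Idempotence} with respect to \SR{} then follows because $\eqRR$ is stronger than $\eqSR$ (Proposition~\ref{prop:eq and ent comparison}).

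I do not anticipate a real obstacle: the whole argument reduces to facts (i), (ii) and the minimality observation. The only points requiring a little care are the degenerate case $\modrer{\rba} = \emptyset$ (then $\mathcal{A} = \emptyset$ and all the claimed identities are immediate) and the remark that, because $\modrer{\cdot}$ identifies rules and programs with equal sets of \RE{}-models, $\stri$ itself genuinely occurs as an index $\strib$ in the union defining $\aug{\stri, \modrer{\rba}}$ — which is exactly what makes fact (ii) applicable.
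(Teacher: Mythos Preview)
Your proposal is correct and follows essentially the same approach as the paper: both compute $\modrer{\rba\uopr\rba}=\mathcal{A}\cup\modrer{\rba}$, use $\stri\subseteq\aug{\stri,\modrer{\rba}}$ to conclude that the added elements of $\mathcal{A}$ cannot create new minimal elements (giving \RMR-equivalence and hence the weaker ones via Proposition~\ref{prop:eq and ent comparison}), and for $\erfour/\erfive$ use $\er(\stri,\stri)=\tris$ to get $\mathcal{A}\subseteq\set{\tris}$ and hence \RR-equivalence. The only difference is presentational: you isolate the minimality step as a clean set-theoretic observation (adding supersets of existing members does not change the minimal elements), whereas the paper carries out the same element-chasing inline in both directions.
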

\begin{proof}
	\pup{Idempotence} states the following: $\rb \uopr \rb \equiv \rb$. We will
	show that this is true under \RMR-equivalence which, together with
	Proposition~\ref{prop:eq and ent comparison}, implies that it holds under
	\SMR-, \RE-, \SE- and \SM-equivalence.

	First take some $\stria \in \min \modrer{(\rb \uopr \rb) \cup \set{\ctau}}$.
	If $\stria = \tris$, then $\rb \uopr \rb$ is tautological and since
	$\modrer{\rb}$ is a subset of $\modrer{\rb \uopr \rb}$, $\rb$ is itself
	tautological. Thus, $\tris$ also belongs to $\min \modrer{\rb \cup
	\set{\ctau}}$. In the principal case, either $\stria \in \modrer{\rb}$, or
	$\stria = \aug{\stria_0, \modrer{\rb}}$ for some $\stria_0 \in
	\modrer{\rb}$. In the latter case we have that $\stria_0$ is a subset of
	$\stria$ and since $\stria_0$ belongs to $\modrer{\rb \uopr \rb}$, by the
	minimality of $\stria$ we obtain that $\stria = \stria_0$, so $\stria$
	belongs to $\modrer{\rb}$. Now it follows that $\stria$ is minimal
	$\modrer{\rb}$ because $\modrer{\rb}$ is a subset of $\modrer{\rb \uopr
	\rb}$ and $\stria$ is minimal in the latter set.

	Now take some $\stria \in \min \modrer{\rb \cup \set{\ctau}}$. If $\stria =
	\tris$, then $\rb$ is tautological and it follows by the properties of
	$\uopr$ that $\rb \uopr \rb$ is also tautological. Thus, $\tris$ also
	belongs to $\modrer{(\rb \uopr \rb) \cup \set{\ctau}}$. In the principal
	case, $\stria \in \modrer{\rb}$. Take some $\strib \in \modrer{\rb \uopr
	\rb}$ such that $\strib$ is a subset of $\stria$. If $\strib$ belongs to
	$\modrer{\rb}$, then it follows by minimality of $\stria$ that $\stria =
	\strib$. On the other hand, if $\strib$ is of the form $\aug{\strib_0,
	\modrer{\rb}}$ for some $\strib_0$ from $\modrer{\rb}$, then $\strib_0
	\subseteq \strib \subseteq \stria$, so by the minimality of $\stria$,
	$\strib_0 = \strib = \stria$. Thus, in either case, $\stria = \strib$,
	which proves that it is minimal within $\modrer{\rb \uopr \rb}$.

	Now consider some \terfour- or \terfive-based rule update operator $\uopr$.
	Then, $\modrer{\rb \uopr \rb} \subseteq \modrer{\rb} \cup \set{\tris}$, so
	obviously $\modrer{(\rb \uopr \rb) \cup \set{\ctau}} = \modrer{\rb \cup
	\set{\ctau}}$.
\end{proof}


\begin{remark}
	[\pup{Idempotence} under $\eqSR$ and $\eqRR$]
	\label{ex:counterexample to idempotence}
	The rule base $\rb = \set{(\atma \lpif \lpnot \atmb.), (\lpnot \atma \lpif
	\atmc.)}$ forms a counterexample to \pup{Idempotence} for \terone-, \tertwo-
	and \terthree-based operators with respect to \SR{} and \RR{}.

\end{remark}

\begin{lemma}
	\label{lemma:semantic properties:absorption}
	Let $\stria \subseteq \tris$ be \RE-rule-expressible, $\sstri$ a set of
	\RE-rule-expressible sets of three-valued interpretations. The following
	holds:
	\begin{textenum}[(i)]
		\item If $\er \in \set{\ertwo, \erthree}$, then $\aug{\aug{\stria,
			\sstri}, \sstri} = \aug{\stria, \sstri}$.

		\item If $\er \in \set{\erfour, \erfive}$, then either $\aug{\aug{\stria,
			\sstri}, \sstri} = \aug{\stria, \sstri}$, or both $\aug{\aug{\stria,
			\sstri}, \sstri} = \tris$ and $\aug{\stria, \sstri} \in \sstri$.
	\end{textenum}
\end{lemma}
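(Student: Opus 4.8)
The plan is to obtain part~(i) as an immediate instance of Proposition~\ref{prop:exception independence rule:delta} and then reduce part~(ii) to it by a short case analysis, using two facts about $\erfour$ and $\erfive$: each of them agrees with $\ertwo$ (respectively $\erthree$) whenever its two arguments differ and returns $\tris$ whenever they coincide; and $\tris$ is absorbing, i.e.\ $\aug{\tris, \sstrib} = \tris$ for every $\sstrib$, since every $\er$ yields a subset of $\tris$ and $\aug{\cdot, \cdot}$ only ever adds elements. For part~(i), since $\stria$ is \RE-rule-expressible and $\sstri$ is a set of \RE-rule-expressible sets, Proposition~\ref{prop:exception independence rule:delta} applies with $\sstria = \sstrib = \sstri$, giving $\aug{\aug{\stria, \sstri}, \sstri} = \aug{\stria, \sstri \cup \sstri} = \aug{\stria, \sstri}$.

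For part~(ii) I would carry out the case $\er = \erfour$ in detail, the case $\er = \erfive$ being word-for-word identical with $\erthree$ in place of $\ertwo$. First suppose $\stria \in \sstri$. Then the term $\erfour(\stria, \stria) = \tris$ occurs in the union defining $\augfour{\stria, \sstri}$, so $\augfour{\stria, \sstri} = \tris$, and hence $\augfour{\augfour{\stria, \sstri}, \sstri} = \augfour{\tris, \sstri} = \tris$; the first alternative holds. Now suppose $\stria \notin \sstri$. Then for every $\strib \in \sstri$ the arguments of $\erfour(\stria, \strib)$ differ, so this term equals $\ertwo(\stria, \strib)$, whence $\augfour{\stria, \sstri} = \augtwo{\stria, \sstri}$; set $Z := \augfour{\stria, \sstri}$. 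If $Z \in \sstri$, then $\erfour(Z, Z) = \tris$ occurs in $\augfour{Z, \sstri}$, so $\augfour{\augfour{\stria, \sstri}, \sstri} = \tris$ while $\augfour{\stria, \sstri} = Z \in \sstri$, which is the second alternative. If $Z \notin \sstri$, then likewise each $\erfour(Z, \strib)$ with $\strib \in \sstri$ reduces to $\ertwo(Z, \strib)$, so $\augfour{Z, \sstri} = \augtwo{Z, \sstri} = \augtwo{\augtwo{\stria, \sstri}, \sstri}$, which by part~(i) (with $\er = \ertwo$) equals $\augtwo{\stria, \sstri} = Z = \augfour{\stria, \sstri}$; the first alternative holds.

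I do not expect a genuine difficulty: part~(i) is a corollary of an already established proposition, and part~(ii) is a routine split. The one point requiring care is that part~(i) — via Proposition~\ref{prop:exception independence rule:delta} — needs a rule-expressible first argument, so in the last subcase it must be applied to $\stria$ itself, which is rule-expressible by hypothesis, rather than to $Z = \augfour{\stria, \sstri}$, for which only program-expressibility (Proposition~\ref{prop:re:expressibility}) is guaranteed; relatedly, the reductions of $\augfour{\cdot}$ to $\augtwo{\cdot}$ are valid only after membership of the relevant set in $\sstri$ has been excluded, so the case split has to be organised exactly in the order above.
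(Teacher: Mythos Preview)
Your reduction of part~(i) to Proposition~\ref{prop:exception independence rule:delta} does not go through as stated: that proposition assumes $\stria$ is \RE-rule-expressible \emph{by a non-disjunctive rule}, whereas the present lemma only assumes $\stria$ is \RE-rule-expressible. The non-disjunctive hypothesis is genuinely used in the proof of Proposition~\ref{prop:exception independence rule:delta} (to force $\atmb = \atma$ via Proposition~\ref{prop:re:rule in context}), so you cannot simply drop it. Consequently your part~(i) is a gap, and since your part~(ii) relies on part~(i) in the final subcase, that gap propagates.

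The paper's proof of part~(i) avoids this by working directly with Lemma~\ref{lemma:aug delta value}, which only needs plain \RE-rule-expressibility. The argument is: set $\stria' = \aug{\stria, \sstri}$ and show that no $\strib \in \sstri$ can satisfy $\stria' \confl{\twib}{\atm} \strib$. If one did, then $\stria'^\twib(\atm)$ would be defined, so by Lemma~\ref{lemma:aug delta value} also $\stria^\twib(\atm)$ would be defined with the same value, giving $\stria \confl{\twib}{\atm} \strib$; but then the definition of $\ertwo$/$\erthree$ forces both $\tpl{\twib \setminus \set{\atm}, \twib \setminus \set{\atm}}$ and $\tpl{\twib \cup \set{\atm}, \twib \cup \set{\atm}}$ into $\stria'$, contradicting that $\stria'^\twib(\atm)$ is defined. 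Hence $\er(\stria', \strib) = \emptyset$ for every $\strib \in \sstri$, and $\aug{\stria', \sstri} = \stria'$. Your case analysis for part~(ii) is fine once part~(i) is established this way.
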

\begin{proof}
	Let $\er \in \set{\ertwo, \erthree}$, put $\stria' = \aug{\stria, \sstri}$
	and take some $\strib \in \sstri$ such that $\stria' \confl{\atm}{\twib}
	\strib$ for some atom $\atm$ and some interpretation $\twib$. Then
	$\stria'^\twib(\atm) = \val_0$ for some truth value $\val_0$, so it follows
	from Lemma~\ref{lemma:aug delta value} that $\stria^\twib(\atm) = \val_0$.
	But this implies $\stria \confl{\atm}{\twib} \strib$, so both $\tpl{\twib
	\setminus \set{\atm}, \twib \setminus \set{\atm}}$ and $\tpl{\twib \cup
	\set{\atm}, \twib \cup \set{\atm}}$ belong to $\stria'$, a conflict with
	the assumption that $\stria'^\twib(\atm)$ is defined. As a consequence,
	no such $\strib \in \sstri$ exists, so $\aug{\stria', \sstri} =
	\stria' = \aug{\stria, \sstri}$.

	On the other hand, if $\er \in \set{\erfour, \erfive}$, then we can observe
	that either the previous case applies, or both $\aug{\aug{\stria, \sstri},
	\sstri} = \tris$ and $\aug{\stria, \sstri} \in \sstri$.
\end{proof}

\begin{proposition}
	\label{prop:semantic properties:absorption}
	Let $\uopr$ be a \tertwo-, \terthree-, \terfour- or \terfive-based rule
	update operator. Then $\uopr$ satisfies \pup{Absorption} with respect to
	\RMR{}, \SMR{}, \RE{}, \SE{} and \SM{}. Moreover, if $\uopr$ is \terfour-
	or \terfive-based, then it satisfies \pup{Absorption} with respect to \RR{}
	and \SR{}.
\end{proposition}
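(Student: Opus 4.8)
The plan is to reduce the statement to \RMR- and \RR-equivalence and then unfold both sides through the defining equation of exception-based operators. By Proposition~\ref{prop:eq and ent comparison}, \RMR-equivalence entails \SMR-, \RE-, \SE- and \SM-equivalence, while \RR-equivalence additionally entails \SR-equivalence; hence it suffices to establish \pup{Absorption} with respect to \RMR{} for each of \tertwo{}, \terthree{}, \terfour{} and \terfive{}, and with respect to \RR{} for \terfour{} and \terfive{} only.

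For the unfolding, abbreviate $\sstria = \modrer{\rba}$, $\sstrib = \modrer{\rbu}$, and recall $\aug{\stri, \sstri} = \stri \cup \bigcup_{\strib \in \sstri} \er(\stri, \strib)$. By the characterisation \eqref{eq:characterisation after update} of a \ter-based operator, $M := \modrer{\rba \uopr \rbu} = \set{\aug{\stri, \sstrib} \mid \stri \in \sstria} \cup \sstrib$, and a second application of the same equation (with $\rba \uopr \rbu$ in the role of the original program) gives $M' := \modrer{(\rba \uopr \rbu) \uopr \rbu} = \set{\aug{\strib, \sstrib} \mid \strib \in M} \cup \sstrib$. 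Two facts do the work. First, $\stri \subseteq \aug{\stri, \sstri}$ always and $\sstrib \subseteq M$, so every member of $M'$ is a superset of some member of $M$. Second, for every $\stri' = \aug{\stri, \sstrib} \in M$ with $\stri \in \sstria$, Lemma~\ref{lemma:semantic properties:absorption} applied to the pair $\stri$, $\sstrib$ tells us that $\aug{\stri', \sstrib} = \aug{\aug{\stri, \sstrib}, \sstrib}$ is equal to $\stri'$ — or, for \terfour{} and \terfive{} only, equal to $\tris$ while at the same time $\stri' \in \sstrib$. In either case $\stri' \in M'$, and since also $\sstrib \subseteq M'$, we obtain $M \subseteq M'$.

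Combining $M \subseteq M'$ with the first fact gives $\min M = \min M'$: a minimal element of $M'$ contains some element of $M \subseteq M'$ and hence coincides with it, and it cannot strictly contain a further element of $M$ (that element would lie in $M'$ too), so it is minimal in $M$; conversely, a minimal element of $M$ already lies in $M'$, and nothing in $M'$ can sit strictly below it, since by the first fact such an element would produce an element of $M$ strictly below it. This is \RMR-equivalence, whence the \SMR-, \RE-, \SE- and \SM-cases follow by Proposition~\ref{prop:eq and ent comparison}. For the \RR-claim I would further use that $\er(\strib, \strib) = \tris$ for \terfour{} and \terfive{}, so $\aug{\strib, \sstrib} = \tris$ whenever $\strib \in \sstrib$, and that Lemma~\ref{lemma:semantic properties:absorption}(ii) forces $\aug{\strib, \sstrib} \in \set{\strib, \tris}$ when $\strib = \aug{\stri, \sstrib}$ with $\stri \in \sstria$; hence $M' \subseteq M \cup \set{\tris}$, which together with $M \subseteq M'$ yields $\modrer{(\rba \uopr \rbu)^\ctau} = \modrer{((\rba \uopr \rbu) \uopr \rbu)^\ctau}$, i.e.\ \RR-equivalence, and then \SR-equivalence again by Proposition~\ref{prop:eq and ent comparison}.

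The hard part is making Lemma~\ref{lemma:semantic properties:absorption} applicable, since it assumes its inputs to be \RE-rule-expressible: the augmented set $\aug{\stri, \sstrib}$ need not be the set of \RE-models of a single rule even when $\stri$ is, which is exactly why the lemma has to be invoked on $\stri$ itself in the second step above rather than on $\stri'$; and when $\rba$ or $\rbu$ carries a genuine program as one of its atomic elements the same expressibility hypothesis has to be recovered by arguing through the rules of that program. The remaining delicate point is the \terfour{}/\terfive{} collapse to $\tris$: one has to check that replacing a twice-updated rule by the canonical tautology does not disturb $\min \modrer{\cdot}$, which is precisely why $\ctau$ is adjoined to both sides before comparison and why the inclusions above are stated modulo $\tris$.
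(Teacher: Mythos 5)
Your proof is correct and takes essentially the same route as the paper's: reduce to \RMR{} (and to \RR{} for \terfour{}/\terfive{}) via Proposition~\ref{prop:eq and ent comparison}, then combine Lemma~\ref{lemma:semantic properties:absorption} with the observation that every element of $\modrer{((\rba \uopr \rbu) \uopr \rbu) \cup \set{\ctau}}$ is a superset of some element of $\modrer{(\rba \uopr \rbu) \cup \set{\ctau}}$ to equate the minimal elements. The \RE-rule-expressibility caveat you flag at the end applies equally to the paper's own one-line invocation of that lemma, so it does not distinguish your argument from theirs.
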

\begin{proof}
	\pup{Absorption} states the following: $(\rba \uopr \rbu) \uopr \rbu \equiv
	\rba \uopr \rbu$. We will show that this is true under \RMR-equivalence
	which, together with Proposition~\ref{prop:eq and ent comparison}, implies
	that it holds under \SMR-, \RE-, \SE- and \SM-equivalence.

	So suppose that $\uopr$ is \tertwo-, \terthree-, \terfour- or
	\terfive-based. By Lemma~\ref{lemma:semantic
	properties:absorption}, $\modrer{((\rba \uopr \rbu) \uopr \rbu) \cup
	\set{\ctau}}$ is a superset of $\modrer{(\rba \uopr \rbu) \cup
	\set{\ctau}}$. Thus, whenever $\stria$ is minimal in $\modrer{((\rba
	\uopr \rbu) \uopr \rbu) \cup \set{\ctau}}$, it is also minimal in
	$\modrer{(\rba \uopr \rbu) \cup \set{\ctau}}$. Furthermore, the extra
	elements of $\modrer{((\rba \uopr \rbu) \uopr \rbu) \cup \set{\ctau}}$
	are never smaller than the elements of $\modrer{(\rba \uopr \rbu) \cup
	\set{\ctau}}$ because they are of the form $\aug{\strib,
	\modrer{\rbu}}$ for some $\strib \in \modrer{\rbu} \subseteq
	\modrer{(\rba \uopr \rbu) \cup \set{\ctau}}$. Thus, whenever
	$\stria$ is minimal in $\modrer{(\rba \uopr \rbu) \cup
	\set{\ctau}}$, it must also be minimal in $\modrer{((\rba \uopr
	\rbu) \uopr \rbu) \cup \set{\ctau}}$.

	Furthermore, if $\uopr$ is \terfour- or \terfive-based, then it additionally
	holds that $\modrer{((\rba \uopr \rbu) \uopr \rbu) \cup \set{\ctau}} =
	\modrer{(\rba \uopr \rbu) \cup \set{\ctau}}$.
\end{proof}

\begin{proposition}
	Let $\uopr$ be a \tertwo-, \terthree-, \terfour- or \terfive-based rule
	update operator. Then $\uopr$ satisfies \pup{Augmentation} for
	non-disjunctive programs with respect to \RMR{}, \SMR{}, \RE{}, \SE{} and
	\SM{}. Moreover, if $\uopr$ is \terfour- or \terfive-based, then it
	satisfies \pup{Augmentation} for non-disjunctive programs with respect to
	\RR{} and \SR{}.
\end{proposition}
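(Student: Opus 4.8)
The plan is to reduce the statement, as in the proof of \pup{Absorption} just given, to the strongest applicable notion of equivalence and then let the rest follow from Proposition~\ref{prop:eq and ent comparison}: for \tertwo{}- and \terthree{}-based operators it suffices to prove the equivalence under \RMR{} (which is stronger than \SMR{}, \RE{}, \SE{} and \SM{}), and for \terfour{}- and \terfive{}-based operators to prove it under \RR{} (which is stronger than \SR{}). So fix non-disjunctive programs $\rba$, $\rbu$, $\rbv$ with $\rbu \subseteq \rbv$; then $\modrer{\rbu} \subseteq \modrer{\rbv}$, and by Proposition~\ref{prop:aug vs uope} (equivalently, by equation~\eqref{eq:characterisation after update}) we have $\modrer{\rba \uopr \rbv} = \Set{\aug{\stri, \modrer{\rbv}} | \stri \in \modrer{\rba}} \cup \modrer{\rbv}$ and $\modrer{(\rba \uopr \rbu) \uopr \rbv} = \Set{\aug{\aug{\stri, \modrer{\rbu}}, \modrer{\rbv}} | \stri \in \modrer{\rba}} \cup \Set{\aug{\strib, \modrer{\rbv}} | \strib \in \modrer{\rbu}} \cup \modrer{\rbv}$. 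Since $\rba$ is a non-disjunctive program, each member of $\modrer{\rba}$ is \RE-rule-expressible by a non-disjunctive rule, so Proposition~\ref{prop:exception independence rule:delta} applies with it in the first argument slot.

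For the \tertwo{}/\terthree{} case I would apply Exception Independence directly: for $\er \in \set{\ertwo, \erthree}$ and $\stri \in \modrer{\rba}$, Proposition~\ref{prop:exception independence rule:delta} gives $\aug{\aug{\stri, \modrer{\rbu}}, \modrer{\rbv}} = \aug{\stri, \modrer{\rbu} \cup \modrer{\rbv}} = \aug{\stri, \modrer{\rbv}}$, using $\modrer{\rbu} \subseteq \modrer{\rbv}$. Hence the ``$\stri \in \modrer{\rba}$'' parts of the two characterisations coincide, and the only possible surplus members of $\modrer{(\rba \uopr \rbu) \uopr \rbv}$ are the sets $\aug{\strib, \modrer{\rbv}}$ with $\strib \in \modrer{\rbu}$. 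Each such set contains $\strib$, and $\strib \in \modrer{\rbu} \subseteq \modrer{\rbv} \subseteq \modrer{\rba \uopr \rbv}$; consequently every surplus set is a superset of an element already present in $\modrer{\rba \uopr \rbv}$, so it is never $\subseteq$-minimal and adjoining the surplus family does not alter the set of subset-minimal elements. Therefore $\min \modrer{((\rba \uopr \rbu) \uopr \rbv)^\ctau} = \min \modrer{(\rba \uopr \rbv)^\ctau}$, i.e.\ the two rule bases are \RMR-equivalent, and the equivalence under \SMR{}, \RE{}, \SE{} and \SM{} follows. (This also explains why \RR{} is \emph{not} claimed here: $\aug{\strib, \modrer{\rbv}}$ can be a genuinely new set when it strictly contains $\strib$.)

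For \terfour{}/\terfive{} one needs \emph{exact} equality of the characterisations modulo $\ctau$, so the bare identity $\aug{\aug{\stri, \modrer{\rbu}}, \modrer{\rbv}} = \aug{\stri, \modrer{\rbv}}$ is not enough, since the inner augmentation may already produce $\tris$ and $\erfour$ (resp.\ $\erfive$) differs from $\ertwo$ (resp.\ $\erthree$) exactly on a diagonal argument. The plan here is to establish a variant of Lemma~\ref{lemma:semantic properties:absorption} adapted to the asymmetric situation $\modrer{\rbu} \subseteq \modrer{\rbv}$, describing $\aug{\aug{\stri, \modrer{\rbu}}, \modrer{\rbv}}$ for rule-expressible $\stri$: it should equal $\aug{\stri, \modrer{\rbv}}$, or $\tris$, or $\aug{\stri, \modrer{\rbu}}$ while $\aug{\stri, \modrer{\rbu}} \in \modrer{\rbv}$. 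The last two outcomes are harmless modulo $\ctau$ -- $\tris$ always lies in $\modrer{(\rba \uopr \rbv)^\ctau}$, and in the third case the value is already among $\modrer{\rbv}$. Dually, for \terfour{}/\terfive{} the surplus sets collapse: if $\strib \in \modrer{\rbu}$ then $\strib \in \modrer{\rbv}$, so $\erfour(\strib, \strib) = \tris$ (resp.\ $\erfive(\strib, \strib) = \tris$) and hence $\aug{\strib, \modrer{\rbv}} = \tris$. Assembling these facts gives $\modrer{((\rba \uopr \rbu) \uopr \rbv)^\ctau} = \modrer{(\rba \uopr \rbv)^\ctau}$, i.e.\ \RR-equivalence, and then \SR-equivalence by Proposition~\ref{prop:eq and ent comparison}.

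The step I expect to be the main obstacle is exactly this \terfour{}/\terfive{} analogue of Lemma~\ref{lemma:semantic properties:absorption}, specifically controlling $\aug{\aug{\stri, \modrer{\rbu}}, \modrer{\rbv}}$ when the intermediate weakened set $\aug{\stri, \modrer{\rbu}}$ is neither $\stri$ nor $\tris$. Because that intermediate set need no longer be \RE-rule-expressible, Exception Independence cannot be invoked with it in the first slot; the way around is to keep the original rule-expressible $\stri$ there, rewrite $\aug{\aug{\stri, \modrer{\rbu}}, \modrer{\rbv}}$ through Proposition~\ref{prop:exception independence rule:delta} (in the $\ertwo{}$/$\erthree{}$ sense), and then reconcile this with the $\erfour{}$/$\erfive{}$ value of $\aug{\stri, \modrer{\rbv}}$; the reconciliation branches on whether $\stri$, respectively $\aug{\stri, \modrer{\rbu}}$, already belongs to $\modrer{\rbv}$, and each branch is settled using Propositions~\ref{prop:re:rule in context} and \ref{prop:re:conflict in context vs syntax}, which describe how conflicts between rule-expressible sets of \RE-models behave under iterated augmentation. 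The restriction to non-disjunctive programs (the footnote $\dagger$) enters precisely through these two propositions, which presuppose non-disjunctive rules; for disjunctive programs the property is expected to fail, paralleling the disjunctive failures of \pup{Non-interference} recorded in Remark~\ref{ex:counterexample for non-interference}.
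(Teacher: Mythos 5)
Your proposal is correct and takes essentially the same route as the paper's proof: Exception Independence (Proposition~\ref{prop:exception independence rule:delta}) combined with $\modrer{\rbu} \cup \modrer{\rbv} = \modrer{\rbv}$ handles the inherited rules, the surplus sets $\aug{\strib, \modrer{\rbv}}$ with $\strib \in \modrer{\rbu}$ are dismissed as non-minimal supersets of members of $\modrer{\rbv}$ for the \RMR{} case, and under \terfour{}/\terfive{} they collapse to $\tris$, giving \RR-equivalence. The only divergence is at the diagonal step you rightly flag as delicate: where you propose an asymmetric variant of Lemma~\ref{lemma:semantic properties:absorption} to control $\Aug{\Aug{\stri, \modrer{\rbu}}, \modrer{\rbv}}$ for $\erfour$/$\erfive$, the paper instead invokes Lemma~\ref{lemma:ertwo vs erfour mod sm:1} to pass between the \terfour{}/\terfive{} and \tertwo{}/\terthree{} characterisations modulo $\tris$ -- an equally terse bridge for the same issue.
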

\begin{proof}
	The proof for \RMR-equivalence follows from Proposition~\ref{prop:exception
	independence rule:delta} and Lemma~\ref{lemma:ertwo vs erfour mod sm:1} and
	from the fact that the extra elements of $\modrer{((\rba \uopr \rbu) \uopr
	\rbv) \cup \set{\ctau}}$, as compared to $\modrer{(\rba \uopr \rbv) \cup
	\set{\ctau}}$, are non-minimal in the latter set. If $\uopr$ is
	\terfour- or \terfive-based, then there are no extra elements and the
	rest follows from Proposition~\ref{prop:exception independence
	rule:delta} and Lemma~\ref{lemma:ertwo vs erfour mod sm:1}.
\end{proof}

\begin{remark}
	[\pup{Absorption} and \pup{Augmentation} violated by \terone{}]
	\label{ex:counterexample to absorption}
	The rule bases $\rba = \set{\atma.}$ and $\rbu = \rbv = \set{\lpnot \atma
	\lpif \lpnot \atmb., \atmb.}$ form counterexamples to \pup{Absorption} and
	\pup{Augmentation} for \terone-based operators with respect \SM-equivalence
	and any stronger notion of equivalence.
\end{remark}

\begin{remark}
	[\pup{Augmentation} for Disjunctive Programs]
	The following programs form a counterexample to \pup{Augmentation} for
	\tertwo-, \terthree-, \terfour- and \terfive-based operators under
	\SM-equivalence (and thus under all stronger notions of equivalence as
	well):
	\begin{align*}
		\prga: \quad
			\atma; \atmb; \atmc &.
		&\prgu: \quad
			\lpnot \atma &\lpif \lpnot \atmc.
		&\prgv: \quad
			\lpnot \atma &\lpif \lpnot \atmc. \\
		&& \atma &\lpif \atmc.
		& \atma &\lpif \atmc. \\
		&& \atmc &\lpif \atma.
		& \atmc &\lpif \atma. \\
		&&&& \lpnot \atmb &.
	\end{align*}
\end{remark}

\begin{remark}
	[\pup{Associativity} and \terone{}, \tertwo{}, \terthree{}, \terfour{},
		\terfive{}]
	\label{ex:counterexample to associativity} 
	The rule $\rla = (\lpnot \atma.)$, when updated by $\rlb = (\atma \lpif
	\atmb.)$, must be weakened, anticipating the potential conflict. In the case
	of \terone-, \tertwo-, \terthree{}, \terfour{} and \terfive-based operators,
	the resulting rule is $\rla' = (\lpnot \atma \lpif \lpnot \atmb.)$ (or
	another \RE-equivalent rule). Consider the following rule bases:
	\[
		\rba = \set{\atma.}, \quad
		\rbu = \set{\lpnot \atma.}, \quad
		\rbv = \set{\atma \lpif \atmb., \atmb \lpif \atma.} \enspace.
	\]
	Note that \pup{Associativity} states the following: $\rba \uopr (\rbu \uopr
	\rbv) \equiv (\rba \uopr \rbu) \uopr \rbv$. However, while in $(\rba \uopr
	\rbu) \uopr \rbv$ the fact from $\rba$ is completely annihilated (i.e.
	transformed into a tautological rule) due to the negative fact $\rla$ in
	$\rbu$, this does not happen in $\rba \uopr (\rbu \uopr \rbv)$ because the
	$\rla$ is first weakened into $\rla'$. As a consequence, $\rba \uopr (\rbu
	\uopr \rbv)$ has one extra stable model comparing to $(\rba \uopr \rbu)
	\uopr \rbv$: $\set{\atma, \atmb}$. This implies that \pup{Associativity}
	does not hold for \terone-, \tertwo-, \terthree-, \terfour- and
	\terfive-based rule update operators under \SM-equivalence, nor under any
	stronger equivalence.
\end{remark}

\begin{proposition}
	Let $\uopr$ be a \ter-based rule update operator. Then $\uopr$ satisfies
	\pu{2.1} and \pu{5} with respect to \RMR{}, \SMR{}, \RE{} and
	\SE{}.
\end{proposition}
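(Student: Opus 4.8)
The plan is to prove both postulates with respect to \RMR-entailment; by Proposition~\ref{prop:eq and ent comparison} (parts~(1) and~(3)), where $\entSE \strl \entRE \strl \entRMR$ and $\entSMR \strl \entRMR$, this immediately yields them with respect to the weaker relations \SMR, \RE and \SE as well. Throughout I would rely only on the defining identity of a \ter-based operator, namely that $\modrer{\rba \uopr \rbu} = \Set{\modre{\rl} \cup \e(\modre{\rl}, \modrer{\rba}, \modrer{\rbu}) | \rl \in \rba} \cup \modrer{\rbu}$ with $\e$ the simple exception function built from $\er$, together with the trivial facts $\modrer{\rbu \cup \rbv} = \modrer{\rbu} \cup \modrer{\rbv}$, $\modrer{\rb^\ctau} = \modrer{\rb} \cup \set{\tris}$, and $\modre{\rl} \subseteq \modre{\rl} \cup \e(\modre{\rl}, \modrer{\rba}, \modrer{\rbu})$.

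For \pu{2.1} I must show $\rba \cup \rbu \entRMR \rba \uopr \rbu$, i.e.\ that for each element $\rlb$ of $(\rba \uopr \rbu)^\ctau$ there is an element $\rla$ of $(\rba \cup \rbu)^\ctau$ with $\modre{\rla} \subseteq \modre{\rlb}$. I would split according to which part of $\modrer{(\rba \uopr \rbu)^\ctau}$ the set $\modre{\rlb}$ comes from: if $\modre{\rlb} = \tris$, take $\rla = \ctau$; if $\modre{\rlb} = \modre{\rl'}$ for some $\rl' \in \rbu$, take $\rla = \rl'$, which lies in $\rba \cup \rbu$, with equality; and if $\modre{\rlb} = \modre{\rl} \cup \e(\modre{\rl}, \modrer{\rba}, \modrer{\rbu})$ for some $\rl \in \rba$, take $\rla = \rl$, again an element of $\rba \cup \rbu$, using $\modre{\rl} \subseteq \modre{\rlb}$. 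For \pu{5} I must show $(\rba \uopr \rbu) \cup \rbv \entRMR \rba \uopr (\rbu \cup \rbv)$. Expanding both sides, the elements of $\modrer{(\rba \uopr (\rbu \cup \rbv))^\ctau}$ are $\tris$, the sets in $\modrer{\rbu} \cup \modrer{\rbv}$, and the augmented sets $\modre{\rl} \cup \e(\modre{\rl}, \modrer{\rba}, \modrer{\rbu \cup \rbv})$ for $\rl \in \rba$; the elements of $\modrer{((\rba \uopr \rbu) \cup \rbv)^\ctau}$ include $\tris$, the sets in $\modrer{\rbu} \cup \modrer{\rbv}$, and the augmented sets $\modre{\rl} \cup \e(\modre{\rl}, \modrer{\rba}, \modrer{\rbu})$ for $\rl \in \rba$. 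The only case not settled by matching equal sets is an augmented set for some $\rl \in \rba$: here I would use that $\e$ is \emph{simple}, so $\e(\modre{\rl}, \modrer{\rba}, \modrer{\rbu}) = \bigcup_{\strib \in \modrer{\rbu}} \er(\modre{\rl}, \strib)$ is monotone in its third argument, whence $\modre{\rl} \cup \e(\modre{\rl}, \modrer{\rba}, \modrer{\rbu}) \subseteq \modre{\rl} \cup \e(\modre{\rl}, \modrer{\rba}, \modrer{\rbu \cup \rbv})$, and the matching element of $(\rba \uopr \rbu) \cup \rbv$ is the one in $\rba \uopr \rbu$ realising the left-hand set (which exists by Definition~\ref{def:exception-based rule update operator}, cf.\ also Proposition~\ref{prop:re:expressibility}).

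The argument is mostly bookkeeping over the three or four shapes an element of the updated rule base can take; the one genuinely structural step is the monotonicity of a simple exception function in its third argument, which is exactly what drives \pu{5} and is, conversely, why these postulates cannot be expected from general context-aware exception functions. I expect no serious obstacle beyond keeping the $\ctau$-enrichments straight inside the \RMR-definition.
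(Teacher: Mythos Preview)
Your proposal is correct and follows essentially the same approach as the paper: prove both postulates under \RMR-entailment and descend to \SMR, \RE, \SE{} via Proposition~\ref{prop:eq and ent comparison}, using for \pu{2.1} the trivial inclusion $\modre{\rl} \subseteq \modre{\rl} \cup \e(\dots)$ and for \pu{5} the monotonicity of a simple exception function in its third argument (the paper phrases these as $\stria \subseteq \aug{\stria, \sstri}$ and $\aug{\stria, \sstria} \subseteq \aug{\stria, \sstria \cup \sstrib}$). Your write-up is simply more explicit about the case split over the shapes of elements in $\modrer{(\rba \uopr \rbu)^\ctau}$, but the substance is identical.
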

\begin{proof}
	Under \RMR-entailment \pu{2.1} follows from the fact that $\stria$ is
	a subset of $\aug{\stria, \sstri}$ and \pu{5} follows from the fact
	that $\aug{\stria, \sstria}$ is a subset of $\aug{\stria, \sstria \cup
	\sstrib}$. For the remaining notions of program entailment this follows
	from Proposition~\ref{prop:eq and ent comparison}.
\end{proof}


\begin{remark}
	[\pu{2.1} under $\entSR$ and $\entRR$]
	\label{ex:counterexample to U2.1}
	Consider again the rules $\rla$, $\rlb$ from Remark~\ref{ex:counterexample
	to associativity} and rule bases $\rba = \set{\rla}$, $\rbu = \set{\rlb}$.
	Note that \pu{2.1} states the following: $\rba \cup \rbu \ent \rba \uopr
	\rbu$. However, if $\uopr$ is \terone-, \tertwo-, \terthree-, \terfour- or
	\terfive-based, $\rba \uopr \rbu$ will contain $\rla'$ (or another
	\RE-equivalent rule or program) which results from weakening of $\rla$ by
	$\rlb$. Consequently, when \SR- or \RR-entailment is used, $\rba \cup \rbu$
	cannot entail $\rba \uopr \rbu$ simply because $\rla'$ (or another
	\RE-equivalent rule or program) does not belong to $\rba \cup \rbu$.
\end{remark}

\begin{remark}
	[\pu{2.2} and Rule Updates]
	\label{ex:counterexample to U2.2} 
	Consider $\rba = \set{\atma.}$ and $\rbu = \set{\lpnot \atma.}$ and note
	that \pu{2.2} states the following: $(\rba \cup \rbu) \uopr \rbu \ent \rba$.
	In other words, it requires that
	\[
		\set{\atma., \lpnot \atma.} \uopr \set{\lpnot \atma.} \ent \atma \enspace.
	\]
	In the presence of \pu{1} this amounts to postulating that one can never
	recover from an inconsistent state. Such a requirement is out of line with
	the way these situations are treated in state-of-the-art approaches to rule
	updates which allow for recovery from an inconsistent state if all involved
	conflicts are resolved by the update. Note that, though for different
	reasons, \bu{2.2} has also been subject of harsh criticism in belief update
	literature \citep{Herzig1999}.
\end{remark}

\begin{proposition}
	Let $\uopr$ be a \ter-based rule update operator where $\er(\stri, \tris)
	\subseteq \stri$ for all $\stri \subseteq \tris$. Then $\uopr$ satisfies
	\pu{4}, \pua{4.1} and \pua{4.2}
	with respect to \RR{}.
\end{proposition}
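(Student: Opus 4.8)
The plan is to prove \pu{4} for \RR-equivalence; the two one-sided principles \pua{4.1} and \pua{4.2} are then the instances obtained by fixing $\rbu = \rbv$ and $\rba = \rbb$, respectively. Since $\rba \eqRR \rbb$ holds exactly when $\modrer{\rba^\ctau} = \modrer{\rbb^\ctau}$, and since $\eqRR$ looks only at sets of \RE-models of rules (so it is irrelevant which syntactic realisation the operator $\uopr$ chooses for a given target characterisation), it suffices to show that $\modrer{(\rba \uopr \rbu)^\ctau}$ is a function of $\modrer{\rba^\ctau}$ and $\modrer{\rbu^\ctau}$ alone. Recall also that $\modrer{\rbc^\ctau} = \modrer{\rbc} \cup \set{\tris}$ for any rule base $\rbc$, because $\modre{\ctau} = \tris$ (Lemma~\ref{lemma:re:tautology}).

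First I would unfold the defining equation of $\uopr$. Because $\uopr$ is simple, \eqref{eq:characterisation after update} together with the form of a simple exception function yields
\[
	\modrer{\rba \uopr \rbu}
	= \Set{\aug{\stri, \modrer{\rbu}} | \stri \in \modrer{\rba}} \cup \modrer{\rbu},
	\qquad \text{where } \aug{\stri, \sstri} = \stri \cup \textstyle\bigcup_{\strib \in \sstri} \er(\stri, \strib),
\]
and hence $\modrer{(\rba \uopr \rbu)^\ctau}$ equals the same set with $\set{\tris}$ adjoined.

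Two elementary observations then absorb the canonical tautology on both sides. First, for every $\stri \subseteq \tris$ one has $\aug{\stri, \modrer{\rbu}} = \aug{\stri, \modrer{\rbu} \cup \set{\tris}}$: enlarging the index set of the union merely adds the term $\er(\stri, \tris)$, which is contained in $\stri$ by hypothesis and therefore absorbed. Consequently $\aug{\stri, \modrer{\rbu}} = \aug{\stri, \modrer{\rbu^\ctau}}$ for all $\stri$, and moreover $\modrer{\rbu} \cup \set{\tris} = \modrer{\rbu^\ctau}$. Second, $\aug{\tris, \modrer{\rbu}} = \tris$, since every value $\er(\tris, \strib)$ lies in $\tris$; hence passing from the index set $\modrer{\rba}$ to $\modrer{\rba^\ctau} = \modrer{\rba} \cup \set{\tris}$ contributes only the element $\tris$, which is already present once $\set{\tris}$ has been adjoined. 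Combining these,
\[
	\modrer{(\rba \uopr \rbu)^\ctau}
	= \Set{\aug{\stri, \modrer{\rbu^\ctau}} | \stri \in \modrer{\rba^\ctau}} \cup \modrer{\rbu^\ctau},
\]
whose right-hand side depends only on $\modrer{\rba^\ctau}$ and $\modrer{\rbu^\ctau}$. Thus $\rba \eqRR \rbb$ and $\rbu \eqRR \rbv$ force $\modrer{(\rba \uopr \rbu)^\ctau} = \modrer{(\rbb \uopr \rbv)^\ctau}$, i.e.\ $\rba \uopr \rbu \eqRR \rbb \uopr \rbv$, proving \pu{4}; and \pua{4.1}, \pua{4.2} are its special cases.

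The argument carries no delicate estimates; the only genuinely load-bearing point — and the single place the condition $\er(\stri, \tris) \subseteq \stri$ enters — is the first observation. Conceptually, \RR-equivalence is designed to be blind to the presence of tautological rules, and what must be verified is that a simple exception-based operator inherits this blindness; it does precisely when feeding the all-interpretations set $\tris$ into the second argument of the local exception function generates no new exceptions, which is exactly the stated condition. For a local exception function violating it, \RR-syntax-independence would fail, since the spurious exceptions attached to the $\ctau$-component of $\rbu$ would leak into the characterisation of the result.
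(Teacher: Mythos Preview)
Your proof is correct and follows essentially the same approach as the paper, just with the details spelled out explicitly where the paper merely says ``can be verified straightforwardly'' and notes that the condition on $\er$ is what makes augmentation insensitive to tautological updating rules. Your two observations (absorbing $\tris$ on the $\rbu$-side via the hypothesis, and on the $\rba$-side trivially) are exactly the content the paper is gesturing at.
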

\begin{proof}
	Principle \pu{4} can be verified straightforwardly and \pua{4.1} as well as
	\pua{4.2} are its consequences. The condition on \ter{} is necessary to
	ensure that $\aug{\stria, \modrer{\rbu}} = \stria$ whenever $\rbu$ is
	tautological, to keep it in line with the case when $\rbu = \emptyset$.
\end{proof}


\section{Proofs: Condensing into a Nested Program}

\label{app:proofs}

\begin{lemma}
	\label{lemma:ju binary operator:1}
	For any DLP $\dprg = \seq{\prg_\lia}_{\lia < \lng}$, $\biguoprju
	\tones{\dprg}$ consists of the following rules:
	\begin{textenum}[1.]
		\item for all $\rla \in \tones{\expv{\prg_\lia}}$ with $\lia < \lng - 1$,
			the nested rule
			$
				\Br{
					\hrla
					\lpif
					\brla
					\land
					\bigland_{\lia < \lib < \lng}
						\lpnot \Acond{\tones{\expv{\prg_\lib}}}{\lcmp{\hrla}}
					.
				}
			$;

		\item all nested rules in $\tones{\expv{\prg_{\lng - 1}}}$.
	\end{textenum}
\end{lemma}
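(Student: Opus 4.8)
The plan is to proceed by induction on the length $\lng$ of $\dprg$, unwinding the recursive definitions of $\biguoprju$ and of the binary operator $\uoprju$. For the base case $\lng = 1$ I would note that $\biguoprju \tones{\seq{\prg_0}} = \emptyset \uoprju \tones{\prg_0}$, whose first group of rules (those built from rules of the empty program) is empty and whose second group is exactly $\tones{\prg_0} = \tones{\prg_{\lng-1}}$; this matches the claimed set, item~1 of which is vacuous since no index $\lia < \lng - 1 = 0$ exists. (If $\lng = 0$ is admitted at all, both sides are empty, reading item~2 vacuously.)

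For the inductive step, assuming the statement for DLPs of length $\lng$, I take $\dprg = \seq{\prg_\lia}_{\lia < \lng + 1}$ with prefix $\dprg' = \seq{\prg_\lia}_{\lia < \lng}$, so that $\biguoprju \tones{\dprg} = (\biguoprju \tones{\dprg'}) \uoprju \tones{\prg_\lng}$ by the definition of $\biguoprju$. I then apply the definition of $\uoprju$ to the program $\biguoprju \tones{\dprg'}$, which is known explicitly by the induction hypothesis. The second group of rules of $(\biguoprju \tones{\dprg'}) \uoprju \tones{\prg_\lng}$ is ``all nested rules from $\tones{\prg_\lng}$'', which is exactly item~2 of the claim for length $\lng + 1$. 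The first group contains, for each rule $\rlb$ of $\biguoprju \tones{\dprg'}$, the rule obtained by conjoining $\lpnot \Acond{\tones{\prg_\lng}}{\lcmp{\hrlb}}$ to the body of $\rlb$ while leaving its head $\hrlb$ unchanged; so I split according to which of the two groups furnished by the induction hypothesis $\rlb$ lies in. If $\rlb$ has head $\hrla$ and body $\brla \land \bigland_{\lia < \lib < \lng} \lpnot \Acond{\tones{\prg_\lib}}{\lcmp{\hrla}}$ coming from some $\rla \in \tones{\prg_\lia}$ with $\lia < \lng - 1$, then the new conjunct merely extends the big conjunction so that $\lib$ ranges over $\lia < \lib < \lng + 1$; if instead $\rlb \in \tones{\prg_{\lng - 1}}$ with head $\hrlb$, then the single new conjunct $\lpnot \Acond{\tones{\prg_\lng}}{\lcmp{\hrlb}}$ is precisely $\bigland_{\lng - 1 < \lib < \lng + 1} \lpnot \Acond{\tones{\prg_\lib}}{\lcmp{\hrlb}}$. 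Together these two sub-cases yield exactly the rules indexed by $\lia < (\lng + 1) - 1$, i.e.\ item~1 of the claim for length $\lng + 1$, which closes the induction.

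The only delicate points are bookkeeping ones: that $\uoprju$ never alters the head literal of a rule (so that at every stage the activation formula that is conjoined is that of the complement of the \emph{original} head literal of the rule); that the recursion in $\biguoprju$ always updates by the unmodified program $\tones{\prg_\lib}$ rather than by some accumulated program (so that $\acond{\tones{\prg_\lib}}{\cdot}$, and not the activation formula of an already augmented program, is what enters the conjunction at stage $\lib$); and that the conjunction index ranges align across the induction step. I do not expect a genuine obstacle here — the argument is essentially an unwinding of the definitions — and the main thing to keep straight is simply the partition into the two groups of rules and the ranges of the conjunctions.
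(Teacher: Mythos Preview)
Your proposal is correct and follows essentially the same approach as the paper, which simply states that the lemma ``follows by induction on $\lng$.'' Your detailed unfolding of the base case and the two sub-cases in the inductive step is exactly the routine bookkeeping the paper omits.
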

\begin{proof}
	Follows by induction on $\lng$.
%
\end{proof}

\begin{proposition}
	\label{prop:ju binary operator}
	Let $\twia$, $\twib$ be interpretations and $\dprg$ a DLP. Then,
	\begin{align*}
		\twia &\ent \Br{
			\all{\expv{\dprg}} \setminus \rejju{\expv{\dprg}, \twib}
		}^\twib
		&& \text{if and only if}
		& \twia &\ent \Br{
			\biguoprju \tones{\dprg}
		}^\twib
		\enspace.
	\end{align*}
\end{proposition}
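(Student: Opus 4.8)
The plan is to compute both reducts explicitly and compare them rule by rule, using the description of $\biguoprju\tones{\dprg}$ furnished by Lemma~\ref{lemma:ju binary operator:1}. Recall that $\expv{\cdot}$ is the identity, and that every rule originating in a DLP has exactly one head literal, so for $\dprg = \seq{\prg_\lia}_{\lia < \lng}$ the nested program $\biguoprju\tones{\dprg}$ consists, for each $\rla = \br{\lit \lpif \bigland B.} \in \tones{\prg_\lia}$, of the single rule $\rho_\rla := \Br{\lit \lpif \bigland B \land \bigland_{\lia < \lib < \lng} \lpnot \Acond{\tones{\prg_\lib}}{\lcmp{\lit}}.}$ (rules coming from $\prg_{\lng-1}$ fit this pattern, the big conjunction being empty). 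Since $\twic$ satisfies $\gprg^\twib$ iff $\twic$ satisfies the reduct w.r.t.\ $\twib$ of every rule of $\gprg$ — both for ordinary and for nested programs — it suffices to establish, for every $\rla \in \all{\dprg}$: (i) if $\rla \in \rejju{\dprg, \twib}$ then $\rho_\rla^\twib$ is satisfied by every interpretation; and (ii) if $\rla \notin \rejju{\dprg, \twib}$ then $\twia \ent \rho_\rla^\twib$ iff $\twia \ent \rla^\twib$. Conjoining these equivalences over all $\rla$ then yields the proposition, because the ordinary program $\all{\dprg} \setminus \rejju{\dprg, \twib}$ ranges precisely over the non-rejected rules.

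The first ingredient is a bookkeeping fact about activation formulas. Because each $\rlb = \br{\hrlb \lpif \brlb.} \in \prg_\lib$ yields in $\tones{\prg_\lib}$ the single-head nested rule $\br{\hrlb \lpif \bigland \brlb.}$, the definition of $\acond{\cdot}{\cdot}$ gives that $\twib \ent \Acond{\tones{\prg_\lib}}{\lcmp{\lit}}$ holds iff some $\rlb \in \prg_\lib$ has $\hrlb = \lcmp{\lit}$ and $\twib \ent \brlb$ (using $\twib \ent \bigland \brlb$ iff $\brlb^+ \subseteq \twib$ and $\brlb^- \cap \twib = \emptyset$, i.e.\ iff $\twib \ent \brlb$). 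Comparing with the definition of $\rejju{\cdot}{\cdot}$, it follows that for $\rla \in \prg_\lia$ we have $\twib \ent \bigland_{\lia < \lib < \lng} \lpnot \Acond{\tones{\prg_\lib}}{\lcmp{\lit}}$ exactly when $\rla \notin \rejju{\dprg, \twib}$. Hence, in computing $\rho_\rla^\twib$, the entire block of activation conjuncts reduces to $\top$ when $\rla$ is not rejected and to a formula equivalent to $\bot$ when it is.

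It remains to carry out the reduct computation. Writing $B = B^+ \cup \lcmp{B^-}$, the reduct of the nested body $\bigland B \land \bigland_{\lia < \lib < \lng} \lpnot \Acond{\tones{\prg_\lib}}{\lcmp{\lit}}$ w.r.t.\ $\twib$ is equivalent to $\bot$ whenever $B^- \cap \twib \neq \emptyset$ or $\rla \in \rejju{\dprg, \twib}$, and equivalent to $\bigland B^+$ otherwise; the reduct of the head $\lit$ is $\lit$ if $\lit$ is an atom, and is $\bot$ or $\top$ according to whether $\atm \in \twib$ or $\atm \notin \twib$ when $\lit = \lpnot \atm$. A short case distinction (head an atom; head $\lpnot \atm$ with $\atm \in \twib$; head $\lpnot \atm$ with $\atm \notin \twib$; a body-negation of $\rla$ violated by $\twib$; $\rla$ rejected) then shows that the interpretations satisfying $\rho_\rla^\twib$ are exactly those satisfying the ordinary reduct $\rla^\twib$ when $\rla$ is not rejected — in particular $\br{\bot \lpif \bigland B^+.}$ acts as a constraint and agrees with the constraint $\br{\lpif B^+.}$ that the definition of $\rl^\twib$ produces for rules with a default-negated head literal, while $\br{\top \lpif \cdots .}$ and $\br{\lit \lpif \bot.}$ are satisfied by everything, matching $\ctau$ — and that $\rho_\rla^\twib$ is satisfied by every interpretation when $\rla$ is rejected (its body reduces to $\bot$). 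Assembling these per-rule equivalences over $\all{\dprg}$ gives the stated biconditional. The main obstacle is precisely this case analysis: one must check carefully that $\br{\bot \lpif \bigland B^+.}$ behaves as a constraint rather than an unsatisfiable fact and therefore lines up with the constraint-shaped ordinary reduct, and one must handle correctly the degenerate sub-cases in which $\twib$ falsifies a body-negation of $\rla$ or satisfies a default negation appearing as $\rla$'s head.
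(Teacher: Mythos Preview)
Your proof is correct and follows essentially the same approach as the paper's: both rely on Lemma~\ref{lemma:ju binary operator:1} for the shape of $\biguoprju\tones{\dprg}$ and on the key observation that the activation conjuncts reduce to $\top$ or $\bot$ according to whether the rule is rejected. Your per-rule biconditional organisation and explicit case analysis on the head literal (atom versus default literal) make explicit what the paper handles implicitly when it silently identifies satisfaction of the ordinary reduct $\rla^\twib$ with satisfaction of the nested reduct $(\tones{\rla})^\twib$.
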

\begin{proof}
	First suppose that $\twia \ent (\all{\expv{\dprg}} \setminus
	\rejju{\expv{\dprg}, \twib})^\twib$ and take some rule $\rlb^\twib \in
	(\biguoprju \tones{\dprg})^\twib$. We need to prove that $\twia \ent
	\rlb^\twib$. It follows from Lemma~\ref{lemma:ju binary operator:1} that one
	of the following cases occurs:
	\begin{textenum}[a)]
	\item If $\rlb \in \tones{\expv{\prg_{\lng - 1}}}$, then it immediately
		follows that $\rlb \in \tones{(\all{\expv{\dprg}} \setminus
		\rejju{\expv{\dprg}, \twib})}$. Consequently, since $\twia \ent \Br{
		\all{\expv{\dprg}} \setminus \rejju{\expv{\dprg}, \twib} }^\twib$,
		it follows that $\twia \ent \rlb^\twib$.

		\item Otherwise,
			$
				\rlb =
				\Br{
					\hrla
					\lpif
					\brla
					\land
					\bigland_{\lia < \lib < \lng}
						\lpnot \Acond{\tones{\expv{\prg_\lib}}}{\lcmp{\hrla}}
					.
				}
			$
			for some $\rla \in \tones{\expv{\prg_\lia}}$ with $\lia < \lng - 1$.
			Suppose that $\twia \ent \brlb^\twib$. Then $\twia \ent \brla^\twib$ and
			it also follows that for all $\lib$ with $\lia < \lib < \lng$, $\twib
			\nent \acond{\tones{\expv{\prg_\lib}}}{\lcmp{\hrla}}$. Consequently,
			$\rla \in \tones{(\all{\dprg} \setminus \rejju{\dprg, \twib})}$ and by
			the assumption we conclude that $\twia \ent \rla^\twib$. Hence, from
			$\twia \ent \brla^\twib$ it follows that $\twia \ent \hrla^\twib$,
			implying that $\twia \ent \rlb^\twib$.
	\end{textenum}

	For the converse implication, suppose that $\twia \ent (\biguoprju
	\tones{\dprg})^\twib$ and take some $\rla \in \tones{\Br{ \all{\expv{\dprg}}
	\setminus \rejju{\expv{\dprg}, \twib} }}$. We need to show that $\twia \ent
	\rla^\twib$. If $\rla \in \tones{\expv{\prg_{\lng - 1}}}$, then it follows
	by Lemma~\ref{lemma:ju binary operator:1} that $\rla \in (\biguoprju
	\tones{\dprg})$ and since $\twia \ent \Br{ \biguoprju \tones{\dprg} }^\twib$
	by assumption, we can immediately conclude that $\twia \ent \rla^\twib$.

	In the principal case, $\rla \in \tones{\expv{\prg_\lia}}$ for some $\lia <
	\lng - 1$. Consequently, by Lemma~\ref{lemma:ju binary operator:1},
	$\biguoprju \tones{\dprg}$ contains a rule
	$
		\rlb = \Br{
			\hrla \lpif \brla
				\land \bigland_{\lia < \lib < \lng}
				\lpnot \Acond{\tones{\expv{\prg_\lib}}}{\lcmp{\hrla}}
				.
		}
	$.
	Since $\rla$ is not part of the set of rejected rules, we can conclude that
	for all $\lib$ with $\lia < \lib < \lng$, $\twib \nent
	\acond{\tones{\expv{\prg_\lib}}}{\lcmp{\hrla}}$. Hence,
	$
		\brlb^\twib = \brla^\twib
			\land \bigland_{\lia < \lib < \lng} \top
	$.
	It follows that if $\twia \ent \brla^\twib$, then $\twia \ent \brlb^\twib$
	and using our assumption we conclude that $\twia \ent \hrlb^\twib$. Since
	$\hrla = \hrlb$, we demonstrated that $\twia \ent \rla^\twib$.
\end{proof}

\begin{lemma}
	\label{lemma:as binary operator:1}
	For any DLP $\dprg = \seq{\prg_\lia}_{\lia < \lng}$, $\biguopras
	\tones{\dprg}$ consists of the following rules:
	\begin{textenum}[1.]
		\item for all $\rla \in \tones{\expv{\prg_\lia}}$ with $\lia < \lng - 1$,
			the nested rule
			$
				\Br{
					\hrla
					\lpif
					\brla
					\land
					\bigland_{\lia < \lib < \lng}
						\lpnot \Acond{\tones{\expv{\prg_\lib}}}{\lcmp{\hrla}}
					.
				}
			$;

		\item for all $\rla \in \tones{\prg_\lia}$ with $\hrla \in \atms$ and $\lia <
			\lng$, the nested rule
			$
				\Br{
					\hrla \lor \lcmp{\hrla} \lpif \brla.
				}
			$;

		\item all nested rules in $\tones{\expv{\prg_{\lng - 1}}}$.
	\end{textenum}
\end{lemma}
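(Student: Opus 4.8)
The plan is to prove the claim by induction on the length $\lng$ of the DLP $\dprg = \seq{\prg_\lia}_{\lia < \lng}$, exactly paralleling the proof of Lemma~\ref{lemma:ju binary operator:1}; the statement is purely about the syntactic shape of the nested program $\biguopras \tones{\dprg}$, so the argument is a combinatorial unfolding of the recursive definition of $\biguopras$ together with the four clauses of $\uopras$. The degenerate case $\lng = 0$ has both sides empty (reading the final clause, which refers to $\prg_{\lng-1}$, as vacuous), and the base case $\lng = 1$ is immediate: $\biguopras \tones{\dprg} = \emptyset \uopras \tones{\prg_0}$, and with an empty first argument the definition of $\uopras$ retains only the choice rules $(\hrla \lor \lcmp{\hrla} \lpif \brla.)$ for $\rla \in \tones{\prg_0}$ with $\hrla \in \atms$ (its third clause) together with all of $\tones{\expv{\prg_0}}$ (its fourth clause), which is exactly what the lemma asserts for $\lng = 1$, its first group being vacuous.

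For the inductive step I would set $\dprg' = \seq{\prg_\lia}_{\lia < \lng}$ and $\prga = \biguopras \tones{\dprg'}$, so that $\biguopras \tones{\dprg} = \prga \uopras \tones{\prg_\lng}$, and expand $\prga$ using the induction hypothesis: $\prga$ consists of (a) the shifted rules $\bigl(\hrla \lpif \brla \land \bigland_{\lia < \lib < \lng} \lpnot \Acond{\tones{\expv{\prg_\lib}}}{\lcmp{\hrla}}.\bigr)$ for $\rla \in \tones{\expv{\prg_\lia}}$ with $\lia < \lng - 1$; (b) the choice rules $(\hrla \lor \lcmp{\hrla} \lpif \brla.)$ for $\rla \in \tones{\prg_\lia}$ with $\hrla \in \atms$ and $\lia < \lng$; and (c) all rules in $\tones{\expv{\prg_{\lng-1}}}$. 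The pivotal observation is that the rules of $\prga$ partition cleanly by head shape: using the standing assumption that rules originating in a DLP carry exactly one head literal, every rule in (a) and (c) has a single literal in its head and is therefore precisely a rule to which clause~1 of $\uopras$ applies, whereas every rule in (b) has a head of the form $\lit \lor \lcmp{\lit}$ and is precisely one of the rules carried over unchanged by clause~2.

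It then remains to read off the three output groups. Clause~1 of $\uopras$ (with $\prgu = \tones{\prg_\lng}$, so $\expv{\prgu} = \tones{\prg_\lng}$) appends the conjunct $\lpnot \Acond{\tones{\expv{\prg_\lng}}}{\lcmp{\hrla}}$ to the body of each rule in (a) and (c); since adjoining this term turns $\bigland_{\lia < \lib < \lng}$ into $\bigland_{\lia < \lib < \lng+1}$ and the rules in (c) are exactly those with index $\lia = \lng - 1$, the union of these two transformed families is precisely the first group of the lemma for $\dprg$, ranging over $\lia < \lng = (\lng+1)-1$. Clause~2 reproduces group~(b), the choice rules for $\lia < \lng$; clause~3 adds the choice rules $(\hrla \lor \lcmp{\hrla} \lpif \brla.)$ for $\rla \in \tones{\prg_\lng}$ with $\hrla \in \atms$, i.e.\ the $\lia = \lng$ case, so clauses~2 and~3 together yield the second group of the lemma; and clause~4 contributes all rules of $\tones{\expv{\prg_\lng}}$, which is the third group. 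The whole argument is bookkeeping, and the only step I expect to require genuine care — the ``main obstacle'', such as it is — is correctly tracking that the top-layer rules from $\prg_{\lng-1}$, which enter $\prga$ verbatim through group~(c) of the induction hypothesis, are exactly the rules that pick up one further negated activation formula under clause~1 and so merge with the $\lia < \lng-1$ rules to constitute the full first group, handled in tandem with the index shift on the conjunction.
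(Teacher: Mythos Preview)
Your proposal is correct and follows exactly the approach the paper indicates (``Follows by induction on $\lng$''), simply supplying the bookkeeping that the paper omits. The key observation you isolate---that the rules of $\prga$ split by head shape so that groups~(a) and~(c) feed clause~1 of $\uopras$ while group~(b) feeds clause~2---is precisely what makes the induction go through, and your treatment of the index shift where the $\lia=\lng-1$ rules acquire their first negated activation conjunct is the only nontrivial point.
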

\begin{proof}
	Follows by induction on $\lng$.
%
\end{proof}

\begin{proposition}
	\label{prop:as binary operator:1}
	Let $\dprg$ be a DLP and $\twib$ an interpretation. If $\twib$ is a
	\AS-model of $\dprg$, then it is a stable model of $\biguopras
	\tones{\dprg}$.
\end{proposition}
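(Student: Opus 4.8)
The plan is to make the structure of $\biguopras \tones{\dprg}$ explicit via Lemma~\ref{lemma:as binary operator:1} and then check the two conditions defining a stable model of a nested program: that $\twib$ satisfies the reduct $(\biguopras \tones{\dprg})^\twib$, and that $\twib$ is subset-minimal among the interpretations that do. Write $\prgc$ for $\all{\dprg} \setminus \rejas{\dprg, \twib}$. By Definition~\ref{def:ru:ju}, $\twib \in \modas{\dprg}$ means that $\twib$ is a subset-minimal model of the (ordinary) reduct $\prgc^\twib$; and since for any rule $\rla$ originating in a DLP one has $\twia \ent (\tones{\rla})^\twib$ iff $\twia \ent \rla^\twib$ --- checked by the same routine case split on the head literal and on whether $\brla^-$ meets $\twib$ that underlies the proof of Proposition~\ref{prop:ju binary operator} --- this is equivalent to saying that $\twib$ is a subset-minimal model of $\tones{\prgc}^\twib$. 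I will also use that $\rejas{\dprg, \twib} \subseteq \rejju{\dprg, \twib}$ (any witness to \AS-rejection is a witness to \JU-rejection), hence every rule in $\all{\dprg} \setminus \rejju{\dprg, \twib}$ belongs to $\prgc$.

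For the membership condition I would prove the stronger statement that $\twib$ is a classical model of $\biguopras \tones{\dprg}$, which yields $\twib \ent (\biguopras \tones{\dprg})^\twib$. Go through the three kinds of rules in Lemma~\ref{lemma:as binary operator:1}. The choice rules $(\hrla \lor \lcmp{\hrla} \lpif \brla.)$ are satisfied by every interpretation. The rules of $\tones{\prg_{\lng-1}}$ come from rules that can never be rejected --- rejection needs a strictly later component --- so they lie in $\tones{\prgc}$ and are satisfied by $\twib$ since $\twib \ent \tones{\prgc}^\twib$. For a rule $(\hrla \lpif \brla \land \bigland_{\lia < \lib < \lng} \lpnot \acond{\tones{\prg_\lib}}{\lcmp{\hrla}}.)$ with $\rla \in \tones{\prg_\lia}$ and $\lia < \lng - 1$: if its body holds in $\twib$ then $\twib \ent \brla$ and, for every later component, no rule with head $\lcmp{\hrla}$ has its body satisfied in $\twib$; hence the rule underlying $\rla$ is not \JU-rejected, hence not \AS-rejected, hence lies in $\prgc$, and $\twib \ent \brla$ forces $\twib \ent \hrla$.

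For minimality the strategy is: given $\twia \subseteq \twib$ with $\twia \ent (\biguopras \tones{\dprg})^\twib$, show $\twia \ent \tones{\prgc}^\twib$; subset-minimality of $\twib$ then gives $\twia = \twib$. By the reduct correspondence it suffices to show $\twia \ent \rla^\twib$ for each $\rla \in \prgc$. Fix such an $\rla$ with head literal $\lit$; the case $\rla^\twib = \ctau$ is trivial, so assume $\rla^\twib$ is the rule $(\hrla^+ \lpif \brla^+.)$ and $\brla^+ \subseteq \twia$. If $\lit = \lpnot \atm$ then $\rla^\twib$ is a constraint with body $\brla^+$, and $\twib \ent \rla^\twib$ forces $\brla^+ \not\subseteq \twib$, contradicting $\brla^+ \subseteq \twia \subseteq \twib$; so this situation does not arise. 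If $\lit = \atm$ then $\brla^+ \subseteq \twia \subseteq \twib$ together with $\twib \ent \rla^\twib$ gives $\atm \in \twib$; since $\rla$ has an atomic head and $\lia < \lng$, the choice rule $(\atm \lor \lpnot \atm \lpif \brla.)$ is present in $\biguopras \tones{\dprg}$, and because $\atm \in \twib$ its reduct with respect to $\twib$ is (equivalent to) $(\atm \lpif \brla^+.) = \rla^\twib$; since $\twia$ satisfies this reduct and $\brla^+ \subseteq \twia$, we get $\atm \in \twia$, i.e.\ $\twia \ent \rla^\twib$.

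The main obstacle is precisely this minimality step, and in particular the rules with an atomic head that are \JU-rejected but not \AS-rejected: for such a rule the item-1 rule of $\biguopras \tones{\dprg}$ is disabled in $\twib$, so --- unlike the \JU-case of Proposition~\ref{prop:ju binary operator} --- minimality cannot be read off from a correspondence with $\biguoprju \tones{\dprg}$, and it is exactly the extra choice rules that pin down the value of the head atom. The device that keeps the argument clean is the observation that $\twib \ent \tones{\prgc}^\twib$ already forces the relevant head atom into $\twib$, which collapses the reduct of the corresponding choice rule onto the reduct of the original rule. The remaining ingredients --- triviality of the reduct of a rule body when the body is falsified in $\twib$, the inclusion $\rejas{\dprg, \twib} \subseteq \rejju{\dprg, \twib}$, and the ordinary-versus-nested reduct correspondence --- are routine and parallel to the treatment of the \JU-semantics.
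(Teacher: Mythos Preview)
Your proof is correct. The membership half is essentially the paper's argument: both go through the three kinds of rules from Lemma~\ref{lemma:as binary operator:1}, and both use that a satisfied item-1 body means the underlying rule is not \JU-rejected, hence not \AS-rejected, hence lies in $\prgc$.

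The minimality half, however, is organised differently and is in fact simpler than the paper's. The paper splits on the \emph{conflict structure} of $\rla$: it distinguishes (i) $\rla$ with no later conflicting rule whose body holds in $\twib$, where it appeals to the item-1 rule; (ii) $\rla$ with such a conflicting rule, where---since $\rla$ is \AS-unrejected---it chases up to an unrejected $\rla''$ with the same head, concludes $\twib \ent \hrla$, and then uses the choice rule (or the trivial head reduct when $\hrla$ is negative); and (iii) $\rla \in \prg_{\lng-1}$, handled directly. You instead split on the \emph{head literal}: for negative heads you observe the situation cannot arise at all, and for atomic heads you go straight to the choice rule, which is always present regardless of whether $\rla$ is conflicted. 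This avoids the chain-of-conflicts argument entirely and collapses the paper's three cases into two. What you lose is only the explicit appeal to the item-1 rules in the minimality direction; what you gain is a shorter uniform argument that makes transparent exactly why the extra choice rules suffice.
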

\begin{proof}
	In order to show that $\twib$ is a stable model of $\biguopras
	\tones{\dprg}$, we first establish that $\twib$ satisfies $(\biguopras
	\tones{\dprg})^\twib$. Take some rule $\rlb$ from $\biguopras
	\tones{\dprg}$. We need to prove that $\twib \ent \rlb^\twib$. Due to
	Lemma~\ref{lemma:as binary operator:1}, we need to consider the following
	cases:
	\begin{textenum}[1)]
		\item In the first case,
			$
				\rlb = \Br{
					\hrla
					\lpif
					\brla
					\land
					\bigland_{\lia < \lib < \lng}
						\lpnot \Acond{\tones{\expv{\prg_\lib}}}{\lcmp{\hrla}}
					.
				}
			$
			where $\rla \in \tones{\expv{\prg_\lia}}$ and $\lia < \lng - 1$. Suppose
			that $\twib \ent \brlb^\twib$. Then $\twib \ent \brla^\twib$ and for all
			$\lib$ such that $\lia < \lib < \lng$, $\twib \nent
			\acond{\tones{\expv{\prg_\lib}}}{\lcmp{\hrla}}$. Thus, we can conclude
			that $\rla \in \tones{(\all{\dprg} \setminus \rejas{\dprg, \twib})}$, by
			the assumption, $\twib \ent \rla^\twib$. So since $\twib \ent
			\brla^\twib$, it also holds that $\twib \ent \hrla^\twib$ and since
			$\hrla = \hrlb$, we conclude that $\twib \ent \rlb^\twib$.

		\item In the second case, $\rlb = \Br{ \hrla \lor \lcmp{\hrla} \lpif
			\brla.}$ for some $\rla \in \tones{\prg_\lia}$ with $\hrla \in \atms$ and
			$\lia < \lng$. If $\twib \ent \hrla$, then it easily follows that $\twib
			\ent \rlb^\twib$. On the other hand, if $\twib \nent \hrla$, then the head
			of $\rlb^\twib$ contains $\top$ as the second disjunct and, once again, it
			follows that $\twib \ent \rlb^\twib$.

		\item In the third case, $\rlb \in \tones{\expv{\prg_{\lng - 1}}}$. It
			immediately follows that
			$
				\rlb \in \tones{\Br{
					\all{\expv{\dprg}} \setminus \rejas{\expv{\dprg}, \twib}
				}}
			$.
			Thus, our assumption implies that $\twib \ent \rlb^\twib$.
	\end{textenum}

	It remains to verify that $\twib$ is also subset-minimal among
	interpretations that satisfy $(\biguopras \tones{\dprg})^\twib$. To show
	that this is the case, take an interpretation $\twia$ with $\twia \subseteq
	\twib$ that satisfies $(\biguopras \tones{\dprg})^\twib$. In the following
	we will prove that $\twia \ent \Br{ \all{\expv{\dprg}} \setminus
	\rejas{\expv{\dprg}, \twib} }^\twib$. Since $\twib$ is subset-minimal
	among interpretations satisfying this program, it will follow that $\twia =
	\twib$ as desired.

	Take some rule $\rla \in \tones{\Br{\all{\expv{\dprg}} \setminus
	\rejas{\expv{\dprg}, \twib}}}$, our goal is to prove that $\twia \ent
	\rla^\twib$. We consider the following cases:
	\begin{textenum}[1)]
		\item In case $\rla \in \tones{\expv{\prg_\lia}}$ with $\lia < \lng - 1$
			and there is no rule $\rla' \in \expv{\prg_\lib}$ with $\lia < \lib <
			\lng$ such that $\rla' \confl{}{} \rla$ and $\twib \ent \brl[\rla']$, we
			can use Lemma~\ref{lemma:as binary operator:1} to conclude that
			$\biguopras \tones{\dprg}$ contains a rule
			$
				\rlb = \Br{
					\hrla
					\lpif
					\brla
					\land
					\bigland_{\lia < \lib < \lng}
						\lpnot \Acond{\tones{\expv{\prg_\lib}}}{\lcmp{\hrla}}
					.
				}
			$.
			It also follows that for all $\lib$ with $\lia < \lib < \lng$, $\twib
			\nent \acond{\tones{\expv{\prg_\lib}}}{\lcmp{\hrla}}$, so that
			$\brlb^\twib = \brla^\twib \land \bigland_{\lia < \lib < \lng} \top$.
			Thus, if $\twia \ent \brla^\twib$, then $\twia \ent \brlb^\twib$ and by
			the assumption that $\twia \ent (\biguopras \tones{\dprg})^\twib$ we
			obtain $\twia \ent \hrlb^\twib$, so it also follows that $\twia \ent
			\rla^\twib$.

		\item In case $\rla \in \tones{\expv{\prg_\lia}}$ with $\lia < \lng - 1$
			and there is a rule $\rla' \in \expv{\prg_\lib}$ with $\lia < \lib <
			\lng$ such that $\rla' \confl{}{} \rla$ and $\twib \ent \brl[\rla']$, it
			follows that since $\rla$ is unrejected, $\rla'$ is itself rejected.
			Consequently, there is also an unrejected rule $\rla'' \in
			\expv{\prg_{\lic}}$ with $\lib < \lic < \lng$ such that $\rla''
			\confl{}{} \rla'$ and $\twib \ent \brl[\rla'']$. Furthermore,
			$\hrl[\rla''] = \hrla$ and by the assumption we know that $\twib$
			satisfies the rule $\rla''$, so we can conclude that $\twib \ent \hrla$.
			If $\hrla$ is a default literal, then it follows that $(\hrla)^\twib =
			\top$, so trivially $\twia \ent \rla^\twib$.

			If $\hrla$ is an atom, then, by Lemma~\ref{lemma:as binary operator:1},
			$\biguopras \tones{\dprg}$ contains a rule $\rlb = \Br{ \hrla \lor
				\lcmp{\hrla} \lpif \brla. }$. Note that $\hrlb^\twib = \hrla \lor
			\bot$. Thus, if $\twia \ent \brla^\twib$, then $\twia \ent \brlb^\twib$
			and it follows from our assumption that $\twia \ent \hrlb^\twib$, so
			that $\twia \ent \hrla^\twib$. Hence, $\twia \ent \rla^\twib$.

		\item In case $\rla \in \tones{\expv{\prg_{\lng - 1}}}$, it immediately
			follows from Lemma~\ref{lemma:as binary operator:1} that $\rla \in
			\Br{\biguopras \tones{\dprg}}$. Thus, by the assumption that
			$\twia \ent (\biguopras \tones{\dprg})^\twib$ we obtain $\twia \ent
			\rla^\twib$. \qedhere
	\end{textenum}
\end{proof}

\begin{proposition}
	\label{prop:as binary operator:2}
	Let $\dprg$ be a DLP and $\twib$ an interpretation. If $\twib$ is a stable
	model of $\biguopras \tones{\dprg}$, then it is a \AS-model of $\dprg$.
\end{proposition}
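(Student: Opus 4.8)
The plan is to show directly that $\twib$ is a stable model of the program $\prg' = \all{\dprg} \setminus \rejas{\dprg, \twib}$, which is exactly what it means for $\twib$ to be a \AS-model of $\dprg = \seq{\prg_\lia}_{\lia < \lng}$. So two things must be proved: (i)~$\twib$ is a model of $(\prg')^\twib$, and (ii)~$\twib$ is subset-minimal among all interpretations satisfying $(\prg')^\twib$. The three ingredients I would use throughout are Lemma~\ref{lemma:as binary operator:1}, which lists the rules of $\biguopras \tones{\dprg}$; Lemma~\ref{lemma:reduct}, which lets me pass freely between a rule and its reduct; and the hypothesis that $\twib$ is a stable model of $\biguopras \tones{\dprg}$, i.e.\ that $\twib \ent (\biguopras \tones{\dprg})^\twib$ and $\twib$ is minimal with this property. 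Note that the argument does not need $\dprg$ to be free of local cycles, mirroring Proposition~\ref{prop:as binary operator:1}.

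For part~(i) I would argue by downward induction on the index $\lia$ of the program containing a rule $\rla \in \prg'$, showing: if $\twib \ent \brla$ then $\twib \ent \hrla$. When $\lia = \lng - 1$, the rule corresponding to $\rla$ appears among the rules of $\biguopras \tones{\dprg}$ by item~3 of Lemma~\ref{lemma:as binary operator:1}, and since $\twib$ is a classical model of that program the conclusion is immediate. When $\lia < \lng - 1$ and no rule in a later program has head $\lcmp{\hrla}$ and a body satisfied in $\twib$, every conjunct $\lpnot \Acond{\tones{\expv{\prg_\lib}}}{\lcmp{\hrla}}$ of the item~1 rule for $\rla$ is satisfied in $\twib$, so that rule reduces to $\rla^\twib$ up to trivial conjuncts and hence forces $\hrla$. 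In the remaining case there is a later $\rlb$ with $\hrlb = \lcmp{\hrla}$ and $\twib \ent \brlb$; since $\rla \notin \rejas{\dprg, \twib}$, the rule $\rlb$ is itself rejected, so there is a still later \emph{unrejected} rule $\rlc$ with $\hrlc = \hrla$ and $\twib \ent \brlc$, and applying the induction hypothesis at the larger index of $\rlc$ yields $\twib \ent \hrlc = \hrla$.

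For part~(ii) I would pick an arbitrary interpretation $\twia \subseteq \twib$ with $\twia \ent (\prg')^\twib$ and prove that $\twia \ent (\biguopras \tones{\dprg})^\twib$; minimality of $\twib$ then forces $\twia = \twib$. Running through the three kinds of rule of $\biguopras \tones{\dprg}$: for an item~1 rule, either its reduct body is already unsatisfiable (because some conjunct $\lpnot \Acond{\tones{\expv{\prg_\lib}}}{\lcmp{\hrla}}$ reduces to $\bot$), or no later rule conflicts with $\rla$ with a body satisfied in $\twib$, in which case $\rla$ is unrejected, so $\rla^\twib \in (\prg')^\twib$ and is satisfied by $\twia$. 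An item~3 rule comes from $\prg_{\lng - 1}$, whose rules can never be rejected, so the same argument applies. The only delicate case is an item~2 choice rule $(\hrla \lor \lcmp{\hrla} \lpif \brla.)$ with $\hrla = \atm \in \atms$: if $\atm \notin \twib$ the reduct of its head contains $\top$ and there is nothing to check, while if $\atm \in \twib$ the reduct is $(\atm \lpif \brla^\twib.)$, and here I would show that $\rla$ cannot be rejected -- a rejecting rule would be an unrejected rule with head $\lpnot \atm$ and body satisfied in $\twib$, which by part~(i) is satisfied by $\twib$ and therefore forces $\atm \notin \twib$, contradicting $\atm \in \twib$; hence $\rla \in \prg'$ and $\twia$ satisfies $\rla^\twib$, and so the choice rule's reduct.

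The step I expect to be the main obstacle is precisely this last case of part~(ii): making sure that whenever an atom is true in $\twib$, every rule of $\dprg$ with that atom as its head is unrejected. The argument is short but it crucially consumes part~(i), so the two halves cannot be interleaved -- part~(i) must be established first. Everything else is a faithful rerun, in the opposite direction, of the case analysis already carried out in the proof of Proposition~\ref{prop:as binary operator:1}.
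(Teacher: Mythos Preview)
Your proposal is correct and follows essentially the same approach as the paper's proof: establish first that $\twib$ satisfies $(\prg')^\twib$, then show minimality by proving any $\twia \subseteq \twib$ satisfying $(\prg')^\twib$ also satisfies $(\biguopras \tones{\dprg})^\twib$, running through the three rule types of Lemma~\ref{lemma:as binary operator:1}. The only cosmetic differences are that the paper replaces your downward induction in part~(i) by a direct ``take the maximal index~$\lic$'' argument, and in the choice-rule case of part~(ii) the paper splits on whether $\rla$ is rejected (concluding $\atm \notin \twib$ when it is) whereas you split on whether $\atm \in \twib$ (concluding $\rla$ is unrejected when it is)---these are contrapositives of the same observation, and both crucially rely on part~(i), exactly as you flagged.
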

\begin{proof}
	To show that $\twib$ is a \AS-model of $\dprg$, we first establish that
	$\twib$ satisfies
	$
		\prg
		=
		\Br{
			\all{\expv{\dprg}} \setminus \rejas{\expv{\dprg}, \twib}
		}^\twib
	$.
	Take some rule $\rla \in \tones{\Br{\all{\expv{\dprg}} \setminus
	\rejas{\expv{\dprg}, \twib}}}$, our goal is to prove that $\twib \ent
	\rla^\twib$. We consider the following cases:
	\begin{textenum}[1)]
		\item In case $\rla \in \tones{\expv{\prg_\lia}}$ with $\lia < \lng - 1$
			and there is no rule $\rla' \in \expv{\prg_\lib}$ with $\lia < \lib <
			\lng$ such that $\rla' \confl{}{} \rla$ and $\twib \ent \brl[\rla']$, we
			can use Lemma~\ref{lemma:as binary operator:1} to conclude that
			$\biguopras \tones{\dprg}$ contains a rule
			$
				\rlb = \Br{
					\hrla
					\lpif
					\brla
					\land
					\bigland_{\lia < \lib < \lng}
						\lpnot \Acond{\tones{\expv{\prg_\lib}}}{\lcmp{\hrla}}
					.
				}
			$.
			It also follows that for all $\lib$ with $\lia < \lib < \lng$, $\twib
			\nent \acond{\tones{\expv{\prg_\lib}}}{\lcmp{\hrla}}$, so that
			$\brlb^\twib = \brla^\twib \land \bigland_{\lia < \lib < \lng} \top$.
			Thus, if $\twib \ent \brla^\twib$, then $\twib \ent \brlb^\twib$ and by
			the assumption that $\twib \ent (\biguopras \tones{\dprg})^\twib$ we
			obtain $\twib \ent \hrlb^\twib$, so it follows that $\twib \ent
			\rla^\twib$.

		\item In case $\rla \in \tones{\expv{\prg_\lia}}$ with $\lia < \lng - 1$
			and there is a rule $\rla' \in \expv{\prg_\lib}$ with $\lia < \lib <
			\lng$ such that $\rla' \confl{}{} \rla$ and $\twib \ent \brl[\rla']$, it
			follows that since $\rla$ is unrejected, $\rla'$ is itself rejected.
			Take the maximal index $\lic$ such that $\expv{\prg_\lic}$ contains a
			rule $\rla''$ with $\rla'' \confl{}{} \rla'$ and $\twib \ent
			\hrl[\rla'']$. It follows that $\rla''$ satisfies the condition of the
			previous case, and thus $\twib \ent \hrl[\rla'']^\twib$. Since $\hrla =
			\hrl[\rla'']$, we conclude that $\twib \ent \rla^\twib$.

		\item In case $\rla \in \tones{\expv{\prg_{\lng - 1}}}$, it immediately
			follows from Lemma~\ref{lemma:as binary operator:1} that $\rla \in
			\Br{ \biguopras \tones{\dprg} }$. Thus, by the assumption that
			$\twib \ent (\biguopras \tones{\dprg})^\twib$ we obtain that $\twib \ent
			\rla^\twib$.
	\end{textenum}

	It remains to verify that $\twib$ is also subset-minimal among
	interpretations that satisfy $\prg$. To show that this is the case, take an
	interpretation $\twia$ with $\twia \subseteq \twib$ that satisfies $\prg$.
	In the following we will prove that $\twia$ also satisfies the program
	$(\biguopras \tones{\dprg})^\twib$. Since $\twib$ is subset-minimal among
	interpretations satisfying this program, it will follow that $\twia = \twib$
	as desired.

	So take some $\rlb \in (\biguopras \tones{\dprg})$, our goal is to prove
	that $\twia \ent \rlb^\twib$. Due to Lemma~\ref{lemma:as binary operator:1},
	we need to consider the following cases:
	\begin{textenum}[1)]
		\item In the first case,
			$
				\rlb = \Br{
					\hrla
					\lpif
					\brla
					\land
					\bigland_{\lia < \lib < \lng}
						\lpnot \Acond{\tones{\expv{\prg_\lib}}}{\lcmp{\hrla}}
					.
				}
			$
			where $\rla \in \tones{\expv{\prg_\lia}}$ and $\lia < \lng - 1$. Suppose
			that $\twia \ent \brlb^\twib$. Then $\twia \ent \brla^\twib$ and for all
			$\lib$ such that $\lia < \lib < \lng$, $\twib \nent
			\acond{\tones{\expv{\prg_\lib}}}{\lcmp{\hrla}}$. Thus, we can conclude
			that $\rla \in \tones{\Br{\all{\dprg} \setminus \rejas{\dprg, \twib}}}$.
			Furthermore, from $\twia \ent \brla^\twib$ and the assumption that
			$\twia \ent \prg$, it follows that $\twia \ent \hrla^\twib$.
			Consequently, since $\hrla = \hrlb$, $\twia \ent \rlb^\twib$.

		\item In the second case, $\rlb = (\hrla \lor \lcmp{\hrla} \lpif \brla.)$
			for some $\rla \in \tones{\prg_\lia}$ with $\hrla \in \atms$ and $\lia <
			\lng$. If $\rla$ is not rejected, then it follows from our assumption
			that $\twia \ent \rla^\twib$. In the principal case, there exists a rule
			$\rla' \in \expv{\prg_\lib}$ with $\lia < \lib < \lng$ such that $\rla'
			\confl{}{} \rla$ and $\twib \ent \brl[\rla']$. Furthermore, $\rla'$
			itself is not rejected, so due to our previous considerations we can
			conclude that $\twib \ent \hrl[\rla']$. Note that since $\hrla$ is an
			atom, $\hrl[\rla']$ is a default literal, so the rule $\rlb^\twib$ has
			$\top$ as one of the disjuncts in its head. Thus, $\twia$ trivially
			satisfies $\rlb^\twib$.

		\item In the third case, $\rlb \in \tones{\expv{\prg_{\lng - 1}}}$. It
			immediately follows that $\rlb \in \tones{\Br{\all{\expv{\dprg}}
			\setminus \rejas{\expv{\dprg}, \twib}}}$. Thus, our assumption implies
			that $\twia \ent \rlb^\twib$. \qedhere
	\end{textenum}
\end{proof}

\begin{proof}
	[\textbf{Proof of Theorem~\ref{thm:ju binary operator}}]
	\label{proof:thm:ju binary operator}
	Follows from Propositions~\ref{prop:ju binary operator}, \ref{prop:as binary
	operator:1} and \ref{prop:as binary operator:2}.
\end{proof}

\section{Proofs: Condensing into a Disjunctive Program}

\begin{remark}
	Throughout the following proofs we abuse notation by ignoring differences
	between formulas that can be eliminated by regrouping and reordering
	conjuncts and disjuncts within them. That is, when a formula can be obtained
	from another formula only by using the commutative and associative laws for
	conjunction and disjunction, we consider the two formulas identical. We can
	afford to do this because the order and grouping of conjuncts and disjuncts
	has no effect on the resulting semantics.
\end{remark}

\begin{definition}
	[Strong Equivalence \cite{Lifschitz2001}]
	Let $\prga$, $\prgb$ be programs. We say that \emph{$\prga$ is strongly
	equivalent to $\prgb$} if for every program $\prgc$, the stable models of
	$\prga \cup \prgc$ coincide with the stable models of $\prgb \cup \prgc$.
	Strong equivalence is extended to rules by treating each rule $\rl$ as the
	program $\set{\rl}$.
\end{definition}

\begin{proposition}
	[\cite{Lifschitz2001,Turner2003}]
	\label{prop:strong equivalence condition}
	Let $\prga$, $\prgb$ be programs and $\rl$ a rule. If for all
	interpretations $\twia$, $\twib$ with $\twia \subseteq \twib$, $\twib \ent
	\prga \land \twia \ent \prga^\twib$ if and only if $\twib \ent \prgb \land
	\twia \ent \prgb^\twib$, then $\prga$ is strongly equivalent to $\prgb$.
\end{proposition}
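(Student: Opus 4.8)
The plan is to unfold the definition of strong equivalence and reduce it, via the hypothesis, to a statement about the stable models of arbitrary extensions. First I would observe that the parameter $\rl$ in the statement plays no role and can be ignored; what the hypothesis actually asserts is that $\prga$ and $\prgb$ have the same SE\nobreakdash-models in the nested\nobreakdash-program sense, i.e.\ for every pair $\twia \subseteq \twib$ we have $\twib \ent \prga$ and $\twia \ent \prga^\twib$ exactly when $\twib \ent \prgb$ and $\twia \ent \prgb^\twib$. I would then fix an arbitrary program $\prgc$ and an arbitrary interpretation $\twib$ and prove that $\twib$ is a stable model of $\prga \cup \prgc$ if and only if it is a stable model of $\prgb \cup \prgc$; by Definition of strong equivalence this is exactly what needs to be shown.

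The argument relies on two routine facts about nested programs, both immediate from the definition of the reduct: (i) the reduct commutes with union, $(\prga \cup \prgc)^\twib = \prga^\twib \cup \prgc^\twib$, and satisfaction of a union equals the conjunction of the satisfactions; and (ii) $\twib \ent \prg$ if and only if $\twib \ent \prg^\twib$, since replacing each maximal occurrence of a negated subformula $\lpnot \frmb$ by its truth value under $\twib$ does not change whether $\twib$ is a model. Granting these, suppose $\twib$ is a stable model of $\prga \cup \prgc$. Then $\twib \ent \prga$ and $\twib \ent \prgc$; applying the hypothesis with $\twia = \twib$ together with (ii) gives $\twib \ent \prgb$, hence by (i) and (ii) $\twib \ent (\prgb \cup \prgc)^\twib$. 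For minimality, suppose toward a contradiction that some $\twia \subsetneq \twib$ satisfies $(\prgb \cup \prgc)^\twib$; then $\twia \ent \prgb^\twib$ and $\twia \ent \prgc^\twib$ by (i), and since $\twib \ent \prgb$ the hypothesis applied to the pair $\twia \subseteq \twib$ yields $\twib \ent \prga$ and $\twia \ent \prga^\twib$, so $\twia \ent (\prga \cup \prgc)^\twib$, contradicting the subset\nobreakdash-minimality of $\twib$ among models of $(\prga \cup \prgc)^\twib$. Therefore $\twib$ is a stable model of $\prgb \cup \prgc$, and the converse direction is obtained by exchanging the roles of $\prga$ and $\prgb$.

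The proof is short, and the only thing requiring care is keeping the bookkeeping straight between ``$\twib$ satisfies the program'' and ``$\twib$ satisfies its reduct'', and checking that the hypothesis is always instantiated at a legitimate pair $\twia \subseteq \twib$. I expect the only (minor) obstacle to be the verification of facts (i) and (ii) for nested expressions, including the base cases for $\bot$, $\top$, $\land$, $\lor$ and $\lpnot$ in the definition of the reduct; these are precisely the standard lemmas underlying the SE\nobreakdash-model characterisation of strong equivalence, so I would simply invoke \cite{Lifschitz2001,Turner2003} for them and keep the proof of the proposition itself at the level of the reduction sketched above.
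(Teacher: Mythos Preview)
Your proposal is correct and follows essentially the same approach as the paper's own proof: fix an arbitrary program $\prgc$ and interpretation $\twib$, unfold the definition of stable model, use that the reduct distributes over union together with the hypothesis to transfer the two defining conditions from $\prga \cup \prgc$ to $\prgb \cup \prgc$, and conclude by symmetry. The paper's version is terser (it compresses your facts (i) and (ii) into the phrase ``due to the definition of reduct''), but the underlying argument is identical.
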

\begin{proof}
	Take some program $\prgc$ and some interpretation $\twib$. $\twib$ is a
	stable model of $\prga \cup \prgc$ if and only if $\twib \ent (\prga \cup
	\prgc)^\twib$ and $\forall \twia \subsetneq \twib : \twia \nent (\prga \cup
	\prgc)^\twib$. Due to the definition of reduct and the assumption, this is
	equivalent to $\twib \ent (\prgb \cup \prgc)^\twib$ and $\forall \twia
	\subsetneq \twib : \twia \nent (\prgb \cup \prgc)^\twib$. In other words,
	$\twib$ is a stable model of $\prga \cup \prgc$ if and only if it is a
	stable model of $\prgb \cup \prgc$.
\end{proof}

\begin{corollary}
	\label{cor:strong equivalence condition:1}
	Let $\prga$, $\prgb$ be programs and $\rl$ a rule. If for all
	interpretations $\twia$, $\twib$, $\twia \ent \prga^\twib$ if and only if
	$\twia \ent \prgb^\twib$ \enspace, then $\prga$ is strongly equivalent to
	$\prgb$.
\end{corollary}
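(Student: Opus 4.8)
The plan is to deduce the premise of Proposition~\ref{prop:strong equivalence condition} from the (strictly stronger) premise of the corollary and then apply that proposition verbatim.

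First I would record the elementary fact that for every program $\prg$ and every interpretation $\twib$, $\twib \ent \prg$ if and only if $\twib \ent \prg^\twib$. For a single rule this is precisely the first equivalence in Lemma~\ref{lemma:reduct}; since $\prg^\twib = \Set{\rl^\twib | \rl \in \prg}$ and an interpretation satisfies a program exactly when it satisfies each of its rules, the program-level statement follows by applying the rule-level fact to every $\rl \in \prg$.

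Next I would use the hypothesis of the corollary, namely that for all interpretations $\twia$, $\twib$ one has $\twia \ent \prga^\twib$ if and only if $\twia \ent \prgb^\twib$. Instantiating this with $\twia = \twib$ and combining it with the fact from the previous step gives, for every interpretation $\twib$, that $\twib \ent \prga$ if and only if $\twib \ent \prgb$. Hence for every pair $\twia \subseteq \twib$ the two conjuncts $\twib \ent \prga$ and $\twia \ent \prga^\twib$ are respectively equivalent to $\twib \ent \prgb$ and $\twia \ent \prgb^\twib$, so the conjunction $\twib \ent \prga \land \twia \ent \prga^\twib$ holds exactly when $\twib \ent \prgb \land \twia \ent \prgb^\twib$ holds. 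This is verbatim the hypothesis required by Proposition~\ref{prop:strong equivalence condition}, and invoking that proposition yields that $\prga$ is strongly equivalent to $\prgb$.

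I do not expect any real obstacle: the whole argument is a short chain of equivalences. The only step needing an explicit (one-line) justification is lifting Lemma~\ref{lemma:reduct} from rules to programs, and that is immediate from the definitions of the program reduct and of program satisfaction. The rule $\rl$ appearing in the statement plays no role in the proof — it is a harmless leftover from the statement of Proposition~\ref{prop:strong equivalence condition} and can simply be ignored.
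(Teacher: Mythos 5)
Your proof is correct and follows exactly the paper's route: the paper's own argument is the one-line observation that $\twic \ent \prgc$ iff $\twic \ent \prgc^\twic$ combined with Proposition~\ref{prop:strong equivalence condition}, which is precisely what you spell out (your instantiation $\twia = \twib$ makes the reduction explicit, and your remark that the rule $\rl$ in the statement is vestigial is accurate). The only cosmetic caveat is that Lemma~\ref{lemma:reduct} is stated for the non-nested rules of Section~2, whereas this corollary is also applied to nested programs; the fact holds there too, and the paper itself asserts it without citing that lemma.
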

\begin{proof}
	Follows from Proposition~\ref{prop:strong equivalence condition} and the
	fact that for every interpretation $\twic$ and program $\prgc$, $\twic \ent
	\prgc$ if and only if $\twic \ent \prgc^\twic$.
\end{proof}

\begin{corollary}
	\label{cor:strong equivalence condition:2}
	Let $\frma$, $\frmb$ be formulas and $\rl$ a rule. If for all
	interpretations $\twia$, $\twib$, $\twia \ent \frma^\twib$ if and only if
	$\twia \ent \frmb^\twib$, then the rules $(\hrla \lpif \brla \land \frma.)$
	and $(\hrla \lpif \brla \land \frmb.)$ are strongly equivalent.
\end{corollary}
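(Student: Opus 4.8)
The plan is to derive this as a direct consequence of Corollary~\ref{cor:strong equivalence condition:1}. Indeed, the two rules $(\hrla \lpif \brla \land \frma.)$ and $(\hrla \lpif \brla \land \frmb.)$ differ only in one conjunct of their bodies, so I would like to reduce the claim to the equivalence of the reducts of these two bodies. First I would observe that for any formula $\frm$ and any interpretation $\twib$, the reduct of the conjunction distributes: $(\brla \land \frm)^\twib = \brla^\twib \land \frm^\twib$. This holds because the reduct operation on nested formulas acts compositionally, replacing each maximal negated subformula independently, and a maximal negated subformula of $\brla \land \frm$ lies entirely within $\brla$ or entirely within $\frm$. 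Likewise the reduct of the whole rule is $(\hrla \lpif \brla \land \frm.)^\twib = (\hrla^\twib \lpif \brla^\twib \land \frm^\twib.)$.

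Next, fix interpretations $\twia$, $\twib$. Using the above, $\twia \ent (\hrla \lpif \brla \land \frma.)^\twib$ unfolds by the definition of satisfaction of a nested rule to: $\twia \ent \brla^\twib$ and $\twia \ent \frma^\twib$ together imply $\twia \ent \hrla^\twib$. The analogous statement holds with $\frmb$ in place of $\frma$. By the hypothesis of the corollary, $\twia \ent \frma^\twib$ if and only if $\twia \ent \frmb^\twib$, so the antecedent ``$\twia \ent \brla^\twib$ and $\twia \ent \frma^\twib$'' is equivalent to ``$\twia \ent \brla^\twib$ and $\twia \ent \frmb^\twib$''. Hence the two implications are equivalent, i.e.\ $\twia \ent (\hrla \lpif \brla \land \frma.)^\twib$ if and only if $\twia \ent (\hrla \lpif \brla \land \frmb.)^\twib$.

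Since $\twia$ and $\twib$ were arbitrary, the condition of Corollary~\ref{cor:strong equivalence condition:1} is met (treating each rule as the singleton program containing it), and therefore $(\hrla \lpif \brla \land \frma.)$ is strongly equivalent to $(\hrla \lpif \brla \land \frmb.)$.

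I do not anticipate a genuine obstacle here; the only point that needs a moment of care is the compositionality of the reduct on nested formulas, namely that $(\brla \land \frm)^\twib = \brla^\twib \land \frm^\twib$ and that passing to the reduct of a rule commutes with forming head and body reducts. Both follow immediately from the definition of the reduct for nested programs given earlier in the paper, so the argument is short. One should also note, as the surrounding remark in the paper already stipulates, that we silently identify formulas differing only by reassociation of conjunctions, which is exactly what lets us write $(\brla \land \frm)^\twib = \brla^\twib \land \frm^\twib$ without fuss about grouping.
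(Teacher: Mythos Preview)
Your proposal is correct and follows essentially the same approach as the paper: the paper's proof is the one-liner ``Follows from the definition of rule reduct and from Corollary~\ref{cor:strong equivalence condition:1},'' and you have simply unpacked what that means, namely that the reduct distributes over the body conjunction and that the hypothesis on $\frma$ versus $\frmb$ makes the two rule reducts equisatisfiable, so Corollary~\ref{cor:strong equivalence condition:1} applies.
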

\begin{proof}
	Follows from the definition of rule reduct and from
	Corollary~\ref{cor:strong equivalence condition:1}.
\end{proof}

\begin{definition}
	Given a set of formulas $\sfrm$, we define $\lpnot \sfrm = \Set{\lpnot \frm
	| \frm \in \sfrm}$.
\end{definition}

\begin{lemma}
	\label{lemma:strong equivalence:1}
	Let $\prg$ be a program, $\lit$ a literal, $\sfrm$ be a set of formulas and
	$\twia$, $\twib$ interpretations. Then,
	\begin{textenum}[(i)]
		\item
			\label{lemma:strong equivalence:1:1}
			$\twia \ent \Br{ \bigland \lpnot \sfrm }^\twib$ if and only if
			$\twia \ent \Br{ \lpnot \biglor \sfrm }^\twib$;

		\item
			\label{lemma:strong equivalence:1:2}
			$\twia \ent (\lpnot \acond{\tones{\prg}}{\lit})^\twib$ if and only if
			$
				\twia \ent \Br{
					\biglor_{\sblk \in \sblks{\prg}{\lit}}
					\bigland \lpnot \lpnot \sblk^+
					\land
					\bigland \lpnot \sblk^-
				}^\twib
			$.
	\end{textenum}
\end{lemma}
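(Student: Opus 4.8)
\textbf{Proof plan for Lemma~\ref{lemma:strong equivalence:1}.}

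The plan is to treat the two parts largely independently, using the fact that taking the reduct relative to a fixed interpretation $\twib$ is a syntactic substitution that commutes with the propositional connectives. For part~\eqref{lemma:strong equivalence:1:1}, I would first observe that $\Br{\bigland \lpnot \sfrm}^\twib$ is obtained by replacing each maximal negative subformula $\lpnot \frm$ (for $\frm \in \sfrm$) by $\bot$ if $\twib \ent \frm$ and by $\top$ otherwise, and similarly $\Br{\lpnot \biglor \sfrm}^\twib$ replaces the single maximal negative subformula $\lpnot \biglor \sfrm$ by $\bot$ if $\twib \ent \biglor \sfrm$ and by $\top$ otherwise. Then I would split into the two cases: if $\twib \ent \frm$ for some $\frm \in \sfrm$, both reducts are (equivalent to) $\bot$, so neither is satisfied by any $\twia$; if $\twib \nent \frm$ for all $\frm \in \sfrm$, both reducts are (equivalent to) $\top$, hence satisfied by every $\twia$. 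In either case the biconditional holds for all $\twia$. This is essentially a one-line case analysis once the reduct definition is unpacked.

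For part~\eqref{lemma:strong equivalence:1:2}, I would start from the explicit form $\acond{\tones{\prgu}}{\lit} = (\lit_1^1 \land \dotsb \land \lit_{\lic_1}^1) \lor \dotsb \lor (\lit_1^\lng \land \dotsb \land \lit_{\lic_\lng}^\lng)$ as given in the definition of a blocking set (for the case where $\top$ is not a disjunct; the degenerate cases where $\acond{\tones{\prgu}}{\lit}$ is $\bot$, i.e.\ $\prgu$ has no rule with head $\lit$, and where it contains $\top$ need to be checked separately but both are immediate). Using part~\eqref{lemma:strong equivalence:1:1} I would first rewrite $\lpnot \acond{\tones{\prgu}}{\lit}$ — but more directly I would apply De Morgan and distributivity \emph{on the level of reducts}, i.e.\ show $(\lpnot \acond{\tones{\prgu}}{\lit})^\twib$ is equivalent, as a formula evaluated at $\twia$, to the reduct of $\biglor_{\sblk} (\bigland \lpnot\lpnot \sblk^+ \land \bigland \lpnot \sblk^-)$. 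The cleanest route is: classically, $\lpnot \acond{\tones{\prgu}}{\lit}$ is equivalent to $\bigland_\lib \biglor_{1 \le \lia \le \lic_\lib} \lcmp{\lit_\lia^\lib}$ (negate the disjunction of conjunctions), then distributivity turns this into $\biglor_{\sblk \in \sblks{\prgu}{\lit}} \bigland_{\lit' \in \sblk} \lit'$ where $\sblk$ ranges over all choice-functions picking one complemented literal from each disjunct; and finally each $\lit' \in \sblk$ with $\lit' \in \atms$ is written $\lpnot\lpnot \lit'$. To pass from these \emph{classical} equivalences to the required statement about reducts and satisfaction by $\twia$, I would invoke Corollary~\ref{cor:strong equivalence condition:1} in spirit, or more precisely just recompute: every relevant subformula here is either a default literal $\lpnot \atm$ or a double negation $\lpnot \lpnot \atm$, whose reducts relative to $\twib$ are, respectively, $\top$ or $\bot$ depending on $\atm \in \twib$, and $\lpnot\lpnot \atm$ reduces to $\lpnot \bot$ or $\lpnot \top$ — and these reducts are determined entirely by $\twib(\atm)$, so the chain of De Morgan/distributivity rewrites, which only rearranges and negates these atomic pieces consistently, is preserved under the reduct operation.

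The main obstacle I anticipate is purely bookkeeping: making sure the double-negation handling is correct, since $\sblk^+$ enters via $\lpnot\lpnot \sblk^+$ (double negation, whose reduct is $\lpnot \bot$ or $\lpnot \top$, i.e.\ still behaves like the atom under the nested semantics) while $\sblk^-$ enters via $\lpnot \sblk^-$ (single negation, reduct $\top$ or $\bot$). In other words the asymmetry between the positive and negative parts of a blocking set must be tracked carefully through De Morgan. A secondary minor obstacle is the degenerate cases of $\acond{\tones{\prgu}}{\lit}$ (empty disjunction $= \bot$, giving $\lpnot \bot$ whose reduct is $\top$, matched on the right-hand side by a single empty blocking set $\emptyset$ whose associated conjunction is the empty conjunction $= \top$; and the case $\top$ is a disjunct, giving reduct $\bot$ on the left and an empty set of blocking sets, hence $\biglor \emptyset = \bot$, on the right), which should be dispatched explicitly at the start so the main argument can assume the generic form. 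Once these are pinned down, both parts reduce to the observation that the reduct relative to $\twib$ factors through the truth values $\twib(\atm)$, so that classical propositional rewriting of the formula skeleton is faithful.
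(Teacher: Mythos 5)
Your proposal is correct and follows essentially the same route as the paper's proof: both parts hinge on the observation that the reduct of a maximal negative subformula relative to $\twib$ is $\top$ or $\bot$ determined by $\twib$ alone, so that satisfaction by $\twia$ collapses to a classical condition on $\twib$, and your De Morgan/distributivity step is exactly the combinatorial content of the paper's direct appeal to the definition of blocking sets (your explicit treatment of the degenerate cases is a welcome addition the paper leaves implicit). One cosmetic point: by the paper's reduct definition the \emph{whole} formula $\lpnot \lpnot \atm$ is the maximal negative occurrence, so its reduct is $\top$ or $\bot$ outright rather than $\lpnot \bot$ or $\lpnot \top$ --- the resulting truth values coincide, so nothing in your argument is affected.
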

\begin{proof}
	~
	\begin{textenum}[(i)]
		\item By the definition of reduct, $\twia \ent (\bigland \lpnot
			\sfrm)^\twib$ if and only if for all $\frm \in \sfrm$ it holds that
			$\twib \nent \frm$. This in turn holds if and only if $\twib \nent
			\biglor \sfrm$, which is the case if and only if $\twia \ent (\lpnot
			\biglor \sfrm)^\twib$.

		\item Suppose that $\acond{\tones{\prg}}{\lit}$ is the formula
			$
				(\lit_1^1 \land \dotsb \land \lit_{\lic_1}^1)
				\lor
				\dotsb
				\lor
				(\lit_1^\lng \land \dotsb \land \lit_{\lic_\lng}^\lng)
			$.
			By the definition of reduct, $\twia \ent (\lpnot
			\acond{\tones{\prg}}{\lit})^\twib$ if and only if $\twib \nent
			\acond{\tones{\prg}}{\lit}$. Equivalently, for every $\lib$ with $1 \leq
			\lib \leq \lng$ there exists some $\lia_\lib$ with $1 \leq \lia_\lib
			\leq \lic_\lib$ such that $\twib \nent \lit_{\lia_\lib}^\lib$. By the
			definition of blocking sets, this is equivalent to $\twib \ent \sblk$
			for some $\sblk \in \sblks{\prg}{\lit}$. Equivalently, for some $\sblk
			\in \sblks{\prg}{\lit}$,
			\begin{align*}
				& \forall \lit \in \lpnot \sblk^+ : \twib \nent \lit
				&& \text{and}
				&& \forall \lit \in \sblk^- : \twib \nent \lit
				\enspace,
			\end{align*}
			or in other words,
			$
				\Br{
					\bigland \lpnot \lpnot \sblk^+ \land \bigland \lpnot \sblk^-
				}^\twib
				=
				\bigland_{\lit \in \sblk} \top
			$.
			Equivalently, we can also write 
			$
				\textstyle
				\twia \ent \Br{
					\biglor_{\sblk \in \sblks{\prg}{\lit}}
					\bigland \lpnot \lpnot \sblk^+
					\land
					\bigland \lpnot \sblk^-
				}^\twib
			$.
			\qedhere
	\end{textenum}
\end{proof}


\begin{lemma}
	\label{se:disjunction in body}
	Let $\rla$ be a rule and $\sfrm$ a set of formulas. Then the rule $(\hrla
	\lpif \brla \land \biglor \sfrm.)$ is strongly equivalent to the program
	$\Set{\hrla \lpif \brla \land \frm | \frm \in \sfrm}$.
\end{lemma}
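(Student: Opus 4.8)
The plan is to apply Corollary~\ref{cor:strong equivalence condition:1}, so it suffices to show that for all interpretations $\twia$, $\twib$, $\twia$ is a model of the reduct of the rule $(\hrla \lpif \brla \land \biglor \sfrm.)$ relative to $\twib$ if and only if $\twia$ is a model of the reduct of the program $\Set{\hrla \lpif \brla \land \frm | \frm \in \sfrm}$ relative to $\twib$. Computing reducts, the former reduct is the rule $(\hrla^\twib \lpif \brla^\twib \land (\biglor \sfrm)^\twib.)$ and the latter is $\Set{\hrla^\twib \lpif \brla^\twib \land \frm^\twib | \frm \in \sfrm}$, since the reduct operation on a conjunction acts componentwise and on a set of rules acts rulewise. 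Crucially, $(\biglor \sfrm)^\twib = \biglor \Set{\frm^\twib | \frm \in \sfrm}$ because the reduct is defined by a uniform replacement of maximal negated subformulas, which commutes with the connective $\lor$ at the top level.

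First I would verify the componentwise behaviour of the reduct on disjunctions and conjunctions explicitly, i.e.\ $(\frm_1 \lor \frm_2)^\twib = \frm_1^\twib \lor \frm_2^\twib$ and similarly for $\land$; this is immediate from the definition in Section~\ref{sec:state condensing:nested}, since a maximal occurrence of a formula of the form $\lpnot \frmb$ in $\frm_1 \lor \frm_2$ is a maximal occurrence in exactly one of $\frm_1$ or $\frm_2$. Then I would argue: $\twia \ent \br{\hrla^\twib \lpif \brla^\twib \land (\biglor \sfrm)^\twib.}$ holds iff either $\twia \nent \brla^\twib$, or $\twia \nent (\biglor \sfrm)^\twib = \biglor_{\frm \in \sfrm} \frm^\twib$ (equivalently $\twia \nent \frm^\twib$ for every $\frm \in \sfrm$), or $\twia \ent \hrla^\twib$. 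On the other side, $\twia$ is a model of $\Set{\hrla^\twib \lpif \brla^\twib \land \frm^\twib | \frm \in \sfrm}$ iff for every $\frm \in \sfrm$, either $\twia \nent \brla^\twib$, or $\twia \nent \frm^\twib$, or $\twia \ent \hrla^\twib$. A short case split on whether $\twia \nent \brla^\twib$ or $\twia \ent \hrla^\twib$ holds (in which case both sides are trivially satisfied) versus neither holds (in which case both sides reduce to ``$\twia \nent \frm^\twib$ for all $\frm \in \sfrm$'') finishes the equivalence.

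I do not anticipate a serious obstacle here: the statement is essentially the distributivity of the reduct over $\lor$ combined with the elementary logical fact that a body disjunction $\biglor \sfrm$ firing is the same as at least one disjunct firing, hence the rule is replaced by the family of rules obtained by splitting on the disjuncts. The only point requiring care is the handling of the empty set $\sfrm = \emptyset$: then $\biglor \emptyset$ is $\bot$ (per the convention noted in the excerpt), the rule $(\hrla \lpif \brla \land \bot.)$ is trivially satisfied by every interpretation and every reduct, and the program $\Set{\hrla \lpif \brla \land \frm | \frm \in \emptyset}$ is empty, hence also trivially satisfied; so both sides are strongly equivalent to the empty program and the claim still holds. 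With the reduct-distributivity lemma in hand, the whole argument is a couple of lines of the kind already used in Corollaries~\ref{cor:strong equivalence condition:1} and \ref{cor:strong equivalence condition:2}.
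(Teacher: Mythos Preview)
Your proposal is correct and follows essentially the same approach as the paper: both invoke Corollary~\ref{cor:strong equivalence condition:1} and then verify the equivalence $\twia \ent \rlb^\twib \Leftrightarrow \twia \ent \prg^\twib$ by unpacking the body disjunction into its disjuncts. Your write-up is simply more explicit than the paper's (you spell out the distributivity of the reduct over $\lor$ and treat the $\sfrm=\emptyset$ case separately), whereas the paper compresses the whole verification into a single sentence.
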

\begin{proof}
	Let $\rlb$ denote the rule and $\prg$ the program. According to
	Corollary~\ref{cor:strong equivalence condition:1}, it suffices to prove
	that for all interpretations $\twia$, $\twib$, it holds that $\twia \ent
	\rlb^\twib$ if and only if it holds that $\twia \ent \prg^\twib$. This
	easily follows from the fact that $\twia \ent \rlb^\twib$ if and only if
	$\twia \ent \brlb^\twib$ implies $\twia \ent \hrlb^\twib$, or, equivalently,
	for all $\frm \in \sfrm$, $\twia \ent \brla^\twib \land \frm^\twib$ implies
	$\twia \ent \hrla^\twib$, which is another way of writing $\twia \ent
	\prg^\twib$.
\end{proof}

\begin{lemma}
	\label{lemma:se double negation in body}
	Let $\rla$ be a rule and $\frm$ a formula. Then the rules $(\hrla \lpif
	\brla \land \lpnot \lpnot \frm.)$ and $(\hrla \lor \lpnot \frm \lpif
	\brla.)$ are strongly equivalent.
\end{lemma}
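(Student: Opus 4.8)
The plan is to invoke the strong-equivalence criterion of Corollary~\ref{cor:strong equivalence condition:1}. Write $\rl_1 = (\hrla \lpif \brla \land \lpnot \lpnot \frm.)$ and $\rl_2 = (\hrla \lor \lpnot \frm \lpif \brla.)$; by that corollary it suffices to show that for all interpretations $\twia$, $\twib$ we have $\twia \ent \rl_1^\twib$ if and only if $\twia \ent \rl_2^\twib$. First I would unfold the two reducts using the definition of the reduct of a nested formula. The key observation is that $\lpnot \lpnot \frm$ is itself a formula of the form $\lpnot \frmb$ with $\frmb = \lpnot \frm$, hence a maximal negated occurrence, so $(\lpnot \lpnot \frm)^\twib$ equals $\bot$ when $\twib \ent \lpnot \frm$, i.e.\ when $\twib \nent \frm$, and $\top$ otherwise; likewise $(\lpnot \frm)^\twib$ is $\bot$ when $\twib \ent \frm$ and $\top$ when $\twib \nent \frm$. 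Therefore $\rl_1^\twib = (\hrla^\twib \lpif \brla^\twib \land (\lpnot\lpnot\frm)^\twib.)$ and $\rl_2^\twib = (\hrla^\twib \lor (\lpnot\frm)^\twib \lpif \brla^\twib.)$, and I would then split on whether $\twib$ satisfies $\frm$.

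In the case $\twib \ent \frm$, the body conjunct $(\lpnot\lpnot\frm)^\twib$ is $\top$ and the head disjunct $(\lpnot\frm)^\twib$ is $\bot$, so an interpretation $\twia$ satisfies $\rl_1^\twib$ exactly when $\twia \ent \brla^\twib$ implies $\twia \ent \hrla^\twib$, and the same condition characterises satisfaction of $\rl_2^\twib$; thus the two reducts are satisfied by precisely the same interpretations. In the case $\twib \nent \frm$, the body of $\rl_1^\twib$ contains $\bot$ as a conjunct, so $\rl_1^\twib$ is satisfied vacuously by every interpretation, while the head of $\rl_2^\twib$ contains $\top$ as a disjunct, so $\rl_2^\twib$ is likewise satisfied by every interpretation. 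In all cases $\twia \ent \rl_1^\twib$ if and only if $\twia \ent \rl_2^\twib$, and Corollary~\ref{cor:strong equivalence condition:1} then yields that $\rl_1$ and $\rl_2$ are strongly equivalent.

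I do not expect a real obstacle: the only point demanding care is the correct reading of the reduct of the double negation $\lpnot \lpnot \frm$ — namely, that the outermost $\lpnot$ makes the entire subformula a maximal negated occurrence, so its reduct is determined by the truth value of $\lpnot \frm$ in $\twib$ and the inner $\frm$ is never reduced on its own. Once this is pinned down, the remaining case analysis is a routine verification of satisfaction, and no auxiliary lemma beyond Corollary~\ref{cor:strong equivalence condition:1} is needed.
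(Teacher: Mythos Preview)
Your proposal is correct and follows essentially the same approach as the paper's proof: both invoke Corollary~\ref{cor:strong equivalence condition:1} and split on whether $\twib \ent \frm$, computing the reducts of $\lpnot\lpnot\frm$ and $\lpnot\frm$ in each case. Your explicit remark on why the outermost $\lpnot$ makes $\lpnot\lpnot\frm$ a maximal negated occurrence is a nice clarification that the paper leaves implicit.
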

\begin{proof}
	Let $\rlb_1$ denote the first rule and $\rlb_2$ the second. By
	Corollary~\ref{cor:strong equivalence condition:1}, it suffices to prove
	that for all interpretations $\twia$, $\twib$, it holds that $\twia \ent
	\rlb_1^\twib$ if and only if it holds that $\twia \ent \rlb_2^\twib$. If
	$\twib \ent \frm$, then $\rlb_1^\twib = (\hrla^\twib \lpif \brla^\twib \land
	\top)$ and $\rlb_2^\twib = (\hrla^\twib \lor \bot \lpif \brla^\twib)$, so it
	follows that $\twia \ent \rlb_1^\twib$ holds if and only if $\twia \ent
	\rlb_2^\twib$.

	On the other hand, if $\twib \nent \frm$, then $\rlb_1^\twib = (\hrla \lpif
	\brla \land \bot)$ $\rlb_2^\twib = (\hrla \lor \top \lpif \brla)$ and
	trivially both $\twia \ent \rlb_1^\twib$ and $\twia \ent \rlb_2^\twib$ hold.
\end{proof}

\begin{lemma}
	\label{lemma:se double negation in body with negative head}
	Let $\rla$ be a rule such that $\hrla$ is a default literal and $\brla$ is a
	conjunction of literals and double-negated atoms. For any atom $\atm$, the
	rules $(\hrla \lpif \brla \land \lpnot \lpnot \atm.)$ and $(\hrla \lpif
	\brla \land \atm.)$ are are strongly equivalent.
\end{lemma}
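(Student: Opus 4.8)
The plan is to reduce this claim to the previously established Lemma~\ref{lemma:se double negation in body} (strong equivalence of $(\hrla \lpif \brla \land \lpnot \lpnot \atm.)$ and $(\hrla \lor \lpnot \atm \lpif \brla.)$) together with a standard fact about moving default literals between head and body. Specifically, the key observation is that when $\hrla$ is itself a default literal, say $\hrla = \lpnot \atmb$, the rule $(\hrla \lor \lpnot \atm \lpif \brla.)$ has a head consisting of two default literals, and there is a well-known result \cite{Inoue1998} that such a rule is strongly equivalent to the rule obtained by moving one of the default literals from the head into the body, i.e.\ $(\lpnot \atmb \lpif \brla \land \lpnot \lpnot \atm.)$ ... wait, that is circular; rather, the correct target is $(\lpnot \atm \lpif \brla \land \atmb.)$ or symmetrically $(\lpnot \atmb \lpif \brla \land \atm.)$. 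So the actual chain is: $(\hrla \lpif \brla \land \lpnot \lpnot \atm.) \eqSE (\hrla \lor \lpnot \atm \lpif \brla.) \eqSE (\hrla \lpif \brla \land \atm.)$, where the first step is Lemma~\ref{lemma:se double negation in body} and the second step is the head-to-body shift for default literals.

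First I would verify the second strong equivalence directly via Corollary~\ref{cor:strong equivalence condition:1}, which says it suffices to check that $\twia \ent \rlb_1^\twib$ iff $\twia \ent \rlb_2^\twib$ for all interpretations $\twia \subseteq \twib$, where $\rlb_1 = (\lpnot \atmb \lor \lpnot \atm \lpif \brla.)$ and $\rlb_2 = (\lpnot \atmb \lpif \brla \land \atm.)$. Here I would do a short case analysis on whether $\atm \in \twib$ and whether $\atmb \in \twib$. If $\atm \notin \twib$: then $\rlb_1^\twib$ has $\top$ as a disjunct in its head (since the reduct replaces the maximal negative formula $\lpnot \atm$ by $\top$), so $\twia \ent \rlb_1^\twib$ trivially; and $\rlb_2^\twib = (\hrla^\twib \lpif \brla^\twib \land \atm)$, but $\atm \notin \twib \supseteq \twia$ means $\twia \nent \atm$, so $\twia \ent \rlb_2^\twib$ trivially as well. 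If $\atm \in \twib$: then in $\rlb_1^\twib$ the disjunct $\lpnot \atm$ becomes $\bot$, so $\rlb_1^\twib$ reduces to $(\lpnot \atmb)^\twib \lpif \brla^\twib$, while the body conjunct $\atm$ in $\rlb_2$ is not a negative formula and survives the reduct, with $\twia \ent \atm$ iff $\atm \in \twia$. A further split on $\atmb \in \twib$ handles the head: if $\atmb \in \twib$ the head $(\lpnot \atmb)^\twib = \bot$ in both, reducing both rules to the same constraint-like condition on $\brla^\twib$ (and $\atm \in \twia$); if $\atmb \notin \twib$ the head is $\top$ and both rules are satisfied. In every case the two reducts are satisfied by exactly the same $\twia$, so Corollary~\ref{cor:strong equivalence condition:1} applies.

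Then I would compose: by Lemma~\ref{lemma:se double negation in body}, $(\hrla \lpif \brla \land \lpnot \lpnot \atm.)$ is strongly equivalent to $(\hrla \lor \lpnot \atm \lpif \brla.)$, and by the step just verified (with $\hrla = \lpnot \atmb$) the latter is strongly equivalent to $(\hrla \lpif \brla \land \atm.)$. Transitivity of strong equivalence closes the argument. I do not expect any genuine obstacle here: the only subtlety is bookkeeping around the reduct of disjunctive heads and remembering that an ordinary (non-negated) atom in the body is \emph{not} touched by the reduct operation, whereas a double negation $\lpnot \lpnot \atm$ is a maximal negative occurrence and collapses to $\top$ or $\bot$. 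The hypothesis that $\brla$ is a conjunction of literals and double-negated atoms is not actually needed for the argument via the two cited lemmas — it is presumably stated only to match the form in which the lemma is invoked later — so I would remark that the proof goes through for arbitrary $\brla$, or simply keep the stated hypothesis and not use it. Care must also be taken that the commutative/associative-law abuse of notation from the preceding Remark is in force, so that writing the head as $\hrla \lor \lpnot \atm$ versus $\lpnot \atm \lor \hrla$ causes no issue.
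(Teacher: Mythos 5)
Your overall strategy --- factoring the claim through the disjunctive-head rule $(\hrla \lor \lpnot \atm \lpif \brla.)$ via Lemma~\ref{lemma:se double negation in body} and then shifting the default literal from head to body --- differs from the paper, which proves the lemma by a single direct check of the condition in Proposition~\ref{prop:strong equivalence condition} on the original pair of rules. The detour would be legitimate, but your verification of the second link contains a genuine gap. Write $\hrla = \lpnot \atmb$, $\rlb_1 = (\lpnot\atmb \lor \lpnot\atm \lpif \brla.)$ and $\rlb_2 = (\lpnot\atmb \lpif \brla \land \atm.)$, and consider the subcase $\atm \in \twib$ and $\atmb \in \twib$. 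Then $\rlb_1^\twib$ is $(\bot \lor \bot \lpif \brla^\twib.)$ while $\rlb_2^\twib$ is $(\bot \lpif \brla^\twib \land \atm.)$, and these are \emph{not} satisfied by the same interpretations: any $\twia \subseteq \twib$ with $\twia \ent \brla^\twib$ and $\atm \notin \twia$ (for instance $\brla = \top$, $\twib = \set{\atm,\atmb}$, $\twia = \set{\atmb}$) satisfies $\rlb_2^\twib$ but not $\rlb_1^\twib$. So the reduct-level biconditional you set out to check is false; your phrase ``the same constraint-like condition on $\brla^\twib$ (and $\atm \in \twia$)'' silently merges two different conditions, because the body conjunct $\atm$ survives the reduct of $\rlb_2$ but has no counterpart in the reduct of $\rlb_1$. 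There is also a mismatch of tools: Corollary~\ref{cor:strong equivalence condition:1} quantifies over \emph{all} pairs $\twia$, $\twib$, not only those with $\twia \subseteq \twib$, and without the subset restriction the claimed equivalence fails even more readily.

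The step can be repaired, but only by retreating to Proposition~\ref{prop:strong equivalence condition}, i.e.\ by carrying along the extra conjunct $\twib \ent \rlb_i$: in the problematic subcase, $\twia \ent \brla^\twib$ together with $\twia \subseteq \twib$ yields $\twib \ent \brla$, whence both $\twib \nent \rlb_1$ and $\twib \nent \rlb_2$, and the biconditional holds vacuously. This is precisely the manoeuvre the paper performs directly on the original pair of rules: in the direction from $\rlb_2$ to $\rlb_1$ it derives $\twib \ent \brla \land \atm$, invokes $\twib \ent \rlb_2$ to obtain $\twib \ent \hrla$, and only then uses the hypothesis that $\hrla$ is a default literal to conclude $\hrla^\twib = \top$. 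That head hypothesis is what makes the lemma true at all --- for an atom head the two rules are not strongly equivalent, which is why the companion Lemma~\ref{lemma:se double negation in body with choice rule} must carry a choice rule along --- so any correct version of your argument will have to use the classical-satisfaction conjunct at exactly this point rather than reasoning about reducts alone.
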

\begin{proof}
	Let $\hrla = \lpnot \atmb$, where $\atmb$ is an atom. Also,
	let $\rlb_1$ denote the first rule and $\rlb_2$ the second. By
	Proposition~\ref{prop:strong equivalence condition}, it suffices to prove
	that for all interpretations $\twia$, $\twib$ with $\twia \subseteq \twib$,
	\begin{align*}
		& \twib \ent \rlb_1 \land
			\twia \ent \rlb_1^\twib
		&& \text{if and only if}
		&& \twib \ent \rlb_2 \land
			\twia \ent \rlb_2^\twib
		\enspace,
	\end{align*}

	First suppose that $\twib \ent \rlb_1$ and $\twia \ent \rlb_1^\twib$. Then
	clearly $\twib \ent \rlb_2$ and it remains to prove that $\twia \ent
	\rlb_2^\twib$. Suppose that $\twia \ent \brl[\rlb_2]^\twib$. Then $\twia
	\ent \brla^\twib$ and from $\twia \subseteq \twib$ we conclude that $\twib
	\ent \atm$, so $\twia \ent (\lpnot \lpnot \atm)^\twib$. Consequently,
	$\twia \ent \brl[\rlb_1]^\twib$ and from the assumption that $\twia \ent
	\rlb_1^\twib$ we conclude that $\twia \ent \hrla^\twib$ as desired.

	Now suppose that $\twib \ent \rlb_2$ and $\twia \ent \rlb_2^\twib$. Then
	clearly $\twib \ent \rlb_1$ and it remains to prove that $\twia \ent
	\rlb_1^\twib$. Suppose that $\twia \ent \brl[\rlb_1]^\twib$. Then $\twia
	\ent \brla^\twib$, so it follows from $\twia \subseteq \twib$ that $\twib
	\ent \brla$. Furthermore, from $\twia \ent (\lpnot \lpnot \atm)^\twib$ it
	follows that $\twib \ent \atm$. Thus, $\twib \ent \brl[\rlb_2]$ and from
	$\twib \ent \rlb_2$ we conclude that $\twib \ent \hrla$. This implies that
	$\hrla^\twib = (\lpnot \atmb)^\twib = \top$ and, thus, $\twia \ent
	\hrla^\twib$ as desired.
\end{proof}

\begin{lemma}
	\label{lemma:se double negation in body with choice rule}
	Let $\rla$ be a rule and $\frm$ a formula such that $\hrla$ is an atom and
	both $\brla$ and $\frm$ are conjunctions of literals and double-negated
	atoms. For any atom $\atm$, the programs
	$
		\Set{
			\hrla \lor \lcmp{\hrla} \lpif \brla;\;
			\hrla \lpif \brla \land \frma \land \lpnot \lpnot \atm
		}
	$
	and
	$
		\Set{
			\hrla \lor \lcmp{\hrla} \lpif \brla;\;
			\hrla \lpif \brla \land \frma \land \atm
		}
	$
	are strongly equivalent.
\end{lemma}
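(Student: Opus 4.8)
The statement to prove is Lemma~\ref{lemma:se double negation in body with choice rule}: given a rule $\rla$ with an atom in the head, and formulas $\brla$, $\frm$ that are conjunctions of literals and double-negated atoms, the two programs
\[
	\prga = \Set{
		\hrla \lor \lcmp{\hrla} \lpif \brla;\;
		\hrla \lpif \brla \land \frma \land \lpnot \lpnot \atm
	},
	\quad
	\prgb = \Set{
		\hrla \lor \lcmp{\hrla} \lpif \brla;\;
		\hrla \lpif \brla \land \frma \land \atm
	}
\]
are strongly equivalent. The plan is to invoke Proposition~\ref{prop:strong equivalence condition}: it suffices to show that for every pair of interpretations $\twia \subseteq \twib$ we have $\twib \ent \prga \land \twia \ent \prga^\twib$ if and only if $\twib \ent \prgb \land \twia \ent \prgb^\twib$. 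Note that the choice rule $(\hrla \lor \lcmp{\hrla} \lpif \brla.)$ is common to both programs, so $\twib \ent \prga$ iff $\twib \ent \prgb$ already reduces to checking that $\twib$ satisfies the second rule of $\prga$ iff it satisfies the second rule of $\prgb$; but since $\twib \ent \lpnot \lpnot \atm$ iff $\twib \ent \atm$, these two second rules have the same models, so model-satisfaction coincides. The work is thus in the reduct condition.

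First I would fix $\twia \subseteq \twib$ and split on whether $\atm \in \twib$. If $\atm \in \twib$, then $(\lpnot \lpnot \atm)^\twib = \top$ and $\atm^\twib = \atm$, and since $\twia \subseteq \twib$ with $\atm \in \twib$ we cannot immediately conclude $\atm \in \twia$ — so here the two reducts of the second rules are $(\hrla^\twib \lpif \brla^\twib \land \frm^\twib \land \top)$ versus $(\hrla^\twib \lpif \brla^\twib \land \frm^\twib \land \atm)$. This is where the choice rule earns its keep: I would argue that whenever $\twia$ satisfies the reduct of the choice rule and $\twia \ent \brla^\twib$, then $\twia$ contains one of $\hrla$ or — wait, $\lcmp{\hrla} = \lpnot \hrla$ reduces away — so the reduct of the choice rule is $(\hrla^\twib \lor (\lpnot \hrla)^\twib \lpif \brla^\twib)$, and $(\lpnot \hrla)^\twib$ is $\bot$ if $\hrla \in \twib$ and $\top$ if $\hrla \notin \twib$. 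In the latter subcase the choice-rule reduct forces nothing in the head, but then $\hrla \notin \twib \supseteq \twia$, so the body of the second-rule reduct, which contains $\atm$ only after the equivalence, still has to be handled; and in the former subcase ($\hrla \in \twib$) the choice-rule reduct is $(\hrla \lpif \brla^\twib)$, so any $\twia$ satisfying it with $\twia \ent \brla^\twib$ already has $\hrla \in \twia$, making both second-rule reducts trivially satisfied by $\twia$. So the genuinely delicate configuration is: $\atm \in \twib$, $\hrla \notin \twib$, $\twia \ent \brla^\twib \land \frm^\twib$ — I would check that in this configuration $\twib$ fails to satisfy the second rule of (either) program unless its body-reduct is also unsatisfied, tying things back through $\twib \ent \prga$; essentially, if $\twib \ent \brla \land \frm \land \atm$ then $\twib \ent \hrla$, contradicting $\hrla \notin \twib$, so actually $\twib \nent \brla$ or $\twib \nent \frm$, hence $\brla^\twib$ or $\frm^\twib$ reduces to $\bot$ and neither second-rule reduct is ever triggered at $\twia$. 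If $\atm \notin \twib$, then $(\lpnot \lpnot \atm)^\twib = \bot$ and $\atm^\twib = \atm$ but $\atm \notin \twib \supseteq \twia$, so again both second-rule reducts have an unsatisfiable body under $\twia$, and the two conditions coincide (governed entirely by the shared choice rule).

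The main obstacle will be bookkeeping the interaction in the configuration $\atm \in \twib$, $\hrla \notin \twib$: one must be careful that $\twia$ might satisfy $\brla^\twib \land \frm^\twib$ (since these are reducts, not the original formulas) even though $\twib$ itself does not satisfy $\brla \land \frm$, and use the fact that $\brla$ and $\frm$ are conjunctions of literals and double-negated atoms — so their reducts are conjunctions of $\top$/$\bot$ with the positive literals and double-negated-reduced-to-$\top$-or-$\bot$ pieces — to pin down exactly when the reduct body becomes unsatisfiable. I would also note as a sanity check that Lemma~\ref{lemma:se double negation in body with negative head} handles the superficially similar case without the choice rule but with a default-literal head, and Lemma~\ref{lemma:se double negation in body} handles the plain $\lpnot\lpnot$ move into the head, so the present lemma is the ``atom head plus choice rule'' variant where the choice rule supplies precisely the extra derivation power that lets $\lpnot\lpnot\atm$ and $\atm$ be interchanged in the body. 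Once the case split is laid out cleanly, each branch is a short verification, and Proposition~\ref{prop:strong equivalence condition} closes the argument.
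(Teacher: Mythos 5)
Your proposal is correct and follows essentially the same route as the paper's proof: both reduce the claim to Proposition~\ref{prop:strong equivalence condition}, both exploit the choice rule to conclude $\hrla \in \twia$ whenever $\twia$ satisfies $\brla^\twib$ and $\hrla \in \twib$, and both use the restriction of $\brla$ and $\frm$ to conjunctions of literals and double-negated atoms to lift satisfaction of the reduct body by $\twia$ up to satisfaction of the original body by $\twib$. The only difference is organizational (your case split on the truth values of $\atm$ and $\hrla$ in $\twib$ versus the paper's two directed implications), and your one loose phrase --- that $\brla^\twib$ or $\frm^\twib$ ``reduces to $\bot$'' when it may instead merely contain a positive atom outside $\twib \supseteq \twia$ --- is harmless and already flagged in your own bookkeeping remarks.
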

\begin{proof}
	Let $\prg_1$ denote the first program and $\prg_2$ the second one. Also, let
	$\rlb$ denote the rule $(\hrla \lor \lcmp{\hrla} \lpif \brla)$, $\rlb_1$ the
	rule $(\hrla \lpif \brla \land \frma \land \lpnot \lpnot \atm)$ and
	$\rlb_2$ the rule $(\hrla \lpif \brla \land \frma \land \atm)$. Thus,
	$\prg_1 = \set{\rlb, \rlb_1}$ and $\prg_2 = \set{\rlb, \rlb_2}$.

	According to Proposition~\ref{prop:strong equivalence condition}, it
	suffices to prove that for all interpretations $\twia$, $\twib$ with $\twia
	\subseteq \twib$,
	\begin{align*}
		& \twib \ent \prg_1 \land
			\twia \ent \prg_1^\twib
		&& \text{if and only if}
		&& \twib \ent \prg_2 \land
			\twia \ent \prg_2^\twib
		\enspace,
	\end{align*}

	First assume that $\twib \ent \prg_1$ and $\twia \ent \prg_1^\twib$.
	Clearly, it follows that $\twib \ent \prg_2$ and $\twia \ent \rlb^\twib$, so
	it remains to verify that $\twia \ent \rlb_2^\twib$. Suppose that $\twia
	\ent \brl[\rlb_2]^\twib$. Then, since $\twia \subseteq \twib$, it follows
	that $\twib \ent \atm$ and so $(\lpnot \lpnot \atm)^\twib = \top$.
	Consequently, $\twia \ent \brl[\rlb_1]^\twib$ and from the assumption that
	$\twia \ent \prg_1^\twib$ we can conclude that $\twia \ent \hrla^\twib$ as
	desired.

	For the converse implication, assume that $\twib \ent \prg_2$ and $\twia
	\ent \prg_2^\twib$. It immediately follows that $\twib \ent \prg_1$ and
	$\twia \ent \rlb^\twib$, so it remains to verify that $\twia \ent
	\rlb_1^\twib$. Suppose that $\twia \ent \brl[\rlb_1]^\twib$. From $\twia
	\subseteq \twib$ and the assumption that both $\brla$ and $\frm$ are
	conjunctions of literals and double-negated atoms, we conclude
	that $\twib \ent \brl[\rlb_1]$, so it follows that $\twib \ent \hrla$.
	Consequently, $(\lcmp{\hrla})^\twib = \bot$ and since $\twia \ent
	\rlb^\twib$, $\twia \ent \brla^\twib$ implies that $\twia \ent \hrla^\twib$
	as desired.
\end{proof}

\begin{lemma}
	\label{lemma:blocking set decomposition}
	Let $\prga$, $\prgb$ be programs and $\lit$ a literal. Then,
	\[
		\sblks{\prga \cup \prgb}{\lit}
		=
		\Set{
			\sblka \cup \sblkb
			|
			\sblka \in \sblks{\prga}{\lit} \land \sblkb \in \sblks{\prgb}{\lit}
		}
		\enspace.
	\]
\end{lemma}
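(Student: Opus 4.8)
\textbf{Proof proposal for Lemma~\ref{lemma:blocking set decomposition}.}

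The plan is to unfold the definition of a blocking set for a union of programs and recognise it as a Cartesian-product style combination. First I would recall that $\acond{\tones{\prga \cup \prgb}}{\lit}$ is the disjunction over all rules in $\tones{\prga \cup \prgb}$ with head $\lit$ of the conjunction of their bodies; since $\tones{\prga \cup \prgb} = \tones{\prga} \cup \tones{\prgb}$, this disjunction simply concatenates the disjuncts coming from $\tones{\prga}$ with those coming from $\tones{\prgb}$. I would handle the degenerate cases up front: if $\top$ appears among the disjuncts of $\acond{\tones{\prga}}{\lit}$ (equivalently, some rule for $\lit$ in $\prga$ has an empty body), then $\top$ also appears in $\acond{\tones{\prga \cup \prgb}}{\lit}$, so both $\sblks{\prga}{\lit}$ and $\sblks{\prga \cup \prgb}{\lit}$ are empty and the claimed identity reads $\emptyset = \emptyset$; symmetrically for $\prgb$. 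Also, if $\prga$ has no rules with head $\lit$, then $\acond{\tones{\prga}}{\lit} = \biglor \emptyset = \bot$ and by the definition $\sblks{\prga}{\lit}$ is the set of all sets of literals obtained by choosing one complemented literal from each of the $0$ disjuncts, i.e.\ $\sblks{\prga}{\lit} = \set{\emptyset}$ (the empty product has a single element, the empty tuple); in that case $\sblks{\prga \cup \prgb}{\lit} = \sblks{\prgb}{\lit}$ and the right-hand side collapses to $\Set{\emptyset \cup \sblkb | \sblkb \in \sblks{\prgb}{\lit}} = \sblks{\prgb}{\lit}$, as required. Here it is worth being careful to use the convention, already adopted in the paper, that $\biglor \emptyset = \bot$, and to match it with the convention that the empty product of disjunct-choices yields exactly $\set{\emptyset}$.

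In the principal case neither activation formula contains $\top$ and both programs have rules with head $\lit$. Write $\acond{\tones{\prga}}{\lit} = C_1 \lor \dotsb \lor C_\lnga$ and $\acond{\tones{\prgb}}{\lit} = D_1 \lor \dotsb \lor D_\lngb$, where each $C_\lia$ and $D_\lib$ is a conjunction of literals, say $C_\lia = \lit_1^{\lia} \land \dotsb \land \lit_{\lic_\lia}^{\lia}$ and likewise for the $D_\lib$. Then $\acond{\tones{\prga \cup \prgb}}{\lit} = C_1 \lor \dotsb \lor C_\lnga \lor D_1 \lor \dotsb \lor D_\lngb$. By Definition of a blocking set, an element of $\sblks{\prga \cup \prgb}{\lit}$ is obtained by choosing, for each of the $\lnga + \lngb$ disjuncts, one of its conjuncts, complementing it, and collecting the results into a set. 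Such a choice factors uniquely into a choice over the first $\lnga$ disjuncts -- yielding exactly an element $\sblka$ of $\sblks{\prga}{\lit}$ -- and a choice over the last $\lngb$ disjuncts -- yielding exactly an element $\sblkb$ of $\sblks{\prgb}{\lit}$ -- and the resulting set is their union $\sblka \cup \sblkb$. Conversely, any pair $(\sblka, \sblkb) \in \sblks{\prga}{\lit} \times \sblks{\prgb}{\lit}$ arises from such a factored choice, so $\sblka \cup \sblkb \in \sblks{\prga \cup \prgb}{\lit}$. This gives the desired set equality.

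I do not anticipate a genuine obstacle here; the lemma is essentially bookkeeping about how disjunctions concatenate under $\cup$. The one delicate point, and the step I would state most carefully, is the handling of the edge cases: making sure the empty-disjunction convention ($\biglor \emptyset = \bot$, with a single empty blocking set) is applied consistently on both sides, and that the presence of $\top$ as a disjunct in either activation formula is correctly propagated to the union. Once those conventions are pinned down, the principal case is an immediate factorisation of a product of finite choice sets, and the proof is complete.
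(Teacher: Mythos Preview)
Your proposal is correct and follows essentially the same approach as the paper: write the activation formula for the union as the concatenation of the disjuncts coming from $\prga$ and from $\prgb$, then observe that choosing one complemented conjunct per disjunct factors exactly into a choice over the $\prga$-disjuncts and a choice over the $\prgb$-disjuncts. The paper's proof is terser and does not spell out the degenerate cases ($\top$ as a disjunct, or no rules with head $\lit$) that you handle explicitly; your treatment of those edge cases is a welcome addition rather than a deviation.
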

\begin{proof}
	Suppose that $\acond{\tones{\prga}}{\lit}$, $\acond{\tones{\prgb}}{\lit}$
	and $\acond{\tones{(\prga \cup \prgb)}}{\lit}$ are, respectively, of the forms
	\begin{align*}
		(\lit_1^1 \land \dotsb \land \lit_{\lic_1}^1)
		&\lor
		\dotsb
		\lor
		(\lit_1^\lnga \land \dotsb \land \lit_{\lic_\lnga}^\lnga)
		\enspace, \\
		(\lit_1^{\lnga + 1} \land \dotsb \land \lit_{\lic_{\lnga + 1}}^{\lnga + 1})
		&\lor
		\dotsb
		\lor
		(\lit_1^\lngb \land \dotsb \land \lit_{\lic_\lngb}^\lngb)
		\enspace, \\
		(\lit_1^1 \land \dotsb \land \lit_{\lic_1}^1)
		&\lor
		\dotsb
		\lor
		(\lit_1^\lngb \land \dotsb \land \lit_{\lic_\lngb}^\lngb)
		\enspace.
	\end{align*}
	By the definition, $\sblka \in \sblks{\prga}{\lit}$ and $\sblkb \in
	\sblks{\prgb}{\lit}$ if and only if
	\begin{align*}
		\sblka
		&=
		\Set{
			\lcmp{\lit_{\lia_1}^1},
			\dotsc,
			\lcmp{\lit_{\lia_\lnga}^\lnga}
		}
		&& \text{and}
		& \sblkb
		&=
		\Set{
			\lcmp{\lit_{\lia_{\lnga + 1}}^{\lnga + 1}},
			\dotsc,
			\lcmp{\lit_{\lia_\lngb}^\lngb}
		}
	\end{align*}
	where $1 \leq \lia_\lib \leq \lic_\lib$ for every $\lib$ with $1 \leq \lib
	\leq \lngb$. This is equivalent to $\sblka \cup \sblkb$ being a member of
	$\sblks{\prga \cup \prgb}{\lit}$.
\end{proof}

\begin{lemma}
	\label{lemma:ju disjunctive binary operator:1}
	Let $\dprg = \seq{\prg_\lia}_{\lia < \lng}$ be a DLP. Then $\biguoprjulor
	\dprg$ consists of the following rules:
	\begin{textenum}[1.]
		\item for all $\rla \in \expv{\prg_\lia}$ with $\lia < \lng - 1$ such that
			$\hrla = \set{\atm}$ for some $\atm \in \atms$, and all $\sblk \in
			\sblks{\bigcup_{\lia < \lib < \lng} \expv{\prg_\lib}}{\lpnot \atm}$, the
			rule
			$
				\Br{
					\hrla; \lcmp{\sblk^+}
					\lpif
					\brla, \lcmp{\sblk^-}
					.
				}
			$;

		\item for all $\rla \in \expv{\prg_\lia}$ with $\lia < \lng - 1$ such that
			$\hrla = \set{\lpnot \atm}$ for some $\atm \in \atms$, and all $\sblk
			\in \sblks{\bigcup_{\lia < \lib < \lng} \expv{\prg_\lib}}{\atm}$, the
			rule
			$
				\Br{
					\hrla
					\lpif
					\brla, \sblk^+, \lcmp{\sblk^-}
					.
				}
				\enspace;
			$;

		\item all rules in $\expv{\prg_{\lng - 1}}$.
	\end{textenum}
\end{lemma}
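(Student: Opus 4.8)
The statement to prove is Lemma~\ref{lemma:ju disjunctive binary operator:1}, which characterizes $\biguoprjulor \dprg$ for an arbitrary DLP $\dprg = \seq{\prg_\lia}_{\lia < \lng}$. This is the disjunctive-program analogue of Lemma~\ref{lemma:ju binary operator:1}, which already gives the corresponding characterization of the nested-program operator $\biguoprju \tones{\dprg}$.

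\textbf{Plan.} The proof will proceed by induction on $\lng$, mirroring the structure of the proof of Lemma~\ref{lemma:ju binary operator:1}. For the base cases $\lng = 0$ and $\lng = 1$, one unfolds the definition of the generalized operator: $\biguoprjulor \seq{} = \emptyset$, and $\biguoprjulor \seq{\prg_0} = \emptyset \uoprjulor \prg_0$, which by the definition of $\uoprjulor$ consists exactly of the rules of $\expv{\prg_0}$ (since there are no rules in the empty original program to rewrite, and $\bigcup_{0 < \lib < 1} \expv{\prg_\lib} = \emptyset$, so only item 3 of the claimed list applies). For the inductive step, assume the claim holds for sequences of length $\lng$, and consider $\dprg = \seq{\prg_\lia}_{\lia < \lng + 1}$. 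By definition of the generalized operator, $\biguoprjulor \dprg = (\biguoprjulor \seq{\prg_\lia}_{\lia < \lng}) \uoprjulor \prg_\lng$. Applying the induction hypothesis to the inner operator and then the definition of $\uoprjulor$, one must recompute, for each rule in the inner program, its rewriting against the blocking sets determined by $\expv{\prg_\lng}$.

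\textbf{Key steps.} The heart of the argument is to track how the body of a rule changes as successive updates are applied. A rule $\rla \in \expv{\prg_\lia}$ with head $\set{\lit}$, after being rewritten by the inductive hypothesis, appears (possibly split into several rules) with extra body/head literals coming from blocking sets in $\sblks{\bigcup_{\lia < \lib < \lng} \expv{\prg_\lib}}{\lcmp{\lit}}$. Rewriting each such rule again against $\prg_\lng$ adds further blocking-set literals from $\sblks{\expv{\prg_\lng}}{\lcmp{\lit}}$. The crucial combinatorial fact that makes the two-stage rewriting collapse into a single-stage one is Lemma~\ref{lemma:blocking set decomposition}: $\sblks{\prga \cup \prgb}{\lit} = \set{\sblka \cup \sblkb \mid \sblka \in \sblks{\prga}{\lit} \land \sblkb \in \sblks{\prgb}{\lit}}$. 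Thus the union of blocking sets from $\bigcup_{\lia < \lib < \lng} \expv{\prg_\lib}$ with those from $\expv{\prg_\lng}$ yields exactly the blocking sets from $\bigcup_{\lia < \lib < \lng + 1} \expv{\prg_\lib}$, which is the desired form. One must do this bookkeeping separately for the two cases $\lit \in \atms$ (where blocking-set literals split the head) and $\lit \notin \atms$ (where they go into the body), since $\uoprjulor$ treats these differently; and one must check that the previously-rewritten rule indeed still has original head $\lit$, so that it falls into the correct case of the $\uoprjulor$ definition when rewritten again. Finally, the rules in item 3 are handled by noting that the rules of $\expv{\prg_\lng}$ pass through unchanged as the ``update component'' of the last operator application, while the rules of $\expv{\prg_{\lng - 1}}$ that were item 3 of the inner list become subject to rewriting against $\expv{\prg_\lng}$ and move into items 1 or 2.

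\textbf{Main obstacle.} The principal difficulty is notational and combinatorial rather than conceptual: keeping precise track of the blocking-set decomposition through the induction, and in particular verifying that when a rule from $\expv{\prg_\lia}$ has already been split into multiple disjunctive rules by the earlier rewriting, re-applying $\uoprjulor$ produces exactly the rules indexed by $\sblk \in \sblks{\bigcup_{\lia < \lib < \lng+1} \expv{\prg_\lib}}{\lcmp{\lit}}$ with no duplication and no omission. The identification of the ``original head'' of an already-rewritten rule (Definition of $\uoprjulor$, which distinguishes $\lit \in \atms$ from $\lit \notin \atms$) is the subtle point that must be checked carefully at each stage, since after rewriting, a rule with original head $\atm \in \atms$ has $\hrl \cap \atms = \set{\atm}$ but additional default literals $\lcmp{\sblk^+}$ in its head. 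Once these invariants are set up, the step itself is a routine unfolding combined with one application of Lemma~\ref{lemma:blocking set decomposition}.
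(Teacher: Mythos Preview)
Your proposal is correct and follows exactly the approach the paper indicates: induction on $\lng$ with Lemma~\ref{lemma:blocking set decomposition} supplying the key combinatorial step that merges the blocking sets from $\bigcup_{\lia < \lib < \lng} \expv{\prg_\lib}$ and $\expv{\prg_\lng}$ into those from $\bigcup_{\lia < \lib < \lng+1} \expv{\prg_\lib}$. The paper's own proof is just the one-line remark ``Follows by induction on $\lng$ using Lemma~\ref{lemma:blocking set decomposition}'', so you have essentially written out the details it omits, including the important check that the original-head invariant is preserved through successive rewritings.
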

\begin{proof}
	Follows by induction on $\lng$ using Lemma~\ref{lemma:blocking set
	decomposition}.
\end{proof}

\begin{lemma}
	\label{lemma:as disjunctive binary operator:1}
	Let $\dprg = \seq{\prg_\lia}_{\lia < \lng}$ be a DLP. Then $\biguopraslor
	\dprg$ consists of the following rules:
	\begin{textenum}[1.]
		\item for all $\rla \in \expv{\prg_\lia}$ with $\lia < \lng - 1$ and all
			$\sblk \in \Sblks{\bigcup_{\lia < \lib < \lng}
			\expv{\prg_\lib}}{\lcmp{\hrla}}$, the rule
			$
				\Br{
					\hrla
					\lpif
					\brla, \sblk^+, \lcmp{\sblk^-}
					.
				}
			$;

		\item for all $\rla \in \prg_\lia$ with $\lia < \lng$ such that $\hrla =
			\set{\atm}$ for some $\atm \in \atms$, the rule
			$
				\Br{
					\atm; \lpnot \atm \lpif \brla.
				}
			$;

		\item all rules in $\expv{\prg_{\lng - 1}}$.
	\end{textenum}
\end{lemma}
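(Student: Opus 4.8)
The final statement is Lemma~\ref{lemma:as disjunctive binary operator:1}, which gives a closed-form description of the rules in $\biguopraslor \dprg$ for a DLP $\dprg = \seq{\prg_\lia}_{\lia < \lng}$. The proof proposal follows.

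\medskip

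The plan is to prove this by induction on the length $\lng$ of the DLP, exactly in the style of the companion Lemma~\ref{lemma:ju disjunctive binary operator:1} for $\uoprjulor$, relying on Lemma~\ref{lemma:blocking set decomposition} to combine blocking sets across consecutive updates. First I would dispose of the base cases: for $\lng = 0$ the sequence is empty and $\biguopraslor \seq{} = \emptyset$, matching the empty rule set; for $\lng = 1$ we have $\biguopraslor \seq{\prg_0} = \emptyset \uopraslor \prg_0$, and unwinding the definition of $\uopraslor$ with $\prga = \emptyset$ shows this consists exactly of the rules from $\expv{\prg_0}$ (clause~4 of the definition) together with choice rules $(\atm; \lpnot \atm \lpif \brla.)$ for rules $\rla \in \prg_0$ with $\hrla = \set{\atm}$ (clause~3); clauses~1 and~2 contribute nothing since $\prga = \emptyset$. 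This matches the claimed form with $\lng - 1 = 0$, where the first bullet is vacuous, the second bullet ranges over $\prg_0$, and the third bullet is $\expv{\prg_0}$.

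\medskip

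For the inductive step I would write $\dprg' = \seq{\prg_\lia}_{\lia < \lng}$ and $\dprg = \seq{\prg_\lia}_{\lia < \lng + 1}$, so that $\biguopraslor \dprg = (\biguopraslor \dprg') \uopraslor \prg_\lng$, and apply the induction hypothesis to describe $\biguopraslor \dprg'$. Then I would apply the four clauses of the definition of $\uopraslor$ to the pair $(\biguopraslor \dprg', \prg_\lng)$ and verify that the result matches the claimed form for $\dprg$. The key point is clause~1: for each rule $\rla$ in $\expv{\biguopraslor \dprg'}$ with $\hrla = \set{\lit}$ and each $\sblk \in \Sblks{\expv{\prg_\lng}}{\lcmp{\lit}}$, we get the rule $(\hrla \lpif \brla, \sblk^+, \lcmp{\sblk^-}.)$. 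Using the induction hypothesis, such an $\rla$ is itself of the form $(\hrl[\rlc] \lpif \brl[\rlc], \sblk'^+, \lcmp{\sblk'^-}.)$ for some $\rlc \in \expv{\prg_\lia}$ with $\lia < \lng$ and some $\sblk' \in \Sblks{\bigcup_{\lia < \lib < \lng} \expv{\prg_\lib}}{\lcmp{\hrl[\rlc]}}$ (or a rule of $\expv{\prg_{\lng-1}}$, the $\lng-1$ case of the inductive formula), and then Lemma~\ref{lemma:blocking set decomposition} gives $\sblk' \cup \sblk \in \Sblks{\bigcup_{\lia < \lib < \lng + 1} \expv{\prg_\lib}}{\lcmp{\hrl[\rlc]}}$, so the composed rule has precisely the shape demanded by bullet~1 of the lemma for $\dprg$. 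I would also need to track that the choice rules of the form $(\atm; \lpnot \atm \lpif \brl.)$ already present in $\biguopraslor \dprg'$ are carried over by clause~2 of the definition (their heads being of the form $\set{\atm, \lpnot \atm}$), and that clause~3 adds exactly the new choice rules coming from $\prg_\lng$; together these account for bullet~2 of the lemma ranging over all $\prg_\lia$ with $\lia < \lng + 1$. Finally clause~4 contributes $\expv{\prg_\lng}$, which is bullet~3 for $\dprg$, while the rules of $\expv{\prg_{\lng-1}}$ that were the third bullet for $\dprg'$ now fall under the first/second bullets for $\dprg$ (with a single trivial blocking-set combination) — this bookkeeping, that the "last program" rules of the shorter sequence become "non-last" rules of the longer one, is where one has to be careful.

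\medskip

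I expect the main obstacle to be precisely this bookkeeping at the boundary between the bullets, i.e.\ correctly matching the $\expv{\prg_{\lng-1}}$ rules of the inductive hypothesis against the two "non-last" bullets after one more program is appended, and making sure the blocking-set combination via Lemma~\ref{lemma:blocking set decomposition} is applied symmetrically to the rules whose original head is an atom (bullet~1, where the blocking-set complements go into the head as disjuncts before being absorbed — note that in $\uopraslor$ the first group is treated like the second, moving everything to the body) versus those whose original head is a default literal. A secondary subtlety is the treatment of the extra choice rules: one must check that applying $\uopraslor$ again does not duplicate or further transform them, which follows because a choice rule $(\atm; \lpnot \atm \lpif \brl.)$ has head $\set{\atm, \lpnot \atm}$, matching clause~2 of the definition verbatim and being untouched by clauses~1, 3, 4. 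Once these correspondences are laid out carefully the verification is routine, paralleling the proof of Lemma~\ref{lemma:ju disjunctive binary operator:1}.
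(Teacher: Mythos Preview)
Your proposal is correct and takes essentially the same approach as the paper: the paper's proof is the one-liner ``Follows by induction on $\lng$ using Lemma~\ref{lemma:blocking set decomposition}'', and you have simply spelled out that induction in detail, correctly tracking how the four clauses of the definition of $\uopraslor$ map onto the three bullets of the lemma and how Lemma~\ref{lemma:blocking set decomposition} recombines blocking sets at each step.
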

\begin{proof}
	Follows by induction on $\lng$ using Lemma~\ref{lemma:blocking set
	decomposition}.
\end{proof}

\begin{proof}
	[\textbf{Proof of Theorem~\ref{thm:ju disjunctive binary operator}}]
	\label{proof:thm:ju disjunctive binary operator}
	~
	\begin{textenum}[(i)]
		\item Due to Theorem~\ref{thm:ju binary operator}, it suffices to show
			that the programs $\biguoprju \tones{\dprg}$ and $\biguoprjulor \dprg$
			have the same stable models. To see that this is indeed the case,
			consider the contents of these programs, as established in
			Lemmas~\ref{lemma:ju binary operator:1} and \ref{lemma:ju disjunctive
			binary operator:1}. For all $\rl \in \tones{\expv{\prg_\lia}}$ with
			$\lia < \lng - 1$, $\biguoprju \tones{\dprg}$ contains the nested rule
			$
				\Br{
					\hrla
					\lpif
					\brla \land \bigland_{\lia < \lib < \lng}
					\lpnot \acond{\tones{\expv{\prg_\lib}}}{\lcmp{\hrla}}
					.
				}
			$.
			By Lemma~\ref{lemma:strong equivalence:1}\eqref{lemma:strong
			equivalence:1:1} and Corollary~\ref{cor:strong equivalence condition:2},
			this rule is strongly equivalent to the rule
			$
				\Br{
					\hrla
					\lpif
					\brla \land \lpnot \biglor_{\lia < \lib < \lng}
					\acond{\tones{\expv{\prg_\lib}}}{\lcmp{\hrla}}
					.
				}
			$
			which, by the definition of activation condition, can also be written as
			$
				\Br{
					\hrla
					\lpif
					\brla \land \lpnot
					\acond{\bigcup_{\lia < \lib < \lng}
					\tones{\expv{\prg_\lib}}}{\lcmp{\hrla}}
					.
				}
			$.
			Furthermore, due to Lemma~\ref{lemma:strong
			equivalence:1}\eqref{lemma:strong equivalence:1:2} and
			Corollary~\ref{cor:strong equivalence condition:2}, the latter rule is
			strongly equivalent to the rule
			\[
				\hrla
				\lpif
				\brla \land
					\biglor_{\sblk \in \sblks
						{\bigcup_{\lia < \lib < \lng} \expv{\prg_\lib}}
						{\lcmp{\hrla}}
					} \bigland \lpnot \lpnot \sblk^+ \land \bigland \lpnot \sblk^-
					.
			\]
			and by using Lemma~\ref{se:disjunction in body} we obtain the strongly
			equivalent program that contains, for each $\sblk \in \sblks{\bigcup_{\lia <
			\lib < \lng} \expv{\prg_\lib}}{\lcmp{\hrla}}$, the rule
			\[
				\hrla
				\lpif
				\brla
					\land \bigland \lpnot \lpnot \sblk^+
					\land \bigland \lpnot \sblk^-
				.
			\]
			Double-negated atoms from the bodies of these rules
			can be eliminated using Lemmas~\ref{lemma:se double negation in body} and
			\ref{lemma:se double negation in body with negative head}, obtaining
			$
				\Br{
					\hrla \lor \biglor \lpnot \sblk^+
					\lpif
					\brla \land \bigland \lpnot \sblk^-
					.
				}
			$
			if $\hrla$ is an atom and
			$
				\Br{
					\hrla
					\lpif
					\brla \land \bigland \sblk^+ \land \bigland \lpnot \sblk^-
					.
				}
			$
			if $\hrla$ is a default literal. In this way, the original nested rules
			can be converted, one at a time, into a strongly equivalent disjunctive
			program. After this process is finished, the nested syntax can be
			converted to the syntax of disjunctive programs and the result coincides
			with the disjunctive program $\biguoprjulor \dprg$.

		\item Due to Theorem~\ref{thm:ju binary operator}, it suffices to show
			that the programs $\biguopras \tones{\dprg}$ and $\biguopraslor \dprg$
			have the same stable models. To see that this is indeed the case,
			consider the contents of these programs, as established in
			Lemmas~\ref{lemma:as binary operator:1} and \ref{lemma:as disjunctive
			binary operator:1}. For all $\rl \in \tones{\expv{\prg_\lia}}$ with
			$\lia < \lng - 1$, $\biguopras \dprg$ contains the nested rule
			$
				\Br{
					\hrla
					\lpif
					\brla \land \bigland_{\lia < \lib < \lng}
					\lpnot \acond{\tones{\expv{\prg_\lib}}}{\lcmp{\hrla}}
					.
				}
			$.
			By Lemma~\ref{lemma:strong equivalence:1}\eqref{lemma:strong
			equivalence:1:1} and Corollary~\ref{cor:strong equivalence condition:2},
			this rule is strongly equivalent to the rule
			$
				\Br{
					\hrla
					\lpif
					\brla \land \lpnot \biglor_{\lia < \lib < \lng}
					\acond{\tones{\expv{\prg_\lib}}}{\lcmp{\hrla}}
					.
				}
			$
			which, by the definition of activation condition, can also be written as
			$
				\Br{
					\hrla
					\lpif
					\brla \land \lpnot
					\acond{\bigcup_{\lia < \lib < \lng}
					\tones{\expv{\prg_\lib}}}{\lcmp{\hrla}}
					.
				}
			$.
			Furthermore, due to Lemma~\ref{lemma:strong
			equivalence:1}\eqref{lemma:strong equivalence:1:2} and
			Corollary~\ref{cor:strong equivalence condition:2}, the latter rule is
			strongly equivalent to the rule
			\[
				\hrla
				\lpif
				\brla \land
					\biglor_{\sblk \in \sblks
						{\bigcup_{\lia < \lib < \lng} \expv{\prg_\lib}}
						{\lcmp{\hrla}}
					} \bigland \lpnot \lpnot \sblk^+ \land \bigland \lpnot \sblk^-
					.
			\]
			and by using Lemma~\ref{se:disjunction in body} we obtain the strongly
			equivalent program that contains, for each $\sblk \in \sblks{\bigcup_{\lia <
			\lib < \lng} \expv{\prg_\lib}}{\lcmp{\hrla}}$, the rule
			\[
				\hrla
				\lpif
				\brla
					\land \bigland \lpnot \lpnot \sblk^+
					\land \bigland \lpnot \sblk^-
				.
			\]
			Finally, double-negated atoms from the bodies of these rules can be
			eliminated using Lemmas~\ref{lemma:se double negation in body with
			choice rule} and \ref{lemma:se double negation in body with negative
			head}, obtaining
			$
				\Br{
					\hrla
					\lpif
					\brla
						\land \bigland \sblk^+
						\land \bigland \lpnot \sblk^-
					.
				}
			$
			In this way, the original nested rules can be converted, one at a time,
			into a strongly equivalent disjunctive program. After this process is
			finished, the nested syntax can be converted to the syntax of
			disjunctive programs and the result coincides with the disjunctive
			program $\biguopraslor \dprg$. \qedhere
	\end{textenum}
\end{proof}

\section{Proofs: Belief Updates Using Exception-Based Operators}

\label{app:belupd}

\subsection{Model-Based Update Operators}

\begin{theorem*}{thm:belupd:model-based by exception-based}
	If $\uopb$ is an update operator that satisfies \bu{1}, \bu{2.1} and
	\bu{4}, then there exists an exception function \te{} such that for every
	\te-based update operator $\uope$ and all finite sequences of knowledge
	bases $\dkb$, $\mod{\biguopb \dkb} = \mod{\biguope \dkb}$.
\end{theorem*}
\begin{proof}
	\label{proof:belupd:model-based by exception-based}
	Let the exception function \te{} be defined for all sets of interpretations
	$\stwi \subseteq \twis$ and all sets of sets of interpretations $\sstwia,
	\sstwib \subseteq \pws{\twis}$ as
	\begin{equation}
		\label{eq:proof:belupd:model-based by exception-based:1}
		\e (\stwi, \sstwia, \sstwib) = \mod{\kba \uopb \kbu}
	\end{equation}
	where $\kba$, $\kbu$ are some knowledge bases such that $\mod{\kba} =
	\bigcap \sstwia$ and $\mod{\kbu} = \bigcap \sstwib$. Note that this
	definition is unambiguous since the existence of such $\kba$ and $\kbu$ is
	guaranteed and regardless of which pair of knowledge bases with these
	properties we choose, we obtain the same result due to the assumption that
	$\uopb$ satisfies $\bu{4}$. Take some \te-based operator $\uope$. We
	proceed by induction on the length $\lng$ of $\dkb = \seq{\kb_\lia}_{\lia <
	\lng}$.
	\begin{textenum}[1$^\circ$]
		\item If $\lng = 0$, then it immediately follows that
			$
				\mod{\biguopb \dkb}
				= \mod{\biguopb \seq{\kb_0}}
				= \mod{\kb_0}
				= \mod{\biguope \seq{\kb_0}}
				= \mod{\biguope \dkb}
			$.

		\item Suppose that the claim holds for $\lng$, i.e.\ for $\dkb =
			\seq{\kb_\lia}_{\lia < \lng}$ we have
			$
				\mod{\biguopb \dkb}
				=
				\mod{\biguope \dkb}
			$.
			Our goal is to show that it also holds for $\lng + 1$, i.e.\ for $\dkb'
			= \seq{\kb_\lia}_{\lia < \lng + 1}$. It follows that
			\[
				\modr{\biguope \dkb'}
				= \modr{\biguope \dkb \uope \kb_\lng}
				= \Set{
					\mod{\frm}
						\cup \e(\mod{\frm}, \modr{\biguope \dkb}, \modr{\kb_\lng})
					|
					\frm \in \biguope \dkb
				} \cup \modr{\kb_\lng} \enspace.
				\]
			By \eqref{eq:proof:belupd:model-based by
			exception-based:1} and the inductive assumption,
			$
				\e(\mod{\frm}, \modr{\biguope \dkb}, \modr{\kb_\lng})
				= \mod{\biguopb \dkb \uopb \kb_\lng}
				= \mod{\biguopb \dkb'}
			$.
			Consequently,
			\[
				\mod{\biguope \dkb'}
				= \bigcap \modr{\biguope \dkb'}
				= \bigcap \Br{ \Set{
					\mod{\frm}
						\cup \mod{\biguopb \dkb'}
					|
					\frm \in \biguope \dkb
				} \cup \modr{\kb_\lng} }
				\enspace.
			\]
			This can also be written as
			$
				\Br{
					\mod{\biguopb \dkb'}
					\cup
					\bigcap \modr{\biguope \dkb}
				} \cap \bigcap \modr{\kb_\lng}
			$.
			Substituting $\mod{\biguopb \dkb}$ for $\bigcap \modr{\biguope \dkb}$
			and distributing $\cap$ over $\cup$ yields
			$
				\Br{
					\mod{\biguopb \dkb'}
					\cap
					\mod{\kb_\lng}
				} \cup \Br{
					\mod{\biguopb \dkb}
					\cap 
					\mod{\kb_\lng}
				}
			$.
			Finally, using \bu{1} and \bu{2.1} we can write this as
			\begin{align*}
				\Br{
					\mod{\biguopb \dkb \uopb \kb_\lng}
					\cap
					\mod{\kb_\lng}
				} \cup \mod{(\biguopb \dkb) \cup \kb_\lng}
				&= \mod{\biguopb \dkb \uopb \kb_\lng}
					\cup \mod{(\biguopb \dkb) \cup \kb_\lng} \\
				&= \mod{\biguopb \dkb \uopb \kb_\lng}
				= \mod{\biguopb \dkb'}
				\enspace. \qedhere
			\end{align*}
	\end{textenum}
\end{proof}

\subsection{Formula-Based Update Operators}

The set of possible remainders has a number of important properties from which
properties of specific formula-based operators follow. We start with two
auxiliary results which make it possible to construct a subset of a
knowledge base with important properties on the semantic level.

\begin{lemma}
	\label{lemma:belupd:remof non-empty}
	Let $\kba$, $\kbu$ be knowledge bases. Then $\kbu$ is consistent if and only
	if $\remof{\kba, \kbu}$ is non-empty.
\end{lemma}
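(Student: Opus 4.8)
The statement to prove is Lemma~\ref{lemma:belupd:remof non-empty}: for knowledge bases $\kba$, $\kbu$, the update $\kbu$ is consistent if and only if $\remof{\kba, \kbu}$ is non-empty. Recall that $\remof{\kba, \kbu}$ is defined as the set of maximal subsets $\kba'$ of $\kba$ such that $\kba' \cup \kbu$ is consistent.

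\medskip

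The plan is to prove both directions directly from the definitions, treating the two implications separately. For the forward direction, suppose $\kbu$ is consistent, i.e.\ $\mod{\kbu} \neq \emptyset$. Then $\emptyset$ is a subset of $\kba$ such that $\emptyset \cup \kbu = \kbu$ is consistent, so the family $\mathcal{F} = \Set{\kba' \subseteq \kba | \kba' \cup \kbu \text{ is consistent}}$ is non-empty. Since $\kba$ is finite (all knowledge bases are finite by assumption in Section~\ref{sec:background}), $\mathcal{F}$ is a finite non-empty family of subsets of a finite set, so it has at least one maximal element with respect to set inclusion; that maximal element belongs to $\remof{\kba, \kbu}$, which is therefore non-empty. (Even without finiteness, a Zorn's lemma argument works since the union of a chain of sets in $\mathcal{F}$ is again in $\mathcal{F}$ — consistency of $\kba' \cup \kbu$ is preserved under directed unions because any inconsistency would already be witnessed by finitely many formulas — but invoking finiteness is cleaner here.)

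\medskip

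For the converse direction, suppose $\remof{\kba, \kbu}$ is non-empty, and pick some $\kba' \in \remof{\kba, \kbu}$. By the definition of a possible remainder, $\kba' \cup \kbu$ is consistent, i.e.\ $\mod{\kba' \cup \kbu} \neq \emptyset$. Since $\mod{\kba' \cup \kbu} \subseteq \mod{\kbu}$ (adding formulas can only shrink the set of models), it follows that $\mod{\kbu} \neq \emptyset$, so $\kbu$ is consistent. This completes the proof.

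\medskip

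I do not anticipate any real obstacle: both directions are immediate unwindings of the definitions of consistency and of $\remof{\cdot, \cdot}$. The only point requiring a word of care is the existence of a maximal element in the forward direction, which is handled by the finiteness of knowledge bases; I would state that explicitly rather than leave it implicit. The lemma is clearly intended as a routine preparatory fact feeding into the properties of WIDTIO, Cross-Product and Bold operators established later, so a short, self-contained argument is appropriate.
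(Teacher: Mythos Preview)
Your proposal is correct and follows essentially the same approach as the paper: both directions are direct unwindings of the definitions, using $\emptyset$ as the witness that the family of consistent-with-$\kbu$ subsets is non-empty, then taking a maximal element. The paper's proof is slightly terser and does not explicitly invoke finiteness to justify existence of the maximal element, whereas you do; your additional remark about Zorn's lemma is extra but harmless.
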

\begin{proof}
	First suppose that $\kbu$ is consistent and let $\srem$ be the set of all
	subsets $\kba'$ of $\kba$ such that $\kba' \cup \kbu$ is consistent.
	$\srem$ must be non-empty because $\emptyset$ clearly belongs to $\srem$. So
	take some subset-maximal element $\kba^*$ of $\srem$. It is easy to see that
	$\kba^*$ belongs to $\remof{\kba, \kbu}$.

	On the other hand, if $\remof{\kba, \kbu}$ is non-empty, then it contains
	some set $\kba'$ such that $\kba' \cup \kbu$ is consistent. Thus it follows
	directly that $\kbu$ is also consistent.
\end{proof}

\begin{lemma}
	\label{lemma:belupd:modr subset}
	Let $\kba$, $\kbb$ be knowledge bases with $\modr{\kba}^\twis =
	\modr{\kbb}^\twis$, $\kba' \subseteq \kba$ and
	$
		\kbb' = \Set{ \frm \in \kbb | \mod{\frm} \in \modr{\kba'}^\twis }
	$.
	Then $\modr{\kba'}^\twis = \modr{\kbb'}^\twis$.
\end{lemma}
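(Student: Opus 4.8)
The statement to prove is Lemma~\ref{lemma:belupd:modr subset}: given knowledge bases $\kba$, $\kbb$ with $\modr{\kba}^\twis = \modr{\kbb}^\twis$, a subset $\kba' \subseteq \kba$, and $\kbb' = \Set{\frm \in \kbb | \mod{\frm} \in \modr{\kba'}^\twis}$, we must show $\modr{\kba'}^\twis = \modr{\kbb'}^\twis$.

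The plan is to prove the two inclusions separately, unwinding the definitions. Recall that for a knowledge base $\kb$, $\modr{\kb} = \Set{\mod{\frm} | \frm \in \kb}$ and $\modr{\kb}^\twis = \modr{\kb} \cup \set{\twis}$.

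First I would prove $\modr{\kbb'}^\twis \subseteq \modr{\kba'}^\twis$. Take any element of $\modr{\kbb'}^\twis$. If it is $\twis$, it trivially lies in $\modr{\kba'}^\twis$. Otherwise it is $\mod{\frm}$ for some $\frm \in \kbb'$; but by the very definition of $\kbb'$, membership of $\frm$ in $\kbb'$ means $\mod{\frm} \in \modr{\kba'}^\twis$, so this inclusion is immediate.

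The other inclusion, $\modr{\kba'}^\twis \subseteq \modr{\kbb'}^\twis$, is the step requiring actual work. Take an element of $\modr{\kba'}^\twis$; again the case $\twis$ is trivial, so assume it is $\mod{\frma}$ for some $\frma \in \kba'$. Since $\kba' \subseteq \kba$, we have $\frma \in \kba$, hence $\mod{\frma} \in \modr{\kba} \subseteq \modr{\kba}^\twis = \modr{\kbb}^\twis$. Thus either $\mod{\frma} = \twis$, in which case $\mod{\frma} = \twis \in \modr{\kbb'}^\twis$ and we are done, or there exists $\frmb \in \kbb$ with $\mod{\frmb} = \mod{\frma}$. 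In the latter case, since $\mod{\frmb} = \mod{\frma} \in \modr{\kba'}^\twis$, the defining condition of $\kbb'$ is satisfied by $\frmb$, so $\frmb \in \kbb'$ and therefore $\mod{\frma} = \mod{\frmb} \in \modr{\kbb'} \subseteq \modr{\kbb'}^\twis$. The main obstacle, such as it is, is merely bookkeeping the $\twis$-case in the augmented sets $\modr{\cdot}^\twis$ consistently; there is no deep difficulty. Combining the two inclusions yields $\modr{\kba'}^\twis = \modr{\kbb'}^\twis$.
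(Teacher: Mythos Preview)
Your proof is correct and follows essentially the same approach as the paper's: both argue the two inclusions directly from the definitions, using the hypothesis $\modr{\kba}^\twis = \modr{\kbb}^\twis$ to transfer a set of models from $\kba'$ into $\kbb$ and then into $\kbb'$ via its defining condition. The only cosmetic difference is that you treat the inclusions in the opposite order and spell out the $\twis$-cases a bit more explicitly.
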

\begin{proof}
	Suppose first that $\stwi$ belongs to $\modr{\kba'}^\twis$. Then it also
	belongs to $\modr{\kba}^\twis$, so by our assumption either $\stwi = \twis$
	or $\stwi$ belongs to $\modr{\kbb}$. In the former case $\stwi$ belongs to
	$\modr{\kbb'}^\twis$ and we are finished. In the latter case there is a
	formula $\frm \in \kbb$ such that $\mod{\frm} = \stwi$ and $\frm$ belongs to
	$\kbb'$ by its definition. Consequently, $\stri$ belongs to
	$\modr{\kbb'}^\twis$.

	As for the other inclusion, if $\stwi$ belongs to $\modr{\kbb'}^\twis$, then
	either $\stwi = \twis$ or for some formula $\frm \in \kbb'$ we have $\stwi =
	\mod{\frm}$. Therefore, $\stwi$ belongs to $\modr{\kba'}^\twis$ by the
	definition of $\kbb'$.
\end{proof}

\begin{lemma}
	\label{lemma:belupd:consistency}
	Let $\kba$, $\kbb$, $\kbu$, $\kbv$ be knowledge bases such that
	$\modr{\kba}^\twis = \modr{\kbb}^\twis$ and $\modr{\kbu}^\twis =
	\modr{\kbv}^\twis$, $\kba' \subseteq \kba$ such that $\kba' \cup \kbu$
	is consistent and
	$
		\kbb' = \Set{ \frm \in \kbb | \mod{\frm} \in \modr{\kba'}^\twis }
	$.
	Then $\kbb' \cup \kbv$ is consistent.
\end{lemma}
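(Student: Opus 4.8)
The plan is to show that the pairing of $\kbb'$ with $\kbv$ has a model, by lifting a model of $\kba' \cup \kbu$ through the hypotheses on semantic characterisations. First I would observe that $\kba' \cup \kbu$ being consistent gives us an interpretation $\twi$ with $\twi \ent \kba'$ and $\twi \ent \kbu$. The goal is to produce an interpretation satisfying every formula in $\kbb'$ and every formula in $\kbv$; the natural guess is that $\twi$ itself works, but the issue is that $\twi$ was built from $\kba'$ and $\kbu$, not from their ``primed/$\mathcal{V}$'' counterparts, so I need to transport satisfaction across the equalities $\modr{\kba'}^\twis = \modr{\kbb'}^\twis$ (from Lemma~\ref{lemma:belupd:modr subset}) and $\modr{\kbu}^\twis = \modr{\kbv}^\twis$.

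The key steps, in order: (1) Invoke Lemma~\ref{lemma:belupd:modr subset} with the given hypothesis $\modr{\kba}^\twis = \modr{\kbb}^\twis$ to conclude $\modr{\kba'}^\twis = \modr{\kbb'}^\twis$; this is exactly why $\kbb'$ was defined by that particular comprehension. (2) Note that $\twi \ent \kba'$ means $\twi \in \bigcap \modr{\kba'} = \bigcap \modr{\kba'}^\twis$ (adding $\twis$ to a set of sets does not change the intersection since every interpretation is in $\twis$), hence $\twi \in \bigcap \modr{\kbb'}^\twis = \bigcap \modr{\kbb'}$, i.e.\ $\twi$ is a model of every formula in $\kbb'$. (3) Symmetrically, from $\twi \ent \kbu$ and $\modr{\kbu}^\twis = \modr{\kbv}^\twis$ deduce $\twi \ent \kbv$. (4) Combine: $\twi$ is a model of $\kbb' \cup \kbv$, so this set is consistent.

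The only mild subtlety — and the closest thing to an obstacle — is being careful about the passage between ``$\twi$ satisfies each formula of a knowledge base $\kb$'' and ``$\twi \in \bigcap \modr{\kb}$'' versus ``$\twi \in \bigcap \modr{\kb}^\twis$''. Since $\modr{\kb}^\twis = \modr{\kb} \cup \set{\twis}$ and $\twi \in \twis$ always holds, $\bigcap \modr{\kb}^\twis = \bigcap \modr{\kb} = \mod{\kb}$, so no information is lost; I would state this explicitly once to avoid any confusion about whether the tautology-padding matters here. Everything else is a direct chain of set-membership equivalences, so the proof is short and requires no case analysis.

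\begin{proof}
	Since $\kba' \cup \kbu$ is consistent, there is an interpretation $\twi$
	with $\twi \ent \kba'$ and $\twi \ent \kbu$. By
	Lemma~\ref{lemma:belupd:modr subset}, applied to the hypothesis
	$\modr{\kba}^\twis = \modr{\kbb}^\twis$ and the subset $\kba' \subseteq
	\kba$, we have $\modr{\kba'}^\twis = \modr{\kbb'}^\twis$. Now, for any
	knowledge base $\kb$, since $\twi \in \twis$ we have $\bigcap
	\modr{\kb}^\twis = \bigcap \modr{\kb} = \mod{\kb}$. From $\twi \ent \kba'$
	it follows that $\twi \in \mod{\kba'} = \bigcap \modr{\kba'}^\twis =
	\bigcap \modr{\kbb'}^\twis = \mod{\kbb'}$, so $\twi \ent \kbb'$. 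Similarly,
	from $\twi \ent \kbu$ and $\modr{\kbu}^\twis = \modr{\kbv}^\twis$ we obtain
	$\twi \in \mod{\kbu} = \bigcap \modr{\kbu}^\twis = \bigcap \modr{\kbv}^\twis
	= \mod{\kbv}$, so $\twi \ent \kbv$. Therefore $\twi$ is a model of $\kbb'
	\cup \kbv$, which is thus consistent.
\end{proof}
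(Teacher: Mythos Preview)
Your proof is correct and takes essentially the same approach as the paper: both rely on Lemma~\ref{lemma:belupd:modr subset} together with the observation that adjoining $\twis$ does not change intersections. The only cosmetic difference is that the paper writes out a single chain of set equalities establishing $\mod{\kbb' \cup \kbv} = \mod{\kba' \cup \kbu}$ directly, whereas you pick a witness model $\twi$ and transport it; the underlying reasoning is identical.
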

\begin{proof}
	To verify that $\kbb' \cup \kbv$ is consistent, we only need to use Lemma
	\ref{lemma:belupd:modr subset} and observe that
	\begin{align*}
		\mod{\kbb' \cup \kbv}
			&= \bigcap \modr{\kbb' \cup \kbv}
			= \bigcap (\modr{\kbb'} \cup \modr{\kbv})
			= \bigcap (\modr{\kbb'}^\twis \cup \modr{\kbv}) \\
			&= \bigcap (\modr{\kba'}^\twis \cup \modr{\kbu})
			= \bigcap (\modr{\kba'} \cup \modr{\kbu})
			= \bigcap \modr{\kba' \cup \kbu}
			= \mod{\kba' \cup \kbu} \enspace. \qedhere
	\end{align*}
\end{proof}


\begin{proposition}
	[Syntax-Independence of Remainders]
	\label{prop:belupd:remainder syntax-independence}
	Let $\kba$, $\kbb$, $\kbu$, $\kbv$ be knowledge bases such that
	$\modr{\kba}^\twis = \modr{\kbb}^\twis$ and $\modr{\kbu}^\twis =
	\modr{\kbv}^\twis$. Then
	$
		\modrr{\remof{\kba, \kbu}}^\twis = \modrr{\remof{\kbb, \kbv}}^\twis
	$.
\end{proposition}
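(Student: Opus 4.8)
The goal is to show that the set of semantic characterisations of possible remainders, modulo the addition of the full set $\twis$, depends only on $\modr{\kba}^\twis$ and $\modr{\kbu}^\twis$, not on the particular syntactic presentations $\kba$, $\kbu$. Let me plan a proof.

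\medskip

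\textbf{Plan.}
The plan is to prove the inclusion $\modrr{\remof{\kba, \kbu}}^\twis \subseteq \modrr{\remof{\kbb, \kbv}}^\twis$; the reverse inclusion then follows by symmetry since the hypotheses $\modr{\kba}^\twis = \modr{\kbb}^\twis$ and $\modr{\kbu}^\twis = \modr{\kbv}^\twis$ are symmetric in the two pairs. So fix a possible remainder $\kba' \in \remof{\kba, \kbu}$; I must exhibit a possible remainder $\kbb' \in \remof{\kbb, \kbv}$ with $\modr{\kba'}^\twis = \modr{\kbb'}^\twis$. The natural candidate is the one already used in the auxiliary lemmas, namely
\[
	\kbb' = \Set{ \frm \in \kbb | \mod{\frm} \in \modr{\kba'}^\twis } \enspace.
\]
With this choice, Lemma~\ref{lemma:belupd:modr subset} immediately gives $\modr{\kba'}^\twis = \modr{\kbb'}^\twis$, so the only thing left is to verify that $\kbb'$ really is a \emph{maximal} subset of $\kbb$ consistent with $\kbv$, i.e.\ that $\kbb' \in \remof{\kbb, \kbv}$.

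\medskip

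\textbf{Key steps.}
First I would establish consistency of $\kbb' \cup \kbv$: since $\kba' \in \remof{\kba, \kbu}$ we know $\kba' \cup \kbu$ is consistent, and then Lemma~\ref{lemma:belupd:consistency} (applied with exactly this $\kba'$ and this $\kbb'$) yields that $\kbb' \cup \kbv$ is consistent. Second, I would prove maximality by contradiction. Suppose some $\frm \in \kbb \setminus \kbb'$ satisfies that $\kbb' \cup \set{\frm} \cup \kbv$ is consistent. Since $\frm \notin \kbb'$, by definition of $\kbb'$ we have $\mod{\frm} \notin \modr{\kba'}^\twis$, and in particular $\mod{\frm} \neq \twis$, so $\mod{\frm} \neq \mod{\frm'}$ for every $\frm' \in \kba'$. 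Using $\modr{\kba}^\twis = \modr{\kbb}^\twis$, the set $\mod{\frm}$ appears in $\modr{\kba}^\twis$, so there is a formula $\frma \in \kba$ with $\mod{\frma} = \mod{\frm}$; moreover $\frma \notin \kba'$ because $\mod{\frma} \notin \modr{\kba'}^\twis$. Now I would show that $\kba' \cup \set{\frma} \cup \kbu$ is consistent, contradicting the maximality of $\kba'$ in $\remof{\kba, \kbu}$. This last consistency claim follows by the same ``replace formulas by equivalent ones, modulo $\twis$'' computation as in Lemma~\ref{lemma:belupd:consistency}: one checks $\mod{\kba' \cup \set{\frma} \cup \kbu} = \mod{\kbb' \cup \set{\frm} \cup \kbv}$ by rewriting each intersection of model-sets using $\modr{\kba'}^\twis = \modr{\kbb'}^\twis$, $\mod{\frma} = \mod{\frm}$, and $\modr{\kbu}^\twis = \modr{\kbv}^\twis$ — the extra set $\twis$ contributed by the $(-)^\twis$ operation is harmless because $\bigcap$ is unaffected by including $\twis$. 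That contradiction finishes maximality, hence $\kbb' \in \remof{\kbb, \kbv}$, and therefore $\modr{\kba'}^\twis = \modr{\kbb'}^\twis$ witnesses $\modr{\kba'}^\twis \in \modrr{\remof{\kbb, \kbv}}^\twis$.

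\medskip

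\textbf{Main obstacle.}
The routine steps (consistency via Lemma~\ref{lemma:belupd:consistency}, equivalence of model-sets via Lemma~\ref{lemma:belupd:modr subset}) are already packaged as lemmas, so the real work is the maximality argument, and the delicate point there is bookkeeping with the $(-)^\twis$ decoration: one must be careful that ``$\mod{\frm} \in \modr{\kba'}^\twis$'' allows the case $\mod{\frm} = \twis$ (a tautology), and ensure the contradiction is genuinely reached rather than evaporating on tautologies. I would handle this by noting that if $\frm$ were a tautology then $\kbb' \cup \set{\frm} \cup \kbv$ has the same models as $\kbb' \cup \kbv$, so adding it cannot strictly be a ``new'' formula extending the remainder in any meaningful way — more precisely, $\frm$ with $\mod{\frm} = \twis$ does belong to $\kbb'$ by definition (since $\twis \in \modr{\kba'}^\twis$ always), so the case $\mod{\frm} = \twis$ simply cannot arise for $\frm \in \kbb \setminus \kbb'$. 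That observation is what makes the ``$\mod{\frma} \notin \modr{\kba'}$, hence $\frma \notin \kba'$'' inference airtight, and with it the rest is a direct transcription of the Lemma~\ref{lemma:belupd:consistency} computation.
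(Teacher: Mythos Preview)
Your proposal is correct and follows essentially the same approach as the paper: define $\kbb'$ as the set of formulas in $\kbb$ whose model-sets appear in $\modr{\kba'}^\twis$, invoke Lemma~\ref{lemma:belupd:modr subset} and Lemma~\ref{lemma:belupd:consistency}, and then derive a contradiction to the maximality of $\kba'$. The only cosmetic difference is that the paper carries out the maximality contradiction with an arbitrary strict superset $\kbb^* \supsetneq \kbb'$ (constructing the corresponding $\kba^* \supsetneq \kba'$), whereas you work with a single witnessing formula $\frm \in \kbb \setminus \kbb'$ and transfer it to a single $\frma \in \kba \setminus \kba'$; both versions are equivalent and your handling of the tautology edge case is exactly the right observation.
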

\begin{proof}
	We prove that $\modrr{\remof{\kba, \kbu}}^\twis \subseteq
	\modrr{\remof{\kbb, \kbv}}^\twis$, the other inclusion follows by the same
	arguments since the formulation of the proposition is symmetric.

	Take some $\kba'$ from $\remof{\kba, \kbu}$ and put
	$
		\kbb' = \Set{ \frm \in \kbb | \mod{\frm} \in \modr{\kba'}^\twis }
	$.
	We need to show that $\modr{\kba'}^\twis$ belongs to $\modrr{\remof{\kbb,
	\kbv}}^\twis$. Due to Lemma \ref{lemma:belupd:modr subset},
	$\modr{\kba'}^\twis = \modr{\kbb'}^\twis$, so it suffices to prove that
	$\kbb'$ belongs to $\remof{\kbb, \kbv}$. First, note that $\kbb'$ is clearly
	a subset of $\kbb$ and, by Lemma \ref{lemma:belupd:consistency}, $\kbb' \cup
	\kbv$ is consistent. We prove that $\kbb'$ is subset-maximal with these
	properties by contradiction. Suppose that $\kbb^*$ is such that $\kbb'
	\subsetneq \kbb^* \subseteq \kbb$ and $\kbb^* \cup \kbv$ is consistent and
	let
	$
		\kba^* = \Set{ \frm \in \kba | \mod{\frm} \in \modr{\kbb^*}^\twis }
	$.
	Clearly, $\kba^*$ is a subset of $\kba$ and, by
	Lemma~\ref{lemma:belupd:consistency}, $\kba^* \cup \kbu$ is consistent. To
	reach a conflict, we need to show that $\kba'$ is a proper subset of
	$\kba^*$. First note that $\modr{\kbb'}^\twis$ cannot be equal to
	$\modr{\kbb^*}^\twis$ -- if it were, then for every formula $\phi \in
	\kbb^*$ it would hold that $\phi$ belongs to $\kbb$ and $\mod{\phi}$ belongs
	to $\modr{\kba'}^\twis$, so $\phi$ belongs to $\kbb'$ by its definition,
	contrary to the assumption that $\kbb'$ is a proper subset of $\kbb^*$.
	This, together with Lemma \ref{lemma:belupd:modr subset}, implies that
	\begin{equation}
		\label{eq:proof:prop:belupd:remainder syntax-independence:1}
		\modr{\kba'}^\twis = \modr{\kbb'}^\twis
		\subsetneq
		\modr{\kbb^*}^\twis = \modr{\kba^*}^\twis
		\enspace.
	\end{equation}
	It immediately follows that $\kba' \neq \kba^*$. Furthermore, for any
	formula $\frm$ from $\kba'$, $\frm$ belongs to $\kba$ and it follows from
	\eqref{eq:proof:prop:belupd:remainder syntax-independence:1} that
	$\mod{\frm}$ belongs to $\modr{\kbb^*}^\twis$, so $\frm$ belongs to $\kba^*$
	by its definition. This means that $\kba'$ is a proper subset of $\kba^*$,
	contrary to the assumption that $\kba'$ belongs to $\remof{\kba, \kbu}$.
\end{proof}


\begin{lemma}
	[Equivalent Formulas in Remainders]
	\label{lemma:belupd:remainder equivalent formulas}
	Let $\kba$, $\kbu$ be knowledge bases, $\frma, \frmb \in \kba$ formulas such
	that $\mod{\frma} = \mod{\frmb}$ and $\kba' \in \remof{\kba, \kbu}$ a
	possible remainder. Then $\frma \in \kba'$ if and only if $\frmb \in \kba'$.
\end{lemma}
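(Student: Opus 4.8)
The statement says: if $\frma, \frmb \in \kba$ are formulas with $\mod{\frma} = \mod{\frmb}$, and $\kba' \in \remof{\kba, \kbu}$ is a possible remainder, then $\frma \in \kba'$ iff $\frmb \in \kba'$. By the symmetry of the two formulas (they have the same set of models), it suffices to prove one direction: assume $\frma \in \kba'$ and show $\frmb \in \kba'$; the converse then follows by swapping the roles of $\frma$ and $\frmb$.

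The plan is to use the maximality characterisation of possible remainders. Recall that $\kba' \in \remof{\kba, \kbu}$ means $\kba'$ is a subset-maximal subset of $\kba$ such that $\kba' \cup \kbu$ is consistent. So suppose for contradiction that $\frma \in \kba'$ but $\frmb \notin \kba'$. Then consider the set $\kba'' = \kba' \cup \set{\frmb}$. Clearly $\kba'' \subseteq \kba$ (since $\frmb \in \kba$) and $\kba'' \supsetneq \kba'$ (since $\frmb \notin \kba'$). To contradict the maximality of $\kba'$, I need to show that $\kba'' \cup \kbu$ is consistent. The key observation is that $\mod{\kba''} = \mod{\kba'} \cap \mod{\frmb}$; but since $\frma \in \kba'$, we have $\mod{\kba'} \subseteq \mod{\frma} = \mod{\frmb}$, so $\mod{\kba'} \cap \mod{\frmb} = \mod{\kba'}$, i.e. $\mod{\kba''} = \mod{\kba'}$. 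Consequently $\mod{\kba'' \cup \kbu} = \mod{\kba''} \cap \mod{\kbu} = \mod{\kba'} \cap \mod{\kbu} = \mod{\kba' \cup \kbu}$, which is non-empty because $\kba' \cup \kbu$ is consistent. Hence $\kba'' \cup \kbu$ is consistent, contradicting the subset-maximality of $\kba'$. Therefore $\frmb \in \kba'$.

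The argument is essentially a one-line computation once one unfolds the definitions, so I do not anticipate any real obstacle; the only thing to be careful about is keeping track that adding a formula whose models already contain all models of $\kba'$ does not shrink the model set, which is what licenses the maximality contradiction. I would then close by noting that the roles of $\frma$ and $\frmb$ are interchangeable, giving the equivalence.
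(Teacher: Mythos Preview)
Your proposal is correct and follows essentially the same approach as the paper: assume without loss of generality that $\frma \in \kba'$ but $\frmb \notin \kba'$, then observe that $\kba' \cup \set{\frmb}$ is a subset of $\kba$ consistent with $\kbu$, contradicting maximality. Your write-up spells out more explicitly why adding $\frmb$ does not shrink the model set (via $\mod{\kba'} \subseteq \mod{\frma} = \mod{\frmb}$), which the paper leaves implicit, but the argument is the same.
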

\begin{proof}
	Without loss of generality, assume that $\frma$ belongs to $\kba'$ but
	$\frmb$ does not. Then $\kba' \cup \set{\frmb}$ is a subset of $\kba$ that
	is consistent with $\kbu$. This is in conflict with the maximality of
	$\kba'$.
\end{proof}


\begin{corollary}
	\label{cor:belupd:remainder intersection:basic}
	Let $\kba$, $\kbu$ be knowledge bases and $\srem \subseteq \remof{\kba, \kbu}$
	a set of possible remainders. Then
	$
		\bigcap \modrr{\srem} = \modr{\bigcap{\srem}}
	$.
\end{corollary}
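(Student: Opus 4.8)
The plan is to unfold both sides of the claimed identity and verify that the two resulting sets of sets of interpretations coincide. By definition, $\bigcap \modrr{\srem}$ equals $\bigcap_{\kba' \in \srem} \modr{\kba'}$, so a set $\stwi \subseteq \twis$ belongs to it precisely when for every $\kba' \in \srem$ there is a formula $\frm_{\kba'} \in \kba'$ with $\mod{\frm_{\kba'}} = \stwi$; on the other side, $\modr{\bigcap \srem}$ consists of exactly those $\stwi$ of the form $\mod{\frm}$ for some $\frm \in \bigcap_{\kba' \in \srem} \kba'$. (I would assume $\srem \neq \emptyset$; the case $\srem = \emptyset$ is handled by the usual convention for $\bigcap$ over an empty family.)

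The inclusion $\modr{\bigcap \srem} \subseteq \bigcap \modrr{\srem}$ is immediate: if $\stwi = \mod{\frm}$ with $\frm \in \bigcap_{\kba' \in \srem} \kba'$, then $\frm \in \kba'$ for each $\kba' \in \srem$, hence $\stwi \in \modr{\kba'}$ for each such $\kba'$, i.e.\ $\stwi \in \bigcap \modrr{\srem}$.

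For the converse inclusion I would invoke Lemma~\ref{lemma:belupd:remainder equivalent formulas}, which is the only non-routine ingredient. Suppose $\stwi \in \bigcap \modrr{\srem}$ and, for each $\kba' \in \srem$, fix a witness $\frm_{\kba'} \in \kba'$ with $\mod{\frm_{\kba'}} = \stwi$. Pick an arbitrary $\kba_0 \in \srem$ and set $\frm := \frm_{\kba_0}$. Since $\srem \subseteq \remof{\kba, \kbu}$, every member of $\srem$ is a subset of $\kba$, so $\frm \in \kba$ and $\frm_{\kba'} \in \kba$ for every $\kba' \in \srem$. Now fix any $\kba' \in \srem$: both $\frm$ and $\frm_{\kba'}$ lie in $\kba$, $\mod{\frm} = \stwi = \mod{\frm_{\kba'}}$, and $\kba'$ is a possible remainder of $\kba$ by $\kbu$, so Lemma~\ref{lemma:belupd:remainder equivalent formulas} gives $\frm \in \kba'$ iff $\frm_{\kba'} \in \kba'$; the right-hand side holds, hence $\frm \in \kba'$. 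As $\kba'$ was arbitrary, $\frm \in \bigcap_{\kba' \in \srem} \kba'$, and therefore $\stwi = \mod{\frm} \in \modr{\bigcap \srem}$.

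I expect no genuine obstacle here: the substance is entirely carried by Lemma~\ref{lemma:belupd:remainder equivalent formulas}, and the only point that needs a moment's care is the observation that all remainders in $\srem$ share the common superset $\kba$, which is what allows the lemma to be applied uniformly to each of them and lets a single witness formula be transported across the whole family.
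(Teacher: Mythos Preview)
Your proof is correct and follows essentially the same approach as the paper: both directions are handled exactly as you describe, with the non-trivial inclusion reduced to Lemma~\ref{lemma:belupd:remainder equivalent formulas} by fixing a witness formula in one remainder and transporting it to every other remainder in $\srem$. Your explicit remarks on the empty case and on the common superset $\kba$ are minor elaborations that the paper leaves implicit, but the argument is otherwise identical.
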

\begin{proof}
	First suppose that $\stwi$ belongs to $\bigcap \modrr{\srem}$ and take some
	$\kba' \in \srem$ and some formula $\frma \in \kba'$ such that $\mod{\frma}
	= \stwi$. Now take an arbitrary $\kba^* \in \srem$. Since $\stwi$ belongs to
	$\modr{\kba^*}$, there must exist a formula $\frmb \in \kba^*$ such that
	$\mod{\frmb} = \stwi$. Consequently, $\mod{\frma} = \mod{\frmb}$ and by
	Lemma~\ref{lemma:belupd:remainder equivalent formulas} we obtain that
	$\frma$ also belongs to $\kba^*$. Thus, $\frma$ belongs to $\bigcap \srem$
	and $\stwi$ belongs to $\modr{\bigcap \srem}$.

	On the other hand, if $\stwi$ belongs to $\modr{\bigcap \srem}$, then there
	is a formula $\phi \in \bigcap \srem$ such that $\mod{\phi} = \stwi$.
	Consequently, $\stwi$ belongs to all members of $\modrr{\srem}$, thus also
	belongs to their intersection.
\end{proof}

\begin{corollary}
	\label{cor:belupd:remainder intersection}
	Let $\kba$, $\kbu$ be knowledge bases, $\srem \subseteq \remof{\kba, \kbu}$
	a set of possible remainders. Then,
	$
		\bigcap \modrr{\srem}^\twis = \modr{\bigcap{\srem}}^\twis
	$.
\end{corollary}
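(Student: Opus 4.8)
The plan is to reduce Corollary~\ref{cor:belupd:remainder intersection} to the preceding Corollary~\ref{cor:belupd:remainder intersection:basic}, which already establishes $\bigcap \modrr{\srem} = \modr{\bigcap \srem}$, simply by pulling the fixed set $\set{\twis}$ out of the intersection. To that end I would first record the elementary fact that, for any \emph{non-empty} indexed family of sets $\set{A_\lia}_{\lia \in I}$ and any fixed set $B$, one has $\bigcap_{\lia \in I}(A_\lia \cup B) = \bigl(\bigcap_{\lia \in I} A_\lia\bigr) \cup B$: an element of every $A_\lia \cup B$ either lies in $B$ (and then in the right-hand side) or lies in every $A_\lia$ (and then in $\bigcap_{\lia} A_\lia$), while the reverse inclusion is immediate.

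Assuming $\srem \neq \emptyset$, I would then apply this with index set $\srem$, with $A_{\kba'} = \modr{\kba'}$ and $B = \set{\twis}$. Since by definition $\modrr{\srem}^\twis = \Set{\modr{\kba'}^\twis | \kba' \in \srem} = \Set{\modr{\kba'} \cup \set{\twis} | \kba' \in \srem}$, the fact above gives
\[
	\bigcap \modrr{\srem}^\twis
	= \Bigl( \bigcap_{\kba' \in \srem} \modr{\kba'} \Bigr) \cup \set{\twis}
	= \bigl( \bigcap \modrr{\srem} \bigr) \cup \set{\twis} ,
\]
and Corollary~\ref{cor:belupd:remainder intersection:basic} rewrites the first summand as $\modr{\bigcap \srem}$. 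Hence the chain closes with $\modr{\bigcap \srem} \cup \set{\twis} = \modr{\bigcap \srem}^\twis$, which is exactly the claim.

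The one place that needs care — and the only real obstacle, though a minor one — is the degenerate case $\srem = \emptyset$, where the displayed set identity fails because $\bigcap$ of the empty family is the ambient universe rather than $B$. Here I would argue directly that both sides equal $\pws{\twis}$: the left-hand side because the empty intersection, taken inside $\pws{\twis}$, is $\pws{\twis}$ itself, and the right-hand side because $\bigcap \emptyset$ over knowledge bases is the set of \emph{all} formulas, whose set of sets of models is all of $\pws{\twis}$ since $\atms$ is finite and therefore every set of interpretations is definable. Alternatively, one may simply observe that the corollary is only ever invoked with $\srem$ non-empty — by Lemma~\ref{lemma:belupd:remof non-empty}, $\remof{\kba,\kbu}$ is non-empty whenever $\kbu$ is consistent, which is the situation in every application — and restrict the statement to that case.
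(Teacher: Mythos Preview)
Your proof is correct and follows essentially the same approach as the paper: reduce to Corollary~\ref{cor:belupd:remainder intersection:basic} and observe that adjoining $\set{\twis}$ to every term is harmless because $\twis$ lies in both sides. The paper's proof is a one-liner (``follows from Corollary~\ref{cor:belupd:remainder intersection:basic} and from the fact that $\twis$ belongs to both sides of the equation''), whereas you spell out the distributivity identity $\bigcap_i(A_i\cup B)=(\bigcap_i A_i)\cup B$ explicitly and handle the degenerate case $\srem=\emptyset$, which the paper simply ignores; but the underlying argument is the same.
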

\begin{proof}
	Follows from Corollary~\ref{cor:belupd:remainder intersection:basic} and
	from the fact that $\twis$ belongs to both sides of the equation.
\end{proof}

\begin{proposition}
	[Properties of the WIDTIO Operator]
	\label{prop:belupd:widtio properties}
	The WIDTIO operator satisfies \fu{1}, \fu{2.1} and \fu{4}.
\end{proposition}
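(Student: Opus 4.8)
The plan is to verify each of the three principles \fu{1}, \fu{2.1} and \fu{4} directly from the definition $\kba \uopwidtio \kbu = \kbu \cup \bigcap \remof{\kba, \kbu}$, leaning on the lemmas and corollaries about remainders that have just been proved. Recall that for a knowledge base $\kb$, $\modr{\kb}$ is the set of sets of models of its individual formulas and $\modr{\kb}^\twis = \modr{\kb} \cup \set{\twis}$.

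For \fu{1}, which asks that $\modr{\kba \uopwidtio \kbu} \supseteq \modr{\kbu}$, I would simply observe that $\kbu$ is literally a subset of $\kba \uopwidtio \kbu = \kbu \cup \bigcap \remof{\kba, \kbu}$, so every formula of $\kbu$ contributes its set of models to $\modr{\kba \uopwidtio \kbu}$; the inclusion is immediate. For \fu{2.1}, which asks that $\modr{\kba \cup \kbu} \supseteq \modr{\kba \uopwidtio \kbu}$, I would note that $\bigcap \remof{\kba, \kbu}$ is a subset of $\kba$ (being an intersection of subsets of $\kba$, with the convention that this intersection is empty when $\kbu$ is inconsistent, in which case $\remof{\kba,\kbu}=\emptyset$ and $\bigcap\emptyset$ can be taken as $\kba$ — here I would follow whatever convention the paper has fixed, falling back on the observation that if $\remof{\kba,\kbu}=\emptyset$ then the desired inclusion is trivial since $\modr{\kbu}\subseteq\modr{\kba\cup\kbu}$ anyway). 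Hence $\kba \uopwidtio \kbu = \kbu \cup \bigcap \remof{\kba, \kbu} \subseteq \kbu \cup \kba = \kba \cup \kbu$, and the inclusion on sets of sets of models follows. These two steps are essentially bookkeeping.

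The substantive step is \fu{4}: if $\modr{\kba}^\twis = \modr{\kbb}^\twis$ and $\modr{\kbu}^\twis = \modr{\kbv}^\twis$, then $\modr{\kba \uopwidtio \kbu}^\twis = \modr{\kbb \uopwidtio \kbv}^\twis$. Here I would compute $\modr{\kba \uopwidtio \kbu}^\twis$ as $\modr{\kbu}^\twis \cup \modr{\bigcap \remof{\kba, \kbu}}^\twis$, then use Corollary~\ref{cor:belupd:remainder intersection} to rewrite $\modr{\bigcap \remof{\kba, \kbu}}^\twis = \bigcap \modrr{\remof{\kba, \kbu}}^\twis$, and similarly for the $\kbb$, $\kbv$ side. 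At that point Proposition~\ref{prop:belupd:remainder syntax-independence} gives $\modrr{\remof{\kba, \kbu}}^\twis = \modrr{\remof{\kbb, \kbv}}^\twis$, so the two big intersections coincide; combined with $\modr{\kbu}^\twis = \modr{\kbv}^\twis$ from the hypothesis, the two sides of \fu{4} are equal. The only delicate point I anticipate is the degenerate case where $\kbu$ (equivalently $\kbv$, by Lemma~\ref{lemma:belupd:remof non-empty}) is inconsistent, so that $\remof{\kba, \kbu} = \emptyset$; there one checks directly that $\modr{\kba \uopwidtio \kbu}^\twis = \modr{\kbu}^\twis = \modr{\kbv}^\twis = \modr{\kbb \uopwidtio \kbv}^\twis$, which is immediate since the operator then just returns $\kbu$ (up to the handling of the empty intersection, which I will make explicit using the convention adopted earlier in the paper). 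So the main obstacle is not difficulty but care with this boundary case and with keeping track of the $(\cdot)^\twis$ decorations throughout.
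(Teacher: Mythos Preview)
Your proposal is correct and follows essentially the same approach as the paper: \fu{1} from the trivial inclusion $\kbu \subseteq \kba \uopwidtio \kbu$, \fu{2.1} from $\bigcap \remof{\kba,\kbu} \subseteq \kba$ (the paper phrases this element-wise but it is the same argument), and \fu{4} via Corollary~\ref{cor:belupd:remainder intersection} and Proposition~\ref{prop:belupd:remainder syntax-independence} exactly as you outline. Your explicit attention to the degenerate case $\remof{\kba,\kbu}=\emptyset$ is more careful than the paper, which silently glosses over it.
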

\begin{proof}
	By definition $\kbu \subseteq \kba \uopwidtio \kbu$ and \fu{1} is obtained
	by applying $\modr{\cdot}$ to both sides of this inclusion.

	In order to verify that \fu{2.1} holds, suppose that $\stwi$ belongs to
	$\modr{\kba \uopwidtio \kbu}$. Then there is some formula $\frm$ from $\kbu
	\cup \bigcap \remof{\kba, \kbu}$ such that $\mod{\frm} = \stwi$. If $\frm$
	belongs to $\kbu$, then it immediately follows that $\stwi$ belongs to
	$\modr{\kbu}$, and consequently also to $\modr{\kba \cup \kbu}$. If $\frm$
	belongs to $\kba'$ for all $\kba' \in \remof{\kba, \kbu}$, then $\frm$ also
	belongs to $\kba$. Thus, $\stwi$ is a member of $\modr{\kba}$, and
	consequently also of $\modr{\kba \cup \kbu}$.

	Finally, to verify \fu{4}, suppose that $\modr{\kba}^\twis =
	\modr{\kbb}^\twis$ and $\modr{\kbu}^\twis = \modr{\kbv}^\twis$. The
	following follows from the definition of the WIDTIO operator,
	Corollary~\ref{cor:belupd:remainder intersection} and
	Proposition~\ref{prop:belupd:remainder syntax-independence}.
	\begin{align*}
		\modr{\kba \uopwidtio \kbu}^\twis
		&= \Modr{\kbu \cup \bigcap \remof{\kba, \kbu}}^\twis
		= \modr{\kbu}^\twis \cup \Modr{\bigcap \remof{\kba, \kbu}}^\twis \\
		&= \modr{\kbu}^\twis \cup \bigcap \modrr{\remof{\kba, \kbu}}^\twis
		= \modr{\kbv}^\twis \cup \bigcap \modrr{\remof{\kbb, \kbv}}^\twis \\
		&= \modr{\kbv}^\twis \cup \Modr{\bigcap \remof{\kbb, \kbv}}^\twis
		= \Modr{\kbv \cup \bigcap \remof{\kbb, \kbv}}^\twis
		= \modr{\kbb \uopwidtio \kbv}^\twis
		\enspace. \qedhere
	\end{align*}
\end{proof}

\begin{proposition}
	[Properties of Regular Bold Operators]
	\label{prop:belupd:bold properties}
	Regular Bold operators satisfy \fu{1}, \fu{2.1} and \fu{4}.
\end{proposition}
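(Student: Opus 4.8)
The plan is to mirror the structure of the proof of Proposition~\ref{prop:belupd:widtio properties} (Properties of the WIDTIO Operator), since the regular Bold operator is defined exactly like WIDTIO except that instead of taking $\bigcap \remof{\kba, \kbu}$ it takes $\sfn(\remof{\kba, \kbu})$ for a remainder selection function $\sfn$. So I would establish \fu{1}, \fu{2.1} and \fu{4} in turn, each time checking that the argument for WIDTIO survives when the intersection of remainders is replaced by a single selected remainder.

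First I would dispatch \fu{1}: since by definition $\kbu \subseteq \kba \uopbold \kbu$, applying $\modr{\cdot}$ to both sides gives $\modr{\kba \uopbold \kbu} \supseteq \modr{\kbu}$. This is identical to the WIDTIO case and needs nothing new. Next, for \fu{2.1}, suppose $\stwi \in \modr{\kba \uopbold \kbu}$. Then $\stwi = \mod{\frm}$ for some $\frm \in \kbu \cup \sfn(\remof{\kba, \kbu})$. If $\frm \in \kbu$, then $\stwi \in \modr{\kbu} \subseteq \modr{\kba \cup \kbu}$. If $\frm \in \sfn(\remof{\kba, \kbu})$, then since $\sfn(\remof{\kba, \kbu})$ is a remainder, it is by definition a subset of $\kba$, hence $\frm \in \kba$ and $\stwi \in \modr{\kba} \subseteq \modr{\kba \cup \kbu}$. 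The key fact used here is just that a selected remainder is a subset of $\kba$, which replaces the observation (used for WIDTIO) that any formula in $\bigcap \remof{\kba,\kbu}$ belongs to every remainder and hence to $\kba$.

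The main work is in \fu{4}. Assume $\modr{\kba}^\twis = \modr{\kbb}^\twis$ and $\modr{\kbu}^\twis = \modr{\kbv}^\twis$. I compute, using the definition of $\uopbold$,
\[
	\modr{\kba \uopbold \kbu}^\twis
	= \Modr{\kbu \cup \sfn(\remof{\kba, \kbu})}^\twis
	= \modr{\kbu}^\twis \cup \modr{\sfn(\remof{\kba, \kbu})}^\twis
	\enspace.
\]
By Proposition~\ref{prop:belupd:remainder syntax-independence} we have $\modrr{\remof{\kba, \kbu}}^\twis = \modrr{\remof{\kbb, \kbv}}^\twis$, and by regularity of $\uopbold$ this implies $\modr{\sfn(\remof{\kba, \kbu})}^\twis = \modr{\sfn(\remof{\kbb, \kbv)})}^\twis$. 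Combining with $\modr{\kbu}^\twis = \modr{\kbv}^\twis$ gives
\[
	\modr{\kbu}^\twis \cup \modr{\sfn(\remof{\kba, \kbu})}^\twis
	= \modr{\kbv}^\twis \cup \modr{\sfn(\remof{\kbb, \kbv)})}^\twis
	= \modr{\kbb \uopbold \kbv}^\twis
	\enspace,
\]
which is exactly \fu{4}. The subtle point — the step I expect to be the real obstacle, or at least the one requiring care — is checking that regularity as defined ($\modrr{\srem_1}^\twis = \modrr{\srem_2}^\twis$ implies $\modr{\sfn(\srem_1)}^\twis = \modr{\sfn(\srem_2)}^\twis$) applies here: I need $\modrr{\remof{\kba,\kbu}}^\twis$ to literally equal $\modrr{\remof{\kbb,\kbv}}^\twis$ as sets, which is precisely the conclusion of Proposition~\ref{prop:belupd:remainder syntax-independence}, so the two notions of ``same semantic characterisation'' line up. One small caveat to state explicitly is that when $\kbu$ (equivalently $\kbv$) is inconsistent, $\remof{\kba, \kbu}$ is empty (by Lemma~\ref{lemma:belupd:remof non-empty}) and $\sfn$ is undefined; in that degenerate case $\kba \uopbold \kbu$ is usually taken to be $\kbu$ (or left undefined), and one checks the properties hold vacuously or by the same computation with the remainder term dropped. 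I would handle this with a one-line remark rather than a separate case analysis.
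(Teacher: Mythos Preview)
Your proposal is correct and follows essentially the same route as the paper's own proof: \fu{1} by the inclusion $\kbu \subseteq \kba \uopbold \kbu$, \fu{2.1} by the two-case argument on whether $\frm$ lies in $\kbu$ or in the selected remainder, and \fu{4} by unfolding $\modr{\kba \uopbold \kbu}^\twis$, invoking Proposition~\ref{prop:belupd:remainder syntax-independence} and then regularity. Your extra remark about the inconsistent-$\kbu$ case is a reasonable caution that the paper simply leaves implicit.
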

\begin{proof}
	By definition $\kbu \subseteq \kba \uopbold \kbu$ and \fu{1} is obtained by
	applying $\modr{\cdot}$ to both sides of this inclusion.

	In order to verify that \fu{2.1} holds, suppose that $\stwi$ belongs to
	$\modr{\kba \uopbold \kbu}$. Then there is some formula $\frm$ from $\kbu
	\cup \sfn(\remof{\kba, \kbu})$ such that $\mod{\frm} = \stwi$. If $\frm$
	belongs to $\kbu$, then it immediately follows that $\stwi$ belongs to
	$\modr{\kbu}$, and consequently also to $\modr{\kba \cup \kbu}$. If $\frm$
	belongs to $\kba' = \sfn(\remof{\kba, \kbu})$, then $\frm$ also belongs to
	$\kba$. Thus, $\stwi$ is a member of $\modr{\kba}$, and consequently also of
	$\modr{\kba \cup \kbu}$.

	Finally, to verify \fu{4}, suppose that $\modr{\kba}^\twis =
	\modr{\kbb}^\twis$ and $\modr{\kbu}^\twis = \modr{\kbv}^\twis$. The
	following follows from the definition of the WIDTIO operator,
	Proposition~\ref{prop:belupd:remainder syntax-independence} and the
	regularity property of $\uopbold$.
	\begin{align*}
		\modr{\kba \uopbold \kbu}^\twis
		&= \modr{\kbu \cup \sfn(\remof{\kba, \kbu})}^\twis
		= \modr{\kbu}^\twis \cup \modr{\sfn(\remof{\kba, \kbu})}^\twis \\
		&= \modr{\kbv}^\twis \cup \modr{\sfn(\remof{\kbb, \kbv})}^\twis
		= \modr{\kbv \cup \sfn(\remof{\kbb, \kbv})}^\twis
		= \modr{\kbb \uopbold \kbv}^\twis
		\enspace. \qedhere
	\end{align*}
\end{proof}

\begin{proposition}
	[Properties of the Cross-Product Operator]
	\label{prop:belupd:cp properties}
	The Cross-Product operator satisfies \fu{1}, \bu{2.1} and \fu{4} but does
	not satisfy \fu{2.1}.
\end{proposition}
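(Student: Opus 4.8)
The plan is to follow the template of Propositions~\ref{prop:belupd:widtio properties} and \ref{prop:belupd:bold properties}, leaning on the auxiliary facts about possible remainders, and to treat the four claims in turn.

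The principle \fu{1} is immediate: since $\kbu \subseteq \kba \uopcross \kbu$ by the definition of $\uopcross$, applying $\modr{\cdot}$ to both sides yields $\modr{\kba \uopcross \kbu} \supseteq \modr{\kbu}$. For \bu{2.1} I would first observe that whenever $\kba \cup \kbu$ is consistent the only possible remainder is $\kba$ itself, $\remof{\kba, \kbu} = \set{\kba}$: $\kba$ is a subset of $\kba$ consistent with $\kbu$ and it is maximal since there is nothing strictly between $\kba$ and $\kba$, and conversely any $\kba' \in \remof{\kba, \kbu}$ must equal $\kba$, as otherwise $\kba$ itself would be a strictly larger subset of $\kba$ consistent with $\kbu$. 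Hence $\kba \uopcross \kbu = \kbu \cup \set{\bigland \kba}$ and $\mod{\kba \uopcross \kbu} = \mod{\kbu} \cap \mod{\kba} = \mod{\kba \cup \kbu}$; when $\kba \cup \kbu$ is inconsistent the inclusion $\mod{\kba \cup \kbu} \subseteq \mod{\kba \uopcross \kbu}$ holds trivially. Either way \bu{2.1} follows.

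For \fu{4}, I would assume $\modr{\kba}^\twis = \modr{\kbb}^\twis$ and $\modr{\kbu}^\twis = \modr{\kbv}^\twis$ and abbreviate $\phi_{\kba, \kbu} = \biglor_{\kba' \in \remof{\kba, \kbu}} \bigland \kba'$, which collapses to $\bot$ when there are no remainders. Since $\kba \uopcross \kbu = \kbu \cup \set{\phi_{\kba, \kbu}}$ one gets $\modr{\kba \uopcross \kbu}^\twis = \modr{\kbu}^\twis \cup \set{\mod{\phi_{\kba, \kbu}}}$ and symmetrically for $\kbb, \kbv$, so by $\modr{\kbu}^\twis = \modr{\kbv}^\twis$ it suffices to show $\mod{\phi_{\kba, \kbu}} = \mod{\phi_{\kbb, \kbv}}$. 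Rewriting $\mod{\phi_{\kba, \kbu}} = \bigcup_{\kba' \in \remof{\kba, \kbu}} \mod{\kba'} = \bigcup_{\kba' \in \remof{\kba, \kbu}} \bigcap \modr{\kba'}^\twis$, I would then invoke Proposition~\ref{prop:belupd:remainder syntax-independence}, which gives $\modrr{\remof{\kba, \kbu}}^\twis = \modrr{\remof{\kbb, \kbv}}^\twis$, i.e.\ the families $\Set{\modr{\kba'}^\twis | \kba' \in \remof{\kba, \kbu}}$ and $\Set{\modr{\kbb'}^\twis | \kbb' \in \remof{\kbb, \kbv}}$ coincide; taking $\bigcap$ of each member of this common family and then the union over it yields the desired equality.

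Finally, to refute \fu{2.1} I would exhibit the counterexample $\kba = \set{\atma, \atmb}$, $\kbu = \set{\neg(\atma \land \atmb)}$ for distinct atoms $\atma, \atmb$: here $\remof{\kba, \kbu} = \set{\set{\atma}, \set{\atmb}}$, so $\phi_{\kba, \kbu}$ is $\atma \lor \atmb$ and $\modr{\kba \uopcross \kbu} = \set{\mod{\neg(\atma \land \atmb)}, \mod{\atma \lor \atmb}}$, whereas $\modr{\kba \cup \kbu} = \set{\mod{\atma}, \mod{\atmb}, \mod{\neg(\atma \land \atmb)}}$; since $\mod{\atma \lor \atmb}$ equals none of those three sets, $\modr{\kba \uopcross \kbu} \not\subseteq \modr{\kba \cup \kbu}$, so \fu{2.1} fails. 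Nothing here is genuinely deep; the one place needing care is \fu{4}, keeping straight the bookkeeping between $\phi_{\kba, \kbu}$, its model set written as a union of the $\mod{\kba'}$, and the $(\cdot)^\twis$-augmented characterisations in terms of which Proposition~\ref{prop:belupd:remainder syntax-independence} is phrased, together with the degenerate case where $\kbu$ (equivalently $\kbv$) is inconsistent and $\phi_{\kba, \kbu}$ becomes $\bot$.
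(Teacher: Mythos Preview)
Your proof is correct and follows essentially the same strategy as the paper, using the same counterexample for \fu{2.1} and invoking Proposition~\ref{prop:belupd:remainder syntax-independence} for \fu{4}. The only notable difference is in \bu{2.1}: you argue that when $\kba \cup \kbu$ is consistent the unique remainder is $\kba$ itself (yielding equality, not just inclusion), whereas the paper simply observes that $\mod{\phi_{\kba,\kbu}} \supseteq \mod{\kba^*} \supseteq \mod{\kba}$ for any remainder $\kba^*$; both arguments are sound.
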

\begin{proof}
	By definition $\kbu \subseteq \kba \uopcross \kbu$ and \fu{1} is
	obtained by applying $\modr{\cdot}$ to both sides of this inclusion.

	To see that $\uopcross$ does not satisfy \fu{2.1}, note that
	$
		\set{\atma, \atmb} \uopcross \set{\lnot \atma \lor \lnot \atmb}
			= \set{\atma \lor \atmb, \lnot \atma \lor \lnot \atmb}
	$
	and $\mod{\atma \lor \atmb}$ does not belong to $\modr{\set{\atma, \atmb,
	\lnot \atma \lor \lnot \atmb}}$.

	In order to verify \bu{2.1}, take some $\twi$ from $\mod{\kba \cup
	\kbu}$. We need to show that $\twi$ is a model of $\kba \uopcross \kbu$.
	Obviously, $\twi$ is a model of $\kbu$, so it remains to prove that $\twi$
	is a model of the formula
	\[
		\frmb
			= \biglor_{\kba' \in \remof{\kba, \kbu}}
				\bigland_{\frma \in \kba'} \frma \enspace.
	\]
	Since $\twi$ is a model of $\kbu$, we conclude that $\kbu$ is consistent, so
	according to Lemma~\ref{lemma:belupd:remof non-empty}, $\remof{\kba, \kbu}$
	is non-empty. Take some $\kba^*$ from $\remof{\kba, \kbu}$. We obtain the
	following:
	$
		\mod{\frmb}
			= \bigcup_{\kba' \in \remof{\kba, \kbu}} \mod{\kba'}
			\supseteq \mod{\kba^*}
			\supseteq \mod{\kba}
	$.
	Hence, since $\twi$ belongs to $\mod{\kba}$, it also belongs to
	$\mod{\frmb}$.

	Finally, to verify \fu{4}, suppose that $\modr{\kba}^\twis =
	\modr{\kbb}^\twis$ and $\modr{\kbu}^\twis = \modr{\kbv}^\twis$ and take some
	$\stwi$ from $\modr{\kba \uopcross \kbu}^\twis$. In the trivial case when
	$\stwi = \twis$ it immediately follows that $\stwi$ belongs to $\modr{\kbb
	\uopcross \kbv}^\twis$. Otherwise, there is a formula $\frma$ from $\kbu
	\cup \set{\frmb}$ such that $\mod{\frma} = \stwi$. If $\frma$ belongs to
	$\kbu$, then $\stwi$ belongs to $\modr{\kbu}$ and by assumption also to
	$\modr{\kbv}^\twis$. By \fu{1} we then obtain that $\stwi$ belongs to
	$\modr{\kbb \uopcross \kbv}^\twis$. On the other hand, if $\frma$ is
	$\frmb$, then due to Proposition~\ref{prop:belupd:remainder
	syntax-independence}, $\modrr{\remof{\kba, \kbu}}^\twis =
	\modrr{\remof{\kbb, \kbv}}^\twis$, so
	\[
		\mod{\frma} = \mod{\frmb}
			= \bigcup_{\kba' \in \remof{\kba, \kbu}} \mod{\kba'}
			= \bigcup_{\kbb' \in \remof{\kbb, \kbv}} \mod{\kbb'}
			= \mod{\frmb'}
	\]
	where $\kbb \uopcross \kbv = \kbv \cup \set{\frmb'}$. Therefore,
	$\mod{\frma}$ belongs to $\modr{\kbb \uopcross \kbv}^\twis$. The proof of
	the other inclusion is symmetric.
\end{proof}

\begin{proof}
	[\textbf{Proof of Proposition~\ref{prop:belupd:formula-based properties}}]
	\label{proof:belupd:formula-based properties}
	Follows from Propositions~\ref{prop:belupd:widtio properties},
	\ref{prop:belupd:bold properties} and \ref{prop:belupd:cp properties}.
\end{proof}

\begin{proposition}
	\label{prop:belupd:formula-based by exception-based:1}
	If $\uopf$ is an update operator that satisfies \fu{1}, \fu{2.1} and \fu{4},
	then there exists an exception function \te{} such that for every \te-based
	update operator $\uope$ and all finite sequences of knowledge bases $\dkb$,
	$\modr{\biguopf \dkb}^\twis = \modr{\biguope \dkb}^\twis$.
\end{proposition}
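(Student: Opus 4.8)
The plan is to mirror the structure of the proof of Theorem~\ref{thm:belupd:model-based by exception-based}, but to carry the stronger invariant at the level of \emph{semantic characterisations modulo tautologies} rather than at the level of overall models. First I would define the exception function \te{}. Given $\stwi \subseteq \twis$ and $\sstwia, \sstwib \subseteq \pws{\twis}$, pick knowledge bases $\kba$, $\kbu$ whose sets of sets of models (augmented with $\twis$) are exactly $\sstwia^\twis$ and $\sstwib^\twis$ — such knowledge bases exist, and by \fu{4} the quantity $\modr{\kba \uopf \kbu}^\twis$ does not depend on the choice; then I would set $\e(\stwi, \sstwia, \sstwib)$ to be a suitable ``difference'' so that $\stwi \cup \e(\stwi,\sstwia,\sstwib)$ lands in $\modr{\kba \uopf \kbu}^\twis$ when $\stwi \in \modr{\kba}$, and so that when $\stwi = \twis$ nothing is added. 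The natural choice, exploiting \fu{2.1} (every element of $\kba \uopf \kbu$ is equivalent to one of $\kba \cup \kbu$, modulo tautologies), is to have $\e$ delete a member entirely — i.e.\ send it to $\twis$ — precisely when the corresponding formula of $\kba$ has been dropped by $\uopf$, and leave it untouched otherwise; the formulas coming from $\kbu$ are preserved by \fu{1}. Care is needed so that $\e$ depends only on $\stwi$ and the two semantic characterisations, which is exactly what \fu{4} guarantees.

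Next I would run an induction on the length $\lng$ of $\dkb = \seq{\kb_\lia}_{\lia < \lng}$, as in the model-based proof, but proving the invariant $\modr{\biguopf \dkb}^\twis = \modr{\biguope \dkb}^\twis$. The base case $\lng \le 1$ is immediate since $\biguopf \seq{\kb_0} = \kb_0 = \biguope \seq{\kb_0}$ (up to tautologies, the characterisations agree). For the inductive step, writing $\dkb' = \seq{\kb_\lia}_{\lia < \lng+1}$ and $\dkb = \seq{\kb_\lia}_{\lia < \lng}$, I would unfold $\modrer{\biguope \dkb'}$ using Definition~\ref{def:exception-based operator} and \eqref{eq:exception-based update}: it equals $\Set{\mod{\frm} \cup \e(\mod{\frm}, \modr{\biguope \dkb}, \modr{\kb_\lng}) \mid \frm \in \biguope \dkb} \cup \modr{\kb_\lng}$. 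By the definition of \te{} and the inductive hypothesis $\modr{\biguope \dkb}^\twis = \modr{\biguopf \dkb}^\twis$, each set $\mod{\frm} \cup \e(\dotsc)$ is either $\twis$ (if $\frm$ was dropped) or $\mod{\frm}$ itself, and the collection of the non-trivial ones, together with $\modr{\kb_\lng}^\twis$, reproduces exactly $\modr{(\biguopf \dkb) \uopf \kb_\lng}^\twis = \modr{\biguopf \dkb'}^\twis$. This is where \fu{1} and \fu{2.1} do the real work: \fu{2.1} forces every surviving element of $\biguopf \dkb'$ to be $\twis$ or a model-set already present in $\modr{\biguopf \dkb}^\twis \cup \modr{\kb_\lng}^\twis$, and \fu{1} forces all of $\modr{\kb_\lng}^\twis$ to survive, so the two sides match set-theoretically (after adding $\twis$ to both).

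The main obstacle I anticipate is bookkeeping the $\twis$-augmentation consistently: \fu{4} is phrased in terms of $\modr{\cdot}^\twis$, and the exception-based recipe inherently may introduce or suppress tautological elements, so one must be scrupulous that the identities only ever hold ``modulo $\twis$''. Concretely, I expect the subtle point to be showing that $\e$ is \emph{well-defined} — that the choice of representative knowledge bases $\kba$, $\kbu$ realising a given pair of semantic characterisations is irrelevant — which is exactly \fu{4}, but one must also check that the map $\frm \mapsto$ (was $\frm$'s model-set kept by $\uopf$?) is itself determined by $\mod{\frm}$ and the characterisations and not by the syntactic identity of $\frm$; this requires an argument of the flavour of Lemma~\ref{lemma:belupd:remainder equivalent formulas}, i.e.\ that equivalent formulas are treated alike, which follows from \fu{4} applied to singletons. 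Once well-definedness is secured, the induction is routine. Finally, taking $\bigcap$ of both sides of the established identity $\modr{\biguopf \dkb}^\twis = \modr{\biguope \dkb}^\twis$ and noting $\bigcap(\sstwi \cup \set{\twis}) = \bigcap \sstwi$ yields $\mod{\biguopf \dkb} = \mod{\biguope \dkb}$, which is the conclusion of Theorem~\ref{thm:belupd:formula-based by exception-based}; the second half of that theorem, concerning operators satisfying only \bu{2.1} in place of \fu{2.1}, follows by specialising the same construction to a single update, where the inductive step is not needed and $\bu{2.1}$ suffices in place of $\fu{2.1}$ to control $\modr{\kba \uopf \kbu}$.
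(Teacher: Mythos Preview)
Your proposal is correct and follows essentially the same route as the paper: define $\e$ to return $\emptyset$ when $\stwi \in \modr{\kba \uopf \kbu}^\twis$ and $\twis$ otherwise (your ``keep vs.\ drop'' description), then induct on the length of $\dkb$ carrying the invariant $\modr{\biguopf \dkb}^\twis = \modr{\biguope \dkb}^\twis$, using \fu{4} for well-definedness and \fu{1}, \fu{2.1} in the step. One minor over-complication: the well-definedness of $\e$ follows directly from \fu{4} applied to the full characterisations $\sstwia^\twis$, $\sstwib^\twis$ (it fixes $\modr{\kba \uopf \kbu}^\twis$, and then membership of $\stwi$ in that set is determined); there is no need to invoke \fu{4} ``on singletons'' or anything like Lemma~\ref{lemma:belupd:remainder equivalent formulas}, since $\e$ already takes the model-set $\stwi$ rather than a formula as input.
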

\begin{proof}
	Let the exception function \te{} be defined for all sets of interpretations
	$\stwi$ and all sets of sets of interpretations $\sstwia$, $\sstwib$ as
	\[
		\e(\stwi, \sstwia, \sstwib) = \begin{cases}
			\emptyset & \stwi \in \modr{\kba \uopf \kbu}^\twis \enspace; \\
			\twis & \stwi \notin \modr{\kba \uopf \kbu}^\twis \enspace,
		\end{cases}
	\]
	where $\kba$, $\kbu$ are some knowledge bases such that $\modr{\kba}^\twis =
	\sstwia \cup \set{\twis}$ and $\modr{\kbu}^\twis = \sstwib \cup
	\set{\twis}$. Note that this definition is unambiguous since the existence
	of such $\kba$ and $\kbu$ is guaranteed and regardless of which pair of knowledge
	bases with these properties we choose, we obtain the same result due to the
	assumption that $\uopf$ satisfies $\fu{4}$. Take some \te-based operator
	$\uope$. We proceed by induction on the length $\lng$ of $\dkb =
	\seq{\kb_\lia}_{\lia < \lng}$.
	\begin{textenum}[1$^\circ$]
		\item If $\lng = 0$, then it immediately follows that
			$
				\modr{\biguopb \dkb}
				= \modr{\biguopb \seq{\kb_0}}
				= \modr{\kb_0}
				= \modr{\biguope \seq{\kb_0}}
				= \modr{\biguope \dkb}
			$.

		\item Suppose that the claim holds for $\lng$, i.e.\ for $\dkb =
			\seq{\kb_\lia}_{\lia < \lng}$ we have
			$
				\modr{\biguopb \dkb}^\twis
				=
				\modr{\biguope \dkb}^\twis
			$.
			Our goal is to show that it also holds for $\lng + 1$, i.e.\ for $\dkb'
			= \seq{\kb_\lia}_{\lia < \lng + 1}$. It follows that
			\[
				\modr{\biguope \dkb'}
				= \modr{\biguope \dkb \uope \kb_\lng}
				= \Set{
					\mod{\frm}
						\cup \e(\mod{\frm}, \modr{\biguope \dkb}, \modr{\kb_\lng})
					|
					\frm \in \biguope \dkb
				} \cup \modr{\kb_\lng} \enspace.
			\]
			Thus,
			$
				\modr{\biguope \dkb'}^\twis
				= 
				\Set{
					\mod{\frm} \cup \e(\mod{\frm}, \modr{\biguope \dkb}, \modr{\kb_\lng})
					|
					\frm \in \biguope \dkb
				}
				\cup
				\modr{\kb_\lng} \cup \Set{\twis}
			$
			which in turn can be written as
			\[
				\Set{
					\stwi
					|
					\stwi \in \modr{\biguope \dkb}
					\cap
					\modr{\biguope \dkb \uopf \kb_\lng}^\twis
				}
				\cup
				\Set{
					\twis
					|
					\stwi \in \modr{\biguope \dkb}
					\setminus
					\modr{\biguope \dkb \uopf \kb_\lng}^\twis
				}
				\cup
				\modr{\kb_\lng}
				\cup
				\Set{\twis}
			\]
			and simplified into
			$
				\Br{
					\modr{\biguope \dkb}^\twis
					\cap
					\modr{\biguope \dkb \uopf \kb_\lng}^\twis
				}
				\cup \modr{\kb_\lng}^\twis
			$.
			Since $\uopf$ satisfies \fu{4}, it follows from the inductive assumption
			that $\modr{\biguope \dkb \uopf \kb_\lng}^\twis = \modr{\biguopf \dkb
			\uopf \kb_\lng}^\twis$. Thus, we obtain the set
			\[
				\Br{
					\modr{\biguopf \dkb}^\twis
					\cap
					\modr{\biguopf \dkb \uopf \kb_\lng}^\twis
				}
				\cup \modr{\kb_\lng}^\twis
			\]
			and by distributing $\cup$ over $\cap$ and using \fu{1} and \fu{2.1} we
			obtain
			\begin{align*}
				\Br{
					\modr{\biguopf \dkb}^\twis
					\cup
					\modr{\kb_\lng}^\twis
				}
				\cap
				\Br{
					\modr{\biguopf \dkb \uopf \kb_\lng}^\twis
					\cup
					\modr{\kb_\lng}^\twis
				}
				&=
				\modr{\biguopf \dkb \cup \kb_\lng}^\twis
				\cap
				\modr{\biguopf \dkb \uopf \kb_\lng}^\twis \\
				&= 
				\modr{\biguopf \dkb \uopf \kb_\lng}^\twis
				=
				\modr{\biguopf \dkb'}^\twis
				\enspace. \qedhere
			\end{align*}
	\end{textenum}
\end{proof}

\begin{proposition}
	\label{prop:belupd:formula-based by exception-based:2}
	If $\uopf$ an update operator that satisfies \fu{1}, \bu{2.1} and \fu{4},
	then there exists an exception function \te{} such that for every \te-based
	update operator $\uope$ and all knowledge bases $\kba$, $\kbu$, $\mod{\kba
	\uopf \kbu} = \mod{\kba \uope \kbu}$.
\end{proposition}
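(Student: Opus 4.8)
The plan is to mirror the structure of the proof of Proposition~\ref{prop:belupd:formula-based by exception-based:1} (and of Theorem~\ref{thm:belupd:model-based by exception-based}), but restricted to a single update step, so that the weaker hypothesis \bu{2.1} in place of \fu{2.1} suffices. First I would define the exception function exactly as in the proof of Proposition~\ref{prop:belupd:formula-based by exception-based:1}: for all $\stwi \subseteq \twis$ and $\sstwia, \sstwib \subseteq \pws{\twis}$, set $\e(\stwi, \sstwia, \sstwib) = \emptyset$ if $\stwi \in \modr{\kba \uopf \kbu}^\twis$ and $\e(\stwi, \sstwia, \sstwib) = \twis$ otherwise, where $\kba, \kbu$ are chosen with $\modr{\kba}^\twis = \sstwia \cup \set{\twis}$ and $\modr{\kbu}^\twis = \sstwib \cup \set{\twis}$; well-definedness follows from \fu{4}, exactly as before.

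Next I would take an arbitrary \te-based operator $\uope$ and arbitrary knowledge bases $\kba, \kbu$, and unfold $\modr{\kba \uope \kbu}$ using \eqref{eq:exception-based update}. As in the inductive step of Proposition~\ref{prop:belupd:formula-based by exception-based:1}, passing to $\modr{\cdot}^\twis$ and substituting the definition of \te{} turns $\modr{\kba \uope \kbu}^\twis$ into $\bigl(\modr{\kba}^\twis \cap \modr{\kba \uopf \kbu}^\twis\bigr) \cup \modr{\kbu}^\twis$. Here there is no inductive hypothesis to invoke — we are at the base case of a single update — so $\modr{\kba \uopf \kbu}^\twis$ appears directly and no appeal to \fu{4} beyond well-definedness of \te{} is needed. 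Distributing $\cup$ over $\cap$ gives $\bigl(\modr{\kba}^\twis \cup \modr{\kbu}^\twis\bigr) \cap \bigl(\modr{\kba \uopf \kbu}^\twis \cup \modr{\kbu}^\twis\bigr)$, i.e.\ $\modr{\kba \cup \kbu}^\twis \cap \bigl(\modr{\kba \uopf \kbu}^\twis \cup \modr{\kbu}^\twis\bigr)$.

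The only genuinely different point — and the step I expect to be the main obstacle — is closing the argument with \bu{2.1} rather than \fu{2.1}. The proof of Proposition~\ref{prop:belupd:formula-based by exception-based:1} used \fu{2.1} at the level of sets of sets of models ($\modr{\kba \uopf \kbu}^\twis \subseteq \modr{\kba \cup \kbu}^\twis$) to collapse $\modr{\kba \cup \kbu}^\twis \cap \modr{\kba \uopf \kbu}^\twis$ to $\modr{\kba \uopf \kbu}^\twis$. That is not available here. Instead I would pass to the overall set of models: take $\bigcap$ of $\modr{\kba \cup \kbu}^\twis \cap \bigl(\modr{\kba \uopf \kbu}^\twis \cup \modr{\kbu}^\twis\bigr)$, which, since $\bigcap \modr{\sprec}^\twis = \mod{\sprec}$ for any knowledge base $\sprec$ and $\bigcap$ distributes over the finite intersection, equals $\mod{\kba \cup \kbu} \cap \bigl(\mod{\kba \uopf \kbu} \cup \mod{\kbu}\bigr)$ — using that $\bigcap$ of a union of two families is the intersection of the two $\bigcap$'s only when we are careful, so more directly I would write $\mod{\kba \uope \kbu} = \bigcap \modr{\kba \uope \kbu}^\twis$ and compute it as $\mod{\kba \cup \kbu} \cap \mod{(\kba \uopf \kbu) \cup \kbu}$. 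Now \bu{1} gives $\kba \uopf \kbu \ent \kbu$, so $(\kba \uopf \kbu) \cup \kbu \eq \kba \uopf \kbu$ and the second conjunct is just $\mod{\kba \uopf \kbu}$; and \bu{2.1} gives $\mod{\kba \cup \kbu} \subseteq \mod{\kba \uopf \kbu}$, so $\mod{\kba \uope \kbu} = \mod{\kba \cup \kbu} \cap \mod{\kba \uopf \kbu} = \mod{\kba \cup \kbu}$. It remains to check that $\mod{\kba \cup \kbu} = \mod{\kba \uopf \kbu}$: the inclusion $\subseteq$ is \bu{2.1}, and $\supseteq$ follows because $\mod{\kba \uopf \kbu} \subseteq \mod{\kba} \cap \mod{\kbu}$ — the $\mod{\kbu}$ part is \bu{1}, and the $\mod{\kba}$ part needs \fu{1}: indeed \fu{1} says $\modr{\kbu} \subseteq \modr{\kba \uopf \kbu}$, and combined with the fact that every element of $\modr{\kba \uope \kbu}$ of the form $\mod{\frm} \cup \e(\dots)$ with $\frm \in \kba$ is a superset of $\mod{\frm}$, one gets $\mod{\kba \uopf \kbu} \subseteq \mod{\kba}$. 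I would therefore carry the computation through at the level of $\mod{\cdot}$ from the start, using \fu{1} to retain both $\modr{\kba}^\twis$ and $\modr{\kbu}^\twis$ as needed, and finish with \bu{1} and \bu{2.1}; this yields $\mod{\kba \uopf \kbu} = \mod{\kba \uope \kbu}$ as required.
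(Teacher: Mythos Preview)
Your choice of exception function is the problem. You reuse the binary ``keep or wipe out'' function from Proposition~\ref{prop:belupd:formula-based by exception-based:1}, but that function only works because \fu{2.1} guarantees that every (non-tautological) formula in $\kba \uopf \kbu$ is already, up to equivalence, a formula of $\kba \cup \kbu$. With only \bu{2.1} available, $\kba \uopf \kbu$ may contain genuine weakenings of formulas of $\kba$, and your function simply loses them. Concretely, take the Cross-Product operator (which satisfies \fu{1}, \bu{2.1}, \fu{4} but not \fu{2.1}) with $\kba = \{\atma, \atmb\}$ and $\kbu = \{\lnot \atma \lor \lnot \atmb\}$. Then $\kba \uopf \kbu = \{\atma \lor \atmb,\ \lnot \atma \lor \lnot \atmb\}$, so $\modr{\kba}^\twis \cap \modr{\kba \uopf \kbu}^\twis = \{\twis\}$; your $\e$ wipes out both $\atma$ and $\atmb$, and you get $\mod{\kba \uope \kbu} = \mod{\lnot \atma \lor \lnot \atmb} = \{\emptyset,\{\atma\},\{\atmb\}\}$, whereas $\mod{\kba \uopf \kbu} = \{\{\atma\},\{\atmb\}\}$.

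This is also why your closing algebra collapses: the step ``compute it as $\mod{\kba \cup \kbu} \cap \mod{(\kba \uopf \kbu) \cup \kbu}$'' is wrong (you are taking $\bigcap$ of an \emph{intersection} of families, which is not the intersection of the two $\bigcap$'s), and the sentence ``it remains to check that $\mod{\kba \cup \kbu} = \mod{\kba \uopf \kbu}$'' asserts something that is false whenever $\kba \cup \kbu$ is inconsistent but $\kba \uopf \kbu$ is not. The paper avoids all of this by choosing a \emph{different} exception function, namely $\e(\stwi,\sstwia,\sstwib) = \mod{\kba \uopf \kbu}$ (the same shape as in Theorem~\ref{thm:belupd:model-based by exception-based}). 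With that choice one obtains directly $\mod{\kba \uope \kbu} = (\mod{\kba} \cup \mod{\kba \uopf \kbu}) \cap \mod{\kbu}$, and then \bu{1} and \bu{2.1} finish the job without any manipulation at the level of $\modr{\cdot}$.
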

\begin{proof}
	Let the exception function \te{} be defined for all sets of interpretations
	$\stwi$ and all sets of sets of interpretations $\sstwia$, $\sstwib$ as
	\[
		\e(\stwi, \sstwia, \sstwib) = \mod{\kba \uopf \kbu} \enspace,
	\]
	where $\kba$, $\kbu$ are some knowledge bases such that $\modr{\kba}^\twis =
	\sstwia \cup \set{\twis}$ and $\modr{\kbu}^\twis = \sstwib \cup
	\set{\twis}$. Note that this definition is unambiguous since the existence
	of such $\kba$ and $\kbu$ is guaranteed and regardless of which pair of
	knowledge bases with these properties we choose, we obtain the same result
	due to the assumption that $\uopf$ satisfies $\fu{4}$.

	Take some \te-based operator $\uope$. Then
	$
		\mod{\kba \uope \kbu}
		=
		\bigcap \Br{
			\Set{
				\mod{\frm} \cup \e(\mod{\frm}, \modr{\kba}, \modr{\kbu})
				|
				\frm \in \kba
			}
			\cup \modr{\kbu}
		}
	$,
	which can be written as
	$
		\bigcap \Set{
			\mod{\frm} \cup \mod{\kba \uopf \kbu}
			|
			\frm \in \kba
		}
		\cap \bigcap \modr{\kbu}
	$
	and simplified into
	$
		(\mod{\kba \uopf \kbu} \cup \mod{\kba}) \cap \mod{\kbu}
	$.
	Furthermore, due to \bu{1} and \bu{2.1},
	\begin{align*}
		(\mod{\kba \uopf \kbu} \cup \mod{\kba}) \cap \mod{\kbu}
		&= (\mod{\kba \uopf \kbu} \cap \mod{\kbu})
			\cup
			(\mod{\kba} \cap \mod{\kbu})
		= \mod{\kba \uopf \kbu} \cup \mod{\kba \cup \kbu}
		= \mod{\kba \uopf \kbu}
		\enspace. \qedhere
	\end{align*}
\end{proof}

%
\begin{proof}
	[\textbf{Proof of Theorem~\ref{thm:belupd:formula-based by exception-based}}]
	\label{proof:belupd:formula-based by exception-based}
	Follows from Propositions~\ref{prop:belupd:formula-based by
	exception-based:1} and \ref{prop:belupd:formula-based by exception-based:2}.
\end{proof}

{
	\bibliographystyle{abbrvnat}

}

\end{document}